\DeclarePairedDelimiter{\abs}{\lvert}{\rvert}
\definecolor{Gray}{gray}{0.9}
\newtheorem{lemma}{Lemma}
\newtheorem{theorem}{Theorem}
\newtheorem{definition}{Definition}
\newtheorem{example}{Example}
\newcommand{\inc}{\ensuremath{\mathcal{I}}}
\newcommand{\kb}{\ensuremath{\mathcal{K}}}
\newcommand{\allkbs}{\ensuremath{\mathbb{K}}}
\newcommand{\atoms}{\ensuremath{\mathsf{At}}}
\newcommand{\occs}{\ensuremath{\mathsf{Occ}}}
\newcommand{\forget}{\ensuremath{\mathrm{forget}}} 
\newcommand{\forgetting}{\ensuremath{\xrightarrow[]{\text{f}}}} 
\newcommand{\forgetfunc}{\ensuremath{\mathcal{F}}}
\newcommand{\np}{\ensuremath{\mathsf{NP}}}
\newcommand{\valuei}{\ensuremath{\textsc{value}_{\inc}}} %value_I
\newcommand{\upperi}{\ensuremath{\textsc{upper}_{\inc}}} %upper_I
\newcommand{\upperiarg}[1]{\ensuremath{\textsc{upper}_{#1}}}  %upper_{ARG}
\newcommand{\upperiforget}{\ensuremath{\textsc{upper}_{\iforget}}} 
\newcommand{\upperic}{\ensuremath{\textsc{upper}_{\icont}}} %upper_I
\newcommand{\valueic}{\ensuremath{\textsc{value}_{\icont}}} %value_I
\newcommand{\realPos}{\ensuremath{\mathbb{R}^\infty_{\geq 0}}}
\newcommand{\natnumszero}{\ensuremath{\mathbb{N}_0}}
\newcommand{\natPosInfty}{\ensuremath{\mathbb{N}^\infty_{0}}}
\newcommand{\varAt}{\ensuremath{X}}
\newcommand{\varSub}{\ensuremath{\Phi}}
\newcommand{\varSubSub}{\ensuremath{\Psi}}
\newcommand{\varFor}{\ensuremath{A}}
\newcommand{\varLit}{\ensuremath{L}}
\newcommand{\varTV}{\ensuremath{\theta}}
\newcommand{\icont}{\ensuremath{\inc{_{\mathrm{c}}}}} %\icont
\newcommand{\ihs}{\ensuremath{\inc{_{\mathrm{h}}}}} %I_hs
\newcommand{\iforget}{\ensuremath{\inc_{\mathrm{f}}}} %I_forget
\newcommand{\isdalal}{\ensuremath{\inc_{\mathrm{d}}^{\Sigma}}} %I_dalal^sum
\newcommand{\imdalal}{\ensuremath{\inc_{\mathrm{d}}^{\max}}}   %I_dalal^max
\newcommand{\ihdalal}{\ensuremath{\inc_{\mathrm{d}}^{\text{hit}}}} %I_dalal^hit
\newcommand{\satc}{\ensuremath{S_{\mathrm{c}}}}
\newcommand{\satf}{\ensuremath{S_{\mathrm{f}}}}
\newcommand{\sath}{\ensuremath{S_{\mathrm{h}}}}
\newcommand{\satmax}{\ensuremath{S_{\mathrm{d}}^{\max}}}
\newcommand{\satsum}{\ensuremath{S_{\mathrm{d}}^{\Sigma}}}
\newcommand{\sathit}{\ensuremath{S_{\mathrm{d}}^{\mathrm{hit}}}}
\newcommand{\head}{\ensuremath{\mathsf{head}}} % head of a rule
\newcommand{\body}{\ensuremath{\mathsf{body}}} % body of a rule
\newcommand{\termASP}{\ensuremath{\mathsf{t}}}
\newcommand{\pc}{\ensuremath{P_{\mathrm{c}}}}
\newcommand{\pf}{\ensuremath{P_{\mathrm{f}}}}
\newcommand{\ph}{\ensuremath{P_{\mathrm{h}}}}
\newcommand{\pmax}{\ensuremath{P_{\mathrm{d}}^{\max}}}
\newcommand{\psum}{\ensuremath{P_{\mathrm{d}}^{\Sigma}}}
\newcommand{\phit}{\ensuremath{P_{\mathrm{d}}^{\mathrm{hit}}}}
\newcolumntype{C}[1]{>{\centering\arraybackslash}p{#1}}
\newcolumntype{L}[1]{>{\raggedright\arraybackslash}p{#1}}
\newcolumntype{R}[1]{>{\raggedleft\arraybackslash}p{#1}}
\newcommand{\defArrow}{\ensuremath{\xRightarrow{\text{Def.}}{}}}
\begin{document}

\title{Comparison of SAT-Based and ASP-Based Algorithms for Inconsistency Measurement}

\author{\name Isabelle Kuhlmann \email isabelle.kuhlmann@fernuni-hagen.de \\
        \addr University of Hagen, Germany
        \AND
       % \name Anna Gessler\thanks{Contributions performed while being affiliated with the University of Koblenz.} \email anna.gessler.work@gmail.com \\   
       \name Anna Gessler \email anna.gessler.work@gmail.com \\   
       \addr University of Koblenz, Germany
       \AND
       \name Vivien Laszlo \email vivien.laszlo@fernuni-hagen.de \\
       % \addr University of Hagen, Germany
       % \AND
       \name Matthias Thimm \email matthias.thimm@fernuni-hagen.de \\
       \addr University of Hagen, Germany}

% For research notes, remove the comment character in the line below.
% \researchnote

\maketitle

\begin{abstract}
    We present algorithms based on satisfiability problem (SAT) solving, as well as answer set programming (ASP), for solving the problem of determining inconsistency degrees in propositional knowledge bases. 
    We consider six different inconsistency measures whose respective decision problems lie on the first level of the polynomial hierarchy.
    % Namely, these are the contension inconsistency measure, the forgetting-based inconsistency measure, the hitting set inconsistency measure, the max-distance inconsistency measure, the sum-distance inconsistency measure, and the hit-distance inconsistency measure.
    Namely, these are the contension, forgetting-based, hitting set, max-distance, sum-distance, and hit-distance inconsistency measures.
    In an extensive experimental analysis, we compare the SAT-based and ASP-based approaches with each other, as well as with a set of naive baseline algorithms.
    Our results demonstrate that, overall, both the SAT-based and the ASP-based approaches clearly outperform the naive baseline methods in terms of runtime.
    The results further show that the proposed ASP-based approaches perform superior to the SAT-based ones with regard to all six inconsistency measures considered in this work.
    Moreover, we conduct additional experiments to explain the aforementioned results in greater detail.
\end{abstract}

\section{Introduction}

The handling of conflicting information is a substantial problem in symbolic approaches to Artificial Intelligence.
For instance, different expert opinions could (partially) contradict each other, rule mining algorithms could yield conflicting rules, or data gathered from sensors could be noisy or otherwise distorted.
Thus, inconsistencies can occur in virtually any area of application, and require to be detected and handled.
The field of \textit{inconsistency measurement} \cite{Grant:2018,thimm2019b} provides an analytical perspective on this matter by facilitating the quantitative assessment of the severity of inconsistency in formal knowledge representation formalisms.
Representing the degree of inconsistency as a numerical value may assist automatic reasoning mechanisms on the one hand, and human modellers who aim to identify and compare multiple alternative formalizations on the other hand.
Moreover, such an analysis can be used to identify conflicts, and consequently also help to restore consistency to an inconsistent knowledge base. % while trying to keep information loss as low as possible. 
Inconsistency measures have been used to estimate reliability of agents in multi-agent systems \cite{Cholvy:2017}, to analyze inconsistencies in news reports \cite{Hunter:2006d}, to support collaborative software requirements specifications \cite{Martinez:2004a}, to allow for inconsistency-tolerant reasoning in probabilistic logic \cite{Potyka:2017}, to handle inconsistencies in business processes \cite{Corea:2021,Corea:2022}, and to monitor and maintain quality in database settings \cite{Decker:2017,Bertossi:2018}.
For a general overview of the subject, see the seminal work by Grant \citeyear{Grant:1978} and the edited collection by Grant \& Martinez \citeyear{Grant:2018}. 

In the literature, a multitude of different inconsistency measures have been introduced. 
Some approaches conceptually rely on minimal inconsistent sets or maximal consistent sets (see, e.\,g., \cite{hunter2008measuring,jabbour2014inconsistency,ammoura2015measuring}), others rely on non-classical semantics (see, e.\,g., \cite{grant2011measuring,ma2009,knight2002measuring}), and yet others utilize further properties (see, e.\,g., \cite{thimm2016}).
However---and despite the above list of applications and consequent need for practical working solutions---algorithmic approaches to inconsistency measurement have received only little attention so far.
Ma et al.\ \citeyear{ma2009} propose an algorithm that approximates the inconsistency value of a newly proposed inconsistency measure and evaluate it with respect to computational complexity. 
Likewise, Xiao~\& Ma \citeyear{xiao2012} present two new inconsistency measures and perform a complexity analysis on their decision problems, both of which are found to be on the second level of the polynomial hierarchy.
They also develop and evaluate a practically feasible anytime algorithm. 
\mbox{McAreavey et al.}\ \citeyear{mcAreavey2014} note that there is a lack of practical implementations for inconsistency measures that employ minimal unsatisfiable subsets, and develop and evaluate an algorithm for enumerating such subsets. 
Jabbour \& Sais \citeyear{Jabbour2016} describe two algorithms for their newly introduced inconsistency measure, but do not evaluate them with regard to performance or complexity. 
Thimm \citeyear{thimm2016} designs and evaluates evolutionary algorithms for two inconsistency measures. 
Bertossi \citeyear{Bertossi2018} proposes an inconsistency measure for databases that can be computed using answer set programming and analyzes its complexity. 

As the above overview shows, most algorithmic studies of inconsistency measurement focus on individual inconsistency measures and have failed to address systematic comparisons of algorithms and complexities. 
In response, one of the contributions of the survey by Thimm \& Wallner \citeyear{thimm2019a} was to determine the complexity levels of a large number of inconsistency measures. 
The authors concluded that the problem of inconsistency measurement is hard in general, but that there are certain measures which are more suitable candidates for practical applications due to their complexity class.
Based on these findings, for a selection of three different inconsistency measures whose corresponding decisions problems---i.\,e., deciding whether a certain value is an upper or lower bound of the inconsistency value, or whether it corresponds exactly to the inconsistency value---were found to be on the first level of the polynomial hierarchy, a set of algorithms based on Answer Set Programming (ASP) has been introduced in \cite{kuhlmann2020algorithm} and \cite{kuhlmann2021algorithms}. 
Namely, these measures are the \textit{contension} inconsistency measure \cite{grant2011measuring}, the \textit{forgetting-based} inconsistency measure \cite{besnard2016forget}, and the \textit{hitting set} inconsistency measures \cite{thimm2016} (for their formal definitions, see Section~\ref{sec:preliminaries}).
Those three measures were not only selected because of their associated complexity class, but also because they each give a different perspective on the inconsistencies in a given knowledge base, meaning that they could each provide different information that could be used to subsequently restore consistency.
The contension inconsistency measure tells us which propositions are involved in a conflict, while the forgetting-based inconsistency measure tells us which occurrences of each proposition are involved in a conflict, and can therefore also point to specific formulas. 
In addition, the forgetting-based measure is, to the best of our knowledge, the only inconsistency measure in the literature that is based on the notion of atom occurrences.
The contension inconsistency measure serves as a representative example of inconsistency measures based on non-classical semantics (see, e.\,g.,~\cite{ma2009,knight2002measuring} for other examples of such measures).
% With regard to inconsistency measures based on non-classical logics, such as the contension measure, there exist multiple approaches (see, e.\,g.,~\cite{ma2009,knight2002measuring})
% - außerdem ist contension ein beispielhafter Vertreter für Maße, die auf non-classical semantics basieren
%    -> generell sind paraconsistent logics eine wichtige Perspektive, wenn es um den Umgang mit Inkonsistenzen geht
The hitting set inconsistency measure offers a further perspective by considering how many different interpretations are minimally needed to (individually) satisfy all formulas.
This measure was originally designed for streaming-based applications, such as Linked Open Data.
With regard to such applications, the development of efficient algorithms is of great interest. 

In \cite{kuhlmann2021algorithms}, the three ASP-based approaches were implemented and compared to naive baseline implementations in an experimental evaluation. 
As anticipated, the result of the study was that the ASP-based implementations were clearly superior.
Furthermore, in \cite{kuhlmann2022comparison}, a revised version of the ASP-based approach for the contension inconsistency measure is proposed, in addition to an approach based on \textit{satisfiability problem} (SAT) solving.
The latter is widely used in applications such as hardware verification \cite{biere1999symbolic,vizel2015}, electronic design automation \cite{marques2000boolean}, or cryptanalysis \cite{mironov2006applications,nejati2020}.
This, in addition to the fact that there exist highly optimized SAT solvers (see the results of the annual SAT competition\footnote{\url{http://www.satcompetition.org/}} for an overview), makes it a natural approach for computing inconsistency measures on the first level of the polynomial hierarchy.
Moreover, SAT and ASP have been compared wrt.\ other applications before (see, e.\,g., \cite{banbara2015aspartame,eyupoglu2021stable}).
The results of the study show that both the ASP and the SAT approach clearly outperform the naive baseline method, but ultimately the ASP approach performs superior to the SAT approach.

In the work at hand, we follow up on \cite{kuhlmann2022comparison} by greatly extending the scope of the considered measures and the depth of the experimental evaluation. 
For that, we revisit the proposed SAT-based and ASP-based approaches for the contension inconsistency measure and we present SAT-based approaches for the forgetting-based and the hitting set inconsistency measure, as well as revised versions of the corresponding ASP-based approaches introduced in \cite{kuhlmann2021algorithms}.
Moreover, we propose both a SAT-based and an ASP-based approach for each of three different variations of the distance-based inconsistency measure \cite{grant2017}, which are likewise on the first level of the polynomial hierarchy.
Further, the distance-based approach offers yet another perspective on the notion of an inconsistency than the three previously discussed measures---Grant~\& Hunter \citeyear{grant2017} view the models of the formulas in a knowledge base as points in Euclidian space. 
The authors also point out how these measures can be used in applications such as the evaluation of violations of integrity constraints in databases.
% useful in applications such as evaluating violations of integrity constraints in databases and for deciding how to act on inconsistency.
% Namely,
We consider the \textit{max-distance}, \textit{sum-distance}, and \textit{hit-distance} inconsistency measures. 
Their formal definitions follow, along with those of the other measures, in Section \ref{sec:preliminaries}.

Hence, we examine a total of six different inconsistency measures in this work (the contension, the forgetting-based, the hitting set, the max-distance, the sum-distance, and the hit-distance inconsistency measure), and present one SAT-based and one ASP-based approach for each of them. 
With regard to each measure, we compare the SAT and the ASP approach to each other, as well as to a naive baseline method in an experimental evaluation.
More precisely, we conduct our experiments on a total of five different data sets.
Whilst two of them were used in the literature before \cite{kuhlmann2021algorithms,kuhlmann2022comparison}, the three other data sets are newly introduced and made publicly available.
Further, we investigate the runtime composition of the SAT-based and the ASP-based approaches in more detail.
In addition to the runtimes, we also record the actual inconsistency values resulting from the different inconsistency measures on the various data sets.
Based on that, we conduct an experiment in which we compare different search strategies wrt.\ the SAT-based methods.
Overall, the results of our experimental analysis confirm that both the SAT-based and the ASP-based approaches perform superior to the baseline algorithms.
Nevertheless, the results also show that the ASP-based approaches altogether outperform the SAT-based ones.
% This reflects the finding from \cite{kuhlmann2022comparison} that the ASP method for the contension inconsistency measure performs stronger than the corresponding SAT method, and extends this result to the other five measures at hand.
% This extends the result from \cite{kuhlmann2022comparison} that the ASP method for the contension inconsistency measure performs stronger than the corresponding SAT method to the other five measures at hand.

In the following, we give a concise overview of the structure of this paper.
We first provide the required preliminaries on inconsistency measurement in Section \ref{sec:preliminaries}.
% In particular, we cover the fundamentals of propositional logic and inconsistency measurement, and we define the different inconsistency measures considered in this work. 
Sections \ref{sec:sat} and \ref{sec:asp} comprise detailed descriptions of our SAT-based and, respectively, ASP-based algorithms.
% The correctness proofs for each of these approaches are provided in \ref{app:proofs}.
Section \ref{sec:evaluation} encompasses an evaluation, in which we compare the SAT-based and the ASP-based approaches with each other, and additionally, we draw a comparison with the naive baseline methods. 
% In \ref{app:data}, we provide additional visualizations of our runtime results, as well as numbers on cumulative runtimes and timeouts, and histograms over the measured inconsistency values for each measure and data set. 
Section \ref{sec:conclusion} concludes this work by providing a brief summary of our results, as well as an overview of possible future work.

\section{Preliminaries}\label{sec:preliminaries}

The inconsistency measures examined in this work are all designed to be applied in propositional logic knowledge bases.
A knowledge base $\kb$ is a finite set of propositional formulas, and we define $\mathbb{K}$ as the set of all propositional knowledge bases.
Formulas are constructed by means of the usual connectives \textit{negation} ($\lnot$), \textit{disjunction} ($\lor$), and \textit{conjunction} ($\land$).
Some algorithms defined in Section $\ref{sec:sat}$ and $\ref{sec:asp}$ use the notion of \textit{subformulas}.
The set of subformulas of a formula $\varSub$ is denoted by $\mathsf{sub}(\varSub)$ and is inductively defined in the following manner.

\begin{definition} % [Subformulas of a formula]
Let $\varSub$ be a propositional formula. 
If $\varSub$ is a proposition $\varAt$, the only \emph{subformula} of $\varSub$ is the proposition itself, meaning $\mathsf{sub}(\varAt)=\{\varAt\}$. 
% The same holds for $\varSub=\top$ and $\varSub=\bot$, with  $\mathsf{sub}(\top)=\{\top\}$ and $\mathsf{sub}(\bot)=\{\bot\}$, respectively. 
If $\varSub$ is a negation $\neg \varSubSub$, the subformulas $\mathsf{sub}(\neg \varSubSub)$ are given by $\{\neg \varSubSub \}\cup \mathsf{sub}(\varSubSub)$. 
The subformulas $\mathsf{sub}(\varSubSub_1 \bowtie \varSubSub_2)$ for a formula $\varSubSub_1 \bowtie \varSubSub_2$, where $\bowtie\ \in \{\wedge,\vee\}$ is a binary operator, are $\{\varSubSub_1 \bowtie \varSubSub_2\} \cup \mathsf{sub}(\varSubSub_1) \cup \mathsf{sub}(\varSubSub_2)$.
\end{definition}
Analogously, we define the set of subformulas $\mathsf{sub}(\kb)$ of a knowledge base $\kb$:
    $$\mathsf{sub(\kb)} = \bigcup_{\varFor \in \kb} \; \mathsf{sub}(\varFor)$$
% \begin{definition} % [Subformulas of a knowledge base]
%  For a knowledge base $\kb$, we define the set of \emph{subformulas} as
%  $$\mathsf{sub(\kb)} = \bigcup_{\varFor \in \kb} \; \mathsf{sub}(\varFor).$$
% \end{definition}
Observe that we denote arbitrary formulas and subformulas as $\varSub$, and formulas that are explicitly elements of a knowledge base as $\varFor$.
If a finer granularity is required, we may use the notation $\varSubSub$ for the subformula of $\varSub$.

We denote the \textit{signature} of a propositional formula or knowledge base, i.\,e., the (propositional) atoms appearing in it, as $\atoms(\cdot)$.
% Was ist Satisfiability von Formeln / KBs? 
% In propositional logic, f
Formulas can evaluate to either \textit{true} (abbreviated as $t$) or \textit{false} (abbreviated as $f$). 
% The semantics of a propositional language $\mathcal{L}(\atoms)$ is given by interpretations.
% 
% \begin{definition} % [Propositional Interpretation] 
	An \emph{interpretation} is a function $\omega: \atoms \rightarrow \{t, f\}$ that assigns truth values to all atoms. 
	An interpretation $\omega$ satisfies an atom $\varAt \in \atoms$ if and only if $\omega(\varAt) = t$, represented by $\omega \models \varAt$. 
	For non-atomic formulas, the satisfaction relation is extended recursively according to the truth-valued functions of the connectives as usual (see \cite{vanHarmelen2008}). 
% \end{definition}

An interpretation that satisfies a formula $\varSub$ is also called a \textit{model} of $\varSub$. 
For every interpretation $\omega$, if $\omega$ is a model of a formula $\varSub_1$ if and only if $\omega$ is a model of another formula $\varSub_2$, then $\varSub_1$ and $\varSub_2$ are called \textit{logically equivalent}. 
A knowledge base is satisfied by an interpretation if all of its formulas are satisfied.
Throughout this paper we denote the set of models of a knowledge base $\kb$ by $\mathsf{Mod}(\kb)$ and the set of interpretations wrt.\ the signature $\atoms(\kb)$ by $\Omega(\atoms(\kb))$.

A formula $\varSub$ is \textit{inconsistent} if there is no interpretation that satisfies it, meaning $\mathsf{Mod}(\varSub)=\emptyset$. 
By extension, a knowledge base $\kb$ is inconsistent if $\mathsf{Mod}(\kb)=\emptyset$. 
Let $\mathbb{R}_{\geq 0}^{\infty}$ be the set containing all non-negative real numbers and $\infty$.

\begin{definition}
An \emph{inconsistency measure} is a function $\inc : \allkbs \to \realPos$ which satisfies $\inc(\kb) = 0$ iff $\kb$ is consistent, for all $\kb \in \allkbs$.
\end{definition}

Many different inconsistency measures and properties that characterize these measures have been proposed. To illustrate why there are many ways to define the severity of inconsistency, consider the following example.
\begin{example} % [Inconsistent knowledge bases]
	Let $\atoms = \{ \textit{sunny}, \textit{cloudy} \}$, where $\textit{sunny}$ represents ``it is sunny'' and $\textit{cloudy}$ represents ``it is cloudy''. The following knowledge bases are inconsistent: 
	\begin{enumerate}
	    \item $\kb_1= \{ \neg\textit{cloudy},\textit{cloudy},\textit{sunny} \}$ 
	    \item $\kb_2= \{ \neg\textit{cloudy} \vee \neg \textit{sunny}, \textit{cloudy} \vee \textit{sunny}, \textit{sunny} \leftrightarrow \textit{cloudy} \}$ 
	\end{enumerate}
\end{example}

If our aim was to decide which of these knowledge bases is more severely inconsistent, there are different aspects to consider. 
The first knowledge base has a more obvious and easily fixable conflict (remove either $\neg\textit{cloudy}$ or $\textit{cloudy}$), but it also includes a formula that is not involved in any conflict ($\textit{sunny}$), meaning it contains non-zero information even if the conflict cannot be repaired. 
The second knowledge base has a more hidden conflict (all three formulas are required to produce the inconsistency); nevertheless, since all of its formulas are involved in the conflict, there is no ``safe'' formula that does not participate in any conflict, and the whole knowledge base needs to be discarded if the conflict cannot be repaired.

% The subsequent sections define the six inconsistency measures which are considered in this paper.

\subsection{The Contension Inconsistency Measure}
\label{sec:inc-contension}
The contension inconsistency measure $\icont$ \cite{grant2011measuring} is based on Priest's three-valued logic \cite{priest1979logic}.
The latter extends the truth values of propositional logic---\textit{true} ($t$) and \textit{false} ($f$)---by a third value \textit{both} ($b$), also referred to as the inconsistent or paradoxical truth value. 
The semantics for this three-valued logic are specified in Table~\ref{tab:3VL}.
%
% \mt{Formally define three-valued interpretations with the signature.}
% \ik{Done}
Further, a \textit{three-valued interpretation} $\omega^3 : \atoms(\kb) \to \{t,f,b\}$ assigns one of the three truth values to each atom in the signature $\atoms(\kb)$ of the knowledge base $\kb$.
Further, a \textit{three-valued model} of a knowledge base is an interpretation $\omega^3$ that does not assign $f$ to any formula. 
Therefore, a formula is satisfiable if it evaluates to either $t$ or $b$ in some interpretation. 
The set of three-valued models of a knowledge base $\kb$ is denoted by $\mathsf{Mod}^3(\kb)$. 
For a given three-valued interpretation $\omega^3$, we denote the set of atoms that are assigned $b$ as the $\mathsf{Conflictbase}(\omega^3)$. 
% We can now define the \textit{contension inconsistency measure} wrt.\ a given knowledge base $\kb$.

\begin{table}
	\begin{center}
	\setlength\extrarowheight{1pt}
		%\begin{tabular}{ | c | c V{5.5} c | c | c | }
		\begin{tabular}{ | c | c || c | c | c | }
			\hline
			$x$ & $y$ & $\neg x$ & $x \vee y$ & $x \wedge y$  \\ \hline
			$t$ & $t$ & $f$ 	 & $t$ 		  & $t$  \\ 
			\rowcolor{Gray}
			$t$ & $b$ & $f$ 	 & $t$ 		  & $b$  \\ 
			$t$ & $f$ & $f$ 	 & $t$ 		  & $f$  \\ 
			\rowcolor{Gray}
			$b$ & $t$ & $b$ 	 & $t$ 		  & $b$  \\ 
			\rowcolor{Gray}
			$b$ & $b$ & $b$ 	 & $b$ 		  & $b$  \\ 
			\rowcolor{Gray}
			$b$ & $f$ & $b$ 	 & $b$ 		  & $f$  \\ 
			$f$ & $t$ & $t$ 	 & $t$ 		  & $f$  \\ 
			\rowcolor{Gray}
			$f$ & $b$ & $t$ 	 & $b$ 		  & $f$  \\ 
			$f$ & $f$ & $t$ 	 & $f$ 		  & $f$  \\ 
			\hline
		\end{tabular}
		\caption[Truth table for Priest's three-valued logic]{Truth table for Priest's three-valued logic. The rows that differ from classical propositional logic are marked in gray.}
		\label{tab:3VL}
	\end{center}
\end{table}

\begin{example} The knowledge base $\kb_3=\{ x, \neg x \}$ is unsatisfiable in propositional logic but satisfiable in Priest's three-valued logic. 
The interpretation $\omega^3_1$ on $x$ with $\omega^3_1(x)=b$ is a three-valued model for $\kb_3$ with $\mathsf{Conflictbase}(\omega^3_1) = \{x\}$. 
\end{example}

\begin{definition}[Grant \& Hunter \citeyear{grant2011measuring}] % [Contension Inconsistency Measure]
    The \emph{contension inconsistency measure} $\icont(\kb):$ $\mathbb{K} \to \mathbb{N}_0$ is defined as
	$$\icont(\kb)=\min\{|\mathsf{Conflictbase}(\omega^3)| \mid \omega^3 \in \mathsf{Mod}^3(\kb) \}.$$ 
\end{definition}
In other words, the contension inconsistency value is the minimum number of atoms in a knowledge base that need to be assigned $b$ to produce a three-valued model of the knowledge base. 
The maximum inconsistency value is $\abs{\atoms(\kb)}$, because there can only be as many $b$ assignments as there are atoms in the signature.

\begin{example}
	Let $\kb_4=\{x \wedge y, \neg y\}$. 
	The interpretation $\omega^3_2: \{x, y\} \to \{t, b, f\}$ with $\omega^3_2(x)=t$ and $\omega^3_2(y)=b$ is a three-valued model for $\kb_4$ with $\mathsf{Conflictbase}(\omega_2^3) = \{y\}$.
	There is no model that assigns $b$ to $0$ atoms, therefore $\min \{|\mathsf{Conflictbase}(\omega_2^3)|\} = |\{y\}| = 1$, and hence $\icont(\kb_4)=1$.
\end{example}

\subsection{The Forgetting-Based Inconsistency Measure}
\label{subsubsec:iforget_def}

In order to define the forgetting-based inconsistency measure, we first require the definition of \textit{atom occurrences} in formulas.
Let $\varSub$ be a propositional logic formula and $\atoms(\varSub)$ its signature. 
An atom $\varAt \in \atoms(\varSub)$ can appear multiple times in $\varSub$. 
Each of these appearances is called an \emph{occurrence} of $\varAt$. 
\begin{definition}\label{def:num-occs}
    Let $\mathsf{\#occ}(\varAt,\varSub)$ denote the \emph{number of times $\varAt$ occurs in $\varSub$}.
\end{definition}
\begin{example}\label{ex:num-occs}
    Consider $\phi_1 = x \land y \lor \lnot x \land z$ with $\atoms(\phi_1) = \{x,y,z\}$.
    We can see that $x$ appears twice in $\phi_1$, and $y$ and $z$ each appear once.
    Consequently, $\mathsf{\#occ}(x,\phi_1) = 2$ and $\mathsf{\#occ}(y,\phi_1) = \mathsf{\#occ}(z,\phi_1) = 1$.
\end{example}
We extend Definition \ref{def:num-occs} to knowledge bases such that $\mathsf{\#occ}(\varAt,\kb)$ represents the number of times $\varAt$ occurs in the formulas of the knowledge base $\kb$:
$$ \mathsf{\#occ}(\varAt, \kb) = \sum_{\varFor \in \kb} \mathsf{\#occ}(\varAt, \varFor) $$

We use superscripts to distinguish between different occurrences of the same atom. 
We assume that $\varAt^l$ refers to the $l$-th occurrence of $\varAt$ in the considered formula $\varSub$, or the considered knowledge base $\kb$, with $l \in \{1, \ldots, \mathsf{\#occ}(\varAt, \varSub)\}$, or $l \in \{1, \ldots, \mathsf{\#occ}(\varAt, \kb)\}$, respectively. 
We also refer to $l$ as the \textit{label} of $\varAt$.
% Note that distinct occurrences of the same atom, e.\,g., $\varAt^1$ and $\varAt^2$, share the same truth value for any interpretation $\omega$. 
\begin{definition}
    We define the set of all \emph{atom occurrences in a formula $\varSub$} as
    $$ \occs(\varSub) = \{ \varAt_i^l \mid \varAt_i \in \atoms(\varSub), i \in \{1, \ldots, \abs{\atoms(\varSub)}\}, l \in \{1, \ldots, \mathsf{\#occ}(\varAt_i, \varSub) \} \} $$
\end{definition}
\begin{example}
    Consider again the formula $\phi_1 = x \land y \lor \lnot x \land z$ from Example \ref{ex:num-occs}.
    We assign each atom $\varAt$ a label  $l \in \{ 1, \ldots, \mathsf{\#occ}(\varAt, \phi_1) \}$ and get 
    % $$ \phi_1^l = x^1 \land y^1 \lor \lnot x^2 \land z^1 $$
    % $\occs(\phi_1)$ contains all (labeled) atom occurrences in $\phi_1$, i.e., 
    % $$ \occs(\phi_1) = \{ x^1, x^2, y^1, z^1 \} $$
    $\phi_1^l = x^1 \land y^1 \lor \lnot x^2 \land z^1 $.
    $\occs(\phi_1)$ contains all (labeled) atom occurrences in $\phi_1$, i.\,e., 
    $ \occs(\phi_1) = \{ x^1, x^2, y^1, z^1 \} $.
\end{example}
We analogously extend the definition of $\occs(\cdot)$ to knowledge bases.
% Further, let $\occs(\kb)$ denote the set of all atom occurrences in a knowledge base $\kb$:
% $$ \occs(\kb) = \{ \varAt_i^l \mid \varAt_i \in \atoms(\kb), i \in \{1, \ldots, \abs{\atoms(\kb)}\}, l \in \{1, \ldots, \mathsf{\#occ}(\varAt_i, \kb) \} \} $$

The forgetting-based inconsistency measure $\iforget$ as defined by Besnard \citeyear{besnard2016forget} utilizes a special operation on knowledge bases (\textit{forgetting}) to measure the level of inconsistency. 
% The intuition for this measure is that the more effort needed to restore consistency for some inconsistent knowledge base $\kb \in \allkbs$, the more severe the inconsistency of $\kb$.
The idea behind this measure is similar to the one behind the contension inconsistency measure.
However, while the contension measure refers to the atoms in the signature of a knowledge base, the forgetting-based measure refers to individual atom occurrences. % in a knowledge base.
An atom occurrence can be ``forgotten'' by replacing it with $\top$ or $\bot$.
We denote the replacement of the $l$-th occurrence of atom $\varAt$ in formula $\varSub$ by $\top$ as $\varSub \mid \varAt^l \forgetting \top$ (analogously with $\bot$ instead of $\top$).
Formally, the forgetting operation is defined as follows. 

\begin{definition}[Besnard \citeyear{besnard2016forget}] % [Forgetting operation $\forgetfunc_{i,a}$ \cite{besnard2016forget}]
    Let $\varSub$ be a propositional logic formula containing $\mathsf{\#occ}(\varAt, \varSub)$ occurrences of $\varAt \in \atoms(\varSub)$. 
    The following substitution of the $l$-th occurrence of $\varAt$ is then called \emph{forgetting} $\varAt^l$ in $\varSub$: 
    \begin{align*}
        \varSub \mid \varAt^l \forgetting \top, \bot = (\varSub \mid \varAt^l \forgetting \top) \; \lor \; (\varSub \mid \varAt^l \forgetting \bot)
    \end{align*}
    For succinct notation we express the formula obtained by forgetting the occurrence $\varAt^l$ in $\varSub$ by writing $\forgetfunc_{l,\varAt}.\varSub$. 
\end{definition}
\begin{example}\label{ex:forgetting} % [Forgetting operation]
    Consider the formula $\phi_2 = \neg x^1 \land \neg y^1 \land (x^2 \lor y^2) \land z^1$.
    % \begin{align*} 
    %     \phi_2 & = \neg x^1 \land \neg y^1 \land (x^2 \lor y^2) \land z^1
    % \end{align*} 
    If the second occurrence of $x$ (i.\,e., $x^2$) is forgotten, we obtain:
\begin{equation*}
    \begin{split}
        \phi_2 \mid x^2 \forgetting \top, \bot = & (\phi_2 \mid x^2 \forgetting \top) \; \lor \; (\phi_2 \mid x^2 \forgetting \bot) \\ 
        = &
        (\neg x^1 \land \neg y^1 \land (\top \lor y^2) \land z^1) \lor
        (\neg x^1 \land \neg y^1 \land (\bot \lor y^2) \land z^1) \\ 
        = &
        \neg x^1 \land \neg y^1 \land z^1
    \end{split}
\end{equation*}
\end{example}
Observe that the original formula $\phi_2$ in Example \ref{ex:forgetting} is not satisfiable wrt.\ classical two-valued semantics, i.\,e. $\mathsf{Mod}(\phi_2) = \emptyset$. 
However, the formula obtained by forgetting $x^2$ in $\phi_2$, written as $\forgetfunc_{2,x}.\phi_2$, is satisfiable. 
In fact, any propositional logic formula $\varSub$ can be made consistent by forgetting a sufficient number of atom occurrences. By extension, consistency can be restored for any inconsistent knowledge base $\kb \in \allkbs$ if a sufficient number of atom occurrences are forgotten.

\begin{definition}[Besnard \citeyear{besnard2016forget}] % [Inconsistency Measure $\iforget$ \cite{besnard2016forget}]
Let $\land_\kb$ be the conjunction of all formulas $\varFor \in \kb$.
Based on the  forgetting operation, the measure $\iforget : \allkbs \to \natnumszero$ can be defined as
\begin{align*}
\iforget(\mathcal{K}) = \min\{n \mid \forgetfunc_{l_1,\varAt_1}.\forgetfunc_{l_2,\varAt_2}. \dots
 .\forgetfunc_{l_n,\varAt_n}.\land_\mathcal{K} \not\models \bot \}
\end{align*}
with $\varAt_i \in \atoms(\kb)$ and $l_i$ being the respective labels with $i \in \{1, \ldots, n\}$.
\end{definition}
In other words, the inconsistency value $\iforget(\kb)$ is the minimum number of atom occurrences which have to be forgotten in order to restore consistency in $\kb$. 
% Let $\occs(\kb)$ be the set of all atom occurrences in $\kb$. 
% Forgetting every atom occurrence in a knowledge base yields an empty knowledge base, which is trivially consistent. 
Forgetting every atom occurrence in a knowledge base yields a trivially consistent knowledge base. 
It is thus guaranteed that the inconsistency value $\iforget(\kb)$ will not exceed $|\occs(\kb)|$. 
% If $\kb$ is consistent, no occurrences have to be forgotten, and $\iforget(\kb)= 0$. 

\begin{example}
    Consider the knowledge base $\kb_5 = \{x \land y, x \lor y, z, \lnot x\}$.
    After assigning labels to all atom occurrences, we get $\kb_5^l = \{x^1 \land y^1, x^2 \lor y^2, z^1, \lnot x^3 \}$.
        % $$ \kb_5^l = \{x^1 \land y^1, x^2 \lor y^2, z^1, \lnot x^3 \}$$
    Clearly, there is a conflict between $x \land y$ and $\lnot x$.
    However, if we forget either the first or the third occurrence of $x$ (i.e., $x^1$ or $x^3$), $\kb_5$ becomes consistent.
    Since there is no possibility of rendering $\kb_5$ consistent by forgetting less than one atom occurrence, $\iforget(\kb_5) = 1$.
\end{example}

\subsection{The Hitting Set Inconsistency Measure}

% In order to be able to define the hitting set inconsistency measure, we first need to define the notion of a hitting set.
In general, a hitting set is a set that contains at least one element from each set of a given set of sets.
In this work, we use a context-specific form of a hitting set. %, which is defined as follows.
\begin{definition}
    A set $H \subseteq \Omega(\atoms(\kb))$ is a \emph{hitting set} of a knowledge base $\kb$ if for every $\varFor \in \kb$ there exists an $\omega \in H$ with $\omega \models \varFor$. 
\end{definition}
We use the above notion of a hitting set to define the hitting set inconsistency measure \cite{thimm2016}.
Let $\natPosInfty$ denote $\natnumszero \cup \{\infty\}$.
\begin{definition}[Thimm \citeyear{thimm2016}] % [Hitting set inconsistency measure]
    The \emph{hitting set inconsistency measure} $\ihs(\kb): \mathbb{K} \to \natPosInfty$ is defined as
	$$\ihs(\kb) = \min\{\,\abs{ H} \mid \text{H is a hitting set of } \kb \} - 1$$ 
	with $\min \{\emptyset\} = \infty$ $\forall \kb \in \mathbb{K} \backslash \{\emptyset\}$, and with $\ihs(\emptyset)=0$.
\end{definition} 
Intuitively, the hitting set inconsistency measure is based on the idea of searching for the minimum number of interpretations that are required to satisfy all formulas in a given knowledge base $\kb$, subtracted by $1$.
If $\kb$ is consistent, we only need one interpretation (i.\,e., a model of $\kb$). 
Thus, if we subtract $1$, we get $0$---the desired inconsistency value for consistent knowledge bases.
If there is a conflict in $\kb$, at least $2$ interpretations are needed to satisfy all formulas, i.\,e., the inconsistency value is $>0$.
The more interpretations we need to satisfy all formulas, the more severe the conflict is considered to be.
Note that $\ihs(\kb)=\infty$ if and only if one of the formulas in $\kb$ is contradictory. 
In this case there exists no hitting set as there exists no interpretation which can satisfy such a formula, and consequently, no number of interpretations can satisfy all formulas in $\kb$. 
If $\ihs(\kb) \neq \infty$ then the maximum inconsistency value is $|\kb|-1$. 
This value occurs when there is a model for every formula in $\kb$, but no interpretation is a model for more than one formula.

\begin{example}
	Consider again $\kb_4=\{x \wedge y, \neg y\}$. 
    Let $\omega_1$ and $\omega_2$ be interpretations % which are defined as follows
    with $\omega_1(x) = \omega_1(y) = t$, and $\omega_2(x) = t$ and $\omega_2(y) = f$.
    % \begin{align*}
    %     & \omega_1(x) = t \qquad \omega_1(y) = t \\
    %     & \omega_2(x) = t \qquad \omega_2(y) = f
    % \end{align*}
    % Clearly, $\omega_1$ is a model for $x \land y$, i.\,e., the first formula in $\kb_4$, and $\omega_2$ is a model for $\lnot y$, i.\,e., the second formula in $\kb_4$.
    Clearly, $\omega_1$ is a model for $x \land y$, and $\omega_2$ is a model for $\lnot y$.
	There is no single interpretation satisfying both formulas, making $\{\omega_1,\omega_2\}$ a minimal hitting set of $\kb_4$. 
	Therefore, $\ihs(\kb_4) = |\{\omega_1, \omega_2\}| - 1 =  2 - 1 = 1$.
	
	Consider another knowledge base $\kb_6=\{x \wedge \neg x, y, z\}$. 
	The formula $x \wedge \neg x$ is contradictory, consequently there is no hitting set of $\kb_6$, and $\ihs(\kb_6)=\infty$.
\end{example}

\subsection{Distance-Based Inconsistency Measures}
Grant \& Hunter \citeyear{grant2013distance} proposed several inconsistency measures based on calculating distances between interpretations. 
Different distance measures could be used, but we assume that the distance measure is always the Dalal distance $d(\omega,\omega')$ between two interpretations $\omega$ and $\omega'$. 
The Dalal distance, also known as Hamming distance, measures distances based on the number of differing digits. 
When talking about interpretations, we define the Dalal distance as the number of differing truth value assignments for atoms $\varAt \in \atoms$ between two interpretations $\omega, \omega'$.

\begin{definition} % [Dalal distance]
    The \emph{Dalal distance} $d_{dalal}$ between two interpretations $\omega, \omega'$ is defined as
	$$d_{dalal}(\omega, \omega')=\abs{\{\varAt \in \atoms \mid \omega(\varAt) \neq \omega'(\varAt)\}}$$
\end{definition} 

Additionally, we define the Dalal distance between a set of interpretations $I \subseteq \Omega(\atoms)$ and a single interpretation $\omega$ as the smallest distance between some interpretation in $I$ and the interpretation $\omega$. 
%follows.
% This means that  $d_{dalal}(I, \omega)$ is 

\begin{definition} % [Dalal distance for interpretations]
    The Dalal distance $d_{dalal}$ between a \emph{set of interpretations} $I$ and an \emph{individual} interpretation $\omega$ is defined as
	$$d_{dalal}(I, \omega)=\min_{\omega' \in I} (\omega',\omega), \text{ with } d_{dalal}(\emptyset, \omega) = \infty$$ 
\end{definition} 

% This means that  $d_{dalal}(I, \omega)$ is the smallest distance between some interpretation in $I$ and the interpretation $\omega$. 

We now define three distance-based inconsistency measures. 
The general idea behind all three measures is to find an interpretation that has an ``optimal'' distance to the models of the individual formulas in a given knowledge base $\kb$. 
The definition of ``optimal'' differs for each measure.

The \textit{max-distance} measure calculates the interpretation with the smallest maximum distance to the models of each formula in $\kb$. 
The inconsistency value is equal to the aforementioned smallest maximum distance.
In case that $\kb$ contains a contradictory formula $\varFor_{\bot}$, i.\,e., $\mathsf{Mod}(\varFor_\bot) = \emptyset$, the maximum distance will always be $\infty$, due to $d_{dalal}(\emptyset,\omega) = \infty$ for any $\omega \in \Omega(\atoms)$.
Hence, the minimal maximum distance is $\infty$, and $\imdalal(\kb) = \infty$.

\begin{definition}[Grant \& Hunter \citeyear{grant2013distance}] % [Max-distance inconsistency measure $\imdalal$]\hfill
    The \emph{max-distance inconsistency measure} $\imdalal(\kb): \allkbs \to \natPosInfty$ is defined as 
	$$\imdalal(\kb)=\min \{ \max_{\varFor \in \kb} d_{dalal}(\mathsf{Mod}(\varFor),\omega) \mid \omega \in \Omega(\atoms(\kb)) \}$$
\end{definition} 

The \textit{sum-distance} measure calculates an interpretation $\omega$ such that the sum of $\omega$'s distances to the models of each formula in $\kb$ is minimal. 
The inconsistency value is equal to the aforementioned smallest sum of distances.
Note that in the case that $\kb$ contains a contradictory formula $\varFor_{\bot}$, each sum will add up to $\infty$, and thus $\imdalal(\kb) = \infty$.
\begin{definition}[Grant \& Hunter \citeyear{grant2013distance}] % [Sum-distance inconsistency measure $\isdalal$]\hfill
    The \emph{sum-distance inconsistency measure} $\isdalal(\kb): \allkbs \to \natPosInfty$ is defined as 
	$$\isdalal(\kb)=\min \{ \Sigma_{\varFor \in \kb} d_{dalal}(\mathsf{Mod}(\varFor),\omega) \mid \omega \in \Omega(\atoms(\kb)) \}$$
\end{definition} 

The \textit{hit-distance} measure calculates an interpretation such that the number of distances greater than $0$ to the models of each formula is minimal. 
A different, simpler characterization of this measure \cite{grant2017}, shows that $\ihdalal$ is equal to the minimum number of formulas that need to be removed from a knowledge base $\kb$ in order to make it consistent.
\begin{definition}[Grant \& Hunter \citeyear{grant2017}] %[Hit-distance inconsistency measure $\ihdalal$]\hfill
    The \emph{hit-distance inconsistency measure} $\ihdalal(\kb): \allkbs \to \natnumszero$ is defined as 
	$$\ihdalal(\kb)=\min \{\abs{ \{ \varFor \in \kb \mid d_{dalal}(\mathsf{Mod}(\varFor),\omega) > 0 \}} \mid \omega \in \Omega(\atoms(\kb)) \}$$	
\end{definition} 

$\imdalal(\kb)$ and $\isdalal(\kb)$ take the value $\infty$ if and only if one of the formulas in $\kb$ is inconsistent individually. 
For the non-infinity case, the maximum value of $\imdalal(\kb)$ is  $\abs{\atoms(\kb)}$, because the distance is based on the number of differing atom assignments which cannot exceed the number of atoms. 
For the non-infinity case of $\isdalal(\kb)$, the maximum value is $\abs{\kb} \cdot \abs{\atoms(\kb)}$, because in the worst case, each distance $d_{dalal}(\mathsf{Mod}(\varFor),\omega)$ for all formulas $\varFor \in \kb$ is maximal.
$\ihdalal$ cannot take the value $\infty$, because the value of $\ihdalal$ is the number of distances greater than $0$ rather than a distance itself. 
The maximum value of $\ihdalal$ is therefore $|\kb|$.

\begin{example}
	Consider again $\kb_4=\{x \wedge y, \neg y\}$. 
	For simplified readability, we name the formulas: $x \wedge y =: \alpha_1$ and $\neg y =: \alpha_2$. 
	The possible interpretations for $\atoms(\kb_4)$ are $\Omega(\atoms(\kb_4))=\{\omega_0,\omega_x,\omega_y,\omega_{xy}\}$ with:
	\begin{align*}
	    & \omega_0(x)= \omega_0(y)=f \\ 
	    & \omega_x(x)=t, \omega_x(y)=f \\ 
	    & \omega_y(x)=f, \omega_y(y)=t \\ 
	    & \omega_{xy}(x) = \omega_{xy}(y)=t
	\end{align*}
	The models for the formulas in $\kb_4$ are $\mathsf{Mod}(\alpha_1)=\{\omega_{xy}\}$ and $\mathsf{Mod}(\alpha_2)=\{\omega_0,\omega_x\}$. 
	We now calculate $d_{dalal}$ for all formulas in $\kb_4$ and all interpretations in $\Omega(\atoms(\kb_4))$: 
	\begin{align*}
	    & d_{dalal}(\mathsf{Mod}(\alpha_1),\omega_0) =2 && \qquad d_{dalal}(\mathsf{Mod}(\alpha_2),\omega_0) =0 \\
	    & d_{dalal}(\mathsf{Mod}(\alpha_1),\omega_x) =1 && \qquad d_{dalal}(\mathsf{Mod}(\alpha_2),\omega_x) =0  \\
	    & d_{dalal}(\mathsf{Mod}(\alpha_1),\omega_y) =1 && \qquad d_{dalal}(\mathsf{Mod}(\alpha_2),\omega_y) =1  \\
	    & d_{dalal}(\mathsf{Mod}(\alpha_1),\omega_{xy}) =0 && \qquad d_{dalal}(\mathsf{Mod}(\alpha_2),\omega_{xy}) =1 
	\end{align*}
     The max-distance measure $\imdalal$ looks at the maximum distances per interpretation, i.\,e., $\max_{\alpha \in \kb} d_{dalal}(\mathsf{Mod}(\alpha),\omega)$. 
     Here, the set of maximal distances consists of:
     \begin{align*}
         & d_{dalal}(\mathsf{Mod}(\alpha_1),\omega_0)=2 \\
         & d_{dalal}(\mathsf{Mod}(\alpha_1),\omega_x)=1 \\
         & d_{dalal}(\mathsf{Mod}(\alpha_1),\omega_y)=1 \\
         & d_{dalal}(\mathsf{Mod}(\alpha_2),\omega_{xy})=1
     \end{align*}
     The value $\imdalal(\kb_4)$ is the minimum of those maxima, meaning  
     $$d_{dalal}(\mathsf{Mod}(\alpha_1),\omega_x)=1,$$ 
     or one of the other maxima with value $1$. 
     Therefore, $\imdalal(\kb_4)=1$. 
	
	For the sum-distance inconsistency measure $\isdalal$, we sum up the distances for each interpretation: 
	\begin{align*}
	    & \Sigma_{\alpha \in \kb_4} d_{dalal}(\mathsf{Mod}(\alpha),\omega_0)=2 \\
    	& \Sigma_{\alpha \in \kb_4} d_{dalal}(\mathsf{Mod}(\alpha),\omega_x)=1 \\
    	& \Sigma_{\alpha \in \kb_4} d_{dalal}(\mathsf{Mod}(\alpha),\omega_y)=2 \\
    	& \Sigma_{\alpha \in \kb_4} d_{dalal}(\mathsf{Mod}(\alpha),\omega_{xy})=1. 
	\end{align*}
	The value of $\isdalal(\kb_4)$ is the minimum of the above sums.
	Consequently, $\isdalal(\kb_4)=1$. 
	
	For the hit-distance inconsistency measure $\ihdalal$, we count the number of distances greater than 0 for each interpretation: 
	\begin{align*}
	    & \abs{ \{ \alpha \in \kb_4 \mid d_{dalal}(\mathsf{Mod}(\alpha),\omega_0) > 0 \}}=1 \\
    	& \abs{ \{ \alpha \in \kb_4 \mid d_{dalal}(\mathsf{Mod}(\alpha),\omega_x) > 0 \}}=1 \\
    	& \abs{ \{ \alpha \in \kb_4 \mid d_{dalal}(\mathsf{Mod}(\alpha),\omega_y) > 0 \}}=2 \\
    	& \abs{ \{ \alpha \in \kb_4 \mid d_{dalal}(\mathsf{Mod}(\alpha),\omega_{xy}) > 0 \}}=1
	\end{align*}
	The inconsistency value is the minimum of those counts: $\ihdalal(\kb_4)=1$.
\end{example}

\section{SAT-Based Algorithms for Selected Inconsistency Measures}\label{sec:sat}

By now we have established a fundamental overview of inconsistency measurement in general, and we defined the six particular inconsistency measures considered in this work.
Building on that, in the section at hand, we describe how to use SAT encodings to determine inconsistency values.
% To achieve that, we first provide some essential definitions associated with SAT solving in Section \ref{sec:satisfiability-solving} and describe a general scheme for using SAT solving combined with a search procedure in order to identify inconsistency values in Section \ref{sec:sat-scheme}.
% % Moreover, Section \ref{sec:sat-cardinality} covers the use of cardinality constraints in the context of this paper.
% Finally, in Section \ref{sec:sat-encoding-ic} through Section \ref{sec:sat-encoding-ihdalal} we describe the specific SAT encodings constructed for each of the six measures.

\subsection{Satisfiability Solving}\label{sec:satisfiability-solving}

A major problem in the realm of propositional logic is the \textit{Boolean satisfiability problem}, which is one of the most-studied problems of computer science, and which is $\np$-complete \cite{biere2009}.
%
% Die Input-KBs für IM sind nicht eingeschränkt, was die connectives angeht.
% -> Der Input für das Boolean sat problem ist jedoch eine Formel in CNF
%   -> 1 Satz: Was ist CNG
%   -> jede logische Formel kann in CNF umgewandelt werden (s.u.)
%   -> eine KB kann als Konjunktion ihrer einzelnen Formeln verstanden werden
The input of the Boolean satisfiability problem is a formula in \textit{conjunctive normal form} (CNF), which is a conjunction of clauses.
Note that we do not restrict our input knowledge bases for inconsistency measurement to CNF, however, every propositional formula can be converted into an equivalent formula in CNF (see \cite{vanHarmelen2008} for a proof). 
Naively, this conversion can be done using Boolean transformation rules, but the resulting formulas are sometimes exponentially larger than the original formula. 
Using the Tseitin method \cite{tseitin1968}, every formula can be converted into an equisatisfiable CNF formula with only a linear increase in size. 

\begin{definition} % [Boolean satisfiability problem]
	The \emph{Boolean satisfiability problem} (SAT) is the problem of deciding if there exists an interpretation that satisfies a given propositional formula. 
	\begin{itemize}
	    \item \textbf{Input:} formula $\varSub$ in CNF 
	    \item \textbf{Output:} \emph{true} iff $\mathsf{Mod}(\varSub) \neq \emptyset$, \emph{false} otherwise
	\end{itemize}
\end{definition}

A SAT solver is a program that solves SAT for a given formula.
There exist many high-performance SAT solvers (for an overview, see the results of the recurring SAT competition\footnote{\url{http://www.satcompetition.org/}}). 
A consequence of the $\np$-completeness of SAT is that SAT solvers can also be used to solve other $\np$-complete problems if we transform them into SAT problems. 
% The process of transforming a problem into a SAT problem is commonly referred to as ``finding a SAT encoding''. 
% Our algorithmic approach presented in this section is based on this idea.

% To represent $\upperi$, we need a way to encode the concept that some value is an upper limit.
In order to use SAT solving for the problem at hand, we need a way to encode the concept that some value is an upper limit.
\textit{Cardinality constraints} represent that at least, at most, or exactly some number $k$ out of a set of propositional atoms are allowed to be true. 
Using the formal definition of Abio et al.\ \citeyear{abio2013}, we define that a \textit{cardinality constraint} is a constraint of the form 
$$\varAt_1 + \ldots + \varAt_n \bowtie k$$
where $\varAt_1, \ldots, \varAt_n$ are propositional atoms, $\bowtie\ \in \{<, \leq, =, \geq, >\}$ is an operator, and $k$ is a natural number.
The meaning of the $+$ operator in this context is that for every true atom the number $1$ is added and for every false atom the number $0$ is added, thereby counting the number of true atoms. 
Constraints of the form  
$\varAt_1 + \ldots + \varAt_n \leq k$
are informally referred to as \textit{at-most-$k$ constraints}. 
Such a constraint is true if and only if\ $\leq k$ atoms out of the set $\{\varAt_1, \ldots, \varAt_n\}$ are true.
The direct approach to encode cardinality constraints is to enumerate all possible atom assignments that satisfy the constraints. 
% For example, to represent the constraint $\varAt_1 + \ldots + \varAt_n \leq k$, we add all clauses that are disjunctions of subsets $\mathbb{X}_\bot \subseteq \{ \neg \varAt_1, \ldots, \neg \varAt_n \}$ of size $k+1$ to the encoding. 
For small knowledge bases the resulting encodings can be compact; however, in general, there are $\binom{n}{k+1}$ subsets of size $k+1$, meaning we generate $\binom{n}{k+1}$ clauses. 
% This method is sometimes referred to as the ``binomial encoding'' for this reason. 
It is inefficient for larger inputs because the size of the constraint grows exponentially with $n$. 
Several more efficient methods to encode cardinality constraints more efficiently have been developed. 
In the implementations used for our experimental evaluation (Section \ref{sec:evaluation}), we use the \textit{sequential counter encoding} method \cite{sinz2005} which only generates $n \cdot k$ clauses.

In the following sections, we use the notation $\mathrm{at\_most}(k,\mathbb{X})$ where $\mathbb{X}$ is a set of atoms $\{\varAt_1, \ldots, \varAt_n \}$ to denote % that one of the encodings for cardinality constraints is used to encode the constraint $\varAt_1 + \ldots + \varAt_n \leq k$. 
the constraint $\varAt_1 + \ldots + \varAt_n \leq k$.

\subsection{Scheme for SAT-Based Algorithms for Inconsistency Measures}\label{sec:sat-scheme}
This section proposes a binary search algorithm for the computation of the inconsistency value of a given knowledge base $\kb$ wrt.\ one of the inconsistency measures considered in this work, i.\,e., the computation of $\inc(\kb)$ with $\inc \in \{\icont, \iforget, \ihs, \imdalal, \isdalal, \ihdalal\}$.
Our approach follows similar procedures that have been proposed in the literature (see, e.g., \cite{giunchiglia2006solving}).
So the objective of our algorithm is to solve the function problem $\valuei$ as formalized in \cite{thimm2019a}.
% \begin{table}[!htb]
% \centering
\begin{center}
\begin{tabular}{lll}
$\valuei$ & Input:  & $\kb \in \mathbb{K}$ \\
& Output: & value of $\mathcal{I}(\mathcal{K})$ \\
\end{tabular}
\end{center}
% \end{table}

% First we discuss the types of values that $\inc(\kb)$ can take for each of the considered measures. 
% $\icont(\kb)$ counts the minimal number of atoms that have to be assigned the truth value $b$ in Priest's three-valued logic. 
% $\iforget(\kb)$ is defined as the minimal number of atom occurrences that have to be forgotten to make $\kb$ consistent. 
% $\ihs(\kb)$ is equal to the minimal number of interpretations such that there is a model for every formula in $\kb$.
% $\imdalal$ and $\isdalal$ are distances that count differing atom assignments between certain interpretations and $\ihdalal$ counts the number of such distances greater than $0$ between certain interpretations. 
% For $\ihs, \imdalal, \isdalal$ the inconsistency value equals $\infty$ for the edge case that a formula in $\kb$ is unsatisfiable individually. 
% For all other cases and for all cases of the other measures, it trivially follows that the inconsistency values are integers, as the numbers of atom assignments, atom occurrences and interpretations can only be integers.

% Furthermore, a
All inconsistency measures considered in this paper have a clearly defined range as established in their descriptions in Section \ref{sec:preliminaries} (see also Table \ref{tab:search-ranges-sat}). 
% A summary of the ranges of all inconsistency measures considered in this paper can be found in Table \ref{tab:search-ranges-sat}. 
This clearly defined search space prompts the use of a binary search procedure. % to search in the range of possible values. 
The range is searched by the algorithm and for each iteration of the search, a call to a SAT-solver is made, to decide which half of the range needs to be searched in the following iteration. 
The problem that is solved at each iteration of the search procedure is the decision problem $\upperi$ \cite{thimm2019a}, meaning the problem of deciding whether a given value $u$ is an upper bound of the inconsistency value of a given knowledge base.
\begin{center}
\begin{tabular}{lll}
    $\upperi$ & Input:  & $\kb \in \mathbb{K} , $ \\
    & & $u \in \natnumszero $ \\
    & Output: & \emph{true} iff $\mathcal{I}(\mathcal{K}) \leq u$ \\
\end{tabular}
\end{center}

\begin{table}
    \centering
    \setlength\extrarowheight{1pt}
    \begin{tabular}{|l|c|c|}
    \hline
    \inc   & Maximum non-$\infty$ value        & $\infty$ possible? \\ 
    \hline
    \hline
    \icont    & $\abs{\atoms(\kb)}$                 & No                    \\ \hline
    \iforget  &  $\abs{\occs(\kb)}$                  & No                    \\ \hline
    \ihs      & $\abs{\kb}-1$                          & Yes                   \\ \hline
    \imdalal  & $\abs{\atoms(\kb)}$                  & Yes                   \\ \hline
    \isdalal  & $\abs{\atoms(\kb)} \cdot \abs{\kb}$ & Yes                   \\ \hline
    \ihdalal  & $\abs{\kb}$                          & No                    \\ \hline
    \end{tabular}
    \caption{Ranges of possible values of each inconsistency measure considered in this work. The minimum value is $0$ in all cases.
    % Search ranges for SAT-based algorithms. The minimum value is $0$ for all considered inconsistency measures
    }
    \label{tab:search-ranges-sat}
\end{table} 

For all inconsistency measures presented in Section \ref{sec:preliminaries}, the problem $\upperi$ is $\np$-complete as shown in \cite{thimm2019a}, and is thus reducible to SAT.
Therefore, the idea is to find SAT encodings $S: \mathbb{K} \times \mathbb{N}_0 \to \mathbb{K}$ for $\upperi$ for all inconsistency measures, that satisfiy the requirement that $(\kb, u)$ is a positive instance of $\upperiarg{\inc}$ if and only if $S(\kb,u)$ is a positive instance of SAT.
    % \begin{align*}
    % (\kb, u) \text{ is a positive instance of } \upperiarg{\inc}\\
    % \iff S(\mathcal{K}, u) \text{ is a positive instance of SAT } 
    % \end{align*}
At each search step, we compute the encoding for the current potential upper bound $u$ and make a call to a SAT solver to check if $u$ is in fact an upper limit of the inconsistency value, until the exact value is found. 
If no value is found, $\infty$ is assumed to be the inconsistency value for the measures $\ihs$, $\imdalal$, and $\isdalal$. 
With regard to the other measures ($\icont$, $\iforget$, and $\ihdalal$), the binary search procedure is guaranteed to return a finite value after $\log_2(n+1)$ steps, with $n$ being the maximally possible finite value. % \footnote{As the search space also includes the value $0$, the total size of the search space is $n+1$.}
% The search procedure returns the inconsistency value after $\log_2(n+1)$ steps, with $n$ being the maximally possible finite value\footnote{As the search space also includes the value $0$, the total size of the search space is $n+1$.}.
% (see Algorithm \ref{alg:binsearch} for a formal description) 

% Warum SAT statt MaxSAT? 
Note that the iterative SAT-based approach described in this section is essentially a naive MaxSAT approach. 
Although the literature offers many different dedicated MaxSAT solvers (see, e.\,g.,~\cite{berg2024maxsat} for a recent overview), which are based on several different optimization algorithms, we decided to rely on classical SAT solvers for this work. %, and investigate optimization strategies later. 
In particular, the focus of this work is on the presented encodings; a detailed investigation of the various optimization algorithms would exceed the scope of this paper (note, however, that we briefly touch upon the topic of optimization wrt.\ SAT in Section~\ref{sec:maxsat} and wrt.\ ASP in Section~\ref{sec:core-guided}).

\subsection{The Contension Inconsistency Measure}\label{sec:sat-encoding-ic}

% Beginning with this section, we describe how to construct the SAT encodings that are used by Algorithm \ref{alg:binsearch} for specific inconsistency measures. 
Beginning with this section, we describe how to construct the SAT encodings that are used in the binary search procedure described in the previous section for each specific inconsistency measure. 
Note that the encoding for the contension inconsistency measure covered in this section was already proposed in \cite{kuhlmann2022comparison}.
However, for the sake of completeness, we describe it once more. 

Let $\kb$ be a propositional knowledge base and $u$ an integer representing a possible upper bound for the contension inconsistency value $\icont(\kb)$. 
We present a SAT encoding for $\upperiarg{\icont}$, denoted $\satc(\kb,u)$, which is defined by the components (SC1)--(SC17). 
% The ``Signature'' part describes the signature of the encoding. 
% The ``Constraints'' part lists the formulas that make up the encoding, using the atoms from the signature. The encoding will be elaborated in the following.

Recall that $\icont(\kb)$ is defined by the minimal number of atoms in $\atoms(\kb)$ that need to be set to $b$ in order for $\kb$ to become consistent in Priest's three-valued logic. 
To encode the three-valued logic in propositional logic, we use additional variables.
First of all, for every atom $\varAt$ in the original signature $\atoms(\kb)$, we use three new atoms  $\varAt_t$, $\varAt_b$, $\varAt_f$ (SC1) to represent the three truth values $t$, $b$, $f$. % (see (SC1) in the overview). 
We need to assure that out of each triple of new atoms that represent the three possible truth values of the original atom $\varAt$, exactly one is true. 
We represent this by adding the following formula for every $\varAt \in \atoms(\kb)$ to the encoding $\satc(\kb,u)$:
\begin{align*}
    (\varAt_t \vee \varAt_f \vee \varAt_b) \wedge (\neg \varAt_t \vee \neg \varAt_f) \wedge (\neg \varAt_t \vee \neg \varAt_b) \wedge (\neg \varAt_b \vee \neg \varAt_f) \tag{SC3}
\end{align*}
To model three-valued satisfiability, we recursively represent the evaluation of formulas in three-valued logic (see the semantics described in Section \ref{sec:inc-contension}). 
To achieve this, three variables $v_{\varSub}^t,v_{\varSub}^f,v_{\varSub}^b$ (SC2) are used for every subformula $\varSub$ in every formula $\varFor \in \kb$ to represent each of the three possible valuations of $\varSub$. 
For each of those valuation atoms we add an equivalence that defines the evaluations based on the operator of the subformula. 
To represent all possible formulas, we need to encode the operators $\land$, $\lor$, and $\lnot$.
% Formulas containing the operators $\rightarrow$ and $\leftrightarrow$ can be reduced to those operators by means of the usual transformation rules, i.\,e., $\varSub_1 \rightarrow \varSub_2$ is equal to $\neg \varSub_1 \vee \varSub_2$, and $\varSub_1 \leftrightarrow \varSub_2$ is equal to $(\neg \varSub_1 \vee \varSub_2) \wedge (\neg \varSub_2 \vee \varSub_1)$ (observe that these transformations preserve the truth values in Priest's three-valued logic).

We encode conjunctions $\varSub_c = \varSubSub_{c,1} \land \varSubSub_{c,2}$ by adding the following formulas to $\satc(\kb,u)$:
\begin{align*}
    & v_{\varSub_c}^t \leftrightarrow v_{\varSubSub_{c,1}}^t \wedge v_{\varSubSub_{c,2}}^t \tag{SC4} \\
    & v_{\varSub_c}^f \leftrightarrow v_{\varSubSub_{c,1}}^f \vee v_{\varSubSub_{c,2}}^f \tag{SC5} \\
    & v_{\varSub_c}^b \leftrightarrow (\neg v_{\varSubSub_{c,1}}^t \vee \neg v_{\varSubSub_{c,2}}^t) \wedge \neg v_{\varSubSub_{c,1}}^f \wedge \neg v_{\varSubSub_{c,2}}^f \tag{SC6}
\end{align*}
Analogously, disjunctions $\varSub_d = \varSubSub_{d,1} \lor \varSubSub_{d,2}$ are encoded as follows:
% by adding the following formulas to $\satc(\kb,u)$:
\begin{align*}
    & v_{\varSub_d}^t \leftrightarrow v_{\varSubSub_{d,1}}^t \vee v_{\varSubSub_{d,2}}^t \tag{SC7} \\
    & v_{\varSub_d}^f \leftrightarrow v_{\varSubSub_{d,1}}^f \wedge v_{\varSubSub_{d,2}}^f \tag{SC8} \\
    & v_{\varSub_d}^b \leftrightarrow (\neg v_{\varSubSub_{d,1}}^f \vee \neg v_{\varSubSub_{d,2}}^f) \wedge \neg v_{\varSubSub_{d,1}}^t \wedge \neg v_{\varSubSub_{d,2}}^t \tag{SC9}
\end{align*}
To represent a negated formula $\varSub_n = \neg \varSubSub_n$, we create new variables representing the three evaluations of $\varSub_n$, meaning $v_{\varSub_n}^t$, $v_{\varSub_n}^f$ and $v_{\varSub_n}^b$, and encode the evaluation of those cases by adding the following formulas to $\satc(\kb,u)$:
\begin{align*}
    v_{\varSub_n}^t \leftrightarrow v_{\varSubSub_n}^f \tag{SC10} \\
    v_{\varSub_n}^f \leftrightarrow v_{\varSubSub_n}^t \tag{SC11} \\
    v_{\varSub_n}^b \leftrightarrow v_{\varSubSub_n}^b \tag{SC12}
\end{align*}
Thus, (SC10) encodes that the formula $\varSub_n = \neg \varSubSub_n$ evaluates to $t$ if $\varSubSub_n$ evaluates to $f$. 
In the same fashion, $\varSub_n$ evaluates to $f$ if $\varSubSub_n$ is $t$ (SC11), and $\varSub_n$ evaluates to $b$ if $\varSubSub_n$ is also $b$ (SC12).
Further, we add the following formulas for each subformula $\varSub_a$ which represents an individual atom $\varAt$:
\begin{align*}
    v_{\varSub_a}^t \leftrightarrow \varAt_t \tag{SC13} \\
    v_{\varSub_a}^f \leftrightarrow \varAt_f \tag{SC14} \\
    v_{\varSub_a}^b \leftrightarrow \varAt_b \tag{SC15}
\end{align*}

We also add a formula to $\satc(\kb,u)$ that represents when a formula $\varFor \in \kb$ becomes true. 
This is the case when the subformula that contains the whole formula evaluates to $t$ or $b$:
\begin{align*}
v_\varFor^t \vee v_\varFor^b \tag{SC16}
\end{align*}

Finally, we add a cardinality constraint representing that at most $u$ of the $b$-atoms are allowed to be true. 
Let $\atoms_b$ be the set of  $b$-atoms of $\kb$, i.\,e., 
$ \atoms^\kb_b = \{ \varAt_b \mid \varAt \in \atoms(\kb) \} $.
We add the following cardinality constraint % which is encoded as described in Section \ref{sec:sat-scheme} 
to $\satc(\kb,u)$:
\begin{align}
\mathrm{at\_most}(u,\atoms^\kb_b) \tag{SC17}
\end{align}

$\satc(\kb,u)$ is finally comprised of the signature and the formulas presented in (SC1)--(SC17) (see Encoding \ref{tab:sat-contension} in Appendix~\ref{app:proof-sat-contension} for an overview).
The following result establishes that this encoding faithfully implements $\icont$.
The proof of the theorem below is provided in Appendix \ref{app:proof-sat-contension}.
\begin{theorem} \label{thm:sat-c}
For a given value $u$, the encoding $\satc(\kb,u)$ is satisfiable if and only if $\icont(\kb) \leq u$.
\end{theorem}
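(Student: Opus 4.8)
The plan is to prove the biconditional by establishing that satisfying assignments of $\satc(\kb,u)$ correspond exactly to three-valued models $\omega^3 \in \mathsf{Mod}^3(\kb)$ whose conflict base has size at most $u$. The key technical ingredient is a correspondence between interpretations of the encoding and three-valued interpretations of $\kb$. First I would argue that, because of constraint (SC3), every satisfying assignment of the encoding picks out, for each original atom $\varAt \in \atoms(\kb)$, exactly one of $\varAt_t, \varAt_b, \varAt_f$ as true; this lets me \emph{define} an associated three-valued interpretation $\omega^3$ by setting $\omega^3(\varAt)$ to whichever of $t, b, f$ the true atom indicates. Conversely, any $\omega^3$ yields a unique consistent assignment to the triples. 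So (SC3) alone already gives a bijection between three-valued interpretations and assignments to the $\{\varAt_t,\varAt_b,\varAt_f\}$ variables.

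The heart of the argument is a structural induction over subformulas showing that the valuation atoms $v_\varSub^t, v_\varSub^f, v_\varSub^b$ correctly track the three-valued evaluation of $\varSub$ under $\omega^3$. The plan is to prove, by induction on the structure of $\varSub$, that in any assignment satisfying (SC4)--(SC15) the atom $v_\varSub^x$ is true if and only if $\varSub$ evaluates to $x$ under $\omega^3$, for each $x \in \{t,f,b\}$. The base case is the atomic one, handled directly by (SC13)--(SC15) together with the bijection from (SC3). For the inductive step I would check the three connectives $\lnot, \land, \lor$: for each I must verify that the defining equivalences (SC4)--(SC12) reproduce exactly the rows of the truth table in Table~\ref{tab:3VL}. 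This is the step I expect to be the most laborious, though not conceptually hard---it amounts to confirming, case by case, that (for instance) the clause (SC6) for $v_{\varSub_c}^b$ captures precisely the rows where a conjunction evaluates to $b$, namely when neither conjunct is $f$ and at least one conjunct is not $t$. Since $\rightarrow$ and $\leftrightarrow$ were eliminated in favour of $\lnot, \land, \lor$ by truth-preserving rewrites, these two connectives need no separate treatment.

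With the induction in hand, the remaining steps are short. By (SC16), a satisfying assignment forces $v_\varFor^t \vee v_\varFor^b$ for every $\varFor \in \kb$, which by the induction means $\omega^3$ evaluates each $\varFor$ to $t$ or $b$---that is, $\omega^3 \in \mathsf{Mod}^3(\kb)$. Finally, the cardinality constraint (SC17) asserts that at most $u$ of the atoms in $\atoms^\kb_b$ are true; by the bijection these are exactly the atoms assigned $b$, so (SC17) holds if and only if $\abs{\mathsf{Conflictbase}(\omega^3)} \leq u$. Chaining these equivalences gives: $\satc(\kb,u)$ is satisfiable iff there exists $\omega^3 \in \mathsf{Mod}^3(\kb)$ with $\abs{\mathsf{Conflictbase}(\omega^3)} \leq u$, which by the definition of $\icont$ as a minimum over such conflict bases is equivalent to $\icont(\kb) \leq u$. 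I would present the two directions of the final biconditional explicitly: for the forward direction, extract $\omega^3$ from a satisfying assignment; for the backward direction, take a witnessing three-valued model realizing the minimum and build the corresponding assignment, verifying it satisfies every block (SC3)--(SC17).
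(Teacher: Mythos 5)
Your proposal is correct and follows essentially the same route as the paper's proof: defining the three-valued interpretation from the $\varAt_t,\varAt_f,\varAt_b$ atoms (well-defined by (SC3)), a structural induction showing the valuation atoms $v_\varSub^\varTV$ track the three-valued evaluation of each subformula, and then invoking (SC16) for membership in $\mathsf{Mod}^3(\kb)$ and (SC17) for the bound on the conflict base, with the converse direction built from a witnessing three-valued model. No substantive differences to note.
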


% The following example illustrates the construction of $\satc(\kb,u)$.

\begin{example}\label{ex:sat-contension}
    We illustrate the construction of $\satc(\kb,u)$ for the following knowledge base:
    $$\mathcal{K}_7 = \left\{\underbrace{\overbrace{x}^{\phi_{1,1}} \land \overbrace{y}^{\phi_{1,2}}}_{\alpha_1}, \quad \underbrace{\overbrace{x}^{\phi_{2,1}} \lor \overbrace{y}^{\phi_{2,2}}}_{\alpha_2}, \quad \underbrace{\lnot \overbrace{x}^{\phi_{3}}}_{\alpha_3} \right\}$$
    In this example, we aim to construct $\satc(\kb_7,1)$, i.\,e., a SAT encoding which returns true if and only if $1$ is an upper bound of the contension inconsistency value of $\kb_7$.
    
    As $\atoms(\kb_7) = \{x,y\}$, we require $2 \cdot 3 = 6$ new atoms (SC1):
        $$x_t, x_b, x_f, \qquad y_t, y_b, y_f$$
    We also have a total of $8$ (sub)formulas, so we need $8 \cdot 3 = 24$ corresponding atoms (SC2):
        \begin{gather*}
            v^t_{\alpha_1}, v^b_{\alpha_1}, v^f_{\alpha_1}, \qquad  v^t_{\alpha_2}, v^b_{\alpha_2}, v^f_{\alpha_2}, \qquad  v^t_{\alpha_3}, v^b_{\alpha_3}, v^f_{\alpha_3}, \\
            v^t_{\phi_{1,1}}, v^b_{\phi_{1,1}}, v^f_{\phi_{1,1}}, \qquad v^t_{\phi_{1,2}}, v^b_{\phi_{1,2}}, v^f_{\phi_{1,2}},\\
            v^t_{\phi_{2,1}}, v^b_{\phi_{2,1}}, v^f_{\phi_{2,1}}, \qquad v^t_{\phi_{2,2}}, v^b_{\phi_{2,2}}, v^f_{\phi_{2,2}},\\
            v^t_{\phi_3}, v^b_{\phi_3}, v^f_{\phi_3}
        \end{gather*}
    We add the following constraints to the encoding $\satc(\kb_7,u)$ via (SC3):
    \begin{gather*}
	(x_t \vee x_f \vee x_b) \wedge (\neg x_t \vee \neg x_f) \wedge (\neg x_t \vee \neg x_b) \wedge (\neg x_b \vee \neg x_f) \\
	(y_t \vee y_f \vee y_b) \wedge (\neg y_t \vee \neg y_f) \wedge (\neg y_t \vee \neg y_b) \wedge (\neg y_b \vee \neg y_f)	    
	\end{gather*}
	We encode the first formula $\alpha_1 = \phi_{1,1} \land \phi_{1,2}$, which is a conjunction, by adding the following formulas corresponding to (SC4)--(SC6):
    \begin{align*}
        & v_{\alpha_1}^t \leftrightarrow v_{\phi_{1,1}}^t \wedge v_{\phi_{1,2}}^t \\
        & v_{\alpha_1}^f \leftrightarrow v_{\phi_{1,1}}^f \vee v_{\phi_{1,2}}^f \\
        & v_{\alpha_1}^b \leftrightarrow (\neg v_{\phi_{1,1}}^t \vee \neg v_{\phi_{1,2}}^t) \wedge \neg v_{\phi_{1,1}}^f \wedge \neg v_{\phi_{1,2}}^f
    \end{align*}
    We encode the second formula $\alpha_2 = \phi_{2,1} \lor \phi_{2,2}$, which is a disjunction, by adding the formulas corresponding to (SC7)--(SC9):
    \begin{align*}
        & v_{\alpha_2}^t \leftrightarrow v_{\phi_{2,1}}^t \vee v_{\phi_{2,2}}^t \\
        & v_{\alpha_2}^f \leftrightarrow v_{\phi_{2,1}}^f \wedge v_{\phi_{2,2}}^f  \\
        & v_{\alpha_2}^b \leftrightarrow (\neg v_{\phi_{2,1}}^f \vee \neg v_{\phi_{2,2}}^f) \wedge \neg v_{\phi_{2,1}}^t \wedge \neg v_{\phi_{2,2}}^t 
    \end{align*}
    The last formula ($\alpha_3 = \lnot \phi_3$), which is a negation, is modeled by adding the following formulas via (SC10)--(SC12):
    \begin{align*}
        v_{\alpha_3}^t \leftrightarrow v_{\phi_3}^f \\
        v_{\alpha_3}^f \leftrightarrow v_{\phi_3}^t \\
        v_{\alpha_3}^b \leftrightarrow v_{\phi_3}^b 
    \end{align*}
    We add formulas to encode that $\phi_{1,1}$, $\phi_{1,2}$, $\phi_{2,1}$, $\phi_{2,2}$, and $\phi_3$ are subformulas consisting of individual atoms, following (SC13)--(SC15):
    \begin{align*}
        &\phi^t_{1,1} \leftrightarrow x_t \qquad && \phi^t_{1,2} \leftrightarrow y_t \qquad & \phi^t_{2,1} \leftrightarrow x_t \qquad && \phi^t_{2,2} \leftrightarrow y_t \qquad & \phi^t_3 \leftrightarrow x_t \\
        &\phi^f_{1,1} \leftrightarrow x_f \qquad && \phi^f_{1,2} \leftrightarrow y_f \qquad & \phi^f_{2,1} \leftrightarrow x_f \qquad && \phi^f_{2,2} \leftrightarrow y_f \qquad & \phi^f_3 \leftrightarrow x_f \\
        &\phi^b_{1,1} \leftrightarrow x_b \qquad && \phi^b_{1,2} \leftrightarrow y_b \qquad & \phi^b_{2,1} \leftrightarrow x_b \qquad && \phi^b_{2,2} \leftrightarrow y_b \qquad & \phi^b_3 \leftrightarrow x_b 
    \end{align*}
    For each formula in $\kb$ we additionally add a constraint wrt.\ (SC16):
    \begin{gather*}
        v_{\alpha_1}^t \vee v_{\alpha_1}^b \\ 
        v_{\alpha_2}^t \vee v_{\alpha_2}^b \\
        v_{\alpha_3}^t \vee v_{\alpha_3}^b 
    \end{gather*}
    Finally, via (SC17), we add the cardinality constraint $\mathrm{at\_most}(1,\{x_b, y_b\})$.
\end{example}

\subsection{The Forgetting-Based Inconsistency Measure}\label{sec:sat-encoding-if}

Let $\kb$ again be a propositional knowledge base and $u$ an integer representing a candidate for an upper bound of $\iforget(\kb)$. 
% Encoding \ref{tab:sat-fb} summarizes the proposed SAT encoding $\satf(\kb,u)$ of $\upperiarg{\iforget}$.
Recall that $\iforget(\kb)$ is defined as the minimal number of atom occurrences which need to be forgotten in order to resolve all inconsistencies in $\kb$. 
% In other words, $\upperiarg{\iforget}$ for the instance $(\kb, u)$ decides if $\kb$ can reach a consistent state, if at most $u$ atom occurrences are allowed to be forgotten.
% We construct an encoding $\satf(\kb,u)$ which is able to simulate the effects of forgetting atom occurrences and which restricts the number of occurrences that can be forgotten to $u$.
We construct an encoding $\satf(\kb,u)$ that represents $\upperiarg{\iforget}$ for the instance $(\kb, u)$.

To begin with, we label the given knowledge base $\kb$ to retrieve $\occs(\kb)$.
% Every labeled atom occurrence $\varAt^l \in \occs(\kb)$ is then added to the signature of our encoding $\satf(\kb,u)$ (SF1).
% Moreover, f
For each labeled atom occurrence $\varAt^l \in \occs(\kb)$ we also add two atoms $t_{\varAt,l}$ and $f_{\varAt,l}$ to the signature (SF1).
Further, for each formula $\varFor \in \kb$ a corresponding constraint $\varFor'$ is created and added to $\satf(\kb,u)$.
$\varFor'$ is an extension of $\varFor$, gained by substituting every atom occurrence $\varAt^l$ in $\varFor$ with the subformula $(t_{\varAt,l} \lor \varAt) \land \neg f_{\varAt,l}$. 
Assigning truth values to the new atoms $t_{\varAt,l}$, $f_{\varAt,l}$ models the forgetting operation on $\varAt^l$ in $\varFor$. 
More precisely, setting $t_{\varAt,l}$ to true represents forgetting $\varAt^l$ and replacing it with $\top$, while setting $f_{\varAt,l}$ to true represents forgetting $\varAt^l$ and replacing it with $\bot$.
% This is made clear by taking a closer look at the behavior of the subformula $\varAt^l$ was substituted with:
We define $\varSub_{\varAt,l}$ to be the subformula $\varAt^l$ is substituted with:
    \begin{align*}
        \varSub_{\varAt,l} := (t_{\varAt,l} \lor \varAt) \land \neg f_{\varAt,l} \tag{SF2}
    \end{align*}
\begin{itemize}
    \item If $t_{\varAt,l}$ is set to true and $f_{\varAt,l}$ is set to false, then $\varSub_{\varAt,l}$ becomes true, regardless of the truth value assigned to $\varAt$.
    This represents the case in which $\varAt^l$ has been forgotten and was replaced by $\top$.
    \item If $t_{\varAt,l}$ is set to false and $f_{\varAt,l}$ is set to true, then $\varSub_{\varAt,l}$ becomes false, regardless of the truth value assigned to $\varAt$.
    This represents the case in which $\varAt^l$ has been forgotten and was replaced by $\bot$.
    \item If both $t_{\varAt,l}$ and $f_{\varAt,l}$ are set to false, then the truth value of $\varSub_{\varAt,l}$ is equal to the truth value assigned to $\varAt$.
    This represents the case in which $\varAt^l$ has not been forgotten.
\end{itemize}
The case in which both $t_{\varAt,l}$ and $f_{\varAt,l}$ are set to true holds no meaning in the chosen representation, as this would imply that $\varAt^l$ had somehow been replaced by $\top$ and $\bot$ simultaneously. 
This invalid state needs to be avoided, which motivates the addition of constraints that prevent $t_{\varAt,l}$ and $f_{\varAt,l}$ from both being set to true at the same time:
    \begin{align*}
        \neg t_{\varAt,l} \lor \neg f_{\varAt,l}  \tag{SF3}
    \end{align*}

Finally, an at-most-$u$ constraint is added to limit the number of occurrences that can be forgotten. 
Let $\atoms_{\forget}$ denote the set of all atoms introduced by substitutions of atom occurrences in formulas $\varFor \in \kb$, i.\,e., the set of all $t_{\varAt,l}, f_{\varAt,l}$ wrt.\ $\varAt^l \in \occs(\kb)$. 
The constraint 
    \begin{align*}
        \mathrm{at\_most}(u, \atoms_{\forget}) \tag{SF4}
    \end{align*}
is added to $\satf(\kb, u)$ in order to keep the number of forgotten occurrences from exceeding $u$.
In total, $\satf(\kb, u)$ is comprised of (SF1)--(SF4) (see Encoding \ref{tab:sat-fb} for a complete overview).
The proof for the following theorem, along with Encoding~\ref{tab:sat-fb}, is provided in Appendix \ref{app:proof-sat-fb}
\begin{theorem} \label{thm:sat-fb}
    For a given value $u$, the encoding $\satf(\kb,u)$ is satisfiable if and only if $\iforget(\kb) \leq u$.
\end{theorem}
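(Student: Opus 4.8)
The plan is to prove the biconditional by establishing a correspondence between satisfying assignments of $\satf(\kb,u)$ and ways of forgetting at most $u$ atom occurrences in $\kb$ that restore consistency. The key conceptual device is the behaviour of the substituted subformula $\varSub_{\varAt,l} = (t_{\varAt,l} \lor \varAt^l) \land \neg f_{\varAt,l}$, whose three relevant cases are already spelled out in the text: with the constraint (SF4) ruling out the invalid case, each atom occurrence $\varAt^l$ is in exactly one of three states (forgotten-to-$\top$, forgotten-to-$\bot$, or not forgotten), mirroring the forgetting operation $\forgetfunc_{l,\varAt}$. First I would fix notation for a satisfying assignment $\omega$ of $\satf(\kb,u)$ and read off from it both a forgetting selection and an underlying interpretation of the original atoms.

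For the direction ``$\satf(\kb,u)$ satisfiable $\implies \iforget(\kb) \leq u$'', I would take a model $\omega$ of $\satf(\kb,u)$ and define the set $F$ of forgotten occurrences to be those $\varAt^l$ for which $\omega$ assigns true to $t_{\varAt,l}$ or to $f_{\varAt,l}$. By (SF4) these two cases are disjoint, so each forgotten occurrence is unambiguously replaced by $\top$ or $\bot$, and by the cardinality constraint (SF5) we have $\abs{F} \leq u$. The central claim is that the interpretation $\omega$ restricted to the original atoms $\varAt^l$ is a classical model of the forgotten knowledge base $\forgetfunc_{F}.\land_\kb$. This should follow by structural induction on the formula $\varFor$: one shows that for every subformula, its truth value under $\omega$ in the substituted formula $\varFor'$ equals the truth value of the corresponding subformula of $\forgetfunc_F.\varFor$ under the induced interpretation. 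The base case is precisely the three-way case analysis of $\varSub_{\varAt,l}$, and the inductive step is immediate since substitution commutes with the connectives. Since each $v_\varFor$ (or rather the top-level satisfaction demanded by the encoding) forces $\varFor'$ to be true, the forgotten knowledge base is consistent, witnessing $\iforget(\kb) \leq \abs{F} \leq u$.

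For the converse ``$\iforget(\kb) \leq u \implies \satf(\kb,u)$ satisfiable'', I would start from an optimal (or any) forgetting set $F$ with $\abs{F} \leq u$ such that $\forgetfunc_F.\land_\kb$ is consistent, together with a classical model $\omega_0$ of it. I then build an assignment $\omega$ for $\satf(\kb,u)$: set $t_{\varAt,l}$ true exactly when $\varAt^l \in F$ is forgotten-to-$\top$, set $f_{\varAt,l}$ true exactly when $\varAt^l \in F$ is forgotten-to-$\bot$, set both false otherwise, and let $\omega$ agree with $\omega_0$ on the original occurrence atoms $\varAt^l$. One checks directly that (SF4) holds (the two cases are mutually exclusive by construction) and that (SF5) holds because at most $u$ of the $t_{\varAt,l}, f_{\varAt,l}$ are set true. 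The same structural-induction lemma as above then shows that each $\varFor'$ evaluates to true under $\omega$, so all encoding constraints are satisfied and $\satf(\kb,u)$ is satisfiable.

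The main obstacle I anticipate is the bookkeeping in the structural-induction lemma, specifically making precise the claim that forgetting an occurrence via $\forgetfunc_{l,\varAt}.\varSub = (\varSub \mid \varAt^l \to \top) \lor (\varSub \mid \varAt^l \to \bot)$ agrees, on consistent instances, with locally rewriting that single occurrence to $\top$ or to $\bot$. The subtlety is that the definition of forgetting produces a disjunction over both replacements rather than a single substitution, whereas the encoding commits to one replacement per forgotten occurrence. I would handle this by arguing that $\forgetfunc_F.\land_\kb$ is satisfiable if and only if \emph{some} choice of replacing each occurrence in $F$ by a fixed $\top$ or $\bot$ yields a satisfiable formula; the disjunctive definition makes the forgotten formula satisfiable precisely when at least one such committed replacement is, which is exactly the degree of freedom the variables $t_{\varAt,l}, f_{\varAt,l}$ provide. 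Care is also needed because distinct occurrences of the same atom $\varAt$ share a single truth value under any interpretation, so the non-forgotten occurrences must all be assigned consistently — which is automatic here since $\omega$ evaluates the shared original atom once.
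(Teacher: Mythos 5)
Your proposal follows essentially the same route as the paper's own proof: the three-way case analysis of the substituted term $(t_{\varAt,l} \lor \varAt^l) \land \neg f_{\varAt,l}$, the disjointness of the forgotten-to-$\top$ and forgotten-to-$\bot$ sets via (SF4), and the bound $\abs{F}\leq u$ via (SF5) are exactly the content of the paper's Lemma~\ref{lem:sat-forget} and the subsequent theorem proof. You are in fact somewhat more careful than the paper on two points it leaves implicit — the explicit construction of a satisfying assignment for the converse direction, and the reconciliation of the disjunctive definition of $\forgetfunc_{l,\varAt}$ with the encoding's committed per-occurrence replacement by $\top$ or $\bot$ — but these refinements stay within the same argument, which is correct.
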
 

\begin{example}\label{ex:sat-fb}
    Consider again the knowledge base $\kb_7$ from Example \ref{ex:sat-contension}:
    $$\mathcal{K}_7 = \left\{\underbrace{\overbrace{x}^{\phi_{1,1}} \land \overbrace{y}^{\phi_{1,2}}}_{\alpha_1}, \quad \underbrace{\overbrace{x}^{\phi_{2,1}} \lor \overbrace{y}^{\phi_{2,2}}}_{\alpha_2}, \quad \underbrace{\lnot \overbrace{x}^{\phi_{3}}}_{\alpha_3} \right\}$$
    In this example, we aim to construct $\satf(\kb_7,2)$, i.\,e., a SAT encoding which returns true if and only if $2$ is an upper bound of the forgetting-based inconsistency value of $\kb_7$.
    
    % To begin with, we require $|\occs(\kb_7)| = 5$ atoms which model the labeled atom occurrences (SF1):
    % $$ x^1, y^1, \qquad x^2, y^2, \qquad x^3 $$
    % Moreover, 
    To begin with, we need $|\occs(\kb_7)| \cdot 2 = 5 \cdot 2 = 10$ atoms that represent the forgetting operation wrt.\ each atom occurrence (SF1):
    $$ t_{x,1}, f_{x,1}, t_{y,1}, f_{y,1}, \qquad t_{x,2}, f_{x,2}, t_{y,2}, f_{y,2}, \qquad t_{x,3}, f_{x,3} $$ 
    We substitute each occurrence in every formula in $\kb_7$ as described by (SF2). 
    This produces the following $3$ formulas, which are all added to $\satf(\kb_7,2)$:
    \begin{align*}
        & ((t_{x,1} \lor x) \land \neg f_{x,1}) \land ((t_{y,1} \lor y) \land \neg f_{y,1}) \\
        & ((t_{x,2} \lor x) \land \neg f_{x,2}) \lor ((t_{y,2} \lor y) \land \neg f_{y,2}) \\
        & \lnot ((t_{x,3} \lor x) \land \neg f_{x,3})
    \end{align*}
    Next, we apply (SF3):
    {\allowdisplaybreaks
    \begin{align*}
        \lnot t_{x,1} \lor \lnot f_{x,1} & \qquad \lnot t_{y,1} \lor \lnot f_{y,1} \\
        \lnot t_{x,2} \lor \lnot f_{x,2} & \qquad \lnot t_{y,2} \lor \lnot f_{y,2} \\
        \lnot t_{x,3} \lor \lnot f_{x,3} &
    \end{align*}
    }
    Finally, we add the at-most-$u$ constraint (SF4) to $\satf(\kb_7,2)$:
    $$ \mathrm{at\_most}(2,\{ t_{x,1}, f_{x,1}, t_{y,1}, f_{y,1}, t_{x,2}, f_{x,2}, t_{y,2}, f_{y,2}, t_{x,3}, f_{x,3} \}) $$
\end{example}

\subsection{The Hitting Set Inconsistency Measure}\label{sec:sat-encoding-ih}
% Let $\kb$ be a propositional knowledge base and let $u$ be an integer representing a candidate for an upper bound for $\ihs(\kb)$. 
% Encoding \ref{tab:sat-hs} presents a short overview of the proposed SAT encoding $\sath(\kb,u)$ of $\upperiarg{\ihs}$. 
% The encoding will be described in more detail in the following.

In the following, we propose a SAT encoding, denoted $\sath(\kb,u)$, for the problem $\ihs(\kb)$, with $\kb$ being a knowledge base and $u$ an integer representing a candidate for an upper bound.
We reiterate that $\ihs(\kb)$ is the minimum cardinality of a set of interpretations such that there is at least one model for every formula in the knowledge base $\kb$, subtracted by $1$. 
This is equivalent to finding the minimal number of blocks of a partitioning of $\kb$ such that each block is satisfiable (again, subtracted by $1$). 
% We construct the SAT encoding $\sath(\kb,u)$ based on the idea of finding such a partitioning.
% We construct the SAT encoding $\sath(\kb,u)$ based on this idea.

The formulas in $\kb$ are partitioned into $u$ blocks, and every formula belongs to at least one block.
For every formula $\varFor \in \kb$, we create copies $\varFor_{i}$ with $i=1, \ldots, u$. 
The meaning of formula $\varFor_{i}$ is ``the $i$-th copy of formula $\varFor$''. 
Then, we create $\abs{\mathcal{K}} \cdot u$ new atoms $p_{\varFor,i}$ (SH2).
The meaning of the atom $p_{\varFor,i}$ is ``formula $\varFor$ is a member of block $i$''. 
For every $i$, all atoms $\varAt$ that appear in a formula $\varFor_{i}$ are replaced with new atoms $\varAt_i$ (SH1), meaning each block has its own copy of $\atoms(\kb)$, allowing us to check for satisfiability block-wise. 

For every block, the set of formulas inside it is satisfiable.
We add $\abs{\kb} \cdot u$ constraints of the form
\begin{align*}
    p_{\varFor,i} \rightarrow \varFor_{i}. \tag{SH3}
\end{align*}
% The constraints represented by (SH3)
This models that if a formula $\varFor$ belongs to block $i$, then the corresponding formula $\varFor_{i}$ must be satisfied (within this block).
Every formula must be sorted into at least one block, meaning for all $\varFor \in \kb$, at least one $p_{\varFor,i}$ is true. 
A simple encoding for the ``at-least-one'' constraint is to add the following clauses for every $\varFor \in \kb$:
\begin{align*}
    \bigvee_{1 \leq i \leq u}p_{\varFor,i} \tag{SH4}
\end{align*}

When a value for the minimal number of satisfiable partition blocks is found, we subtract $1$ to generate the value $\ihs(\kb)$. If no value is found in the range, the result is $\infty$. % $\ihs(\kb) = \infty$.

The following result establishes that the above encoding $\sath(\kb,u)$, defined by (SH1)--(SH4) (for an overview, see Encoding \ref{tab:sat-hs} in Appendix~\ref{app:proof-sat-hs}), faithfully implements $\ihs$.
The proof of the theorem below is given in Appendix \ref{app:proof-sat-hs} as well.
\begin{theorem} \label{thm:sat-hs}
For a given value $u$, the encoding $\sath(\kb,u)$ is satisfiable if and only if $\ihs(\kb) \leq u - 1$.
$\sath(\kb,u)$ is unsatisfiable for all $u=1,\ldots,|\kb|$ if and only if $\ihs(\kb) = \infty$.
\end{theorem}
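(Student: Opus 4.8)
The plan is to prove the two claims in turn, with the first (the biconditional for fixed $u$) doing the real work and the second following almost immediately from it. The starting point is to unfold the definition of $\ihs$: since $\ihs(\kb) = \min\{\abs{H} \mid H \text{ a hitting set of } \kb\} - 1$, the condition $\ihs(\kb) \leq u-1$ is equivalent to the existence of a hitting set $H$ of $\kb$ with $\abs{H} \leq u$. So I would reduce the first claim to the statement that $\sath(\kb,u)$ is satisfiable if and only if $\kb$ admits a hitting set of cardinality at most $u$. The linchpin of both directions is a renaming correspondence: for any interpretation $\sigma$ of the encoding's signature, defining $\omega_i(\varAt) := \sigma(\varAt_i)$ for each block $i$, one has $\sigma \models \varFor_i$ if and only if $\omega_i \models \varFor$. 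This holds because $\varFor_i$ is syntactically just $\varFor$ with every atom $\varAt$ replaced by its block-$i$ copy $\varAt_i$ (SH1), and is verified by a routine structural induction on $\varFor$.

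For the forward direction, I would take a satisfying assignment $\sigma$ of $\sath(\kb,u)$ and read off the $u$ interpretations $\omega_1,\ldots,\omega_u$ on $\atoms(\kb)$ via $\omega_i(\varAt) = \sigma(\varAt_i)$. Setting $H = \{\omega_1,\ldots,\omega_u\}$ gives $\abs{H} \leq u$. To see $H$ is a hitting set, fix $\varFor \in \kb$: constraint (SH4) forces $\sigma(p_{\varFor,i}) = t$ for some $i$, and then (SH3) forces $\sigma \models \varFor_i$; by the correspondence, $\omega_i \models \varFor$. Hence every formula is satisfied by some member of $H$, so $H$ is a hitting set of size at most $u$, giving $\ihs(\kb) \leq u-1$.

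For the converse, I would start from a hitting set $H = \{\omega_1,\ldots,\omega_m\}$ with $m \leq u$ and build a satisfying assignment $\sigma$. Set $\sigma(\varAt_i) = \omega_i(\varAt)$ for $i \leq m$ and fix the remaining block copies (for $m < i \leq u$) arbitrarily. For each $\varFor \in \kb$, pick one $j \leq m$ with $\omega_j \models \varFor$ (possible since $H$ is a hitting set), set $\sigma(p_{\varFor,j}) = t$, and set $\sigma(p_{\varFor,i}) = f$ for all $i \neq j$. Then (SH4) holds for every $\varFor$, while (SH3) holds at $(\varFor,i)$ vacuously whenever $p_{\varFor,i}$ is false and, at $(\varFor,j)$, because $\omega_j \models \varFor$ yields $\sigma \models \varFor_j$ via the correspondence. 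Thus $\sigma$ satisfies $\sath(\kb,u)$.

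Finally, for the second claim I would lean on the contrapositive form of the first together with the stated bound that a finite $\ihs(\kb)$ never exceeds $\abs{\kb}-1$. If $\sath(\kb,u)$ is unsatisfiable for all $u = 1,\ldots,\abs{\kb}$, then the first claim at $u = \abs{\kb}$ gives $\ihs(\kb) > \abs{\kb}-1$, which is incompatible with any finite value and hence forces $\ihs(\kb) = \infty$. Conversely, if $\ihs(\kb) = \infty$ then $\ihs(\kb) \leq u-1$ fails for every finite $u$, so by the first claim $\sath(\kb,u)$ is unsatisfiable for all $u$, in particular for $u = 1,\ldots,\abs{\kb}$. I expect the main obstacle to be stating and justifying the renaming correspondence cleanly enough that both directions of the first claim reduce to a purely combinatorial argument about which block each formula is charged to; once that lemma is in place, the cardinality bookkeeping (comparing $\min\abs{H} - 1$ with $u$), the handling of unused blocks, and the individually unsatisfiable-formula edge case are all routine.
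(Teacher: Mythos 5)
Your proof is correct and rests on the same core idea as the paper's: the block copies $\varFor_i$ encode interpretations via the renaming $\omega_i(\varAt) := \sigma(\varAt_i)$, with (SH4) forcing each formula into some block and (SH3) forcing that block's interpretation to be a model of it. The differences are organizational but worth noting. First, the paper proves the converse direction (hitting set of size $\leq u$ implies satisfiability) only by assertion inside the theorem proof, whereas you give the explicit construction of a satisfying assignment, including the handling of unused blocks and the choice of exactly one $p_{\varFor,j}$ per formula; your version is the more complete of the two here. Second, for the $\infty$ claim the paper introduces a separate lemma arguing directly on the encoding that a contradictory formula makes $\sath(\kb,u)$ unsatisfiable for every $u$, while you derive the whole second biconditional as a corollary of the first claim at $u = \abs{\kb}$ together with the bound that a finite $\ihs(\kb)$ never exceeds $\abs{\kb}-1$. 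Both are sound; the paper's route gives a slightly more direct explanation of \emph{why} a contradictory formula kills the encoding, while yours is more modular and avoids re-examining the constraints. One small caveat shared by both treatments: the bound $\ihs(\kb) \leq \abs{\kb}-1$ for the non-$\infty$ case is taken from the measure's description in Section 2 rather than re-proved, so your second claim silently inherits that fact; it would be worth one sentence acknowledging where it comes from.
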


% The following example illustrates the construction of $\sath(\kb,u)$.

\begin{example}
    Consider again the knowledge base $\kb_7$ from Example \ref{ex:sat-contension}:
    $$\mathcal{K}_7 = \left\{\underbrace{\overbrace{x}^{\phi_{1,1}} \land \overbrace{y}^{\phi_{1,2}}}_{\alpha_1}, \quad \underbrace{\overbrace{x}^{\phi_{2,1}} \lor \overbrace{y}^{\phi_{2,2}}}_{\alpha_2}, \quad \underbrace{\lnot \overbrace{x}^{\phi_{3}}}_{\alpha_3} \right\}$$
    In this example, we aim to construct $\sath(\kb_7,2)$, i.\,e., a SAT encoding which returns true if and only if $3-1 = 2$ is an upper bound of the hitting set inconsistency value of $\kb_7$.
    
    First, we create variables for each atom $x_i \in \atoms(\kb_7) = \{x,y\}$ with $i \in \{1,2\}$ (SH1):
    $$ x_1, x_2, \qquad y_1, y_2 $$
    In addition, for each formula in $\kb_7$ we create the following variables (SH2):
    $$ p_{\alpha_1, 1}, p_{\alpha_1, 2}, \qquad p_{\alpha_2, 1}, p_{\alpha_2, 2}, \qquad p_{\alpha_3, 1}, p_{\alpha_3, 2}, $$
    Now we create indexed copies of the formulas in $\kb_7$:
    \begin{align*}
        & x_1 \land y_1,  && \qquad x_1 \lor y_1, && \qquad \lnot x_1, \\
        & x_2 \land y_2,  && \qquad x_2 \lor y_2, && \qquad \lnot x_2 
    \end{align*}
    Following (SH3), we add the subsequent formulas to $\sath(\kb_7,2)$:
    \begin{align*}
        & p_{\alpha_1,1} \rightarrow (x_1 \land y_1), && \qquad p_{\alpha_2,1} \rightarrow (x_1 \lor y_1), && \qquad p_{\alpha_3,1} \rightarrow (\lnot x_1), \\
        & p_{\alpha_1,2} \rightarrow (x_2 \land y_2), && \qquad p_{\alpha_2,2} \rightarrow (x_2 \lor y_2), && \qquad p_{\alpha_3,2} \rightarrow (\lnot x_2) 
    \end{align*}
    At last, we add the following constraints via (SH4):
    \begin{align*}
        p_{\alpha_1,1} \lor p_{\alpha_1,2}, \qquad p_{\alpha_2,1} \lor p_{\alpha_2,2}, \qquad p_{\alpha_3,1} \lor p_{\alpha_3,2}
    \end{align*}
\end{example}

\subsection{The Max-Distance Inconsistency Measure}\label{sec:sat-encoding-imdalal}

Let, once again, $\kb$ be a propositional knowledge base, and let $u$ be an integer representing a potential upper bound for $\imdalal(\kb)$. 
% As in the previous sections, see Encoding \ref{tab:sat-dalal-max} for a concise overview on the proposed SAT encoding $\satmax(\kb,u)$ for the problem $\upperiarg{\imdalal}$. 
% The construction of the encoding will be elaborated in the following.
In the following, we construct a SAT encoding $\satmax(\kb,u)$ for the problem $\upperiarg{\imdalal}$.

As a reminder, the distance-based inconsistency measures look for an interpretation with an ``optimal'' distance. %; the definition of ``optimal'' depends on the measure. 
Regarding $\imdalal$, the ``optimal'' interpretation has the minimal maximum distance to the models of all formulas in $\kb$.
In case one or more formulas in $\kb$ are unsatisfiable, i.\,e., they possess no models, $\imdalal(\kb)= \infty$.

First we need a representation of the ``optimal'' interpretation that will be computed by the measure. 
For each atom $\varAt \in \atoms(\kb)$, we add a new atom $\varAt_o$ that represents this atom's value assignment in the ``optimal interpretation'' (SDM1). %in Encoding \ref{tab:sat-dalal-max}). 
Let $\atoms_O$ be the set of atoms of the optimal interpretation, i.\,e., $\atoms_O = \{X_o \mid X\in \atoms(\kb)\}$.

With the exception of the case that $\imdalal(\kb)=\infty$ (i.\,e., the case in which at least one of the formulas is contradictory), all formulas in $\kb$ must be satisfied by some interpretation. 
For every formula $\varFor \in \kb$, we create a clone $\varFor_i$ with $i\in \{1, \ldots, \abs{\mathcal{K}}\}$. 
For every $i$, all atoms $\varAt$ that appear in a formula $\varFor_{i}$ are replaced with new atoms $\varAt_i$ (SDM2). 
We add the modified clones $\varFor_i$ to the encoding (SDM4). 
This models that the formulas are all satisfiable individually and allows us to put their models and the optimal interpretation in relation in the next step.

For each formula, the distance between at least one model of the formula and the optimal interpretation consists of at most $u$ differing atom assignments. 
We now aim to model that the minimum distance between some model of each formula and the optimal world is at most $u$. 
This can also be formulated as ``for each formula, there is some model that can be converted into the optimal world by inverting at most $u$ value assignments'' (excluding the $\infty$ case). 
To represent this, we create new atoms $\mathsf{inv}_{\varAt,i}$ with $i\in \{1,\ldots,\abs{\kb}\}$ for all $\varAt_i \in \atoms(\satmax(\kb,u)) \setminus \atoms_O$ that model the inverted value assignments (SDM3). 
Then, we add the following formulas for every $\varAt_i$, using the optimal interpretation atoms $\varAt_o \in \atoms_O$:
\begin{align*}
    \varAt_i & \rightarrow \varAt_o \lor \mathsf{inv}_{\varAt,i} \tag{SDM5} \\
    \neg \varAt_i & \rightarrow \lnot \varAt_o \lor \mathsf{inv}_{\varAt,i} \tag{SDM6}
\end{align*}
The above formulas represent that for every atom $\varAt_i$ of a model of a formula, the corresponding atom in the optimal interpretation has the same value unless it is inverted by setting $\mathsf{inv}_{\varAt,i}$ to true. 
We then represent the upper bound % for distances 
by adding at-most-$u$ constraints for all $\mathsf{inv}_{\varAt,i}$. % for $i\in \{1,\ldots,\abs{\mathcal{K}}\}$. 
Let $\mathsf{INV}_i$ denote the set containing all $\mathsf{inv}_{\varAt,i}$ for a fixed $i$:
\begin{align*}
    \mathrm{at\_most}(u, \mathsf{INV}_i ) \tag{SDM7}
\end{align*}

In total, the SAT encoding $\satmax(\kb,u)$ comprises (SDM1)--(SDM7). % (see Encoding \ref{tab:sat-dalal-max} for an overview).
The following result establishes that the above encoding faithfully implements $\imdalal$.
\begin{theorem}\label{thm:sat-maxdalal}
    For a given value $u$, the encoding $\satmax(\kb,u)$ is satisfiable if and only if $\imdalal(\kb) \leq u$.
    $\satmax(\kb,u)$ is unsatisfiable for all $u \in \{ 0, \ldots, \abs{\atoms(\kb)}\}$ if and only if $\imdalal(\kb) = \infty$.
\end{theorem}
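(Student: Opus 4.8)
The plan is to prove the ``given $u$'' biconditional by exhibiting a two-way translation between satisfying assignments of $\satmax(\kb,u)$ and interpretations witnessing $\imdalal(\kb)\le u$, and then to derive the $\infty$ statement as a corollary. The conceptual heart of the argument is a single observation about the auxiliary variables: for a fixed block $i$, constraints (SDM5) and (SDM6) are jointly equivalent to the implication ``$\varAt_i\ne\varAt_o\Rightarrow\mathsf{inv}_{\varAt,i}$'', so that in any assignment the set of atoms $\varAt$ with $\mathsf{inv}_{\varAt,i}$ true must contain every position at which the block-$i$ valuation and the optimal valuation disagree. Consequently the number of true variables in $\mathsf{INV}_i$ is at least the Dalal distance between these two valuations, and it can always be made exactly equal to it by setting the non-forced $\mathsf{inv}$ variables to false. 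Since the distance depends only on atoms in $\atoms(\varFor_i)$---any model of $\varFor_i$ may be taken to agree with the optimal interpretation on the remaining atoms, contributing nothing to the distance---tracking $\mathsf{inv}$ only for the atoms occurring in $\varFor_i$ loses nothing.

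For the forward direction, suppose $\imdalal(\kb)\le u$. Since $u$ is finite, $\imdalal(\kb)$ is finite, so there is an interpretation $\omega\in\Omega(\atoms(\kb))$ with $\max_{\varFor\in\kb} d_{dalal}(\mathsf{Mod}(\varFor),\omega)\le u$; in particular each $\mathsf{Mod}(\varFor)\ne\emptyset$, so none of the distances is $\infty$. For every formula, enumerated as $\varFor_i$, I would choose a model $\omega_i'\in\mathsf{Mod}(\varFor_i)$ attaining the minimum $d_{dalal}(\omega_i',\omega)\le u$, taken to agree with $\omega$ outside $\atoms(\varFor_i)$. I then build the assignment by setting $\varAt_o$ according to $\omega$, each $\varAt_i$ according to $\omega_i'$, and $\mathsf{inv}_{\varAt,i}$ true exactly when $\omega_i'(\varAt)\ne\omega(\varAt)$. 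Verifying the constraints is routine: (SDM4) holds because $\omega_i'$ models $\varFor_i$; (SDM5)/(SDM6) hold by the choice of the $\mathsf{inv}$ variables; and (SDM7) holds because the number of true variables in $\mathsf{INV}_i$ equals $d_{dalal}(\omega_i',\omega)\le u$.

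For the converse, let $\theta$ satisfy $\satmax(\kb,u)$. I read off the optimal interpretation $\omega$ from the $\varAt_o$ and, for each $i$, the valuation $\omega_i$ from the $\varAt_i$ (extended to agree with $\omega$ off $\atoms(\varFor_i)$). By (SDM4), $\omega_i\models\varFor_i$, hence $\omega_i\in\mathsf{Mod}(\varFor_i)$. By the core observation, every position where $\omega_i$ and $\omega$ differ forces the corresponding $\mathsf{inv}_{\varAt,i}$ true, so $d_{dalal}(\omega_i,\omega)$ is at most the number of true variables in $\mathsf{INV}_i$, which (SDM7) bounds by $u$. Therefore $d_{dalal}(\mathsf{Mod}(\varFor_i),\omega)\le d_{dalal}(\omega_i,\omega)\le u$ for every $i$, whence $\max_{\varFor\in\kb} d_{dalal}(\mathsf{Mod}(\varFor),\omega)\le u$ and thus $\imdalal(\kb)\le u$.

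The $\infty$ statement is then immediate. If $\imdalal(\kb)=\infty$, then $\imdalal(\kb)\le u$ fails for every finite $u$, so by the biconditional $\satmax(\kb,u)$ is unsatisfiable for all $u\in\{0,\ldots,\abs{\atoms(\kb)}\}$. Conversely, if $\satmax(\kb,u)$ is unsatisfiable throughout this range, then $\imdalal(\kb)>u$ for every such $u$, in particular $\imdalal(\kb)>\abs{\atoms(\kb)}$; since the largest finite value of $\imdalal(\kb)$ is $\abs{\atoms(\kb)}$ (as established in Section~\ref{sec:inc-measurement}), $\imdalal(\kb)$ cannot be finite, i.e.\ $\imdalal(\kb)=\infty$. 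I expect the main obstacle to be the careful handling of the one-directional implications (SDM5)/(SDM6) together with the cardinality bound: one must argue that the forced $\mathsf{inv}$ variables are precisely the disagreement positions---so that their count coincides with the Dalal distance---while treating atoms absent from $\varFor_i$ correctly, rather than the translation between assignments and interpretations, which is mechanical.
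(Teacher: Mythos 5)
Your proof is correct, and the main biconditional is argued exactly as in the paper: the key step in both is the observation that (SDM5)/(SDM6) force $\mathsf{inv}_{\varAt,i}$ to be true at every position where the block-$i$ valuation disagrees with the $\varAt_o$-valuation (the paper isolates this as a lemma), after which the cardinality constraint (SDM7) translates directly into the bound $d_{dalal}(\mathsf{Mod}(\varFor_i),\omega)\leq u$; your write-up is in fact somewhat more explicit than the paper's about constructing the satisfying assignment in the forward direction. The one genuine divergence is the $\infty$ case: the paper proves it by a separate, direct lemma showing that a contradictory formula $\varFor^\bot$ yields a contradictory indexed copy $\varFor^\bot_i$ via (SDM4), so the encoding is unsatisfiable for every $u$ irrespective of the rest of the argument, whereas you derive both directions of the $\infty$ equivalence as a corollary of the main biconditional together with the fact that the largest finite value of $\imdalal(\kb)$ is $\abs{\atoms(\kb)}$. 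Both routes are sound; the paper's is self-contained at the level of the encoding, while yours is shorter and reuses the main result, at the cost of leaning on the externally established bound on the range of $\imdalal$.
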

The proof of the above theorem, as well as Encoding~\ref{tab:sat-dalal-max}, which contains a complete overview of $\satmax(\kb,u)$, are provided in Appendix \ref{app:proof-sat-max-dist}.

% The following example demonstrates how to construct a SAT encoding $\satmax(\kb,u)$.

\begin{example}\label{ex:sat-max-dist}
    Consider again the knowledge base $\kb_7$ from Example \ref{ex:sat-contension}:
    $$\mathcal{K}_7 = \left\{\underbrace{\overbrace{x}^{\phi_{1,1}} \land \overbrace{y}^{\phi_{1,2}}}_{\alpha_1}, \quad \underbrace{\overbrace{x}^{\phi_{2,1}} \lor \overbrace{y}^{\phi_{2,2}}}_{\alpha_2}, \quad \underbrace{\lnot \overbrace{x}^{\phi_{3}}}_{\alpha_3} \right\}$$
    In this example, we aim to construct $\satmax(\kb_7,1)$, i.\,e., a SAT encoding which returns true if and only if $1$ is an upper bound of the max-distance inconsistency value of $\kb_7$.
    
    % At first, we create the following variables for each atom in $\atoms(\kb_7) = \{x,y\}$ (SDM1):
    % $$ x_o, y_o $$
    At first, we create the variables $ x_o, y_o $ for the atoms in $\atoms(\kb_7) = \{x,y\}$ (SDM1).
    We create additional variables for each atom in $\atoms(\kb_7)$ wrt.\ indices $i \in \{1, \ldots, |\kb_7|\}$ with $|\kb_7| = 3$ via (SDM2) and (SDM3):
    \begin{align*}
        & x_1, x_2, x_3, && \qquad y_1, y_2, y_3 \\
        & \mathsf{inv}_{x,1}, \mathsf{inv}_{x,2}, \mathsf{inv}_{x,3}, && \qquad \mathsf{inv}_{y,1}, \mathsf{inv}_{y,2}, \mathsf{inv}_{y,3}
    \end{align*}
    We add indexed copies of the formulas in $\kb_7$ to $\satmax(\kb_7,1)$ (SDM4):
    % \begin{align*}
    %     & x_1 \land y_1,  && \qquad x_1 \lor y_1, && \qquad \lnot x_1, \\
    %     & x_2 \land y_2,  && \qquad x_2 \lor y_2, && \qquad \lnot x_2, \\
    %     & x_3 \land y_3,  && \qquad x_3 \lor y_3, && \qquad \lnot x_3 
    % \end{align*}
    \begin{align*}
        & x_1 \land y_1,  && \qquad x_2 \lor y_2, && \qquad \lnot x_3
    \end{align*}
    % The next step consists of adding the following constraints conforming to (SDM5) and (SDM6) to $\satmax(\kb_7,1)$:
    We now add the following constraints conforming to (SDM5) and (SDM6) to $\satmax(\kb_7,1)$:
    {\allowdisplaybreaks
    \begin{align*}
        & x_1 \rightarrow x_o \lor \mathsf{inv}_{x,1}, && \qquad \lnot x_1 \rightarrow \lnot x_o \lor \mathsf{inv}_{x,1}, \\
        & y_1 \rightarrow y_o \lor \mathsf{inv}_{y,1}, && \qquad \lnot y_1 \rightarrow \lnot y_o \lor \mathsf{inv}_{y,1}, \\
        & x_2 \rightarrow x_o \lor \mathsf{inv}_{x,2}, && \qquad \lnot x_2 \rightarrow \lnot x_o \lor \mathsf{inv}_{x,2}, \\
        & y_2 \rightarrow y_o \lor \mathsf{inv}_{y,2}, && \qquad \lnot y_2 \rightarrow \lnot y_o \lor \mathsf{inv}_{y,2}, \\
        & x_3 \rightarrow x_o \lor \mathsf{inv}_{x,3}, && \qquad \lnot x_3 \rightarrow \lnot x_o \lor \mathsf{inv}_{x,3}, \\
        & y_3 \rightarrow y_o \lor \mathsf{inv}_{y,3}, && \qquad \lnot y_3 \rightarrow \lnot y_o \lor \mathsf{inv}_{y,3} 
    \end{align*}
    }
    Lastly, the following at-most-$u$ constraints are added:
    % \begin{align*}
    %     & \mathrm{at\_most}(1,\{\mathsf{inv}_{x,1},\mathsf{inv}_{y,1}\}) \\
	   %  & \mathrm{at\_most}(1,\{\mathsf{inv}_{x,2},\mathsf{inv}_{y,2}\}) \\
	   %  & \mathrm{at\_most}(1,\{\mathsf{inv}_{x,3},\mathsf{inv}_{y,3}\}) 
    % \end{align*}
    \begin{align*}
        \mathrm{at\_most}(1,\{\mathsf{inv}_{x,1},\mathsf{inv}_{y,1}\}), \quad
	    \mathrm{at\_most}(1,\{\mathsf{inv}_{x,2},\mathsf{inv}_{y,2}\}), \quad
	    \mathrm{at\_most}(1,\{\mathsf{inv}_{x,3},\mathsf{inv}_{y,3}\}) 
    \end{align*}
\end{example}

\subsection{The Sum-Distance Inconsistency Measure}\label{sec:sat-encoding-isdalal}

The definition of the sum-distance measure $\isdalal$ is quite similar to the definition of the previously addressed max-distance measure $\imdalal$.
The only difference is the notion of optimality---while $\imdalal$ seeks the interpretation with the smallest maximum distance to the models of the formulas in a given knowledge base $\kb$, $\isdalal$ seeks an interpretation such that the sum of its distances to the models of the formulas in $\kb$ is minimal. 

Let once again $u$ be an integer representing a candidate for an upper bound of $\isdalal(\kb)$. 
% Encoding \ref{tab:sat-dalal-sum} presents an overview of the proposed SAT encoding $\satsum(\kb,u)$ for the problem $\upperiarg{\isdalal}$. 
% The construction of the encoding will be described in the following.
In the following, we construct a SAT encoding $\satsum(\kb,u)$ for the problem $\upperiarg{\isdalal}$. 

The constraints (SDS1)--(SDS6) of the encoding $\satsum(\kb, u)$ are identical to constraints (SDM1)--(SDM6) of $\satmax(\kb, u)$. 
Only the use of the at-most-$u$ constraints is changed---we represent the upper bound for the sum of the distances by adding a single at-most-$u$ constraint for all $\mathsf{inv}_{\varAt,i}$ with $i\in \{1,\ldots,\abs{\mathcal{K}}\}$.
Let $\mathsf{INV}$ be the set containing all $\mathsf{inv}_{\varAt,i}$: % wrt.\ all $i$:
    \begin{align*}
        \mathrm{at\_most}(u, \textsf{INV}) \tag{SDS7}
    \end{align*}

Altogether, $\satmax(\kb, u)$ is comprised of (SDS1)--(SDS7) (for an overview, see Encoding~\ref{tab:sat-dalal-sum} in Appendix~\ref{app:proof-sat-sum-dist}).
The following result establishes that the above encoding faithfully implements $\isdalal$.
The corresponding proof can likewise be found in Appendix \ref{app:proof-sat-sum-dist}.
\begin{theorem}\label{thm:sat-sumdalal}
    For a given value $u$, the encoding $\satsum(\kb,u)$ is satisfiable if and only if $\isdalal(\kb) \leq u$.
    $\satsum(\kb,u)$ is unsatisfiable for all $u \in \{0, \ldots, \abs{\atoms(\kb)} \cdot \abs{\kb} \}$ if and only if $\isdalal(\kb) = \infty$.
\end{theorem}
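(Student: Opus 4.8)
The plan is to exploit the fact that $\satsum(\kb,u)$ and $\satmax(\kb,u)$ agree on the constraints (SDS1)--(SDS6) = (SDM1)--(SDM6) and differ only in the cardinality constraint, so the argument mirrors the proof of Theorem~\ref{thm:sat-maxdalal}, with the single global constraint (SDS7) replacing the per-block constraints (SDM7). First I would fix an enumeration $\varFor_1,\dots,\varFor_m$ of $\kb$ (with $m=\abs{\kb}$) and record the intended reading of a satisfying assignment $\theta$: the variables $\{\varAt_o\}$ encode an interpretation $\omega_o\in\Omega(\atoms(\kb))$, and for each $i$ the variables $\{\varAt_i : \varAt\in\atoms(\kb)\}$ encode an interpretation $\omega_i$, where by (SDS4) the renamed copy is satisfied by $\theta$ iff $\omega_i\models\varFor_i$. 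The key observation, which I would isolate as a lemma, is the correspondence between active inversion variables and Dalal distances: by (SDS5)--(SDS6) the variable $\mathsf{inv}_{\varAt,i}$ is forced true whenever $\omega_i$ and $\omega_o$ disagree on $\varAt$ (and is otherwise unconstrained), so the number of true variables in $\mathsf{INV}$ is always \emph{at least} $\sum_{i=1}^{m} d_{dalal}(\omega_i,\omega_o)$, with equality attainable by switching $\mathsf{inv}_{\varAt,i}$ on only at disagreements.

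For the forward direction, suppose $\satsum(\kb,u)$ is satisfiable via $\theta$. Then each $\omega_i$ models $\varFor_i$, so every formula of $\kb$ is individually satisfiable and $\isdalal(\kb)$ is finite. Using $\omega_o$ as the witness interpretation, the lemma gives that $\sum_{i=1}^m d_{dalal}(\omega_i,\omega_o)$ is at most the number of true $\mathsf{INV}$-variables, which by (SDS7) is at most $u$; hence
$$\isdalal(\kb) \leq \sum_{i=1}^{m} d_{dalal}(\mathsf{Mod}(\varFor_i),\omega_o) \leq \sum_{i=1}^{m} d_{dalal}(\omega_i,\omega_o) \leq u,$$
where the second inequality uses $\omega_i\in\mathsf{Mod}(\varFor_i)$. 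Conversely, suppose $\isdalal(\kb)\leq u$. I would pick an $\omega_o$ realising the minimum in the definition of $\isdalal$, and for each $i$ a closest model $\omega_i\in\mathsf{Mod}(\varFor_i)$ with $d_{dalal}(\omega_i,\omega_o)=d_{dalal}(\mathsf{Mod}(\varFor_i),\omega_o)$ (which is finite, since the sum is). Setting $\varAt_o:=\omega_o(\varAt)$, $\varAt_i:=\omega_i(\varAt)$, and $\mathsf{inv}_{\varAt,i}$ true iff $\omega_i(\varAt)\neq\omega_o(\varAt)$ satisfies (SDS4)--(SDS6) and makes the number of true $\mathsf{INV}$-variables exactly $\sum_i d_{dalal}(\omega_i,\omega_o)=\isdalal(\kb)\leq u$, so (SDS7) holds and the encoding is satisfiable.

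For the second claim I would argue directly. If $\isdalal(\kb)=\infty$, then some formula $\varFor_j$ has $\mathsf{Mod}(\varFor_j)=\emptyset$; since its copy in (SDS4) is merely a renaming of an unsatisfiable formula, it is itself unsatisfiable, so $\satsum(\kb,u)$ is unsatisfiable for every $u$, in particular throughout $\{0,\dots,\abs{\atoms(\kb)}\cdot\abs{\kb}\}$. For the converse, if $\isdalal(\kb)\neq\infty$, then $\isdalal(\kb)$ is finite and, by the bound established in Section~\ref{sec:inc-measurement}, $\isdalal(\kb)\leq\abs{\atoms(\kb)}\cdot\abs{\kb}$; taking $u=\isdalal(\kb)$, which lies in the stated range, the first part of the theorem yields a satisfiable instance, so $\satsum(\kb,u)$ is not unsatisfiable throughout the range.

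The main obstacle is the bookkeeping in the lemma linking the count of active inversion variables to the sum of Dalal distances, and in particular keeping the inequality directions straight: in the forward direction a satisfying $\theta$ may switch on redundant $\mathsf{inv}$-variables at atoms where $\omega_i$ and $\omega_o$ agree, so one only gets $\geq$ and must note that this only strengthens the bound, whereas in the backward direction one must choose the \emph{minimal} activation to hit the count exactly. Everything else is routine, and the only substantive departure from the max-distance proof is replacing ``maximum over blocks $\leq u$'' by ``sum over all blocks $\leq u$'', which is precisely the effect of globalising the cardinality constraint in (SDS7).
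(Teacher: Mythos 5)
Your proposal is correct and follows essentially the same route as the paper: the $\infty$ case is handled by observing that a contradictory formula's renamed copy (SDS4) remains unsatisfiable, and the finite case rests on the same key lemma that (SDS5)--(SDS6) force $\mathsf{inv}_{\varAt,i}$ to be true exactly where $\omega_i$ and $\omega_o$ disagree, combined with the single global cardinality constraint (SDS7) bounding the sum of Dalal distances. If anything, your write-up is more explicit than the paper's about the direction of the inequalities (redundant $\mathsf{inv}$-variables giving only $\geq$ in the forward direction, minimal activation in the converse), but this is a refinement of the same argument rather than a different one.
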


% The construction of $\satsum(\kb,u)$ is briefly described in the following example.

\begin{example}
    Consider again the knowledge base $\kb_7$ from Example \ref{ex:sat-contension}:
    $$\mathcal{K}_7 = \left\{\underbrace{\overbrace{x}^{\phi_{1,1}} \land \overbrace{y}^{\phi_{1,2}}}_{\alpha_1}, \quad \underbrace{\overbrace{x}^{\phi_{2,1}} \lor \overbrace{y}^{\phi_{2,2}}}_{\alpha_2}, \quad \underbrace{\lnot \overbrace{x}^{\phi_{3}}}_{\alpha_3} \right\}$$
    In this example, we aim to construct $\satsum(\kb_7,2)$, i.\,e., a SAT encoding which returns true if and only if $2$ is an upper bound of the sum-distance inconsistency value of $\kb_7$.
    
    The variables we require (defined by (SDS1)--(SDS3)) are the same as those in Example \ref{ex:sat-max-dist}. % which are defined by (SDM1)--(SDM3).
    Furthermore, the constraints we need to add to $\satsum(\kb_7,2)$ via (SDS4)--(SDS6) correspond exactly to those defined by (SDM4)--(SDM6) in Example \ref{ex:sat-max-dist}.
    % The only difference between the encoding in the latter example and the one at hand is the at-most-$u$ constraint (SDS7).
    However, we use a different at-most-$u$ constraint following (SDS7):
    $$ \mathrm{at\_most}(2, \{ \mathsf{inv}_{x,1},\mathsf{inv}_{y,1},\mathsf{inv}_{x,2},\mathsf{inv}_{y,2},\mathsf{inv}_{x,3},\mathsf{inv}_{y,3} \}) $$
\end{example}

\subsection{The Hit-Distance Inconsistency Measure}\label{sec:sat-encoding-ihdalal}
Let $\kb$ once again be a knowledge base and $u$ an integer representing a possible upper bound of $\ihdalal(\kb)$. 
% Encoding \ref{tab:sat-dalal-hit} presents an overview of the proposed SAT encoding $\sathit(\kb,u)$ for $\upperiarg{\ihdalal}$.
In the following, we construct a SAT encoding $\sathit(\kb,u)$ for $\upperiarg{\ihdalal}$.

In contrast to the other distance-based measures, the value of $\ihdalal(\kb)$ is not a distance value, but a count of distances greater than $0$. 
In other words, $\ihdalal(\kb)$ is the maximum number of formulas in $\kb$ that can be satisfied by one interpretation, or the minimum number of formulas that need to be removed in order to make $\kb$ consistent.
To represent this, we add one new variable $\mathsf{hit}_\varFor$ for every formula $\varFor \in \kb$ (SDH1). 
% Let $\mathsf{HIT_{\kb}}$ be the set of all new variables $\mathsf{hit}_\varFor$. 
Let $\mathsf{HIT_{\kb}} = \{\mathsf{hit}_\varFor \mid \varFor \in \kb\}$. 
For each $\varFor \in \kb$, we also add a disjunction that uses one of the new variables:
\begin{align*}
    \varFor \vee \mathsf{hit}_\varFor \tag{SDH3}
\end{align*}
This formula represents that if $\varFor$ is not true, then the new variable $\mathsf{hit}_\varFor$ is set to true. 
Note that all atoms of the input knowledge base $\kb$ still appear in the encoding as part of $\varFor$ (SDH2).
We then represent the upper bound for the number of models with distances $>0$ by adding one at-most-$u$ constraint that restricts the number of $\mathsf{hit}_\varFor \in \mathsf{HIT_{\kb}}$ that are allowed to be set to true:
\begin{align*}
    \mathrm{at\_most}(u, \mathsf{HIT_{\kb}}) \tag{SDH4}
\end{align*}

In total, $\sathit(\kb,u)$ is defined by (SDH1)--(SDH4). % (see Encoding \ref{tab:sat-dalal-hit} for an overview).
% The following result establishes that the above encoding faithfully implements the hit-distance measure.
% The proof of the theorem below is given in Appendix \ref{app:proof-sat-hit-dist}.
Appendix \ref{app:proof-sat-hit-dist} provides an overview of $\sathit(\kb,u)$ (i.\,e., Encoding~\ref{tab:sat-dalal-hit}), as well as the correctness proof of the theorem below.
\begin{theorem}\label{thm:sat-hitdalal}
For a given value $u$, the encoding $\sathit(\kb,u)$ is satisfiable if and only if $\ihdalal(\kb) \leq u$.
\end{theorem}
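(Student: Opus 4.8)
The plan is to prove both directions by setting up a tight correspondence between interpretations $\omega \in \Omega(\atoms(\kb))$ (each paired with a choice of ``hit'' formulas) and satisfying assignments of $\sathit(\kb,u)$. The first step I would carry out is to fix the working characterization of the measure, namely $\ihdalal(\kb) = \min_{\omega \in \Omega(\atoms(\kb))} \abs{\{\varFor \in \kb \mid \omega \not\models \varFor\}}$. This follows from the definition once we observe that $d_{dalal}(\mathsf{Mod}(\varFor),\omega) > 0$ holds exactly when $\omega \not\models \varFor$: we have $d_{dalal}(\mathsf{Mod}(\varFor),\omega) = 0$ iff $\omega \in \mathsf{Mod}(\varFor)$, and if $\mathsf{Mod}(\varFor) = \emptyset$ the distance is $\infty > 0$ while $\omega \not\models \varFor$ holds trivially. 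Thus counting formulas at positive distance is the same as counting the formulas violated by $\omega$, which is precisely the minimal number of formulas whose removal restores consistency, as noted for this measure in Section~\ref{sec:inc-measurement}.

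For the direction ($\Leftarrow$), I would assume $\ihdalal(\kb) \le u$ and pick a witnessing interpretation $\omega$ violating at most $u$ formulas. I build a truth assignment for $\sathit(\kb,u)$ by interpreting each atom $\varAt \in \atoms(\kb)$ exactly as $\omega$ does, and setting $\mathsf{hit}_\varFor$ to true precisely for the violated formulas. Each clause $\varFor \vee \mathsf{hit}_\varFor$ from (SDH3) is then satisfied — by $\varFor$ itself when $\omega \models \varFor$, and by $\mathsf{hit}_\varFor$ otherwise — while the number of true $\mathsf{hit}$-atoms equals the number of violated formulas, which is at most $u$, so the cardinality constraint (SDH4) is met. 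Hence the assignment satisfies $\sathit(\kb,u)$.

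For the direction ($\Rightarrow$), given a satisfying assignment of $\sathit(\kb,u)$, I let $\omega$ be its restriction to $\atoms(\kb)$; these atoms still occur in the encoding as part of each $\varFor$ via (SDH2). Since every $\varFor$ mentions only atoms of $\atoms(\kb)$, its truth value under the assignment agrees with whether $\omega \models \varFor$. For each violated formula, clause (SDH3) forces $\mathsf{hit}_\varFor$ to be true, so $\{\varFor \mid \omega \not\models \varFor\}$ is contained in the set of true $\mathsf{hit}$-atoms, whose cardinality is bounded by $u$ by (SDH4). Therefore $\omega$ violates at most $u$ formulas, giving $\ihdalal(\kb) \le u$.

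This argument is mostly bookkeeping, and I do not expect a genuine obstacle; the one point requiring care is that a satisfying assignment may set $\mathsf{hit}_\varFor$ to true even when $\varFor$ is satisfied, so the true $\mathsf{hit}$-atoms can form a strict superset of the violated formulas. I would flag this explicitly rather than claim a bijection: it is harmless for ($\Rightarrow$), since a superset of size $\le u$ still bounds the violated set. The contradictory-formula edge case $\mathsf{Mod}(\varFor) = \emptyset$ needs no separate handling, because such a formula is simply always violated and always forces its $\mathsf{hit}$-atom, consistent with the fact that $\ihdalal$ never attains $\infty$.
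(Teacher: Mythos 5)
Your proof is correct and follows essentially the same route as the paper's: both establish a direct correspondence between models of $\sathit(\kb,u)$ and interpretations of $\kb$ together with the set of ``hit'' (i.e., removed/violated) formulas, with (SDH3) forcing $\mathsf{hit}_\varFor$ for each violated $\varFor$ and (SDH4) bounding their number by $u$. If anything, your write-up is slightly more careful than the paper's, since you make explicit the reduction of the distance-based definition to counting violated formulas and flag that the true $\mathsf{hit}$-atoms may strictly contain the violated set.
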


% The following example shows the construction of $\sathit(\kb,u)$.
\begin{example}
    Consider again the knowledge base $\kb_7$ from Example \ref{ex:sat-contension}:
    $$\mathcal{K}_7 = \left\{\underbrace{\overbrace{x}^{\phi_{1,1}} \land \overbrace{y}^{\phi_{1,2}}}_{\alpha_1}, \quad \underbrace{\overbrace{x}^{\phi_{2,1}} \lor \overbrace{y}^{\phi_{2,2}}}_{\alpha_2}, \quad \underbrace{\lnot \overbrace{x}^{\phi_{3}}}_{\alpha_3} \right\}$$
    In this example, we aim to construct $\sathit(\kb_7,1)$, i.\,e., a SAT encoding which returns true if and only if $1$ is an upper bound of the hit-distance inconsistency value of $\kb_7$.
    
    % First, we define the following variables wrt.\ the formulas in $\kb_7$ (SDH1):
    % $$ \mathsf{hit}_{\alpha_1}, \mathsf{hit}_{\alpha_2}, \mathsf{hit}_{\alpha_3} $$
    First, following (SDH1), we define the variables $ \mathsf{hit}_{\alpha_1}, \mathsf{hit}_{\alpha_2}, \mathsf{hit}_{\alpha_3} $ wrt.\ the formulas in $\kb_7$.
    % Moreover, we require the variables corresponding to $\atoms(\kb_7)$ (SDH2):
    % $$ x,y $$
    Moreover, we require the variables $\atoms(\kb_7) = \{x,y\}$ (SDH2).
    Following (SDH3), we add the subsequent constraints to $\sathit(\kb_7,1)$:
    % \begin{align*}
    %     & (x \land y) \lor \mathsf{hit}_{\alpha_1} \\
    %     & (x \lor y) \lor \mathsf{hit}_{\alpha_2} \\
    %     & (\lnot x) \lor \mathsf{hit}_{\alpha_3} \\
    % \end{align*}
    \begin{align*}
        (x \land y) \lor \mathsf{hit}_{\alpha_1}, \qquad
        (x \lor y) \lor \mathsf{hit}_{\alpha_2}, \qquad
        (\lnot x) \lor \mathsf{hit}_{\alpha_3} 
    \end{align*}
    % At last, we add the at-most-$u$ constraint (SDH4):
    % $$ \mathrm{at\_most}(1,\{\mathsf{hit}_{\alpha_1}, \mathsf{hit}_{\alpha_2}, \mathsf{hit}_{\alpha_3} \}) $$
    At last, we add $ \mathrm{at\_most}(1,\{\mathsf{hit}_{\alpha_1}, \mathsf{hit}_{\alpha_2}, \mathsf{hit}_{\alpha_3} \}) $ via (SDH4).
\end{example}

\section{ASP-Based Algorithms for Selected Inconsistency Measures}\label{sec:asp}

After the introduction of a general SAT-based approach for computing inconsistency degrees, as well as specific encodings for the problem $\upperi$ wrt.\ all six inconsistency measures addressed in this work, we now introduce answer set programming as an alternative method to compute inconsistency values.
% The subsequent sections provide an overview of answer set programming in general (Section \ref{sec:asp-preliminaries}), followed by detailed descriptions of encodings of all six inconsistency measures considered in this work (Section \ref{sec:asp-algo-c} through Section \ref{sec:asp-algo-dhit}). 

\subsection{Answer Set Programming}\label{sec:asp-preliminaries}

%%%% Hier ggf. die ASP-Preliminaries aus der langen Version des JELIA-Papers einfügen => ist für die Kurzversion sowieso rausgeflogen
% -> Müsste vermutlich ein wenig ergänzt werden (um Aggregates und Optimize-Statements), aber wäre schon mal eine gute Grundlage?
% -> Auch Grounding besser erklären 
% -> Beispiele erstmal weglassen 
% => sollten dann am Ende max. 1.5 Seiten sein (eher nur eine)

\textit{Answer Set Programming} (ASP) \cite{lifschitz2019answer,gebser2012answer,lifschitz2008answer,BrewkaET11} is a declarative problem solving approach targeted at difficult search problems.
Thus, rather than modeling instructions on how to solve a problem, a representation of the problem itself is modeled.
The goal is to represent a problem in a logical format (a \textit{logic program}) such that the models of this representation describe solutions of the original problem. 
These models are called \textit{answer sets}.

In short, a logic program is a finite set of \textit{rules} of the form
    % $$ h_1 \mid \ldots \mid h_m \texttt{ :- } b_{m+1}, \ldots, b_n, \texttt{not}\, b_{n+1}, \ldots, \texttt{not}\, b_k. $$ 
    \begin{align*}\label{eq:asp-rule}
    r = a_0 \texttt{ :- } a_1, \ldots, a_n, \texttt{ not } a_{n+1}, \ldots, \texttt{ not } a_m.
    \end{align*} 
with each $a_i$ ($0\leq i \leq n \leq m$) being atoms, and ``\texttt{not}'' denoting the default negation in the sense of Reiter \citeyear{reiter1980logic}.
% An atom is an expression p(t1, ..., tk), where p is a predicate symbol of arity k ≥ 0
% and t1, ..., tk are terms – either object constants or variables.
In ASP, an atom has the form $p(\termASP_1, \ldots, \termASP_n)$, with $p$ being a predicate symbol, and $\termASP_1, \ldots, \termASP_n$ being terms.
Terms are either constants, variables, arithmetic terms (i.\,e., $-\termASP_1$ or $\termASP_1 \circ \termASP_2$ with $\circ \in \{+,-,*,/\}$ wrt.\ some terms $\termASP_1,\termASP_2$), or functional terms (i.\,e., $\varphi(\termASP_1, \ldots, \termASP_m)$ with $\varphi$ being a functor, $\termASP_1, \ldots, \termASP_m$ being terms, and $m > 0$)~\cite{calimeri2020asp}.
Moreover, we express the arity $n$ of a predicate or function $\varphi$ as $\varphi/n$, and an ASP literal $\ell$ is either an atom or a default-negated atom.

The \textit{head} of a rule $r$ (as shown above) is $\head(r) = \{a_0\}$
and the \textit{body} is $\body(r) = \{a_{1}, \ldots, a_n, \texttt{not} \, a_{n+1}, \ldots, \texttt{not} \, a_m\}$.
% We also refer to the elements in $\body(r)$ as (body) literals.
If $\body(r) = \emptyset$, then $r$ is a \textit{fact}, and
if $\head(r) = \emptyset$, then $r$ is a \textit{constraint}.
We further divide the body elements of $r$ into $\body^+(r) = \{a_{1}, \ldots, a_n\}$ and $\body^-(r) = \{a_{n+1}, \ldots, a_m\}$.

An atom/rule/program is \textit{ground} if it does not contain any variables.\footnote{Note that, following the Clingo syntax, all variable names we use start with an uppercase letter, and all constant names start with a lowercase letter. 
We also make use of \textit{anonymous variables}. Those are variables that do not recur within a rule, and that are denoted by ``\texttt{\_}''.}
Let $X$ be a set of ground atoms.
We define $X$ to be a \textit{model} of a ground logic program $P$ if for all $r \in P$, $\head(r) \in X$ whenever $\body^+(r) \subseteq X$ and $\body^-(r) \cap X = \emptyset$.
The \textit{reduct}~\cite{GelfondL88} of $P$ wrt.~$X$ is defined as 
$$P^X = \{\head(r) \texttt{ :- } \body^+(r) \mid \body^-(r) \cap X = \emptyset \text{ with } r \in P \}$$
%%% EDIT 09.12.2024: diese Formulierung wurde bereits korrigiert
Moreover, $X$ is an \textit{answer set} of $P$ if it is a subset-minimal model of $P^X$.
%%%%%%%%%% END JELIA PART

% hier was zu choice rules? 
% -> vielleicht auch erst nach der Einführung von Cardinality Constraints
%   -> genau genommen verwenden wir ja Cardinality Constraints im Kopf mancher Regeln (keine Choice Rules?) und Cardinality Constraints sind ja sowieso schon (mehr oder weniger^^) formell definiert 
% -> genauer gesagt: 
%   -> cardinality constraints können auch in heads vorkommen
%   -> dadurch wird die Regel zur (extended) choice rule
%       -> erklären, was choice rule ist; formelle Def.
%   -> dann aber anmerken, dass choice rules zu einer Menge normaler Regeln "reduziert" werden können 
%       -> "can be compiled into normal rules"
%       -> Clingo kann aber auch intern damit umgehen

Another language concept we make use of is the \textit{conditional literal}, which is of the form $\ell_0 : \ell_1, \ldots, \ell_n$, with $\ell_0, \ldots, \ell_n$ being literals.
The idea behind a conditional literal is to regulate the instantiation of $\ell_0$ by means of $\ell_1, \ldots, \ell_n$.
In other words, we can view a conditional literal as the set $\{\ell_0 \mid \ell_1, \ldots, \ell_n\}$.
% We further use \textit{cardinality constraints} and \textit{optimization statements}.
Moreover, we use \textit{cardinality constraints}.
A cardinality constraint is of the form $l\{c_1; \ldots; c_m\}u$, with $c_1, \ldots, c_m$ being conditional literals, $l$ constituting an optional lower bound, and $u$ an optional upper bound.
% Intuitively, this can be read as ``at least $l$, and at most $u$ of the atoms in $\{a_1, \ldots, a_n\}$ must be included in the answer set''.
Intuitively, this can be read as ``at least $l$, and at most $u$ of the literals specified by $c_1, \ldots, c_m$ must be satisfied''.
%
% Further language concepts we make use of are \textit{cardinality constraints} and \textit{optimization statements}.
% A cardinality constraint is of the form $l\{a_1; \ldots; a_n\}u$, with $l$ constituting a lower bound, and $u$ an upper bound.
% Intuitively, this can be read as ``at least $l$, and at most $u$ of the atoms in $\{a_1, \ldots, a_n\}$ must be included in the answer set''.
% Aggregates are built-in functions.
% We utilize the \texttt{\#count} aggregate, which allows for counting the number of ground instances.
% -> aggregates gar nicht direkt definieren, sondern nur bei der ersten Verwendung im Text kurz intuitiv erklären?
%   -> wie auch für Pooling und Intervals von R3 vorgeschlagen 
%As for optimization statements, we only need a specific form thereof in this work: 
%    $$\texttt{\#minimize\{}a_1, \ldots, a_n \texttt{\}}. $$

It is possible to use cardinality constraints not only in rule bodies, but also in heads.
A rule with a cardinality constraint as the head is referred to as an \textit{(extended) choice rule}. 
Formally, a choice rule has the form 
    $$r_c = l\{ a_1, \ldots, a_m \} u \texttt{ :- } a_{m+1}, \ldots, a_n, \texttt{ not } a_{n+1}, \ldots, \texttt{ not } a_o $$
with $0 \leq m \leq n \leq o$, $a_1, \ldots, a_o$ being atoms, $0 \leq l \leq u$, and $l,u$ being (optional) lower and upper bounds, respectively.
The intuition behind such a rule is that any subset of the head atoms (which complies with the upper and lower bound, if specified) can be included in the answer set.
Note that choice rules can be transformed into sets of normal rules~\cite{gebser2012answer}.

Furthermore, we use \textit{aggregates} and \textit{optimization statements}.
The former are used to reason about minima, maxima, sums, and counts over sets of literals.
Let an \textit{aggregate element} $g$ be defined as $g = \termASP_1, \ldots, \termASP_m : \ell_1, \ldots, \ell_n$
% \begin{align*}
%     g = t_1, \ldots, t_m : \ell_1, \ldots, \ell_n
% \end{align*}
% - ggf. s und t statt n und m 
with $\termASP_1, \ldots, \termASP_m$ being terms, and $\ell_1, \ldots, \ell_n$ being literals. 
An \textit{aggregate} is then defined as $\mathtt{\#agg} \{ g_1, \ldots, g_n \} \prec \termASP$, with $\mathtt{\#agg} \in \{ \mathtt{\#count}, \mathtt{\#min}, \mathtt{\#max}, \mathtt{\#sum} \}$ being an aggregate function name, $g_1, \ldots, g_n$ being aggregate elements, $\prec\ \in \{<, >, \leq, \geq, =, \neq\}$ being an aggregate relation, and $\termASP$ being a term~\cite{calimeri2020asp}.
Optimization statements serve the purpose of expressing cost functions that are subject to minimization or maximization.
In this work, we only use a specific form of \textit{minimize statements}, which are of the form
$ \texttt{\#minimize\{}\termASP_1, \ldots, \termASP_m : \ell_1, \ldots, \ell_n \texttt{\}}.$,
% Such a \textit{minimize statement} instruct the ASP solver to include only a minimal number of the atoms $a_1, \ldots, a_n$ in any answer set.
with $\termASP_1, \ldots, \termASP_m$ being terms and $\ell_1, \ldots, \ell_n$ being literals.
We refer to a set that complies with the minimization 
% (i.\,e., an answer set that contains a minimal number of $a_1, \ldots, a_n$) 
as an \textit{optimal} answer set.

We further make use of the interval (``..'') and pooling (``;'') operators to abbreviate notation.
% As a means of syntactic sugar, we can use interval (“..”) and pooling (“;”) operators to abbreviate notation. 
Intervals let us create multiple instances of a predicate determined by an interval of numerical values, and pooling lets us create multiple instances of a predicate by separating elements by ``;''.
For more details on ASP we refer the reader to the ASP-Core-2 standard~\cite{calimeri2020asp}.

\subsection{The Contension Inconsistency Measure}\label{sec:asp-algo-c}

% With regard to the contension inconsistency measure $\icont$, we can construct an extended logic program $\pc(\kb)$ to compute $\icont(\mathcal{K})$ wrt.\ a knowledge base $\mathcal{K}$ as described in the following.
Given a knowledge base $\kb$, we construct an extended logic program $\pc(\kb)$ to determine the contension inconsistency value $\icont(\kb)$ as described in the following.
Note that, like the SAT-based approach for $\icont$, the ASP-based method was already proposed in \cite{kuhlmann2022comparison}, but is covered again for the sake of completeness.

% Recall that the contension measure looks for the minimal number of atoms that need to be assigned truth value $b$ in order to render the given knowledge base consistent in Priest's three-valued logic.
At first, we declare some facts describing the overall composition of the knowledge base, i.\,e., we describe which formulas are elements of the knowledge base, which atoms are in the knowledge base's signature, and whether a (sub)formula is a conjunction, disjunction, negation, or atom.
We define for every formula $\varFor \in \kb$ 
\begin{align*}
    \texttt{kbMember(}\varFor \texttt{).} \tag{AC1} 
\end{align*}
and for every atom $\varAt \in \mathsf{At}(\kb)$
\begin{align*}
    \texttt{atom(} \varAt \texttt{).} \tag{AC2}
\end{align*}
Note that the $\varFor$ in $\kb$ and the $\varFor$ in $\texttt{kbMember/1}$ are formally not the same---in $\kb$ it is a propositional formula, while in $\texttt{kbMember/1}$ it is a string representation of that formula.
More precisely, a formula $\varFor$ is ``translated'' to a uniquely defined ASP constant, i.\,e., a string starting with a lowercase letter.\footnote{For example, a given formula $\alpha_1$ could be represented as \texttt{alpha\_1}.} 
% (e.\,g., a given formula $\alpha_1$ could be represented as \texttt{alpha\_1}). 
% \footnote{For example, a given formula $\alpha_1$ could be represented as \texttt{alpha\_1}.}
The same applies to the $\varAt$ in $\atoms(\kb)$ and the $\varAt$ in $\texttt{atom/1}$.
In the following, whenever a propositional logic symbol appears in an ASP rule, it is to be interpreted as a uniquely defined ASP constant.

Every conjunction $\varSub_c = \varSubSub_{c,1} \land \varSubSub_{c,2}$ is encoded as 
\begin{align*}
    \texttt{conjunction(} \varSub_c \texttt{,} \varSubSub_{c,1} \texttt{,} \varSubSub_{c,2} \texttt{).} \tag{AC3} 
\end{align*}
In the same fashion, we define disjunctions $\varSub_d = \varSubSub_{d,1} \lor \varSubSub_{d,2}$ as
\begin{align*}
    \texttt{disjunction(} \varSub_d \texttt{,} \varSubSub_{d,1} \texttt{,} \varSubSub_{d,2} \texttt{).} \tag{AC4}
\end{align*}
For each negated formula, i.\,e., for each $\varSub_n = \lnot \varSubSub_n$, we define
\begin{align*}
  \texttt{negation(} \varSub_n \texttt{,} \varSubSub_n \texttt{).} \tag{AC5}  
\end{align*}
Further, we need to encode subformulas which consist of individual atoms.
Hence, for each formula $\varSub_a$ which is equal to an atom $\varAt$ we define
\begin{align*}
   \texttt{formulaIsAtom(} \varSub_a \texttt{,} \varAt \texttt{).} \tag{AC6} 
\end{align*}
Additionally, we need to declare the truth values of Priest's three-valued logic ($t$, $f$, $b$) as facts:
% Note that we make use of a pooling operator, which lets us create multiple instances of a predicate by separating elements with ``\texttt{;}'':
\begin{align*}
   \texttt{tv(} t \texttt{;} f \texttt{;} b \texttt{).} \tag{AC7} 
\end{align*}

To encode how the contension inconsistency measure actually works, we first need to ensure that an atom is not assigned multiple truth values at the same time, i.\,e.,\ we need to ensure that each atom gets a distinct evaluation.
To this end, we introduce the predicate \texttt{truthValue/2} 
% which represents that some element \texttt{X} gets assigned a truth value \texttt{T}.
% We 
and define that an atom has exactly one truth value by making use of a cardinality constraint:
\begin{align*}
   \texttt{1\{truthValue(A,T) : tv(T)\}1 :- atom(A).} \tag{AC8} 
\end{align*}

In order to encode the evaluation of formulas, we need to model the role of the connectives $\land$, $\lor$, and $\lnot$ in Priest's three-valued logic (see Table \ref{tab:3VL}).
For a conjunction to be true in Priest's three-valued logic, both of its conjuncts have to be true: 
% \begin{align*}
%     & \texttt{truthValue(F,} t \texttt{) :-} \\
%     & \qquad \qquad \texttt{conjunction(F,G,H),} \\
%     & \qquad \qquad \texttt{truthValue(G,} t \texttt{),} \\
%     & \qquad \qquad \texttt{truthValue(H,} t \texttt{).} \tag{AC9} 
% \end{align*}
\begin{align*}
    \texttt{truthValue(F,} t \texttt{) :- }&\texttt{conjunction(F,G,H),} \\
    & \texttt{truthValue(G,} t \texttt{),} \\
    & \texttt{truthValue(H,} t \texttt{).} \tag{AC9} 
\end{align*}
For a conjunction to be false, it is sufficient if at least one of its conjuncts is false:
% \begin{align*}
%     & \texttt{truthValue(F,} f \texttt{) :-} \\
%     & \qquad \qquad \texttt{conjunction(F,G,H),} \\
%     & \qquad \qquad \texttt{1\{truthValue(G,} f \texttt{), truthValue(H,} f \texttt{)\}.} \tag{AC10}
% \end{align*}
{\allowdisplaybreaks
\begin{align*}
    \texttt{truthValue(F,} f \texttt{) :- }&\texttt{conjunction(F,G,H),} \\
    & \texttt{1\{truthValue(G,} f \texttt{); truthValue(H,} f \texttt{)\}.} \tag{AC10}
\end{align*}
}
A conjunction is $b$ in three-valued logic if it is neither $t$ nor $f$:
% \begin{align*}
%     & \texttt{truthValue(F,} b \texttt{) :- } \\
%     & \qquad \qquad \texttt{conjunction(F,\_,\_),}  \\
%     & \qquad \qquad \texttt{not truthValue(F,} t \texttt{),} \\
%     & \qquad \qquad \texttt{not truthValue(F,} f \texttt{).} \tag{AC11}
% \end{align*}
\begin{align*}
    \texttt{truthValue(F,} b \texttt{) :- } & \texttt{conjunction(F,\_,\_),}\\
    & \texttt{not truthValue(F,} t \texttt{),} \\
    & \texttt{not truthValue(F,} f \texttt{).} \tag{AC11}
\end{align*}
Analogously, we define that a disjunction is false if both its disjuncts are false, and true if at least one of its disjuncts is true. 
Again, if a disjunction is neither $t$ nor $f$, it is $b$.
% \begin{align*}
%     &\texttt{truthValue(F,} f \texttt{) :-}  \\
%         & \qquad \qquad \texttt{disjunction(F,G,H),} \\
%         & \qquad \qquad \texttt{truthValue(G,} f \texttt{),} \\
%         & \qquad \qquad \texttt{truthValue(H,} f \texttt{).} \tag{AC12}
% \end{align*}
{\allowdisplaybreaks
\begin{align*}
    \texttt{truthValue(F,} f \texttt{) :- } & \texttt{disjunction(F,G,H),}  \\
        & \texttt{truthValue(G,} f \texttt{),} \\
        & \texttt{truthValue(H,} f \texttt{).} \tag{AC12} \\
% \end{align*}
% \begin{align*}
%     &\texttt{truthValue(F,} t \texttt{) :-}  \\
%         & \qquad \qquad \texttt{disjunction(F,G,H),} \\
%         & \qquad \qquad \texttt{1\{truthValue(G,} t \texttt{), truthValue(H,} t \texttt{)\}.} \tag{AC13}
% \end{align*}
% \begin{align*}
    \texttt{truthValue(F,} t \texttt{) :- } & \texttt{disjunction(F,G,H),} \\
        & \texttt{1\{truthValue(G,} t \texttt{); truthValue(H,} t \texttt{)\}.} \tag{AC13} \\
% \end{align*}
% \begin{align*}
%     & \texttt{truthValue(F,} b \texttt{) :-} \\
%         & \qquad \qquad \texttt{disjunction(F,\_,\_),}  \\
%         & \qquad \qquad \texttt{not truthValue(F,} t \texttt{),} \\
%         & \qquad \qquad \texttt{not truthValue(F,} f \texttt{).} \tag{AC14}
% \end{align*}
% \begin{align*}
    \texttt{truthValue(F,} b \texttt{) :- } & \texttt{disjunction(F,\_,\_),} \\
        & \texttt{not truthValue(F,} t \texttt{),} \\
        & \texttt{not truthValue(F,} f \texttt{).} \tag{AC14} 
\end{align*}
}
A negation is $t$ in three-valued logic if its base formula is $f$ and vice versa.
A negation is $b$ if its base formula is $b$ as well.
Thus, we can formulate the following rules to model three-valued negation in ASP:
\begin{align*}
    \texttt{truthValue(F,} t \texttt{) :- }& \texttt{negation(F,G), truthValue(G,} f \texttt{).} \tag{AC15} \\
    \texttt{truthValue(F,} f \texttt{) :- }& \texttt{negation(F,G), truthValue(G,} t \texttt{).} \tag{AC16} \\
    \texttt{truthValue(F,} b \texttt{) :- }& \texttt{negation(F,G), truthValue(G,} b \texttt{).} \tag{AC17}
\end{align*}
Moreover, if a (sub)formula only consists of a single atom, it must have the same truth value:
     % \begin{align*}
     %     & \texttt{truthValue(F,T) :-}  \\
     %     & \qquad \qquad \texttt{formulaIsAtom(F,G),} \\
     %     & \qquad \qquad \texttt{truthValue(G,T),} \\
     %     & \qquad \qquad \texttt{tv(T).} \tag{AC18}
     % \end{align*}
     \begin{align*}
         \texttt{truthValue(F,T) :- } & \texttt{formulaIsAtom(F,G),} \\
         & \texttt{truthValue(G,T),} \\
         & \texttt{tv(T).} \tag{AC18}
     \end{align*}

Further, all formulas $\varFor \in \kb$ need to be either $t$ or $b$ wrt.\ Priest's three-valued logic.
Thus, no formula is allowed to be evaluated to $f$.
We accomplish this by introducing the following integrity constraint:
    \begin{align*}
        \texttt{:- truthValue(F,} f \texttt{), kbMember(F).}
        \tag{AC19}
    \end{align*}

At last, to actually compute $\icont(\kb)$, we require the minimal number of atoms in $\kb$ evaluated to $b$.
This is achieved by means of a minimize statement:
    \begin{align*}
       \texttt{\#minimize\{1,A: truthValue(A,} b \texttt{), atom(A)\}}. 
       \tag{AC20}
    \end{align*}
    
Now $\pc(\kb)$ is the union of all rules defined in (AC1)--(AC20) (for an overview, see Encoding \ref{tab:asp-contension} in Appendix~\ref{app:proof-asp-contension}).
Further, let $\omega^3_M$ be the three-valued interpretation represented by an answer set $M$ of $\pc(\kb)$. 
To be precise, if $M$ is an answer set of $\pc(\kb)$ then the three-valued interpretation $\omega^3_M$ defined as
\begin{align*}
\omega^3_M(\varAt) = \begin{cases} 
t & \quad \texttt{truthValue(}\varAt \texttt{,} t \texttt{)} \in M\\
f & \quad \texttt{truthValue(}\varAt \texttt{,} f \texttt{)} \in M\\
b & \quad \texttt{truthValue(}\varAt \texttt{,} b \texttt{)} \in M \end{cases}
\end{align*}
is a model of $\kb$ with $\varAt \in \mathsf{At}(\kb)$. 
The proofs regarding the well-definedness of the previous definition as well as the following theorem are provided in Appendix \ref{app:proof-asp-contension}.
\begin{theorem}\label{thm:asp-c}
    Let $M_o$ be an optimal answer set of $\pc(\kb)$. 
    Then $|(\omega^3_{M_o})^{-1}(b)|=\icont(\kb)$.\footnote{For any function $\varphi:X\mapsto Y$ and $y \in Y$ we define $\varphi^{-1}(y) = \{x\in X \mid \varFor(x) = y\}$}
\end{theorem}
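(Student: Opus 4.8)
The plan is to prove the theorem by establishing a value-preserving correspondence between the answer sets of $\pc(\kb)$ and the three-valued models of $\kb$, where the ``value'' of an object is the number of atoms it assigns to $b$, and then to invoke the minimize statement (AC20) to conclude that an optimal answer set realizes precisely the minimum such value, which is $\icont(\kb)$ by definition. Concretely, I would show that the multiset of counts $\{\,|(\omega^3_M)^{-1}(b)| : M \text{ an answer set of } \pc(\kb)\,\}$ coincides with $\{\,|\mathsf{Conflictbase}(\omega^3)| : \omega^3 \in \mathsf{Mod}^3(\kb)\,\}$, via a soundness direction and a completeness direction.

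For the soundness direction I would fix an arbitrary answer set $M$ and prove, by structural induction over $\mathsf{sub}(\kb)$, that for every subformula $\varSub$ exactly one of $\texttt{truthValue}(\varSub,t)$, $\texttt{truthValue}(\varSub,f)$, $\texttt{truthValue}(\varSub,b)$ lies in $M$, and that this value equals the evaluation of $\varSub$ under Priest's semantics relative to the atom assignment recorded in $M$. The base case is the atom rule (AC8), whose $1\{\ldots\}1$ cardinality constraint forces exactly one truth value per atom; subformulas consisting of a single atom are handled through (AC6) and (AC18). The inductive steps for $\land$, $\lor$, $\lnot$ amount to checking (AC9)--(AC18) against the truth table in Table \ref{tab:3VL}, noting that the only rules carrying default negation are the ``$b$''-rules (AC11) and (AC14), which derive $b$ exactly when neither $t$ nor $f$ has been established. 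The integrity constraint (AC19) then guarantees that no $\varFor \in \kb$ evaluates to $f$, so $\omega^3_M \in \mathsf{Mod}^3(\kb)$, and by construction $|(\omega^3_M)^{-1}(b)| = |\{\varAt \in \atoms(\kb) : \texttt{truthValue}(\varAt,b) \in M\}|$.

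For the completeness direction I would start from an arbitrary $\omega^3 \in \mathsf{Mod}^3(\kb)$ and build the candidate set $M$ consisting of the structural facts (AC1)--(AC7) together with $\texttt{truthValue}(\varSub,\omega^3(\varSub))$ for every $\varSub \in \mathsf{sub}(\kb)$, where $\omega^3(\varSub)$ denotes the Priest evaluation of $\varSub$. The crux is to verify that $M$ is genuinely an answer set, i.e.\ $M = \mathsf{Cn}(\pc(\kb)^M)$: one forms the reduct by deleting the default-negated literals according to $M$, observes that the surviving rules propagate truth values deterministically bottom-up through the subformula structure, and checks that $M$ is exactly their least closure and violates neither (AC19) (since $\omega^3$ is a model, no $\kb$-member is $f$) nor any cardinality bound. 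This yields $(\omega^3_M)^{-1}(b) = \mathsf{Conflictbase}(\omega^3)$. I expect this answer-set verification to be the main obstacle: because (AC11) and (AC14) use default negation, the argument must show that $M$ is \emph{stable} rather than merely a model, which hinges on the observation that, once the atom values are fixed, the subformula values are forced in a single bottom-up sweep, so the program behaves as though stratified by subformula depth and the reduct's least fixpoint reproduces $M$ without spurious or missing $b$-labels.

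Finally I would combine the two directions. They show that the set of achievable $b$-counts over answer sets equals the set of achievable conflict-base sizes over three-valued models. Since the minimize statement (AC20) assigns each answer set $M$ the weight $|\{\varAt : \texttt{truthValue}(\varAt,b) \in M\}| = |(\omega^3_M)^{-1}(b)|$, an optimal answer set $M_o$ is exactly one minimizing this weight. Hence $|(\omega^3_{M_o})^{-1}(b)| = \min_{\omega^3 \in \mathsf{Mod}^3(\kb)} |\mathsf{Conflictbase}(\omega^3)| = \icont(\kb)$, which is the claim.
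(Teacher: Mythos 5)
Your proposal is correct and follows essentially the same route as the paper: a soundness lemma showing by structural induction that each answer set of the program (minus the minimize statement) induces a three-valued model with matching $b$-count, a completeness lemma in the reverse direction, and then the minimize statement (AC20) to extract $\icont(\kb)$. If anything, you are more explicit than the paper about the one genuinely delicate point, namely verifying stability of the constructed set in the completeness direction given the default negation in (AC11) and (AC14), which the paper dismisses as ``analogous in reverse.''
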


% In the following, we present an example of how to construct $\pc$ wrt.\ a given knowledge base $\kb$.

\begin{example}\label{ex:asp-contension}
    Consider again the knowledge base $\kb_7$ from Example \ref{ex:sat-contension}:
    $$\mathcal{K}_7 = \left\{\underbrace{\overbrace{x}^{\phi_{1,1}} \land \overbrace{y}^{\phi_{1,2}}}_{\alpha_1}, \quad \underbrace{\overbrace{x}^{\phi_{2,1}} \lor \overbrace{y}^{\phi_{2,2}}}_{\alpha_2}, \quad \underbrace{\lnot \overbrace{x}^{\phi_{3}}}_{\alpha_3} \right\}$$
    We aim to construct $\pc(\kb_7)$, i.\,e., an ASP encoding which allows us to retrieve $\icont(\kb_7)$.
    First, we represent the formulas and atoms in $\kb_7$ by applying (AC1) and (AC2):
    \begin{align*}
        \mathtt{kbMember(}\alpha_1\mathtt{).}  & \qquad \mathtt{atom(}x\mathtt{).} \\
        \mathtt{kbMember(}\alpha_2\mathtt{).}  & \qquad \mathtt{atom(}y\mathtt{).} \\
        \mathtt{kbMember(}\alpha_3\mathtt{).}  &
    \end{align*}
    % \begin{align*}
    %     \mathtt{kbMember(}\alpha_1\mathtt{).}  \qquad \mathtt{kbMember(}\alpha_2\mathtt{).} \qquad \mathtt{kbMember(}\alpha_3\mathtt{).} \\
    %     \mathtt{atom(}x\mathtt{).} \qquad \mathtt{atom(}y\mathtt{).} 
    % \end{align*}
    % Next, we represent the only conjunction present in $\kb_7$, i.\,e., $\alpha_1$, via (AC3).
    % % We represent it by using (AC3):
    % \begin{align*}
    %     &\mathtt{conjunction(}\alpha_1, \phi_{1,1}, \phi_{1,2} \mathtt{).} 
    % \end{align*}
    % In the same fashion, we apply (AC4) to $\alpha_2$, the only disjunction occurring in $\kb_7$.
    % \begin{align*}
    %     &\mathtt{disjunction(}\alpha_2, \phi_{2,1}, \phi_{2,2}  \mathtt{).} 
    % \end{align*}
    % Now, the only formula that is not represented in ASP yet is $\alpha_3$.
    % Hence, we apply (AC5).
    % $$ \mathtt{negation(} \alpha_3, \phi_3 \mathtt{).} $$
    We use (AC3) to represent the only conjunction in $\kb_7$ ($\alpha_1$) as $\mathtt{conjunction(}\alpha_1, \phi_{1,1}, \phi_{1,2} \mathtt{)} $.
    In the same fashion, we apply (AC4) to $\alpha_2$, the only disjunction occurring in $\kb_7$, and get
    $\mathtt{disjunction(}\alpha_2, \phi_{2,1}, \phi_{2,2}  \mathtt{)}$.
    Now, the only formula that is not represented in ASP yet is $\alpha_3$, so we represent it as
    $\mathtt{negation(} \alpha_3, \phi_3 \mathtt{)}$ via (AC5).
    To represent the subformulas which consist of individual atoms, we use (AC6):
    \begin{align*}
        & \mathtt{formulaIsAtom}(\phi_{1,1}, x). \qquad \mathtt{formulaIsAtom}(\phi_{1,2}, y). \\
        & \mathtt{formulaIsAtom}(\phi_{2,1}, x). \qquad \mathtt{formulaIsAtom}(\phi_{2,2}, y). \\
        & \mathtt{formulaIsAtom}(\phi_{3}, x).
    \end{align*}
    The remainder of the logic program, i.\,e., (AC7)--(AC20), is static.
\end{example}

\subsection{The Forgetting-Based Inconsistency Measure}\label{sec:asp-algo-fb}

The forgetting-based inconsistency measure $\iforget$ is determined by the number of atom occurrences that need to be ``forgotten'' in order to make the knowledge base $\kb$ consistent.
% An extended logic program $\pf(\kb)$ which computes $\iforget(\kb)$ can be constructed as described below.
In the following , we construct an extended logic program $\pf(\kb)$ which computes $\iforget(\kb)$.
Just as described in the preceding section, we begin with the definition of facts.
Again, every formula $\varFor \in \kb$ is represented as $\texttt{kbMember(}\varFor \texttt{)}$ (AF1).
Conjunctions (AF3), disjunctions (AF4), and negations (AF5) are also represented in exactly the same form as introduced in Section \ref{sec:asp-algo-c}.
% (see also (AF3--AF5) in Encoding \ref{tab:asp-fb}).
On the other hand, formulas which consist of individual atoms have to be handled differently.
Since the forgetting-based inconsistency measure does not only consider each atom, but each atom \textit{occurrence}, we have to represent this in our ASP encoding. 
Therefore, we define the predicate \texttt{formulaIsAtomOcc/3} which contains the formula $\varSub$, the atom $\varAt$ it consists of, and the atom's label $l$:
\begin{align*}
    \texttt{formulaIsAtomOcc(}\varSub \texttt{,} \varAt \texttt{,} l \texttt{).} \tag{AF2}
\end{align*}
The representation of an atom occurrence can be extracted from the above rule as follows:
\begin{align*}
    \texttt{atomOcc(A,L) :- formulaIsAtomOcc(\_,A,L).} \tag{AF7}
\end{align*}
Moreover, we can gain the representation of an atom by applying the following rule:
\begin{align*}
  \texttt{atom(A) :- atomOcc(A,\_).} \tag{AF8}  
\end{align*}
We model the truth values in the usual manner:
% Furthermore, we need to differentiate between (sub)formulas, which can only be either $t$ or $f$, and atom occurrences, which are either assigned a truth value ($t$ or $f$), or can be forgotten,
% i.\,e., it is either replaced by $\top$ ($\forget_\top$) or by $\bot$ ($\forget_\bot$).
% In addition, 
% An atom occurrence can be forgotten in two different ways---either by being replaced by $\top$ ($\forget_\top$) or by $\bot$ ($\forget_\bot$).
% Consequently, we define two different predicates.
\begin{align*}
    & \texttt{tv(} t \texttt{;} f \texttt{).} \tag{AF6} 
% \end{align*}
% and
% \begin{align*}
    % & \texttt{atv(} t \texttt{;} f \texttt{;} \forget_\top \texttt{;} \forget_\bot \texttt{).} \tag{AF7}
\end{align*}
Next, we include a rule that ensures unique atom evaluation.
With regard to $\iforget$, this means that an atom as a whole must be evaluated to either $t$ or $f$, even though individual occurrences of that atom might be replaced by $\top$ or $\bot$:
% Thus, we define.
\begin{align*}
    \texttt{1\{truthValue(A,T) : tv(T)\}1 :- atom(A).} \tag{AF9}
\end{align*}

The connective encodings simply model propositional entailment.
Thus, the evaluation of a conjunction $\varSub_c = \varSubSub_{c,1} \land \varSubSub_{c,2}$ is modeled as follows:
% \begin{align*}
%     & \texttt{truthValue(F,} t \texttt{) :- }  \\
%         & \qquad \qquad \texttt{conjunction(F,G,H),} \\
%         & \qquad \qquad \texttt{truthValue(G,} t \texttt{),} \\
%         & \qquad \qquad \texttt{truthValue(H,} t \texttt{).} \tag{AF11} \\
%     & \texttt{truthValue(F,} f \texttt{) :-} \\
%         & \qquad \qquad \texttt{conjunction(F,\_,\_), }  \\
%         & \qquad \qquad \texttt{not truthValue(F,} t \texttt{).} \tag{AF12}
% \end{align*}
\begin{align*}
    \texttt{truthValue(F,} t \texttt{) :- } & \texttt{conjunction(F,G,H),} \\
        & \texttt{truthValue(G,} t \texttt{),} \\
        & \texttt{truthValue(H,} t \texttt{).} \tag{AF10} \\
    \texttt{truthValue(F,} f \texttt{) :- } & \texttt{conjunction(F,\_,\_), }\\
        & \texttt{not truthValue(F,} t \texttt{).} \tag{AF11}
\end{align*}
In the same fashion, we define the evaluation of a disjunction $\varSub_d = \varSubSub_{d,1} \lor \varSubSub_{d,2}$:
% \begin{align*}
%     & \texttt{truthValue(F,} f \texttt{) :- }  \\
% 	    & \qquad \qquad \texttt{disjunction(F,G,H),} \\
% 	    & \qquad \qquad \texttt{truthValue(G,} f \texttt{),} \\
% 	    & \qquad \qquad \texttt{truthValue(H,} f \texttt{).} \tag{AF13}  \\
%     & \texttt{truthValue(F,} t \texttt{) :- } \\
%         & \qquad \qquad\texttt{disjunction(F,\_,\_), }  \\
%         & \qquad \qquad \texttt{not truthValue(F,} f \texttt{).} \tag{AF14}
% \end{align*}
\begin{align*}
    \texttt{truthValue(F,} f \texttt{) :- } & \texttt{disjunction(F,G,H),} \\
	    & \texttt{truthValue(G,} f \texttt{),} \\
	    & \texttt{truthValue(H,} f \texttt{).} \tag{AF12}  \\
    \texttt{truthValue(F,} t \texttt{) :- } & \texttt{disjunction(F,\_,\_), }\\
        & \texttt{not truthValue(F,} f \texttt{).} \tag{AF13}
\end{align*}
At last we define the evaluation of a negation $\varSub_n = \lnot \varSubSub_n$:
% \begin{align*}
%     & \texttt{truthValue(F,} t \texttt{) :- } \\
%         & \qquad \qquad \texttt{negation(F,G),} \\
%         & \qquad \qquad \texttt{truthValue(G,} f \texttt{).} \tag{AF15} \\
%     & \texttt{truthValue(F,} f \texttt{) :- } \\
%         & \qquad \qquad \texttt{negation(F,G),} \\
%         & \qquad \qquad \texttt{truthValue(G,} t \texttt{).} \tag{AF16} 
% \end{align*}
{\allowdisplaybreaks
\begin{align*}
    \texttt{truthValue(F,} t \texttt{) :- } & \texttt{negation(F,G),}\\
        & \texttt{truthValue(G,} f \texttt{).} \tag{AF14} \\
    \texttt{truthValue(F,} f \texttt{) :- } & \texttt{negation(F,G),} \\
        & \texttt{truthValue(G,} t \texttt{).} \tag{AF15} 
\end{align*}
}

% If a formula $\varSub_a$ consists of a single atom occurrence $\varAt^l$, we need to consider that the forgetting operation might have been applied.
% If $\varAt^l$ is either $t$ or has been replaced by $\top$, $\varSub_a$ is evaluated to $t$.
% Thus, if $\varAt^l$ is either $f$ or has been replaced by $\bot$, $\varSub_a$ evaluates to $f$.
% We express this by means of the following rules:
% \begin{align*}
%     &\texttt{truthValue(F,}t \texttt{) :- } \\
%         & \qquad \qquad \texttt{formulaIsAtomOcc(F,A,L),} \\ 
%         & \qquad \qquad \texttt{atomTruthValue(A,L,} t \texttt{).} \tag{AF17} \\
%     &\texttt{truthValue(F,}t \texttt{) :- } \\
%         & \qquad \qquad \texttt{formulaIsAtomOcc(F,A,L),} \\ 
%         & \qquad \qquad \texttt{atomTruthValue(A,L,} \forget_\top \texttt{).} \tag{AF18} \\
%     &\texttt{truthValue(F,}f \texttt{) :- } \\
%         & \qquad \qquad \texttt{formulaIsAtomOcc(F,A,L),} \\ 
%         & \qquad \qquad \texttt{atomTruthValue(A,L,} f \texttt{).} \tag{AF19} \\
%     &\texttt{truthValue(F,}f \texttt{) :- } \\
%         & \qquad \qquad \texttt{formulaIsAtomOcc(F,A,L),} \\ 
%         & \qquad \qquad \texttt{atomTruthValue(A,L,} \forget_\bot \texttt{).} \tag{AF20}
% \end{align*}

If a formula $\varSub_a$ consists of a single atom occurrence $\varAt^l$, we need to consider that the forgetting operation might be applied.
To model this, we first guess for each atom occurrence whether it is being forgotten:
\begin{align*}
    \texttt{\{atomOccForgotten(A,L)\} :- atomOcc(A,L).} \tag{AF16}
\end{align*}
If $\varAt^l$ is not forgotten, it needs to evaluate to the truth value of $\varAt$ itself:
% \begin{align*}
%     & \texttt{truthValue(F,T) :- } \\
%     & \qquad \qquad \texttt{formulaIsAtomOcc(F,A,L),} \\
%     & \qquad \qquad \texttt{truthValue(A,T),} \\
%     & \qquad \qquad \texttt{not atomOccForgotten(A,L).} \tag{AF18}
% \end{align*}
\begin{align*}
    \texttt{truthValue(F,T) :- } & \texttt{formulaIsAtomOcc(F,A,L),} \\
    & \texttt{truthValue(A,T),} \\
    & \texttt{not atomOccForgotten(A,L).} \tag{AF17}
\end{align*}
If, on the other hand, $\varAt^l$ needs to be forgotten, it means that the (sub)formula consisting of $\varAt^l$ evaluates to the opposite truth value:
% die Formel darf quasi den gegenteiligen truth value haben 
{\allowdisplaybreaks
\begin{align*}
    \texttt{truthValue(F,}t \texttt{) :- } & \texttt{formulaIsAtomOcc(F,A,L),} \\
        & \texttt{truthValue(A,}f\texttt{),} \\
        & \texttt{atomOccForgotten(A,L).} \tag{AF18} \\
    \texttt{truthValue(F,}f \texttt{) :- } & \texttt{formulaIsAtomOcc(F,A,L),} \\
    & \texttt{truthValue(A,}t\texttt{),} \\
    & \texttt{atomOccForgotten(A,L).} \tag{AF19}
\end{align*}
}

All formulas $\varFor \in \kb$ must evaluate to $t$ after the forgetting operation is applied.
Hence, we add the following integrity constraint which corresponds exactly to the one we used for the contension inconsistency measure (see (AC19) in Section \ref{sec:asp-algo-c}): % in Encoding \ref{tab:asp-contension}): 
\begin{align*}
     \texttt{:- truthValue(F,} f \texttt{), kbMember(F).} \tag{AF20}
\end{align*}
 		
Lastly, we need to minimize the number of atom occurrences which are forgotten:
% To achieve this, we define the predicate \texttt{atomOccForgotten/2} which refers to an atom occurrence and which is included in the answer set if said atom occurrence is forgotten:
% \begin{align*}
%     \texttt{atomOccForgotten(A,L) :- atomTruthValue(A,L,} \forget_\top \texttt{).} \tag{AF26} \\
%     \texttt{atomOccForgotten(A,L) :- atomTruthValue(A,L,} \forget_\bot \texttt{).} \tag{AF27}
% \end{align*}
% With the aid of this predicate, we can now minimize the number of forgotten atom occurrences:
\begin{align*}
    \texttt{\#minimize\{1,A,L : atomOccForgotten(A,L)\}.} \tag{AF21}
\end{align*}

% The union of all rules defined above (summarized in (AF1)--(AF22) in Encoding \ref{tab:asp-fb}) constitute the extended logic program $\pf(\kb)$.
The union of the rules (AF1)--(AF21) defined above (see Encoding \ref{tab:asp-fb} in Appendix~\ref{app:proof-asp-fb} for an overview) constitute the extended logic program $\pf(\kb)$.
% We denote the set of atom occurrences that are replaced by $\top$ wrt.\ a knowledge base $\kb$ as $T_M$, and the set of atom occurrences that are replaced by $\bot$ as $F_M$.
With $M$ being an answer set of $\pf(\kb)$, we denote the set of atom occurrences that are forgotten as 
% With $M$ being an answer set of $\pf(\kb)$ we define
    % \begin{align*}
    %     T_M & = \{\varAt^l \mid \varAt \in \atoms(\kb), \texttt{atomTruthValue(} \varAt \texttt{,} l \texttt{,} \forget_\top \texttt{)} \in M\},\\
    %     F_M & = \{\varAt^l \mid \varAt \in \atoms(\kb), \texttt{atomTruthValue(} \varAt \texttt{,} l \texttt{,} \forget_\bot \texttt{)} \in M\}.
    % \end{align*}
    \begin{align*}
        F_M & = \{\varAt^l \mid \varAt \in \atoms(\kb), \texttt{atomOccForgotten(} \varAt \texttt{,} l \texttt{)}  \in M\}
    \end{align*}
\begin{theorem}\label{thm:asp-fb}
    Let $M_o$ be an optimal answer set of $\pf(\kb)$.
    % Then $|T_{M_o}| + |F_{M_o}| = \iforget(\kb)$.
    Then $|F_{M_o}| = \iforget(\kb)$.
\end{theorem}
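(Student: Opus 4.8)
The plan is to establish the two inequalities $|T_{M_o}|+|F_{M_o}|\ge\iforget(\kb)$ and $|T_{M_o}|+|F_{M_o}|\le\iforget(\kb)$ separately, after first recasting $\iforget(\kb)$ into a combinatorial form that matches what the program searches over. Since forgetting an occurrence $\varAt^l$ in $\varSub$ replaces it by the disjunction $(\varSub\mid\varAt^l\to\top)\lor(\varSub\mid\varAt^l\to\bot)$, forgetting a whole set $S\subseteq\occs(\kb)$ distributes into a disjunction ranging over all maps $c\colon S\to\{\top,\bot\}$. By a routine induction on $|S|$ one gets that $\forgetfunc_{l_1,\varAt_1}.\cdots.\forgetfunc_{l_n,\varAt_n}.\land_\kb$ is consistent if and only if there exist an interpretation $\omega$ of the atoms and a choice $c$ such that $\land_\kb$ evaluates to true when every occurrence in $S$ is read as its value under $c$ and every remaining occurrence of an atom $\varAt$ is read as $\omega(\varAt)$. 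Hence $\iforget(\kb)$ is the minimum size of a set $S$ admitting such a witnessing pair $(\omega,c)$; I would state and prove this reformulation first, as it is the bridge between the semantic definition and the objects $T_M,F_M$.

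The technical core is a well-definedness and correctness lemma for $\pf(\kb)$, proved by structural induction over $\mathsf{sub}(\kb)$ in the same spirit as the argument behind Theorem \ref{thm:asp-c}. I would first show that in any answer set $M$ the choice rules (AF21)--(AF24) assign to each occurrence $\varAt^l$ exactly one value in $\{t,f,\forget_\top,\forget_\bot\}$: the mutual \texttt{not}-guards make the four candidate atoms pairwise exclusive, the guard inside each rule forces at least one of them to hold under the stable-model semantics, and (AF23)--(AF24) tie the non-forgotten values $t,f$ to the unique global atom value fixed by (AF10). Reading $T_M\cup F_M$ as the set $S$, the $\top/\bot$ labels as $c_M$, and the global atom values as $\omega_M$, I would then prove by induction that for every $\varSub\in\mathsf{sub}(\kb)$ exactly one of $\texttt{truthValue}(\varSub,t)$ and $\texttt{truthValue}(\varSub,f)$ lies in $M$, and that this value is the classical evaluation of $\varSub$ under $(\omega_M,c_M)$; the base case is covered by (AF17)--(AF20) and the inductive step by the connective rules (AF11)--(AF16), whose default-negated ``else''-clauses are sound precisely because the subformula structure is acyclic.

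With this lemma in hand the soundness direction is immediate. For any answer set $M$ the integrity constraint (AF25) forbids any $\varFor\in\kb$ from evaluating to $f$, so by the lemma every $\varFor$ evaluates to $t$, and the pair $(\omega_M,c_M)$ is exactly a witness that forgetting the occurrences in $T_M\cup F_M$ yields a consistent knowledge base. By the reformulation this gives $\iforget(\kb)\le|T_M|+|F_M|$, and in particular $\iforget(\kb)\le|T_{M_o}|+|F_{M_o}|$.

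For the reverse inequality I would take a minimal forgetting set $S$ with witness $(\omega,c)$ and $|S|=\iforget(\kb)$, and build a candidate model $M$ of the ground program: set each atom's \texttt{truthValue} according to $\omega$, set every occurrence in $S$ to $\forget_\top$ or $\forget_\bot$ according to $c$ and every other occurrence to $t$ or $f$ according to $\omega$, propagate the induced value to every subformula, and include \texttt{atomOccForgotten} exactly for the occurrences of $S$. I would then check that $M=\mathsf{Cn}(\pf(\kb)^M)$ and that (AF25) is respected because all formulas evaluate to $t$, so that $M$ is an answer set with $|T_M|+|F_M|=|S|=\iforget(\kb)$; optimality of $M_o$ then yields $|T_{M_o}|+|F_{M_o}|\le\iforget(\kb)$, and the two inequalities combine to the claim. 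I expect the main obstacle to be the well-definedness lemma of the second paragraph: arguing that the self-referential choice rules (AF21)--(AF24) admit, per occurrence, exactly the four intended stable choices (neither spurious extra models nor the loss of a legitimate one) and that the default-negated connective rules genuinely compute the classical truth value. This is the point where one must reason with the stable-model semantics itself rather than with mere classical satisfaction, and it is where the care in the proof will be concentrated.
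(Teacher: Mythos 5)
Your proposal is correct and follows essentially the same route as the paper's proof: establish that answer sets of the program without the minimize statement correspond exactly to consistent forgettings (via a structural analysis of rules (AF10)--(AF25), with the choice among $t$, $f$, $\forget_\top$, $\forget_\bot$ per occurrence playing the central role), and then let the minimize statement (AF28) select the minimum. If anything, your plan is more explicit than the paper's, which proves only the soundness direction (answer set $\Rightarrow$ consistent forgetting) as Lemma~\ref{lem:asp-fb-1} and asserts the exact correspondence; your third and fourth paragraphs supply both inequalities separately, including the construction of an answer set from a minimal forgetting set and the stability check that the paper leaves implicit.
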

The proof of the above theorem is provided in Appendix \ref{app:proof-asp-fb}.

\begin{example}\label{ex:asp-fb}
    Consider again the knowledge base $\kb_7$ from Example \ref{ex:sat-contension}:
    $$\mathcal{K}_7 = \left\{\underbrace{\overbrace{x}^{\phi_{1,1}} \land \overbrace{y}^{\phi_{1,2}}}_{\alpha_1}, \quad \underbrace{\overbrace{x}^{\phi_{2,1}} \lor \overbrace{y}^{\phi_{2,2}}}_{\alpha_2}, \quad \underbrace{\lnot \overbrace{x}^{\phi_{3}}}_{\alpha_3} \right\}$$
    We aim to construct $\pf(\kb_7)$, i.\,e., an ASP encoding which allows us to retrieve $\iforget(\kb_7)$.
    %
    % We label each atom occurrence and get the following labeled knowledge base $\kb_7^l$:
    % $$ \kb_7^l = \{ x^1 \land y^1, x^2 \lor y^2, \lnot x^3 \} $$
    To begin with, the formulas $\alpha_1$, $\alpha_2$, and $\alpha_3$ are represented using \emph{\texttt{kbMember/1}} (AF1), exactly as in Example \ref{ex:asp-contension}.
    Likewise, the representation of the conjunction (AF3), disjunction (AF4), and negation (AF5) is the same as in Example \ref{ex:asp-contension}.
    We label each atom occurrence to get $ \kb_7^l = \{ x^1 \land y^1, x^2 \lor y^2, \lnot x^3 \} $, and
    represent each atom occurrence via (AF2):
    % Each atom occurrence is represented via (AF2).
    \begin{align*}
        & \mathtt{formulaIsAtomOcc}(\phi_{1,1}, x, 1). \qquad \mathtt{formulaIsAtomOcc}(\phi_{1,2}, y, 1). \\
        & \mathtt{formulaIsAtomOcc}(\phi_{2,1}, x, 2). \qquad \mathtt{formulaIsAtomOcc}(\phi_{2,2}, y, 2). \\
        & \mathtt{formulaIsAtomOcc}(\phi_{3}, x, 3). 
    \end{align*}
    The remainder of the logic program, i.e., (AF6)--(AF21), is static.
\end{example}

\subsection{The Hitting Set Inconsistency Measure}\label{sec:asp-algo-hs}

The hitting set inconsistency measure $\ihs(\kb)$ is defined by the size of the minimal hitting set wrt.\ a knowledge base $\kb$, subtracted by $1$.
The maximal size of such a hitting set is determined by the number of formulas in $\kb$.
Further, $\omega_i$ refers to the $i$-th interpretation out of the $|\kb|$ possible interpretations we need to consider, assuming that the interpretations have an arbitrary, but fixed order.
We construct an extended logic program $\ph(\kb)$ which computes $\ihs(\kb)$ as follows.

We begin by representing each element $\varFor \in \kb$ (AH1) as well as each conjunction (AH4), disjunction (AH5), and negation (AH6) in the same manner described in the two preceding sections. % (see also (AH1) and (AH4)--(AH6) in Encoding \ref{tab:asp-hs}).
Atoms are represented as \texttt{atom/1} (AH2), and formulas consisting of individual atoms are represented as \texttt{formulaIsAtom/2} (AH7). %, as introduced in Section \ref{sec:asp-algo-c}.
The two classical truth values are encoded in the same fashion as shown in Section \ref{sec:asp-algo-fb} (AH9).
In addition, the hitting set inconsistency measure requires the use of interpretations. 
Therefore, we add $|\kb|$ interpretations as follows:
\begin{align*}
   \texttt{interpretation(1..}|\kb|\texttt{).} \tag{AH3}
\end{align*}
% Note that “..” is an interval operator indicating that \texttt{interpretation/1} gets instantiated with each number between $1$ and $|\kb|$.

Again, we need to model that each atom takes exactly one (classical) truth value. 
However, as opposed to the two previously discussed measures, wrt.\ $\ihs$ we need to take into account the formulas' interpretations.
Consequently, each atom requires a unique evaluation wrt.\ each interpretation.
We model this by introducing the predicate \texttt{truthValueInt/3} which represents the truth value of an atom wrt.\ a specific interpretation:
\begin{align*}
    \texttt{1\{truthValueInt(A,I,T) : tv(T)\}1 :- } & \texttt{atom(A),} \\
    & \texttt{interpretation(I).} \tag{AH10}
\end{align*}

The connective encodings for each (sub)formula follow classical propositional entailment.
Hence, the rules which model conjunction, disjunction, and negation wrt.\ $\ihs$ are essentially the same as those wrt.\ $\iforget$, but with an additional reference to an interpretation:
% \begingroup
% \allowdisplaybreaks
% \begin{align*}
%     & \texttt{truthValueInt(F,I,} t \texttt{) :- } \\
%         & \qquad \qquad \texttt{conjunction(F,G,H),} \\
%         & \qquad \qquad \texttt{interpretation(I),} \\
%         & \qquad \qquad \texttt{truthValueInt(G,I,} t \texttt{),}\\
%         & \qquad \qquad \texttt{truthValueInt(H,I,} t \texttt{).} \tag{AH11} \\
%     & \texttt{truthValueInt(F,I,} f \texttt{) :- } \\
%         & \qquad \qquad \texttt{conjunction(F,\_,\_),} \\
%         & \qquad \qquad \texttt{interpretation(I),} \\
%         & \qquad \qquad \texttt{not truthValueInt(F,I,} t \texttt{).} \tag{AH12} \\
%     & \texttt{truthValueInt(F,I,} f \texttt{) :- } \\
%         & \qquad \qquad \texttt{disjunction(F,G,H),} \\
%         & \qquad \qquad \texttt{interpretation(I),} \\
%         & \qquad \qquad \texttt{truthValueInt(G,I,} f \texttt{),} \\
%         & \qquad \qquad \texttt{truthValueInt(H,I,} f \texttt{).} \tag{AH13} \\
%     & \texttt{truthValueInt(F,I,} t \texttt{) :- } \\
%         & \qquad \qquad \texttt{disjunction(F,\_,\_),} \\
%         & \qquad \qquad \texttt{interpretation(I),} \\
%         & \qquad \qquad \texttt{not truthValueInt(F,I,} f \texttt{).} \tag{AH14} \\
%     & \texttt{truthValueInt(F,I,} t \texttt{) :- } \\
%         & \qquad \qquad \texttt{negation(F,G),} \\
%         & \qquad \qquad \texttt{truthValueInt(G,I,}f \texttt{).} \tag{AH15} \\
%     & \texttt{truthValueInt(F,I,} f \texttt{) :- } \\
%         & \qquad \qquad \texttt{negation(F,G),} \\
%         & \qquad \qquad \texttt{truthValueInt(G,I,}t \texttt{).} \tag{AH16}
% \end{align*}
% \endgroup
\begingroup
\allowdisplaybreaks
\begin{align*}
    \texttt{truthValueInt(F,I,} t \texttt{) :- } & \texttt{conjunction(F,G,H),} \\
        & \texttt{interpretation(I),} \\
        & \texttt{truthValueInt(G,I,} t \texttt{),}\\
        & \texttt{truthValueInt(H,I,} t \texttt{).} \tag{AH11} \\
    \texttt{truthValueInt(F,I,} f \texttt{) :- } & \texttt{conjunction(F,\_,\_),} \\
        & \texttt{interpretation(I),} \\
        & \texttt{not truthValueInt(F,I,} t \texttt{).} \tag{AH12} \\
    \texttt{truthValueInt(F,I,} f \texttt{) :- } & \texttt{disjunction(F,G,H),} \\
        & \texttt{interpretation(I),} \\
        & \texttt{truthValueInt(G,I,} f \texttt{),} \\
        & \texttt{truthValueInt(H,I,} f \texttt{).} \tag{AH13} \\
    \texttt{truthValueInt(F,I,} t \texttt{) :- } & \texttt{disjunction(F,\_,\_),} \\
        & \texttt{interpretation(I),} \\
        & \texttt{not truthValueInt(F,I,} f \texttt{).} \tag{AH14} \\
    \texttt{truthValueInt(F,I,} t \texttt{) :- } & \texttt{negation(F,G),} \\
        & \texttt{truthValueInt(G,I,}f \texttt{).} \tag{AH15} \\
    \texttt{truthValueInt(F,I,} f \texttt{) :- } & \texttt{negation(F,G),} \\
        & \texttt{truthValueInt(G,I,}t \texttt{).} \tag{AH16}
\end{align*}
\endgroup
If a formula is composed of an individual atom, it needs to be assigned the same truth value as the atom itself wrt.\ an interpretation:
% \begin{align*}
%     & \texttt{truthValueInt(F,I,T) :- } \\
%         & \qquad \qquad \texttt{formulaIsAtom(F,G),} \\
%         & \qquad \qquad \texttt{truthValueInt(G,I,T),} \\
%         & \qquad \qquad \texttt{interpretation(I),} \\
%         & \qquad \qquad \texttt{tv(T).} \tag{AH17}
% \end{align*}
\begin{align*}
    \texttt{truthValueInt(F,I,T) :- } & \texttt{formulaIsAtom(F,G),} \\
        & \texttt{truthValueInt(G,I,T),} \\
        & \texttt{interpretation(I),} \\
        & \texttt{tv(T).} \tag{AH17}
\end{align*}

In order to meet the definition of the hitting set measure, each formula $\varFor \in \kb$ must evaluate to $t$ wrt.\ at least one interpretation included in the hitting set.
We model this by using the predicate \texttt{truthValue/2} as follows:
%, which similarly used with $\icont$ (Section \ref{sec:asp-algo-c}) and $\iforget$ (Section \ref{sec:asp-algo-fb}).
% \begin{align*}
%     & \texttt{truthValue(F,} t \texttt{) :- } \\
%         & \qquad \qquad \texttt{truthValueInt(F,I,} t \texttt{),} \\
%         & \qquad \qquad \texttt{kbMember(F),} \\
%         & \qquad \qquad \texttt{interpretation(I),} \\
%         & \qquad \qquad \texttt{interpretationActive(I).} \tag{AH18}
% \end{align*}
\begin{align*}
    \texttt{truthValue(F,} t \texttt{) :- } & \texttt{truthValueInt(F,I,} t \texttt{),} \\
        & \texttt{kbMember(F),} \\
        & \texttt{interpretation(I),} \\
        & \texttt{interpretationActive(I).} \tag{AH18}
\end{align*}
The predicate \texttt{interpretationActive/1} serves the purpose of marking which interpretations are included in the final hitting set. 
Later on, we minimize the number of those ``active interpretations''. % included in the hitting set.
However, to ensure that at least one (and at most $|\kb|$) interpretations are included in the hitting set, we require the following cardinality constraint:
\begin{align*}
    \texttt{1\{interpretationActive(I) : interpretation(I)\}}|\kb| \texttt{.} \tag{AH8}
\end{align*}
To avoid symmetries introduced by (AH8), we add another integrity constraint:
% To ensure that ...
% -> es sollen immer die Interpretationen 1...x aktiv sein, nicht irgendwelche beliebigen 
\begin{align*}
    \texttt{:- interpretationActive(I), 1\,<\,I, not interpretationActive(I-1).} \tag{AH20}
\end{align*}

If a formula $\varFor \in \kb$ is not $t$ wrt.\ any interpretation, it is $f$. 
In other words, if \texttt{truthValue(}$\varFor$\texttt{,}$t$\texttt{)} is not included in the answer set, then $\varFor$ is false with regard to all interpretations:
\begin{align*}
    \texttt{truthValue(F,}f \texttt{) :- kbMember(F), not truthValue(F,}t \texttt{).} \tag{AH19}
\end{align*}
However, since every formula in $\kb$ must be satisfied by at least one interpretation, we need to include the following integrity constraint:
\begin{align*}
    \texttt{:- truthValue(F,}f \texttt{),kbMember(F).} \tag{AH21}
\end{align*}

At last, we minimize the number of interpretations that are required to be ``active'' in order to satisfy all formulas in the given knowledge base:
\begin{align*}
    \texttt{\#minimize\{1,I : interpretationActive(I)\}.} \tag{AH22}
\end{align*}

We define $\ph(\kb)$ to be the extended logic program specified by the union of all rules defined in (AH1)--(AH22) (see Encoding \ref{tab:asp-hs} in Appendix~\ref{app:proof-asp-hs} for a complete list of all rules).
% Further, let $\ph'(\kb)$ be equivalent to $\ph(\kb)$, but without the minimize statement (AH21).
Let $M$ be an answer set of $\ph(\kb)$.
% We define the set of all instances of \texttt{interpretationActive(}$\omega_i$\texttt{)} included in $M$ as
% $$ A(M) = \{ \texttt{interpretationActive(}\omega_i\texttt{)} \in M \} $$
% with $i \in \{1, \ldots, |\kb|\}$.
% In other words, $A(M)$ describes the set of ASP representations of those interpretations $\omega_i$ that are included in the answer set.
We define the set of interpretations $\omega_i$ represented in $M$ as 
$ \Omega(M) = \{ \omega_i \mid \texttt{interpretationActive(}\omega_i^{\mathrm{asp}}\texttt{)} \in M \} $, with $\omega_i^{\mathrm{asp}}$ being an ASP representation of $\omega_i$.
The proof of the following theorem can be found in Appendix \ref{app:proof-asp-hs}.
\begin{theorem}\label{thm:asp-hs}
    Let $M_o$ be an optimal answer set of $\ph(\kb)$.
    Then $|\Omega(M_o)| -1 = \ihs(\kb)$.
    If no answer set of $\ph(\kb)$ exists, $\ihs(\kb) = \infty$.
\end{theorem}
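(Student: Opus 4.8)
The plan is to establish a two-way correspondence between the answer sets of $\ph(\kb)$ and the families of interpretations that jointly satisfy every formula in $\kb$, and then read off the claim from the minimize statement (AH21). The technical core is a correctness lemma for the connective rules: for every answer set $M$ of $\ph(\kb)$, every index $i \in \{1,\ldots,|\kb|\}$, and every subformula $\varSub$ of $\kb$, one has $\texttt{truthValueInt}(\varSub,i,t) \in M$ if and only if the two-valued interpretation $\omega_i$ induced by $M$ (via the assignment fixed by rule (AH10)) satisfies $\varSub$. I would prove this by structural induction on $\varSub$: the base case is rule (AH17) together with (AH10), and the inductive cases follow from the rule pairs (AH11)/(AH12), (AH13)/(AH14), and (AH15)/(AH16), after checking that these rules pin down a unique classical value under the stable-model semantics. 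A companion observation is that, by (AH18), (AH19) and the lemma, $\texttt{truthValue}(\varFor,t) \in M$ holds precisely when some \emph{active} interpretation $\omega_i \in \Omega(M)$ satisfies $\varFor$.

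For soundness, let $M_o$ be an optimal answer set. The integrity constraint (AH20) forbids $\texttt{truthValue}(\varFor,f)$, so by (AH19) every $\varFor \in \kb$ has $\texttt{truthValue}(\varFor,t) \in M_o$; the companion observation then yields that each $\varFor$ is satisfied by some $\omega_i \in \Omega(M_o)$. Hence $\Omega(M_o)$ is a hitting set of $\kb$, and consequently $|\Omega(M_o)| \ge \min\{|H| \mid H \text{ a hitting set of } \kb\}$, i.e. $|\Omega(M_o)| - 1 \ge \ihs(\kb)$. Here I would also record the subtle point that optimality rules out two active indices carrying the same assignment: deactivating one such duplicate (which (AH21) rewards) still satisfies (AH20) while lowering the count, contradicting optimality. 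Therefore the number of active indices minimized by (AH21) equals the cardinality of $\Omega(M_o)$ as a set of \emph{distinct} interpretations.

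For completeness, I would start from a minimal hitting set $H$ with $|H| = \ihs(\kb) + 1$; since a hitting set of size at most $|\kb|$ always exists when $\ihs(\kb) \ne \infty$, we have $|H| \le |\kb|$. I would assign the members of $H$ to the indices $1,\ldots,|H|$, declare exactly these $\texttt{interpretationActive}$, set each $\texttt{truthValueInt}(\cdot,i,\cdot)$ to mirror the corresponding member of $H$ (filling the unused indices $|H|+1,\ldots,|\kb|$ with any total assignment, which (AH10) permits), and close under the connective and $\texttt{truthValue}$ rules. One then checks that this candidate is a stable model satisfying (AH8) and (AH20). This witnesses an answer set with exactly $|H| = \ihs(\kb)+1$ active interpretations, so by optimality $|\Omega(M_o)| \le \ihs(\kb) + 1$, i.e. $|\Omega(M_o)| - 1 \le \ihs(\kb)$. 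Combining the two inequalities gives $|\Omega(M_o)| - 1 = \ihs(\kb)$.

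Finally, for the infinity case I would argue that an answer set of $\ph(\kb)$ exists if and only if $\kb$ admits a hitting set of size at most $|\kb|$, which in turn holds if and only if every formula of $\kb$ is individually satisfiable (pick one model per formula), i.e. if and only if $\ihs(\kb) \ne \infty$; the contrapositive is exactly the second claim. The hard part will be the correctness lemma: one must verify that the default-negation rules (e.g. (AH12), (AH14)) together with (AH10) force a single, classically correct truth value for every subformula under every interpretation in \emph{every} stable model, and that no spurious answer sets arise — this is where the stable-model semantics, rather than mere classical models of the rules, must be invoked carefully.
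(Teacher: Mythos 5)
Your proof is correct, and it is structured differently from the paper's. The paper's argument (Appendix~\ref{app:proof-asp-hs}) consists of Lemma~\ref{lem:asp-hs-1} (no answer set iff some formula is contradictory) and Lemma~\ref{lem:asp-hs-2} (every answer set $M$ of $\ph'(\kb)$ yields a hitting set $\Omega(M)$), after which the theorem is concluded by asserting that the minimize statement ``selects the smallest possible hitting set.'' That is only the soundness half: it gives $|\Omega(M_o)|-1 \geq \ihs(\kb)$, and the step from ``minimal over answer sets'' to ``minimal over all hitting sets'' tacitly assumes that some answer set realizes a minimum-cardinality hitting set. You make that assumption explicit and discharge it with your completeness construction (building a stable model from a minimal hitting set $H$ with exactly $|H|$ active indices), and then combine the two inequalities --- so your proof actually closes a gap the paper leaves open, mirroring what the paper does do for the contension measure via the pair Lemma~\ref{lem:asp-c-1}/Lemma~\ref{lem:asp-c-2}. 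Your structural-induction lemma relating \texttt{truthValueInt} to classical satisfaction is likewise carried out in more detail than the paper's appeal to ``rules (AH11)--(AH17) represent entailment,'' and your observation that optimality forbids two active indices carrying the same assignment is a genuine subtlety: without it, $|\Omega(M_o)|$ (a set of interpretations) need not equal the quantity the \texttt{\#minimize} statement actually counts (active indices). The cost of your route is the extra work of verifying stability of the constructed model in the completeness step; what it buys is a proof that does not rest on the unproven reverse correspondence.
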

% The proof of the theorem above can be found in Appendix \ref{app:proof-asp-hs}.

\begin{example} \label{ex:asp-hs}
    Consider again the knowledge base $\kb_7$ from Example \ref{ex:sat-contension}:
    $$\mathcal{K}_7 = \left\{\underbrace{\overbrace{x}^{\phi_{1,1}} \land \overbrace{y}^{\phi_{1,2}}}_{\alpha_1}, \quad \underbrace{\overbrace{x}^{\phi_{2,1}} \lor \overbrace{y}^{\phi_{2,2}}}_{\alpha_2}, \quad \underbrace{\lnot \overbrace{x}^{\phi_{3}}}_{\alpha_3} \right\}$$
    Our aim is to construct $\ph(\kb_7)$, i.\,e., an ASP encoding which allows us to retrieve the $\ihs(\kb_7)$.
    The formulas in $\kb_7$ are represented the same way as in Example \ref{ex:asp-contension} and \ref{ex:asp-fb}, using \emph{\texttt{kbMember/1}} (AH1).
    Atoms are represented like in Example \ref{ex:asp-contension} as well (by using \emph{\texttt{atom/1}}).
    Likewise, subformulas consisting of individual atoms (AH7) are modeled as in Example \ref{ex:asp-contension}.
    Moreover, the conjunction (AH4), disjunction (AH5), and negation (AH6) are also defined as before.
    However, we now need to define $|\kb_7| = 3$ interpretations via~(AH3):
    $$ \mathtt{interpretation(1 .. 3).} $$
    Also, we ensure that at least one and at most $|\kb_7| = 3$ interpretations are included in the hitting set (AH8):
    $$ \mathtt{1\{ interpretationActive(X) : interpretation(X)\}3.} $$
    The remainder of the logic program, i.e., (AH9)--(AH22), is static.
\end{example}

\subsection{The Max-Distance Inconsistency Measure}\label{sec:asp-algo-dmax}

% The three distance-based inconsistency measures covered in this paper all aim to find an interpretation with an ``optimal'' distance to the models of the formulas contained in a given knowledge base $\kb$.
% At first we examine the max-distance inconsistency measure, which 
Recall that, given a knowledge base $\kb$, the max-distance inconsistency measure $\imdalal$ aims to find an interpretation that has a minimal maximum distance to the models of the formulas $\varFor \in \kb$.
We construct an extended logic program $\pmax (\kb)$ that calculates $\imdalal (\kb)$ as described in the following.

Atoms are represented as \texttt{atom/1}, as described previously in the context of $\icont$, $\iforget$, and $\ihs$ (see also (ADM2) in Encoding \ref{tab:asp-dmax}).
Conjunctions, disjunctions, negations, and formulas consisting of single atoms are handled exactly as presented in the previous section (see (ADM4)--(ADM7) and (ADM11)--(ADM17) in Encoding \ref{tab:asp-dmax}). 
% In the context of $\imdalal$, we (ADM4)--(ADM7) correspond exactly to (AH4)--(AH7), and (ADM11)--(ADM17) correspond to (AH11)--(AH17).
% Section \ref{sec:asp-algo-hs}, which covers the ASP encoding of $\ihs$ (see (ADM4)--(ADM7), and (ADM11)--(ADM17)).
Truth values are again defined by \texttt{tv/1} (ADM9), and unique atom evaluation is also expressed as for $\ihs$ (ADM10).
Moreover, we use the predicate \texttt{interpretation/1} %, which was introduced for $\ihs$ as well, 
to represent $|\kb| + 1$ interpretations:
\begin{align*}
    \texttt{interpretation(0..}|\kb| \texttt{).} \tag{ADM3}
\end{align*}
As opposed to the encoding of $\ihs$, which only requires $|\kb|$ interpretations, the encoding of $\imdalal$ requires $|\kb|+1$ interpretations. 
% This is due to the fact that the maximum finite value of $\ihs(\kb)$ is limited by the number of formulas in $\kb$.
% $\imdalal$, on the other hand, aims to find an interpretation with the smallest maximum distance to the models of the formulas in $\kb$. 
This is due to the fact that for $\imdalal$ 
% Consequently, 
we first need to provide $|\kb|$ interpretations in order to ensure that each formula is satisfied by at least one interpretation (i.\,e., that we have at least one model for each formula).
However, the interpretation with the smallest maximum distance to the formulas' models is not necessarily a model of one of the formulas itself. 
Thus, we need to provide one additional interpretation and end up with a total of $|\kb|+1$ interpretations.

As stated previously, each $\varFor \in \kb$ must be satisfied by at least one interpretation.
To achieve this, we demand that the $i$-th formula in $\kb$ must be satisfied by the $i$-th interpretation.
In order to model this in ASP, we first add a fact for every $\varFor \in \kb$ which additionally incorporates an index $i$:
\begin{align*}
    \texttt{kbMember(} \varFor \texttt{,} i \texttt{).} \tag{ADM1}
\end{align*}
% Now we can formulate a rule which connects the truth value \texttt{T} of a formula \texttt{F} under an interpretation \texttt{I} (modeled by \texttt{truthValueInt/3}) to the formula index \texttt{L}, given that the indexed formula is actually an element of $\kb$ (represented by \texttt{kbMember/2}).
% \begin{align*}
%     & \texttt{truthValueInt(F,L,I,T) :- } \\
%         & \qquad \qquad \texttt{kbMember(F,L),} \\
%         & \qquad \qquad \texttt{interpretation(I),} \\
%         & \qquad \qquad \texttt{tv(T),} \\
%         & \qquad \qquad \texttt{truthValueInt(F,I,T).} \tag{ADM18}
% \end{align*}
Further, we need to add an integrity constraint which ultimately ensures that the formula with index $i$ cannot be set to $f$ (i.e., must evaluate to $t$) under the $i$-th interpretation:
% \begin{align*}
%     & \texttt{ :- } \\
%     & \qquad \qquad \texttt{truthValueInt(F,L,I,}f \texttt{),} \\
%     & \qquad \qquad \texttt{kbMember(F,L),} \\
%     & \qquad \qquad \texttt{interpretation(I),} \\
%     & \qquad \qquad \texttt{L == I.} \tag{ADM19}
% \end{align*}
\begin{align*}
    \texttt{ :- } & \texttt{truthValueInt(F,I,}f \texttt{),} \\
        & \texttt{kbMember(F,I).} \tag{ADM18} 
\end{align*}

We know that the $i$-th formula in $\kb$ must be satisfied by the $i$-th interpretation.
Thus, each interpretation $\omega_i$ with $i \in \{0, \ldots, |\kb|-1\}$ is a model of at least one formula in $\kb$.
We are now looking for an interpretation $\omega_{|\kb|}$ with the smallest maximum distance to the models of each formula. 
% diff/1 ist hier eine Art ID und spiegelt nicht direkt wider, ob ein bestimmtes Atom "unterschiedlich" ist (schließlich wird immer für jedes Atom ein diff hergeleitet)
% maximal kann I_d^max |At(K)| betragen, da im worst case jedes Atom untersch. ist
%   -> dies wird folgendermaßen modelliert: 
In the worst case, $\imdalal(\kb) = |\atoms(\kb)|$. %, i.\,e., every atom in $\omega_{|\kb|}$ has a different truth value than . 
We represent each possible value as follows.
\begin{align*}
    \texttt{diff(1..X) :- } & \texttt{X = \#count\{A: atom(A)\}.} \tag{ADM19}
\end{align*}
% Nun können wir *pro Interpretation I* den Wert d berechnen 
% -> das #count-Aggregate berechnet, für wie viele Atome der Wahrheitswert bzgl. I und |K| unterschiedlich ist
%   -> es wird also die Dalal-Distanz zw. I und |K| berechnet 
% Um d(X) für jedes diff(X) herzuleiten, müsste die Dalal-Distanz jeden Wert zw. 1 und |\kb| genau ein mal annehmen 
%   -> in diesem Fall wäre der IM-Wert auch maximal, da für jedes verschiedene X 1 addiert wird 
% Note that \texttt{\#count} is a so-called \textit{aggregate}, a built-in function that is applied to sets.
% Specifically, the \texttt{\#count} aggregate, as the name suggests, allows for counting the number of elements in a set for which certain conditions hold. 
By means of the \texttt{\#count} aggregate above, we count the number of (ground) \texttt{atom/1} instances.
We can now calculate the distance between $\omega_{|\kb|}$ and the models of each formula using another \texttt{\#count} aggregate:
\begin{align*}
    \texttt{d(X) :- } & \texttt{diff(X),} \\
        & \texttt{interpretation(I),} \\
        & \texttt{X <= \#count\{A: atom(A), } \\
        & \qquad \texttt{truthValueInt(A,I,T),} \\
        & \qquad \texttt{not truthValueInt(A,}|\kb|\texttt{,T)\}.} \tag{ADM8}
\end{align*}
% Indem man zählt, wie viele verschiedene Werte X annehmen kann und diesen Wert minimiert, hat man auch gleichzeitig den maximalen Wert erhalten 
% -> diff ist wichtig, damit kein Wert doppelt gezählt wird 
Essentially, \texttt{d/1} checks how many different values the distance between $\omega_{|\kb|}$ and each model corresponding to a formula can take. 
% Essentially, \texttt{d/1} represents the different distances between $\omega_{|\kb|}$ and the models of each formula (i.\,e., each $\omega_i$, $0 \leq i < |\kb|$). 
Since \texttt{diff/1} ensures that \texttt{d/1} is only derived once for a given value, we can count how many different values (i.\,e., different distances) are calculated.
By minimizing this value, we indirectly get the minimal maximum distance: % (with \texttt{diff/1} ensuring that \texttt{d/1} is only derived once for a given value):
\begin{align*}
    % \texttt{\#minimize\{1,X : d(X)\}.} \tag{ADM20}
    \texttt{\#minimize\{1,X : d(X)\}.} \tag{ADM20}
\end{align*}

Let $\pmax$ be the extended logic program specified by the union of rules (ADM1)--(ADM20) (see Encoding \ref{tab:asp-dmax} in Appendix~\ref{app:proof-asp-max-dist} for a complete overview).
Further, let $D^{\max}_M = \{n \mid \texttt{d(}n\texttt{)} \in M, n\in \mathbb{N}\}$ wrt.\ an answer set $M$.
The proof of the theorem below is given in Appendix~\ref{app:proof-asp-max-dist} as well.

% \begin{theorem}\label{thm:asp-maxdalal}
%     Let $M_o$ be an optimal answer set of $\pmax(\kb)$.
%     Then $D^{\max}_{M_o} = \imdalal(\kb)$ with \texttt{dMax(}$D^{\max}_{M_o}$\texttt{)} $\in M_o$.
%     If no answer set of $\pmax(\kb)$ exists, $\imdalal(\kb) = \infty$.
% \end{theorem}
\begin{theorem}\label{thm:asp-maxdalal}
    Let $M_o$ be an optimal answer set of $\pmax(\kb)$.
    Then $|D^{\max}_{M_o}| = \imdalal(\kb)$.
    % Then $\max\{D^{\max}_{M_o}\} = \imdalal(\kb)$.
    If no answer set of $\pmax(\kb)$ exists, $\imdalal(\kb) = \infty$.
\end{theorem}

\begin{example} \label{ex:asp-d-max}
    Consider again the knowledge base $\kb_7$ from Example \ref{ex:sat-contension}:
    $$\mathcal{K}_7 = \left\{\underbrace{\overbrace{x}^{\phi_{1,1}} \land \overbrace{y}^{\phi_{1,2}}}_{\alpha_1}, \quad \underbrace{\overbrace{x}^{\phi_{2,1}} \lor \overbrace{y}^{\phi_{2,2}}}_{\alpha_2}, \quad \underbrace{\lnot \overbrace{x}^{\phi_{3}}}_{\alpha_3} \right\}$$
    In this example, we aim to construct $\pmax(\kb_7)$, i.\,e., an ASP encoding which allows us to retrieve $\imdalal(\kb_7)$.
    Following (ADM1), we assign each formula in $\kb_7$ an index $i \in \{0, \ldots, |\kb_7|-1\}$, i.\,e., $i \in \{0, \ldots, 2 \}$:
        $$ \mathtt{kbMember(}\alpha_1 \mathtt{, 0).} \qquad \mathtt{kbMember(}\alpha_2 \mathtt{, 1).} \qquad \mathtt{kbMember(}\alpha_3 \mathtt{, 2).} $$
    Moreover, we need to define $|\kb_7|+1$ (i.e., $4$) interpretations (ADM3):
        $$ \mathtt{interpretation(0..3).} $$
    Atoms are represented using \emph{\texttt{atom/1}} (ADM2), as we saw in Example \ref{ex:asp-contension} and \ref{ex:asp-hs}.
    The same applies to the conjunction (ADM4), disjunction (ADM5), and negation (ADM6), as well as all (sub)formulas consisting of individual atoms (ADM7).
    
    % We define maximum distance between the interpretation with index $|\kb_7| = 3$ and the models of $\kb_7$ (ADM8):
    % % \begin{align*}
    % %     & \texttt{dMax(X) :- } \\
    % %     & \qquad \qquad \texttt{X = \#max\{Y : d(I,3,Y), interpretation(I)\}, } \\
    % %     & \qquad \qquad \texttt{X >= 0.} 
    % % \end{align*}
    % \begin{align*}
    %     \texttt{dMax(X) :- } & \texttt{X = \#max\{Y : d(I,3,Y), interpretation(I)\}, } \\
    %     & \texttt{X >= 0.} 
    % \end{align*}
    
    The remainder of the program, i.\,e., (ADM8)--(ADM20), is static.
\end{example}

\subsection{The Sum-Distance Inconsistency Measure}\label{sec:asp-algo-dsum}

% The sum-distance inconsistency measure $\isdalal$ is almost identical to the max-distance inconsistency measure $\imdalal$.
% The only difference between the two measures is the definition of the ``optimal'' distance.
The difference between the sum-distance inconsistency measure $\isdalal$ and the max-distance inconsistency measure $\imdalal$ is the definition of the ``optimal'' distance.
While $\imdalal$ aims to find the smallest maximum distance, $\isdalal$ seeks to find the smallest sum of all distances between an interpretation $\omega_{|\kb|}$ and the models of all $\varFor \in \kb$ (i.\,e., $\omega_i$ with $0 \leq i < |\kb|$).
% Consequently, the ASP encodings of the two measures are almost identical as well.
% We simply replace \texttt{dMax/1} (as introduced in the previous Section \ref{sec:asp-algo-dmax}) by the predicate \texttt{dSum/1}, which we define as follows:
% % \begin{align*}
% %     & \texttt{dSum(X) :- }\\
% %         & \qquad \qquad \texttt{X = \#sum\{Y,I : d(I,} |\kb| \texttt{,Y), interpretation(I)\}, } \\
% %         & \qquad \qquad \texttt{X >= 0.} \tag{ADS8}
% % \end{align*}
% \begin{align*}
%     \texttt{dSum(X) :- } & \texttt{X = \#sum\{Y,I : d(I,} |\kb| \texttt{,Y), interpretation(I)\}, } \\
%         & \texttt{X >= 0.} \tag{ADS8}
% \end{align*}
% Besides, \texttt{dMax(X)} in the final minimize statement (ADM22) is replaced by \texttt{dSum(X)}, yielding
% \begin{align*}
%     \texttt{\#minimize\{X : dSum(X)\}.} \tag{ADS21}
% \end{align*}
As opposed to $\imdalal$, we do not need to model the different distances explicitly---we can model the summation directly by means of the following minimize statement:
\begin{align*}
    \texttt{\#minimize\{1,A,I: }&\texttt{atom(A),} \\
    & \texttt{truthValueInt(A,I,T),} \\
    & \texttt{not truthValueInt(A,}|\kb|\texttt{,T)\}.} \tag{ADS8}
\end{align*}
Intuitively, we consider all models $\omega_i$ and sum up the number of differing atom valuations wrt.\ $\omega_{|\kb|}$.
(Each differing atom valuation per $\omega_i$ is counted as $1$.)
Due to the minimization, we get the minimal sum of distances.

% TODO: 
% - anmerken, welche Regeln aus maxdalal nicht mehr benötigt werden?

Let $\psum$ be the extended logic program specified by the union of rules (ADS1)--(ADS18) (see Encoding \ref{tab:asp-dsum} in Appendix~\ref{app:proof-asp-sum-dist} for a complete overview).
Further, let $M$ be an answer set of $\psum$, and let $\theta \in \{t, f\}$.
We define 
$$D^\Sigma_M = \{ \mathrm{diff}_M(\varAt,i) \mid \texttt{truthValueInt(}\varAt, i, \theta\texttt{)} \in M, \texttt{truthValueInt(}\varAt, |\kb|, \theta\texttt{)} \notin M \}.$$
% with $\theta \in \{t, f\}$.
The proof of the following theorem is provided in Appendix \ref{app:proof-asp-sum-dist}.

% \begin{theorem}\label{thm:asp-sumdalal}
%     Let $M_o$ be an optimal answer set of $\psum(\kb)$.
%     Then $D^{\Sigma}_{M_o} = \isdalal(\kb)$ with \texttt{dSum(}$D^{\Sigma}_{M_o}$\texttt{)} $\in M_o$.
%     If no answer set of $\psum(\kb)$ exists, $\isdalal(\kb) = \infty$.
% \end{theorem}
\begin{theorem}\label{thm:asp-sumdalal}
    Let $M_o$ be an optimal answer set of $\psum(\kb)$.
    Then $|D^{\Sigma}_{M_o}| = \isdalal(\kb)$.
    If no answer set of $\psum(\kb)$ exists, $\isdalal(\kb) = \infty$.
\end{theorem}

\begin{example} \label{ex:asp-d-sum}
    Consider again the knowledge base $\kb_7$ from Example \ref{ex:sat-contension}:
    $$\mathcal{K}_7 = \left\{\underbrace{\overbrace{x}^{\phi_{1,1}} \land \overbrace{y}^{\phi_{1,2}}}_{\alpha_1}, \quad \underbrace{\overbrace{x}^{\phi_{2,1}} \lor \overbrace{y}^{\phi_{2,2}}}_{\alpha_2}, \quad \underbrace{\lnot \overbrace{x}^{\phi_{3}}}_{\alpha_3} \right\}$$
    In this example, we aim to construct $\psum(\kb_7)$, i.\,e., an ASP encoding which allows us to retrieve the sum-distance inconsistency value of $\kb_7$.
    The formulas in $\kb_7$, the atoms in $\atoms(\kb_7)$, the interpretations, the conjunction, the disjunction, the negation, and the subformulas consisting of individual atoms are encoded as described in (ADS1)--(ADS7) (see Encoding \ref{tab:asp-dsum}), which corresponds exactly to (ADM1)--(ADM7) wrt.\ $\imdalal$ (see Example \ref{ex:asp-d-max}).
    % Consequently, the resulting rules are the same as in Example \ref{ex:asp-d-max}.
    %
%     Following (ADS8), we define the sum of distances between the interpretation with index $|\kb_7|=3$ and the models of $\kb_7$:
% %     \begin{align*}
% %     & \texttt{dSum(X) :- } \\
% %         & \qquad \qquad \texttt{X = \#sum\{Y,I : d(I,3,Y), interpretation(I)\}, } \\
% %         & \qquad \qquad \texttt{X >= 0.} 
% % \end{align*}
%     \begin{align*}
%         \texttt{dSum(X) :- } & \texttt{X = \#sum\{Y,I : d(I,3,Y), interpretation(I)\}, } \\
%             & \texttt{X >= 0.} 
%     \end{align*}
    %
    Since $|\kb| = 3$, the minimize statement (ADM8) is expressed as follows.
    \begin{align*}
        \emph{\texttt{\#minimize\{1,A,I: }}&\mathtt{atom(A),} \\
        & \mathtt{truthValueInt(A,I,T),} \\
        & \mathtt{not truthValueInt(A,3,T)\}.} 
    \end{align*}
    
    The remaining rules, i.e., (ADS9)--(ADS18), are static.
\end{example}

\subsection{The Hit-Distance Inconsistency Measure}\label{sec:asp-algo-dhit}

% The hit-distance inconsistency measure $\ihdalal(\kb)$ wrt.\ a knowledge base $\kb$ aims to find the smallest number of distances between an interpretation and the models of each $\varFor \in \kb$ which are greater than $0$.
% In other words, $\ihdalal$ is equal to the minimal number of formulas that need to be removed from the knowledge base in order to render it consistent.
As for the SAT-based approach, in order to model the hit-distance inconsistency measure $\ihdalal$ in ASP, we use the characterization that $\ihdalal(\kb)$ is equal to the minimal number of formulas that need to be removed from the given knowledge base $\kb$ in order to render it consistent.
Utilizing the latter characterization, we construct an extended logic program $\phit$ which calculates $\ihdalal(\kb)$ as described below.

To start with, atoms, conjunctions, disjunctions, negations, and formulas consisting of single atoms are represented exactly as presented before wrt.\ $\imdalal$ and $\isdalal$ (see also (ADH2)--(ADH6) in Encoding \ref{tab:asp-dhit}).
Likewise, we define the two truth values via \texttt{tv/1} (ADH7).
Moreover, we need to ensure once again that each atom is assigned a distinct truth value.
This is realized in the same manner as for $\icont$ and $\iforget$ (ADH8).

As we do not need to take into account multiple interpretations at the same time, the assignment of truth values as well as the functionality of the connectives does not need to include a representation of the concept of interpretations. % as well. 
% Thus, we simply need to encode classical propositional entailment.
Thus, conjunction, disjunction, and negation are therefore modeled in the same manner as presented for $\iforget$ (ADH9)--(ADH14).
If a formula consists of an individual atom, its evaluation is modeled in the same way as presented for $\icont$ (ADH15).

As opposed to the encodings of the other two distance-based measures, elements of the given knowledge base $\kb$ do not require an index in the encoding of $\ihdalal$.
Thus, instead of representing $\varFor \in \kb$ in combination with an index $i$ as \texttt{kbMember(}$\varFor$\texttt{,}$i$\texttt{)}, we simply use \texttt{kbMember(}$\varFor$\texttt{)} (ADH1), like we saw previously in the encodings of $\icont$, $\iforget$, and $\ihs$.

% Because of the characterization of $\ihdalal$ as the minimal number of formulas that need to be removed from $\kb$ to make it consistent, we do not need to include the calculation of the Dalal distance in $\phit$.
% Instead, 
The objective is to minimize the number of formulas in $\kb$ which are evaluated to $f$.
To achieve this, we first define a rule that extracts the truth values of those formulas which are elements of $\kb$.
% \begin{align*}
%     & \texttt{truthValueKbMember(F,T) :- } \\
%         & \qquad \qquad \texttt{kbMember(F),} \\
%         & \qquad \qquad \texttt{tv(T),} \\
%         & \qquad \qquad \texttt{truthValueInt(F,T).} \tag{ADH16}
% \end{align*}
\begin{align*}
    \texttt{truthValueKbMember(F,T) :- } & \texttt{kbMember(F),} \\
        & \texttt{tv(T),} \\
        & \texttt{truthValue(F,T).} \tag{ADH16}
\end{align*}
% Now we can find the minimal number of formulas which are evaluated to $f$---i.e., the minimal number of formulas which need to be removed in order to make $\kb$ consistent---by using the following minimize statement:
Now we can minimize the number of formulas which are evaluated to $f$---i.e., we minimize the number of formulas which need to be removed in order to make $\kb$ consistent.
\begin{align*}
    & \texttt{\#minimize\{1,F : truthValueKbMember(F,}f\texttt{)\}.} \tag{ADH17} \\
\end{align*}

Let $\phit$ be the extended logic program specified by the union of rules (ADH1)--(ADH17) (see Encoding \ref{tab:asp-dhit} in Appendix~\ref{app:proof-asp-hit-dist} for a complete overview).
Further, with $M$ being an answer set, we define
    $ K_M = \{ \varFor^\mathrm{asp} \mid \varFor \in \kb, \texttt{truthValueKbMember(}\varFor^\mathrm{asp},f \texttt{)} \in M \} $
with $\varFor^\mathrm{asp}$ being an ASP representation of $\varFor$.
The proof of the theorem below can also be found in Appendix \ref{app:proof-asp-hit-dist}.

\begin{theorem}\label{thm:asp-hitdalal}
    Let $M_o$ be an optimal answer set of $\phit(\kb)$.
    Then $|K_{M_o}| = \ihdalal(\kb)$.
\end{theorem}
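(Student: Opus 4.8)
The plan is to establish a correspondence between the answer sets of $\phit(\kb)$ and the classical two-valued interpretations over $\atoms(\kb)$, and then to translate the optimization objective into the removal-based characterization of $\ihdalal$. Note that, unlike the encodings of $\imdalal$ and $\isdalal$, the program $\phit$ reasons about a single classical interpretation, so no indexing over interpretations is involved.

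First I would prove a faithfulness lemma: every answer set $M$ of $\phit(\kb)$ induces a unique interpretation $\omega_M \in \Omega(\atoms(\kb))$, given by $\omega_M(\varAt) = t$ iff the truth-value predicate assigns $t$ to $\varAt$ in $M$, and conversely every interpretation arises from exactly one answer set. The choice rule (ADH8) guarantees that each atom receives exactly one truth value, so $\omega_M$ is well defined. The core of the lemma is an induction over the structure of the subformulas $\varSub \in \mathsf{sub}(\kb)$ showing that the truth value derived for $\varSub$ in $M$ equals the classical evaluation of $\varSub$ under $\omega_M$; the base case is handled by the atom rule (ADH15), and the inductive steps by the connective rules (ADH9)--(ADH14).

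Next I would read off the objective. By (ADH16), the predicate \texttt{truthValueKbMember}$(\varFor,f)$ holds in $M$ exactly for those $\varFor \in \kb$ with $\omega_M \not\models \varFor$, so $|K_M| = |\{\varFor \in \kb \mid \omega_M \not\models \varFor\}|$. Since the correspondence between answer sets and interpretations preserves this count, the minimize statement (ADH17) forces an optimal answer set $M_o$ to realise
$$|K_{M_o}| = \min_{\omega \in \Omega(\atoms(\kb))} |\{\varFor \in \kb \mid \omega \not\models \varFor\}|.$$
Finally I would connect this quantity to $\ihdalal(\kb)$ via the characterization stated in Section~\ref{sec:inc-measurement} (after \citet{grant2017}), namely that $\ihdalal(\kb)$ is the minimal number of formulas whose removal makes $\kb$ consistent. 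For any interpretation $\omega$, the set $S_\omega = \{\varFor \in \kb \mid \omega \not\models \varFor\}$ is one whose removal leaves $\kb \setminus S_\omega$ satisfied by $\omega$, hence consistent; conversely, if removing some $S$ renders $\kb \setminus S$ consistent, then any model $\omega$ of $\kb \setminus S$ satisfies $S_\omega \subseteq S$. Taking minima on both sides yields $\min_\omega |S_\omega| = \ihdalal(\kb)$, which together with the display above gives $|K_{M_o}| = \ihdalal(\kb)$.

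The main obstacle is the faithfulness lemma, and specifically the rules that use default negation to derive the value $f$ for compound formulas (e.g.\ the conjunction-is-false and disjunction-is-false rules). Here one must verify that under the stable-model semantics each subformula receives exactly one of $t,f$ and that this value coincides with the classical truth value, with no spurious or missing derivations; this is essentially because the positive and the default-negation cases of each connective partition all possibilities, so the reduct $\phit(\kb)^M$ has a least model matching the classical bottom-up evaluation. Everything after that lemma is a routine translation between the logic-program objective and the known removal characterization of the measure.
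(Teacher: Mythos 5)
Your proposal is correct and follows essentially the same route as the paper's proof: a faithfulness lemma showing that answer sets of $\phit(\kb)$ correspond to classical interpretations with $K_M$ being exactly the set of formulas falsified by $\omega_M$, followed by the observation that the minimize statement (ADH17) yields the minimum over interpretations, which equals $\ihdalal(\kb)$ by the removal-based characterization. You are in fact somewhat more explicit than the paper on the last step (the two-sided argument that $\min_\omega\lvert S_\omega\rvert$ equals the minimal number of formulas to remove) and on the stable-model subtleties of the default-negation rules, both of which the paper's proof leaves implicit.
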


\begin{example} \label{ex:asp-d-hit}
    Consider again the knowledge base $\kb_7$ from Example \ref{ex:sat-contension}:
    $$\mathcal{K}_7 = \left\{\underbrace{\overbrace{x}^{\phi_{1,1}} \land \overbrace{y}^{\phi_{1,2}}}_{\alpha_1}, \quad \underbrace{\overbrace{x}^{\phi_{2,1}} \lor \overbrace{y}^{\phi_{2,2}}}_{\alpha_2}, \quad \underbrace{\lnot \overbrace{x}^{\phi_{3}}}_{\alpha_3} \right\}$$
    In this example, we aim to construct $\phit(\kb_7)$, i.\,e., an ASP encoding which allows us to retrieve $\ihdalal(\kb_7)$.
    At first, we use \emph{\texttt{kbMember/1}} (ADH1) to represent the formulas $\alpha_1$, $\alpha_2$, and $\alpha_3$ in the same fashion as wrt.\ $\icont$, $\iforget$, and $\ihs$ (see Example \ref{ex:asp-contension}).
    Further, the atoms, the conjunction, the disjunction, the negation, and the subformulas consisting of individual atoms (see (ADH2)--(ADH6)) are represented exactly as wrt.\ $\icont$, $\ihs$, $\imdalal$ and $\isdalal$ (see again Example \ref{ex:asp-contension}).
    The remaining rules, i.e., (ADS7)--(ADS17), are static.
\end{example}

\section{Evaluation}\label{sec:evaluation}

The aim of our experimental evaluation is to compare the proposed SAT-based and ASP-based approaches to each other wrt.\ all six inconsistency measures considered in this work.
We additionally compare our methods to naive baseline algorithms (see Section \ref{sec:baseline-approaches}) in terms of runtime.
Although we can expect both the SAT-based and the ASP-based approaches to be superior to the baseline methods (regarding ASP this has already been demonstrated for $\icont$, $\iforget$, and $\ihs$ \cite{kuhlmann2021algorithms}, and regarding SAT this has been shown for $\icont$ in \cite{kuhlmann2022comparison}), we still draw this comparison in order to concretely quantify this assumption.
Besides, the baseline algorithms are, to the best of our knowledge, the only existing implementations for the inconsistency measures in question. 
However, as the result of comparing the SAT and ASP methods is far less predictable (both SAT and ASP are established formalisms for dealing with problems on the first level of the polynomial hierarchy), we examine the two approaches more closely.
% In addition to comparing the overall runtimes of the different approaches, we consider how the runtimes are composed.
% For instance, we inspect how much time the respective SAT/ASP solvers require, or the time it takes to compute the encodings. 
% Moreover, we analyze some SAT- and ASP-specific aspects.
% Regarding SAT, we investigate the impact of the choice of search algorithm by comparing the binary search procedure introduced in Section \ref{sec:sat-scheme} with a linear one.
% With regard to ASP, we review the comparison of the previous ASP-based approaches for $\icont$ \cite{kuhlmann2020algorithm,kuhlmann2021algorithms} and the latest one (which is also used in this work) drawn in \cite{kuhlmann2022comparison}.

% To begin with, we outline our experimental setup, including the data sets we compiled, the hardware we used, and a few implementation details in Section \ref{sec:experimental-setup}.
% In Section \ref{sec:baseline-approaches}, we describe the naive baseline algorithms.
% We proceed by presenting our results (Section \ref{sec:results}), and subsequently discussing our findings (Section \ref{sec:discussion}).

\subsection{Experimental Setup}\label{sec:experimental-setup}

% Due to the unavailability of real-world data sets for inconsistency measurement, we compiled a series of different benchmark data sets, either by artificially generating knowledge bases or by translating benchmark data from other fields. 
Due to the unavailability of standard benchmarking sets for inconsistency measurement, we compiled a series of different data sets, either by artificially generating knowledge bases or by translating benchmark data from other fields. 
For all these data sets the goal was to have benchmark instances that are (mostly) inconsistent and feature a structure that can be expected in real-world applications. 
We compiled five data sets in total that we briefly describe in the following.
Note that the SRS data set and the ML data set were already used in \cite{kuhlmann2022comparison}; the remaining three data sets are novel.\footnote{Download (all data sets): \url{https://fernuni-hagen.sciebo.de/s/sML2faFBiCib1nm}}
\begin{description}
\item[SRS] Data set SRS consists of knowledge bases randomly generated using the \textit{SyntacticRandomSampler} provided by \emph{TweetyProject}\footnote{\url{http://tweetyproject.org/api/1.19/org/tweetyproject/logics/pl/util/SyntacticRandomSampler.html}}. 
This generator randomly generates a single propositional formula $\varSub$ as follows. 
Given input parameters $p_d,p_c,p_n\in [0,1]$ with $p_d+p_c+p_n\leq 1$, it is randomly chosen whether $\varSub$ is a disjunction $\varSub=\varSubSub_1\vee \varSubSub_2$ (with probability $p_d$), a conjunction $\varSub=\varSubSub_1\wedge \varSubSub_2$ (with probability $p_c$), a negation $\varSub=\neg \varSubSub_1$ (with probability $p_n$), or a proposition $\varSub=\varAt$ (with probability $1-p_d-p_c-p_n$; the exact proposition is chosen uniformly at random from some given set of propositions). 
If $\varSub$ is not a proposition then this process is repeated for the subformula $\varSubSub_1$ (and additionally $\varSubSub_2$ in the first two cases), where the probabilities for the first three cases are multiplied by some discount parameter $d\in(0,1)$ (in order to ensure that the process will, in practice, terminate at some point).

We created a total of $1800$ knowledge bases using the \textit{SyntacticRandomSampler}.
In order to include instances of varying complexity, we used different parameter settings to create nine sets which each consist of $200$ knowledge bases.
To be specific, we used different signature sizes and different numbers of formulas per knowledge base. 
The parameters $p_d$, $p_c$, and $p_n$ were always set to $0.3$.
A more detailed overview of the composition of the SRS data set is given in Table \ref{tab:RAN-dataset}.

\begin{table}
    \centering
    \setlength\extrarowheight{1pt}
    \begin{tabular}{|C{.13\textwidth} | C{.13\textwidth} | R{.13\textwidth} | R{.13\textwidth} |R{.13\textwidth} | R{.13\textwidth} |}
    \hline
    Signature size & Formulas / KB & Mean sig.\ size / formula & Max.\ sig.\ size / formula & Mean \#atom occ.\ / formula & Max \#atom occ.\ / formula  \\
    \hline
    \hline
    $3$  & $5$--$15$ 	& $1.69$ & $3$ & $2.23$ & $6$ \\ \hline
    $5$  & $15$--$25$ 	& $2.32$ & $5$ & $3.11$ & $11$ \\ \hline
    $10$ & $15$--$25$	& $2.70$ & $8$ & $3.13$ & $10$ \\ \hline
    $15$ & $15$--$25$ 	& $2.80$ & $9$ & $3.11$ & $11$ \\ \hline
    $15$ & $25$--$50$ 	& $2.81$ & $9$ & $3.11$ & $11$ \\ \hline
    $20$ & $25$--$50$ 	& $2.88$ & $11$ & $3.11$ & $11$ \\ \hline
    $25$ & $25$--$50$   & $2.91$ & $12$ & $3.09$ & $13$ \\ \hline
    $25$ & $50$--$100$  & $2.92$ & $10$ & $3.11$ & $11$ \\ \hline
    $30$ & $50$--$100$  & $2.94$ & $11$ & $3.10$ & $13$ \\
    \hline
    \end{tabular}
    \caption{Overview of the sets of knowledge bases making up data set SRS. Each row represents a set of $200$ knowledge bases. The columns describe (from left to right) the signature size, the number of formulas per knowledge base, the mean (resp.\ maximum) signature size per formula, and the mean (resp.\ maximum) number of atom occurrences per formula.
    }
    \label{tab:RAN-dataset}
\end{table}

\item[ML] The ML data set consists of knowledge bases containing formulas learnt from machine learning data \cite{Thimm:2020a}. 
More precisely, the data set ``Animals with attributes''\footnote{\url{http://attributes.kyb.tuebingen.mpg.de}} describes $50$ animals, e.\,g. ox, mouse, or dolphin, using $85$ binary attributes such as ``swims'', ``black'', and ``arctic''. 
We used the Apriori algorithm \cite{Agrawal:1994} to mine association rules from this data set for a given minimal confidence value $c$ and minimal support value $s$. 
All these rules were then interpreted as propositional logic implications. 
Finally, we selected one animal at random and added all its attributes as facts (thus making the knowledge base inconsistent as even rules with low confidence values were interpreted as strict implications). 
We set 
\begin{align*}
    c\in\{0.6, 0.65, 0.70, 0.75, 0.8, 0.85, 0.90, 0.95\}, \\
    s\in \{0.6, 0.65, 0.70, 0.75, 0.8, 0.85, 0.90, 0.95\},
\end{align*}
and allowed maximally $4$ literals per rule. 
The final data set contains $1920$ instances.

\item[ARG] Data set ARG consists of a total of $326$ knowledge bases extracted from benchmark data of the International Competition on Computational Models of Argumentation 2019 (ICCMA'19)\footnote{\url{http://argumentationcompetition.org/2019/}}. 
An abstract argumentation framework \cite{Dung:1995} is a directed graph $F=(A,R)$ where $A$ is a set of arguments, and $R$ models a conflict relation between such arguments. 
A computational task here is to find a \emph{stable extension}, i.\,e., a set $E\subseteq A$ with $(a,b)\notin R$ for all $a,b\in E$ and $(a,c)\in R$ for all $c\in A\setminus E$ and some $a\in E$. 
For each instance from ICCMA'19, we encoded the instance and the problem of finding such a stable extension via the approach from \cite{Besnard:2014} and, additionally, added constraints to ensure that 20\% of randomly selected arguments have to be contained in $E$. 
Note that the latter constraints usually make the knowledge base inconsistent.
    
\item[SC] Data set SC consists of the 100 smallest (in terms of file size) instances of the benchmark data set from the main track of the SAT competition 2020.\footnote{\url{https://satcompetition.github.io/2020/downloads.html}}

\item[LP] Data set LP is generated from benchmark data for answer set programming. More specifically, we used the problems \textit{15-Puzzle} (11 instances), \textit{Hamiltonian Cycle} (20 instances), and \textit{Labyrinth} (14 instances).
    % \begin{itemize}
    %     \item 15-Puzzle (11 instances)
    %     \item Hamiltonian Cycle (20 instances)
    %     \item Labyrinth (14 instances)
    % \end{itemize}
available from the Asparagus website\footnote{\url{https://asparagus.cs.uni-potsdam.de}}. 
Each problem specification and a single instance was first grounded and then converted to a propositional logic knowledge base using Clark completion \cite{Clark:1978}, i.\,e., a procedure that, basically, translates rules $\varLit_0\ \texttt{:-}\ \varLit_1,\dots ., \varLit_0\ \texttt{:-}\ \varLit_n.$ with identical head to a single formula $\varLit_0 \leftrightarrow \bigwedge \varLit_1\vee\ldots\vee \bigwedge \varLit_n$ (where each $\varLit_i$, $i\in \{1, \ldots, n\}$, may be a set of literals). 
Both classical negation and default negation in the rules are converted to logical negation in the resulting propositional formula.\footnote{Note that this procedure does not maintain the semantics of the original logic program. However, since we are not interested in computing solutions to the original problems but only require benchmarks that feature a similar structure as application scenarios, we believe that this does not pose an issue.} 
\end{description}

\begin{table}
\setlength\extrarowheight{1pt}
    \centering
    \begin{tabular}{|L{.08\textwidth} | R{.08\textwidth} | R{.08\textwidth} | R{.08\textwidth} | R{.08\textwidth} | R{.08\textwidth} | R{.08\textwidth} | R{.08\textwidth} |}
        \hline
        \multirow{2}{*}{Name} & \multirow{2}{*}{\#KBs} & \multicolumn{2}{c|}{\#Formulas} & \multicolumn{2}{c|}{Sig. size} & \multicolumn{2}{c|}{\#Con./formula} \\
        \cline{3-8}
          & & Mean & Std. & Mean & Std. & Mean & Std. \\ 
         \hline
         \hline
         SRS & $1800$ & $36$ & $24$ & $16$ & $9$ & $3.1$ & $2.0$ \\
         \hline
         ML & $1920$ & $7506$ & $17984$ & $76$ & $0$ & $5.5$ & $1.4$ \\
         \hline
         ARG & $326$ & $989$ & $2133$ & $827$ & $1781$ & $198.3$ & $455.1$ \\
         \hline
         SC & $100$ & $14254$ & $12788$ & $2559$ & $2616$ & $4.5$ & $3.9$ \\
         \hline
         LP & $45$ & $9150$ & $4683$ & $10097$ & $4834$ & $11.0$ & $905.1$ \\
         \hline
    \end{tabular}
    \caption{Overview of some properties of the data sets used in the evaluation, which include (from left to right) the total number of knowledge bases, the number of formulas per knowledge base (mean and standard deviation), the signature size per KB (mean and standard deviation), and the number of connectives per formula (mean and standard deviation).}
    \label{tab:datasets-overview}
\end{table}

An overview of some basic statistics (including the number of instances, the mean number of formulas, the mean signature size, and the mean number of connectives per formula) of all five data sets is provided in Table \ref{tab:datasets-overview}.

Both the SAT-based and the ASP-based approaches are implemented in C++.
The SAT solver we use is CaDiCal sc2021\footnote{\url{https://github.com/arminbiere/cadical}} \cite{Biere-SAT-2020}, and the ASP solver we use is Clingo 5.5.1\footnote{\url{https://potassco.org/clingo/}} \cite{gebser2019multi}.
For the computation of cardinality constraints in SAT we use sequential counter encoding \cite{sinz2005}, and for transforming formulas to CNF we use Tseitin's method \cite{tseitin1968}.

Due to the significant amount of time required to complete the evaluation, two different servers were used to facilitate the evaluation of multiple data sets in parallel.
The experiments regarding the SRS data set were run on a computer with $128$\,GB RAM and an Intel Xeon E5-2690 CPU which has a basic clock frequency of $2.90$\,GHz.
The other data sets (ML, ARG, SC, and LP) were evaluated on server instances with $32$\,GB RAM and an Intel Xeon Platinum 8260M CPU with a basic clock frequency of $2.40$\,GHz.

% To visualize our results, we make use of \textit{cactus plots} and \textit{scatter plots}.
% A cactus plot is comprised of time measurements ordered from low to high.
% Hence, we measure the runtime of an algorithm wrt.\ each knowledge base of a given data set, and subsequently order the measurements from low to high and plot them.
% If an inconsistency value wrt.\ a given knowledge base could not be computed within the given time limit (i.e., if it timed out) or if a memory error occurred, the corresponding time measurement is not plotted.
% The specific type of scatter plots we use depicts a comparison of the runtimes required by two different approaches with regard to each instance of a data set (that could be solved by at least one of the two approaches). 
% To be precise, $x$- and $y$-axis present the runtime required by the two approaches, respectively. 
% For each instance that was solved by both approaches in question, we mark the required runtime accordingly.
% If an instance was in fact only solved by one of the two approaches, the mark is set on the timeout line of the other.

\subsection{Baseline Approaches}\label{sec:baseline-approaches}

We will compare our two families of approaches against existing baseline implementations of the respective measures from \textit{TweetyProject}. 
These are quite simple computational approaches mainly relying on brute force methods which are written in Java. 
We briefly explain these baseline approaches as follows:
\begin{itemize}
    \item \textit{The contension inconsistency measure.}\footnote{\url{http://tweetyproject.org/r/?r=base_contension}} On input $\kb$ this approach first converts $\kb$ into conjunctive normal form $\kb'$ (exploiting the fact that the contension inconsistency value does not change when applying syntactic transformations). 
    Then a SAT solver is used to check whether $\kb'$ is satisfiable. 
    If it is, the inconsistency value is $0$ and the algorithm terminates. 
    Otherwise, for some atom $\varAt$ appearing in $\kb'$, all clauses containing $\varAt$ or $\neg \varAt$ are removed (this is equivalent to setting the truth value of $\varAt$ to $b$). 
    If the resulting knowledge base is satisfiable, the inconsistency value $1$ is returned. 
    Otherwise, every other atom appearing in $\kb'$ is tested in the same way. 
    If none of the resulting knowledge bases is satisfiable, the procedure is continued with all pairs of atoms appearing in $\kb'$, and so forth.
    
    \item \textit{The forgetting-based inconsistency measure.}\footnote{\url{http://tweetyproject.org/r/?r=base_forget}} This approach works similar as the approach for the contension inconsistency measure. 
    First, satisfiability of the input $\kb$ is tested using transformation into conjunctive normal form and application of a SAT solver. 
    Then, all possible substitutions of forgetting a single atom occurrence are applied on the input and it is checked whether the resulting knowledge base is consistent (again by using a SAT solver). Then, all pairs of possible substitutions are tested, and so on.
    
    \item \textit{The hitting set inconsistency measure.}\footnote{\url{http://tweetyproject.org/r/?r=base_hs}} This approach first checks whether there is a single interpretation that satisfies the input $\kb$ (by exhaustive enumeration) and returns the inconsistency value $0$ in that case. Then, we check whether there is any pair of interpretations (by exhaustively enumerating all possible combinations) such that every formula is satisfied by at least one of them. If such a pair can be found, the inconsistency value $1$ is returned. Otherwise, the process is repeated with three interpretations, and so on.
    
    \item \textit{The max-distance inconsistency measure.}\footnote{\url{http://tweetyproject.org/r/?r=base_dmax}} This approach simply iterates over all possible interpretations and calculates the distances to each formula (i.\,e., the minimal distance to all models of the formula). The maximum value found is returned.
    
    \item \textit{The sum-distance inconsistency measure.}\footnote{\url{http://tweetyproject.org/r/?r=base_dsum}} This approach is analogous to the max-distance inconsistency measure, but sums are calculated rather than maxima.
    
    \item \textit{The hit-distance inconsistency measure.}\footnote{\url{http://tweetyproject.org/r/?r=base_dhit}} This approach is analogous to the max-distance inconsistency measure, but only the number of unsatisfied formulas is checked.
\end{itemize}
It is apparent that theses approaches solve the corresponding problems in quite a naive fashion, but they are still able to produce results for small (toy) examples.

\subsection{Results}\label{sec:results}

In this section, we present the results of our experimental analysis.
To begin with, we compare the overall runtime of the SAT-based, ASP-based, and baseline approaches.
% wrt.\ each inconsistency measure considered in this work on all five data sets (Section \ref{sec:overall-runtimes}).
% Afterwards, we analyze more specific aspects.
% Section \ref{sec:runtime-composition} covers some experiments regarding the runtime composition of the SAT- and ASP-based methods.
% Further, in Section \ref{sec:linear-search} we perform an experiment in which we compare a linear search procedure to the standard binary search used for the SAT approaches. 
% Following up on this, in Section \ref{sec:maxsat} we perform an experiment in which we adapt an exemplary SAT encoding to a MaxSAT encoding, and compare this approach to the previously introduced ones.
% In Section \ref{sec:previous-asp}, we review previous versions of ASP encodings for $\icont$ from the literature, and compare them to the current version.
Afterwards, we analyze more specific aspects, namely the runtime composition of the SAT- and ASP-based methods, the choice of search strategy used in the SAT-based approaches, the use of MaxSAT, and a comparison of previous versions of ASP encodings for $\icont$ with the current version.

\subsubsection{Overall Runtime Comparison}\label{sec:overall-runtimes}

% Our first goal of the evaluation is to obtain a general overview of the runtimes of the different approaches.
Our first goal of the evaluation is to obtain an overview of the runtimes of the different approaches.
% To be precise, we investigate the performance of the individual approaches for the different measures on the one hand, and the different data sets on the other hand.
% Hence, 
To get a most thorough overview, we consider all three approaches for each of the six measures, and use all five previously described data sets (see Section \ref{sec:experimental-setup}).

A timeout was set to $600$ seconds (i.\,e., $10$ minutes) for instances from the SRS, ML, and ARG data sets.
For instances from the SC and LP data sets, which are overall more challenging compared to those instances from the previously mentioned data sets, we increased the timeout to $5000$ seconds (i.\,e., $83.\overline{3}$ minutes).
This corresponds to the time limit used in the SAT competition 2020, from which the instances in the SC data set were selected.

% We include the cumulative runtime as well as the number of timeouts of each approach wrt.\ each measure and each data set in Table \ref{tab:cumulative-runtimes}.
% This data refers to the experiments on the overall runtime comparison (Section \ref{sec:overall-runtimes}).

\paragraph{SRS} 

% The first data set we considered is the SRS data set, which corresponds to the union of data sets A and B in \cite{kuhlmann2021algorithms}, or the SRS data set in \cite{kuhlmann2022comparison}.
First we consider the SRS data set, which has been used previously in \cite{kuhlmann2021algorithms,kuhlmann2022comparison}.
% Figure \ref{fig:overall-runtime-all-RAN} displays six cactus plots\footnote{A cactus plot is comprised of time measurements ordered from low to high.
% Hence, we measure the runtime of an algorithm wrt.\ each knowledge base of a given data set, and subsequently order the measurements from low to high and plot them.
% If an inconsistency value wrt.\ a given knowledge base could not be computed within the given time limit (i.e., if it timed out) or if a memory error occurred, the corresponding time measurement is not plotted.}, each corresponding to a comparison between all three approaches wrt.\ one of the six inconsistency measures. 
Table \ref{tab:cumulative-runtimes} provides the number of solved instances as well as the cumulative runtime of each approach wrt.\ each measure and each data set.
(For a more detailed overview, see also the plots in \ref{app:cactus} and \ref{app:scatter-plots}.)
% A first noticeable observation is that, overall, all plots paint a quite similar picture---the ASP-based approaches perform best, while the naive approaches perform, as expected, poorest. 
Looking at the SRS part of Table \ref{tab:cumulative-runtimes}, a first noticeable observation is that, overall, the ASP-based approaches perform best, while the naive approaches perform, as expected, poorest. 
More precisely, the ASP-based approaches only time out in $175$ out of $10{,}800$ cases ($1.62$\,\%). % (for details such as the exact numbers of timeouts per inconsistency measure, see Table \ref{tab:cumulative-runtimes}).
To set this into perspective: the SAT approaches time out in $1552$ cases ($14.37$\,\%), and the naive ones in $5703$ cases ($52.81$\,\%).
The only exception to this pattern is the sum-distance inconsistency measure ($\isdalal$)---here, the naive approach outperforms the SAT-based one. 
This might be due to the fact that the search space for $\isdalal$ is quite large compared to most other measures\footnote{The only other measure that could possibly result in a higher value wrt.\ a given knowledge base $\kb$ is $\iforget$. However, since $\iforget$ depends on the number of atom occurrences in $\kb$, and $\isdalal$ depends on the number of formulas in $\kb$ as well as the signature size $\atoms(\kb)$, a general assessment of which measure has a larger search space is not possible.} 
% (the maximum non-$\infty$ value is $|\atoms(\kb)| \times |\kb|$ (see Table \ref{tab:search-ranges-sat})). 
(see Table \ref{tab:search-ranges-sat}). 
% Hence, since the search procedure consists of $\log(|\atoms(\kb)| \times |\kb| + 1)$ steps, we also need to calculate a new cardinality constraint exactly as many times.
As the maximum non-$\infty$ value for $\isdalal$ is $|\atoms(\kb)| \times |\kb|$, the search procedure consists of $\log(|\atoms(\kb)| \times |\kb| + 1)$ steps, and we also need to calculate a new cardinality constraint exactly as many times.
This results in the SAT method for $\isdalal$ being slower than those for the other two distance-based measures, while the respective naive methods are about equally fast for all three measures.

% Although the cactus plots in Figure \ref{fig:overall-runtime-all-RAN} give a general, ``global'' overview of how the different approaches perform in relation to each other, we do not know how they relate to each other in individual cases.
% % For instance, a naive approach could actually be faster than the corresponding ASP approach in certain cases. 
% Therefore, we additionally compare each pair of approaches wrt.\ their runtimes regarding each individual instance in the data set. % in \ref{app:data}. 
% % Figures \ref{fig:scatter-RAN-asp-sat}, \ref{fig:scatter-RAN-asp-naive}, and \ref{fig:scatter-RAN-sat-naive} visualize the comparisons between each pair of approaches wrt.\ all six inconsistency measures.

% From the aforementioned comparisons we learn that all ASP-based approaches outperform the SAT-based ones in almost all cases (see Figure \ref{fig:scatter-RAN-asp-sat}).
From concrete comparisons between each pair of approaches (see Appendix \ref{app:scatter-plots} for some visualizations) we learn that all ASP-based approaches outperform the SAT-based ones in almost all cases (see Figure \ref{fig:scatter-RAN-asp-sat}).
The only exception worth mentioning are a few knowledge bases for which the SAT approach for $\ihs$ was faster.
% In some cases, the ASP approach even timed out, while the SAT approach was still able to yield a result. 
With regard to the comparison of the ASP-based and naive methods we observe a similar pattern---again, the ASP methods are faster than the naive ones in most cases (see Figure \ref{fig:scatter-RAN-asp-naive}).
There are again a few instances for which $\ihs$ is computed faster by the naive method.
%, however, the more striking issue is that the naive method computes $\isdalal$  faster in quite a few cases. 
% There are even instances that could not be solved at all by the ASP approach within the time limit, but could be solved by the naive approach.
% Nevertheless, for the majority of instances, $\isdalal$ could be computed faster by the ASP-based method, and there are more cases for which the latter yielded a result while the naive one timed out, than vice versa.
% Overall, these results reinforce the perception conveyed by Figure \ref{fig:overall-runtime-all-RAN} that ASP performs strongest across this data set. 
Overall, these results reinforce the perception conveyed by Table \ref{tab:cumulative-runtimes} that ASP performs strongest across this data set. 

The comparison between the SAT-based and the naive approaches shows that the naive approach to solving $\isdalal$ is faster than its SAT-based counterpart in the majority of cases (see Figure \ref{fig:scatter-RAN-sat-naive}), and
in addition, there are numerous instances for which the SAT-based method could not return a result within the time limit, while the naive method could.
% This reflects the previously discussed results regarding Figure \ref{fig:overall-runtime-all-RAN-sum-dalal}.
Wrt.\ all other measures (with the exception of $\icont$), there exist some instances for which the naive approach was faster as well.
% However, Figure \ref{fig:scatter-RAN-sat-naive} illustrates that in total, the SAT-based approaches perform superior to the naive ones. 
However, in total, the SAT-based approaches perform superior to the naive ones. 
% For instance, there are multiple instances wrt.\ all measures except $\isdalal$, for which the SAT-based method returned a result, while the naive one timed out.

\begin{table}
    \setlength\extrarowheight{1pt}
    \centering
    \begin{tabular}{|l|l|rr|rr|rr|}
        \hline
         & & \multicolumn{2}{c|}{Naive} & \multicolumn{2}{c|}{SAT} & \multicolumn{2}{c|}{ASP} \\
        \hline
        \hline
         & & \#solved & CRT (s) & \#solved & CRT (s) & \#solved & CRT (s) \\
        \hline
        \multirow{6}{*}{\rotatebox[origin=c]{90}{SRS (1800)}} & $\icont$ &  $1162$ & $49237$ & $\bf 1800$ & $793$ & $\bf 1800$ & $48$ \\ 
         & $\iforget$ & $271$ & $22230$ & $1580$ & $103172$ & $\bf 1667$ & $16482$ \\ 
         & $\ihs$ & $647$ & $16176$ & $\bf 1792$ & $57644$ & $\bf 1792$ & $1612$ \\ 
         & $\imdalal$ & $1006$ & $27156$ & $1618$ & $97589$ & $\bf 1800$ & $3919$ \\ 
         & $\isdalal$ & $1006$ & $27188$ & $658$ & $67065$ & $\bf 1766$ & $11785$ \\ % $\bf 1473$ & $46596$ \\ 
         & $\ihdalal$ & $1005$ & $26557$ & $\bf 1800$ & $4585$ & $\bf 1800$ & $63$ \\
         \hline
         \multirow{6}{*}{\rotatebox[origin=c]{90}{ML (1920)}} & $\icont$ & $1134$ & $19075$ & $1763$ & $99260$ & $\bf 1920$ & $9013$ \\ 
         & $\iforget$ & $555$ & $3373$ & $523$ & $33994$ & $\bf 1467$ & $67220$ \\ 
         & $\ihs$ & $0$ & $-$ & $704$ & $76745$ & $\bf 1163$ & $58203$ \\ 
         & $\imdalal$ & $0$ & $-$ & $0$ & $-$ & $\bf 733$ & $89206$ \\ 
         & $\isdalal$ & $0$ & $-$ & $0$ & $-$ & $\bf 651$ & $18818$ \\  % $0$ & $-$ \\ 
         & $\ihdalal$ & $0$ & $-$ & $770$ & $38730$ & $\bf 1920$ & $7803$ \\ 
         \hline
         \multirow{6}{*}{\rotatebox[origin=c]{90}{ARG (326)}} & $\icont$ & $103$ & $3045$ & $196$ & $10252$  & $\bf 246$ & $9071$ \\ 
         & $\iforget$ & $70$ & $1175$ & $68$ & $6395$ & $\bf 143$ & $1908$ \\ 
         & $\ihs$ & $47$ & $1412$ & $157$ & $8539$ & $\bf 218$ & $12975$ \\ 
         & $\imdalal$ & $23$ & $762$ & $85$ & $6222$ & $\bf 145$ & $8408$ \\ 
         & $\isdalal$ & $23$ & $773$ & $30$ & $3694$ & $\bf 133$ & $2644$ \\ % $\bf 66$ & $3653$ \\ 
         & $\ihdalal$ & $23$ & $772$ & $\bf 164$ & $3340$ & $157$ & $1382$ \\ 
         \hline
         \multirow{6}{*}{\rotatebox[origin=c]{90}{SC (100)}} & $\icont$ & $11$ & $15481$ & $\bf 13$ & $21244$ & $7$ & $8619$ \\ 
         & $\iforget$ & $\bf 11$ & $11859$ & $1$ & $111$ & $4$ & $285$ \\ 
         & $\ihs$ & $0$ & $-$ & $2$ & $153$ & $\bf 3$ & $1461$ \\ 
         & $\imdalal$ & $0$ & $-$ & $1$ & $313$ & $\bf 2$ & $1172$ \\ 
         & $\isdalal$ & $0$ & $-$ & $0$ & $-$ & $\bf 3$ & $322$ \\  % $\bf 1$ & $2594$ \\ 
         & $\ihdalal$ & $0$ & $-$ & $3$ & $2513$ & $\bf 5$ & $2347$ \\ 
         \hline
         \multirow{6}{*}{\rotatebox[origin=c]{90}{LP (45)}} & $\icont$ & $0$ & $-$ & $0$ & $-$ & $\bf 45$ & $15869$ \\ 
         & $\iforget$ & $0$ & $-$ & $0$ & $-$ & $\bf 11$ & $36315$ \\ 
         & $\ihs$ & $0$ & $-$ & $0$ & $-$ & $0$ & $-$ \\ 
         & $\imdalal$ & $0$ & $-$ & $0$ & $-$ & $0$ & $-$ \\ 
         & $\isdalal$ & $0$ & $-$ & $0$ & $-$ & $0$ & $-$ \\ 
         & $\ihdalal$ & $0$ & $-$ & $0$ & $-$ & $\bf 31$ & $7983$ \\ 
         \hline
    \end{tabular}
    \caption{Overview of the number of solved instances and cumulative runtime (CRT) per data set, measure, and approach. The total number of instances contained in each data set is provided in braces after the data set name. Timeout: $600$ seconds ($10$ minutes) for SRS, ML, and ARG, and $5000$ seconds for SC and LP.}
    \label{tab:cumulative-runtimes}
\end{table}

\paragraph{ML}

% A first glance at Figure \ref{fig:overall-runtime-all-ML}, which displays the cactus plots regarding all six measures wrt.\ the ML data set, immediately reveals that the latter appears to be more challenging than the SRS data set (see Figure \ref{fig:overall-runtime-all-RAN}).
A first glance at the ML part of Table \ref{tab:cumulative-runtimes} immediately reveals that this data set appears to be more challenging than the SRS data set, as there are significantly more timeouts across the three approaches wrt.\ all six inconsistency measures. % (see Figure \ref{fig:overall-runtime-all-RAN}).
% To begin with, there are significantly more timeouts across the three approaches wrt.\ all six inconsistency measures. 
To be precise, the ASP approaches time out in $3666$ out of $11{,}520$ cases ($31.82$\,\%), the SAT approaches time out in $7760$ cases ($67.36$\,\%), and the naive approaches in $9831$ cases ($85.34$\,\%). 
% (see Table \ref{tab:cumulative-runtimes} for more detailed information regarding the numbers of timeouts as well as cumulative runtimes).
Thus, overall, the relationships between the approaches are again similar to those for the SRS data set---the ASP approaches achieve the strongest results, the naive ones, as expected, the weakest. 

Particularly noticeable are the results regarding the three distance-based measures. % (see Figure \ref{fig:overall-runtime-all-ML-maxdalal}--\ref{fig:overall-runtime-all-ML-hitdalal}).
The naive methods did not solve a single instance for any one of the three measures within the time limit.
The SAT approaches did not yield any results for $\imdalal$ and $\isdalal$ either; only for $\ihdalal$, a total of $770$ instances could be solved.
The ASP approach was the only one that could actually solve a number of instances for both $\imdalal$ and $\isdalal$.
% Although the ASP approaches could return a number of inconsistency values wrt.\ $\imdalal$ and $\ihdalal$, the approach for $\isdalal$ timed out in all cases, just like the SAT-based and naive versions.

Another noticeable observation is that the naive approach for $\iforget$ slightly outperforms its SAT-based equivalent---which was clearly not the case wrt.\ the SRS data set. 
This is most likely due to the fact that the SRS data set contains significantly fewer instances that are consistent, i.e., have an inconsistency degree of $0$, than the ML data set ($19$ vs.\ $548$ instances; see Figures \ref{fig:histos-RAN} and \ref{fig:histos-ML} for more information). 
Since the naive approach first checks whether the given knowledge base is consistent (by means of a SAT solver), consistent instances can be solved rather fast, in particular when compared to the SAT-based approach, where $\log(|\occs(\kb)|+1)$ calls to a SAT solver are required.
Besides, the SAT approach requires the computation of a SAT encoding for $\upperiforget$ (including the computation of an at-most-$k$ constraint in each step), which leads to an additional overhead.
The same applies to the naive approach for $\icont$. 
Furthermore, the naive approaches for $\icont$ and $\iforget$ might also yield a result rather quickly if the corresponding inconsistency value is not $0$, but still very low, as they both search for the correct value from low to high in a linear manner.
Deeper analysis revealed that the naive approach for $\icont$ was faster than the corresponding SAT-based approach in a total of $895$ cases.
In $507$ out of these $895$ cases, the contension inconsistency degree of the respective instances was in fact $0$.
Moreover, the inconsistency degree was $1$ in $340$ cases, and $2$ in $48$ cases.
Whenever the inconsistency degree was $>2$, the naive approach for $\icont$ was slower than the SAT-based approach.
Regarding $\iforget$, the naive approach was also faster than the SAT-based one in $507$ cases in which the inconsistency value is $0$.
Further, there are $19$ instances $\kb$ with $\iforget(\kb)=1$ which could be solved faster by the naive approach; there are no instances with $\iforget(\kb)>1$ where this is the case.

Wrt.\ ASP, the picture looks a bit different: the naive method was faster than the ASP-based method in only $65$ cases for $\icont$, and $57$ cases for $\iforget$.
In $45$ of the cases regarding $\icont$, the inconsistency value is $0$, and in the remaining $20$ cases it is $1$.
With regard to $\iforget$, in $56$ out of the $57$ cases in which the naive approach was faster, the corresponding inconsistency value is $0$, and in the single remaining case it is $1$.

A comparison between the runtimes of the ASP-based and the SAT-based approaches (see Figure \ref{fig:scatter-ML-asp-sat}) shows that the former are faster wrt.\ all instances.
When comparing the ASP approaches with the naive ones (see also Figure \ref{fig:scatter-ML-asp-naive}), we can see that there is a number of instances which could be solved by the naive method for $\iforget$, while the ASP-based one resulted in a timeout.
Yet overall, the ASP methods perform significantly superior compared to the naive ones.
% Firstly, wrt.\ $\icont$ and $\iforget$, the majority of instances can be solved faster by the respective ASP-based methods.
% Secondly, the naive method for $\ihs$ does not outperform the ASP-based (and SAT-based) versions at all.
% Besides, it should be considered that the naive approaches for $\imdalal$ and $\ihdalal$ timed out in all cases, while the ASP-based approaches could solve $733$ instances for $\imdalal$ and all $1920$ instances for $\ihdalal$.
% In particular, the naive methods for $\ihs$, $\imdalal$, $\isdalal$, and $\ihdalal$ could not solve a single instance. 
The comparison between the SAT approaches and the naive ones looks similar, however the naive approaches overall perform a bit stronger than when compared to the ASP-based versions.
However, although the SAT-based methods are outperformed by the naive ones in a total of $1465$ cases, the former still solve $1799$ instances more than the latter (across all inconsistency measures).

\paragraph{ARG}

% We move on to data set ARG.
% Figure \ref{fig:overall-runtime-all-ARG} displays the overall runtimes of all six measures wrt.\ all three approaches, respectively, on this data set.
From the ARG part of Table \ref{tab:cumulative-runtimes} we can observe that all three approaches wrt.\ all six inconsistency measures can solve some instances, but none can solve all instances from the data set.
In total, the ASP-based approaches time out in $914$ out of $1956$ cases ($46.73$\,\%), the SAT-based approaches time out in $1256$ cases ($64.21$\,\%), and the naive ones in $1667$ cases ($85.22$\,\%).
% (further information on cumulative runtimes, as well as exact numbers of timeouts, are found in Table \ref{tab:cumulative-runtimes}).
Once again, the ASP approaches overall clearly perform strongest, and the naive ones weakest.
However, similarly to what we observed for the ML data set, the naive approach for $\iforget$ performs a bit stronger than the SAT-based one.
Another exception to the overall pattern is that the SAT-based and the ASP-based approaches for $\ihdalal$ solve a quite similar number of instances; in fact, the ASP-based approach produces a few more timeouts than the SAT-based one. 
This has not been the case for any measure with regard to the SRS or ML data set.
However, it should also be mentioned that the ASP approach is clearly faster in the majority of cases (see also the scatter plot in Figure \ref{fig:scatter-ARG-asp-sat-hitdalal}).
Further, wrt.\ $\isdalal$, the naive method and the SAT-based method perform similarly. 
Nevertheless, although the SAT-based approach is a bit slower on average, the naive one still produces $7$ more timeouts.

The results of a comparison regarding the individual runtimes per knowledge base between the ASP-based, SAT-based, and naive approaches underline the proposition that the ASP-based approaches perform superior to the SAT-based and naive ones on the ARG data set.
The comparison to the SAT-based approaches (see Figure \ref{fig:scatter-ARG-asp-sat}) shows that ASP is faster in all cases, except for a few instances wrt.\ $\icont$ and $\ihdalal$.
Compared to the naive approaches (see Figure \ref{fig:scatter-ARG-asp-naive}), the ASP-based ones were faster in all cases, except for one instance in which the naive version was slightly faster wrt.\ $\iforget$. 

The comparison between the SAT-based and the naive approaches (see Figure \ref{fig:scatter-ARG-sat-naive}) shows that the naive ones outperformed the SAT-based ones in multiple cases.
More precisely, wrt.\ all inconsistency measures (except $\ihdalal$) there are instances for which the naive approach was faster.
In particular, there are some instances for which the SAT-based versions of $\iforget$ or $\isdalal$ time out, but their naive counterparts do not.
At large, the SAT approaches still perform superior to the naive ones. 
This becomes particularly clear on inspection of the number of instances that the SAT-based methods can solve within the time limit, but the naive versions cannot.

% \begin{figure}
%     \begin{subfigure}{.5\textwidth}
%         \includegraphics[width=.98\textwidth]{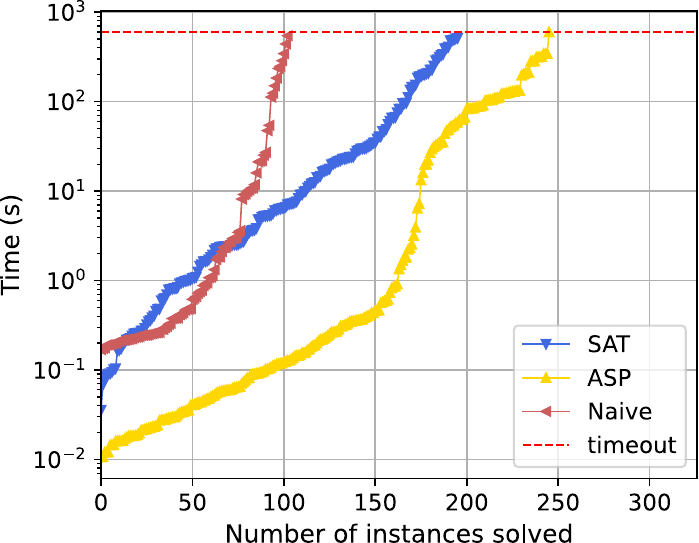}
%         \caption{Contension inconsistency measure ($\icont$)}
%     \end{subfigure}%
%     \begin{subfigure}{.5\textwidth}
%         \includegraphics[width=.98\textwidth]{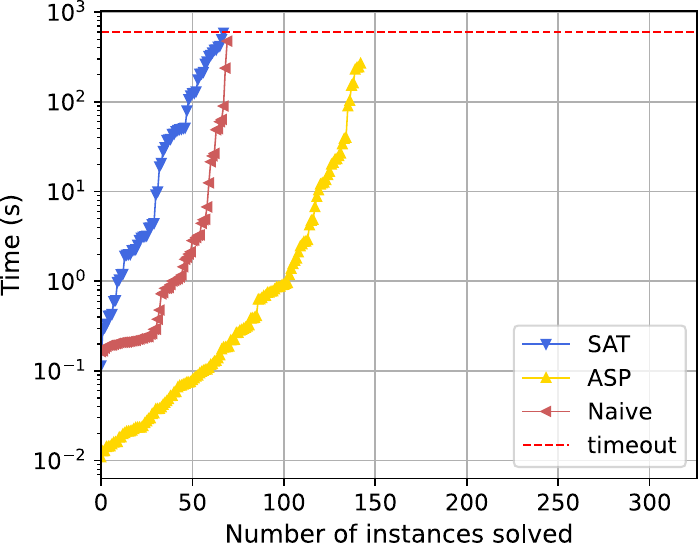}
%         \caption{Forgetting-based inconsistency measure ($\iforget$)}
%     \end{subfigure}\\[2ex]
    
%     \begin{subfigure}{.5\textwidth}
%         \includegraphics[width=.98\textwidth]{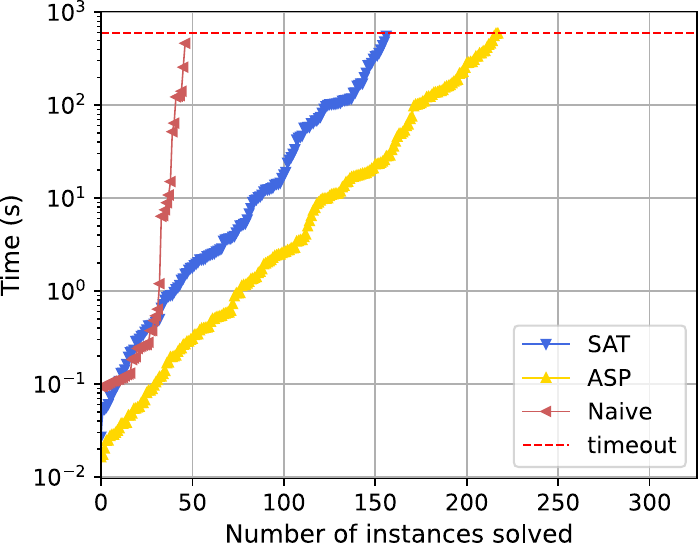}
%         \caption{Hitting Set inconsistency measure ($\ihs$)}
%     \end{subfigure}%
%     \begin{subfigure}{.5\textwidth}
%         \includegraphics[width=.98\textwidth]{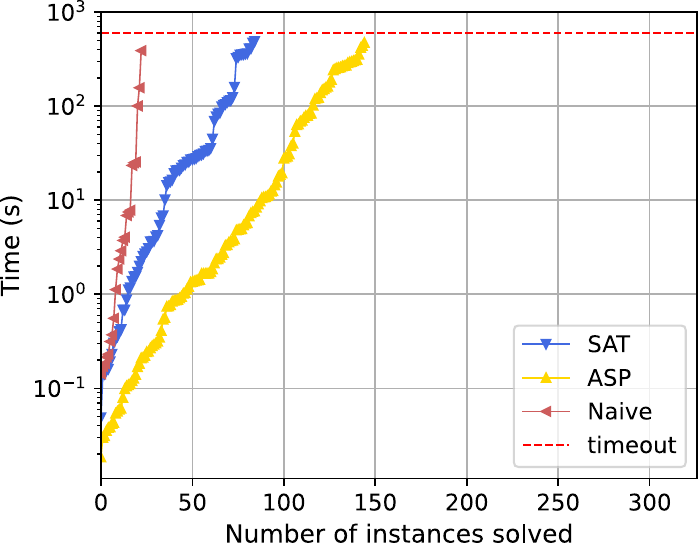}
%         \caption{Max-distance inconsistency measure ($\imdalal$)}
%     \end{subfigure}\\[2ex]
    
%     \begin{subfigure}{.5\textwidth}
%         \includegraphics[width=.98\textwidth]{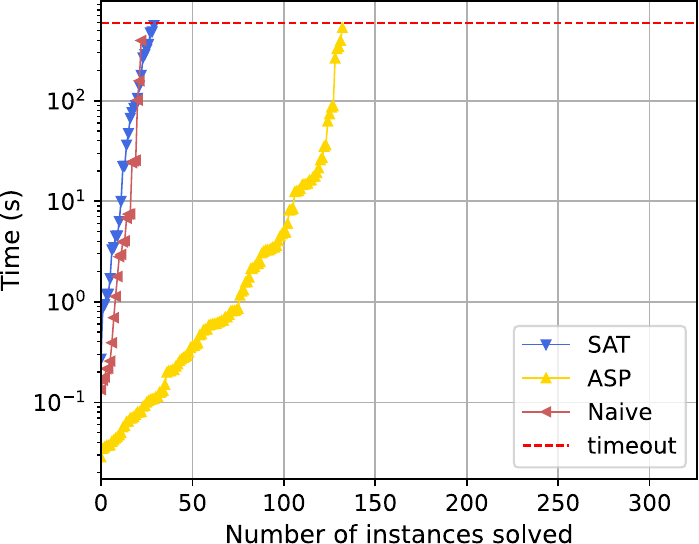}
%         \caption{Sum-distance inconsistency measure ($\isdalal$)}
%     \end{subfigure}%
%     \begin{subfigure}{.5\textwidth}
%         \includegraphics[width=.98\textwidth]{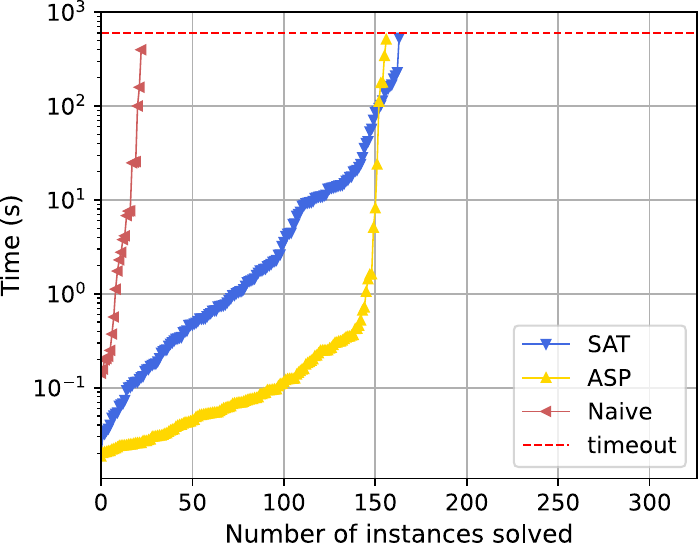}
%         \caption{Hit-distance inconsistency measure ($\ihdalal$)}
%     \end{subfigure}
%     \caption{Runtime comparison of the ASP-based, SAT-based, and naive approaches on the ARG data set. Timeout: $10$ minutes.}
%     \label{fig:overall-runtime-all-ARG}
% \end{figure}

\paragraph{SC}

To begin with, we would like to recall the fact that we increased the timeout for the data sets SC and LP to $5000$\,seconds ($83.\overline{3}$\,minutes). %, as mentioned in Section \ref{sec:overall-runtimes}.
However, even with this increased timeout, the data set still poses a great challenge for all three approaches wrt.\ all six inconsistency measures.
On the other hand, this is not entirely surprising---in a SAT competition, the problem at hand is to decide whether a given knowledge base is satisfiable or not (which is \np-complete), while the problem we are dealing with in this work is to determine an inconsistency value (which is in $\mathsf{FP}^{\np [\log n]}$ wrt.\ $\ihs$ and $\imdalal$, and proven to be $\mathsf{FP}^{\np [\log n]}$-complete wrt.\ $\icont$, $\iforget$, $\isdalal$, and $\ihdalal$ \cite{thimm2019a}).

In summary, all three approaches time out in most cases---the ASP approaches in $576$ out of $600$ cases, the SAT approaches in $580$, and the naive approaches in $578$ cases (see Table \ref{tab:cumulative-runtimes} for more details).
% It is striking that the naive approaches exhibit the same number of timeouts as the ASP-based approaches.
% However, i
% It should also be noted that only the naive methods for $\icont$ and $\iforget$ yield any results whatsoever.
It should be noted that wrt.\ the naive methods, only the those for $\icont$ and $\iforget$ could solve any instances whatsoever.
% With regard to all other measures (i.e., $\ihs$, $\imdalal$, $\isdalal$, and $\ihdalal$), the naive methods time out in all cases.
The reason for this behavior is most likely the previously mentioned fact that the naive approaches for both $\icont$ and $\iforget$ use a SAT solver to check whether the given knowledge base is consistent as a first step, which allows for retrieving the inconsistency value fast in case it is $0$.
If the inconsistency value is $1$, both approaches can still yield a result rather fast if the ``correct'' atom (in the case of $\icont$) or atom occurrence (in the case of $\iforget$) is removed at an early stage.
Our analysis (see Figure \ref{fig:histo-SAT}) shows that, in fact, all instances from the SC data set that could be solved have either inconsistency value $0$ or $1$ (wrt.\ all measures).\footnote{This is a quite reasonable result, as the knowledge bases in this data set were designed for a SAT competition. Hence, it is supposed to be challenging to decide whether they are unsatisfiable at all.}
% Moreover, it makes sense that the knowledge bases in this data set do not contain a large number of conflicts---after all, they were designed for a SAT competition.
% Hence, it is supposed to be challenging to decide whether they are satisfiable or not. 
Furthermore, all knowledge bases in the data set at hand are already provided in CNF---hence, as opposed to all other data sets considered in this work, the formulas do not require any additional transformation steps for the SAT approaches.

\paragraph{LP}

At last we inspect the runtime results regarding the LP data set.
% As in the preceding paragraph, the timeout is set to $5000$\,s.
Again, the timeout is set to $5000$\,s.
% The results show that only the ASP-based approaches for $\icont$, $\iforget$, and $\ihdalal$ were able to solve any instances at all.
% Hence, wrt.\ $\ihs$, $\imdalal$, and $\isdalal$, not a single instance could be solved by either one of the applied methods.
The results (see Table \ref{tab:cumulative-runtimes}) show that neither one of the SAT-based or naive approaches could solve any instance for any of the six measures within the time limit.
As for the ASP-based approaches, some instances could be solved wrt.\ $\icont$, $\iforget$, and $\ihdalal$, but wrt.\ $\ihs$, $\imdalal$, and $\isdalal$, again, not a single inconsistency value could be determined within the time limit.
% From Figure \ref{fig:overall-runtime-all-ASP}\footnote{We omitted the (empty) cactus plots for $\ihs$, $\imdalal$, and $\isdalal$. Information on cumulative runtimes and numbers of timeouts is provided in Table \ref{tab:cumulative-runtimes}.} we can observe that the ASP-based method for $\icont$ could, in fact, solve all $45$ instances, while the one for $\iforget$ could solve $11$ instances, but timed out $34$ times.
To be exact, we can observe that the ASP-based method for $\icont$ could, in fact, solve all $45$ instances, while the one for $\iforget$ could solve $11$ instances, but timed out $34$ times.
The ASP method for $\ihdalal$ solved $31$ instances, and consequently timed out in $14$ cases.
A closer look at the runtimes (see also Figure \ref{fig:overall-runtime-all-ASP}) reveals that, although the ASP-based approaches for $\icont$, $\iforget$, and $\ihdalal$ could solve a number of instances, they did require a rather large amount of time in a lot of cases, in particular wrt.\ $\iforget$.
More precisely, the ASP-based method for $\icont$ still solved $16$ instances between $10$ and $100$\,s, however, the remaining $29$ instances took between $100$ and $1000$\,s, with $12$ of them exceeding the $500$\,s mark.
With regard to the ASP method for $\iforget$, all instances (except one) even require $>1700$\,s.
These comparatively long runtimes might be an indicator of why none of the SAT-based or naive approaches could solve even a single instance.
% The comparison between the runtimes of the ASP-based approaches wrt.\ each individual instance of the other four data sets with the runtimes of the SAT-based and naive approaches revealed that in the vast majority of cases, the former are faster than the latter (see the corresponding scatter plots in Appendix \ref{app:scatter-plots}). 

% \begin{figure}
%     \begin{subfigure}{.5\textwidth}
%         \includegraphics[width=\textwidth]{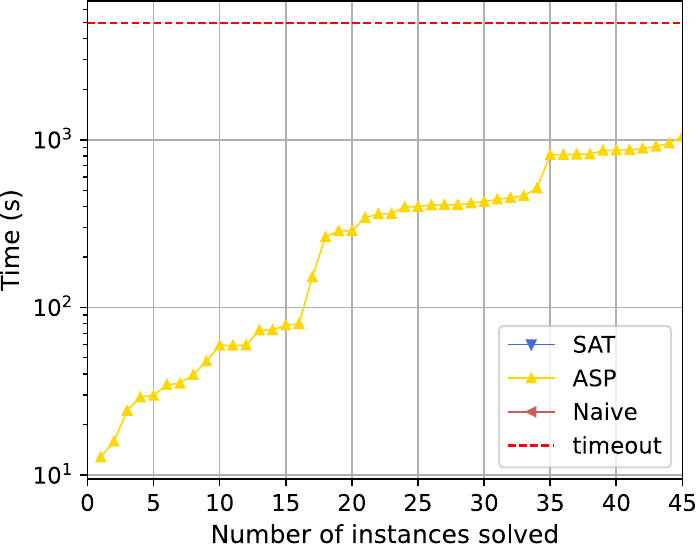}
%         \caption{Contension inconsistency measure ($\icont$)}
%     \end{subfigure}%
%     \begin{subfigure}{.5\textwidth}
%         \includegraphics[width=\textwidth]{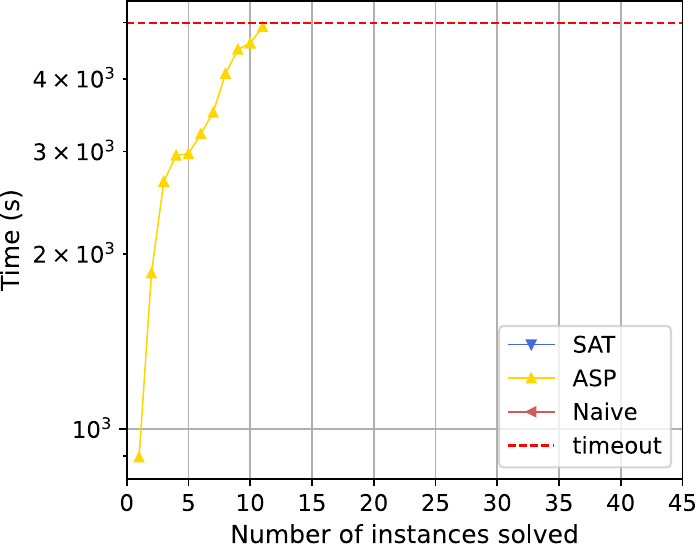}
%         \caption{Forgetting-based inconsistency measure ($\iforget$)}
%     \end{subfigure}\\[2ex]
    
%     \centering
%     \begin{subfigure}{.5\textwidth}
%         \includegraphics[width=\textwidth]{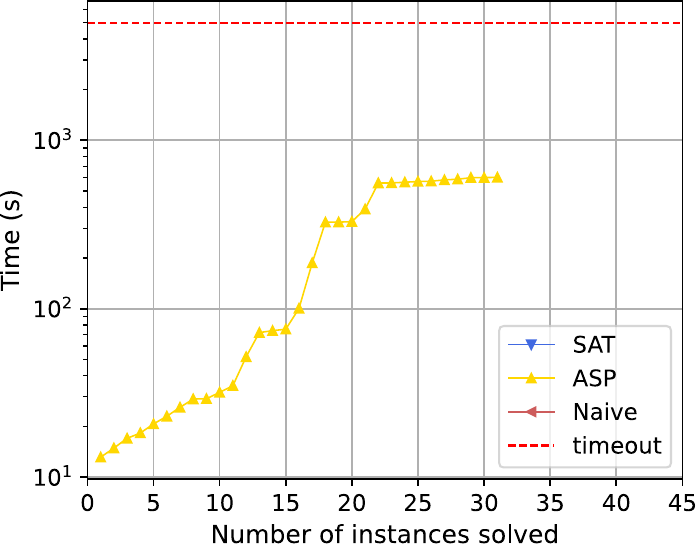}
%         \caption{Hit-distance inconsistency measure ($\ihdalal$)}
%     \end{subfigure}
%     \caption{Runtime comparison of the ASP-based, SAT-based, and naive approaches on the LP data set for the contension and forgetting-based inconsistency measures. Timeout: $10$ minutes.}
%     \label{fig:overall-runtime-all-ASP}
% \end{figure}

\subsubsection{Runtime Composition}\label{sec:runtime-composition}

The experiments illustrated in the preceding section revealed that the ASP-based approaches overall outperform their SAT-based and naive counterparts.
Whereas this is not surprising wrt.\ the naive methods, it is not immediately clear why the SAT methods are outperformed as well.
The objective of this section is to explore the reasons for this in more detail by investigating how the runtimes of both the SAT and the ASP methods are composed on average.

Because both the ASP and the SAT approaches solved a significant number of instances from the SRS data set, and, respectively, the ARG data set, wrt.\ all six inconsistency measures, we selected those two data sets for further analysis. 
More specifically, we examined the average runtime \textit{composition}, meaning we defined a number of categories representing the main elements of the approaches and measured how much of the overall runtime is spent on each of them.
The categories are ``encoding generation'', ``solving'', and for the SAT approaches additionally ``transformation to CNF''.
Moreover, we defined the category ``other'' for any remaining tasks, such as loading the given knowledge base.
Note that ``solving'' does not only include the plain solving time, but also the solver initialization for both ASP and SAT methods. 

Furthermore, we compare the \textit{average} runtimes of the SAT and ASP approaches wrt.\ each measure and regarded data set. 
For the SAT methods, this includes all required search steps, i.e., multiple encoding generation steps (this particularly affects cardinality constraints), multiple solver calls, and multiple transformations to CNF.

To begin with, we consider the results wrt.\ the SRS data set. 
% With regard to both approaches, the average runtimes differ greatly, depending on the inconsistency measure.
In total, the ASP approaches take between $0.03$\,s ($\icont$ and $\ihdalal$) and % $31.63$\,s ($\isdalal$), and the SAT approaches between $0.44$\,s ($\icont$) and $101.92$\,s ($\isdalal$) on average.
$9.89$\,s ($\iforget$), and the SAT approaches between $0.44$\,s ($\icont$) and $101.92$\,s ($\isdalal$) on average.
Figure \ref{fig:runtime-composition-SRS-contension} visualizes the runtime composition of the ASP-based and the SAT-based approach for $\icont$.
We see that the encoding generation covers the largest fraction of the overall average runtime of the SAT-based approach. 
% Wrt.\ the ASP-based approach on the other hand, encoding generation merely takes $0.0008$\,s (and is thus not even visible in the figure).
Wrt.\ the ASP-based approach on the other hand, encoding generation merely takes a very small fraction of a second (and is thus not even visible in the figure).
Solving makes up the second-largest share of the overall runtime of the SAT-based method; the other two categories (i.e., the transformation to CNF, and ``other'') only have a relatively minor impact.
Regarding the ASP-based method, both solving and ``other'' contribute significantly to the overall runtime. 
A probable reason for the relatively large proportion of ``other'' is the overall very short average runtime of the ASP-based approach.

\begin{figure}
    \centering
    \includegraphics[width=.95\textwidth]{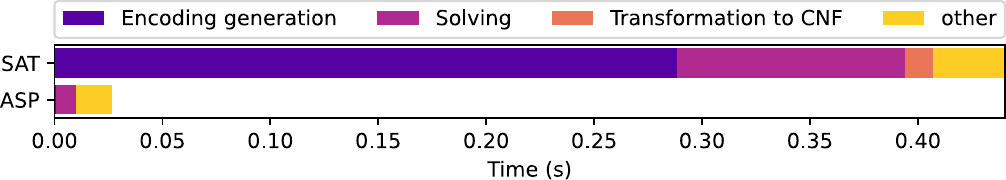}
    \caption{Overview of the average runtime composition of the ASP-based and SAT-based approaches for $\icont$ wrt. the SRS data set.}
    \label{fig:runtime-composition-SRS-contension}
\end{figure}

The previously described findings do not, however, reflect the results regarding the remaining five inconsistency measures on the SRS data set.
More precisely, wrt.\ the SAT-based approaches for the latter measures (i.e., $\iforget$, $\ihs$, $\imdalal$, $\isdalal$, and $\ihdalal$), the encoding part does not make up the largest fraction of the overall average runtime---in fact, it makes up a significantly smaller proportion than solving or CNF transformation in all cases (see also Figure \ref{fig:runtime-composition-SRS}).
The transformation to CNF, which represents the smallest runtime component for $\icont$, plays a more prominent role for the other measures.
In the case of $\ihs$, it even represents the largest fraction. 
Regarding the forgetting-based, as well as all three distance-based measures, the largest runtime component is comprised by solving.
Figure \ref{fig:runtime-composition-SRS-fb} shows the runtime composition with regard to $\iforget$, which represents the previously described pattern that can also be applied to $\imdalal$, $\isdalal$, $\ihdalal$, and (under the restriction that the solving-to-CNF transformation ratio does not quite fit) also to $\ihs$.
Moreover, this figure shows that the only relevant runtime component of the ASP-based approach for $\iforget$ is the solving part.
The same applies to the four remaining measures ($\ihs$, $\imdalal$, $\isdalal$, $\ihdalal$) as well.

As mentioned in the beginning of this section, the total runtime of both ASP and SAT approaches differs from measure to measure. % (for an overview, see Figure \ref{fig:runtime-composition-SRS}).
Moreover, the ratio of the average ASP runtime to the average SAT runtime differs wrt.\ the inconsistency measures as well.
For example, wrt.\ $\ihdalal$, ASP is roughly $73$ times as fast as SAT ($0.03$\,s vs.\ $2.55$\,s), % however, wrt.\ $\isdalal$, ASP is only about $3$ times as fast ($31.63$\,s vs.\ $101.92$\,s).
however, wrt.\ $\iforget$, ASP is only about $7$ times as fast ($9.89$\,s vs.\ $65.30$\,s).

\begin{figure}
    \centering
    \includegraphics[width=.95\textwidth]{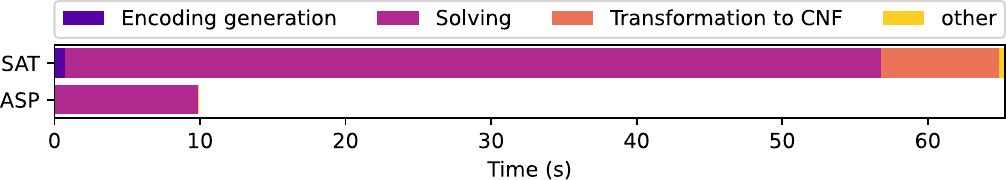}
    \caption{Overview of the average runtime composition of the ASP-based and SAT-based approaches for $\iforget$ wrt. the SRS data set.}
    \label{fig:runtime-composition-SRS-fb}
\end{figure}

Some observations we made wrt.\ the SRS data set also apply to the ARG data set: 
\begin{itemize}
    \item For the ASP-based approaches, the solving part is essentially the only relevant component regarding the overall runtime.
    \item For the SAT-based approaches, solving overall makes up the most relevant part of the runtime composition.
    \item Wrt.\ the SAT approach for $\icont$, the encoding generation part takes up a larger share of the overall runtime than wrt.\ the SAT approaches for the other five measures.
    \item Wrt.\ the SAT approach for $\ihs$, the transformation to CNF takes up a larger fraction of the overall runtime than wrt.\ the SAT approaches for the other measures.
\end{itemize}

On the other hand, the runtime composition wrt.\ the ARG data set also provides some new insights (see Figure \ref{fig:runtime-composition-ARG} for an overview of all six measures).
To begin with, the ratio between the average ASP runtime and the average SAT runtime is smaller than wrt.\ the SRS data set.
More precisely, an ASP approach is at most roughly $7$ times as fast as its corresponding SAT version ($\iforget$).
Moreover, in the case of $\ihs$, the SAT-based method is on average even a bit faster than the ASP-based one. % (see Figure \ref{fig:runtime-composition-ARG-hs}).
However, this observation has to be taken with a grain of salt---the number of instances the ASP-based method could solve is higher than the number of instances the SAT-based approach could solve, and when considering individual knowledge bases, the ASP method was faster in each case (see Figure \ref{fig:scatter-ARG-asp-sat-hs}). 
% and a number of those instances that could be solved by the ASP-based approach but not by the SAT-based one took a rather long time to be solved (i.e., close to the timeout) (see also the respective scatter plot depicted in Figure \ref{fig:scatter-ARG-asp-sat-hs}).
% and a number of those instances that could exclusively be solved by the ASP-based approach took a rather long time to be solved (i.e., close to the timeout) (see also the respective scatter plot depicted in Figure \ref{fig:scatter-ARG-asp-sat-hs}).

% \begin{figure}
%     \centering
%     \includegraphics[width=\textwidth]{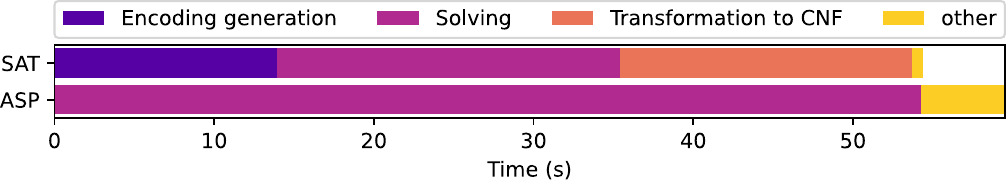}
%     \caption{Overview of the average runtime composition of the ASP-based and SAT-based approaches for $\ihs$ wrt. the ARG data set.}
%     \label{fig:runtime-composition-ARG-hs}
% \end{figure}

\subsubsection{Linear Search in SAT Approaches}\label{sec:linear-search}

In addition to the runtimes of the different approaches considered in this work we logged the inconsistency values wrt.\ the different measures and data sets (in Appendix \ref{app:inc-values}, we include histograms over all inconsistency values that could be retrieved for each data set and measure). % (i.e., we only excluded values that could not be computed by either one of the three respective methods within the given time limit). 
From the results we can see that the range of inconsistency values varies greatly, in particular when considering the size of the search space.
For instance, the contension inconsistency values range from $0$ to $25$ on the SRS data set, with a maximum possible value of\ $|\atoms(\kb)| = 30$ (for the largest fraction of instances).
In contrast, the max-distance inconsistency measure has the same search space as $\icont$, %(i.e., the possible values lie between $0$ and $30$), 
but the values we actually measured in our experiments are only ranging from $0$ to $2$, with roughly $1000$ instances resulting in the value $1$.
Another example is the sum-distance measure---here, the values are ranging from $0$ to $42$. %, and are thus similar to the contension values. 
However, the search space is much larger, with a maximum possible value of $|\atoms(\kb)| \cdot |\kb| = 30 \cdot 100 = 3000$ (again, for the largest instances).
These observations raise the question of whether the binary search approach used in the SAT-based methods is the most practical solution in all cases.
We therefore conduct an experiment to compare the previously used SAT approach which includes a binary search procedure to an adapted version which uses linear search instead.

\begin{table}
    \setlength\extrarowheight{1pt}
    \centering
    \begin{tabular}{|l|rr|rr|}
        \hline
         & \multicolumn{2}{c|}{Binary search} & \multicolumn{2}{c|}{Linear search} \\
        \hline
        \hline
         & \#solved & CRT (s) & \#solved & CRT (s) \\
        \hline
         $\icont$ & $\bf 1800$ & $793$ & $\bf 1800$ & $1670$ \\ 
         $\imdalal$ & $\bf 1618$ & $97589$ & $1585$ & $43506$ \\
         $\isdalal$ & $658$ & $67065$ & $\bf 1068$ & $77712$ \\ 
         \hline
    \end{tabular}
    \caption{Comparison between binary search and linear search in the SAT approach. Displayed are the number of solved instances and the cumulative runtime (CRT) per measure on the SRS data set (1800 knowledge bases). Timeout: $10$ minutes.}
    \label{tab:binary-vs-linear}
\end{table}

In our experiment, we focus on the three previously mentioned inconsistency measures ($\icont$, $\imdalal$, and $\isdalal$) wrt.\ the SRS data set.
% Figure \ref{fig:cactus-binary-vs-linear-search} visualizes the results, additionally including the previous runtime measurements for the corresponding ASP-based and naive methods for reference. 
Table \ref{tab:binary-vs-linear} displays the results (see also Figures \ref{fig:cactus-binary-vs-linear-search}, \ref{fig:scatter-SAT-binary-vs-linear-1}, and \ref{fig:scatter-SAT-binary-vs-linear-2} for further visualizations).
We can see that with regard to $\icont$, the linear search variant of the SAT approach is on average a bit slower than the binary search variant.
Nevertheless, the two methods perform quite similarly.  % (see also Figures \ref{fig:scatter-SAT-binary-vs-linear-1} and \ref{fig:scatter-SAT-binary-vs-linear-2} for a more detailed overview). 
Concerning $\imdalal$, the two approaches perform again similarly. 
However, despite being a bit faster on average, the linear search version times out in $33$ cases more than its binary search counterpart.
Regarding $\isdalal$, the results look quite different than those described before---the linear search variant performs clearly superior, exhibiting $410$ fewer timeouts than the binary search version.
Thus, although binary search is in theory a more efficient search strategy than linear search, it is not necessarily the superior option in practice.

% Hence, overall, we can observe that the selection of the most suitable search strategy for a practical application depends on the given data, as well as the choice of inconsistency measure.

% \begin{figure}
%     \begin{subfigure}{.5\textwidth}
%         \includegraphics[width=\textwidth]{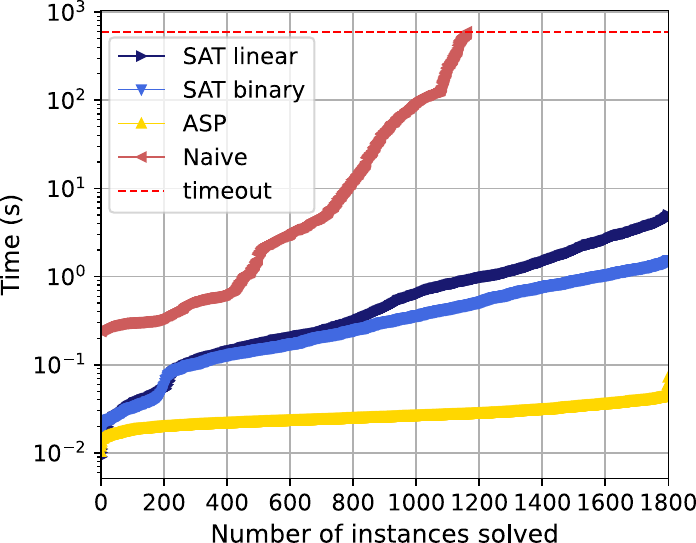}
%         \caption{Contension inconsistency measure ($\icont$)}
%     \end{subfigure}%
%     \begin{subfigure}{.5\textwidth}
%         \includegraphics[width=\textwidth]{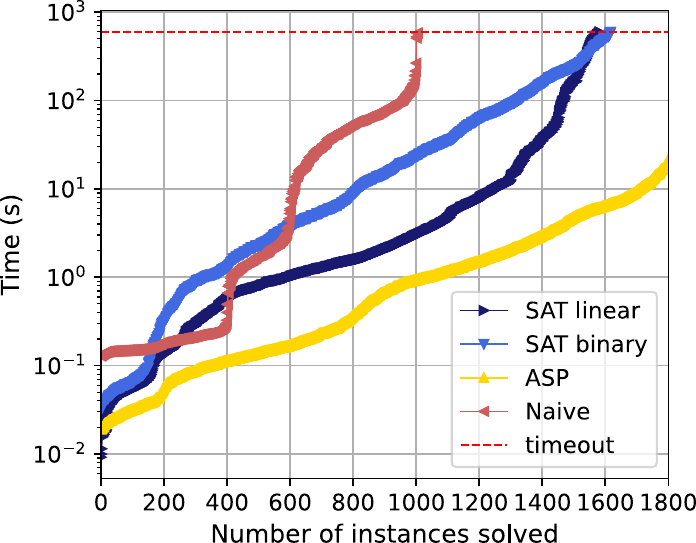}
%         \caption{Max-distance inconsistency measure ($\imdalal$)}
%     \end{subfigure}
    
%     \centering
%     \begin{subfigure}{.5\textwidth}
%         \includegraphics[width=\textwidth]{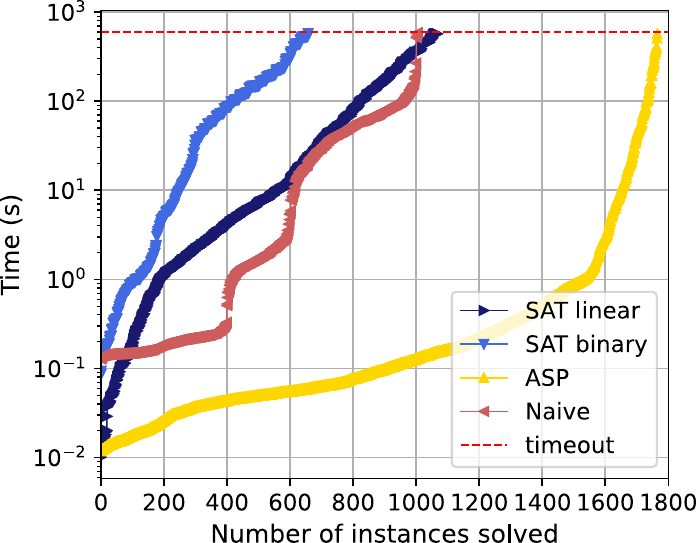}
%         \caption{Sum-distance inconsistency measure ($\isdalal$)}
%     \end{subfigure}
%     \caption{Runtime comparison of the SAT-based approaches based on linear search (for $\icont$, $\imdalal$, and $\isdalal$) and the corresponding binary search versions, as well as the ASP-based versions, on the SRS data set. Timeout: $600$ seconds.}
%     \label{fig:cactus-binary-vs-linear-search}
% \end{figure}

\subsubsection{MaxSAT Approaches}\label{sec:maxsat}

% In the previous section, we examined the impact of different search strategies used in some SAT approaches on the overall runtime.
Another aspect we aim to investigate regarding SAT is the use of a \textit{MaxSAT} solver.
The \textit{maximum satisfiability problem} (MaxSAT) is the problem of finding an assignment of truth values that satisfies a maximum number of clauses (for an overview on this topic, see, e.\,g., \cite{bacchus2021maximum,li2021maxsat}).
Hence, our proposed technique of using iterative SAT checks is essentially a naive method of solving a MaxSAT problem.
In the following, we use the contension measure as an example to show how our SAT encodings can be modified to become MaxSAT encodings.

A (partial) MaxSAT encoding consists of \textit{hard clauses} and \textit{soft clauses}.
While the hard clauses must be satisfied, the soft clauses do not.
Nevertheless, the goal is to satisfy as many soft clauses as possible.
Recall the SAT encoding for $\icont$ being determined by (SC1)--(SC17) in Section \ref{sec:sat-encoding-ic}.
The signature, defined by (SC1)--(SC2), remains the same in the MaxSAT case, and
% Moreover, 
we use the constraints defined via (SC3)--(SC16) as hard clauses. % in the MaxSAT encoding.
Only (SC17), the at-most-$u$ constraint, %($\text{at\_most}(u, \{\varAt_b \mid \varAt \in \atoms(\kb)\}$), 
is not required anymore.
Instead, we define a soft clause $\lnot \varAt_b$ for each $\varAt \in \atoms(\kb)$.
The intuition behind this is that as many atoms as possible are supposed to be \textit{not} set to $b$.
Note that the SAT-based approaches for the other inconsistency measures can be adapted to MaxSAT in a similar fashion.

We perform an experiment to compare the MaxSAT approach with the previously discussed approaches for $\icont$.
To achieve this, we use the SRS data set, and as a MaxSAT solver, we use \textit{EvalMaxSAT}\footnote{\url{https://github.com/FlorentAvellaneda/EvalMaxSAT}} \cite{avellaneda2020short}.
% The runtime results are presented in Figure \ref{fig:maxsat-cactus} (in addition, the corresponding scatter plots are found in Figure \ref{fig:scatter-maxsat} in Appendix \ref{app:scatter-plots}).
The runtime results are presented in Table \ref{tab:maxsat} (see also Figures \ref{fig:maxsat-cactus} and \ref{fig:scatter-maxsat}).
We can observe that the MaxSAT approach is indeed overall faster than the iterative SAT approach.
However, it is, in total, still a bit inferior to the ASP-based method.
On the other hand, it should be noted that the runtimes for both the MaxSAT and the ASP approach are quite short ($<1$\,s for each instance). 
Altogether, the results of this experiments show that the use of MaxSAT has potential and is worth being investigated % further (e.\,g., for other inconsistency measures and wrt.\ other data sets) 
in future work.

\begin{table}
    \setlength\extrarowheight{1pt}
    \centering
    \begin{tabular}{|rr|rr|rr|}
        \hline
         \multicolumn{2}{|c|}{SAT binary search} & \multicolumn{2}{c|}{SAT linear search} & \multicolumn{2}{c|}{MaxSAT} \\
        \hline
        \hline
        \#solved & CRT (s) & \#solved & CRT (s) & \#solved & CRT (s) \\
        \hline
        $1800$ & $793$ & $1800$ & $1670$ & $1800$ & $\bf 221$\\ 
         \hline
    \end{tabular}
    \caption{Comparison between the two variants of the iterative SAT approach for $\icont$ and their MaxSAT counterpart. Displayed are the number of solved instances and the cumulative runtime (CRT) per measure on the SRS data set (1800 knowledge bases). Timeout: $10$ minutes.}
    \label{tab:maxsat}
\end{table}

% \begin{figure}
%     \centering
%     \includegraphics[width=.5\textwidth]{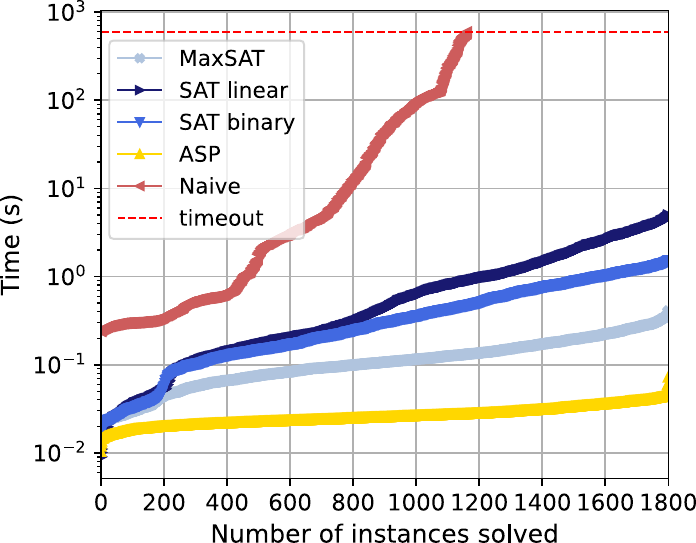}
%     \caption{Runtime comparison of the MaxSAT approach for $\icont$ and the previously introduced approaches on the SRS data set. Timeout: 10 minutes.}
%     \label{fig:maxsat-cactus}
% \end{figure}

\subsubsection{Core-Guided Optimization in ASP}\label{sec:core-guided}

% With regard to ASP, it is also possible to apply different optimization strategies. 
Modern ASP systems, such as Clingo, allow for the use of different optimization strategies.
% -> Standard sollte Branch-and-Bound sein 
% - hier vielleicht direkt anmerken, dass wir sonst ja die Standard-Einstellungen benutzen und default-mäßig ein model-guided approach eingestellt ist 
%   -> (diese Aussage noch mal genau nachprüfen!)
% - Es gibt jedoch noch einige weitere Lösungsstrategien, wie z. B. core-guided 
%   -> (hier ggf. noch weitere nennen?) 
Clingo's default option (which is used in all experiments described above) is a \textit{branch-and-bound} approach~\cite{gebser2012conflict}.
% - erst wird ein stable model gefunden (sofern dies möglich ist), dann wird versucht, eine bessere Lösung zu finden 
%   -> mithilfe von #sum? 
In this approach, we first search for an initial solution, i.\,e., an initial stable model, and (in case a solution exists) obtain its objective value.
The next step consists of a loop in which we try to find a solution with a strictly ``better'' objective value---i.\,e., in the case of minimization, a strictly lower value, and in the case of maximization, a strictly higher value.
This is achieved by adding constraints which ensure that any new solution must have a lower (resp.\ higher) objective value than the solution derived before.
If no ``better'' solution can be found, we know that the previous solution corresponds to an optimum~\cite{kaminski2017tutorial}. 
% a constraint is added to the problem specification requiring that a solution must have a strictly better objective value than the one just obtained

Although the branch-and-bound optimization approach led to promising results in our previous experiments, Clingo has further optimization techniques to offer, which we have yet to explore.
%   -> kurz erklären, was das bedeutet / Intuition geben 
% Es können jedoch auch andere Strategien verwendet werden.
% -> dann der folgende Absatz mit "As an example, we perform ... "
%   -> dann dann erklären, was core-guided ist
As an example, we perform an experiment in which we set Clingo's optimization strategy to ``\texttt{usc}''~\cite{gebser2015progress}, % which implements the \textsc{OLL} algorithm~\cite{andres2012unsatisfiability}, thereby employing a \textit{core-guided} approach. % ---a technique from the area of MaxSAT solving.
employs a \textit{core-guided} approach~\cite{andres2012unsatisfiability}.
% - kurz erklären, was diese Strategie macht, bzw. wann sie Anwendung findet
%   -> lt. Kommentar vom Reviewer funktioniert diese Strategie besonders bei niedrigen Werten gut
% - bzw. noch besser: sowohl die default-Strategie *und* usc kurz erklären (bzw. die Intuition davon)
% \ik{core-guided erklären und erklären, wann das sinnvoll sein könnte}
The latter emerged in the area of MaxSAT solving~\cite{fu2006solving,marques2008towards} and is based on the following procedure.
We first check if all soft clauses can be satisfied, i.\,e., we implicitly introduce a lower bound of $0$ soft clauses that are allowed to be falsified.
If this is not the case, we allow for at most one soft clause to be falsified (i.\,e., we increase the lower bound to $1$), and check for satisfiability again.
This process is repeated until we find a satisfiable solution. 
In order to guide this process, at each step an \textit{unsatisfiable core} (an unsatisfiable set of clauses) is extracted and only soft clauses from that a core are allowed to be falsified~\cite{fu2006solving}.
% ---hence the name \textit{core-guided optimization}~\cite{fu2006solving}.
This approach has been transferred to ASP~\cite{andres2012unsatisfiability}, with the literals of ASP optimization statements being interpreted as the soft clauses of a MaxSAT problem.
Since, in the case of minimization, core-guided optimization is useful when the target values are close to $0$, we selected inconsistency measures and data sets for which this applies (see the histograms in Appendix~\ref{app:inc-values}).
To be precise, we performed an experiment with the ASP approach for $\ihs$ on the ML data set; here, the inconsistency values tend to be very low (either $0$ or $1$; see Figure~\ref{fig:histos-ML}), while the search space (i.\,e., the number of formulas) is still quite large (the mean number of formulas is $7506$; see Table~\ref{tab:datasets-overview}). 
In addition, we considered the ARG data set in combination with $\ihs$ and $\imdalal$, as the inconsistency values are likewise always $0$ or $1$.
% - ARG hat aber im Durchschnitt nur 989 Formeln (bzgl. Suchraum von hs)
%   -> Suchraum von max-dist: (|At(K)|) im Durchschnitt 827 Atome 
% - Wieviel % mehr Instanzen konnten gelöst werden?
%   -> ML/hs: 0,046721311
%   -> ARG/hs: 0,013574661
%   -> ARG/max-dist: 0,176136364

% \begin{table}
%     \setlength\extrarowheight{1pt}
%     \centering
%     \begin{tabular}{|rr|rr|}
%         \hline
%          \multicolumn{2}{|c|}{ASP core-guided} & \multicolumn{2}{c|}{ASP} \\
%         \hline
%         \hline
%          \#solved & CRT (s) & \#solved & CRT (s) \\
%         \hline
%         % ML + hs:
%          $\bf 1220$ & $51441$ & $1163$ & $58203$ \\ 
%          \hline
%     \end{tabular}
%     \caption{Comparison between a core-guided optimization strategy for the ASP approach and the default optimization strategy (branch-and-bound).
%     Displayed are the number of solved instances and the cumulative runtime (CRT) for $\ihs$ on the ML data set (1920 knowledge bases).
%     Timeout: $10$ minutes.}
%     \label{tab:asp-usc}
% \end{table}

\begin{table}
    \setlength\extrarowheight{1pt}
    \centering
    \begin{tabular}{|l|l|rr|rr|}
        \hline
        & & \multicolumn{2}{c|}{ASP core-guided} & \multicolumn{2}{c|}{ASP} \\
        \hline
        \hline
        & & \#solved & CRT (s) & \#solved & CRT (s) \\
        \hline
        % ML + hs:
        ML & $\ihs$ & $\bf 1220$ & $51441$ & $1163$ & $58203$ \\ 
        ARG & $\ihs$ & $\bf 221$ & $11146$ & $218$ & $12975$ \\ 
        ARG & $\imdalal$ & $\bf 176$ & $7266$ & $145$ & $8408$ \\ 
         \hline
    \end{tabular}
    \caption{Comparison between a core-guided optimization strategy for the ASP approach and the default optimization strategy (branch-and-bound).
    Displayed are the number of solved instances and the cumulative runtime (CRT) for $\ihs$ on the ML data set (1920 knowledge bases), as well as $\ihs$ and $\imdalal$ on the ARG data set (326 knowledge bases).
    Timeout: $10$~minutes.}
    \label{tab:asp-usc}
\end{table}

% - dann Tabelle zeigen und kurz auf Ergebnisse eingehen 
%   -> auf Tabelle referenzieren
%   -> im Anhang dann auch Cactus Plots (+ Scatter Plots?) 
% - es konnten mehr Instanzen gelöst werden bei gleichzeitig geringerer Laufzeit 
The results in Table~\ref{tab:asp-usc} show that the core-guided optimization strategy can indeed lead to improved runtimes 
(see also Figure~\ref{fig:asp-usc-cactus} in Appendix~\ref{app:cactus} for the cactus plots and Figure~\ref{fig:scatter-asp-usc-bb} in Appendix~\ref{app:scatter-plots} for the corresponding scatter plots).
The core-guided variant solves more instances while also maintaining a lower cumulative runtime in all three cases.
More specifically, the core-guided approach solves between $1.36\,\%$ (ARG/$\ihs$) and $17.61\,\%$ (ARG/$\imdalal$) more instances.
% More specifically, the core-guided variant solves $57$ instances more than the default branch-and-bound variant, while also maintaining a lower cumulative runtime.

\subsubsection{Previous ASP Approaches}\label{sec:previous-asp}

There already exist two ASP-based approaches for $\icont$, and one for $\iforget$, and $\ihs$, respectively, in the literature.
To be precise, in \cite{kuhlmann2020algorithm}, the authors propose a method similar to our overall SAT approach, which uses ASP encodings for the problem $\upperic$ in order to find $\valueic$ via binary search.
A revised version of this approach, which calculates $\valueic$ directly within ASP by means of a minimize statement, is introduced in \cite{kuhlmann2021algorithms}.
Note that in the latter version, only propositional language concepts are used, which leads to a program that is already ground. % (i.e., the program is essentially grounded manually, instead of by a grounder).
The authors also propose ASP encodings for $\iforget$ and $\ihs$ in the same manner.
In contrast, the ASP approach presented in \cite{kuhlmann2022comparison} (which is also used in the work at hand) makes use of first-order predicates and variables, which enables an automated, and internally optimized, grounding procedure.
% The same applies to the ASP approaches for the other five inconsistency measures.
In order to determine whether this actually has a positive effect on the runtime, we compare the different versions with each other using the contension measure as an example.
% Note that we apply exactly those Java implementations which were used in the two corresponding papers \cite{kuhlmann2020algorithm,kuhlmann2021algorithms}.
Note that we apply exactly those implementations which were used in the two corresponding papers \cite{kuhlmann2020algorithm,kuhlmann2021algorithms}.

\begin{table}
    \setlength\extrarowheight{1pt}
    \centering
    \begin{tabular}{|rr|rr|rr|}
        \hline
        \multicolumn{2}{|c|}{ASP binary search} & \multicolumn{2}{c|}{ASP minimize v1} & \multicolumn{2}{c|}{ASP} \\
        \hline
        \hline
        \#solved & CRT (s) & \#solved & CRT (s) & \#solved & CRT (s) \\
        \hline
        $1200$ & $59618$ & $\bf 1800$ & $479$ & $\bf 1800$ & $48$ \\
        \hline
    \end{tabular}
    \caption{Runtime comparison of the different versions of the ASP approach for $\icont$ on the SRS data set. ``ASP binary search'' refers to the version from \cite{kuhlmann2020algorithm}, ``ASP minimize v1'' to the version from \cite{kuhlmann2021algorithms}, and ``ASP'' to the current one.
    Displayed are the number of solved instances and the cumulative runtime (CRT). Timeout: $10$ minutes.}
    \label{tab:asp-comparison}
\end{table}

% The results of this experiment, which are illustrated in Figure \ref{fig:asp-cactus}, show that the new version of the method in fact outperforms its predecessors.
The results of this experiment, presented in Table \ref{tab:asp-comparison} (see Figure \ref{fig:asp-cactus} for an additional visualization), confirm that the newest version of the ASP-based method in fact outperforms its predecessors.
The first ASP approach \cite{kuhlmann2020algorithm}, which is based on a binary search procedure, clearly performs the poorest, and hits the timeout of 10 minutes in precisely $600$ cases.
% These cases correspond exactly to the three most challenging parts of the SRS data set, i.e., those with the largest signature sizes in combination with the largest numbers of formulas per knowledge base (see the bottom three rows of Table \ref{tab:RAN-dataset}).
The second version of the approach \cite{kuhlmann2021algorithms} solves all $1800$ instances within the time limit, nevertheless it performs on average roughly $10$ times slower than the current version ($0.266$ vs. $0.027$ seconds).
Although the new version might have an advantage by being implemented in C++, both rely on the same ASP solver (Clingo 5.5.1). 
In fact, the solving time itself is around $3$ times shorter wrt. the new ASP version compared to the previous one ($0.009$ vs. $0.028$ seconds on average).

% \begin{figure}
%     \centering
%     \includegraphics[width=.5\textwidth]{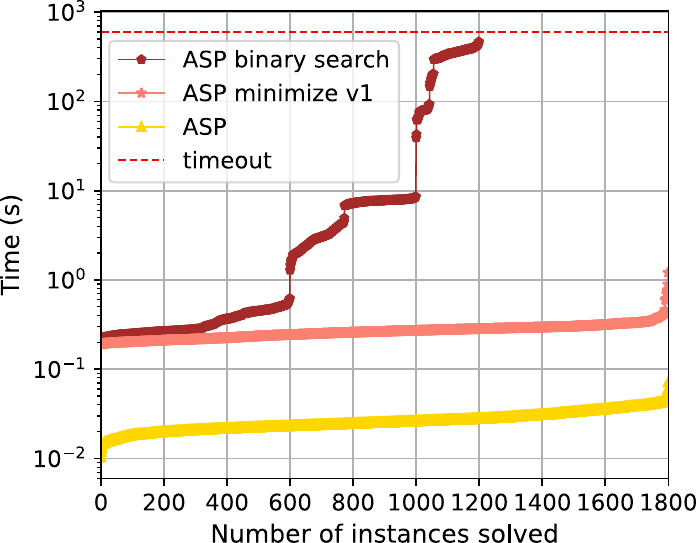}
%     \caption{Runtime comparison of the different versions of the ASP approach on part of the SRS data set. ``ASP binary search'' refers to the version from \cite{kuhlmann2020algorithm}, ``ASP minimize v1'' to the version from \cite{kuhlmann2021algorithms}, and ``ASP'' to the new one. Timeout: 10 minutes.}
%     \label{fig:asp-cactus}
% \end{figure}

\subsection{Discussion}\label{sec:discussion}

In the experiments presented above, we investigated three approaches for the calculation of six different inconsistency measures on five different data sets.
Overall, we focused our analysis on the comparison of runtimes; however, we took additional aspects into account, such as the distribution of measured inconsistency values with regard to the different measures and data sets. 
As a first step, we compared the runtimes of all three approaches per inconsistency measure and data set.
Our results confirmed that both the SAT-based and the ASP-based approaches perform, as expected, superior to the naive baseline algorithms.
Moreover, the results showed that altogether, the ASP methods are faster than the SAT variants.
Only in very isolated cases can the SAT-based or naive approaches retrieve inconsistency values faster than the ASP-based ones or result in fewer timeouts.
% For instance, on data set ARG, the SAT method for the hit-distance measure is on average slower than its ASP counterpart, but it can solve a few instances more.
Furthermore, the ASP-based methods are not only faster than the other two approaches on average, but also wrt.\ the vast majority of individual instances.
% E.\,g., regarding the ML data set, there exists not a single instance for which a SAT-based method outperformed the corresponding ASP-based one wrt.\ either one of the inconsistency measures considered in this work (for further details, see the scatter plots in Appendix \ref{app:scatter-plots}).

Two factors that presumably play a role in the performance differences between the SAT approaches and the ASP approaches are the following.
First, in each SAT-based approach, we need to generate a new at-most-$k$ constraint in each iteration (note that this has been addressed in Section~\ref{sec:runtime-composition} as well).
Second, the SAT solver does not reuse learned clauses from previous calls, i.\,e., each instance has to be constructed from scratch. 
%   -> sollte in Future Work vielleicht mal verbessert werden?^^ 
The ASP encodings, on the other hand, only have to be constructed and solved once for each knowledge base. %, and, consequently, there is also only one solving process involved.

We could further observe that the naive baseline methods for the contension and the forgetting-based measure outperformed the ASP-based and SAT-based approaches in some cases. % (note that wrt.\ SAT, this happened significantly more often than wrt.\ ASP). 
The most likely reason for this is the fact that the particular instances for which this effect was observed had a very low inconsistency value---in fact, most of these instances were consistent. 
The baseline methods for both the contension measure and the forgetting-based measure include a satisfiability check (via a SAT solver) as a first step. 
Hence, consistent knowledge bases can be identified rather quickly. 
Very small inconsistency values might also be retrieved rather fast by the two naive methods, as they both search for the correct value in a linear manner, starting from the lowest possible value. % (i.e., $1$, if the initial SAT check determines that the knowledge base is not consistent).
% Another factor that can further decrease the runtime of the two methods is whether or not a given knowledge base is already in CNF.
% If a knowledge base is not in CNF, additional transformation steps are required in order to guarantee a valid input for the SAT solver.
Another factor that can further decrease the runtime of the two methods is if the given knowledge base is already in CNF, so that no additional transformation steps are required in order to guarantee a valid input for the SAT solver.
% In our evaluation, only the SC data set consists of knowledge bases that are in CNF; all other data sets contain arbitrary formulas.
% Moreover, all solved instances from this data set are either consistent or possess inconsistency value $1$.
% Although the SC data set proved to be overall rather challenging (only a small fraction of instances could be solved at all), the naive methods for the contension and the forgetting-based measure performed relatively well. 
% To be specific, the two aforementioned methods could both solve $11$ instances, while the ASP-based ones could only solve $7$ and $4$, respectively (see Table \ref{tab:cumulative-runtimes} for details regarding numbers of timeouts).

A general observation from the evaluation results is that none of the three approaches for either one of the considered inconsistency measures could solve all instances from all data sets.
Hence, even the ASP approaches, which altogether performed superior, clearly reached their limits.
% For example, no instance of the ML data set could be solved for the sum-distance inconsistency measure.
% Another, previously mentioned, example is that only very few instances from the SC data set could be solved---even with an increased timeout of $5000$\,s. 
In the following, we aim to identify factors that make the computation of inconsistency values for a given knowledge base ``difficult'' (either for individual approaches or in general).
\begin{itemize}
    \item A rather obvious factor is the size of the given knowledge base in terms of the number of formulas, number of atoms in the signature, and number of connectives per formula (see Table \ref{tab:datasets-overview} for an overview regarding the data sets used in our experiments).
    A larger knowledge base consequently leads to larger encodings, as well as a larger range of possible inconsistency values, which in turn leads to a more difficult problem that the SAT or ASP solver has to solve, as well as to more search steps wrt.\ the SAT-based approaches.
    The SAT/ASP solver having to deal with a more difficult problem is directly connected to an increase in runtime, as the results in Section \ref{sec:runtime-composition} indicated that the solving process overall makes up the largest fraction of the runtimes of both the SAT-based and ASP-based methods.
    
    \item 
    % Although all six measures considered in this paper lie on the same level of the polynomial hierarchy, they are not equally ``expressive'' \cite{Thimm:2016a}, i.\,e., the ranges of their possible values vary greatly.
    Although all six measures considered in this paper lie on the same level of the polynomial hierarchy \cite{Thimm:2016a}, the ranges of their possible values vary greatly.
    A larger search space particularly influences the SAT-based approaches. 
    This is reflected, e.g., in the fact that for the sum-distance measure, fewer instances could be solved by the SAT-based approach than for any of the other measures.
    % For instance, we could observe that across all five data sets, the sum-distance inconsistency measure leads to the highest number of timeouts wrt.\ both the ASP-based and SAT-based methods.
    % This is no coincidence, as the search space of the sum-distance measure is always at least as large as (and in practice usually much larger than) that of the contension, the hitting set, the max-distance, and the hit-distance inconsistency measure.
    
    \item The range of the resulting inconsistency values may also be a relevant factor to take into account.
    If the inconsistency values are close to $0$, a SAT check as a preprocessing step could prove useful---we observed this effect with the naive approaches for the contension measure and the forgetting-based measure, in particular if the given knowledge base is already in CNF.
    % This is also important for the choice of search strategy regarding the SAT-based approaches (see Section \ref{sec:linear-search}).
    % As mentioned before, if the inconsistency values are close to $0$, it might even make sense to use a naive approach for the contension measure or the forgetting-based measure, in particular if the given knowledge base is already in CNF. 
    Moreover, the range of the inconsistency values is also relevant for the choice of search strategy regarding the SAT-based approaches (see Section \ref{sec:linear-search}).
\end{itemize}

Although the previously discussed points hint at which measures might be more suitable in practice in terms of runtime, the choice of an appropriate inconsistency measure for a specific practical application still depends on other factors as well.
For instance, in some scenarios it could be useful to look for conflicting formulas (which could be done by using the hitting set measure or the hit-distance measure), while in other scenarios it could be of greater interest to look for the atoms involved in a conflict (which could be done by using the contension or forgetting-based measure). 
Moreover, the granularity of the measured inconsistency could be of interest (e.\,g., the forgetting-based measure has a finer granularity than the contension measure).
Yet another aspect that should be taken into consideration when dealing with a practical application is the usefulness of the inconsistency measure for resolving the conflicts in the given knowledge base.

\section{Conclusion}\label{sec:conclusion}

In the course of this work, we proposed a SAT-based and an ASP-based approach for each of six different inconsistency measures (i.\,e., the contension, the forgetting-based, the hitting set, the max-distance, the sum-distance, and the hit-distance inconsistency measure).
With the SAT-based approaches, we encode the problem of whether a given value is an upper bound of the inconsistency degree, and retrieve the actual inconsistency value by means of iterative SAT solver calls in a (binary) search procedure.
In ASP, we can encode the problem of finding an inconsistency value directly by utilizing optimization statements.
% More precisely, we encode the problem of finding an inconsistency value in such a way that the optimal answer set provides the solution.

In an extensive experimental evaluation we compared the ASP-based to the SAT-based approaches, focusing on runtime.
To achieve this, we used a total of five different data sets, two of which were used in preceding works \cite{kuhlmann2021algorithms,kuhlmann2022comparison}, while the remaining three are novel. 
All data sets are publicly available for use in future work.
The results of our experiments first of all demonstrated that both the SAT-based and the ASP-based methods overall clearly outperform the naive implementations we used as a baseline---which, to the best of our knowledge, are the only previously existing implementations for the inconsistency measures considered.
Moreover, the ASP approaches performed superior to the SAT approaches (with only very few exceptions). 

Nonetheless, the results also showed that even the ASP methods are by far not able to solve all instances from all data sets wrt.\ all considered inconsistency measures.
Some data sets turned out to be generally challenging; an example of this is the SC data set, which contains benchmark data from the SAT competition 2020. 
A reason for this lies in the sheer size of the individual knowledge bases, i.\,e., the number of formulas contained in a knowledge base, its signature size, and the number of connectives per formula.
A large knowledge base requires a large SAT/ASP encoding (in terms of variables and clauses/rules), which in turn leads to an increased solving time and therefore an increased runtime---our results also showed that the solving process altogether makes up the largest fraction of the runtime composition of both the SAT and the ASP approaches.
% Further, with regard to the SAT methods, a large knowledge base is usually associated with a large search space and hence, an increased number of SAT solver calls.
% On a different note, our results revealed that, although all six inconsistency measures considered in this work are on the same level of the polynomial hierarchy, some measures are faster to compute than others.
% An explanation for this is that the different measures have differently sized search spaces and, in addition, the SAT and ASP encodings require different numbers of variables and clauses/rules, which affects the solving time.

In order to use approaches to inconsistency measurement in a practical application in a sensible fashion, an analysis of the given data is vital.
Information about signature sizes or numbers of formulas per knowledge base can be an indicator for which inconsistency measure will likely be a reasonable choice in terms of runtime. 
Moreover, if there is any information available about what degrees of inconsistency can roughly be expected, an adjusted search strategy can be used.
Such information can further be used to develop preprocessing techniques.
For instance, if there is a certain probability that the knowledge bases at hand might be consistent, an initial satisfiability check by means of a SAT solver (as we saw with the naive implementations of the contension and the forgetting-based measure) could prove useful.
Certainly there are many other options regarding the topic of preprocessing; however, this is subject to future work.

Another issue to contemplate in terms of future work is a more in-depth examination of MaxSAT approaches.
We have already observed in our experiments that a MaxSAT approach for the contension inconsistency measure performs superior to its iterative SAT counterpart.
% This should be investigated further, e.\,g., with regard to different inconsistency measures, or different data sets.
Moreover, there are numerous problem solving paradigms that have not been explored yet in the field of inconsistency measurement.
Examples include \textit{Integer Linear Programming}, \textit{Quantified Boolean Formulas}, and \textit{Counterexample-Guided Abstraction Refinement} approaches.
The choice of problem solving paradigm depends on the complexity of the measure at hand---which is related to yet another possible topic of future work: the investigation of algorithms for inconistency measures beyond the first level of the polynomial hierarchy.
Furthermore, we merely covered inconsistency measures designed for propositional knowledge bases.
The development of approaches for computing inconsistencies in other domains, such as different logics, data bases, or business process models, constitutes another open problem.
In addition, our evaluation showed that all approaches introduced in this work hit their limit at some point during the experiments.
To counteract this effect, i.\,e., to develop more scalable approaches, we already mentioned the possibility of applying preprocessing methods.
A different strategy to improve scalability could be to apply approximation techniques. % \\
% \\\
% \vspace*{0.4cm}
% \noindent
% {\bf Acknowledgements} 
% The research reported here was supported by the Deutsche Forschungsgemeinschaft under grant 506604007. \\
% \\\
% % \vspace*{0.4cm}
% % \noindent
% {\bf Competing Interests} 
% The authors declare none.

\acks{The research reported here was supported by the Deutsche Forschungsgemeinschaft under grant 506604007.
}

\appendix

\section{Proofs} \label{app:proofs}

This appendix contains the correctness proofs for the SAT-based approaches introduced in Section \ref{sec:sat}, and the ASP-based approaches introduced in Section \ref{sec:asp}.

%%% SAT
\subsection{SAT-based Algorithms}\label{app:proofs-sat}

Each of the following sections comprises a correctness proof corresponding to the SAT encoding for one of the six inconsistency measures considered in this work.

\subsubsection{The Contension Inconsistency Measure}\label{app:proof-sat-contension}

{\setcounter{table}{0}
\renewcommand\tablename{Encoding}
%\begin{table}
\begin{small}
\setlength\extrarowheight{2pt}
% \begin{longtable}{| >{\columncolor{gray!20}}L{.05\textwidth}  >{\columncolor{gray!40}}L{.496\textwidth} >{\columncolor{gray!40}}L{.34\textwidth} |}
\begin{longtable}{| >{\columncolor{gray!20}}L{.05\textwidth} >{\columncolor{gray!40}}L{.76\textwidth} >{\columncolor{gray!40}}L{.08\textwidth} |}
    % \centering
    % \small
    %\setlength\extrarowheight{2pt}
    % %\begin{tabular}{| >{\columncolor{gray!20}}L{.05\textwidth}  >{\columncolor{gray!40}}L{.496\textwidth} >{\columncolor{gray!40}}L{.34\textwidth} |}
     \hline 
        \rowcolor{gray!20} \multicolumn{3}{|l|}{\textbf{Signature}} \\
        \rowcolor{gray!20} \multicolumn{3}{|l|}{For every atom $\varAt \in \atoms(\kb)$:} \\
        & Create atoms $\varAt_t$, $\varAt_f$, $\varAt_b$ & (SC1) \\
        
        \rowcolor{gray!20} \multicolumn{3}{|l|}{For every formula $\varSub \in \mathsf{sub}(\kb)$:} \\
        & Create atoms $v^t_{\varSub}$, $v^f_{\varSub}$, $v^b_{\varSub}$ & (SC2) \\
        
        \hline
        
        \rowcolor{gray!20} \multicolumn{3}{|l|}{\textbf{Constraints}} \\
        \rowcolor{gray!20} \multicolumn{3}{|l|}{For every atom $\varAt \in \atoms(\kb)$:} \\
        & $(\varAt_t \vee \varAt_f \vee \varAt_b) \wedge (\neg \varAt_t \vee \neg \varAt_f) \wedge (\neg \varAt_t \vee \neg \varAt_b) \wedge (\neg \varAt_b \vee \neg \varAt_f)$ & (SC3)  \\
        
        % conjunction:
        \rowcolor{gray!20} \multicolumn{3}{|l|}{For every conjunction $\varSub_c = \varSubSub_{c,1} \land \varSubSub_{c,2}$ appearing in some formula:} \\
        & $v_{\varSub_c}^t \leftrightarrow v_{\varSubSub_{c,1}}^t \wedge v_{\varSubSub_{c,2}}^t$ & (SC4) \\
        & $v_{\varSub_c}^f \leftrightarrow v_{\varSubSub_{c,1}}^f \vee v_{\varSubSub_{c,2}}^f$ & (SC5) \\
        & $v_{\varSub_c}^b \leftrightarrow (\neg v_{\varSubSub_{c,1}}^t \vee \neg v_{\varSubSub_{c,2}}^t) \wedge \neg v_{\varSubSub_{c,1}}^f \wedge \neg v_{\varSubSub_{c,2}}^f$ & (SC6) \\
        
        % disjunction:
        \rowcolor{gray!20} \multicolumn{3}{|l|}{For every disjunction $\varSub_d = \varSubSub_{d,1} \lor \varSubSub_{d,2}$ appearing in some formula:} \\
        & $v_{\varSub_d}^t \leftrightarrow v_{\varSubSub_{d,1}}^t \vee v_{\varSubSub_{d,2}}^t$ & (SC7) \\
        & $v_{\varSub_d}^f \leftrightarrow v_{\varSubSub_{d,1}}^f \wedge v_{\varSubSub_{d,2}}^f$ & (SC8) \\
        & $v_{\varSub_d}^b \leftrightarrow (\neg v_{\varSubSub_{d,1}}^f \vee \neg v_{\varSubSub_{d,2}}^f) \wedge \neg v_{\varSubSub_{d,1}}^t \wedge \neg v_{\varSubSub_{d,2}}^t$  & (SC9) \\
        
        % negation:
        \rowcolor{gray!20} \multicolumn{3}{|l|}{For each negation $\varSub_n = \lnot \varSubSub_n$ appearing in some formula:} \\
        & $v^t_{\varSub_n} \leftrightarrow v^f_{\varSubSub_n}$ & (SC10) \\
        & $v^f_{\varSub_n} \leftrightarrow v^t_{\varSubSub_n}$ & (SC11) \\
        & $v^b_{\varSub_n} \leftrightarrow v^b_{\varSubSub_n}$ & (SC12) \\
        
        % if a formula consists of a single atom:
        \rowcolor{gray!20} \multicolumn{3}{|l|}{For every formula $\varSub_a$ which consists of a single atom $\varAt$:} \\
        & $v_{\varSub_a}^t \leftrightarrow \varAt_t$ & (SC13) \\
        & $v_{\varSub_a}^f \leftrightarrow \varAt_f$ & (SC14) \\
        & $v_{\varSub_a}^b \leftrightarrow \varAt_b$  & (SC15) \\
        
        \rowcolor{gray!20} \multicolumn{3}{|l|}{For every formula $\varFor \in \kb$:} \\
        & $v_\varFor^t \lor v_\varFor^b$ & (SC16) \\
        
        % Cardinality constraint:
        \rowcolor{gray!20} \multicolumn{3}{|l|}{Let $\atoms^\kb_b = \{\varAt_b \mid \varAt \in \atoms(\kb) \}$. Cardinality constraint:} \\
        & $\mathrm{at\_most}(u,\atoms^\kb_b)$ & (SC17) \\
        
        \hline
    %\end{tabular}
    \caption{Overview of the SAT encoding $\satc(\kb,u)$ of the contension inconsistency measure $\icont$.
    $\kb$ is a given knowledge base and $\atoms(\kb)$ its signature. 
    $u$ is a candidate for an upper limit of $\icont(\kb)$. 
    Let $\mathsf{sub}(\kb)$ be the set of subformulas of $\kb$ as defined in Section \ref{sec:satisfiability-solving}. 
    % All constraints are converted into CNF using Tseitin's method.
    The signature size of the encoding is $\abs{\atoms(\kb)} \cdot 3 + \abs{\mathsf{sub}(\kb)} \cdot 3 $.
    % The signature size of the encoding is $\abs{\atoms(\kb)} \cdot \abs{\mathsf{sub}(\kb)} \cdot 3 $.
    }
    \label{tab:sat-contension}
%\end{table}
\end{longtable}
\end{small}
}

{
\renewcommand{\thetheorem}{\ref{thm:sat-c}}
\begin{theorem}
For a given value $u$, the encoding $\satc(\kb,u)$ is satisfiable if and only if $\icont(\kb) \leq u$.
\begin{proof}
    Assume $\satc(\kb,u)$ is satisfiable and let $\omega$ be a model of $\satc(\kb,u)$. Define the three-valued interpretation $\omega^3$ via
    \begin{align*}
        \omega^3(\varAt) & = \left\{\begin{array}{ll}
                            t & \text{if~} \omega(\varAt_t)=t\\
                            f & \text{if~} \omega(\varAt_f)=t\\
                            b & \text{if~} \omega(\varAt_b)=t
                            \end{array}\right.
    \end{align*}
    Due to (SC3) (see Encoding~\ref{tab:sat-contension}) it is clear that $\omega^3$ is well-defined. We now show, via structural induction, that $\omega^3(\varSub)=\varTV$ iff $\omega(v^\varTV_\varSub)=t$ for all $\varSub \in \mathsf{sub}(\kb)$:
    \begin{itemize}
        \item 
            $\varSub_a=\varAt$ for $\varAt\in\atoms$: this follows already from the definition of $\omega^3$ above and (SC13)--(SC15).
        \item 
            $\varSub=\neg \varSubSub$: by induction hypothesis, $\omega^3(\varSubSub)=\varTV$ iff $\omega(v^\varTV_\varSubSub)=t$. 
            This implies that exactly one of $v^t_\varSubSub$, $v^f_\varSubSub$, $v^b_\varSubSub$ is true. 
            Due to (SC10)--(SC12), it follows that exactly one of $v^t_{\neg\varSubSub}$, $v^f_{\neg\varSubSub}$, $v^b_{\neg\varSubSub}$ is true. 
            Observe that (SC10)--(SC12) models exactly the semantics of negation in three-valued logic. 
            It follows $\omega^3(\neg\varSubSub)=\varTV$ iff $\omega(v^\varTV_{\neg\varSubSub})=t$.
        \item 
            $\varSub=\varSubSub_1\wedge \varSubSub_2$: by induction hypothesis, $\omega^3(\varSubSub_1)=\varTV$ iff $\omega(v^\varTV_{\varSubSub_1})=t$ and $\omega^3(\varSubSub_2)=\varTV$ iff $\omega(v^\varTV_{\varSubSub_2})=t$. 
            This implies that exactly one of $v^t_{\varSubSub_1}$, $v^f_{\varSubSub_1}$, $v^b_{\varSubSub_1}$ and exactly one of $v^t_{\varSubSub_2}$, $v^f_{\varSubSub_2}$, $v^b_{\varSubSub_2}$ is true. 
            Due to (SC4)--(SC6), it follows that exactly one of $v^t_{\varSubSub_1\wedge \varSubSub_2}$, $v^f_{\varSubSub_1\wedge \varSubSub_2}$, $v^b_{\varSubSub_1\wedge \varSubSub_2}$ is true. 
            Observe that (SC4)--(SC6) models exactly the semantics of conjunction in three-valued logic: (SC4) models the case that both $\varSubSub_1$ and $\varSubSub_2$ are \emph{true} (then $\varSubSub_1\wedge\varSubSub_2$ is also \emph{true}), (SC5) models the case that one of $\varSubSub_1$ and $\varSubSub_2$ is \emph{false} (then $\varSubSub_1\wedge\varSubSub_2$ is also \emph{false}), and (SC6) covers all remaining cases (then $\varSubSub_1\wedge\varSubSub_2$ is  \emph{both}). 
        \item 
            $\varSub=\varSubSub_1\vee \varSubSub_2$: by induction hypothesis, $\omega^3(\varSubSub_1)=\varTV$ iff $\omega(v^\varTV_{\varSubSub_1})=t$ and $\omega^3(\varSubSub_2)=\varTV$ iff $\omega(v^\varTV_{\varSubSub_2})=t$. 
            This implies that exactly one of $v^t_{\varSubSub_1}$, $v^f_{\varSubSub_1}$, $v^b_{\varSubSub_1}$ and exactly one of $v^t_{\varSubSub_2}$, $v^f_{\varSubSub_2}$, $v^b_{\varSubSub_2}$ is true. 
            Due to (SC7)--(SC9), it follows that exactly one of $v^t_{\varSubSub_1\vee \varSubSub_2}$, $v^f_{\varSubSub_1\vee \varSubSub_2}$, $v^b_{\varSubSub_1\vee \varSubSub_2}$ is true. 
            Observe that (SC7)--(SC9) models exactly the semantics of disjunction in three-valued logic: (SC7) models the case that at least one of $\varSubSub_1$ and $\varSubSub_2$ is \emph{true} (then $\varSubSub_1\vee\varSubSub_2$ is also \emph{true}), (SC8) models the case that both $\varSubSub_1$ and $\varSubSub_2$ are \emph{false} (then $\varSubSub_1\vee\varSubSub_2$ is also \emph{false}), and (SC9) covers all remaining cases (then $\varSubSub_1\vee\varSubSub_2$ is \emph{both}).
    \end{itemize}
    So it follows that $\omega^3(\varSub)=\varTV$ iff $\omega(v^\varTV_\varSub)=t$ for all $\varSub\in\mathsf{sub}(\kb)$. Note that this includes all $\varSub\in\kb$ as well. 
    Due to (SC16), for all $\varSub\in\kb$ we have $\omega^3(\varSub)\in\{t,b\}$ and therefore $\omega^3\in\textsf{Mod}^3(\kb)$. 
    Due to (SC17), we also have $|\textsf{Conflictbase}(\omega^3)|\leq u$ and it follows $\icont(\kb) \leq u$.
    
    For the other direction, assume $\icont(\kb) \leq u$. Then there exists a three-valued interpretation $\omega^3$ with $\omega^3\in\textsf{Mod}^3(\kb)$ and $|\textsf{Conflictbase}(\omega^3)|\leq u$. Define $\omega$ on the signature of $\satc(\kb,u)$ via
    \begin{align*}
        \omega(x_t) & =  \left\{\begin{array}{ll}
                            t & \text{if~} \omega^3(\varAt)=t\\
                            f & \text{otherwise}
                            \end{array}\right.\\
        \omega(x_f) & =  \left\{\begin{array}{ll}
                            t & \text{if~} \omega^3(\varAt)=f\\
                            f & \text{otherwise}
                            \end{array}\right.\\
        \omega(x_b) & =  \left\{\begin{array}{ll}
                            t & \text{if~} \omega^3(\varAt)=b\\
                            f & \text{otherwise}
                            \end{array}\right.\\
        \omega(v^t_\varSub) & =  \left\{\begin{array}{ll}
                            t & \text{if~} \omega^3(\varSub)=t\\
                            f & \text{otherwise}
                            \end{array}\right.\\
        \omega(v^f_\varSub) & =  \left\{\begin{array}{ll}
                            t & \text{if~} \omega^3(\varSub)=f\\
                            f & \text{otherwise}
                            \end{array}\right.\\
        \omega(v^b_\varSub) & =  \left\{\begin{array}{ll}
                            t & \text{if~} \omega^3(\varSub)=b\\
                            f & \text{otherwise}
                            \end{array}\right.
    \end{align*}
    for all $\varAt \in \atoms(\kb)$ and $\varSub \in \mathsf{sub}(\kb)$. Using the same argumentation as above, it can be shown that $\omega$ satisfies $\satc(\kb,u)$.
\end{proof}
\end{theorem}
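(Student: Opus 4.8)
The plan is to prove the biconditional $\satc(\kb,u) \text{ is satisfiable} \iff \icont(\kb) \le u$ by establishing both directions through a correspondence between models $\omega$ of the encoding and three-valued interpretations $\omega^3$ of $\kb$. The central technical device will be a structural induction showing that the valuation atoms $v_\varSub^\varTV$ faithfully track the three-valued evaluation of every subformula: namely that $\omega(v_\varSub^\varTV)=t$ iff $\omega^3(\varSub)=\varTV$ for each $\varTV \in \{t,f,b\}$ and each $\varSub \in \mathsf{sub}(\kb)$. Once this correspondence is in place, the two global constraints (SC16) and (SC17) translate directly into the two defining conditions of $\icont(\kb)\le u$, namely $\omega^3 \in \mathsf{Mod}^3(\kb)$ and $|\mathsf{Conflictbase}(\omega^3)| \le u$.

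For the forward direction, I would start from a model $\omega$ of $\satc(\kb,u)$ and \emph{define} $\omega^3$ on atoms by reading off which of $\varAt_t,\varAt_f,\varAt_b$ is true; constraint (SC3) guarantees exactly one holds, so $\omega^3$ is well-defined. The induction then proceeds over the structure of subformulas. The base case (single atoms) follows from (SC13)--(SC15) together with the definition of $\omega^3$. For the inductive steps on $\neg$, $\land$, $\lor$, I would invoke the induction hypothesis to conclude exactly one valuation atom of each immediate subformula is true, then verify that the biconditional clauses (SC10)--(SC12) for negation, (SC4)--(SC6) for conjunction, and (SC7)--(SC9) for disjunction force exactly one valuation atom of the compound formula to be true, matching precisely the three-valued truth tables in Table~\ref{tab:3VL}. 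Since every $\varFor \in \kb$ lies in $\mathsf{sub}(\kb)$, applying (SC16) yields $\omega^3(\varFor) \in \{t,b\}$ for all $\varFor$, hence $\omega^3 \in \mathsf{Mod}^3(\kb)$; and since $\atoms_b^\kb$ corresponds exactly to the atoms assigned $b$, constraint (SC17) gives $|\mathsf{Conflictbase}(\omega^3)| \le u$, so $\icont(\kb)\le u$.

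For the reverse direction, I would take a witnessing three-valued model $\omega^3$ with $|\mathsf{Conflictbase}(\omega^3)|\le u$ and \emph{construct} a propositional $\omega$ on the encoding's signature by setting each $\varAt_\varTV$ and each $v_\varSub^\varTV$ to $t$ exactly when $\omega^3$ assigns the corresponding value. One then checks that $\omega$ satisfies every clause group: (SC3) holds because $\omega^3$ assigns a unique value per atom; (SC13)--(SC15) hold by construction; (SC4)--(SC12) hold because $\omega$ was defined to mirror $\omega^3$ and those biconditionals encode the three-valued semantics exactly; (SC16) holds because $\omega^3$ is a three-valued model; and (SC17) holds by the conflictbase cardinality bound. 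This half is essentially the same verification run in reverse, so the excerpt rightly compresses it to ``the same argumentation as above.''

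The main obstacle is the inductive step itself, specifically the bookkeeping that establishes ``exactly one of the three valuation atoms is true'' propagates up the formula tree. The subtlety is that the encoding uses three \emph{independent} biconditionals per connective rather than a single functional definition, so one must confirm the three cases (SC4)--(SC6) (and analogously for $\lor$ and $\neg$) are jointly exhaustive and mutually exclusive under the induction hypothesis — i.e.\ they genuinely partition the input truth-value combinations so that precisely one output valuation atom is forced true. Verifying this amounts to checking the encoding clauses against each row of Table~\ref{tab:3VL}, and it is where the bulk of the care lies; the remaining steps reduce to routine reading of (SC16) and (SC17). A minor point worth flagging is ensuring the induction is well-founded over $\mathsf{sub}(\kb)$ and that implication/equivalence connectives, already rewritten away via the stated transformation rules, need not be treated separately.
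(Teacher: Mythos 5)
Your proposal is correct and follows essentially the same route as the paper's proof: the forward direction reads off $\omega^3$ from a model $\omega$ (with (SC3) ensuring well-definedness), establishes the correspondence $\omega^3(\varSub)=\varTV$ iff $\omega(v^\varTV_\varSub)=t$ by structural induction over $\mathsf{sub}(\kb)$ using (SC13)--(SC15) for atoms and (SC4)--(SC12) for the connectives, and then applies (SC16) and (SC17); the reverse direction constructs $\omega$ from a witnessing $\omega^3$ and runs the same verification. Your flagged subtlety about the three biconditionals per connective being jointly exhaustive and mutually exclusive is precisely the point the paper's inductive cases check against the three-valued truth table.
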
 
\addtocounter{theorem}{-1}
}

\subsubsection{The Forgetting-Based Inconsistency Measure}\label{app:proof-sat-fb}

{
\renewcommand\tablename{Encoding}
\begin{small}
\setlength\extrarowheight{2pt}
\begin{longtable}{| >{\columncolor{gray!20}}L{.05\textwidth} >{\columncolor{gray!40}}L{.76\textwidth} >{\columncolor{gray!40}}L{.08\textwidth} |}
     \hline 
        \rowcolor{gray!20} \multicolumn{3}{|l|}{\textbf{Signature}} \\
        \rowcolor{gray!20} \multicolumn{3}{|l|}{We make use of the original signature $\atoms(\kb)$.} \\
        \rowcolor{gray!20} \multicolumn{3}{|l|}{For every occurrence \par $\varAt^l \in \occs(\kb)$:} \\
        % & Create atom $\varAt^l$ & (SF1) \\
        & Create atoms $t_{\varAt,l}$, $f_{\varAt,l}$ & (SF1) \\
        
        \hline
        
        \rowcolor{gray!20} \multicolumn{3}{|l|}{\textbf{Constraints}} \\
        \rowcolor{gray!20} \multicolumn{3}{|l|}{For every formula $\varFor \in \kb$, replace every atom occurrence $\varAt^l$ with:} \\
        & $(t_{\varAt,l} \lor \varAt) \land \neg f_{\varAt,l}$ & (SF2)  \\
        
        \rowcolor{gray!20} \multicolumn{3}{|l|}{Let $\atoms_\forget$ be the set of all atoms defined by (SF3).} \\
        \rowcolor{gray!20} \multicolumn{3}{|l|}{For each pair $t_{\varAt,l}, f_{\varAt,l} \in \atoms_{\forget}$:} \\
        & $\neg t_{\varAt,l} \lor \neg f_{\varAt,l}$ & (SF4) \\
        
        \rowcolor{gray!20} \multicolumn{3}{|l|}{Cardinality constraint:} \\
        & $\mathrm{at\_most}(u,\atoms_\forget)$ & (SF4) \\
        
        \hline
    \caption{Overview of the SAT encoding $\satf(\kb,u)$ of the forgetting-based inconsistency measure $\iforget$.
    $\kb$ is a knowledge base and $\atoms(\kb)$ its signature. 
    $u$ is a candidate for an upper limit for $\iforget(\kb)$. 
    Let $\occs(\kb)$ be the set of all atom occurrences in $\kb$ as defined in Section \ref{sec:satisfiability-solving}. 
    The signature size of the encoding is $|\atoms(\kb)| + 2 \cdot |\occs(\kb)|$.
    }
    \label{tab:sat-fb}
\end{longtable}
\end{small}
}

Let $\kb$ be an arbitrary knowledge base and $\satf(\kb, u)$ the corresponding SAT encoding defined via (SF1)--(SF4).
Assume that $\satf(\kb,u)$ is satisfiable, and let $\omega$ be a model of $\satf(\kb,u)$.
We define
    \begin{align*}
        T_{\satf(\kb,u)}(\omega) = & \{ t_{\varAt,l} \mid \omega(t_{\varAt,l}) = t \} \\
        F_{\satf(\kb,u)}(\omega) = & \{ f_{\varAt,l} \mid \omega(f_{\varAt,l}) = t \}
    \end{align*}
Due to (SF3) (i.\,e., $ \lnot t_{\varAt,l} \lor \lnot f_{\varAt,l}$), $T_{\satf(\kb,u)}(\omega) \cap F_{\satf(\kb,u)}(\omega) = \emptyset$ for every model $\omega$ of $\satf(\kb,u)$.

\begin{lemma}\label{lem:sat-forget}
    If $\omega$ is a model of $\satf(\kb,u)$ then $( \bigwedge \mathcal{K})[\varAt^{1}_1 \rightarrow \top;\ldots; \varAt^{n}_p \rightarrow \top ; Y^{1}_1 \rightarrow \bot;\ldots; Y^{m}_q \rightarrow \bot] $ is consistent with $T_{\satf(\kb,u)}(\omega) =\{t_{\varAt_1,1},\ldots,t_{\varAt_p,n}\}$ and $F_{\satf(\kb,u)}(\omega) = \{f_{Y_1,1},\ldots,f_{Y_q,m}\}$.
    \begin{proof}
        To begin with, each $\varAt^l$ represents exactly the $l$-th occurrence of an atom $\varAt$. % (SF1).
        Moreover, each $\varAt^l$ gets a distinct pair of atoms $t_{\varAt,l}$ and $f_{\varAt,l}$ (SF1).
        Thus, $t_{\varAt,l}$ and $f_{\varAt,l}$ represent exactly the $l$-th occurrence of $\varAt$.
    
        Following (SF2), each atom occurrence $\varAt^l$ is replaced by the term 
        $(t_{\varAt, l} \lor \varAt ) \land \lnot f_{\varAt, l}$.
        If $\omega(t_{\varAt, l}) = t$ (and $\omega(f_{\varAt, l}) = f$ due to (SF3)), the term becomes true, regardless of whether $\omega(\varAt) = t$ or $\omega(X) = f$.
        Hence, this corresponds exactly to replacing the $l$-th occurrence of $\varAt$ in the original formula by $\top$.
        On the other hand, if $\omega(f_{\varAt, l}) = t$ (and $\omega(t_{\varAt, l}) = f$ due to (SF3)), the term becomes false, regardless of the truth value of $\varAt$.
        Consequently, this corresponds to replacing the $l$-th occurrence of $\varAt$ in the original formula by $\bot$.
        If $\omega(t_{\varAt, l}) = f$ and $\omega(f_{\varAt, l}) = f$, the right part of the conjunction is true, and the left part is true if $\omega(\varAt) = t$ and false if $\omega(\varAt) = f$.
        Thus, the term evaluates to the truth value of $\varAt^l$, i.\,e., the truth value of the $l$-th occurrence of $\varAt$ in the original formula.
    \end{proof}
\end{lemma}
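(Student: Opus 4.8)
The plan is to use the model $\omega$ of $\satf(\kb,u)$ itself as the witness for the consistency claim, after first establishing that the substituted subformula $\varSub_{\varAt,l}=(t_{\varAt,l}\lor\varAt^l)\land\neg f_{\varAt,l}$ introduced in (SF3) behaves, under $\omega$, exactly like the forgetting operation it is meant to simulate. First I would record the three-way case split that (SF4) forces: since $\neg t_{\varAt,l}\lor\neg f_{\varAt,l}$ forbids both auxiliary atoms from being true at once, for every occurrence $\varAt^l$ exactly one of the situations $\omega(t_{\varAt,l})=t$, $\omega(f_{\varAt,l})=t$, or $\omega(t_{\varAt,l})=\omega(f_{\varAt,l})=f$ holds. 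Evaluating $\varSub_{\varAt,l}$ in each case shows $\omega(\varSub_{\varAt,l})=t$ when $t_{\varAt,l}$ is set (the $\top$-replacement), $\omega(\varSub_{\varAt,l})=f$ when $f_{\varAt,l}$ is set (the $\bot$-replacement), and $\omega(\varSub_{\varAt,l})=\omega(\varAt^l)$ otherwise (the occurrence is untouched). This is precisely the content of the behavioral analysis, and it is the workhorse of the whole argument.

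Next I would lift this occurrence-level statement to entire formulas by structural induction. Because (SF3) substitutes homomorphically, i.e., strictly inside the connectives of each $\varFor$, and because $\omega$ interprets $\land,\lor,\neg$ classically, the value $\omega(\varFor')$ of the substituted formula coincides with the value of the forgotten formula under the reading of the remaining occurrence atoms supplied by $\omega$. The base case is exactly the three-case characterization above; the inductive steps for $\land$, $\lor$, and $\neg$ are immediate since substitution commutes with these connectives. Collecting the replacements across all occurrences, the set $T_{\satf(\kb,u)}(\omega)$ indexes exactly the occurrences replaced by $\top$ and $F_{\satf(\kb,u)}(\omega)$ exactly those replaced by $\bot$, so $\omega$ satisfies $\varFor'$ iff it satisfies $\varFor$ with the indicated occurrences forgotten.

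Finally, since $\omega$ is a model of $\satf(\kb,u)$ it satisfies every substituted conjunct $\varFor'$, and hence the conjunction $(\bigwedge\kb)$ with the occurrences in $T_{\satf(\kb,u)}(\omega)$ sent to $\top$ and those in $F_{\satf(\kb,u)}(\omega)$ sent to $\bot$ is satisfied under $\omega$; this exhibits a satisfying assignment and so proves consistency. The step I expect to demand the most care is the base-case bookkeeping: I must check that the three cases from (SF4) are genuinely exhaustive and mutually exclusive for every single occurrence, and — the real crux — that reading the leftover occurrence atoms $\varAt^l$ off $\omega$ yields a coherent valuation of the forgotten formula, i.e., that the untouched occurrences feed consistently into $\bigwedge\kb$. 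Once this reconciliation is pinned down, the inductive lifting and the concluding consistency argument are routine.
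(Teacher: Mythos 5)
Your proposal is correct and follows essentially the same route as the paper's proof: the heart of both arguments is the exhaustive, mutually exclusive case analysis on $(t_{\varAt,l},f_{\varAt,l})$ forced by (SF4), showing that $(t_{\varAt,l}\lor\varAt^l)\land\neg f_{\varAt,l}$ evaluates under $\omega$ exactly as the forgetting operation dictates; you merely make explicit the structural-induction lifting and the final witness step that the paper leaves implicit. The ``crux'' you flag about coherence of the untouched occurrences is already settled by the paper's standing convention (Section 2.1) that distinct occurrences $\varAt^1,\varAt^2$ of the same atom share one truth value under any interpretation, so reading the leftover occurrences off $\omega$ does yield a well-defined valuation of the forgotten formula.
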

{
\renewcommand{\thetheorem}{\ref{thm:sat-fb}}
\begin{theorem}
    For a given value $u$, the encoding $\satf(\kb,u)$ is satisfiable if and only if $\iforget(\kb) \leq u$.
    \begin{proof}
        From Lemma \ref{lem:sat-forget} we know that $T_{\satf(\kb,u)}(\omega)$ and $F_{\satf(\kb,u)}(\omega)$ contain exactly those $t_{\varAt,l}$ and $f_{\varAt,l}$ which correspond to those occurrences of $\varAt$ that are being forgotten (i.\,e., replaced by $\top$, and $\bot$, respectively).
        Moreover, the cardinality constraint (SF4) ensures that $|T_{\satf(\kb,u)}(\omega) \cup F_{\satf(\kb,u)}(\omega)| \leq u$.
        Thus, if more than $u$ atom occurrences must be forgotten in order for $\kb$ to be satisfiable, then $\satf(\kb,u)$ is unsatisfiable.
        Otherwise, i.\,e., if a minimum of $\leq u$ atom occurrences must be forgotten, $\satf(\kb,u)$ is satisfiable.
    \end{proof}
\end{theorem}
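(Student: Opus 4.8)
The plan is to prove both directions by exploiting the correspondence, recorded in Lemma~\ref{lem:sat-forget}, between a model $\omega$ of $\satf(\kb,u)$ and an iterated forgetting applied to $\land_\kb$. The structural fact underlying everything is the three-case behaviour of the substituted term $\varSub_{\varAt,l} = (t_{\varAt,l}\lor\varAt^l)\land\neg f_{\varAt,l}$ of (SF3): under any model it evaluates to $\top$ when $t_{\varAt,l}$ is true, to $\bot$ when $f_{\varAt,l}$ is true, and to the truth value of the occurrence otherwise, where by (SF4) the first two cases are mutually exclusive. Consequently every model $\omega$ canonically determines the set of forgotten occurrences, namely those $\varAt^l$ for which $t_{\varAt,l}$ or $f_{\varAt,l}$ is true, together with a replacement ($\top$ or $\bot$) for each. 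I would make this ``read-off'' map the backbone of both directions.

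For the forward direction, assume $\satf(\kb,u)$ is satisfiable and fix a model $\omega$. I would let $S(\omega)$ be the set of occurrences $\varAt^l$ with $\omega(t_{\varAt,l})=t$ or $\omega(f_{\varAt,l})=t$, so that $|S(\omega)| = |T_{\satf(\kb,u)}(\omega)\cup F_{\satf(\kb,u)}(\omega)|$, which is at most $u$ by the cardinality constraint (SF5). Lemma~\ref{lem:sat-forget} then gives that forgetting exactly the occurrences in $S(\omega)$, with the replacements recorded by $\omega$, turns $\land_\kb$ into a satisfiable formula. Since $\iforget(\kb)$ is by definition the minimum number of occurrences whose forgetting restores consistency, this yields $\iforget(\kb)\le |S(\omega)|\le u$ at once.

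For the converse, suppose $\iforget(\kb)\le u$, so there is a set $S$ of $n\le u$ occurrences with $\forgetfunc_{l_1,\varAt_1}\dots\forgetfunc_{l_n,\varAt_n}.\land_\kb \not\models\bot$. The key preparatory step is to unfold this nested forgetting: because the substitutions of distinct occurrences commute and each application of $\forgetfunc$ distributes a disjunction over the formula, the forgotten formula is logically equivalent to the disjunction, taken over all replacement maps $c\colon S\to\{\top,\bot\}$, of the knowledge base in which each forgotten occurrence $\varAt^{l_i}$ is replaced by $c(\varAt^{l_i})$. Satisfiability of this disjunction then delivers a single map $c$ together with an interpretation $\omega_0$ of $\atoms(\kb)$ satisfying the correspondingly substituted knowledge base. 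I would build the model $\omega$ of $\satf(\kb,u)$ by setting $\omega(t_{\varAt,l})=t$ exactly when $c(\varAt^l)=\top$, $\omega(f_{\varAt,l})=t$ exactly when $c(\varAt^l)=\bot$ (all remaining forgetting variables false), and assigning every occurrence variable according to $\omega_0$. Verifying (SF3) reduces to the three-case evaluation of $\varSub_{\varAt,l}$ above; (SF4) holds because $c$ picks at most one replacement per occurrence; and (SF5) holds since precisely $n\le u$ forgetting variables are set true.

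The main obstacle is the unfolding step in the converse: one must argue rigorously that iterated forgetting is equivalent to the single disjunction over all replacement maps $c$, i.e.\ that the forgetting operators commute and distribute over $\lor$, and hence that any satisfying assignment of the forgotten formula decomposes into one global interpretation of the atoms plus an independent $\top/\bot$ choice on the forgotten occurrences. This decomposition is exactly what lets the non-forgotten occurrences of a shared atom be assigned consistently by $\omega_0$. Once this lemma is secured, the remaining work is a routine case analysis on the substituted terms, and the counting follows directly from (SF4) and (SF5).
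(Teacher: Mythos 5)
Your proof is correct and follows essentially the same route as the paper: read off the forgotten occurrences from a model via the three-case behaviour of $(t_{\varAt,l}\lor\varAt^l)\land\neg f_{\varAt,l}$, bound their number by (SF5), and invoke Lemma~\ref{lem:sat-forget} for the forward direction. Your converse direction is in fact more careful than the paper's, which merely asserts satisfiability; your unfolding of iterated forgetting into a disjunction over replacement maps $c\colon S\to\{\top,\bot\}$, followed by the explicit construction of a model from $c$ and $\omega_0$, supplies exactly the argument the paper leaves implicit.
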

\addtocounter{theorem}{-1}
}

\newpage

\subsubsection{The Hitting-Set Inconsistency Measure}\label{app:proof-sat-hs}

{
\renewcommand\tablename{Encoding}
\begin{small}
\setlength\extrarowheight{2pt}
\begin{longtable}{| >{\columncolor{gray!20}}L{.05\textwidth} >{\columncolor{gray!40}}L{.76\textwidth} >{\columncolor{gray!40}}L{.08\textwidth} |}
     \hline 
        \rowcolor{gray!20} \multicolumn{3}{|l|}{\textbf{Signature}} \\
        \rowcolor{gray!20} \multicolumn{3}{|l|}{For every atom $\varAt \in \atoms(\kb)$:} \\
        & Create variables $\varAt_i$ with $i\in \{1,\ldots , u\}$ & (SH1) \\
        
        \rowcolor{gray!20} \multicolumn{3}{|l|}{For every formula $\varFor \in \kb$:} \\
        & Create variables $p_{\varFor,i}$ with $i\in \{1,\ldots , u\}$ & (SH2) \\
        
        \hline
        
        \rowcolor{gray!20} \multicolumn{3}{|l|}{\textbf{Constraints}} \\
        
        \rowcolor{gray!20} \multicolumn{3}{|l|}{For every formula $\varFor \in \kb$ and $i\in \{1,\ldots , u\}$, create $\varFor_{i}$, which is a copy of $\varFor$} \\
        \rowcolor{gray!20} \multicolumn{3}{|l|}{where each instance of atom $\varAt$ is replaced by $\varAt_i$, and add:} \\

        & $p_{\varFor,i} \rightarrow \varFor_i$ & (SH3)  \\
        
        \rowcolor{gray!20} \multicolumn{3}{|l|}{For every formula $\varFor \in \kb$:} \\
        & $\bigvee_{1\leq i \leq u} p_{\varFor, i}$ & (SH4) \\[1ex]
        
        \hline
    \caption{Overview of the SAT encoding $\sath(\kb, u)$ of the hitting set inconsistency measure $\ihs$.
    $\kb$ is a knowledge base and $\atoms(\kb)$ its signature. 
    $u$ is a candidate for an upper limit for $\ihs(\kb)$. 
    The signature size of the encoding is $u \cdot \abs{\atoms(\kb)} + u \cdot \abs{\kb}$.}
    \label{tab:sat-hs}
\end{longtable}
\end{small}
}

Let $\kb$ be an arbitrary knowledge base and $\sath(\kb, u)$ the corresponding SAT encoding defined via (SH1)--(SH4).

\begin{lemma}\label{lem:sat-hs-1}
    If $\kb$ contains at least one contradictory formula, $\sath(\kb, u)$ is unsatisfiable for all $u \in \{1, \ldots, |\kb|\}$.
    \begin{proof}
        Let $u \in \{1, \ldots, |\kb|\}$ be arbitrary, and let some $\varFor^\bot \in \kb$ be contradictory. 
        Then there exists no interpretation which could satisfy a copy $\varFor^\bot_i$ ($i \in \{1, \ldots, u\}$) of $\varFor^\bot$. 
        Hence, in order to satisfy the constraints defined by (SH3) (i.\,e., $p_{\varFor^\bot,i} \rightarrow \varFor^\bot_i$), each $p_{\varFor^\bot,i}$ would need to evaluate to false.
        However, due to (SH4), at least one $p_{\varFor^\bot,i}$ must be true. 
        Thus, $\sath(\kb, u)$ is unsatisfiable, regardless of the value of $u$.
    \end{proof}
\end{lemma}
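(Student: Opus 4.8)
The plan is to argue by contradiction. I would suppose that $\sath(\kb,u)$ is satisfiable for some fixed $u \in \{1, \ldots, |\kb|\}$ and let $\omega$ be a model of it. By hypothesis there exists a contradictory formula $\varFor^\bot \in \kb$, i.e., one with $\mathsf{Mod}(\varFor^\bot) = \emptyset$; I would fix one such formula and focus exclusively on the constraints of $\sath(\kb,u)$ that mention $\varFor^\bot$, namely the $u$ implications (SH3) of the form $p_{\varFor^\bot,i} \rightarrow \varFor^\bot_i$ and the single disjunction (SH4) of the form $\bigvee_{1 \leq i \leq u} p_{\varFor^\bot,i}$. The goal is to show these cannot be jointly satisfied by $\omega$, contradicting that $\omega$ is a model.

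The first and only substantive step is to establish the auxiliary fact that each copy $\varFor^\bot_i$ is itself unsatisfiable, for every $i \in \{1, \ldots, u\}$. Recall from (SH3) that $\varFor^\bot_i$ is obtained from $\varFor^\bot$ by replacing every atom $\varAt$ by a fresh atom $\varAt_i$ introduced in (SH1). Since this substitution is a bijection between $\atoms(\varFor^\bot)$ and $\{\varAt_i \mid \varAt \in \atoms(\varFor^\bot)\}$, it induces a bijection between the corresponding interpretation spaces that preserves the satisfaction relation; consequently $\mathsf{Mod}(\varFor^\bot_i) = \emptyset$, so $\omega \not\models \varFor^\bot_i$. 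I would state this renaming invariance explicitly, as it is the point where the hypothesis ``$\varFor^\bot$ is contradictory'' is actually used.

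The remaining steps are purely propositional bookkeeping. Because $\omega \models \sath(\kb,u)$ in particular satisfies every implication (SH3) for $\varFor^\bot$, and because $\omega \not\models \varFor^\bot_i$ for each $i$, the implication $p_{\varFor^\bot,i} \rightarrow \varFor^\bot_i$ forces $\omega(p_{\varFor^\bot,i}) = f$ for all $i \in \{1, \ldots, u\}$. This directly violates the clause (SH4), which requires at least one $p_{\varFor^\bot,i}$ to be true under $\omega$. This contradiction shows no model $\omega$ can exist. Finally I would observe that the argument never depended on the particular value of $u$ beyond $u \geq 1$, so it applies uniformly to every $u \in \{1, \ldots, |\kb|\}$, yielding the claim.

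The main obstacle is the renaming-invariance step: although intuitively obvious, it is the sole place requiring justification, and I would want to phrase it carefully so that the unsatisfiability of $\varFor^\bot$ transfers to each indexed copy $\varFor^\bot_i$ without appealing to the structure of the other copies or the shared auxiliary variables. Everything else follows immediately from reading off the clauses (SH3) and (SH4).
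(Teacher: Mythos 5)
Your proposal is correct and follows essentially the same route as the paper's proof: the copies $\varFor^\bot_i$ remain unsatisfiable under the atom renaming, so (SH3) forces every $p_{\varFor^\bot,i}$ to be false, contradicting (SH4), independently of $u$. Your only addition is making the renaming-invariance step explicit (the paper asserts it without comment), which is a welcome clarification but not a different argument.
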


Assume that $\kb$ does not contain any contradictory formulas and that $\satf(\kb,u)$ is satisfiable, and let $\omega$ be a model of $\satf(\kb,u)$.
We define
    $$ I_{\sath(\kb,u)}(\omega) = \{i_a \mid \exists \omega(p_{\varFor,i_a}) = t \} $$
with $\varFor \in \kb$, and $i_a \in \{1, \ldots, u\}$.

\begin{lemma}\label{lem:sat-hs-2}
    $I_{\sath(\kb,u)}(\omega)$ corresponds to a hitting set of $\kb$.
    \begin{proof}
        For each formula $\varFor \in \kb$ we have at least one $p_{\varFor,i_a}$ that evaluates to true, because of (SH4).
        Consequently, wrt.\ that $p_{\varFor,i_a}$, $\varFor_{i_a}$ must be true as well (SH3).
        Hence, at least one copy $\varFor_{i_a}$ of each $\varFor \in \kb$ is satisfied.

        Further, each $i \in \{1, \ldots, u\}$ corresponds to an interpretation, since each $A_i$ uses the $i$-th copy of $\atoms(\kb)$.
        Moreover, each $i_a \in I_{\sath(\kb,u)}$ corresponds to an ``active'' interpretation, i.\,e., one that is actually used to satisfy one or more formulas.
        Thus, $I_{\sath(\kb,u)}(\omega)$ corresponds to a hitting set of $\kb$.
    \end{proof}
\end{lemma}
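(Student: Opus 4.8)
The plan is to prove Lemma \ref{lem:sat-hs-2}, namely that given a model $\omega$ of $\sath(\kb,u)$ (where $\kb$ has no contradictory formula), the index set $I_{\sath(\kb,u)}(\omega)$ gives rise to a genuine hitting set of $\kb$ in the sense of the hitting-set definition. The essential objects are: for each active index $i_a \in I_{\sath(\kb,u)}(\omega)$, an interpretation $\omega_{i_a}$ over $\atoms(\kb)$ read off from the block-$i_a$ atom copies $\varAt_{i_a}$, defined by $\omega_{i_a}(\varAt) = \omega(\varAt_{i_a})$. The goal is to show the set $H = \{\omega_{i_a} \mid i_a \in I_{\sath(\kb,u)}(\omega)\}$ hits every formula, i.e. for each $\varFor \in \kb$ some $\omega_{i_a} \models \varFor$.

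First I would fix an arbitrary $\varFor \in \kb$ and invoke constraint (SH4), $\bigvee_{1 \le i \le u} p_{\varFor,i}$, to obtain at least one index $i_a$ with $\omega(p_{\varFor,i_a}) = t$; by definition this $i_a \in I_{\sath(\kb,u)}(\omega)$. Next I would apply the implication (SH3), $p_{\varFor,i_a} \rightarrow \varFor_{i_a}$: since the antecedent holds under $\omega$, the copy $\varFor_{i_a}$ must be satisfied by $\omega$. The key remaining link is that $\omega$ satisfying the copied formula $\varFor_{i_a}$ (which uses only block-$i_a$ atoms $\varAt_{i_a}$) is equivalent to the derived interpretation $\omega_{i_a}$ satisfying the original $\varFor$. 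This holds because $\varFor_{i_a}$ is obtained from $\varFor$ by the uniform renaming $\varAt \mapsto \varAt_{i_a}$ (see (SH1) and the copy construction in (SH3)), and $\omega_{i_a}$ is defined to agree with $\omega$ under exactly this renaming; a routine structural induction on the formula $\varFor$ then transfers satisfaction across the renaming. Hence $\omega_{i_a} \models \varFor$, establishing the hitting-set condition for the arbitrary $\varFor$.

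Since each block maintains its own disjoint copy $\atoms(\kb)_i = \{\varAt_i \mid \varAt \in \atoms(\kb)\}$ of the signature, the interpretations $\omega_{i_a}$ for distinct indices are well-defined and independent of one another, so there is no interference between blocks when reading them off; this justifies treating each active index as a separate interpretation. It is worth noting that the cardinality $|I_{\sath(\kb,u)}(\omega)| \le u$ is not needed for this lemma — we only need that $I_{\sath(\kb,u)}(\omega)$ indexes a set of interpretations collectively covering all formulas — so the at-most aspect can be deferred to the downstream minimality argument in Theorem \ref{thm:sat-hs}.

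The main obstacle I anticipate is making the renaming-transfer step fully rigorous: one must be careful that the statement ``$\omega \models \varFor_{i_a}$ iff $\omega_{i_a} \models \varFor$'' is proved by genuine induction over the connectives $\{\neg, \wedge, \vee, \rightarrow, \leftrightarrow\}$ rather than merely asserted, and that the renaming $\varAt \mapsto \varAt_{i_a}$ is consistent across all occurrences of each atom within $\varFor$ (so that $\omega_{i_a}$ assigns a single truth value to each atom, matching $\omega$ on the corresponding copy). Everything else is bookkeeping once this transfer lemma is in place; the no-contradictory-formula hypothesis is not strictly required here (it was needed in Lemma \ref{lem:sat-hs-1}), so I would phrase the proof so that it applies to any model $\omega$ of $\sath(\kb,u)$.
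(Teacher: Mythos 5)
Your proposal is correct and follows essentially the same route as the paper's proof: (SH4) yields an active index $i_a$ for each $\varFor \in \kb$, (SH3) forces the copy $\varFor_{i_a}$ to be satisfied, and the block-$i_a$ atom copies are read off as an interpretation, so the active indices collectively form a hitting set. You merely make explicit (via the renaming-transfer induction $\omega \models \varFor_{i_a}$ iff $\omega_{i_a} \models \varFor$) what the paper asserts implicitly, and your side remarks --- that the at-most-$u$ bound and the no-contradictory-formula hypothesis are not needed for this lemma --- are accurate refinements rather than a different argument.
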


{
\renewcommand{\thetheorem}{\ref{thm:sat-hs}}
\begin{theorem}
    For a given value $u$, the encoding $\sath(\kb,u)$ is satisfiable if and only if $\ihs(\kb) \leq u - 1$.
    $\sath(\kb,u)$ is unsatisfiable for all $u=1,\ldots,|\kb|$ if and only if $\ihs(\kb) = \infty$.
    \begin{proof}
        Lemma \ref{lem:sat-hs-1} shows that $\sath(\kb,u)$ is unsatisfiable for all $u \in \{1, \ldots, |\kb|\}$ if $\kb$ contains a contradictory formula, i.\,e., if $\ihs(\kb) = \infty$.
        Lemma \ref{lem:sat-hs-2} shows that, if $\kb$ does not contain any contradictory formula, $I_{\sath(\kb,u)}(\omega) = \{i_a \mid \exists \omega(p_{\varFor,i_a}) = t \}$ corresponds to a hitting set of $\kb$.
        Since the maximum cardinality of $I_{\sath(\kb,u)}(\omega)$ is restricted by $u$, $\sath(\kb,u)$ is unsatisfiable if more than $u$ copies of each formula are required, i.\,e., if more than $u$ different interpretations are required to satisfy each formula in $\kb$.
        Otherwise, $\sath(\kb,u)$ is satisfiable.
        
        Thus, if a hitting set of size $\leq u$ exists, $\sath(\kb,u)$ is satisfiable.
        Otherwise, i.\,e., if no hitting set of size $\leq u$ exists, $\sath(\kb,u)$ is unsatisfiable.
        Since we have to subtract $1$ from the cardinality of the hitting set in order to get $\ihs(\kb)$, it follows that if $\ihs(\kb) \leq u-1$, then $\sath(\kb,u)$ is satisfiable, and if $\ihs(\kb) > u-1$, then $\sath(\kb,u)$ is unsatisfiable.
    \end{proof}
\end{theorem}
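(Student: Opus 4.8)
The plan is to prove both biconditionals by translating back and forth between models of $\sath(\kb,u)$ and hitting sets of $\kb$, handling the $\infty$ case separately. For the first equivalence I would split into the two directions. For soundness, assume $\omega \models \sath(\kb,u)$ and read off, for each block index $i \in \{1,\ldots,u\}$, an interpretation $\omega_i$ on $\atoms(\kb)$ defined by $\omega_i(\varAt) = \omega(\varAt_i)$, using the per-block signature copies introduced in (SH1). The key auxiliary fact, which I would establish first by a routine structural induction on formulas, is that for every $\varFor \in \kb$ the block copy $\varFor_i$ (each atom $\varAt$ replaced by $\varAt_i$) is true under $\omega$ iff $\omega_i \models \varFor$. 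Given this, I collect the active blocks $I = \{ i \mid \omega(p_{\varFor,i}) = t \text{ for some } \varFor \in \kb \}$ and set $H = \{\omega_i \mid i \in I\}$. Constraint (SH4) guarantees that every formula lies in at least one block, and (SH3) then forces the corresponding block copy to be true, hence $\omega_i \models \varFor$; so $H$ is a hitting set with $|H| \le |I| \le u$. This is precisely the content of Lemma \ref{lem:sat-hs-2}, and it yields $\ihs(\kb) \le u - 1$.

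For the converse, assume $\ihs(\kb) \le u-1$, i.e. there is a hitting set $H = \{\eta_1,\ldots,\eta_k\}$ with $k \le u$. I would assign block $i \le k$ the interpretation $\eta_i$ by setting $\omega(\varAt_i) = \eta_i(\varAt)$, leave the remaining blocks arbitrary, and for each $\varFor \in \kb$ pick one witness $\eta_j \models \varFor$ (which exists because $H$ is a hitting set), setting $p_{\varFor,j}$ true and every other membership atom for $\varFor$ false. Using the same structural correspondence, (SH3) is satisfied because $\varFor_j$ evaluates to true under $\omega$, and (SH4) holds because $p_{\varFor,j}$ is true; hence $\omega \models \sath(\kb,u)$.

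For the second statement I would invoke the characterisation that $\ihs(\kb) = \infty$ exactly when $\kb$ contains an individually contradictory formula, equivalently when no hitting set exists. One direction is Lemma \ref{lem:sat-hs-1}: a contradictory $\varFor^\bot$ makes every copy $\varFor^\bot_i$ unsatisfiable, so (SH3) forces all $p_{\varFor^\bot,i}$ false, contradicting (SH4), for every $u \in \{1,\ldots,|\kb|\}$. The other direction is the contrapositive of the first statement instantiated at $u = |\kb|$: if $\ihs(\kb) \ne \infty$ then $\ihs(\kb) \le |\kb| - 1$ (its maximal finite value), so by the first equivalence $\sath(\kb,|\kb|)$ is satisfiable, and therefore the encoding is not unsatisfiable for all $u$ in the range.

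The main obstacle I anticipate is bookkeeping in the soundness direction rather than any deep difficulty: I must be careful that the active-block set $I$ genuinely induces a hitting set of size at most $u$ (distinct blocks may carry equal interpretations, which only helps the bound), and that the induction relating $\varFor_i$ under $\omega$ to $\varFor$ under $\omega_i$ is formulated cleanly enough to be reused in both directions. Once the contradictory-formula characterisation of $\ihs$ is available, the $\infty$ case is comparatively routine.
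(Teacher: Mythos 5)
Your proposal is correct and follows essentially the same route as the paper: the forward direction is the content of the paper's Lemma~\ref{lem:sat-hs-2} (reading off block interpretations from a model and collecting the active blocks into a hitting set of size at most $u$), and the $\infty$ case matches Lemma~\ref{lem:sat-hs-1} combined with instantiating the first equivalence at $u=|\kb|$. If anything, your version is slightly more complete, since you explicitly construct a satisfying assignment from a given hitting set (assigning $\omega(\varAt_i)=\eta_i(\varAt)$ and choosing one witness block per formula), a converse step the paper's proof asserts but does not spell out.
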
 
\addtocounter{theorem}{-1}
}

\subsubsection{The Max-Distance Inconsistency Measure}\label{app:proof-sat-max-dist}

{
\renewcommand\tablename{Encoding}
\begin{small}
\setlength\extrarowheight{2pt}
\begin{longtable}{| >{\columncolor{gray!20}}L{.05\textwidth} >{\columncolor{gray!40}}L{.73\textwidth} >{\columncolor{gray!40}}L{.11\textwidth} |}
     \hline 
        \rowcolor{gray!20} \multicolumn{3}{|l|}{\textbf{Signature}} \\
        \rowcolor{gray!20} \multicolumn{3}{|l|}{For every atom $\varAt \in \atoms(\kb)$:} \\
        & Create variable $\varAt_o$ & (SDM1) \\
        
        \rowcolor{gray!20} \multicolumn{3}{|l|}{For every atom $\varAt \in \atoms(\kb)$ and $i \in \{1, \ldots, |\kb|\}$:} \\
        & Create variables $\varAt_i$ & (SDM2) \\
        
        & Create variables $\textsf{inv}_{\varAt,i}$ & (SDM3) \\
        
        \hline
        
        \rowcolor{gray!20} \multicolumn{3}{|l|}{\textbf{Constraints}} \\
        
        \rowcolor{gray!20} \multicolumn{3}{|l|}{For every formula $\varFor \in \kb$ and $i\in \{1,\ldots , |\kb|\}$:} \\
        & Create $\varFor_{i}$, which is a copy of $\varFor$ where each instance of atom $\varAt$ is replaced by $\varAt_i$ & (SDM4) \\
        
        \rowcolor{gray!20} \multicolumn{3}{|l|}{For every $\varAt_i$ with $i\in \{1,\ldots , |\kb|\}$:} \\
        & $\varAt_i \rightarrow \varAt_o \lor \mathsf{inv}_{\varAt,i}$ & (SDM5) \\
        & $\lnot \varAt_i \rightarrow \lnot \varAt_o \lor \mathsf{inv}_{\varAt,i}$ & (SDM6) \\
        
        \rowcolor{gray!20} \multicolumn{3}{|l|}{Let $\textsf{INV}_i$ be the set containing all $\textsf{inv}_{\varAt,i}$ for a fixed $i$.} \\
        \rowcolor{gray!20} \multicolumn{3}{|l|}{For every $i \in \{1, \ldots,|\kb|\}$:} \\
        & $\mathrm{at\_most}(u, \textsf{INV}_i )$ & (SDM7) \\
        
        \hline
    \caption{Overview of the SAT encoding $\satmax(\kb, u)$ of the max-distance inconsistency measure $\imdalal$.
    $\kb$ is a knowledge base and $\atoms(\kb)$ its signature. 
    $u$ is a candidate for an upper limit for $\imdalal(\kb)$. 
    The signature size of the encoding is $\abs{\atoms(\kb)} + 2 \cdot \abs{\kb} \cdot \abs{\atoms(\kb)}$.}
    \label{tab:sat-dalal-max}
\end{longtable}
\end{small}
}

Let $\kb$ be an arbitrary knowledge base and $\satmax(\kb, u)$ the corresponding SAT encoding defined via (SDM1)--(SDM7).

\begin{lemma}\label{lem:sat-max-dist-1}
    If $\kb$ contains at least one contradictory formula, $\satmax(\kb, u)$ is unsatisfiable for all $u \in \{0, \ldots, |\atoms(\kb)|\}$.
    \begin{proof}
        Let some $\varFor^\bot \in \kb$ be contradictory. 
        Following (SDM4), we add indexed copies of all formulas $\varFor \in \kb$ to $\satmax(\kb,u)$.
        Thus, we also have an indexed copy $\varFor^\bot_i$ of $\varFor^\bot$.
        Since adding an index $i$ to each atom in $\varFor$ does not resolve the conflict within the formula, $\varFor^\bot_i$ is still contradictory.
        Therefore, $\satmax(\kb,u)$ contains an unsatisfiable formula and is consequently itself unsatisfiable, regardless of the value of $u$.
    \end{proof}
\end{lemma}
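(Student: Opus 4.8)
The plan is to show that $\satmax(\kb,u)$ contains, as a hard constraint, a subformula that is unsatisfiable on its own, so that no truth assignment can satisfy the whole encoding irrespective of the value of $u$. First I would fix a contradictory formula $\varFor^\bot \in \kb$, i.e., one with $\mathsf{Mod}(\varFor^\bot) = \emptyset$, and recall that by (SDM4) the encoding adds, for each $i \in \{1,\ldots,|\kb|\}$, a clone $\varFor^\bot_i$ obtained from $\varFor^\bot$ by replacing every atom $\varAt$ with a fresh atom $\varAt_i$. These clones are asserted unconditionally (they are conjoined into $\satmax(\kb,u)$ rather than being guarded by a selector atom, as is the case in the hitting-set encoding), so every model of $\satmax(\kb,u)$ must in particular satisfy $\varFor^\bot_i$.

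The key step is to argue that the renaming $\varAt \mapsto \varAt_i$ preserves unsatisfiability. Since this substitution is injective on $\atoms(\varFor^\bot)$, it induces a bijection between interpretations over $\atoms(\varFor^\bot)$ and interpretations over $\{\varAt_i \mid \varAt \in \atoms(\varFor^\bot)\}$ that commutes with the recursive evaluation of the connectives; hence $\varFor^\bot_i$ has a model if and only if $\varFor^\bot$ does. As $\mathsf{Mod}(\varFor^\bot) = \emptyset$, we obtain $\mathsf{Mod}(\varFor^\bot_i) = \emptyset$, i.e., $\varFor^\bot_i$ is itself contradictory. I would make this precise by a straightforward structural induction on $\varFor^\bot$, which is the only place any real (though entirely routine) work is needed.

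Finally, I would conclude that any candidate model of $\satmax(\kb,u)$ would have to satisfy the unsatisfiable conjunct $\varFor^\bot_i$, a contradiction; hence $\satmax(\kb,u)$ is unsatisfiable. Crucially, none of the remaining constraints—(SDM5), (SDM6), or the cardinality constraint (SDM7)—enters this argument, so the bound $u$ plays no role and the claim holds uniformly for every $u \in \{0,\ldots,|\atoms(\kb)|\}$. The statement is intrinsically easy; the only subtlety worth stating carefully is the invariance of (un)satisfiability under the variable renaming, together with the observation that the clones are imposed as hard constraints, so that their inconsistency propagates to the full encoding.
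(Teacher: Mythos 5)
Your proposal is correct and follows essentially the same route as the paper's proof: you fix a contradictory formula, observe that (SDM4) asserts its indexed clone as an unconditional conjunct, and conclude that the injective renaming $\varAt \mapsto \varAt_i$ preserves unsatisfiability, so the whole encoding is unsatisfiable independently of $u$. Your explicit bijection/structural-induction justification of the renaming step and the remark that the clones are unguarded (in contrast to the hitting-set encoding) merely spell out details the paper leaves implicit.
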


Assume that $\kb$ does not contain any contradictory formulas and that $\satmax(\kb,u)$ is satisfiable, and let $\omega$ be a model of $\satmax(\kb,u)$.
For each $i \in \{1, \ldots, |\kb|\}$ we create variables $X_i$ (SDM2) and variables $\mathsf{inv}_{\varAt,i}$ (SDM3) for each $\varAt \in \atoms(\kb)$.
We define
    $$ I_{\max}(\omega,i) = \{\mathsf{inv}_{\varAt,i} \mid \omega(\mathsf{inv}_{\varAt,i}) = t \} $$

Recall that we represent an ``optimal'' interpretation by introducing a variable $\varAt_o$ for each $\varAt \in \atoms(\kb)$ (SDM1).
The truth values assigned to each $\varAt_o$ then correspond to the ``optimal'' interpretation.
\begin{lemma}\label{lem:sat-max-dist-2}
    If $\omega(\varAt_i) \neq \omega(\varAt_o)$ then $\mathsf{inv}_{\varAt, i} \in I_{\max}(\omega,i)$.
    \begin{proof}
        $\varAt_i$ must be either true or false.
        If $\omega(\varAt_i) = t$, then the left part of the implication $\varAt_i \rightarrow \varAt_o \lor \mathsf{inv}_{\varAt,i}$ (SDM5) is true. 
        Thus, in order to satisfy the implication, the right part must be true as well.
        If $\varAt_o$ is false, i.\,e., $\omega(\varAt_i) \neq \omega(\varAt_o)$, then $\mathsf{inv}_{\varAt,i}$ must be true. 
        Consequently, $\mathsf{inv}_{\varAt, i} \in I_{\max}(\omega,i)$.
        Moreover, the implication $\lnot \varAt_i \rightarrow \lnot \varAt_o \lor \mathsf{inv}_{\varAt,i}$, defined via (SDM6), evaluates to true as well, because the left part ($\lnot \varAt_i$) is false.

        If $\omega(\varAt_i) = f$, the left part of the implication defined by (SDM6), i.\,e., $\lnot \varAt_i \rightarrow \lnot \varAt_o \lor \mathsf{inv}_{\varAt,i}$, is true.
        Thus, to satisfy this implication, $\lnot \varAt_o \lor \mathsf{inv}_{\varAt,i}$ must evaluate to true.
        If $\omega(\varAt_o) = t$, meaning if $\omega(\varAt_i) \neq \omega(\varAt_o)$, then $\lnot \varAt_o$ is false, and $\mathsf{inv}_{\varAt,i}$ must be true. 
        Hence, $\mathsf{inv}_{\varAt, i} \in I_{\max}(\omega,i)$.
        In addition, the implication $\varAt_i \rightarrow \varAt_o \lor \mathsf{inv}_{\varAt,i}$, defined by (SDM5), is also satisfied, because the left part ($\varAt_i$) is false.
    \end{proof}
\end{lemma}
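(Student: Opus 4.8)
The plan is to prove Lemma~\ref{lem:sat-max-dist-2} by a direct two-case analysis driven by the fact that $\omega$ satisfies the two implications (SDM5) and (SDM6). Since $\omega$ is a (two-valued) interpretation, $\omega(\varAt_i)$ is either $t$ or $f$, and the hypothesis $\omega(\varAt_i) \neq \omega(\varAt_o)$ pins down the value of $\varAt_o$ in each case. The goal is simply to conclude $\omega(\mathsf{inv}_{\varAt,i}) = t$, which by the definition $I_{\max}(\omega,i) = \{\mathsf{inv}_{\varAt,i} \mid \omega(\mathsf{inv}_{\varAt,i}) = t\}$ is exactly the claim $\mathsf{inv}_{\varAt,i} \in I_{\max}(\omega,i)$.

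First I would handle the case $\omega(\varAt_i) = t$, which under the hypothesis forces $\omega(\varAt_o) = f$. Here I invoke (SDM5), namely $\varAt_i \rightarrow \varAt_o \lor \mathsf{inv}_{\varAt,i}$: its antecedent is true, so since $\omega$ is a model the consequent $\varAt_o \lor \mathsf{inv}_{\varAt,i}$ must evaluate to $t$; as $\omega(\varAt_o) = f$, the remaining disjunct $\mathsf{inv}_{\varAt,i}$ is forced to $t$. I would note in passing that (SDM6) is trivially satisfied in this case because its antecedent $\lnot \varAt_i$ is false, so no contradiction arises. Symmetrically, in the second case $\omega(\varAt_i) = f$ forces $\omega(\varAt_o) = t$, and I would apply (SDM6), $\lnot \varAt_i \rightarrow \lnot \varAt_o \lor \mathsf{inv}_{\varAt,i}$: the antecedent $\lnot \varAt_i$ is true, so $\lnot \varAt_o \lor \mathsf{inv}_{\varAt,i}$ must hold; since $\omega(\varAt_o) = t$ makes $\lnot \varAt_o$ false, again $\mathsf{inv}_{\varAt,i}$ is forced to $t$.

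Combining the two cases yields $\omega(\mathsf{inv}_{\varAt,i}) = t$ whenever $\omega(\varAt_i) \neq \omega(\varAt_o)$, completing the argument. There is no genuine obstacle here; the only thing requiring care is the bookkeeping of \emph{which} implication does the work in \emph{which} case — (SDM5) handles the ``$\varAt_i$ true but $\varAt_o$ false'' mismatch and (SDM6) handles the ``$\varAt_i$ false but $\varAt_o$ true'' mismatch — and observing that in each case the \emph{other} implication is vacuously satisfied so that the model assumption is never overtaxed. This lemma is the atomic building block showing that every pointwise disagreement between the clone interpretation indexed by $i$ and the optimal interpretation must be ``paid for'' by switching on an inversion variable, which is precisely what the downstream at-most-$u$ constraints (SDM7) will later bound to establish the distance inequality in Theorem~\ref{thm:sat-maxdalal}.
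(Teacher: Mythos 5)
Your proof is correct and follows essentially the same route as the paper's: a two-case analysis on $\omega(\varAt_i)$, using (SDM5) to force $\omega(\mathsf{inv}_{\varAt,i})=t$ when $\omega(\varAt_i)=t$ and $\omega(\varAt_o)=f$, using (SDM6) symmetrically in the opposite case, and noting in each case that the other implication is vacuously satisfied. Nothing is missing; the bookkeeping of which implication fires in which case matches the paper exactly.
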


{
\renewcommand{\thetheorem}{\ref{thm:sat-maxdalal}}
\begin{theorem}
    For a given value $u$, the encoding $\satmax(\kb, u)$ is satisfiable if and only if $\imdalal(\kb) \leq u$.
    $\satmax(\kb,u)$ is unsatisfiable for all $u \in \{ 0, \ldots, \abs{\atoms(\kb)}\}$ if and only if $\imdalal(\kb) = \infty$.
    \begin{proof}
        %To begin with, 
        Lemma \ref{lem:sat-max-dist-1} shows that $\satmax(\kb,u)$ is unsatisfiable for all $u \in \{1, \ldots, |\atoms(\kb)|\}$ if $\kb$ contains a contradictory formula, i.\,e., if $\imdalal(\kb) = \infty$.

        Let us now assume that $\kb$ does not contain any contradictory formulas, i.\,e., $\imdalal(\kb) \neq \infty$.
        From Lemma \ref{lem:sat-max-dist-2} we know that for each $i\in |\kb|$, if $\omega(\varAt_i) \neq \omega(\varAt_o)$, then the corresponding $\mathsf{inv}_{\varAt,i}$ must be included in $I_{\max}(\omega, i)$ (i.\,e., $\omega(\mathsf{inv}_{\varAt,i}) = t$).
        This means that if a formula $\varFor \in \kb$ with index $i$ requires an interpretation that differs from the ``optimal'' interpretation, then for each $\varAt_i \in \atoms(\varFor_i)$ with $\omega(\varAt_i) \neq \omega(\varAt_o)$, $\mathsf{inv}_{\varAt,i} \in I_{\max}(\omega, i)$.
        Further, (SDM7) restricts $|I_{\max}(\omega, i)|$ to $u$ (for each $i$).
        If $\imdalal(\kb) \leq u$, there exists a solution for which $|I_{\max}(\omega, i)| \leq u$ for each $i\in \{1, \ldots, |\kb|\}$.
        
        Hence, the constraints defined by (SDM7) are satisfied, and $\satmax(\kb,u)$ is satisfiable.
        If $\imdalal(\kb) > u$, then for at least one $i$ we have $|I_{\max}(\omega, i)| > u$.
        Consequently, at least one constraint defined by (SDM7) cannot be satisfied, which makes $\satmax(\kb,u)$ unsatisfiable.
    \end{proof}
\end{theorem}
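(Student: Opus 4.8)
The plan is to prove the two biconditionals separately, leaning on the two lemmas already established for this encoding. First I would dispatch the infinity case. If some $\varFor_\bot \in \kb$ is contradictory, then $\imdalal(\kb) = \infty$ by definition, and Lemma \ref{lem:sat-max-dist-1} gives that $\satmax(\kb,u)$ is unsatisfiable for every $u \in \{0,\ldots,\abs{\atoms(\kb)}\}$. Conversely, if $\kb$ contains no contradictory formula, then $\imdalal(\kb) \neq \infty$, so $\imdalal(\kb) \leq \abs{\atoms(\kb)}$, and the first biconditional (proved below) makes $\satmax(\kb,u)$ satisfiable for $u = \imdalal(\kb)$; hence it is not unsatisfiable across the whole range. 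This settles the second sentence of the theorem and lets me assume for the remainder that every formula in $\kb$ is individually satisfiable.

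For the forward direction of the first biconditional, suppose $\omega$ is a model of $\satmax(\kb,u)$. I would read off the optimal interpretation $\omega_o$ from the variables $\varAt_o$, i.e.\ $\omega_o(\varAt) = \omega(\varAt_o)$ for $\varAt \in \atoms(\kb)$. Fix any $\varFor \in \kb$ with index $i$. Since the copy $\varFor_i$ (SDM4) is satisfied by $\omega$, the restriction $\omega^{(i)}$ defined by $\omega^{(i)}(\varAt) = \omega(\varAt_i)$ is a model of $\varFor$, so $\omega^{(i)} \in \mathsf{Mod}(\varFor)$. By Lemma \ref{lem:sat-max-dist-2}, whenever $\omega^{(i)}(\varAt) \neq \omega_o(\varAt)$ the variable $\mathsf{inv}_{\varAt,i}$ lies in $I_{\max}(\omega,i)$; hence $d_{dalal}(\omega^{(i)},\omega_o) \leq \abs{I_{\max}(\omega,i)}$, and the at-most-$u$ constraint (SDM7) bounds the latter by $u$. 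Because $d_{dalal}(\mathsf{Mod}(\varFor),\omega_o)$ is the minimum distance over all models of $\varFor$, it is at most $d_{dalal}(\omega^{(i)},\omega_o) \leq u$. As $\varFor$ was arbitrary, $\max_{\varFor \in \kb} d_{dalal}(\mathsf{Mod}(\varFor),\omega_o) \leq u$, and therefore $\imdalal(\kb) \leq u$.

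For the converse, assume $\imdalal(\kb) \leq u$ and let $\omega_o$ be a witnessing interpretation with $\max_{\varFor} d_{dalal}(\mathsf{Mod}(\varFor),\omega_o) \leq u$. For each $\varFor \in \kb$ with index $i$, I would pick a closest model $\omega^{(i)} \in \mathsf{Mod}(\varFor)$ with $d_{dalal}(\omega^{(i)},\omega_o) = d_{dalal}(\mathsf{Mod}(\varFor),\omega_o) \leq u$. Then I define a truth assignment $\omega$ on the signature of $\satmax(\kb,u)$ by $\omega(\varAt_o) = \omega_o(\varAt)$, $\omega(\varAt_i) = \omega^{(i)}(\varAt)$, and $\omega(\mathsf{inv}_{\varAt,i}) = t$ exactly when $\omega^{(i)}(\varAt) \neq \omega_o(\varAt)$. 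It remains to verify every constraint: the copies $\varFor_i$ hold because each $\omega^{(i)}$ is a model of $\varFor$ (SDM4); the implications (SDM5)--(SDM6) hold because $\mathsf{inv}_{\varAt,i}$ is set true at precisely the atoms where $\varAt_i$ and $\varAt_o$ disagree; and each cardinality constraint (SDM7) holds since $\abs{I_{\max}(\omega,i)} = d_{dalal}(\omega^{(i)},\omega_o) \leq u$. Hence $\satmax(\kb,u)$ is satisfiable.

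The main obstacle will be the asymmetry introduced by the two optimizations hidden in $\imdalal$: its definition minimizes over the models of each formula and then maximizes over formulas. Lemma \ref{lem:sat-max-dist-2} only forces $\mathsf{inv}_{\varAt,i}$ to be \emph{true} when $\varAt_i$ and $\varAt_o$ differ, but it does not forbid spurious inversions, so $\abs{I_{\max}(\omega,i)}$ is in general only an \emph{upper} bound on the Hamming distance. The forward direction must therefore convert this upper bound into a bound on the \emph{minimum} distance $d_{dalal}(\mathsf{Mod}(\varFor),\omega_o)$ via the particular (not necessarily closest) model $\omega^{(i)}$, whereas the converse must exhibit a \emph{closest} model per formula and set inversions minimally so that each per-index count stays within $u$. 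Keeping the roles of ``a model'' versus ``the closest model'' straight across the two directions is the delicate bookkeeping on which the argument hinges.
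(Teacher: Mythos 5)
Your proof is correct and follows essentially the same route as the paper's: the infinity case is dispatched via Lemma \ref{lem:sat-max-dist-1}, and the finite case combines Lemma \ref{lem:sat-max-dist-2} with the cardinality constraints (SDM7), reading a model of the encoding as an optimal interpretation $\omega_o$ plus one model per formula copy, and conversely constructing such an assignment from a witnessing interpretation. If anything, your forward direction — noting that the inversion count only \emph{upper}-bounds the Hamming distance of the (not necessarily closest) witness $\omega^{(i)}$, which in turn upper-bounds $d_{dalal}(\mathsf{Mod}(\varFor),\omega_o)$ — makes explicit a step the paper's terser argument glosses over.
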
 
\addtocounter{theorem}{-1}
}
% \vspace{.2cm}
\newpage

\subsubsection{The Sum-Distance Inconsistency Measure}\label{app:proof-sat-sum-dist}

{
\renewcommand\tablename{Encoding}
\begin{small}
\setlength\extrarowheight{2pt}
\begin{longtable}{| >{\columncolor{gray!20}}L{.05\textwidth} >{\columncolor{gray!40}}L{.73\textwidth} >{\columncolor{gray!40}}L{.11\textwidth} |}
     \hline 
        \rowcolor{gray!20} \multicolumn{3}{|l|}{\textbf{Signature}} \\
        \rowcolor{gray!20} \multicolumn{3}{|l|}{For every atom $\varAt \in \atoms(\kb)$:} \\
        & Create variable $\varAt_o$ & (SDS1) \\
        
        \rowcolor{gray!20} \multicolumn{3}{|l|}{For every atom $\varAt \in \atoms(\kb)$ and $i \in \{1, \ldots, |\kb|\}$:} \\
        & Create variables $\varAt_i$ & (SDS2) \\
        
        & Create variables $\textsf{inv}_{\varAt,i}$ & (SDS3) \\
        
        \hline
        
        \rowcolor{gray!20} \multicolumn{3}{|l|}{\textbf{Constraints}} \\
        
        \rowcolor{gray!20} \multicolumn{3}{|l|}{For every formula $\varFor \in \kb$ and $i\in \{1,\ldots , |\kb|\}$:} \\
        & Create $\varFor_{i}$, which is a copy of $\varFor$ where each instance of atom $\varAt$ is replaced by $\varAt_i$ & (SDS4) \\
        
        \rowcolor{gray!20} \multicolumn{3}{|l|}{For every $\varAt_i$ with $i\in \{1,\ldots , |\kb|\}$:} \\
        & $\varAt_i \rightarrow \varAt_o \vee \textsf{inv}_{\varAt,i}$ & (SDS5) \\
        & $\lnot \varAt_i \rightarrow \neg \varAt_o \vee \textsf{inv}_{\varAt,i} $ & (SDS6) \\
        
        \rowcolor{gray!20} \multicolumn{3}{|l|}{Let $\textsf{INV}$ be the set containing all $\textsf{inv}_{\varAt,i}$ for all $i$. Cardinality constraint:} \\
        & $\mathrm{at\_most}(u, \textsf{INV})$ & (SDS7) \\
        
        \hline
    \caption{Overview of the SAT encoding $\satsum(\kb, u)$ of the sum-distance inconsistency measure $\isdalal$.
    $\kb$ is a knowledge base and $\atoms(\kb)$ its signature. 
    $u$ is a candidate for an upper limit for $\isdalal(\kb)$. 
    The signature size of the encoding is $\abs{\atoms(\kb)} + 2 \cdot \abs{\kb} \cdot \abs{\atoms(\kb)}$.}
    \label{tab:sat-dalal-sum}
\end{longtable}
\end{small}
}

Let $\kb$ be an arbitrary knowledge base and $\satsum(\kb, u)$ the corresponding SAT encoding defined via (SDS1)--(SDS7).

\begin{lemma}\label{lem:sat-sum-dist-1}
    If $\kb$ contains at least one contradictory formula, $\satsum(\kb, u)$ is unsatisfiable for all $u \in \{0, \ldots, |\atoms(\kb)| \cdot |\kb|\}$.
    \begin{proof}
        Analogous to the proof of Lemma \ref{lem:sat-max-dist-1}.
    \end{proof}
\end{lemma}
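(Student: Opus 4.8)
The final statement is Lemma~\ref{lem:sat-sum-dist-1}, which asserts that if $\kb$ contains at least one contradictory formula, then $\satsum(\kb,u)$ is unsatisfiable for every $u\in\{0,\ldots,|\atoms(\kb)|\cdot|\kb|\}$. The proof text merely says ``Analogous to the proof of Lemma~\ref{lem:sat-max-dist-1},'' so my plan is to carry out that analogous argument explicitly, since the sum-distance encoding shares constraints (SDS1)--(SDS6) with the max-distance encoding (SDM1)--(SDM6), differing only in the cardinality constraint (SDS7) versus (SDM7).

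The plan is to exhibit a single unsatisfiable subformula of $\satsum(\kb,u)$ that is present regardless of the value of $u$. First I would fix a contradictory formula $\varFor^\bot\in\kb$, meaning $\mathsf{Mod}(\varFor^\bot)=\emptyset$. By (SDS4), the encoding contains an indexed copy $\varFor^\bot_i$ in which every atom $\varAt$ occurring in $\varFor^\bot$ is uniformly renamed to $\varAt_i$ for the appropriate index $i$. The key observation is that this renaming is a bijection between $\atoms(\varFor^\bot)$ and $\{\varAt_i\mid \varAt\in\atoms(\varFor^\bot)\}$ that preserves the Boolean structure of the formula; hence $\varFor^\bot_i$ is satisfiable if and only if $\varFor^\bot$ is satisfiable. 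Since $\varFor^\bot$ is contradictory, $\varFor^\bot_i$ is contradictory as well, so no interpretation over the signature of $\satsum(\kb,u)$ can satisfy the conjunct $\varFor^\bot_i$. As $\varFor^\bot_i$ is one of the conjuncts that make up $\satsum(\kb,u)$, the whole encoding is unsatisfiable.

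The final step is to note that this argument never referenced $u$: the unsatisfiable conjunct $\varFor^\bot_i$ arises purely from (SDS4), which is independent of the cardinality bound in (SDS7). Therefore $\satsum(\kb,u)$ is unsatisfiable for all $u\in\{0,\ldots,|\atoms(\kb)|\cdot|\kb|\}$, as claimed. I would remark that the only formal difference from Lemma~\ref{lem:sat-max-dist-1} is the range of admissible $u$ (here bounded by $|\atoms(\kb)|\cdot|\kb|$ rather than $|\atoms(\kb)|$), but since the range is irrelevant to the unsatisfiability of the shared conjunct, the proof transfers verbatim.

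The main obstacle, though a minor one, is to state cleanly why renaming atoms by an index preserves (un)satisfiability; this is really just the observation that consistent substitution of fresh distinct atoms for distinct atoms is an isomorphism of the formula's models, but it should be spelled out so the reader sees that the conflict inside $\varFor^\bot$ truly survives indexing rather than merely being asserted by analogy. Everything else is immediate once this structural point is made, so the proof is short and requires no computation.
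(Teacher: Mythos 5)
Your proposal is correct and matches the paper's argument exactly: the paper proves this lemma by deferring to Lemma \ref{lem:sat-max-dist-1}, whose proof likewise fixes a contradictory $\varFor^\bot \in \kb$, notes that the indexed copy $\varFor^\bot_i$ introduced by the encoding remains contradictory since renaming atoms does not resolve the internal conflict, and concludes unsatisfiability of the whole encoding independently of $u$. Your only addition is to spell out why index-renaming preserves (un)satisfiability, which the paper merely asserts; this is a welcome clarification but not a different proof.
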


Assume that $\kb$ does not contain any contradictory formulas and that $\satsum(\kb,u)$ is satisfiable, and let $\omega$ be a model of $\satsum(\kb,u)$.
For each $i \in \{1, \ldots, |\kb|\}$ we create variables $X_i$ (SDS2) and variables $\mathsf{inv}_{\varAt,i}$ (SDS3) for each $\varAt \in \atoms(\kb)$.
We define
    $$ I_{\Sigma}(\omega) = \{\mathsf{inv}_{\varAt,i} \mid \omega(\mathsf{inv}_{\varAt,i}) = t \} $$
    
Recall that we represent an ``optimal'' interpretation by introducing a variable $\varAt_o$ for each $\varAt \in \atoms(\kb)$ (SDS1).
The truth values assigned to each $\varAt_o$ then correspond to the ``optimal'' interpretation.

\begin{lemma}\label{lem:sat-sum-dist-2}
    If $\omega(\varAt_i) \neq \omega(\varAt_o)$ then $\mathsf{inv}_{\varAt, i} \in I_{\Sigma}(\omega)$.
    \begin{proof}
        Analogous to the proof of Lemma \ref{lem:sat-max-dist-2}, since (SDM1)--(SDM6) is equivalent to (SDS1)--(SDS6).
    \end{proof}
\end{lemma}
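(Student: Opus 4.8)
The plan is to observe that Lemma \ref{lem:sat-sum-dist-2} concerns only the implications (SDS5) and (SDS6), which are syntactically identical to (SDM5) and (SDM6) of the max-distance encoding; the sole difference between $\satmax(\kb,u)$ and $\satsum(\kb,u)$ lies in the cardinality constraints (SDM7 vs.\ SDS7), and these play no role whatsoever in the present claim. Hence the argument mirrors that of Lemma \ref{lem:sat-max-dist-2}, the only bookkeeping adjustment being that the collecting set $I_{\Sigma}(\omega)$ ranges over all indices $i$ rather than being restricted to a fixed $i$ as $I_{\max}(\omega,i)$ was.

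First I would fix an atom $\varAt \in \atoms(\kb)$ and an index $i \in \{1, \ldots, |\kb|\}$, and assume $\omega(\varAt_i) \neq \omega(\varAt_o)$. Since $\omega$ is a total interpretation of the signature of $\satsum(\kb,u)$, the variable $\varAt_i$ is either true or false under $\omega$, and I would split the argument into these two cases, using in each the relevant one of the two implications that $\satsum(\kb,u)$ contains for every such $\varAt_i$ by construction.

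In the case $\omega(\varAt_i) = t$, the assumption forces $\omega(\varAt_o) = f$. The antecedent of the implication $\varAt_i \rightarrow \varAt_o \vee \mathsf{inv}_{\varAt,i}$ (SDS5) is then true, so its consequent $\varAt_o \vee \mathsf{inv}_{\varAt,i}$ must be satisfied by $\omega$; as $\omega(\varAt_o)=f$, this requires $\omega(\mathsf{inv}_{\varAt,i})=t$. (The companion implication $\lnot \varAt_i \rightarrow \lnot \varAt_o \vee \mathsf{inv}_{\varAt,i}$ (SDS6) is vacuously satisfied here, its antecedent $\lnot \varAt_i$ being false, so it imposes nothing.) Symmetrically, in the case $\omega(\varAt_i) = f$, the assumption forces $\omega(\varAt_o)=t$; now the antecedent $\lnot \varAt_i$ of (SDS6) is true, so $\lnot \varAt_o \vee \mathsf{inv}_{\varAt,i}$ must hold, and since $\omega(\varAt_o)=t$ we again obtain $\omega(\mathsf{inv}_{\varAt,i})=t$.

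In both cases $\omega(\mathsf{inv}_{\varAt,i})=t$, so by the definition $I_{\Sigma}(\omega) = \{\mathsf{inv}_{\varAt,i} \mid \omega(\mathsf{inv}_{\varAt,i}) = t\}$ I conclude $\mathsf{inv}_{\varAt,i} \in I_{\Sigma}(\omega)$, as required. There is essentially no obstacle here: the content is a two-way case distinction, and the only points requiring care are keeping the directions of the two implications straight (each case activates exactly one antecedent) and confirming that the relevant instances of (SDS5)--(SDS6) are genuinely present in $\satsum(\kb,u)$ for the chosen $\varAt_i$, which holds by construction of the encoding.
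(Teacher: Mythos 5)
Your proof is correct and matches the paper's approach: the paper simply cites the proof of Lemma \ref{lem:sat-max-dist-2} as analogous (since (SDS1)--(SDS6) coincide with (SDM1)--(SDM6)), and what you have written out is exactly that case distinction on $\omega(\varAt_i)$ using (SDS5) and (SDS6), with the correct observation that the cardinality constraint plays no role here.
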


{
\renewcommand{\thetheorem}{\ref{thm:sat-sumdalal}}
\begin{theorem}
    For a given value $u$, the encoding $\satsum(\kb, u)$ is satisfiable if and only if $\isdalal(\kb) \leq u$.
    $\satsum(\kb,u)$ is unsatisfiable for all $u \in \{0, \ldots, \abs{\atoms(\kb)} \cdot \abs{\kb} \}$ if and only if $\isdalal(\kb) = \infty$.
    \begin{proof}
        Lemma \ref{lem:sat-sum-dist-1} shows that $\satsum(\kb,u)$ is unsatisfiable for all $u \in \{0, \ldots, |\atoms(\kb)| \cdot |\kb|\}$ if $\kb$ contains a contradictory formula, i.\,e., if $\isdalal(\kb) = \infty$.

        Let us now assume that $\kb$ does not contain any contradictory formulas, i.\,e., $\isdalal(\kb) \neq \infty$.
        From Lemma \ref{lem:sat-sum-dist-2} we know that for any $i\in |\kb|$, if $\omega(\varAt_i) \neq \omega(\varAt_o)$, then the corresponding $\mathsf{inv}_{\varAt,i}$ must be included in $I_{\Sigma}(\omega)$ (i.\,e., $\omega(\mathsf{inv}_{\varAt,i}) = t$).
        This means that if a formula $\varFor \in \kb$ with index $i$ requires an interpretation that differs from the ``optimal'' interpretation, then for each $\varAt_i \in \atoms(\varFor_i)$ with $\omega(\varAt_i) \neq \omega(\varAt_o)$, $\mathsf{inv}_{\varAt,i} \in I_{\Sigma}(\omega)$.
        Further, (SDS7) restricts $|I_{\Sigma}(\omega)|$ to $u$.
        If $\isdalal(\kb) \leq u$, there exists a solution for which $|I_{\Sigma}(\omega)| \leq u$.
        Hence, the constraint defined by (SDS7) is satisfied, and $\satsum(\kb,u)$ is satisfiable.
        If $\isdalal(\kb) > u$, then we have $|I_{\Sigma}(\omega)| > u$.
        Consequently, the constraint defined by (SDS7) cannot be satisfied, which makes $\satsum(\kb,u)$ unsatisfiable.
    \end{proof}
\end{theorem}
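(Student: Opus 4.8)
The statement to prove is Theorem~\ref{thm:sat-sumdalal}, which characterizes exactly when $\satsum(\kb,u)$ is satisfiable in terms of the sum-distance inconsistency value. The plan is to mirror the structure that was already established for the max-distance measure in Appendix~\ref{app:proof-sat-max-dist}, exploiting the fact that the two encodings share identical constraints (SDS1)--(SDS6) $=$ (SDM1)--(SDM6) and differ only in the single cardinality constraint (SDS7) versus the family of per-formula constraints (SDM7). I would open with the two supporting lemmas, namely the unsatisfiability lemma for the contradictory case (Lemma~\ref{lem:sat-sum-dist-1}) and the inversion lemma (Lemma~\ref{lem:sat-sum-dist-2}), both of whose proofs reduce to the corresponding max-distance lemmas and require essentially no new work.

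First I would handle the infinity case. Lemma~\ref{lem:sat-sum-dist-1} gives that if $\kb$ contains a contradictory formula, then $\satsum(\kb,u)$ is unsatisfiable for every $u \in \{0,\ldots,\abs{\atoms(\kb)}\cdot\abs{\kb}\}$; the argument is identical to the max-distance case, since adding an index $i$ to the atoms of a formula preserves contradictoriness, so the indexed copy $\varFor_i^\bot$ from (SDS4) remains unsatisfiable regardless of $u$. Combined with the definition of $\isdalal$, which equals $\infty$ exactly when some formula in $\kb$ is individually unsatisfiable, this establishes the second sentence of the theorem.

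Next I would treat the finite case, assuming $\kb$ has no contradictory formula so $\isdalal(\kb)\neq\infty$. By Lemma~\ref{lem:sat-sum-dist-2}, for every $i \in \{1,\ldots,\abs{\kb}\}$ and every atom $\varAt$, whenever $\omega(\varAt_i)\neq\omega(\varAt_o)$ the corresponding inversion variable satisfies $\mathsf{inv}_{\varAt,i}\in I_{\Sigma}(\omega)$. The crux is to connect $\abs{I_{\Sigma}(\omega)}$ to the actual sum of Dalal distances. The key structural observation is that each formula $\varFor\in\kb$ with index $i$ must be satisfiable (via its indexed copy $\varFor_i$), and the set $\{\mathsf{inv}_{\varAt,i} \mid \omega(\mathsf{inv}_{\varAt,i})=t\}$ for a fixed $i$ counts precisely the atoms on which a chosen model of $\varFor_i$ disagrees with the optimal interpretation $\omega_o$; this count is an upper bound on $d_{dalal}(\mathsf{Mod}(\varFor),\omega_o)$, and a satisfying assignment can always realize the minimum by picking a closest model. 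Because (SDS7) imposes the \emph{single} global constraint $\mathrm{at\_most}(u,\textsf{INV})$, the quantity $\abs{I_{\Sigma}(\omega)}$ equals the total number of inversions summed across all $i$, which corresponds exactly to $\sum_{\varFor\in\kb} d_{dalal}(\mathsf{Mod}(\varFor),\omega_o)$ for the optimal choice. Thus if $\isdalal(\kb)\leq u$ a satisfying assignment with $\abs{I_{\Sigma}(\omega)}\leq u$ exists and (SDS7) is met, whereas if $\isdalal(\kb)>u$ every assignment forces $\abs{I_{\Sigma}(\omega)}>u$, violating (SDS7).

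The main obstacle I anticipate is the same subtlety that underlies the max-distance argument but is sharper here: establishing that $\abs{I_{\Sigma}(\omega)}$ can be made to equal the \emph{sum of minimal} distances rather than merely bounding some larger quantity. One direction is clean---any model induces inversions giving at least the true distances---but the converse requires arguing that an optimal interpretation together with per-formula closest models yields an assignment in which no superfluous $\mathsf{inv}_{\varAt,i}$ is forced true, so that the global count is exactly $\isdalal(\kb)$. I would address this by noting that the implications (SDS5)--(SDS6) only \emph{require} $\mathsf{inv}_{\varAt,i}$ to be true when there is a genuine disagreement, and that the minimize-free SAT setting permits setting every non-forced inversion variable to false; hence an optimal configuration saturates (SDS7) at exactly $\isdalal(\kb)$. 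Since the difference from the max-distance proof is solely that distances are aggregated by summation over a single constraint rather than bounded per-formula, the remaining reasoning transfers verbatim.
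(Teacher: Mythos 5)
Your proposal follows essentially the same route as the paper's proof: the contradictory-formula case is dispatched by the analogue of the max-distance unsatisfiability lemma, the finite case uses the inversion lemma to tie disagreements with the optimal interpretation to the $\mathsf{inv}_{\varAt,i}$ variables, and the single global constraint (SDS7) then bounds $\abs{I_{\Sigma}(\omega)}$ by $u$. Your additional remark that non-forced inversion variables can be set to false (so that a closest-model assignment realizes exactly $\isdalal(\kb)$ inversions) is a point the paper's proof passes over silently, and making it explicit is a small improvement rather than a different argument.
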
 
\addtocounter{theorem}{-1}
}

\subsubsection{The Hit-Distance Inconsistency Measure}\label{app:proof-sat-hit-dist}

{
\renewcommand\tablename{Encoding}
\begin{small}
\setlength\extrarowheight{2pt}
\begin{longtable}{| >{\columncolor{gray!20}}L{.05\textwidth} >{\columncolor{gray!40}}L{.73\textwidth} >{\columncolor{gray!40}}L{.11\textwidth} |}
     \hline 
     
        \rowcolor{gray!20} \multicolumn{3}{|l|}{\textbf{Signature}} \\
        \rowcolor{gray!20} \multicolumn{3}{|l|}{For every formula $\varFor \in \kb$:} \\
        & Create variable $\textsf{hit}_\varFor$ & (SDH1) \\
        
        \rowcolor{gray!20} \multicolumn{3}{|l|}{For every atom $\varAt \in \atoms(\kb)$:} \\
        & Create variable $\varAt$ & (SDH2) \\
        
        \hline
        
        \rowcolor{gray!20} \multicolumn{3}{|l|}{\textbf{Constraints}} \\
        
        \rowcolor{gray!20} \multicolumn{3}{|l|}{For every formula $\varFor \in \kb$ :} \\
        & $\varFor \lor \textsf{hit}_\varFor$ & (SDH3) \\
        
        \rowcolor{gray!20} \multicolumn{3}{|l|}{Let $\textsf{HIT}_\kb$ be the set containing all variables added in (SDH1).} \\
        \rowcolor{gray!20} \multicolumn{3}{|l|}{Cardinality constraint:} \\
        & $\mathrm{at\_most}(u, \textsf{HIT}_\kb )$ & (SDH4) \\
        
        \hline
    \caption{Overview of the SAT encoding $\sathit(\kb, u)$ of the hit-distance inconsistency measure $\ihdalal$.
    $\kb$ is a knowledge base and $\atoms(\kb)$ its signature. 
    $u$ is a candidate for an upper limit for $\ihdalal(\kb)$. 
    The signature size of the encoding is $\abs{\atoms(\kb)} + \abs{\kb}$.}
    \label{tab:sat-dalal-hit}
\end{longtable}
\end{small}
}

Let $\kb$ be an arbitrary knowledge base and $\sathit(\kb, u)$ the corresponding SAT encoding defined via (SDH1)--(SDH4).
Assume that $\sathit(\kb,u)$ is satisfiable, and let $\omega$ be a model of $\sathit(\kb,u)$.
We define 
    $$ H_\mathsf{hit}(\omega) = \{ \mathsf{hit}_\varFor \mid \omega(\mathsf{hit}_\varFor) = t \}. $$

\begin{lemma}\label{lem:hit-dist-sat}
    Each $\mathsf{hit}_\varFor \in H_{\mathsf{hit}}(\omega)$ corresponds to a formula $\varFor \in \kb$ being removed from $\kb$.
    \begin{proof}
        Due to (SDH3), every formula $\varFor \in \kb$ is extended by ``$\lor\ \mathsf{hit}_\varFor$''. 
        If $\omega(\mathsf{hit}_\varFor) = t$, the term $\varFor \lor \mathsf{hit}_\varFor$ will always be true, regardless of the truth value of $\varFor$ under $\omega$.
        Hence, this is equivalent to the removal of $\varFor$ from $\kb$.
    \end{proof}
\end{lemma}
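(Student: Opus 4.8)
The plan is to fix an arbitrary model $\omega$ of $\sathit(\kb,u)$ and an arbitrary $\mathsf{hit}_\varFor \in H_{\mathsf{hit}}(\omega)$, and to make the informal phrase ``corresponds to a formula being removed'' precise before arguing anything. To this end I would let $\omega'$ denote the restriction of $\omega$ to the original signature $\atoms(\kb)$ (whose variables are introduced by (SDH2)), and I would let $\kb^- = \{\varFor \in \kb \mid \mathsf{hit}_\varFor \notin H_{\mathsf{hit}}(\omega)\}$ be the residual knowledge base obtained by deleting exactly those formulas whose hit-variable is true under $\omega$. The claim to establish is then that membership $\mathsf{hit}_\varFor \in H_{\mathsf{hit}}(\omega)$ matches exactly the deletion of $\varFor$, in the sense that $\omega'$ is not required to satisfy such a $\varFor$, whereas it must satisfy every surviving formula of $\kb^-$.

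The central step is to inspect the constraint $\varFor \lor \mathsf{hit}_\varFor$ added by (SDH3). Since each $\mathsf{hit}_\varFor$ is a fresh variable created once per formula in (SDH1), it does not occur inside $\varFor$, so the truth value of $\varFor$ under $\omega$ depends only on $\omega'$. Assuming $\omega(\mathsf{hit}_\varFor) = t$, the disjunction $\varFor \lor \mathsf{hit}_\varFor$ is satisfied regardless of whether $\omega' \models \varFor$; hence this constraint imposes no condition whatsoever on $\omega'$ with respect to the atoms of $\varFor$, which is precisely the effect of removing $\varFor$ from $\kb$.

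For the complementary direction I would show that if $\mathsf{hit}_\varFor \notin H_{\mathsf{hit}}(\omega)$, i.e. $\omega(\mathsf{hit}_\varFor) = f$, then the only way to satisfy (SDH3) is to have $\omega \models \varFor$, so $\omega' \models \varFor$. Combining the two directions yields that $\omega'$ is a model of $\kb^-$ and that the formulas indexed by $H_{\mathsf{hit}}(\omega)$ are exactly the ones dropped from $\kb$, thereby completing the correspondence asserted by the lemma.

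I expect no serious obstacle here, as there are no heavy calculations involved; the only point that needs genuine care is pinning down the meaning of ``removal'', and for that the essential observation is the freshness of $\mathsf{hit}_\varFor$ (it appears in no formula of the original knowledge base). This freshness is what makes the restriction argument to $\omega'$ clean and lets a true hit-variable neutralize its own clause in exactly the same way a deletion would, so the claimed bijection between $H_{\mathsf{hit}}(\omega)$ and the removed formulas follows directly.
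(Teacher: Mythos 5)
Your proof is correct and follows essentially the same route as the paper's: the key observation in both is that a true $\mathsf{hit}_\varFor$ makes the clause $\varFor \lor \mathsf{hit}_\varFor$ satisfied independently of $\varFor$'s truth value, which is exactly the effect of deleting $\varFor$ from $\kb$. Your extra material---the freshness of $\mathsf{hit}_\varFor$ and the converse direction that $\omega(\mathsf{hit}_\varFor)=f$ forces $\omega \models \varFor$---is a sound formalization of what the paper leaves implicit (the converse is in fact used only later, in the proof of Theorem~\ref{thm:sat-hitdalal}), so it adds precision without changing the argument.
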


{
\renewcommand{\thetheorem}{\ref{thm:sat-hitdalal}}
\begin{theorem}
    For a given value $u$, the encoding $\sathit(\kb, u)$ is satisfiable if and only if $\ihdalal(\kb) \leq u$.
    \begin{proof}
        The cardinality constraint (SDH4) restricts the number of atoms $\mathsf{hit}_\varFor$ being allowed to be true, to $u$.
        If the minimal number of formulas $\varFor \in \kb$ that need to be removed in order to make $\kb$ consistent is $\leq u$ (i.\,e., if $\ihdalal(\kb) \leq u$) then $\leq u$ atoms $\mathsf{hit}_\varFor$ can be set to true (i.\,e., $|H_\mathsf{hit}(\omega)| \leq u$).
        From Lemma \ref{lem:hit-dist-sat} we know that those atoms correspond to the formulas that are removed.
        Thus, there exists a solution in which exactly those atoms $\mathsf{hit}_\varFor$ are set to true which correspond to the formulas $\varFor \in \kb$ that need to be removed to make $\kb$ consistent.
        Consequently, the remaining formulas in $\kb$ are satisfiable, and $\sathit(\kb, u)$ is satisfiable.
        If more than $u$ formulas need to be removed (i.\,e., if $\ihdalal(\kb) > u$), (SDH4) still restricts the number of $\mathsf{hit}_\varFor$ atoms being set to true to $u$. 
        Hence, there exists no model for $\{\bigcup \varFor \in \kb \mid \omega(\mathsf{hit}_\varFor) = f\}$, and $\sathit(\kb, u)$ is unsatisfiable.
    \end{proof}
\end{theorem}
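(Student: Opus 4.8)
The plan is to establish both directions of the biconditional by leaning on the characterization of $\ihdalal(\kb)$ as the minimal number of formulas whose removal renders $\kb$ consistent (the simpler characterization from \citep{grant2017} recalled in Section~\ref{sec:inc-measurement}), together with Lemma~\ref{lem:hit-dist-sat}, which already tells us that setting $\mathsf{hit}_\varFor$ to true in a model $\omega$ is equivalent to deleting $\varFor$ from $\kb$. The object linking the two sides is the cardinality constraint (SDH4), which caps the number of such deletions at $u$.

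For the forward direction, I would assume $\sathit(\kb,u)$ is satisfiable and fix a model $\omega$. By (SDH4) we immediately have $|H_\mathsf{hit}(\omega)| \leq u$. The key step is to observe that $\omega$, viewed as an element of $\Omega(\atoms(\kb))$ via the atoms kept in the signature by (SDH2), satisfies every formula $\varFor \in \kb$ with $\mathsf{hit}_\varFor \notin H_\mathsf{hit}(\omega)$: for such a formula the clause $\varFor \lor \mathsf{hit}_\varFor$ of (SDH3) forces $\varFor$ itself to be true under $\omega$, since $\omega(\mathsf{hit}_\varFor) = f$. Hence discarding the at-most-$u$ formulas indexed by $H_\mathsf{hit}(\omega)$ leaves a consistent subset of $\kb$, and therefore $\ihdalal(\kb) \leq |H_\mathsf{hit}(\omega)| \leq u$.

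For the converse I would start from $\ihdalal(\kb) \leq u$ and take a witnessing set $R \subseteq \kb$ with $|R| \leq u$ such that $\kb \setminus R$ is consistent, say via an interpretation $\omega' \in \Omega(\atoms(\kb))$. The construction is to extend $\omega'$ to a truth assignment $\omega$ on the full signature of $\sathit(\kb,u)$ by keeping $\omega'$ on the atoms and setting $\mathsf{hit}_\varFor$ true exactly for $\varFor \in R$. I then check the two families of constraints: each clause (SDH3) is satisfied because either $\varFor \notin R$ and $\omega' \models \varFor$, or $\varFor \in R$ and $\omega(\mathsf{hit}_\varFor) = t$; and (SDH4) holds since exactly $|R| \leq u$ of the hit-variables are true. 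Thus $\omega$ is a model and $\sathit(\kb,u)$ is satisfiable.

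I expect the proof to be short, so the main point requiring care is not a genuine obstacle but the bookkeeping that connects ``the unmarked formulas are simultaneously satisfied by one common assignment'' with the removal-based definition of $\ihdalal$. In particular one must note that, unlike the distance-based encodings $\satmax$ and $\satsum$, here there is a single copy of each atom, so the shared atoms automatically receive one consistent valuation under $\omega$; and one should use the definition of $\ihdalal$ as an upper bound on the number of removed formulas in the forward direction, without needing $H_\mathsf{hit}(\omega)$ itself to be minimal.
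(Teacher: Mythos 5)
Your proposal is correct and follows essentially the same route as the paper's proof: both rely on the removal-based characterization of $\ihdalal$, the observation (Lemma~\ref{lem:hit-dist-sat}) that a true $\mathsf{hit}_\varFor$ amounts to deleting $\varFor$ while a false one forces $\varFor$ itself to hold via (SDH3), and the cardinality constraint (SDH4) to cap the number of deletions at $u$. Your write-up is in fact somewhat more explicit than the paper's, separating the two directions and constructing the witnessing model in the converse direction, but the underlying argument is the same.
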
 
\addtocounter{theorem}{-1}
}

%%% ASP
\subsection{ASP-based Algorithms}\label{app:proofs-asp}

The following sections are each comprised of a correctness proof corresponding to the ASP encoding for one of the six inconsistency measures considered in this work.

\subsubsection{The Contension Inconsistency Measure}\label{app:proof-asp-contension}

{
\renewcommand\tablename{Encoding}
\begin{small}
\setlength\extrarowheight{2pt}
\begin{longtable}{| >{\columncolor{gray!20}}L{.05\textwidth} >{\columncolor{gray!40}}L{.76\textwidth} >{\columncolor{gray!40}}R{.08\textwidth} |}
     \hline 
        \rowcolor{gray!20} \multicolumn{3}{|l|}{For every $\varFor \in \kb$: } \\
        & $\texttt{kbMember(} \varFor \texttt{).}$ & (AC1) \\
        \rowcolor{gray!20} \multicolumn{3}{|l|}{For every $\varAt \in \atoms(\kb)$: } \\
        & $\texttt{atom(} \varAt \texttt{).}$ & (AC2) \\
        \rowcolor{gray!20} \multicolumn{3}{|l|}{For every conjunction $\varSub_c = \varSubSub_{c,1} \land \varSubSub_{c,2}$ appearing in some formula: } \\
        & $\texttt{conjunction(} \varSub_c \texttt{,} \varSubSub_{c,1} \texttt{,} \varSubSub_{c,2} \texttt{).}$ & (AC3) \\
        \rowcolor{gray!20} \multicolumn{3}{|l|}{For every disjunction $\varSub_d = \varSubSub_{d,1} \lor \varSubSub_{d,2}$ appearing in some formula: } \\
        & $\texttt{disjunction(} \varSub_d \texttt{,} \varSubSub_{d,1} \texttt{,} \varSubSub_{d,2} \texttt{).}$ & (AC4) \\
        \rowcolor{gray!20} \multicolumn{3}{|l|}{For each negation $\varSub_n = \lnot \varSubSub_n$ appearing in some formula: } \\
        & $\texttt{negation(} \varSub_n \texttt{,} \varSubSub_n \texttt{).}$ & (AC5) \\
        \rowcolor{gray!20} \multicolumn{3}{|l|}{For every formula $\varSub_a$ which consists of a single atom $\varAt$: } \\
        & $\texttt{formulaIsAtom(} \varSub_a \texttt{,} \varAt \texttt{).}$ & (AC6) \\
        
        \hline
        \rowcolor{gray!20} \multicolumn{3}{|l|}{As the static part, we define:} \\
        & $\texttt{tv(} t \texttt{;} f \texttt{;} b \texttt{).}$ & (AC7) \\[1ex]
        & \texttt{1\{truthValue(A,T)\,:\,tv(T)\}1\,:-} &  \\
            & \qquad \qquad \texttt{atom(A).} & (AC8) \\[1ex]
        % conjunction rules:
        & \texttt{truthValue(F,}$t$\texttt{)\,:-} & \\
            & \qquad \qquad \texttt{conjunction(F,G,H),} & \\
            & \qquad \qquad \texttt{truthValue(G,}$t$\texttt{),} & \\
            & \qquad \qquad \texttt{truthValue(H,}$t$\texttt{).} & (AC9) \\[1ex]
        &\texttt{truthValue(F,}$f$\texttt{)\,:-} & \\
            & \qquad \qquad \texttt{conjunction(F,G,H),} & \\
            & \qquad \qquad \texttt{1\{truthValue(G,}$f$\texttt{), truthValue(H,}$f$\texttt{)\}.} & (AC10) \\[1ex]
        & \texttt{truthValue(F,}$b$\texttt{)\,:-} & \\
            & \qquad \qquad \texttt{conjunction(F,\_,\_),} & \\
            & \qquad \qquad \texttt{not truthValue(F,}$t$\texttt{),} & \\
            & \qquad \qquad \texttt{not truthValue(F,}$f$\texttt{).} & (AC11) \\[1ex]
         
        % disjunction rules:   
        & \texttt{truthValue(F,}$f$\texttt{)\,:-} & \\
            & \qquad \qquad \texttt{disjunction(F,G,H),} & \\
            & \qquad \qquad \texttt{truthValue(G,}$f$\texttt{),} & \\
            & \qquad \qquad \texttt{truthValue(H,}$f$\texttt{),} & (AC12) \\[1ex]
        &\texttt{truthValue(F,}$t$\texttt{)\,:-} & \\
            & \qquad \qquad \texttt{disjunction(F,G,H),} & \\
            & \qquad \qquad \texttt{1\{truthValue(G,}$t$\texttt{), truthValue(H,}$t$\texttt{)\}.} & (AC13) \\[1ex]
        & \texttt{truthValue(F,}$b$\texttt{)\,:-} & \\
            & \qquad \qquad \texttt{disjunction(F,\_,\_),} & \\
            & \qquad \qquad \texttt{not truthValue(F,}$t$\texttt{),} & \\
            & \qquad \qquad \texttt{not truthValue(F,}$f$\texttt{).} & (AC14) \\[1ex]
            
        % negation rules:
        & \texttt{truthValue(F,}$t$\texttt{)\,:-} & \\
            & \qquad \qquad \texttt{negation(F,G),} & \\
            & \qquad \qquad \texttt{truthValue(G,}$f$\texttt{).} & (AC15) \\[1ex]
        & \texttt{truthValue(F,}$f$\texttt{)\,:-} & \\
            & \qquad \qquad \texttt{negation(F,G),} & \\
            & \qquad \qquad \texttt{truthValue(G,}$t$\texttt{).} & (AC16) \\[1ex]
        & \texttt{truthValue(F,}$b$\texttt{)\,:-} & \\
            & \qquad \qquad \texttt{negation(F,G),} & \\
            & \qquad \qquad \texttt{truthValue(G,}$b$\texttt{).} & (AC17) \\[1ex]

        % formula consisting of a single atom:
        & \texttt{truthValue(F,T)\,:-} & \\
            & \qquad \qquad \texttt{formulaIsAtom(F,G),} & \\
            & \qquad \qquad \texttt{truthValue(G,T),} & \\
            & \qquad \qquad \texttt{tv(T).} & (AC18) \\[1ex]
        
        % integrity constraint:
        & \texttt{:-} & \\
            & \qquad \qquad \texttt{truthValue(F,}$f$\texttt{),} & \\
            & \qquad \qquad \texttt{kbMember(F).} & (AC19) \\[1ex]
            
        % minimize statement:
        &  \texttt{\#minimize\{1,A\,:\,truthValue(A,}$b$\texttt{), atom(A)\}.} & (AC20) \\
        \hline
    \caption{Overview of the ASP encoding $\pc(\kb)$ of the contension inconsistency measure $\icont$. $\kb$ is the given knowledge base.}
    \label{tab:asp-contension}
\end{longtable}
\end{small}
}

Consider an arbitrary knowledge base $\kb$ as well as the set of rules listed in Encoding \ref{tab:asp-contension} in Section \ref{sec:asp-algo-c}, which make up the extended logic program $\pc (\kb)$. 
Let $\pc'(\kb)$ denote the extended logic program $\pc(\kb)$ \emph{without} the minimize statement (AC20).
\begin{lemma}\label{lem:asp-c-1}
% \mt{better write ``...then the interpretation w defined as ... is a model of K''}
If $M$ is an answer set of $\pc'(\kb)$ then the three-valued interpretation $\omega^3_M$ defined as
\begin{align*}
\omega^3_M(\varAt) = \begin{cases} 
t & \quad \mathtt{truthValue(}\varAt , t \mathtt{)} \in M\\
f & \quad \mathtt{truthValue(}\varAt , f \mathtt{)} \in M\\
b & \quad \mathtt{truthValue(}\varAt , b \mathtt{)} \in M \end{cases}
\end{align*}
is a model of $\kb$ with $\varAt \in \mathsf{At}(\kb)$. 
% and $\emph{\texttt{atom(}}x\emph{\texttt{)}}\in M$
%  \mt{are the latter two conditions not equivalent (at least they should be)? Then drop the latter}
\end{lemma}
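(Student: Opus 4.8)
The plan is to prove Lemma~\ref{lem:asp-c-1} by establishing two things: first, that $\omega^3_M$ is well-defined (i.e., each atom receives exactly one of the three truth values), and second, that $\omega^3_M$ is a three-valued model of $\kb$ (i.e., no formula $\varFor \in \kb$ evaluates to $f$). The key technical tool will be a structural induction showing that the \texttt{truthValue/2} predicate in an answer set $M$ faithfully tracks the three-valued evaluation of every subformula.

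First I would establish well-definedness. The cardinality constraint (AC8), namely \texttt{1\{truthValue(A,T) : tv(T)\}1 :- atom(A)}, together with the fact declaration (AC7) that $t$, $f$, $b$ are the only truth values, forces each atom $\varAt \in \atoms(\kb)$ to be assigned exactly one truth value in any answer set. Hence exactly one of \texttt{truthValue($\varAt$,$t$)}, \texttt{truthValue($\varAt$,$f$)}, \texttt{truthValue($\varAt$,$b$)} lies in $M$, so $\omega^3_M$ is a total, single-valued function on $\atoms(\kb)$.

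Next I would carry out the main structural induction, proving for every $\varSub \in \mathsf{sub}(\kb)$ and every truth value $\varTV \in \{t,f,b\}$ that $\omega^3_M(\varSub) = \varTV$ if and only if \texttt{truthValue($\varSub$,$\varTV$)} $\in M$. The base case is a subformula consisting of a single atom, handled by (AC18) which propagates the atom's truth value. For the inductive step I would treat conjunction, disjunction, and negation separately, verifying that rules (AC9)--(AC11), (AC12)--(AC14), and (AC15)--(AC17), respectively, encode exactly the three-valued truth tables of Table~\ref{tab:3VL}. The subtle point here is that the ``$b$'' cases (AC11), (AC14) are defined via default negation (``\texttt{not truthValue(F,$t$)}, \texttt{not truthValue(F,$f$)}''), so I must argue that in a stable model these rules fire precisely when neither $t$ nor $f$ was derived; this requires checking that the $t$- and $f$-rules are themselves correct (so that the absence of $t$ and $f$ genuinely corresponds to the value $b$) and that no unfounded derivation of the $b$-atom is possible. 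Finally, the integrity constraint (AC19), \texttt{:- truthValue(F,$f$), kbMember(F)}, guarantees that no $\varFor \in \kb$ is assigned $f$, so every formula in $\kb$ evaluates to $t$ or $b$ under $\omega^3_M$, which is exactly the condition $\omega^3_M \in \mathsf{Mod}^3(\kb)$.

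I expect the main obstacle to be the careful handling of default negation in the stability argument for the $b$-valued rules. Unlike the SAT encoding, where the equivalences (SC4)--(SC15) pin down truth values symmetrically, the ASP rules derive $t$ and $f$ positively but derive $b$ only through the \emph{absence} of $t$ and $f$ under the reduct. I would therefore need to verify, for each connective, that the reduct $\pc'(\kb)^M$ has $\mathsf{Cn}(\pc'(\kb)^M) = M$ restricted to \texttt{truthValue}-atoms, ruling out both spurious derivations and missing ones. Since the program is stratified in the relevant part (the connective rules only depend on truth values of strict subformulas, so there is no recursion through default negation across the subformula ordering), this minimality argument goes through cleanly, which is the crux I would emphasize.
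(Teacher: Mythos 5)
Your proposal matches the paper's proof in structure and substance: well-definedness via the cardinality constraint (AC8), a structural induction over subformulas showing \texttt{truthValue($\varSub$,$\varTV$)} $\in M$ iff $\omega^3_M(\varSub)=\varTV$ by checking that (AC9)--(AC18) encode the three-valued truth tables, and the conclusion via the integrity constraint (AC19). Your explicit attention to the stability/reduct argument for the default-negated $b$-rules is in fact more careful than the paper's own proof, which performs the truth-table case analysis without separately discussing unfoundedness; your stratification remark correctly justifies why that gap is harmless.
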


\begin{proof}
$\omega^3_M$ is well-defined: 
As (AC8) states, each answer set of $\pc'(\kb)$ will hold exactly one of $\texttt{truthValue(}\varAt \texttt{,} t \texttt{)}$, $\texttt{truthValue(}\varAt \texttt{,} f \texttt{)}$, $\texttt{truthValue(}\varAt \texttt{,} b \texttt{)}$ for each atom $\varAt \in \atoms(\kb)$:
\begin{align*}
    1\texttt{\{truthValue(}\varAt \texttt{,T) : tv(T)\}}1 \texttt{ :- atom(}\varAt\texttt{).}
\end{align*}
% \mt{the following observation has no need in a proof}
% An equivalent, but more explicit notation of the above rule is:
% \begin{align*}
%     1 \texttt{\{}&\texttt{truthValue(}x\texttt{,}t\texttt{),} \\
%     & \texttt{truthValue(}x\texttt{,}f\texttt{),} \\
%     & \texttt{truthValue(}x\texttt{,}b\texttt{)\}}1 \texttt{ :- atom(}x\texttt{).}
% \end{align*}

% \begin{align*}
% 	& e_{x_T} \leftarrow \mathtt{not}\, e_{x_B}, \mathtt{not}\, e_{x_F}.,\\
% 	& e_{x_B} \leftarrow \mathtt{not}\, e_{x_T}, \mathtt{not}\, e_{x_F}.,\\
% 	& e_{x_F} \leftarrow \mathtt{not}\, e_{x_B}, \mathtt{not}\, e_{x_T}.
% \end{align*}

Via structural induction, we show that for all \mbox{(sub)}formulas $\varSub$ in $\kb$, $\omega^3_M(\varSub) = \varTV $ (with $\varTV \in \{t,f,b\}$) if and only if $\texttt{truthValue(}\varSub \texttt{,} \varTV \texttt{)} \in M$.

\textbf{Induction Hypothesis (I.H.):}  $\omega^3_M(\varSub) = \varTV $ if and only if $ \texttt{truthValue(}\varSub \texttt{,} \varTV \texttt{)} \in M$ is true for all subformulas of the considered formula.
\begin{enumerate}
    \item Let $\varSub = \varAt$ for $\varAt\in \mathsf{At}(\kb)$.
        \begin{itemize}
            \item ``$\Rightarrow$'': 
                $$\omega^3_M(\varSub) = \varTV \Rightarrow \omega^3_M(\varAt) = \varTV \defArrow \texttt{truthValue(} \varAt \texttt{,} \varTV \texttt{)}\in M$$ 
            \item ``$\Leftarrow$'': 
                $$\texttt{truthValue(} \varAt \texttt{,} \varTV \texttt{)} \in M \defArrow \omega^3_M(\varAt) = \varTV \Rightarrow \omega^3_M(\varSub) = \varTV$$
        \end{itemize}
        
    \item Let $\varSub = \varSubSub_1 \land \varSubSub_2$. % \mt{a conjunction can have more than two conjuncts}
        \begin{enumerate}
            \item $\varTV = t$:
                \begin{itemize}
                    \item ``$\Rightarrow$'': 
                    \begin{align*}
                         \omega^3_M(\varSubSub_1 \land \varSubSub_2) = \varTV  
                      &  \xRightarrow{\varTV=t}{} 
                        \left\{
                        \begin{array}{l}
                             \omega^3_M(\varSubSub_1) = t \\
                             \omega^3_M(\varSubSub_2) = t
                        \end{array} \right\} \\
                      &  \xRightarrow{\text{I.H.}}{} 
                        \left\{\begin{array}{l}
                        \texttt{truthValue(} \varSubSub_1 \texttt{,} t \texttt{)} \in M\\
                        \texttt{truthValue(} \varSubSub_2 \texttt{,} t \texttt{)} \in M
                        \end{array}\right\} \\
                        & \xRightarrow{\text{(AC9)}}{} 
                        \texttt{truthValue(} \varSubSub_1 \land \varSubSub_2 \texttt{,} t \texttt{)} \in M  
                        % \\ \mt{\text{which rule is responsible?}}
                    \end{align*}
                    
                    \item ``$\Leftarrow$'':
                        \begin{align*}
                            \texttt{truthValue(} \varSubSub_1 \land \varSubSub_2 \texttt{,} t \texttt{)} \in M 
                          &  \Rightarrow 
                            \left\{\begin{array}{l}
                            \texttt{truthValue(} \varSubSub_1 \texttt{,} t \texttt{)} \in M\\
                            \texttt{truthValue(} \varSubSub_2 \texttt{,} t \texttt{)} \in M
                            \end{array}\right\} \\
                            & \Rightarrow 
                            \left\{
                            \begin{array}{l}
                                 \omega^3_M(\varSubSub_1) = t \\
                                 \omega^3_M(\varSubSub_2) = t
                            \end{array} 
                            \right\} \\
                            & \Rightarrow
                            \omega^3_M(\varSubSub_1 \land \varSubSub_2) = t
                        \end{align*}
                \end{itemize}  
                
            \item $\varTV = b$: 
                \begin{itemize}
                    \item ``$\Rightarrow$'': % \mt{add a bit more text such as ''let us first assume that w(psi1)=t and w(psi2)=b, ...'', also add rules responsible for derivations everywhere}
                    \begin{align*}
                         \omega^3_M(\varSubSub_1 \land \varSubSub_2) = \varTV
                        & \xRightarrow{\varTV=b}{}   \left\{ \begin{array}{l}
                         \omega^3_M(\varSubSub_1) = t \\
                         \omega^3_M(\varSubSub_2) = b
                        \end{array} \right\} \\
                        & \xRightarrow{\text{I.H.}}{} 
                        \left\{\begin{array}{l}
                        \texttt{truthValue(} \varSubSub_1 \texttt{,} t \texttt{)}  \in M\\
                        \texttt{truthValue(} \varSubSub_2 \texttt{,} b \texttt{)}  \in M
                        \end{array}\right\} \\
                        & \xRightarrow{\text{(AC11)}}{} 
                        \texttt{truthValue(} \varSubSub_1 \land \varSubSub_2 \texttt{,} b \texttt{)}  \in M
                        \end{align*}
                         Analogous for the case where $\omega^3_M(\varSubSub_1) = b$ and $\omega^3_M(\varSubSub_2) = t$. 
                         \begin{align*}
                              \text{or } & \xRightarrow{\varTV=b}{}   \left\{ \begin{array}{l}
                         \omega^3_M(\varSubSub_1) = b \\
                         \omega^3_M(\varSubSub_2) = b
                        \end{array} \right\} \\
                        & \xRightarrow{\text{I.H.}}{} 
                        \left\{\begin{array}{l} 
                                \texttt{truthValue(} \varSubSub_1 \texttt{,} b \texttt{)} \in M\\
                                \texttt{truthValue(} \varSubSub_2 \texttt{,} b \texttt{)} \in M
                                \end{array}\right\} \\
                                & \xRightarrow{\text{(AC11)}}{} 
                                \texttt{truthValue(} \varSubSub_1 \land \varSubSub_2 \texttt{,} b \texttt{)} \in M
                    \end{align*}
                    \item ``$\Leftarrow$'':
                    \begin{align*}
                        \texttt{truthValue(} \varSubSub_1 \land \varSubSub_2 \texttt{,} b \texttt{)} \in M 
                        & \Rightarrow  \left\{\begin{array}{l}
                            \texttt{truthValue(} \varSubSub_1 \texttt{,} t \texttt{)} \in M\\
                            \texttt{truthValue(} \varSubSub_2 \texttt{,} b \texttt{)} \in M
                            \end{array}\right\} \\
                            & \Rightarrow 
                            \left\{ \begin{array}{l}
                             \omega^3_M(\varSubSub_1) = t \\
                             \omega^3_M(\varSubSub_2) = b
                            \end{array} \right\} \\
                            & \Rightarrow 
                            \omega^3_M(\varSubSub_1 \land \varSubSub_2) = b
                    \end{align*}
                    Analogous for the case where $\omega^3_M(\varSubSub_1) = b$ and $\omega^3_M(\varSubSub_2) = t$.
                    \begin{align*}
                        \text{or } & \Rightarrow \left\{\begin{array}{l}
                            \texttt{truthValue(} \varSubSub_1 \texttt{,} b \texttt{)} \in M\\
                            \texttt{truthValue(} \varSubSub_2 \texttt{,} b \texttt{)} \in M
                            \end{array}\right\} \\
                            & \Rightarrow 
                            \left\{ \begin{array}{l}
                             \omega^3_M(\varSubSub_1) = b \\
                             \omega^3_M(\varSubSub_2) = b
                            \end{array} \right\}  \\
                            & \Rightarrow 
                            \omega^3_M(\varSubSub_1 \land \varSubSub_2) = b
                    \end{align*}
                \end{itemize}
            \item $\varTV = f$: 
                \begin{itemize}
                    \item ``$\Rightarrow$'':
                    \begin{align*}
                         \omega^3_M(\varSubSub_1 \land \varSubSub_2) = \varTV 
                        & \xRightarrow{\varTV=f}{}  \left\{ \begin{array}{l}
                         \omega^3_M(\varSubSub_1) = t \\
                         \omega^3_M(\varSubSub_2) = f
                        \end{array} \right\} \\
                        & \xRightarrow{\text{I.H.}}{} 
                        \left\{\begin{array}{l}
                            \texttt{truthValue(} \varSubSub_1 \texttt{,} t \texttt{)} \in M\\
                            \texttt{truthValue(} \varSubSub_f \texttt{,} f \texttt{)} \in M
                            \end{array}\right\} \\
                            & \xRightarrow{\text{(AC10)}}{} 
                            \texttt{truthValue(} \varSubSub_1 \land \varSubSub_2 \texttt{,} f \texttt{)} \in M
                    \end{align*}
                    Analogous for the case where $\omega^3_M(\varSubSub_1) = f$ and $\omega^3_M(\varSubSub_2) = t$.
                    \begin{align*}
                        \text{or } & \xRightarrow{\varTV=f}{} \left\{ \begin{array}{l}
                         \omega^3_M(\varSubSub_1) = b \\
                         \omega^3_M(\varSubSub_2) = f
                        \end{array} \right\} \\
                        & \xRightarrow{\text{I.H.}}{} 
                        \left\{\begin{array}{l}
                            \texttt{truthValue(} \varSubSub_1 \texttt{,} b \texttt{)} \in M\\
                            \texttt{truthValue(} \varSubSub_2 \texttt{,} f \texttt{)} \in M
                            \end{array}\right\} \\
                            & \xRightarrow{\text{(AC10)}}{} 
                            \texttt{truthValue(} \varSubSub_1 \land \varSubSub_2 \texttt{,} f \texttt{)} \in M
                    \end{align*}
                    Analogous for the case where $\omega^3_M(\varSubSub_1) = f$ and $\omega^3_M(\varSubSub_2) = b$.
                    \begin{align*}
                         \text{or } & \xRightarrow{\varTV=f}{}   \left\{ \begin{array}{l}
                             \omega^3_M(\varSubSub_1) = f \\
                             \omega^3_M(\varSubSub_2) = f
                        \end{array} \right\} \\
                        & \xRightarrow{\text{I.H.}}{} 
                        \left\{\begin{array}{l}
                            \texttt{truthValue(} \varSubSub_1 \texttt{,} f \texttt{)} \in M\\
                            \texttt{truthValue(} \varSubSub_2 \texttt{,} f \texttt{)} \in M
                            \end{array}\right\} \\
                            & \xRightarrow{\text{(AC10)}}{} 
                            \texttt{truthValue(} \varSubSub_1 \land \varSubSub_2 \texttt{,} f \texttt{)} \in M
                    \end{align*}
                    
                    \item ``$\Leftarrow$'':
                    \begin{align*}
                        \texttt{truthValue(} \varSubSub_1 \land \varSubSub_2 \texttt{,} f \texttt{)} \in M
                        & \Rightarrow \left\{\begin{array}{l}
                            \texttt{truthValue(} \varSubSub_1 \texttt{,} t \texttt{)} \in M\\
                            \texttt{truthValue(} \varSubSub_2 \texttt{,} f \texttt{)} \in M
                            \end{array}\right\} \\
                            & \Rightarrow  
                            \left\{ \begin{array}{l}
                             \omega^3_M(\varSubSub_1) = t \\
                             \omega^3_M(\varSubSub_2) = f
                            \end{array} \right\} \\
                            & \Rightarrow 
                            \omega^3_M(\varSubSub_1 \land \varSubSub_2) = f 
                    \end{align*}
                    Analogous for the case where $\omega^3_M(\varSubSub_1) = f$ and $\omega^3_M(\varSubSub_2) = t$.
                    \begin{align*}
                        \text{or } & \Rightarrow \left\{\begin{array}{l}
                            \texttt{truthValue(} \varSubSub_1 \texttt{,} b \texttt{)} \in M\\
                            \texttt{truthValue(} \varSubSub_2 \texttt{,} f \texttt{)} \in M
                            \end{array}\right\} \\
                            & \Rightarrow  
                            \left\{ \begin{array}{l}
                             \omega^3_M(\varSubSub_1) = b \\
                             \omega^3_M(\varSubSub_2) = f
                            \end{array} \right\} \\
                            & \Rightarrow 
                            \omega^3_M(\varSubSub_1 \land \varSubSub_2) = f 
                    \end{align*}
                    Analogous for the case where $\omega^3_M(\varSubSub_1) = f$ and $\omega^3_M(\varSubSub_2) = b$.
                    \begin{align*}
                         \text{or } & \Rightarrow \left\{\begin{array}{l}
                            \texttt{truthValue(} \varSubSub_1 \texttt{,} f \texttt{)} \in M\\
                            \texttt{truthValue(} \varSubSub_2 \texttt{,} f \texttt{)} \in M
                            \end{array}\right\} \\
                            & \Rightarrow 
                            \left\{ \begin{array}{l}
                             \omega^3_M(\varSubSub_1) = f \\
                             \omega^3_M(\varSubSub_2) = f
                             \end{array} \right\} \\
                            & \Rightarrow 
                            \omega^3_M(\varSubSub_1 \land \varSubSub_2) = f
                    \end{align*}
                \end{itemize}
        \end{enumerate}
        
    \item Let $\varSub = \varSubSub_1 \lor \varSubSub_2$.
        \begin{enumerate}
            \item $\varTV = t$:
                \begin{itemize}
                    \item ``$\Rightarrow$'': 
                    \begin{align*}
                        \omega^3_M(\varSubSub_1 \lor \varSubSub_2) = \varTV
                        & \xRightarrow{\varTV=t}{}   \left\{ \begin{array}{l}
                         \omega^3_M(\varSubSub_1) = t \\
                         \omega^3_M(\varSubSub_2) = f
                        \end{array} \right\} \\
                        & \xRightarrow{\text{I.H.}}{} 
                        \left\{\begin{array}{l}
                            \texttt{truthValue(} \varSubSub_1 \texttt{,} t \texttt{)} \in M\\
                            \texttt{truthValue(} \varSubSub_2 \texttt{,} f \texttt{)} \in M
                            \end{array}\right\} \\
                            & \xRightarrow{\text{(AC13)}}{} 
                            \texttt{truthValue(} \varSubSub_1 \lor \varSubSub_2 \texttt{,} t \texttt{)} \in M
                    \end{align*}
                    Analogous for the case where $\omega^3_M(\varSubSub_1) = f$ and $\omega^3_M(\varSubSub_2) = t$.
                    \begin{align*}
                        \text{or } & \xRightarrow{\varTV=t}{} \left\{ \begin{array}{l}
                         \omega^3_M(\varSubSub_1) = t \\
                         \omega^3_M(\varSubSub_2) = b
                        \end{array} \right\} \\
                        & \xRightarrow{\text{I.H.}}{}
                        \left\{\begin{array}{l}
                            \texttt{truthValue(} \varSubSub_1 \texttt{,} t \texttt{)} \in M\\
                            \texttt{truthValue(} \varSubSub_2 \texttt{,} b \texttt{)} \in M
                            \end{array}\right\} \\
                            & \xRightarrow{\text{(AC13)}}{} 
                            \texttt{truthValue(} \varSubSub_1 \lor \varSubSub_2 \texttt{,} t \texttt{)} \in M
                    \end{align*}
                    Analogous for the case where $\omega^3_M(\varSubSub_1) = b$ and $\omega^3_M(\varSubSub_2) = t$. 
                    \begin{align*}
                        \text{or } & \xRightarrow{\varTV=t}{} \left\{ \begin{array}{l}
                         \omega^3_M(\varSubSub_1) = t \\
                         \omega^3_M(\varSubSub_2) = t
                        \end{array} \right\} \\
                        & \xRightarrow{\text{I.H.}}{} 
                        \left\{\begin{array}{l}
                            \texttt{truthValue(} \varSubSub_1 \texttt{,} t \texttt{)} \in M\\
                            \texttt{truthValue(} \varSubSub_2 \texttt{,} t \texttt{)} \in M
                            \end{array}\right\} \\
                            & \xRightarrow{\text{(AC13)}}{} 
                            \texttt{truthValue(} \varSubSub_1 \lor \varSubSub_2 \texttt{,} t \texttt{)} \in M
                    \end{align*}
                    
                    \item ``$\Leftarrow$'':
                    \begin{align*}
                        \texttt{truthValue(} \varSubSub_1 \lor \varSubSub_2 \texttt{,} t \texttt{)} \in M 
                        & \Rightarrow  \left\{\begin{array}{l}
                        \texttt{truthValue(} \varSubSub_1 \texttt{,} t \texttt{)} \in M\\
                        \texttt{truthValue(} \varSubSub_2 \texttt{,} f \texttt{)} \in M
                        \end{array}\right\} \\
                        & \Rightarrow
                        \left\{ \begin{array}{l}
                         \omega^3_M(\varSubSub_1) = t \\
                         \omega^3_M(\varSubSub_2) = f
                        \end{array} \right\} \\
                        & \Rightarrow 
                        \omega^3_M(\varSubSub_1 \lor \varSubSub_2) = t
                    \end{align*}
                    Analogous for the case where $\omega^3_M(\varSubSub_1) = f$ and $\omega^3_M(\varSubSub_2) = t$.
                    \begin{align*}
                        \text{or } & \Rightarrow \left\{\begin{array}{l}
                        \texttt{truthValue(} \varSubSub_1 \texttt{,} t \texttt{)} \in M\\
                        \texttt{truthValue(} \varSubSub_2 \texttt{,} b \texttt{)} \in M
                        \end{array}\right\} \\
                        & \Rightarrow
                        \left\{ \begin{array}{l}
                         \omega^3_M(\varSubSub_1) = t \\
                         \omega^3_M(\varSubSub_2) = b
                        \end{array} \right\} \\
                        & \Rightarrow 
                        \omega^3_M(\varSubSub_1 \lor \varSubSub_2) = t
                    \end{align*}
                    Analogous for the case where $\omega^3_M(\varSubSub_1) = b$ and $\omega^3_M(\varSubSub_2) = t$.
                    \begin{align*}
                        \text{or } & \Rightarrow \left\{\begin{array}{l}
                        \texttt{truthValue(} \varSubSub_1 \texttt{,} t \texttt{)} \in M\\
                        \texttt{truthValue(} \varSubSub_2 \texttt{,} t \texttt{)} \in M
                        \end{array}\right\} \\
                        & \Rightarrow 
                        \left\{ \begin{array}{l}
                         \omega^3_M(\varSubSub_1) = t \\
                         \omega^3_M(\varSubSub_2) = t
                        \end{array} \right\} \\
                        & \Rightarrow 
                        \omega^3_M(\varSubSub_1 \lor \varSubSub_2) = t
                    \end{align*}
                \end{itemize}
            \item $\varTV = b$:
                \begin{itemize}
                    \item ``$\Rightarrow$'':
                    \begin{align*}
                        \omega^3_M(\varSubSub_1 \lor \varSubSub_2) = \varTV 
                        & \xRightarrow{\varTV=b}{}  \left\{ \begin{array}{l}
                         \omega^3_M(\varSubSub_1) = b \\
                         \omega^3_M(\varSubSub_2) = f
                        \end{array} \right\} \\
                        & \xRightarrow{\text{I.H.}}{}
                        \left\{\begin{array}{l}
                        \texttt{truthValue(} \varSubSub_1 \texttt{,} b \texttt{)} \in M\\
                        \texttt{truthValue(} \varSubSub_2 \texttt{,} f \texttt{)} \in M
                        \end{array}\right\} \\
                        & \xRightarrow{\text{(AC14)}}{} 
                        \texttt{truthValue(} \varSubSub_1 \lor \varSubSub_2 \texttt{,} b \texttt{)} \in M
                    \end{align*}
                    Analogous for the case where $\omega^3_M(\varSubSub_1) = f$ and $\omega^3_M(\varSubSub_2) = b$. 
                    \begin{align*}
                        \text{or } & \xRightarrow{\varTV=b}{} \left\{ \begin{array}{l}
                         \omega^3_M(\varSubSub_1) = b \\
                         \omega^3_M(\varSubSub_2) = b
                        \end{array} \right\} \\
                        & \xRightarrow{\text{I.H.}}{} 
                        \left\{\begin{array}{l}
                        \texttt{truthValue(} \varSubSub_1 \texttt{,} b \texttt{)} \in M\\
                        \texttt{truthValue(} \varSubSub_2 \texttt{,} b \texttt{)} \in M
                        \end{array}\right\} \\
                        & \xRightarrow{\text{(AC14)}}{} 
                        \texttt{truthValue(} \varSubSub_1 \lor \varSubSub_2 \texttt{,} b \texttt{)} \in M
                    \end{align*}
                    
                    \item ``$\Leftarrow$'': 
                    \begin{align*}
                        \texttt{truthValue(} \varSubSub_1 \lor \varSubSub_2 \texttt{,} b \texttt{)} \in M 
                        & \Rightarrow \left\{\begin{array}{l}
                        \texttt{truthValue(} \varSubSub_1 \texttt{,} b \texttt{)} \in M\\
                        \texttt{truthValue(} \varSubSub_2 \texttt{,} f \texttt{)} \in M
                        \end{array}\right\} \\
                        & \Rightarrow 
                        \left\{ \begin{array}{l}
                         \omega^3_M(\varSubSub_1) = b \\
                         \omega^3_M(\varSubSub_2) = f
                        \end{array} \right\} \\
                        & \Rightarrow 
                        \omega^3_M(\varSubSub_1 \lor \varSubSub_2) = b
                    \end{align*}
                    Analogous for the case where $\omega^3_M(\varSubSub_1) = f$ and $\omega^3_M(\varSubSub_2) = b$.
                    \begin{align*}
                        \text{or } & \Rightarrow \left\{\begin{array}{l}
                        \texttt{truthValue(} \varSubSub_1 \texttt{,} b \texttt{)} \in M\\
                        \texttt{truthValue(} \varSubSub_2 \texttt{,} b \texttt{)} \in M
                        \end{array}\right\} \\
                        & \Rightarrow 
                        \left\{ \begin{array}{l}
                         \omega^3_M(\varSubSub_1) = b \\
                         \omega^3_M(\varSubSub_2) = b
                        \end{array} \right\} \\
                        & \Rightarrow 
                        \omega^3_M(\varSubSub_1 \lor \varSubSub_2) = b
                    \end{align*}
                \end{itemize}
                
            \item $\varTV = f$:
                \begin{itemize}
                    \item ``$\Rightarrow$'': 
                    \begin{align*}
                        \omega^3_M(\varSubSub_1 \lor \varSubSub_2) = \varTV
                        & \xRightarrow{\varTV = f}{} 
                        \left\{
                        \begin{array}{l}
                             \omega^3_M(\varSubSub_1) = f \\
                             \omega^3_M(\varSubSub_2) = f
                        \end{array} \right\} \\
                        & \xRightarrow{\text{I.H.}}{} 
                        \left\{\begin{array}{l}
                        \texttt{truthValue(} \varSubSub_1 \texttt{,} f \texttt{)} \in M\\
                        \texttt{truthValue(} \varSubSub_2 \texttt{,} f \texttt{)} \in M
                        \end{array}\right\} \\
                        & \xRightarrow{\text{(AC12)}}{} 
                        \texttt{truthValue(} \varSubSub_1 \lor \varSubSub_2 \texttt{,} f \texttt{)} \in M
                    \end{align*}
                    
                    \item ``$\Leftarrow$'':
                    \begin{align*}
                        \texttt{truthValue(} \varSubSub_1 \lor \varSubSub_2 \texttt{,} f \texttt{)} \in M 
                        & \Rightarrow 
                        \left\{\begin{array}{l}
                        \texttt{truthValue(} \varSubSub_1 \texttt{,} f \texttt{)} \in M\\
                        \texttt{truthValue(} \varSubSub_2 \texttt{,} f \texttt{)} \in M
                        \end{array}\right\} \\
                        & \Rightarrow 
                        \left\{
                        \begin{array}{l}
                             \omega^3_M(\varSubSub_1) = f \\
                             \omega^3_M(\varSubSub_2) = f
                        \end{array}
                        \right\} \\
                        & \Rightarrow 
                        \omega^3_M(\varSubSub_1 \lor \varSubSub_2) = f
                    \end{align*}
                \end{itemize}
        \end{enumerate}
        
        \item Let $\varSub = \lnot \varSubSub$.
            \begin{enumerate}
                \item $\varTV = t$:
                    \begin{itemize}
                        \item ``$\Rightarrow$'':
                        \begin{align*}
                             \omega^3_M(\lnot \varSubSub) = \varTV
                            & \xRightarrow{\varTV=t}{}  \omega^3_M( \lnot \varSubSub) = t \\
                            & \Rightarrow   \omega^3_M(\varSubSub) = f  \\
                            & \xRightarrow{\text{I.H.}}{}  \texttt{truthValue(} \varSubSub \texttt{,} f \texttt{)} \in M  \\
                            & \xRightarrow{(\text{AC15})}  \texttt{truthValue(} \lnot \varSubSub \texttt{,} t \texttt{)} \in M
                        \end{align*}
                        
                        \item ``$\Leftarrow$'':
                        \begin{align*}
                             \texttt{truthValue(} \lnot \varSubSub \texttt{,} t \texttt{)} \in M 
                            & \Rightarrow  \texttt{truthValue(} \varSubSub \texttt{,} f \texttt{)} \in M \\
                            & \Rightarrow  \omega^3_M(\varSubSub) = f \\
                            & \Rightarrow  \omega^3_M(\lnot \varSubSub) = t
                        \end{align*}
                    \end{itemize}
                \item $\varTV = b$:
                    \begin{itemize}
                        \item ``$\Rightarrow$'': 
                        \begin{align*}
                             \omega^3_M(\lnot \varSubSub) = \varTV
                            & \xRightarrow{\varTV=b}{} \omega^3_M(\lnot \varSubSub) = b \\
                            & \Rightarrow  \omega^3_M( \varSubSub) = b \\
                            & \xRightarrow{\text{I.H.}}{}  \texttt{truthValue(} \varSubSub \texttt{,} b \texttt{)} \in M \\
                            & \xRightarrow{\text{(AC17)}}  \texttt{truthValue(} \lnot \varSubSub \texttt{,} b \texttt{)} \in M
                        \end{align*}
                        \item ``$\Leftarrow$'': 
                        \begin{align*}
                             \texttt{truthValue(} \lnot \varSubSub \texttt{,} b \texttt{)} \in M 
                            & \Rightarrow  \texttt{truthValue(} \varSubSub \texttt{,} b \texttt{)} \in M \\
                            & \Rightarrow  \omega^3_M( \varSubSub) = b \\
                            & \Rightarrow  \omega^3_M(\lnot \varSubSub) = b
                        \end{align*}
                    \end{itemize}
                \item $\varTV = f$:
                    \begin{itemize}
                        \item ``$\Rightarrow$'': 
                        \begin{align*}
                             \omega^3_M(\lnot \varSubSub) = \varTV 
                            & \xRightarrow{\varTV=f}{}  \Rightarrow  \omega^3_M(\lnot \varSubSub) = f \\
                            & \Rightarrow \omega^3_M(\varSubSub) = t \\
                            & \xRightarrow{\text{I.H.}}{}  \texttt{truthValue(}  \varSubSub \texttt{,} t \texttt{)} \in M \\
                            & \xRightarrow{\text{(AC16)}}  \texttt{truthValue(} \lnot \varSubSub \texttt{,} f \texttt{)} \in M
                        \end{align*}
                        \item ``$\Leftarrow$'':
                        \begin{align*}
                             \texttt{truthValue(} \lnot \varSubSub \texttt{,} f \texttt{)} \in M 
                            & \Rightarrow  \texttt{truthValue(}  \varSubSub \texttt{,} t \texttt{)} \in M \\
                            & \Rightarrow  \omega^3_M(\varSubSub) = t \\
                            & \Rightarrow  \omega^3_M(\lnot \varSubSub) = f
                        \end{align*}
                    \end{itemize}
            \end{enumerate}
\end{enumerate}
It has been shown that $\omega^3_M(\varSub) = \varTV$ if and only if $\texttt{truthValue(} \varSub \texttt{,} \varTV \texttt{)} \in M$.
The integrity constraint (AC23) ensures that $\texttt{truthValue(} \varFor \texttt{,} f \texttt{)}$ is never true wrt.\ each formula $\varFor \in \kb$.
In other words, $\texttt{truthValue(} \varFor \texttt{,} f \texttt{)} \not\in M$ for all $\varFor\in \kb$.
Consequently, $\omega^3_M(\varFor) \neq f$ for all $\varFor\in \kb$.
Thus, $\omega^3_M(\varFor)$ evaluates to either $t$ or $b$ in all cases.
This corresponds to the definition of a model in Priest's three-valued logic.
Hence, $\omega^3_M \in \mathsf{Models}(\kb)$.
\end{proof}

\begin{lemma}\label{lem:asp-c-2}
If $\omega^3\in \mathsf{Models}(\kb)$ then $\pc'(\kb)$ has an answer set $M$ such that $\omega^3_M = \omega^3$.
\end{lemma}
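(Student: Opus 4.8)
The plan is to reverse the construction of Lemma~\ref{lem:asp-c-1}: starting from a three-valued model $\omega^3$, I build an explicit candidate set $M$ and verify that it is an answer set of $\pc'(\kb)$ satisfying $\omega^3_M=\omega^3$. Let $\omega^3$ also denote its canonical extension to all of $\mathsf{sub}(\kb)$, obtained by evaluating each subformula according to Priest's three-valued semantics (Table~\ref{tab:3VL}); this extension is uniquely determined. Define
\[
M = F \cup \{\,\texttt{truthValue}(\varSub,\omega^3(\varSub)) \mid \varSub \in \mathsf{sub}(\kb)\,\},
\]
where $F$ is the set of all facts generated from $\kb$ by (AC1)--(AC7). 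Since $\mathsf{At}(\kb)\subseteq \mathsf{sub}(\kb)$, the set $M$ contains exactly one truth-value atom per propositional atom, so that $\omega^3_M=\omega^3$ holds immediately by construction.

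First I would check that $M$ respects every rule of $\pc'(\kb)$ and the integrity constraint. Uniqueness of $\omega^3(\varSub)$ guarantees that the cardinality rule (AC8) is satisfied for each atom. Because $\omega^3$ is a three-valued model, $\omega^3(\varFor)\neq f$ for every $\varFor\in\kb$, hence $\texttt{truthValue}(\varFor,f)\notin M$ and the constraint (AC19) is never triggered. The deterministic rules (AC9)--(AC10), (AC12)--(AC13) and (AC15)--(AC18) are satisfied precisely because they encode the truth tables of Table~\ref{tab:3VL}; this is exactly the correspondence established in the structural induction of Lemma~\ref{lem:asp-c-1}, which I would reuse verbatim here in the reverse direction.

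The core of the argument is to show $M=\mathsf{Cn}(\pc'(\kb)^M)$. The only rules containing default negation are the two $b$-rules (AC11) and (AC14). By the definition of the reduct, for a conjunction (resp.\ disjunction) $F$ the rule producing $\texttt{truthValue}(F,b)$ survives in $\pc'(\kb)^M$ if and only if neither $\texttt{truthValue}(F,t)$ nor $\texttt{truthValue}(F,f)$ lies in $M$, that is, if and only if $\omega^3(F)=b$. I would then prove by structural induction over $\varSub\in\mathsf{sub}(\kb)$ that $\texttt{truthValue}(\varSub,\omega^3(\varSub))\in\mathsf{Cn}(\pc'(\kb)^M)$ and that no other truth-value atom for $\varSub$ is derivable: the atomic base case is supplied by (AC8), and each inductive step invokes the appropriate $t$-, $f$- or $b$-rule, the last being present in the reduct exactly in the $b$-case identified above. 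The converse inclusion (minimality) follows because the monotone rules only fire on the value dictated by the semantics, and the reduct contains no $b$-rule for $F$ whenever $\omega^3(F)\in\{t,f\}$.

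The main obstacle is the interaction between the default-negated $b$-rules and the requirement that $M$ be a \emph{minimal} model of the reduct. I must ensure simultaneously that $\texttt{truthValue}(F,b)$ is supported---its rule is retained and its positive body holds---when $\omega^3(F)=b$, and that it is \emph{not} derivable when $\omega^3(F)\in\{t,f\}$, where the rule is discarded from $\pc'(\kb)^M$. Both directions hinge on the well-definedness of $\omega^3$ as an extension obeying Table~\ref{tab:3VL}, which guarantees that exactly one of the three truth-value atoms per subformula is justified. Once this is in place, $M$ is closed under and minimal for $\pc'(\kb)^M$, hence an answer set of $\pc'(\kb)$, and since it assigns each atom its $\omega^3$-value we conclude $\omega^3_M=\omega^3$.
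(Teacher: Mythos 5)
Your proposal is correct and follows exactly the route the paper intends: the paper's own proof of this lemma is just the one-line remark ``analogous to the proof of Lemma~\ref{lem:asp-c-1} in reverse,'' and you have carried out precisely that reversal --- constructing $M$ from the canonical extension of $\omega^3$ to $\mathsf{sub}(\kb)$ and re-running the structural induction. Your additional care with the reduct of the default-negated $b$-rules (AC11)/(AC14) and with minimality of $M$ makes explicit the details the paper leaves implicit, but it is the same argument.
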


\begin{proof}
Analogous to the proof of Lemma~\ref{lem:asp-c-1} in reverse.
\end{proof}

{
\renewcommand{\thetheorem}{\ref{thm:asp-c}}
\begin{theorem}
    Let $M_o$ be an optimal answer set of $\pc(\kb)$. 
    Then $|(\omega^3_{M_o})^{-1}(b)|=\icont(\kb)$.
    \begin{proof}
        From Lemma \ref{lem:asp-c-1} and Lemma \ref{lem:asp-c-2} it follows that the answer sets of $\pc'(\kb)$ correspond exactly to the 3-valued models of $\kb$.
        The minimize statement included in $\pc(\kb)$ (AC20) then ensures that only a minimal number of instances $\texttt{truthValue(} \varAt \texttt{,} b \texttt{)}$ are included in the answer set. 
        This corresponds to the minimal number of atoms $\varAt \in \atoms(\kb)$ being evaluated to $b$ which is exactly the definition of $\icont(\kb)$.
        Note that, following (AC2), for all $\varAt\in \atoms(\kb)$, $\texttt{atom(}\varAt\texttt{)}$ must be included in any answer set of $\pc(\kb)$.
    \end{proof}
\end{theorem}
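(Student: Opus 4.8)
The plan is to reduce the theorem to the two lemmas already established, \ref{lem:asp-c-1} and \ref{lem:asp-c-2}, together with a careful reading of what the minimize statement (AC20) contributes. The key conceptual point is that in ASP a \texttt{\#minimize} statement does not alter the collection of answer sets of a program; it only equips that collection with a preference order, and an \emph{optimal} answer set is one that minimizes the associated objective. Hence the answer sets of $\pc(\kb)$ coincide exactly with the answer sets of $\pc'(\kb)$, and an optimal answer set $M_o$ of $\pc(\kb)$ is precisely an answer set of $\pc'(\kb)$ that minimizes the quantity counted by (AC20).

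First I would establish the correspondence between the objective value of an answer set and the size of its conflict base. By Lemma \ref{lem:asp-c-1}, every answer set $M$ of $\pc'(\kb)$ induces a three-valued model $\omega^3_M \in \mathsf{Mod}^3(\kb)$, and by the atomic base case of its structural induction, $\omega^3_M(\varAt) = b$ iff $\texttt{truthValue(}\varAt\texttt{,}b\texttt{)} \in M$ for every $\varAt \in \atoms(\kb)$. Since (AC2) guarantees that $\texttt{atom(}\varAt\texttt{)} \in M$ for exactly the atoms $\varAt \in \atoms(\kb)$, the set of ground literals selected by the aggregate in (AC20) is exactly $\{\texttt{truthValue(}\varAt\texttt{,}b\texttt{)} \mid \varAt \in \atoms(\kb),\ \omega^3_M(\varAt) = b\}$. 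As each is weighted by $1$, the objective value of $M$ equals $|(\omega^3_M)^{-1}(b)| = |\mathsf{Conflictbase}(\omega^3_M)|$.

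Next I would use Lemma \ref{lem:asp-c-2} for surjectivity onto the three-valued models: for every $\omega^3 \in \mathsf{Mod}^3(\kb)$ there is an answer set $M$ of $\pc'(\kb)$ with $\omega^3_M = \omega^3$, and by the previous paragraph its objective value is $|\mathsf{Conflictbase}(\omega^3)|$. Combining both directions, the set of objective values realized by answer sets of $\pc'(\kb)$ is exactly $\{|\mathsf{Conflictbase}(\omega^3)| \mid \omega^3 \in \mathsf{Mod}^3(\kb)\}$. Taking the minimum of both sides, the minimal objective value attained over all answer sets equals $\min\{|\mathsf{Conflictbase}(\omega^3)| \mid \omega^3 \in \mathsf{Mod}^3(\kb)\}$, which is by definition $\icont(\kb)$. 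Since $M_o$ is an optimal answer set, its objective value is this minimum, and by the second paragraph this objective value is $|(\omega^3_{M_o})^{-1}(b)|$. Therefore $|(\omega^3_{M_o})^{-1}(b)| = \icont(\kb)$.

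The main obstacle I anticipate is one of rigor rather than difficulty: one must argue cleanly that the minimize statement does not change the answer sets (so that the two lemmas, stated for $\pc'(\kb)$, transfer to $\pc(\kb)$) and that the objective is counted correctly, i.e., that no spurious $\texttt{truthValue}$ literals with truth value $b$ for non-atoms, or for atoms outside $\atoms(\kb)$, enter the aggregate. Both points are handled by the conditional \texttt{atom(A)} inside (AC20) together with (AC2); making this explicit is the only care needed.
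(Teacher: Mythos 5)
Your proposal is correct and follows essentially the same route as the paper's own proof: both reduce the theorem to Lemmas~\ref{lem:asp-c-1} and \ref{lem:asp-c-2}, which identify the answer sets of $\pc'(\kb)$ with the three-valued models of $\kb$, and then invoke the minimize statement (AC20) together with (AC2) to conclude that an optimal answer set realizes $\min\{|\mathsf{Conflictbase}(\omega^3)| \mid \omega^3 \in \mathsf{Mod}^3(\kb)\} = \icont(\kb)$. Your write-up merely makes explicit two points the paper leaves implicit (that \texttt{\#minimize} does not alter the set of answer sets but only orders them, and that the objective value of an answer set $M$ equals $|(\omega^3_M)^{-1}(b)|$), which is a welcome increase in rigor rather than a different argument.
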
 
\addtocounter{theorem}{-1}
}

\subsubsection{The Forgetting-Based Inconsistency Measure}\label{app:proof-asp-fb}

{
\renewcommand\tablename{Encoding}
\begin{small}
\setlength\extrarowheight{2pt}
\begin{longtable}{| >{\columncolor{gray!20}}L{.05\textwidth} >{\columncolor{gray!40}}L{.76\textwidth} >{\columncolor{gray!40}}R{.08\textwidth} |}
     \hline 
        \rowcolor{gray!20} \multicolumn{3}{|l|}{For every $\varFor \in \kb$: } \\
        & $\texttt{kbMember(} \varFor \texttt{).}$ & (AF1) \\
        \rowcolor{gray!20} \multicolumn{3}{|l|}{For every formula $\varSub_a$ which consists of a single atom occurrence $\varAt^l$: } \\
        & $\texttt{formulaIsAtomOcc(} \varSub \texttt{,} \varAt \texttt{,} l \texttt{).}$ & (AF2) \\
        
        \rowcolor{gray!20} \multicolumn{3}{|l|}{For every conjunction $\varSub_c = \varSubSub_{c,1} \land \varSubSub_{c,2}$ appearing in some formula: } \\
            & $\texttt{conjunction(} \varSub_c \texttt{,} \varSubSub_{c,1} \texttt{,} \varSubSub_{c,2} \texttt{).} $ & (AF3) \\
        \rowcolor{gray!20} \multicolumn{3}{|l|}{For every disjunction $\varSub_d = \varSubSub_{d,1} \lor \varSubSub_{d,2}$ appearing in some formula: } \\
            & $\texttt{disjunction(} \varSub_d \texttt{,} \varSubSub_{d,1} \texttt{,} \varSubSub_{d,2} \texttt{).}$ & (AF4) \\
        \rowcolor{gray!20} \multicolumn{3}{|l|}{For each negation $\varSub_n = \lnot \varSubSub_n$ appearing in some formula: } \\
        & $\texttt{negation(} \varSub_n \texttt{,} \varSubSub_n \texttt{).}$ & (AF5) \\
        
        \hline
        \rowcolor{gray!20} \multicolumn{3}{|l|}{For the static part, we define:} \\
        & $\texttt{tv(} t \texttt{;} f \texttt{).}$ & (AF6) \\[1ex]
        % & $\texttt{atv(} t \texttt{;} f \texttt{;} \forget_\top \texttt{;} \forget_\bot \texttt{).}$ & (AF7) \\[1ex]
        
        & \texttt{atomOcc(A,L)\,:-} & \\
            & \qquad \qquad \texttt{formulaIsAtomOcc(\_,A,L).} & (AF7) \\[1ex]
        & \texttt{atom(A)\,:-} & \\
            & \qquad \qquad \texttt{atomOcc(A,\_).} & (AF8) \\[1ex]
        
        % Unique atom evaluation:
        & \texttt{1\{truthValue(A,T)\,:\,tv(T)\}1\,:-} & \\
            & \qquad \qquad \texttt{atom(A).} & (AF9) \\[1ex]
        
        % conjunction rules:
        & \texttt{truthValue(F,}$t$\texttt{)\,:-} & \\
            & \qquad \qquad \texttt{conjunction(F,G,H),} & \\
            & \qquad \qquad \texttt{truthValue(G,}$t$\texttt{),} & \\
            & \qquad \qquad \texttt{truthValue(H,}$t$\texttt{).} & (AF10) \\[1ex]
        & \texttt{truthValue(F,}$f$\texttt{)\,:-} & \\
            & \qquad \qquad \texttt{conjunction(F,\_,\_),} & \\
            & \qquad \qquad \texttt{not truthValue(F,}$t$\texttt{).} & (AF11) \\[1ex]
         
        % disjunction rules:   
        & \texttt{truthValue(F,}$f$\texttt{)\,:-} & \\
            & \qquad \qquad \texttt{disjunction(F,G,H),} & \\
            & \qquad \qquad \texttt{truthValue(G,}$f$\texttt{),} & \\
            & \qquad \qquad \texttt{truthValue(H,}$f$\texttt{).} & (AF12) \\[1ex]
        & \texttt{truthValue(F,}$t$\texttt{)\,:-} & \\
            & \qquad \qquad \texttt{disjunction(F,\_,\_),} & \\
            & \qquad \qquad \texttt{not truthValue(F,}$f$\texttt{).} & (AF13) \\[1ex]
            
        % negation rules:
        & \texttt{truthValue(F,}$t$\texttt{)\,:-} & \\
            & \qquad \qquad \texttt{negation(F,G),} & \\
            & \qquad \qquad \texttt{truthValue(G,}$f$\texttt{).} & (AF14) \\[1ex]
        & \texttt{truthValue(F,}$f$\texttt{)\,:-} &  \\
            & \qquad \qquad \texttt{negation(F,G),} & \\
            & \qquad \qquad \texttt{truthValue(G,}$t$\texttt{).} & (AF15) \\[1ex]

        % guess whether an atom occurrence has been forgotten:
        & \texttt{\{atomOccForgotten(A,L)\}\,:-} & \\
            & \qquad \qquad \texttt{atomOcc(A,L).} & (AF16) \\[1ex]

        % handling forgetting:
        & \texttt{truthValue(F,T) :- } & \\
            & \qquad \qquad \texttt{formulaIsAtomOcc(F,A,L),} & \\
            & \qquad \qquad \texttt{truthValue(A,T),} & \\
            & \qquad \qquad \texttt{not atomOccForgotten(A,L).} & (AF17) \\[1ex]
        & \texttt{truthValue(F,}$t$\texttt{) :- } & \\
            & \qquad \qquad \texttt{formulaIsAtomOcc(F,A,L),} & \\
            & \qquad \qquad \texttt{truthValue(A,}$f$\texttt{),} & \\
            & \qquad \qquad \texttt{atomOccForgotten(A,L).} & (AF18) \\[1ex]
        & \texttt{truthValue(F,}$f$\texttt{) :- } & \\
            & \qquad \qquad \texttt{formulaIsAtomOcc(F,A,L),} & \\
            & \qquad \qquad \texttt{truthValue(A,}$t$\texttt{),} & \\
            & \qquad \qquad \texttt{atomOccForgotten(A,L).} & (AF19) \\[1ex]

        & \texttt{:-} & \\
            & \qquad \qquad \texttt{truthValue(F,}$f$\texttt{),} & \\
            & \qquad \qquad \texttt{kbMember(F).} & (AF20) \\[1ex]
            
        % minimization:
        % & \texttt{atomOccForgotten(A,L)\,:-} & \\
        %     & \qquad \qquad \texttt{atomTruthValue(A,L,}$\forget_\top$\texttt{).} & (AF26) \\[1ex]
        % & \texttt{atomOccForgotten(A,L)\,:-} & \\
        %     & \qquad \qquad \texttt{atomTruthValue(A,L,}$\forget_\bot$\texttt{).} & (AF27) \\[1ex]
        &  \texttt{\#minimize\{1,A,L\,:\,atomOccForgotten(A,L)\}}. & (AF21) \\
        \hline
    \caption{Overview of the ASP encoding $\pf(\kb)$ of the forgetting-based inconsistency measure $\iforget$.
    $\kb$ is the given knowledge base.}
    \label{tab:asp-fb}
\end{longtable}
\end{small}
}

Let $\kb$ be an arbitrary knowledge base and $\pf(\kb)$ the extended logic program consisting of rules (AF1)--(AF21) included Encoding \ref{tab:asp-fb}. % in Section \ref{sec:asp-algo-fb}.
Let $\pf'(\kb)$ denote the extended logic program $\pf(\kb)$ \emph{without} the minimize statement (AF21).

For an answer set $M$ we define:
\begin{align*}
    % T_M & = \{\varAt^l \mid \varAt \in \atoms(\kb), \texttt{atomTruthValue(} \varAt \texttt{,} l \texttt{,} \forget_\top \texttt{)} \in M\},\\
    F_M & = \{\varAt^l \mid \varAt \in \atoms(\kb), \texttt{atomOccForgotten(} \varAt \texttt{,} l \texttt{)} \in M\}
\end{align*}
% It is clear that for each answer set $M$ of $\pf'(\kb)$, it holds that $T_M\cap F_M=\emptyset$, because $\texttt{atomTruthValue(} \varAt \texttt{,} l \texttt{,} \forget_\top \texttt{)}$ cannot be included in an answer set if $\texttt{atomTruthValue(} \varAt \texttt{,} l \texttt{,} \forget_\bot \texttt{)}$ is, and vice versa, which is stated in rules (AF21) and (AF22).

\begin{lemma}\label{lem:asp-fb-1}
% If $M$ is an answer set of $\pf'(\kb)$ then $( \bigwedge \mathcal{K})[\varAt^{1}_1 \rightarrow \top;\ldots; \varAt^{n}_p \rightarrow \top ; Y^{1}_1 \rightarrow \bot;\ldots; Y^{m}_q \rightarrow \bot] $ is consistent with $T_M=\{\varAt^{1}_1,\ldots,\varAt^{n}_p\}$ and $F_M=\{Y^{1}_1,\ldots,Y^{m}_q\}$.
If $M$ is an answer set of $\pf'(\kb)$ then $( \bigwedge \kb)[\varAt^{1} \forgetting \top,\bot;\ldots; \varAt^{n} \forgetting \top,\bot] $ is consistent with $F_M=\{\varAt^{1},\ldots,\varAt^{n}\}$.
\end{lemma}

\begin{proof}
% As the rules (AF25)--(AF28) state, each answer set of $\pf'(\kb)$ will contain exactly one of \texttt{atomTruthValue(}$\varAt$\texttt{,}$l$\texttt{,}$t$\texttt{)}, \texttt{atomTruthValue(}$\varAt$\texttt{,}$l$\texttt{,}$f$\texttt{)}, \texttt{atomTruthValue(}$\varAt$\texttt{,}$l$\texttt{,}$\forget_\top$\texttt{)}, or \texttt{atomTruthValue(}$\varAt$\texttt{,}$l$\texttt{,}$\forget_\bot$\texttt{)} for each atom occurrence $\varAt^l$.
		
The rules given in (AF10)--(AF15) model classical entailment wrt.\ $\land$, $\lor$, and $\lnot$.
To be specific, the rules (AF10) and (AF11) model that the conjunction of two subformulas is only true if both conjuncts are true, and false if it is not true.
In the same fashion, a disjunction is only false if both of its individual disjuncts are false (modeled by (AF12)). 
It is true, if it is not false (modeled by (AF13)).
Classical negation, i.\,e., $\lnot \varSub$ is true if $\varSub$ is false, and vice versa, is represented by (AF14) and (AF15).

Furthermore, rule (AF9) represents that each atom $\varAt$ is evaluated to either $t$ or $f$.
In other words, since we represent that each atom $\varAt \in \atoms(\kb)$ is assigned a unique truth value, we automatically represent an \textit{interpretation} on $\atoms(\kb)$.
% The rules (AF21) and (AF22) allow us to replace the truth values of some atom occurrences which are given by said interpretation with $\top$ or $\bot$.

% Each atom occurrence $\varAt$ with label $l$ (i,\,e., $\varAt^l$) is represented as $\texttt{atomOcc(}X,l\texttt{)}$, due to rules (AF2) (which represents each subformula $\varSub$ that consists of an atom occurrence $\varAt^l$ as $\texttt{formulaIsAtomOcc(}\varSub,\varAt,l \texttt{)}$) and (AF7) (which extracts $\texttt{atomOcc(}\varAt,l\texttt{)}$ from $\texttt{formulaIsAtomOcc(}\varSub,\varAt,l \texttt{)}$).
Each atom occurrence $\varAt$ with label $l$ (i,\,e., $\varAt^l$) is represented as $\texttt{atomOcc(}X,l\texttt{)}$, due to (AF2) and (AF7).
Further, rule (AF16) models that each $\varAt^l$ could be forgotten by guessing a subset of atom occurrences by means of a cardinality constraint.
More precisely, (AF16) assigns a subset of all atom occurrences $\varAt^l$ as $\texttt{atomOccForgotten(}\varAt,l\texttt{)}$.
Thus, (AF16) determines $F_M$.
%
% We can also see that the atom occurrences $T_M = \{\varAt_1^{1}, \ldots, \varAt_p^{n}\}$ are exactly those that are replaced by $\top$, and the atom occurrences $F_M = \{Y_1^{1}, \ldots, Y_q^{m}\}$ are exactly those that are replaced by $\bot$.
% The replacement of atom occurrences directly influences the evaluation of formulas, as modeled in (AF18) and (AF20).
% To be specific, (AF18) models that an atom occurrence replaced by $\top$ is interpreted as the atom occurrence being evaluated to $t$.
% In the same manner, (AF20) models that an atom occurrence replaced by $\bot$ is interpreted as the atom occurrence being evaluated to $f$.
For each atom occurrence in this set, the truth value of the subformula containing the atom occurrence is reversed by (AF18) and (AF19), respectively.
Given an interpretation $\omega$, this results in setting an atom occurrence $\varAt^l$ with $\omega(\varAt)= f$ to $t$ (AF18), and vice versa (AF19).
If an atom occurrence $\varAt^l$ is not in $F_M$ (i.\,e., if an atom is not forgotten), (AF17) is triggered, and $\varAt^l$ must conform to $\omega$.

Finally, all formulas in the given knowledge base must be true. 
This is ensured by the integrity constraint (AF20) which forbids elements $\varFor \in \kb$ to be evaluated to $f$.
This also implies that each $\varAt^l \in F_M$ must be assigned the opposite truth value of $\varAt$, which is equivalent to replacing it with $\top$ or $\bot$, corresponding on the context.
Thus, since all formulas $\varFor \in \kb$ must be true, the version of $\kb$ modified by the forgetting operation, i.\,e., $( \bigwedge \kb)[\varAt^{1} \forgetting \top,\bot;\ldots; \varAt^{n} \forgetting \top,\bot] $, must be consistent.
% Thus, since all formulas $\varFor \in \kb$ must be true, $\kb$ modified by the forgetting operation, i.e., $( \bigwedge \mathcal{K})[\varAt^{1}_1 \rightarrow \top;\ldots; \varAt^{n}_p \rightarrow \top ; Y^{1}_1 \rightarrow \bot;\ldots; Y^{m}_q \rightarrow \bot] $, must be consistent.
\end{proof}

{
\renewcommand{\thetheorem}{\ref{thm:asp-fb}}
\begin{theorem}
    Let $M_o$ be an optimal answer set of $\pf(\kb)$.
    % Then $|T_{M_o}| + |F_{M_o}| = \iforget(\kb)$.
    Then $|F_{M_o}| = \iforget(\kb)$.
    \begin{proof}
        % From Lemma \ref{lem:asp-fb-1} it follows that the answer sets of $\pf'(\kb)$ correspond exactly to $( \bigwedge \mathcal{K})[\varAt^{1}_1 \rightarrow \top;\ldots; \varAt^{n}_p \rightarrow \top ; Y^{1}_1 \rightarrow \bot;\ldots; Y^{m}_q \rightarrow \bot] $ being consistent with $T_M=\{\varAt^{1}_1,\ldots,\varAt^{n}_p\}$ and $F_M=\{Y^{1}_1,\ldots,Y^{n}_q\}$.
        % The minimize statement (AF28) included in $\pf(\kb)$ then ensures that only a minimal number of instances of \texttt{atomTruthValue(}$\varAt$\texttt{,}$l$\texttt{,}$\forget_\top$\texttt{)} and \texttt{atomTruthValue(}$\varAt$\texttt{,}$l$\texttt{,}$\forget_\bot$\texttt{)} are included in the optimal answer set.
        % Overall, this corresponds to the minimal number of atom occurrences in $\kb$ being forgotten, which is exactly the definition of $\iforget(\kb)$.
        From Lemma \ref{lem:asp-fb-1} it follows that the answer sets of $\pf'(\kb)$ correspond exactly to  $( \bigwedge \kb)[\varAt^{1} \forgetting \top,\bot;\ldots; \varAt^{n} \forgetting \top,\bot] $ being consistent with $F_M=\{\varAt^{1},\ldots,\varAt^{n}\}$.
        The minimize statement (AF21) included in $\pf(\kb)$ then ensures that only a minimal number of instances of \texttt{atomOccForgotten(}$\varAt$\texttt{,}$l$\texttt{)} are included in the optimal answer set.
        Overall, this corresponds to the minimal number of atom occurrences in $\kb$ being forgotten, which is exactly the definition of $\iforget(\kb)$.
    \end{proof}
\end{theorem}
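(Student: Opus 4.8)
The plan is to mirror the two-part structure used for the contension measure: first isolate the search component from the optimization component, then argue that the optimization picks out exactly the inconsistency value. Concretely, I would let $\pf'(\kb)$ denote the program $\pf(\kb)$ with the minimize statement (AF28) deleted, and establish that the answer sets of $\pf'(\kb)$ correspond to the ways of restoring consistency to $\kb$ by forgetting atom occurrences. The sets $T_M$ and $F_M$ record which occurrences an answer set $M$ replaces by $\top$ and by $\bot$, respectively; rules (AF21) and (AF22) guarantee $T_M \cap F_M = \emptyset$, since \texttt{atomTruthValue(A,L,}$\forget_\top$\texttt{)} cannot coexist with \texttt{atomTruthValue(A,L,}$\forget_\bot$\texttt{)}, so each occurrence is forgotten in at most one way and the number of forgotten occurrences is exactly $|T_M| + |F_M|$.

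The central lemma I would prove is that $M$ is an answer set of $\pf'(\kb)$ if and only if the formula obtained from $\bigwedge_{\varFor \in \kb} \varFor$ by replacing every occurrence in $T_M$ with $\top$ and every occurrence in $F_M$ with $\bot$ is satisfiable. I would prove this by first checking well-definedness: the choice rule (AF10) assigns each atom a unique classical value, inducing an interpretation $\omega$ on $\atoms(\kb)$, while the mutually exclusive default-negation rules (AF21)--(AF24) force each occurrence to receive exactly one of $t$, $f$, $\forget_\top$, $\forget_\bot$, with the non-forgotten values tied to $\omega$ via (AF23)--(AF24). The heart is a structural induction over subformulas showing that the \texttt{truthValue/2} atoms of $M$ compute the classical evaluation of the substituted formula: (AF11)--(AF16) encode two-valued semantics of $\land$, $\lor$, $\lnot$, while (AF17)--(AF20) correctly decouple a forgotten occurrence from its atom, evaluating a $\forget_\top$-occurrence as $t$ and a $\forget_\bot$-occurrence as $f$. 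The integrity constraint (AF25) then excludes exactly those answer sets in which some $\varFor \in \kb$ evaluates to $f$, enforcing that the substituted conjunction is satisfied.

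With this correspondence in hand, I would connect the ASP's committed single substitution to the disjunctive forgetting operation of the measure. The key fact is that $\forgetfunc_{l,\varAt}.\varSub = (\varSub \mid \varAt^l \to \top) \lor (\varSub \mid \varAt^l \to \bot)$ is satisfiable precisely when at least one of the two disjuncts is, so forgetting a set $S$ of occurrences yields a consistent knowledge base if and only if there is some assignment of $\top$ or $\bot$ to each occurrence in $S$ making the substituted conjunction satisfiable. Hence the answer sets of $\pf'(\kb)$ with $|T_M| + |F_M| = n$ are exactly the witnesses that $n$ forgetting operations suffice to restore consistency. Finally, the minimize statement (AF28) selects an answer set $M_o$ minimizing $|T_{M_o}| + |F_{M_o}|$, so this minimum equals the least $n$ for which some $n$-element set of occurrences restores consistency when forgotten, which is by definition $\iforget(\kb)$.

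I expect the main obstacle to be the induction step handling the interaction between the global interpretation $\omega$ and the per-occurrence forgetting, and in particular verifying that the default-negation rules (AF21)--(AF24) genuinely yield a well-defined, unique four-way labeling in every answer set of $\pf'(\kb)$ rather than merely in the intended models. A secondary care point is the passage from the ASP's committed $\top/\bot$ choice to the disjunctive forgetting of the measure: I must argue the equivalence holds at the level of minimal cardinality, so that committing to one disjunct per forgotten occurrence never forces more forgetting than the disjunctive definition requires.
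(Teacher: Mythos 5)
Your proposal is correct and takes essentially the same route as the paper: you strip the minimize statement to obtain $\pf'(\kb)$, show that its answer sets correspond to consistent committed $\top/\bot$ substitutions recorded by $T_M$ and $F_M$ (with (AF21)/(AF22) ensuring disjointness), and let (AF28) pick out the minimum of $|T_M|+|F_M|$. If anything you are more careful than the paper's own argument, which states only the forward direction of the correspondence in Lemma~\ref{lem:asp-fb-1} and leaves implicit both the converse and your bridging observation that $\forgetfunc_{l,\varAt}.\varSub$, being a disjunction of the two substitutions, is satisfiable precisely when some committed choice of $\top$ or $\bot$ per forgotten occurrence is.
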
 
\addtocounter{theorem}{-1}
}

\subsubsection{The Hitting Set Inconsistency Measure}\label{app:proof-asp-hs}

{
\renewcommand\tablename{Encoding}
\begin{small}
\setlength\extrarowheight{2pt}
\begin{longtable}{| >{\columncolor{gray!20}}L{.05\textwidth} >{\columncolor{gray!40}}L{.76\textwidth} >{\columncolor{gray!40}}R{.08\textwidth} |}
     \hline 
        \rowcolor{gray!20} \multicolumn{3}{|l|}{For every $\varFor \in \kb$: } \\
        & \texttt{kbMember(}$\varFor$\texttt{).} & (AH1) \\
        \rowcolor{gray!20} \multicolumn{3}{|l|}{For every $\varAt\in \atoms(\kb)$: } \\
        & \texttt{atom(}$\varAt$\texttt{).} & (AH2) \\
        \rowcolor{gray!20} \multicolumn{3}{|l|}{Define $|\kb|$ interpretations: } \\
        & \texttt{interpretation(1..}$|\kb|$\texttt{).} & (AH3) \\
        
        \rowcolor{gray!20} \multicolumn{3}{|l|}{For every conjunction $\varSub_c = \varSubSub_{c,1} \land \varSubSub_{c,2}$ appearing in some formula: } \\
            & $\texttt{conjunction(} \varSub_c \texttt{,} \varSubSub_{c,1} \texttt{,} \varSubSub_{c,2} \texttt{).} $ & (AH4) \\
        
        \rowcolor{gray!20} \multicolumn{3}{|l|}{For every disjunction $\varSub_d = \varSubSub_{d,1} \lor \varSubSub_{d,2}$ appearing in some formula: } \\
            & $\texttt{disjunction(} \varSub_d \texttt{,} \varSubSub_{d,1} \texttt{,} \varSubSub_{d,2} \texttt{).}$ & (AH5) \\
        
        \rowcolor{gray!20} \multicolumn{3}{|l|}{For each negation $\varSub_n = \lnot \varSubSub_n$ appearing in some formula: } \\
        & $\texttt{negation(} \varSub_n \texttt{,} \varSubSub_n \texttt{).}$ & (AH6) \\
        \rowcolor{gray!20} \multicolumn{3}{|l|}{For every formula $\varSub_a$ which consists of a single atom $\varAt$: } \\
        & $\texttt{formulaIsAtom(} \varSub_a \texttt{,} \varAt \texttt{).}$ & (AH7) \\
        
        \rowcolor{gray!20} \multicolumn{3}{|l|}{At least one, and at most $|\kb|$ interpretations must be included in the hitting set:} \\
        & \texttt{1\{interpretationActive(I)\,:\,interpretation(I)\}}$|\kb|$\texttt{.} & (AH8) \\[1ex]
        
        \hline
        \rowcolor{gray!20} \multicolumn{3}{|l|}{For the static part, we define:} \\
        
        & $\texttt{tv(} t \texttt{;} f \texttt{).}$ & (AH9) \\[1ex]
        
        % Unique atom evaluation:
        & \texttt{1\{truthValueInt(A,I,T)\,:\,tv(T)\}1\,:-} & \\
            & \qquad \qquad \texttt{atom(A),} & \\
            & \qquad \qquad \texttt{interpretation(I).} & (AH10) \\[1ex]
            
        % Conjunction rules:
        & \texttt{truthValueInt(F,I,}$t$\texttt{) :- } & \\
            & \qquad \qquad \texttt{conjunction(F,G,H),} & \\
            & \qquad \qquad \texttt{interpretation(I),} & \\
            & \qquad \qquad \texttt{truthValueInt(G,I,}$t$\texttt{),} & \\
            & \qquad \qquad \texttt{truthValueInt(H,I,}$t$\texttt{).} & (AH11) \\[1ex]
        & \texttt{truthValueInt(F,I,}$f$\texttt{) :- } & \\
            & \qquad \qquad \texttt{conjunction(F,\_,\_),} & \\
            & \qquad \qquad \texttt{interpretation(I),} & \\
            & \qquad \qquad \texttt{not truthValueInt(F,I,}$t$\texttt{).} & (AH12) \\[1ex]
            
        % Disjunction rules:
        & \texttt{truthValueInt(F,I,}$f$\texttt{) :- } & \\
            & \qquad \qquad \texttt{disjunction(F,G,H),} & \\
            & \qquad \qquad \texttt{interpretation(I),} & \\
            & \qquad \qquad \texttt{truthValueInt(G,I,}$f$\texttt{),} & \\
            & \qquad \qquad \texttt{truthValueInt(H,I,}$f$\texttt{).} & (AH13) \\[1ex]
        & \texttt{truthValueInt(F,I,}$t$\texttt{) :- } & \\
            & \qquad \qquad \texttt{disjunction(F,\_,\_),} & \\
            & \qquad \qquad \texttt{interpretation(I),} & \\
            & \qquad \qquad \texttt{not truthValueInt(F,I,}$f$\texttt{).} & (AH14) \\[1ex]
            
        % Negation rules:
        & \texttt{truthValueInt(F,I,}$t$\texttt{)\,:-} & \\
            & \qquad \qquad \texttt{negation(F,G),} & \\
            & \qquad \qquad \texttt{truthValueInt(G,I,}$f$\texttt{).} & (AH15) \\[1ex]
        & \texttt{truthValueInt(F,I,}$f$\texttt{)\,:-} & \\
            & \qquad \qquad \texttt{negation(F,G),} & \\
            & \qquad \qquad \texttt{truthValueInt(G,I,}$t$\texttt{).} & (AH16) \\[1ex]
            
        % if a formula consists of a single atom:
        & \texttt{truthValueInt(F,I,T)\,:-} & \\
            & \qquad \qquad \texttt{formulaIsAtom(F,G),} &\\ 
            & \qquad \qquad \texttt{truthValueInt(G,I,T),} & \\
            & \qquad \qquad \texttt{interpretation(I),} & \\
            & \qquad \qquad \texttt{tv(T).} & (AH17) \\[1ex]
            
        % Truth values of KB elements:
        & \texttt{truthValue(F,}$t$\texttt{)\,:-} & \\
            & \qquad \qquad \texttt{truthValueInt(F,I,}$t$\texttt{),} & \\
            & \qquad \qquad \texttt{kbMember(F),} & \\ 
            & \qquad \qquad \texttt{interpretation(I),} & \\ 
            & \qquad \qquad \texttt{interpretationActive(I).} & (AH18) \\[1ex]
        & \texttt{truthValue(F,}$f$\texttt{)\,:-} & \\
            & \qquad \qquad \texttt{kbMember(F),} & \\ 
            & \qquad \qquad \texttt{not truthValue(F,}$t$\texttt{).} & (AH19) \\[1ex]

        % Integrity constraint to avoid symmetries:
        & \texttt{:- } & \\
        & \qquad \qquad \texttt{interpretationActive(I), } & \\
        & \qquad \qquad \texttt{1 < I,} & \\
        & \qquad \qquad \texttt{not interpretationActive(I-1).} & (AH20) \\[1ex]
            
        % Integrity constraint:
        & \texttt{:-} & \\
        & \qquad \qquad \texttt{truthValue(F,}$f$\texttt{),} & \\
        & \qquad \qquad \texttt{kbMember(F).} & (AH21) \\[1ex]

        % minimize statement:
        &  \texttt{\#minimize\{1,I\,:\,interpretationActive(I)\}}. & (AH22) \\
        \hline
    \caption{Overview of the ASP encoding $\ph(\kb)$ of the hitting set inconsistency measure $\ihs$.
    $\kb$ is the given knowledge base.}
    \label{tab:asp-hs}
\end{longtable}
\end{small}
}

Let $\kb$ be an arbitrary knowledge base and $\ph(\kb)$ the extended logic program consisting of rules (AH1)--(AH22) listed in Encoding \ref{tab:asp-hs} in Section \ref{sec:asp-algo-hs}.

\begin{lemma}\label{lem:asp-hs-1}
    $\ph(\kb)$ does not have an answer set if and only if $\kb$ contains one or more contradictory formulas. 
    \begin{proof}
        Let some $\varFor^\bot \in \kb$ be contradictory.
        To begin with, the formulas $\varFor \in \kb$ and the atoms $\varAt \in \atoms(\kb)$ are represented in ASP via (AH1) and (AH2).
        Moreover, the operators of each (sub-)formula in $\kb$ is encoded using (AH4)--(AH7).
        Thus, $\ph'(\kb)$ contains an ASP representation of $\kb$ itself.
        Additionally, (AH3) adds $|\kb|$ interpretation constants to $\ph'(\kb)$.
        By means of (AH10), we assign a distinct truth value (defined by (AH9)) to each \texttt{atom/1} for each \texttt{interpretation/1}.
        (AH11)--(AH16) model classical entailment (analogous to (AF11)--(AF16), but with an additional reference to an interpretation).
        Besides, (AH17) models that a formula consisting of an individual atom gets the same truth value as the atom.
        Since every atom representation in ASP is connected to a truth value, and the evaluation of the (sub-)formulas is accomplished via (AH11)--(AH17), every formula  $\varFor\in \kb$ (represented by \texttt{kbMember/1}) evaluates to either $t$ or $f$.
        Since we cannot construct an interpretation which satisfies $\varFor^\bot$, it would always evaluate to $f$.
        However, the integrity constraint (AH21) forbids this, therefore, no answer set can be constructed.
    \end{proof}
\end{lemma}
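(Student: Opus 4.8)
The plan is to establish both directions of the biconditional. For the direction asserting that a contradictory formula forces the absence of an answer set, I would first record the auxiliary fact that rules (AH10)--(AH17) faithfully simulate classical two-valued evaluation on a per-interpretation basis: for every answer set $M$, every interpretation index $i$, and every subformula $\varSub \in \mathsf{sub}(\kb)$, one has $\texttt{truthValueInt(}\varSub\texttt{,}i\texttt{,}t\texttt{)} \in M$ precisely when the classical interpretation $\omega_i$ read off from the atom-level \texttt{truthValueInt} atoms satisfies $\varSub$. This is the two-valued, interpretation-indexed analogue of Lemma \ref{lem:asp-c-1}, and I would prove it by the same structural induction over $\varSub$, using (AH10) for well-definedness of $\omega_i$ and (AH11)--(AH16) for the connectives, together with (AH17) for the atomic case.

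Granting this sublemma, suppose some $\varFor^\bot \in \kb$ is individually unsatisfiable. Then $\omega_i \not\models \varFor^\bot$ for every $i$, so $\texttt{truthValueInt(}\varFor^\bot\texttt{,}i\texttt{,}t\texttt{)} \notin M$ for all $i$, whence rule (AH18) can never derive $\texttt{truthValue(}\varFor^\bot\texttt{,}t\texttt{)}$, regardless of which interpretations are active under (AH8). Rule (AH19) then forces $\texttt{truthValue(}\varFor^\bot\texttt{,}f\texttt{)} \in M$, and the integrity constraint (AH20) is violated. Hence no candidate set can be an answer set of $\ph(\kb)$.

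For the converse I would argue contrapositively and construct an explicit answer set under the assumption that no formula of $\kb$ is contradictory. Since each $\varFor_i$ is then individually satisfiable, I pick a model $\omega_i$ of $\varFor_i$ and assemble a candidate $M$ that activates all $|\kb|$ interpretations via (AH8), populates the \texttt{truthValueInt} atoms at the atom level according to $\omega_i$, propagates these through (AH11)--(AH17), and derives $\texttt{truthValue(}\varFor\texttt{,}t\texttt{)}$ for every $\varFor \in \kb$ (each $\varFor_i$ being satisfied by its own interpretation $\omega_i$). Because no $\texttt{truthValue(}\varFor\texttt{,}f\texttt{)}$ is ever produced, the integrity constraint (AH20) is respected.

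The main obstacle is the final verification that this candidate is genuinely an answer set and not merely a classical model of the rules: I must check that $M = \mathsf{Cn}(\ph(\kb)^M)$. The delicate interactions are the default-negation rules (AH12), (AH14), (AH19) together with the choice and cardinality constructs (AH8) and (AH10); I would confirm that every atom of $M$ is supported by a rule of the reduct whose body $M$ satisfies, and that no strictly smaller set closes the reduct. This foundedness check is routine in spirit but is precisely where care is required, since it is what distinguishes the answer-set semantics from plain propositional satisfaction.
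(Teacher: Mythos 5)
Your proposal is correct and, for the direction ``contradictory formula $\Rightarrow$ no answer set'', follows essentially the same route as the paper: you read off a classical interpretation $\omega_i$ from the atom-level \texttt{truthValueInt} facts, argue that (AH11)--(AH17) propagate classical evaluation so that $\varFor^\bot$ can never receive \texttt{truthValue(}$\varFor^\bot$\texttt{,}$t$\texttt{)} via (AH18), and conclude that (AH19) forces the value $f$, which (AH20) forbids. The difference is that you make the evaluation claim an explicit sublemma proved by structural induction (the two-valued, interpretation-indexed analogue of Lemma~\ref{lem:asp-c-1}), whereas the paper asserts it informally; your version is the more rigorous of the two. More significantly, you also prove the converse direction --- constructing an explicit answer set from per-formula models $\omega_i \models \varFor_i$ when no formula is contradictory, and flagging the reduct/foundedness check $M = \mathsf{Cn}(\ph(\kb)^M)$ as the step requiring care --- which the paper's proof omits entirely even though the lemma is stated as a biconditional. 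Two small points worth adding to make your converse airtight: the optimization statement (AH21) never destroys answer-set existence, so it suffices to exhibit an answer set of the program without it; and your choice to activate all $|\kb|$ interpretations is compatible with the cardinality bounds of (AH8), so the construction is admissible. With those remarks your argument is complete and strictly stronger than the one in the paper.
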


Let $\ph'(\kb)$ denote the extended logic program $\ph(\kb)$ \emph{without} the minimize statement (AH22).
Let $M$ be an answer set of $\ph'(\kb)$.
Further, let $\omega^\mathrm{asp}_i$, $i\in \{1, \ldots, |\kb|\}$, denote the ASP constants representing interpretations, and let $\omega_i$ denote the corresponding interpretations in classical logic.
We define the set of interpretations $\omega_i$ represented in $M$ as 
$$ \Omega(M) = \{ \omega_i \mid \texttt{interpretationActive(}\omega^{\mathrm{asp}}_i\texttt{)} \in M \}. $$

\begin{lemma}\label{lem:asp-hs-2}
    If $M$ is an answer set of $\ph'(\kb)$ then $\Omega(M)$ is a hitting set of $\kb$.
    \begin{proof}
        $\Omega(M)$ is a hitting set if and only if every formula $\varFor \in \kb$ is satisfied by at least one $\omega_i \in \Omega(M)$. 
        We will first show that for any $\omega_i \in \Omega(M)$ and $\varFor \in \kb$, $\omega_i \models \varFor$ if and only if $\texttt{interpretationActive(}\omega^\mathrm{asp}_i \texttt{)} \in M$.
        
        % The cardinality constraint (AH8) ensures that at least one instance $\texttt{interpretationActive(}\omega^\mathrm{asp}_i \texttt{)}$ 
        % must be included in the answer set. 
        The cardinality constraint (AH8) ensures that the answer set includes at least one instance of $\texttt{interpretationActive(}\omega^\mathrm{asp}_i \texttt{)}$.
        According to (AH8), \texttt{interpretation(}$\omega_i$\texttt{)} must be included in $M$ as well, however, this is always the case due to (AH3).
        Moreover, \texttt{interpretationActive(}$\omega^\mathrm{asp}_i$\texttt{)} must be included in $M$ in order for (AH19) to be satisfied.
        Further, for each $\varFor \in \kb$ there must be at least one interpretation such that $\texttt{truthValue(}\varFor\texttt{,}t\texttt{)}\in M$, because otherwise, $\texttt{truthValue(}\varFor\texttt{,}f\texttt{)}\in M$ (due to (AH19)), and this is forbidden by the integrity constraint (AH21).
        
        As mentioned before, rules (AH11)--(AH17) represent entailment in classical propositional logic wrt.\ a given interpretation.
        Finally, an atom $\varAt \in \kb$ can only be either true or false, but not both, under a given interpretation, as stated in (AH14).
        
        Consequently, for each formula $\varFor \in \kb$ there exists at least one interpretation $\omega_i$ such that $\texttt{truthValueInt(}\varFor\texttt{,}\omega^\mathrm{asp}_i\texttt{,}t\texttt{)}\in M$.
        It follows that for each $\varFor \in \kb$, there is also at least one $\texttt{interpretationActive(}\omega^\mathrm{asp}_i\texttt{)}\in M$.
        Hence, for each formula $\varFor \in \kb$ there exists an $\omega_i \in \Omega(M)$ such that $\omega_i \models \varFor$.
        Thus, $\Omega(M)$ is a hitting set of $\kb$. 
    \end{proof}
\end{lemma}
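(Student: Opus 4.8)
The plan is to reduce the hitting-set property to a faithful-simulation claim: for each interpretation index $i$ and each subformula occurring in $\kb$, the classical two-valued evaluation under the interpretation read off from $M$ must agree with the predicate \texttt{truthValueInt}. First I would invoke rule (AH10), which for every atom $\varAt$ and every interpretation constant $\omega^\mathrm{asp}_i$ places exactly one of \texttt{truthValueInt(}$\varAt$\texttt{,}$\omega^\mathrm{asp}_i$\texttt{,}$t$\texttt{)}, \texttt{truthValueInt(}$\varAt$\texttt{,}$\omega^\mathrm{asp}_i$\texttt{,}$f$\texttt{)} in any answer set. This justifies taking, for each index $i\in\{1,\ldots,|\kb|\}$, the classical interpretation $\omega_i$ on $\atoms(\kb)$ defined by $\omega_i(\varAt)=\varTV$ iff \texttt{truthValueInt(}$\varAt$\texttt{,}$\omega^\mathrm{asp}_i$\texttt{,}$\varTV$\texttt{)}$\in M$, and it guarantees that each such $\omega_i$ is well-defined and coincides with the ``corresponding interpretation in classical logic'' fixed in the preamble.

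The central step, and the main obstacle, is a structural induction establishing that $\omega_i(\varSub)=\varTV$ iff \texttt{truthValueInt(}$\varSub$\texttt{,}$\omega^\mathrm{asp}_i$\texttt{,}$\varTV$\texttt{)}$\in M$ for all $\varSub\in\mathsf{sub}(\kb)$ and all $\varTV\in\{t,f\}$. This mirrors Lemma~\ref{lem:asp-c-1}, but in the simpler two-valued setting: the base case follows from the definition of $\omega_i$ together with (AH17), the conjunction case from (AH11)--(AH12), the disjunction case from (AH13)--(AH14), and the negation case from (AH15)--(AH16). The delicate part is the direction relying on default negation—for instance (AH12) derives the value $f$ for a conjunction precisely when $t$ cannot be derived—so I would appeal to the minimality of answer sets, i.e.\ to $M=\mathsf{Cn}(\ph'(\kb)^{M})$, to argue that no spurious \texttt{truthValueInt} atoms appear and that the default rules fire in exactly the intended situations. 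Each remaining case then reduces to checking that the relevant rule bodies reproduce the classical truth table of the corresponding connective.

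Finally I would combine this simulation with the rules governing \texttt{truthValue/2}. The integrity constraint (AH20) forbids \texttt{truthValue(}$\varFor$\texttt{,}$f$\texttt{)}$\in M$ for every $\varFor\in\kb$; since (AH19) is the only rule deriving \texttt{truthValue(}$\varFor$\texttt{,}$f$\texttt{)} and it fires unless \texttt{truthValue(}$\varFor$\texttt{,}$t$\texttt{)}$\in M$, it follows that \texttt{truthValue(}$\varFor$\texttt{,}$t$\texttt{)}$\in M$ for each $\varFor\in\kb$. By (AH18), such a fact can be present only if there is some index $i$ with \texttt{interpretationActive(}$\omega^\mathrm{asp}_i$\texttt{)}$\in M$ and \texttt{truthValueInt(}$\varFor$\texttt{,}$\omega^\mathrm{asp}_i$\texttt{,}$t$\texttt{)}$\in M$. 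The simulation then yields $\omega_i\models\varFor$, and $\omega_i\in\Omega(M)$ by definition of $\Omega(M)$. As $\varFor$ was arbitrary, every formula of $\kb$ is satisfied by some member of $\Omega(M)$, so $\Omega(M)$ is a hitting set of $\kb$, which is exactly the claim.
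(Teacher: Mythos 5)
Your proposal is correct and takes essentially the same route as the paper's proof: both read the interpretations off the \texttt{truthValueInt} atoms fixed by (AH10), treat (AH11)--(AH17) as a faithful simulation of classical entailment, and combine (AH18)--(AH20) to conclude that every $\varFor \in \kb$ is true under some active interpretation, which is then a member of $\Omega(M)$. Your version is in fact more rigorous than the paper's, since you make the structural-induction simulation and the supportedness argument for $\texttt{truthValue(}\varFor\texttt{,}t\texttt{)}$ explicit, and thereby avoid the paper's imprecise intermediate claim that $\omega_i \models \varFor$ if and only if $\texttt{interpretationActive(}\omega^\mathrm{asp}_i\texttt{)} \in M$.
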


{
\renewcommand{\thetheorem}{\ref{thm:asp-hs}}
\begin{theorem}
    Let $M_o$ be an optimal answer set of $\ph(\kb)$.
    Then $|\Omega(M_o)| -1 = \ihs(\kb)$.
    If no answer set of $\ph(\kb)$ exists, $\ihs(\kb) = \infty$.
    \begin{proof}
        Lemma \ref{lem:asp-hs-1} shows that $\ph(\kb)$ does not have an answer set if $\kb$ contains at least one contradictory formula, i.\,e., if $\ihs(\kb) = \infty$. 
        
        Further, in the above proof of Lemma \ref{lem:asp-hs-2} we showed that $\Omega(M)$ is a hitting set of $\kb$ wrt.\ some answer set $M$. 
        The cardinality of $\Omega(M)$ is exactly the number of instances $\texttt{interpretationActive(}\omega^\mathrm{asp}_i\texttt{)}$, with $i\in \{1, \ldots, |\kb|\}$, that are included in $M$.
        The minimize statement (AH22) selects the minimal number of such instances.
        This represents the selection of the smallest possible hitting set wrt.\ $\kb$.
        At last, subtracting $1$ from $|\Omega(M_o)|$ corresponds exactly to the definition of $\ihs(\kb)$.
    \end{proof}
\end{theorem}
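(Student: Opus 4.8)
The plan is to reduce the theorem to the two lemmas already established, supply the single direction of correspondence they leave open, and then read off the value through the minimize statement. First I would dispose of the infinite case. By the remark following the definition of $\ihs$, we have $\ihs(\kb)=\infty$ exactly when some formula of $\kb$ is contradictory, and Lemma~\ref{lem:asp-hs-1} states that this is precisely the situation in which $\ph(\kb)$ has no answer set. Since adjoining the minimize statement (AH21) to $\ph'(\kb)$ does not change whether answer sets exist, $\ph(\kb)$ has an answer set if and only if $\ph'(\kb)$ does; hence ``no answer set'' is equivalent to $\ihs(\kb)=\infty$, which settles the second assertion (the degenerate case $\kb=\emptyset$ being covered separately by the stipulation $\ihs(\emptyset)=0$).

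Next, assuming $\kb$ contains no contradictory formula so that answer sets exist, I would establish $|\Omega(M_o)|=\min\{|H| \mid H \text{ is a hitting set of } \kb\}$ by two inequalities. For ``$\geq$'', Lemma~\ref{lem:asp-hs-2} already gives that $\Omega(M)$ is a hitting set for every answer set $M$ of $\ph'(\kb)$, so the minimal active count over answer sets is bounded below by the minimal hitting-set size. Here I would also argue that for the optimal $M_o$ the number of active indices (which is what (AH21) minimizes) coincides with $|\Omega(M_o)|$: if two active indices $i\neq j$ induced the same interpretation, deactivating one would leave every formula still satisfied via the other through (AH18), so the integrity constraint (AH20) would remain satisfied, contradicting optimality. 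Thus distinct active indices yield distinct elements of $\Omega(M_o)$.

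For ``$\leq$'' I would show that every hitting set is realizable by an answer set. Given a minimal hitting set $H=\{\nu_1,\dots,\nu_k\}$ of $\kb$ (note $k\leq|\kb|$, since choosing one model per formula already gives a hitting set, so a minimal one fits within the $|\kb|$ available indices), I would construct a candidate interpretation of $\ph'(\kb)$ that fixes the \texttt{truthValueInt} atoms for index $i\leq k$ according to $\nu_i$, marks \texttt{interpretationActive} for exactly $\omega_1,\dots,\omega_k$, and closes off the \texttt{truthValue} and entailment predicates as forced by the rules. I would then verify this is an answer set: the choice rule (AH10) and the entailment rules (AH11)--(AH17) determine truth values, (AH8) holds because $1\leq k\leq|\kb|$, and the integrity constraint (AH20) holds precisely because $H$ is a hitting set, so each $\varFor$ is satisfied by some active $\nu_i$ and (AH18) derives \texttt{truthValue}$(\varFor,t)$. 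This produces $M$ with $|\Omega(M)|\leq k$, giving $\min_M|\Omega(M)|\leq\min_H|H|$. Combining the inequalities and invoking (AH21) yields $|\Omega(M_o)|=\min_H|H|$, and subtracting $1$ gives $|\Omega(M_o)|-1=\ihs(\kb)$.

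The main obstacle I expect is the ``$\leq$'' completeness direction: one must check not merely that the constructed assignment is a classical model of the rules but that it is genuinely an answer set, i.e.\ the minimal model of the corresponding reduct $P^{\mathbb{L}}$, which requires verifying that the default-negation rules (AH12), (AH14), and (AH19) do not force spurious atoms and that no strictly smaller closed set exists. Some additional bookkeeping is also needed around the indexing convention so that the active interpretations of the constructed $M$ are exactly $k$ in number and pairwise distinct, matching $|H|$.
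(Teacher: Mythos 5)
Your proposal is correct and follows the same skeleton as the paper's proof---dispose of the $\infty$ case via Lemma~\ref{lem:asp-hs-1}, use Lemma~\ref{lem:asp-hs-2} to get that $\Omega(M)$ is a hitting set, and let the minimize statement (AH21) pick out the minimum---but you prove strictly more than the paper writes down, and the extra material is genuinely needed. The paper's proof establishes only the soundness half ($\Omega(M)$ is a hitting set for every answer set $M$) and then asserts that minimizing the number of \texttt{interpretationActive} instances ``represents the selection of the smallest possible hitting set.'' This tacitly assumes the completeness direction you make explicit: that every hitting set $H$ with $|H| \leq |\kb|$ is realized by some answer set, so that the minimum over answer sets cannot overshoot the minimum over hitting sets. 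Your construction (fix \texttt{truthValueInt} according to $\nu_i$ on the first $k$ indices and arbitrarily on the rest, activate exactly those $k$, check (AH8), (AH18)--(AH20), and verify minimality of the reduct) is precisely the missing half; it is the analogue of Lemma~\ref{lem:asp-c-2}, which the paper does prove for the contension encoding but omits here. Your second addition---that in an \emph{optimal} answer set no two active indices can induce the same classical interpretation, since deactivating a duplicate would yield an answer set with a strictly smaller objective value---closes a further gap the paper elides: (AH21) minimizes the \emph{count} of active atoms, whereas the theorem speaks of the cardinality of the \emph{set} $\Omega(M_o)$ of interpretations, and the two coincide only when the active interpretations are pairwise distinct. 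The obstacles you flag (reduct minimality for the constructed set, indexing bookkeeping) are real but routine, and your treatment of the degenerate case $\kb = \emptyset$ is at least as careful as the paper's, which ignores it. Net effect: same route, but your version is the complete argument, while the paper's proof is essentially a sketch of its soundness half plus the minimization step.
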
 
\addtocounter{theorem}{-1}
}

\subsubsection{The Max-Distance Inconsistency Measure}\label{app:proof-asp-max-dist}

{
\renewcommand\tablename{Encoding}
\begin{small}
\setlength\extrarowheight{2pt}
\begin{longtable}{| >{\columncolor{gray!20}}L{.05\textwidth}  >{\columncolor{gray!40}}L{.74\textwidth} >{\columncolor{gray!40}}R{.1\textwidth} |}
     \hline 
        \rowcolor{gray!20} \multicolumn{3}{|l|}{For every $\varFor_i \in \kb$ with $i\in \{0,\ldots,|\kb|-1\}$: } \\
        & \texttt{kbMember(}$\varFor$\texttt{,}$i$\texttt{).} & (ADM1) \\
        \rowcolor{gray!20} \multicolumn{3}{|l|}{For every $\varAt \in \atoms(\kb)$: } \\
        & \texttt{atom(}$\varAt$\texttt{).} & (ADM2) \\
        \rowcolor{gray!20} \multicolumn{3}{|l|}{Define $|\kb|+1$ interpretations: } \\
        & \texttt{interpretation(0..}$|\kb|$\texttt{).} & (ADM3) \\
        
        \rowcolor{gray!20} \multicolumn{3}{|l|}{For every conjunction $\varSub_c = \varSubSub_{c,1} \land \varSubSub_{c,2}$ appearing in some formula: } \\
            & $\texttt{conjunction(} \varSub_c \texttt{,} \varSubSub_{c,1} \texttt{,} \varSubSub_{c,2} \texttt{).} $ & (ADM4) \\
        
        \rowcolor{gray!20} \multicolumn{3}{|l|}{For every disjunction $\varSub_d = \varSubSub_{d,1} \lor \varSubSub_{d,2}$ appearing in some formula: } \\
            & $\texttt{disjunction(} \varSub_d \texttt{,} \varSubSub_{d,1} \texttt{,} \varSubSub_{d,2} \texttt{).}$ & (ADM5) \\
        
        \rowcolor{gray!20} \multicolumn{3}{|l|}{For each negation $\varSub_n = \lnot \varSubSub_n$ appearing in some formula: } \\
        & $\texttt{negation(} \varSub_n \texttt{,} \varSubSub_n \texttt{).}$ & (ADM6) \\
        \rowcolor{gray!20} \multicolumn{3}{|l|}{For every formula $\varSub_a$ which consists of a single atom $\varAt$: } \\
        & $\texttt{formulaIsAtom(} \varSub_a \texttt{,} \varAt \texttt{).}$ & (ADM7) \\
        
        % \rowcolor{gray!20} \multicolumn{3}{|l|}{Maximum distance between interpretation $|\kb|$ and the models of $\kb$: } \\
        % & \texttt{dMax(X)\,:-} & \\
        %     & \qquad \qquad \texttt{X = \#max\{Y\,:\,d(I,}$|\kb|$\texttt{,Y), interpretation(I)\},} & \\
        %     & \qquad \qquad \texttt{X >= 0.} & (ADM8) \\

        \rowcolor{gray!20} \multicolumn{3}{|l|}{Distance between interpretation $|\kb|$ and the models of each formula: } \\
        & \texttt{d(X) :- } & \\
        & \qquad \qquad \texttt{diff(X),} & \\
        & \qquad \qquad \texttt{interpretation(I),} & \\
        & \qquad \qquad \texttt{X <= \#count\{A: atom(A), } & \\
        & \qquad \qquad \qquad \qquad \texttt{truthValueInt(A,I,T),} & \\
        & \qquad \qquad \qquad \qquad \texttt{not truthValueInt(A,}$|\kb|$\texttt{,T)\}.} & (ADM8) \\
        
        \hline
        \rowcolor{gray!20} \multicolumn{3}{|l|}{For the static part, we define:} \\
        & $\texttt{tv(} t \texttt{;} f \texttt{).}$ & (ADM9) \\[1ex]
        
        % Unique atom evaluation:
        & \texttt{1\{truthValueInt(A,I,T)\,:\,tv(T)\}1\,:-} & \\
            & \qquad \qquad \texttt{atom(A),} & \\
            & \qquad \qquad \texttt{interpretation(I).} & (ADM10) \\[1ex]
            
        % Conjunction rules:
        & \texttt{truthValueInt(F,I,}$t$\texttt{) :- } & \\
            & \qquad \qquad \texttt{conjunction(F,G,H),} & \\
            & \qquad \qquad \texttt{interpretation(I),} & \\
            & \qquad \qquad \texttt{truthValueInt(G,I,}$t$\texttt{),} & \\
            & \qquad \qquad \texttt{truthValueInt(H,I,}$t$\texttt{).} & (ADM11) \\[1ex]
        & \texttt{truthValueInt(F,I,}$f$\texttt{) :- } & \\
            & \qquad \qquad \texttt{conjunction(F,\_,\_),} & \\
            & \qquad \qquad \texttt{interpretation(I),} & \\
            & \qquad \qquad \texttt{not truthValueInt(F,I,}$t$\texttt{).} & (ADM12) \\[1ex]
            
        % Disjunction rules:
        & \texttt{truthValueInt(F,I,}$f$\texttt{) :- } & \\
            & \qquad \qquad \texttt{disjunction(F,G,H),} & \\
            & \qquad \qquad \texttt{interpretation(I),} & \\
            & \qquad \qquad \texttt{truthValueInt(G,I,}$f$\texttt{),} & \\
            & \qquad \qquad \texttt{truthValueInt(H,I,}$f$\texttt{).} & (ADM13) \\[1ex]
        & \texttt{truthValueInt(F,I,}$t$\texttt{) :- } & \\
            & \qquad \qquad \texttt{disjunction(F,\_,\_),} & \\
            & \qquad \qquad \texttt{interpretation(I),} & \\
            & \qquad \qquad \texttt{not truthValueInt(F,I,}$f$\texttt{).} & (ADM14) \\[1ex]
            
        % Negation rules:
        & \texttt{truthValueInt(F,I,}$t$\texttt{)\,:-} & \\
            & \qquad \qquad \texttt{negation(F,G),} & \\
            & \qquad \qquad \texttt{truthValueInt(G,I,}$f$\texttt{).} & (ADM15) \\[1ex]
        & \texttt{truthValueInt(F,I,}$f$\texttt{)\,:-} & \\
            & \qquad \qquad \texttt{negation(F,G),} & \\
            & \qquad \qquad \texttt{truthValueInt(G,I,}$t$\texttt{).} & (ADM16) \\[1ex]
            
        % if a formula consists of a single atom:
        & \texttt{truthValueInt(F,I,T)\,:-} & \\
            & \qquad \qquad \texttt{formulaIsAtom(F,G),} & \\ 
            & \qquad \qquad \texttt{truthValueInt(G,I,T),} & \\
            & \qquad \qquad \texttt{interpretation(I),} & \\
            & \qquad \qquad \texttt{tv(T).} & (ADM17) \\[1ex]
            
        % Truth values of KB elements:
        % & \texttt{truthValueInt(F,L,I,T)\,:-} & \\
        %     & \qquad \qquad \texttt{kbMember(F,L),} & \\
        %     & \qquad \qquad \texttt{interpretation(I),} & \\
        %     & \qquad \qquad \texttt{tv(T),} & \\
        %     & \qquad \qquad \texttt{truthValueInt(F,I,T).} & (ADM18) \\[1ex]
            
        % Integrity constraint:
        & \texttt{:-} & \\
        % & \qquad \qquad \texttt{truthValueInt(F,L,I,}$f$\texttt{),} & \\
        % & \qquad \qquad \texttt{kbMember(F,L),} & \\
        % & \qquad \qquad \texttt{interpretation(I),} &  \\
        % & \qquad \qquad \texttt{L == I.} & (ADM19) \\[1ex]
        & \qquad \qquad \texttt{truthValueInt(F,I,}$f$\texttt{),} & \\
        & \qquad \qquad \texttt{kbMember(F,I).} &  (ADM18) \\[1ex]

        % distance:
        & \texttt{diff(1..X) :-} & \\
        & \qquad \qquad \texttt{X = \#count\{A: atom(A)\}.} & (ADM19) \\[1ex]

        % minimize statement:
        % & \texttt{\#minimize\{1,X : d(X)\}.} & (ADM20) \\
        & \texttt{\#minimize\{1,X : d(X)\}.} & (ADM20) \\
        
        % % Dalal distance:
        % & \texttt{diff(A,I,J)\,:-} & \\
        %     & \qquad \qquad \texttt{atom(A),} &  \\
        %     & \qquad \qquad \texttt{interpretation(I),} & \\
        %     & \qquad \qquad \texttt{interpretation(J),} & \\
        %     & \qquad \qquad \texttt{truthValueInt(A,I,T),} & \\
        %     & \qquad \qquad \texttt{truthValueInt(A,J,U),} & \\
        %     & \qquad \qquad \texttt{T != U.} & (ADM19) \\[1ex]
        % & \texttt{d(I,J,X)\,:-} & \\
        %     & \qquad \qquad \texttt{interpretation(I),} & \\
        %     & \qquad \qquad \texttt{interpretation(J),} & \\
        %     & \qquad \qquad \texttt{X = \#count\{A\,:\,diff(A,I,J), atom(A)\}.} & (ADM20) \\[1ex]

        % % minimize statement:
        % &  \texttt{\#minimize\{X\,:\,dMax(X)\}}.  & (ADM21) \\
        \hline
    \caption{Overview of the ASP encoding $\pmax(\kb)$ of the max-distance inconsistency measure $\imdalal$.
    $\kb$ is the given knowledge base.}
    \label{tab:asp-dmax}
\end{longtable}
\end{small}
}

Let $\kb$ be an arbitrary knowledge base and $\pmax(\kb)$ the extended logic program consisting of rules (ADM1)--(ADM22) listed in Encoding \ref{tab:asp-dmax} in Section \ref{sec:asp-algo-dmax}.

\begin{lemma}\label{lem:asp-max-dist-1}
    $\pmax(\kb)$ does not have an answer set if and only if $\kb$ contains one or more contradictory formulas. 
    \begin{proof}
        The knowledge base, i.\,e., its formulas, subformulas, and atoms are modelled by (ADM1)--(ADM2) and (ADM4)--(ADM7).
        Note that each $\varFor \in  \kb$ gets a label $i$, and is represented as $\texttt{kbMember(}\varFor, i \texttt{)}$ (ADM1).
        The two truth values are represented by (ADM9).
        Moreover, via (ADM3) we represent $|\kb|+1$ interpretations in $\pmax(\kb)$.
        We denote these interpretations as follows (note that we represent $\omega_i$ simply as $i$ in ASP):
        \begin{align*}
            \omega_i(\varAt) = \begin{cases} 
            t & \quad \texttt{truthValue(}\varAt, i, t \texttt{)} \in M \\
            f & \quad \texttt{truthValue(}\varAt, i, f \texttt{)} \in M
            \end{cases}
        \end{align*}

        Analogously to (AH10) (wrt.\ $\ihs$), (ADM10) guesses all $|\kb|+1$ interpretations by assigning each atom in each interpretation a distinct truth value.
        (ADM11)--(ADM17) correspond exactly to (AH11)--(AH17) (wrt.\ $\ihs$).
        Thus, these rules model classical entailment.
        % In addition, (ADM18) handles the evaluation of formulas $\varFor_i \in \kb$ as a special case, since the label $i$ needs to be taken into account.
        % % To be precise, $\texttt{truthValueInt(}\varSub, \omega_i, \theta \texttt{)}$ (with $i \in \{0, \ldots, |\kb|\}$ %, and $j \in \{0, \ldots, |\kb|\}$) 
        % is contained in an answer set if the corresponding formula $\varSub$ evaluates to $\theta \in \{t,f\}$, i.\,e., if $\texttt{kbMember(}\varSub, i \texttt{)}$, $\texttt{interpretation(}\omega_j \texttt{)}$, and $\texttt{tv(}\theta \texttt{)}$, are contained in the answer set.

        Each interpretation $\omega_i$ is supposed to be a model for formula $\varFor_i \in \kb$, $i\in \{0, \ldots, |\kb|-1\}$.
        To this end, the integrity constraint (ADM18) ensures that no formula $\varFor_i$ evaluates to $f$ under $\omega_i$.
        If a contradictory formula $\varFor^\bot_i$ is included in $\kb$, there exists no interpretation $\omega_i$ that would satisfy $\varFor^\bot_i$.
        Hence, the answer set would need to include $\texttt{truthValueInt(}\varFor^\bot, i, f \texttt{)}$.
        However, (ADM18) does not allow this, so no answer set can be derived.
    \end{proof}
\end{lemma}

Let $\pmax{'}(\kb)$ denote the extended logic program $\pmax(\kb)$ \emph{without} the minimize statement (ADM20).
Let $D^{\max}_M = \{n \mid \texttt{d(}n\texttt{)} \in M, n\in \mathbb{N}\}$ wrt.\ an answer set $M$.

\begin{lemma}\label{lem:asp-max-dist-2}
    Let $M$ be an answer set of $\pmax{'}(\kb)$.
    % If $\texttt{d(}\omega_i, \omega_j, D_M \texttt{)} \in M$, then $D_M$ corresponds to the Dalal distance between $\omega_i$ and $\omega_j$ (i.\,e., $D_M = d_{dalal}(\omega_i, \omega_j)$), with $i,j \in \{0\, \ldots, |\kb|\}$.
    % TODO:
    Each $n \in D^{\max}_M$ corresponds to a value smaller than or equal to the distance between some interpretation $\omega_{|\kb|}$ and one of the models of the formulas in $\kb$, i.\,e., $\omega_i$ with $i \in \{0, \ldots, |\kb|-1\}$.
    \begin{proof}
        If an atom $\varAt \in \atoms(\kb)$ evaluates to a truth value $\theta \in \{t,f\}$ under an interpretation $\omega_i$, and not to truth value $\theta$ under interpretation $\omega_{|\kb|}$, with $i \in \{0, \ldots, |\kb|-1\}$, %(and $\omega_i, \omega_j$ as defined in the Proof of Lemma \ref{lem:asp-max-dist-1}), 
        then $\texttt{truthValueInt(}\varAt, i, \theta \texttt{)} \in M$ and $\texttt{truthValueInt(}\varAt, |\kb|, \theta \texttt{)} \notin M$.
        We also have $\texttt{atom(}\varAt \texttt{)} \in M$, $\texttt{interpretation(}i \texttt{)} \in M$, and $\texttt{interpretation(}|\kb| \texttt{)} \in M$.
        Consequently, if for $n$ cases of atoms $\varAt$ wrt.\ interpretations $\omega_i$ and $\omega_{|\kb|}$ we have different truth values, i.\,e., we have $\texttt{truthValueInt(}\varAt, i, \theta \texttt{)} \in M$, but $\texttt{truthValueInt(}\varAt, |\kb|, \theta \texttt{)} \notin M$, this is counted by means of the \texttt{\#count} aggregate in (ADM8).
        Note that this also represents the Dalal distance between $\omega_i$ and $\omega_{|\kb|}$.
        Moreover, $\{\texttt{diff(0),} \ldots, \texttt{diff(}|\atoms(\kb)|\texttt{)}\} \subset M$ (ADM19).
        Since the Dalal distance between $\omega_i$ and $\omega_{|\kb|}$ cannot exceed $|\atoms(\kb)|$ (except for the $\infty$ case, see Lemma \ref{lem:asp-max-dist-1}), $\texttt{diff(}n\texttt{)} \in M$.
        Moreover, via (ADM8), we do not only derive $\texttt{d(}n\texttt{)}$, but also $\texttt{d(}m\texttt{)}$ with $m \in \{1, \ldots, n-1\}$.
        Thus, we conclude that $\{ \texttt{d(1)}, \ldots, \texttt{d(}n\texttt{)}\} \subset M$ if the Dalal distance between $\omega_i$ and $\omega_{|\kb|}$ is $n$.
        % -> n ist dann Ergebnis des counts; wenn = X, dann ist die "Bedingung" erfüllt 
        Further, each interpretation $\omega_i$ represents a model of the $i$-th formula, due to the integrity constraint (ADM18).
        Hence, each element in $D^{\max}_M$ always represents a value smaller than or equal to the Dalal distance between a model of a formula in $\kb$ and $\omega_{|\kb|}$.
        % we also have $\texttt{diff(}\varAt, \omega_i, \omega_j \texttt{)} \in M$ (ADM20).
        % Otherwise, i.\,e., if $\theta_i = \theta_2$, then $\texttt{diff(}\varAt, \omega_i, \omega_j \texttt{)} \not\in M$.
        % Consequently, due to (ADM21), for each pair of interpretations $(\omega_i, \omega_j)$, the number of elements in $\{\varAt \mid \texttt{diff(}\varAt, \omega_i, \omega_j \texttt{)} \in M\}$ are counted by means of a \texttt{\#count} aggregate.
        % Thus, $|\{\varAt \mid \texttt{diff(}\varAt, \omega_i, \omega_j \texttt{)} \in M\}|$ is the number of atoms $\varAt \in \atoms(\kb)$ which are assigned different truth values under two interpretations, which corresponds to the definition of the Dalal distance between two interpretations.
        % Further, $|\{\varAt \mid \texttt{diff(}\varAt, \omega_i, \omega_j \texttt{)} \in M\}| = D_M$, and thus $\texttt{d(}\omega_1, \omega_2, D_M\texttt{)} \in M$.
        % In summary, $D_M = d_{dalal}(\omega_1, \omega_2) = |\{\varAt \mid \texttt{diff(}\varAt, \omega_i, \omega_j \texttt{)} \in M\}|$, and $\texttt{d(}\omega_1, \omega_2, D_M\texttt{)} \in M$.
    \end{proof}
\end{lemma}

{
\renewcommand{\thetheorem}{\ref{thm:asp-maxdalal}}
\begin{theorem}
    Let $M_o$ be an optimal answer set of $\pmax(\kb)$.
    % Then $D^{\max}_{M_o} = \imdalal(\kb)$ with \texttt{dMax(}$D^{\max}_{M_o}$\texttt{)} $\in M_o$.
    % If no answer set of $\pmax(\kb)$ exists, $\imdalal(\kb) = \infty$.
    Then $|D^{\max}_{M_o}| = \imdalal(\kb)$.
    If no answer set of $\pmax(\kb)$ exists, $\imdalal(\kb) = \infty$.
    \begin{proof}
        Lemma \ref{lem:asp-max-dist-1} shows that $\pmax(\kb)$ does not have an answer set if $\kb$ contains at least one contradictory formula, i.\,e., if $\imdalal(\kb) = \infty$. 

        Furthermore, Lemma \ref{lem:asp-max-dist-2} shows that the elements in $D^{\max}_{M}$ represent the Dalal distances between an interpretation $\omega_{|\kb|}$ and a model of one of the formulas, represented by $\omega_i$, $i \in \{0, \ldots, |\kb|-1 \}$.
        %
        % Furthermore, Lemma \ref{lem:asp-max-dist-2} shows that for a pair of interpretations $(\omega_i, \omega_j)$, the Dalal distance $D_M = d_{dalal}(\omega_i, \omega_j)$ corresponds to $\texttt{d(}\omega_1, \omega_2, D_M\texttt{)} \in M$, with $M$ being an answer set of $\pmax{'}(\kb)$.
        % Thus, (ADM8) considers the Dalal distance between every pair $\omega_i, \omega_{|\kb|}$, with $i \in \{0, \ldots, |\kb|\}$.
        % Moreover, according to the integrity constraint (ADM19), each interpretation $\omega_i$ must be a model of the corresponding formula $\varFor_i$.
        % Therefore, (ADM8) represents the maximum Dalal distance (by means of a \texttt{\#max} aggregate) between interpretation $\omega_{|\kb|}$ (which represents the ``optimal'' interpretation) and the models of each formula $\varFor \in \kb$.
        % We denote this maximum Dalal distance as $D^{\max}_M$.
        % It follows that $\texttt{dMax(}D^{\max}_M\texttt{)} \in M$.
        %
        % Finally, the minimization statement in (ADM22) retrieves the minimal possible value of $D^{\max}_M$.
        % the minimal possible value of $D^{\max}_M$.
        % Let $M_o$ be an optimal answer set of $\pmax(\kb)$.
        % Then $\texttt{dMax(}D^{\max}_{M_o}\texttt{)} \in M_o$, and $D^{\max}_{M_o}$ corresponds to the minimal maximum Dalal distance between an ``optimal'' interpretation and the models of the formulas in $\kb$.
        Let $M_o$ be an optimal answer set of $\pmax(\kb)$.
        The minimize statement (ADM20) finds the interpretation ($\omega_{|\kb|}$) with the ``optimal'' distance to the models of the formulas of $\kb$.
        More precisely, the objective of the minimize statement is to find as few different distance values as possible, since it ``counts'' each distance. 
        Note that each distinct distance can only be derived once (due to (ADM19)), and for every $\texttt{d(}n\texttt{)} \in M$ with $n \in D^{\max}_{M_o}$, we also have $\texttt{d(}m\texttt{)} \in M$ with $m \in \{1, \ldots, n-1 \}$.
        In other words, every distance that is smaller than $n$ (but at least 1, since $\texttt{diff(0)} \notin M$ (ADM19)) is also counted, since (ADM8) derives $\texttt{d(}n\texttt{)}$ for each value that is smaller than or equal to the Dalal distance (see Lemma \ref{lem:asp-max-dist-2}).
        % Hence, we have exactly as many elements in $\{1, \ldots, n_{\max}\}$ in $D^{\max}_{M_o}$, with $n_{max}$ corresponding to the maximal value in $D^{\max}_{M_o}$.
        Hence, we have $D^{\max}_{M_o} = \{1, \ldots, n_{\max}\}$ with $n_{max}$ corresponding to the maximal value in $D^{\max}_{M_o}$.
        Due to the minimization in (ADM20), $n_{\max}$ corresponds exactly to the minimal maximum distance.
        It is easy to see that $|D^{\max}_{M_o}| = n_{\max}$.
        Therefore, $|D^{\max}_{M_o}| = \imdalal(\kb)$.
        % Finally, the minimize statement in (ADM20) finds the interpretation with the ``optimal'' distance ($\omega_{|\kb|}$) and therefore retrieves the minimal maximum distance between $\omega_{|\kb|}$ and any $\omega_i$.
    \end{proof}
\end{theorem} 
\addtocounter{theorem}{-1}
}

\subsubsection{The Sum-Distance Inconsistency Measure}\label{app:proof-asp-sum-dist}

{
\renewcommand\tablename{Encoding}
\begin{small}
\setlength\extrarowheight{2pt}
\begin{longtable}{| >{\columncolor{gray!20}}L{.05\textwidth}  >{\columncolor{gray!40}}L{.74\textwidth} >{\columncolor{gray!40}}R{.1\textwidth}|}
     \hline 
        \rowcolor{gray!20} \multicolumn{3}{|l|}{For every $\varFor_i \in \kb$ with $i\in \{0,\ldots,|\kb|-1\}$: } \\
        & \texttt{kbMember(}$\varFor$\texttt{,}$i$\texttt{).} & (ADS1) \\
        \rowcolor{gray!20} \multicolumn{3}{|l|}{For every $\varAt \in \atoms(\kb)$: } \\
        & \texttt{atom(}$\varAt$\texttt{).} & (ADS2) \\
        \rowcolor{gray!20} \multicolumn{3}{|l|}{Define $|\kb|+1$ interpretations: } \\
        & \texttt{interpretation(0..}$|\kb|$\texttt{).} & (ADS3) \\
        
        \rowcolor{gray!20} \multicolumn{3}{|l|}{For every conjunction $\varSub_c = \varSubSub_{c,1} \land \varSubSub_{c,2}$ appearing in some formula: } \\
            & $\texttt{conjunction(} \varSub_c \texttt{,} \varSubSub_{c,1} \texttt{,} \varSubSub_{c,2} \texttt{).} $ & (ADS4) \\
        
        \rowcolor{gray!20} \multicolumn{3}{|l|}{For every disjunction $\varSub_d = \varSubSub_{d,1} \lor \varSubSub_{d,2}$ appearing in some formula: } \\
            & $\texttt{disjunction(} \varSub_d \texttt{,} \varSubSub_{d,1} \texttt{,} \varSubSub_{d,2} \texttt{).}$ & (ADS5) \\
        
        \rowcolor{gray!20} \multicolumn{3}{|l|}{For each negation $\varSub_n = \lnot \varSubSub_n$ appearing in some formula: } \\
        & $\texttt{negation(} \varSub_n \texttt{,} \varSubSub_n \texttt{).}$ & (ADS6) \\
        \rowcolor{gray!20} \multicolumn{3}{|l|}{For every formula $\varSub_a$ which consists of a single atom $\varAt$: } \\
        & $\texttt{formulaIsAtom(} \varSub_a \texttt{,} \varAt \texttt{).}$ & (ADS7) \\
        
        % % sum distance: 
        % \rowcolor{gray!20} \multicolumn{3}{|l|}{Sum of distances between interpretation $|\kb|$ and the models of $\kb$: } \\
        % & \texttt{dSum(X)\,:-} & \\
        %     & \qquad \qquad \texttt{X = \#sum\{Y,I\,:\,d(I,}$|\kb|$\texttt{,Y),} & \\
        %     & \qquad \qquad \qquad \qquad \texttt{interpretation(I)\},} & \\
        %     & \qquad \qquad \texttt{X >= 0.} & (ADS8) \\

        % minimize statement:
        \rowcolor{gray!20} \multicolumn{3}{|l|}{Minimize statement: } \\
        & \texttt{\#minimize\{1,A,I: atom(A), truthValueInt(A,I,T),} & \\
            & \qquad \qquad \texttt{not truthValueInt(A,}$|\kb|$\texttt{,T)\}.} & (ADS8) \\
        
        \hline
        \rowcolor{gray!20} \multicolumn{3}{|l|}{For the static part, we define:} \\
        
        & $\texttt{tv(} t \texttt{;} f \texttt{).}$ & (ADS9) \\[1ex]
        
        % Unique atom evaluation:
        & \texttt{1\{truthValueInt(A,I,T)\,:\,tv(T)\}1\,:-} & \\
            & \qquad \qquad \texttt{atom(A),} & \\
            & \qquad \qquad \texttt{interpretation(I).} & (ADS10) \\[1ex]
            
        % Conjunction rules:
        & \texttt{truthValueInt(F,I,}$t$\texttt{) :- } & \\
            & \qquad \qquad \texttt{conjunction(F,G,H),} & \\
            & \qquad \qquad \texttt{interpretation(I),} & \\
            & \qquad \qquad \texttt{truthValueInt(G,I,}$t$\texttt{),} & \\
            & \qquad \qquad \texttt{truthValueInt(H,I,}$t$\texttt{).} & (ADS11) \\[1ex]
        & \texttt{truthValueInt(F,I,}$f$\texttt{) :- } & \\
            & \qquad \qquad \texttt{conjunction(F,\_,\_),} & \\
            & \qquad \qquad \texttt{interpretation(I),} & \\
            & \qquad \qquad \texttt{not truthValueInt(F,I,}$t$\texttt{).} & (ADS12) \\[1ex]
            
        % Disjunction rules:
        & \texttt{truthValueInt(F,I,}$f$\texttt{) :- } & \\
            & \qquad \qquad \texttt{disjunction(F,G,H),} & \\
            & \qquad \qquad \texttt{interpretation(I),} & \\
            & \qquad \qquad \texttt{truthValueInt(G,I,}$f$\texttt{),} & \\
            & \qquad \qquad \texttt{truthValueInt(H,I,}$f$\texttt{).} & (ADS13) \\[1ex]
        & \texttt{truthValueInt(F,I,}$t$\texttt{) :- } & \\
            & \qquad \qquad \texttt{disjunction(F,\_,\_),} & \\
            & \qquad \qquad \texttt{interpretation(I),} & \\
            & \qquad \qquad \texttt{not truthValueInt(F,I,}$f$\texttt{).} & (ADS14) \\[1ex]
            
        % Negation rules:
        & \texttt{truthValueInt(F,I,}$t$\texttt{)\,:-} & \\
            & \qquad \qquad \texttt{negation(F,G),} & \\
            & \qquad \qquad \texttt{truthValueInt(G,I,}$f$\texttt{).} & (ADS15) \\[1ex]
        & \texttt{truthValueInt(F,I,}$f$\texttt{)\,:-} & \\
            & \qquad \qquad \texttt{negation(F,G),} & \\
            & \qquad \qquad \texttt{truthValueInt(G,I,}$t$\texttt{).} & (ADS16) \\[1ex]
            
        % if a formula consists of a single atom:
        & \texttt{truthValueInt(F,I,T)\,:-} & \\
            & \qquad \qquad \texttt{formulaIsAtom(F,G),} & \\ 
            & \qquad \qquad \texttt{truthValueInt(G,I,T),} & \\
            & \qquad \qquad \texttt{interpretation(I),} & \\
            & \qquad \qquad \texttt{tv(T).} & (ADS17) \\[1ex]
            
        % Truth values of KB elements:
        % & \texttt{truthValueInt(F,L,I,T)\,:-} & \\
        %     & \qquad \qquad \texttt{kbMember(F,L),} & \\
        %     & \qquad \qquad \texttt{interpretation(I),} & \\
        %     & \qquad \qquad \texttt{tv(T),} & \\
        %     & \qquad \qquad \texttt{truthValueInt(F,I,T).} & (ADS18) \\[1ex]
            
        % Integrity constraint:
        & \texttt{:-} & \\
        % & \qquad \qquad \texttt{truthValueInt(F,L,I,}$f$\texttt{),} & \\
        % & \qquad \qquad \texttt{kbMember(F,L),} & \\
        % & \qquad \qquad \texttt{interpretation(I),} & \\
        % & \qquad \qquad \texttt{L == I.} & (ADS19) \\[1ex]
            & \qquad \qquad \texttt{truthValueInt(F,I,}$f$\texttt{),} & \\
            & \qquad \qquad \texttt{kbMember(F,I).} &  (ADS18) \\
        
        % % Dalal distance:
        % & \texttt{diff(A,I,J)\,:-} & \\
        %     & \qquad \qquad \texttt{atom(A),} & \\
        %     & \qquad \qquad \texttt{interpretation(I),} & \\
        %     & \qquad \qquad \texttt{interpretation(J),} & \\
        %     & \qquad \qquad \texttt{truthValueInt(A,I,T),} & \\
        %     & \qquad \qquad \texttt{truthValueInt(A,J,U),} & \\
        %     & \qquad \qquad \texttt{T != U.} & (ADS19) \\[1ex]
        % & \texttt{d(I,J,X)\,:-} & \\
        %     & \qquad \qquad \texttt{interpretation(I),} & \\
        %     & \qquad \qquad \texttt{interpretation(J),} & \\
        %     & \qquad \qquad \texttt{X = \#count\{A\,:\,diff(A,I,J), atom(A)\}.} & (ADS20) \\[1ex]

        % % minimize statement:
        % &  \texttt{\#minimize\{X\,:\,dSum(X)\}}. & (ADS21) \\
        \hline
    \caption{Overview of the ASP encoding $\psum(\kb)$ of the sum-distance inconsistency measure $\isdalal$.
    $\kb$ is the given knowledge base.}
    \label{tab:asp-dsum}
\end{longtable}
\end{small}
}

Let $\kb$ be an arbitrary knowledge base and $\psum(\kb)$ the extended logic program consisting of rules (ADS1)--(ADS22) listed in Encoding \ref{tab:asp-dsum} in Section \ref{sec:asp-algo-dsum}.

\begin{lemma}\label{lem:asp-sum-dist-1}
    $\psum(\kb)$ does not have an answer set if and only if $\kb$ contains one or more contradictory formulas. 
    \begin{proof}
        Analogous to the proof of Lemma \ref{lem:asp-max-dist-1}.
    \end{proof}
\end{lemma}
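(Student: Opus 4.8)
The plan is to exploit the near-identity between $\psum(\kb)$ and $\pmax(\kb)$ and reduce the claim to Lemma~\ref{lem:asp-max-dist-1}. The essential observation is that whether an answer set exists at all is decided entirely by the ``generate-and-test'' part of the program — the fact base (ADS1)--(ADS7), the interpretation guess (ADS10), the connective rules (ADS11)--(ADS17), the indexed-formula rule (ADS18), and the integrity constraint (ADS19) — and these rules coincide verbatim with the corresponding rules (ADM1)--(ADM7) and (ADM10)--(ADM19) of $\pmax(\kb)$. The two encodings differ only in the aggregate rule, where (ADS8) uses \texttt{\#sum} in place of the \texttt{\#max} of (ADM8), and in the final minimize statement (ADS22) versus (ADM22). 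Since neither difference can affect the existence of an answer set, the argument of Lemma~\ref{lem:asp-max-dist-1} should transfer essentially unchanged.

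For the forward direction I would argue exactly as in the proof of Lemma~\ref{lem:asp-max-dist-1}: if some $\varFor^\bot \in \kb$ with index $i$ is contradictory, then under the interpretation $\omega_i$ guessed by (ADS10) the evaluation rules (ADS11)--(ADS18) force $\texttt{truthValueInt(}\varFor^\bot, i, \omega_i, f\texttt{)}$ into any candidate answer set, as no assignment of truth values can satisfy $\varFor^\bot$. This is forbidden by the integrity constraint (ADS19) (with the condition \texttt{L == I}), so no answer set can be derived. Conversely, if $\kb$ contains no contradictory formula, each $\varFor_i$ admits a model; choosing such a model for $\omega_i$ and an arbitrary assignment for $\omega_{|\kb|}$ yields truth-value assignments satisfying every instance of (ADS19), whence the \texttt{diff} and \texttt{d} predicates (ADS20)--(ADS21) and the derived \texttt{dSum} atom are all well-defined, giving an answer set.

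The one point that genuinely needs checking — and which I regard as the main (and minor) obstacle — is that replacing \texttt{\#max} by \texttt{\#sum} together with the guard \texttt{X >= 0} in (ADS8) does not inadvertently eliminate answer sets. Here I would note that each distance $\texttt{d(}\omega_i,\omega_{|\kb|},Y\texttt{)}$ is a \texttt{\#count} of differing atoms and hence satisfies $Y \geq 0$; a finite sum of nonnegative integers is again nonnegative, so $\texttt{dSum(}X\texttt{)}$ is always derivable with $X \geq 0$, and the body condition \texttt{X >= 0} never fails. Finally, the minimize statement (ADS22) only selects among already existing answer sets and never removes the last one, so it is irrelevant to the question of existence. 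Combining these observations with Lemma~\ref{lem:asp-max-dist-1} then yields the claim.
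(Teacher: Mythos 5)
Your proposal is correct and takes the same route as the paper, which simply declares the proof ``analogous to the proof of Lemma~\ref{lem:asp-max-dist-1}'' since $\psum(\kb)$ and $\pmax(\kb)$ differ only in the aggregate rule and the minimize statement, neither of which affects answer-set existence. Your additional check that the \texttt{\#sum} aggregate with the guard \texttt{X >= 0} never fails (being a sum of nonnegative \texttt{\#count} values) is a worthwhile detail the paper leaves implicit.
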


% Let $\psum{'}(\kb)$ denote the extended logic program $\psum(\kb)$ \emph{without} the minimize statement (ADS22).
% Further, let $M$ be an answer set of $\psum'$, and let $\theta \in \{t, f\}$.
We define 
$$D^\Sigma_M = \{ \mathrm{diff}_M(\varAt,i) \mid \texttt{truthValueInt(}\varAt, i, \theta\texttt{)} \in M, \texttt{truthValueInt(}\varAt, |\kb|, \theta\texttt{)} \notin M \}.$$

% \begin{lemma}\label{lem:asp-sum-dist-2}
%     Let $M$ be an answer set of $\psum{'}(\kb)$.
%     If $\texttt{d(}\omega_i, \omega_j, D_M \texttt{)} \in M$, then $D_M$ corresponds to the Dalal distance between $\omega_i$ and $\omega_j$ (i.\,e., $D_M = d_{dalal}(\omega_i, \omega_j)$), with $i,j \in \{0\, \ldots, |\kb|\}$.
%     \begin{proof}
%         Analogous to the proof of Lemma \ref{lem:asp-max-dist-2}.
%     \end{proof}
% \end{lemma}

{
\renewcommand{\thetheorem}{\ref{thm:asp-sumdalal}}
\begin{theorem}
    % Let $M_o$ be an optimal answer set of $\psum(\kb)$.
    % Then $D^{\Sigma}_{M_o} = \isdalal(\kb)$ with \texttt{dSum(}$D^{\Sigma}_{M_o}$\texttt{)} $\in M_o$.
    % If no answer set of $\psum(\kb)$ exists, $\isdalal(\kb) = \infty$.
    Let $M_o$ be an optimal answer set of $\psum(\kb)$.
    Then $|D^{\Sigma}_{M_o}| = \isdalal(\kb)$.
    If no answer set of $\psum(\kb)$ exists, $\isdalal(\kb) = \infty$.
    \begin{proof}
        % erst sagen, dass \omega_i Modelle sind (Verweis auf max-distance)
        % dann erklären wir, dass diff_M(X,i) die Dalal-Distanz zwischen zwei Interpretationen repräsentiert
        %   -> siehe Inhalt des Minimize-Statements
        %       -> genau genommen zwischen den Modellen der Formeln (s.o.) und omega_|K|
        %       -> omega_K soll unsere optimale Interpr. sein 
        % Wir zählen pro Interpretation, wie viele Atome unterschiedlich sind
        %   -> also insgesamt Summierung über die Distanzen zwischen je allen omega_i und omega_K
        % -> Minimierung => wir bekommen die minimale Summe 
        % q.e.d.
    
        Lemma \ref{lem:asp-sum-dist-1} shows that $\psum(\kb)$ does not have an answer set if $\kb$ contains at least one contradictory formula, i.\,e., if $\isdalal(\kb) = \infty$. 

        Each formula $\varFor \in \kb$ is connected to one specific interpretation.
        More precisely, interpretation $\omega_i$ is a model of formula $\varFor_i$ (ADS18) (see also the proof for Lemma \ref{lem:asp-max-dist-2}).
        Further, the inner part of the minimize statement (ADS8) represents the calculation and summation of the Dalal distance between $\omega_{|\kb|}$ and each interpretation $\omega_i$ with $i \in \{0, \ldots, |\kb|-1\}$.
        To be precise, each case in which an atom $\varAt$ evaluates to $\theta \in \{t,f\}$ under interpretation $\omega_i$, but $\omega_{|\kb|} \neq \theta$, i.\,e., if $\texttt{truthValue(}\varAt, i, \theta \texttt{)} \in M$, but $\texttt{truthValue(}\varAt, |\kb|, \theta \texttt{)} \notin M$, is counted as $1$.
        Thus, for any answer set $M$, $|D^{\Sigma}_{M}|$ corresponds to the sum of Dalal distances between the models of the formulas $\omega_i$ and $\omega_{|\kb|}$.
        For an optimal answer set $M_o$, yielded by the $\#minimize$ statement in (ADS8), $\omega_{|\kb|}$ becomes the ``optimal'' interpretation, i.\,e., the one with the lowest sum of distances.
        Hence, $|D^{\Sigma}_{M_o}| = \isdalal(\kb)$.
        %
        % Furthermore, Lemma \ref{lem:asp-sum-dist-2} shows that for a pair of interpretations $(\omega_i, \omega_j)$, the Dalal distance $D_M = d_{dalal}(\omega_i, \omega_j)$ corresponds to $\texttt{d(}\omega_1, \omega_2, D_M\texttt{)} \in M$, with $M$ being an answer set of $\pmax{'}(\kb)$.
        % Thus, (ADS8) considers the Dalal distance between every pair $\omega_i, \omega_{|\kb|}$, with $i \in \{0, \ldots, |\kb|\}$.
        % Moreover, according to the integrity constraint (ADS19), each interpretation $\omega_i$ must be a model of the corresponding formula $\varFor_i$.
        % Therefore, (ADS8) represents the sum of Dalal distances (by means of a \texttt{\#sum} aggregate) between interpretation $\omega_{|\kb|}$ (which represents the ``optimal'' interpretation) and the models of each formula $\varFor \in \kb$.
        % We denote this sum of Dalal distances as $D^{\Sigma}_M$.
        % It follows that $\texttt{dSum(}D^{\Sigma}_M\texttt{)} \in M$.
        %
        % Finally, the minimization statement in (ADS22) retrieves the minimal possible value of $D^{\Sigma}_M$.
        % To be precise, let $M_o$ be an optimal answer set of $\psum(\kb)$.
        % Then $\texttt{dSum(}D^{\Sigma}_{M_o}\texttt{)} \in M_o$, and $D^{\Sigma}_{M_o}$ corresponds to the minimal sum of Dalal distances between an ``optimal'' interpretation and the models of the formulas in $\kb$.
        % Hence, $D^{\Sigma}_{M_o} = \isdalal(\kb)$. 
    \end{proof}
\end{theorem} 
\addtocounter{theorem}{-1}
}

\subsubsection{The Hit-Distance Inconsistency Measure}\label{app:proof-asp-hit-dist}

{
\renewcommand\tablename{Encoding}
\begin{small}
\setlength\extrarowheight{2pt}
\begin{longtable}{| >{\columncolor{gray!20}}L{.05\textwidth} >{\columncolor{gray!40}}L{.74\textwidth} >{\columncolor{gray!40}}R{.1\textwidth} |}
     \hline 
        \rowcolor{gray!20} \multicolumn{3}{|l|}{For every $\varFor \in \kb$: } \\
        & \texttt{kbMember(}$\varFor$\texttt{).} & (ADH1) \\
        \rowcolor{gray!20} \multicolumn{3}{|l|}{For every $\varAt \in \atoms(\kb)$: } \\
        & \texttt{atom(}$\varAt$\texttt{).} & (ADH2) \\
        
        \rowcolor{gray!20} \multicolumn{3}{|l|}{For every conjunction $\varSub_c = \varSubSub_{c,1} \land \varSubSub_{c,2}$ appearing in some formula: } \\
            & $\texttt{conjunction(} \varSub_c \texttt{,} \varSubSub_{c,1} \texttt{,} \varSubSub_{c,2} \texttt{).} $ & (ADH3) \\

        \rowcolor{gray!20} \multicolumn{3}{|l|}{For every disjunction $\varSub_d = \varSubSub_{d,1} \lor \varSubSub_{d,2}$ appearing in some formula: } \\
            & $\texttt{disjunction(} \varSub_d \texttt{,} \varSubSub_{d,1} \texttt{,} \varSubSub_{d,2} \texttt{).}$ & (ADH4) \\
        
        \rowcolor{gray!20} \multicolumn{3}{|l|}{For each negation $\varSub_n = \lnot \varSubSub_n$ appearing in some formula: } \\
        & $\texttt{negation(} \varSub_n \texttt{,} \varSubSub_n \texttt{).}$ & (ADH5) \\
        \rowcolor{gray!20} \multicolumn{3}{|l|}{For every formula $\varSub_a$ which consists of a single atom $\varAt$: } \\
        & $\texttt{formulaIsAtom(} \varSub_a \texttt{,} \varAt \texttt{).}$ & (ADH6) \\
        \hline
        
        \rowcolor{gray!20} \multicolumn{3}{|l|}{For the static part, we define:} \\
        
        & $\texttt{tv(} t \texttt{;} f \texttt{).}$ & (ADH7) \\[1ex]
        
        % Unique atom evaluation:
        & \texttt{1\{truthValue(A,T)\,:\,tv(T)\}1\,:-} & \\
            & \qquad \qquad \texttt{atom(A).} & (ADH8) \\[1ex]
            
        % Conjunction rules:
        & \texttt{truthValue(F,}$t$\texttt{) :- } & \\
            & \qquad \qquad \texttt{conjunction(F,G,H),} & \\
            & \qquad \qquad \texttt{truthValue(G,}$t$\texttt{),} & \\
            & \qquad \qquad \texttt{truthValue(H,}$t$\texttt{).} & (ADH9) \\[1ex]
        & \texttt{truthValue(F,}$f$\texttt{) :- } & \\
            & \qquad \qquad \texttt{conjunction(F,\_,\_),} & \\
            & \qquad \qquad \texttt{not truthValue(F,}$t$\texttt{).} & (ADH10) \\[1ex]
            
        % Disjunction rules:
        & \texttt{truthValue(F,}$f$\texttt{) :- } & \\
            & \qquad \qquad \texttt{disjunction(F,G,H),} & \\
            & \qquad \qquad \texttt{truthValue(G,}$f$\texttt{),} & \\
            & \qquad \qquad \texttt{truthValue(H,}$f$\texttt{).} & (ADH11) \\[1ex]
        & \texttt{truthValue(F,}$t$\texttt{) :- } & \\
            & \qquad \qquad \texttt{disjunction(F,\_,\_),} & \\
            & \qquad \qquad \texttt{not truthValue(F,}$f$\texttt{).} & (ADH12) \\[1ex]
            
        % Negation rules:
        & \texttt{truthValue(F,}$t$\texttt{)\,:-} & \\
            & \qquad \qquad \texttt{negation(F,G),} & \\
            & \qquad \qquad \texttt{truthValue(G,}$f$\texttt{).} & (ADH13) \\[1ex]
        & \texttt{truthValue(F,}$f$\texttt{)\,:-} & \\
            & \qquad \qquad \texttt{negation(F,G),} & \\
            & \qquad \qquad \texttt{truthValue(G,}$t$\texttt{).} & (ADH14) \\[1ex]
            
        % if a formula consists of a single atom:
        & \texttt{truthValue(F,T)\,:-} & \\
            & \qquad \qquad \texttt{formulaIsAtom(F,G),} & \\ 
            & \qquad \qquad \texttt{truthValue(G,T),} & \\
            & \qquad \qquad \texttt{tv(T).} & (ADH15) \\[1ex]
            
        % Truth values of KB elements:
        & \texttt{truthValueKbMember(F,T)\,:-} & \\
            & \qquad \qquad \texttt{kbMember(F),} & \\
            & \qquad \qquad \texttt{tv(T),} & \\
            & \qquad \qquad \texttt{truthValue(F,T).} & (ADH16) \\[1ex]

        % minimize statement:
        &  \texttt{\#minimize\{1,F\,:\,truthValueKbMember(F,}$f$\texttt{)\}.} & (ADH17) \\
        \hline
    \caption{Overview of the ASP encoding $\phit(\kb)$ of the hit-distance inconsistency measure $\ihdalal$.
    $\kb$ is the given knowledge base.}
    \label{tab:asp-dhit}
\end{longtable}
\end{small}
}

Let $\kb$ be an arbitrary knowledge base and $\phit(\kb)$ the extended logic program consisting of rules (ADH1)--(ADH17) listed in Encoding \ref{tab:asp-dhit} in Section \ref{sec:asp-algo-dhit}.
Let $\phit{'}(\kb)$ denote the extended logic program $\phit(\kb)$ without the minimize statement (ADH17).

For an answer set $M$ of $\phit{'}(\kb)$ we define:
    $$ K_M = \{ \varFor^\mathrm{asp} \mid \varFor \in \kb, \texttt{truthValueKbMember(}\varFor^\mathrm{asp},f \texttt{)} \in M \} $$
with $\varFor^\mathrm{asp}$ being an ASP representation of $\varFor$.

\begin{lemma}\label{lem:asp-hit-distance}
    Each $\varFor^\mathrm{asp} \in K_M$ corresponds to a formula $\varFor \in \kb$ being evaluated to $f$.
    \begin{proof}
        The knowledge base, i.\,e., its formulas, subformulas, and atoms are modeled by (ADH1)--(ADH6). 
        (This corresponds exactly to (AC1)--(AC6) wrt.\ $\icont$.)
        The two truth values are represented by (ADH7).
        Analogously to (AF10) ($\iforget$), (ADH8) guesses an interpretation by assigning each atom a distinct truth value.
        We denote this interpretation as follows:
        \begin{align*}
            \omega_M(\varAt) = \begin{cases} 
            t & \quad \texttt{truthValue(}\varAt \texttt{,} t \texttt{)} \in M \\
            f & \quad \texttt{truthValue(}\varAt \texttt{,} f \texttt{)} \in M
            \end{cases}
        \end{align*}
        (ADH9)--(ADH14) model the evaluation of conjunctions, disjunctions, and negations wrt.\ $\omega_M$ (this corresponds exactly to (AF11)--(AF16) wrt.\ $\iforget$). 
        In addition, (ADH15) models that a formula consisting of an individual atom gets the same truth value as that atom.
        Moreover, (ADH16) essentially extracts the truth values of the formulas $\varFor \in \kb$.
        More precisely, if the atom valuations determined by (ADH8) eventually lead to a formula being evaluated to a truth value $\theta \in \{t,f\}$---i.\,e., if $\texttt{truthValue(}\varFor^\mathrm{asp}, \theta \texttt{)} \in M$ and $\varFor$ is an element of $\kb$ (i.\,e., $\texttt{kbMember(}\varFor^\mathrm{asp} \texttt{)} \in M $), then $\texttt{truthValueKbMember(}\varFor^\mathrm{asp}, \theta \texttt{)} \in M $.
        Hence, each $\varFor^\mathrm{asp} \in K_M$ corresponds to a formula $\varFor \in \kb$ with $\omega_M(\varFor) = f$.
    \end{proof}
\end{lemma}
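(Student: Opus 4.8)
The plan is to prove Lemma \ref{lem:asp-hit-distance} by tracing how the rules (ADH1)--(ADH16) force a well-defined propositional interpretation $\omega_M$ and then show that the predicate \texttt{truthValueKbMember/2} faithfully records the evaluation of each knowledge base formula under $\omega_M$. Since the goal is precisely a characterization of the set $K_M$, the argument is a structural-induction correctness proof for the connective-encoding rules, culminating in the observation that membership in $K_M$ is equivalent to a formula of $\kb$ evaluating to $f$ under $\omega_M$.

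First I would establish that $\omega_M$ is well-defined. The rule (ADH8) is a cardinality constraint with both bounds equal to $1$, so any answer set $M$ contains exactly one of $\texttt{truthValue(}\varAt\texttt{,}t\texttt{)}$ or $\texttt{truthValue(}\varAt\texttt{,}f\texttt{)}$ for each $\varAt \in \atoms(\kb)$ (via (ADH2)). This justifies the case split in the definition of $\omega_M$ given in the lemma statement. Next I would carry out a structural induction over all (sub-)formulas $\varSub \in \mathsf{sub}(\kb)$ to show that $\omega_M(\varSub) = \theta$ if and only if $\texttt{truthValue(}\varSub\texttt{,}\theta\texttt{)} \in M$, for $\theta \in \{t,f\}$. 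The base case ($\varSub$ is an atom) follows from the definition of $\omega_M$ together with (ADH15) (\texttt{formulaIsAtom/2}). The inductive cases for $\land$, $\lor$, and $\lnot$ are handled by (ADH9)--(ADH14); here I would point out that these rules are structurally identical to (AF11)--(AF16), whose correctness was already argued in the proof of Lemma \ref{lem:asp-fb-1}, so I can largely appeal to that established reasoning rather than redoing every connective. The key point to verify for each binary connective is that the default-negation rules (e.g. (ADH10), (ADH12)) correctly model the ``otherwise'' truth value, which relies on the fact that exactly one truth value is derivable for each subformula—a property guaranteed inductively.

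With the induction in place, I would then analyze (ADH16): it derives $\texttt{truthValueKbMember(}\varFor^\mathrm{asp}\texttt{,}\theta\texttt{)}$ exactly when $\varFor$ is a knowledge base member ($\texttt{kbMember(}\varFor^\mathrm{asp}\texttt{)} \in M$, from (ADH1)) and $\texttt{truthValue(}\varFor^\mathrm{asp}\texttt{,}\theta\texttt{)} \in M$. Specializing to $\theta = f$ and combining with the structural induction, this yields $\texttt{truthValueKbMember(}\varFor^\mathrm{asp}\texttt{,}f\texttt{)} \in M$ if and only if $\varFor \in \kb$ and $\omega_M(\varFor) = f$. By the definition of $K_M$, this is precisely the claim that each $\varFor^\mathrm{asp} \in K_M$ corresponds to a formula $\varFor \in \kb$ evaluated to $f$ under $\omega_M$, completing the proof.

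The main obstacle I anticipate is rigorously justifying the ``only if'' direction of the inductive step for default-negation rules such as (ADH10) and (ADH12). Because these rules use \texttt{not truthValue(F,}$t$\texttt{)} to conclude falsity, one must confirm that the minimality of answer sets (via the reduct semantics defined in Section \ref{sec:asp-preliminaries}) prevents spurious derivation of both truth values and guarantees that falsity is derived exactly when truth is not—this is where the uniqueness of the derived truth value per subformula is essential and must be threaded carefully through the induction. I expect to manage this by invoking the already-established analogous argument for (AF11)--(AF16), treating the present rules as a notational variant and emphasizing that the absence of an interpretation index (as opposed to the $\ihs$, $\imdalal$, $\isdalal$ encodings) only simplifies matters.
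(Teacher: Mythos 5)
Your proposal is correct and follows essentially the same route as the paper's proof: define $\omega_M$ via (ADH8), argue that (ADH9)--(ADH15) correctly propagate truth values by appeal to the already-established reasoning for (AF11)--(AF16), and then read off membership in $K_M$ from (ADH16). You merely make explicit (via structural induction and the reduct-semantics justification of the default-negation rules) what the paper compresses into the remark that (ADH9)--(ADH14) ``correspond exactly'' to the $\iforget$ rules, which is a welcome tightening rather than a different argument.
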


{
\renewcommand{\thetheorem}{\ref{thm:asp-hitdalal}}
\begin{theorem}
    Let $M_o$ be an optimal answer set of $\phit(\kb)$.
    Then $|K_{M_o}| = \ihdalal(\kb)$.
    \begin{proof}
        From Lemma \ref{lem:asp-hit-distance} we know that for an answer set $M$ of $\phit{'}(\kb)$, each $\varFor^\mathrm{asp} \in K_M$ corresponds to a formula $\varFor \in \kb$ being evaluated to false (i.\,e., with $\omega_M(\varFor) = f$).
        In $\phit(\kb)$, the minimization statement (ADH17) ensures that the resulting optimal answer set $M_o$ contains a minimal number of instances of $\texttt{truthvalueKbMember(}\varFor^\mathrm{asp}, f \texttt{)}$ (with $\varFor \in \kb$), i.\,e., the cardinality of $|K_{M_o}|$ gets minimized. 
        Hence, we are computing the minimal number of formulas in $\kb$ which evaluate to $f$ (and which would need to be removed in order to render $\kb$ consistent), which meets the definition of $\ihdalal$.
    \end{proof}
\end{theorem}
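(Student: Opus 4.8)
The plan is to prove Theorem~\ref{thm:asp-hitdalal} by leveraging the simpler characterization of $\ihdalal$ given in Section~\ref{sec:inc-measurement}, namely that $\ihdalal(\kb)$ equals the minimal number of formulas that must be removed from $\kb$ to restore consistency. Since the minimize statement (ADH17) directly targets the number of formulas evaluated to $f$, the argument should reduce to showing that (i) the answer sets of $\phit{'}(\kb)$ (the program without the optimization) correspond exactly to the classical interpretations of $\atoms(\kb)$, and (ii) under such an interpretation $\omega_M$, the set $K_M$ captures precisely the formulas of $\kb$ falsified by $\omega_M$. Step (ii) is already isolated as Lemma~\ref{lem:asp-hit-distance}, so the main work is to combine it with the semantics of the minimize statement.

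First I would invoke Lemma~\ref{lem:asp-hit-distance}, which establishes that for any answer set $M$ of $\phit{'}(\kb)$, each element $\varFor^\mathrm{asp} \in K_M$ corresponds to a formula $\varFor \in \kb$ with $\omega_M(\varFor) = f$, where $\omega_M$ is the classical interpretation guessed by rule (ADH8). The key structural facts I would rely on from that lemma are that (ADH8) forces each atom to receive exactly one truth value (so every answer set induces a well-defined two-valued interpretation), that (ADH9)--(ADH15) faithfully encode classical propositional entailment for the connectives, and that (ADH16) extracts exactly the truth values of the top-level formulas of $\kb$. Crucially, because $\phit{'}(\kb)$ imposes \emph{no} integrity constraint forbidding a formula from being false, every interpretation $\omega$ of $\atoms(\kb)$ arises as $\omega_M$ for some answer set $M$, and conversely. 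This gives a bijection between the classical interpretations over $\atoms(\kb)$ and the answer sets of $\phit{'}(\kb)$, under which $|K_M|$ equals the number of formulas of $\kb$ falsified by the associated interpretation.

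Next I would bring in the minimize statement (ADH17), $\texttt{\#minimize\{1,F : truthValueKbMember(F,}f\texttt{)\}}$, which instructs the solver to select an answer set minimizing the number of instances of $\texttt{truthValueKbMember(}\varFor^\mathrm{asp},f\texttt{)}$, i.e.\ minimizing $|K_M|$. Combining this with the bijection from the previous step, an optimal answer set $M_o$ corresponds to an interpretation $\omega_{M_o}$ that falsifies the fewest formulas of $\kb$ among all interpretations; equivalently, $\omega_{M_o}$ satisfies the maximal number of formulas. By definition, $|K_{M_o}| = |\{\varFor \in \kb \mid \omega_{M_o}(\varFor) = f\}|$ is the minimum, over all interpretations $\omega$, of the number of formulas of $\kb$ not satisfied by $\omega$. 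Since removing exactly the falsified formulas yields a consistent subset, and no smaller removal can suffice (otherwise a better interpretation would exist, contradicting optimality), $|K_{M_o}|$ is precisely the minimal number of formulas whose removal restores consistency, which is $\ihdalal(\kb)$ by the characterization from \citet{grant2017}.

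I expect the main obstacle to be making the two-way correspondence between interpretations and answer sets fully rigorous, rather than any deep combinatorial difficulty. In particular, one must verify that the negation-as-failure rules (ADH10), (ADH12), (ADH14) do not introduce spurious or multiple answer sets for a fixed atom assignment; that is, for each choice of atom truth values guessed by (ADH8), the remaining rules are stratified and deterministically propagate a unique truth value to every subformula, so that each interpretation yields exactly one answer set of $\phit{'}(\kb)$. Establishing this determinacy and the absence of unintended answer sets is the delicate point; once it is in hand, the optimization argument is immediate. I would handle this by noting that rules (ADH9)--(ADH16) form a stratified program over the fixed atom guess and appealing to the standard result that stratified programs have a unique answer set relative to their input facts, mirroring the reasoning already used for $\iforget$ in Lemma~\ref{lem:asp-fb-1}.
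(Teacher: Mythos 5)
Your proposal follows essentially the same route as the paper's proof: invoke Lemma~\ref{lem:asp-hit-distance} to identify $K_M$ with the formulas of $\kb$ falsified by the guessed interpretation $\omega_M$, then apply the minimize statement (ADH17) and the removal-based characterization of $\ihdalal$ to conclude. Your additional care about the bijection between classical interpretations and answer sets of $\phit{'}(\kb)$ and the determinacy of the negation-as-failure propagation is sound and slightly more explicit than the paper, which leaves these points implicit, but the argument is the same.
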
 
\addtocounter{theorem}{-1}
}

\section{Additional Data} \label{app:data}

This section includes additional data and visualizations regarding the experimental evaluation (Section \ref{sec:evaluation}).

% \subsection{Cumulative Runtimes} % and Exact Numbers of Timeouts

\subsection{Runtime Measurements}\label{app:cactus}

% Figure \ref{fig:overall-runtime-all-RAN} 
The following figures display cactus plots\footnote{A cactus plot is comprised of time measurements ordered from low to high.
Hence, we measure the runtime of an algorithm wrt.\ each knowledge base of a given data set, and subsequently order the measurements from low to high and plot them.
If an inconsistency value wrt.\ a given knowledge base could not be computed within the given time limit (i.e., if it timed out) or if a memory error occurred, the corresponding time measurement is not plotted.}, each corresponding to a comparison between the three approaches wrt.\ a specific inconsistency measure and data set.

% SRS
\begin{figure}
    \begin{subfigure}[t]{.5\textwidth}
        \includegraphics[width=\textwidth]{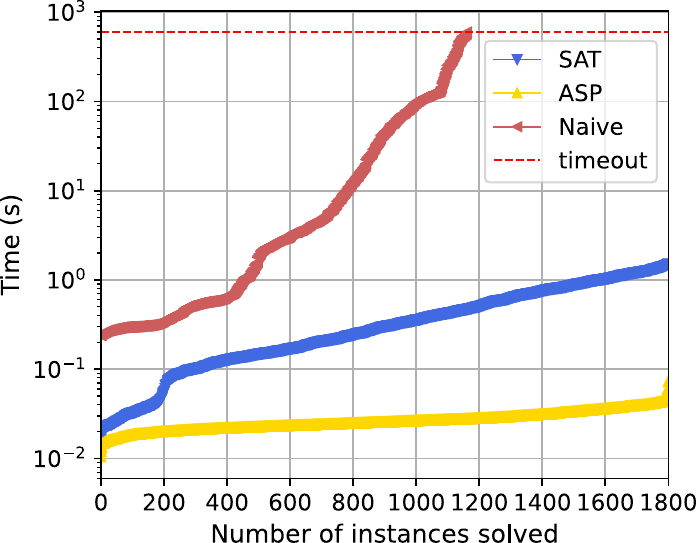}
        \caption{Contension inconsistency measure ($\icont$)}
    \end{subfigure}%
    \begin{subfigure}[t]{.5\textwidth}
        \includegraphics[width=\textwidth]{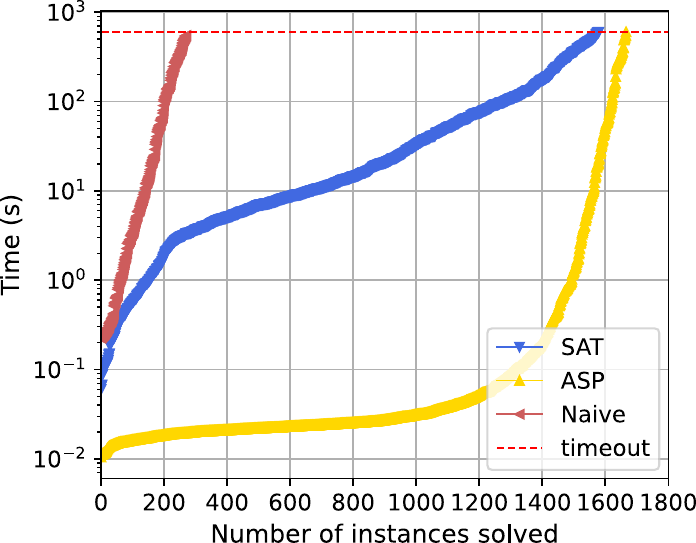}
        \caption{Forgetting-based inconsistency measure ($\iforget$)}
    \end{subfigure}\\[1ex]
    
    \begin{subfigure}[t]{.5\textwidth}
        \includegraphics[width=\textwidth]{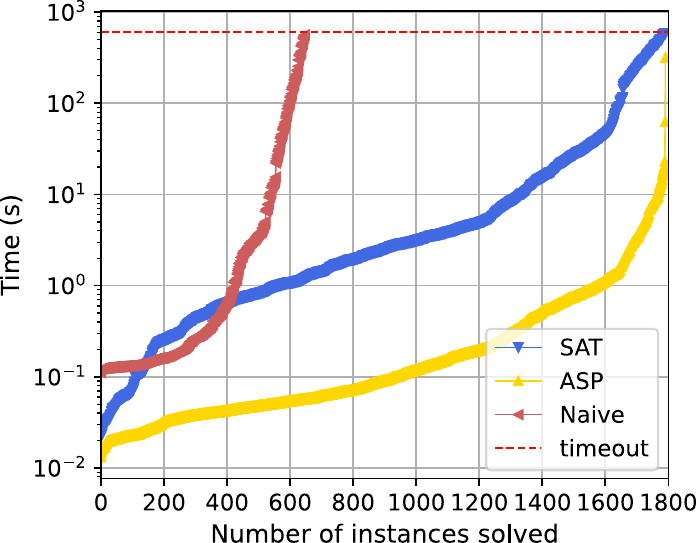}
        \caption{Hitting Set inconsistency measure ($\ihs$)}
    \end{subfigure}%
    \begin{subfigure}[t]{.5\textwidth}
        \includegraphics[width=\textwidth]{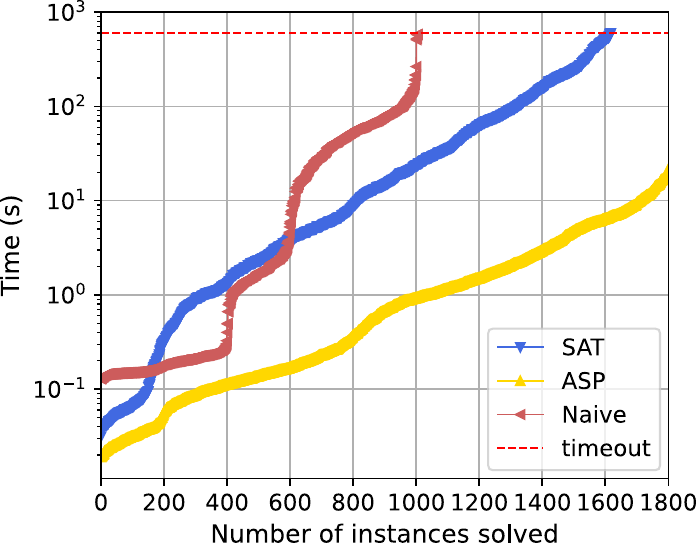}
        \caption{Max-distance inconsistency measure ($\imdalal$)}
    \end{subfigure}\\[1ex]
    
    \begin{subfigure}[t]{.5\textwidth}
        \includegraphics[width=\textwidth]{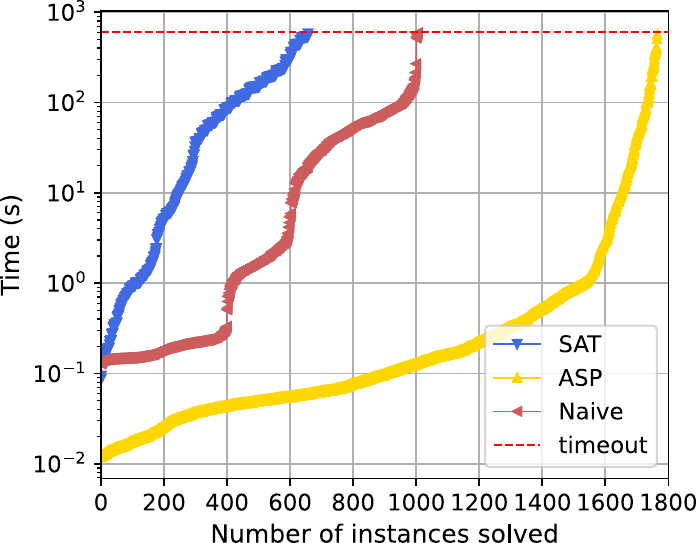}
        \caption{Sum-distance inconsistency measure ($\isdalal$)}
        \label{fig:overall-runtime-all-RAN-sum-dalal}
    \end{subfigure}%
    \begin{subfigure}[t]{.5\textwidth}
        \includegraphics[width=\textwidth]{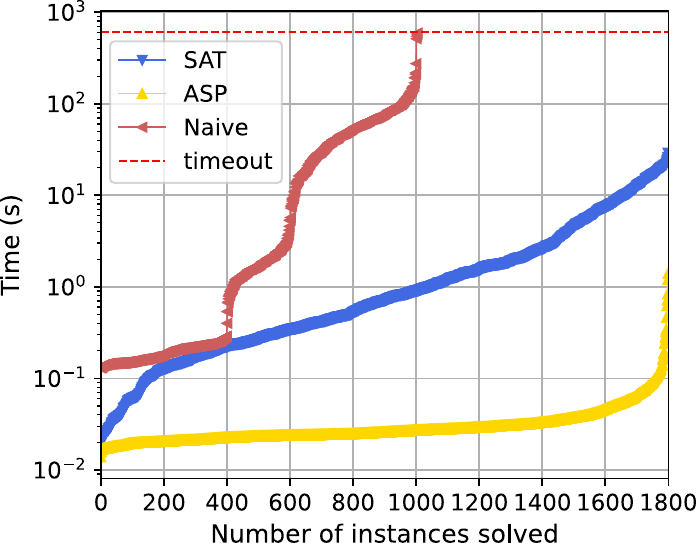}
        \caption{Hit-distance inconsistency measure ($\ihdalal$)}
    \end{subfigure}
    \caption{Runtime comparison of the ASP-based, SAT-based, and naive approaches on the SRS data set. Timeout: $10$ minutes.}
    \label{fig:overall-runtime-all-RAN}
\end{figure}

% ML
\begin{figure}
    \begin{subfigure}[t]{.5\textwidth}
        \includegraphics[width=.98\textwidth]{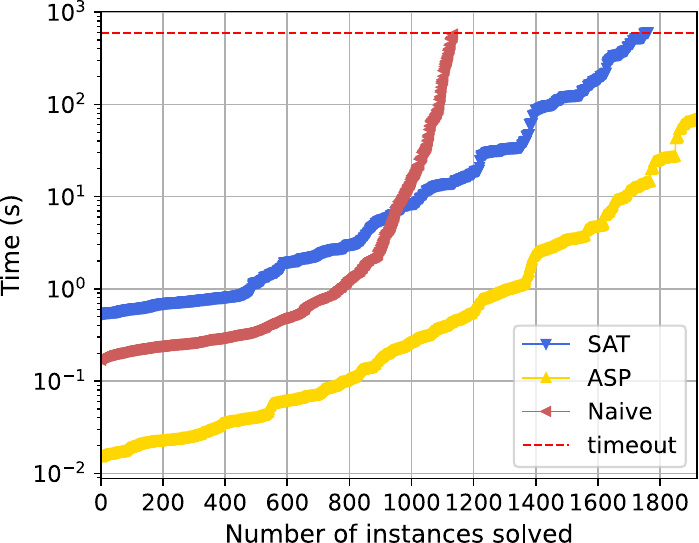}
        \caption{Contension inconsistency measure ($\icont$)}
    \end{subfigure}%
    \hfill%
    \begin{subfigure}[t]{.5\textwidth}
        \includegraphics[width=.98\textwidth]{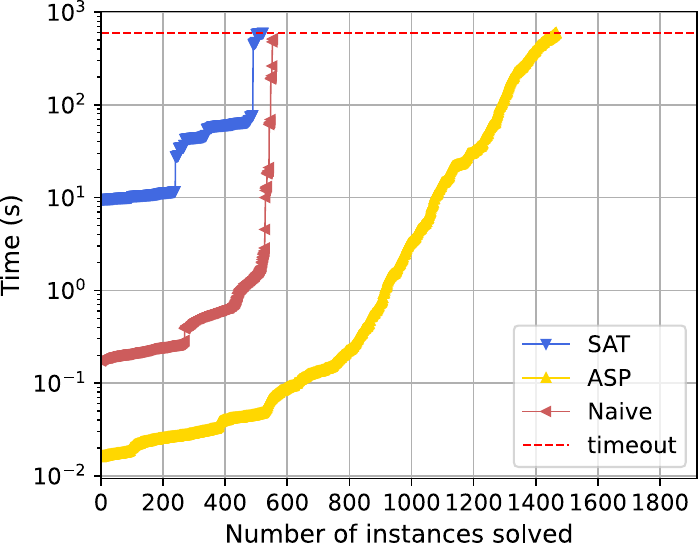}
        \caption{Forgetting-based inconsistency measure ($\iforget$)}
    \end{subfigure}\\[1ex]
    
    \begin{subfigure}[t]{.5\textwidth}
        \includegraphics[width=.98\textwidth]{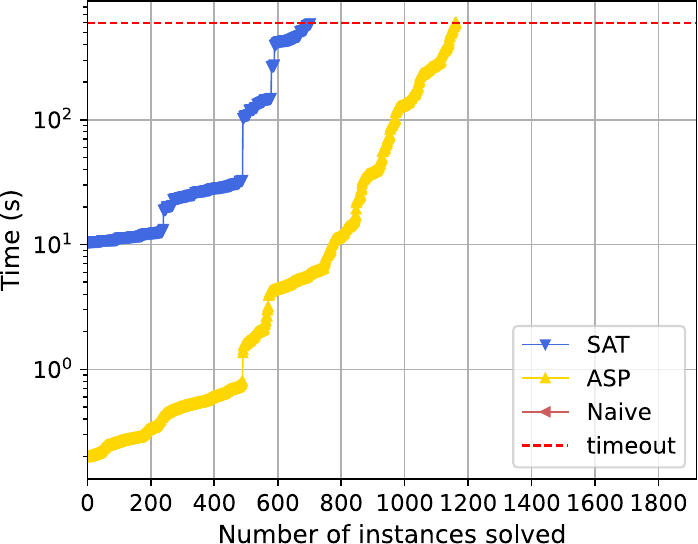}
        \caption{Hitting Set inconsistency measure ($\ihs$)}
    \end{subfigure}%
    \begin{subfigure}[t]{.5\textwidth}
        \includegraphics[width=.98\textwidth]{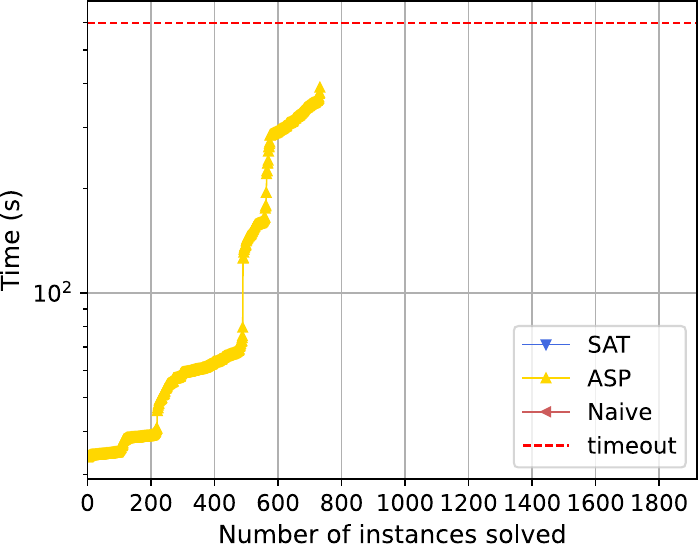}
        \caption{Max-distance inconsistency measure ($\imdalal$)}
        \label{fig:overall-runtime-all-ML-maxdalal}
    \end{subfigure}\\[1ex]
    
    \begin{subfigure}[t]{.5\textwidth}
        \includegraphics[width=.98\textwidth]{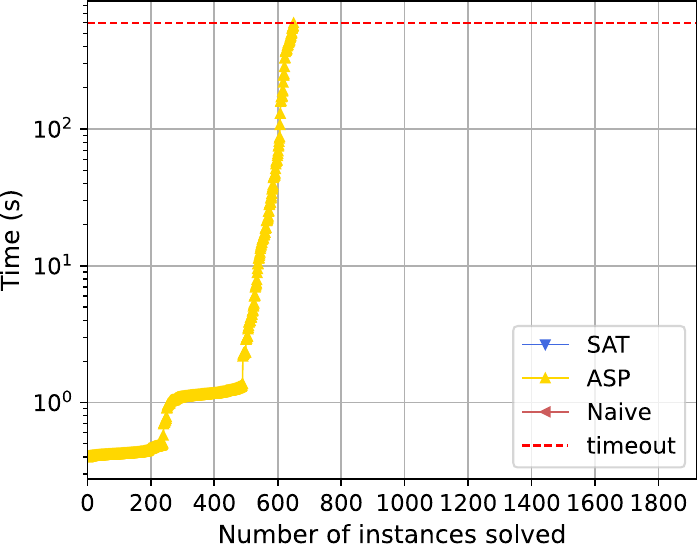}
        \caption{Sum-distance inconsistency measure ($\isdalal$)}
        \label{fig:overall-runtime-all-ML-sumdalal}
    \end{subfigure}%
    \begin{subfigure}[t]{.5\textwidth}
        \includegraphics[width=.98\textwidth]{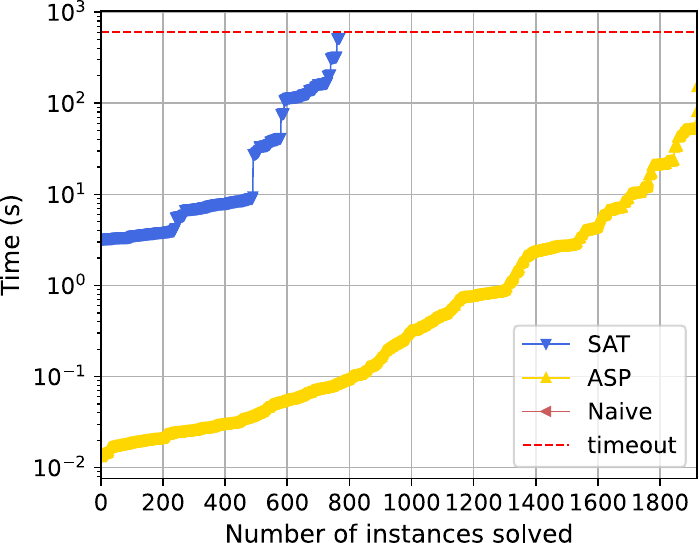}
        \caption{Hit-distance inconsistency measure ($\ihdalal$)}
        \label{fig:overall-runtime-all-ML-hitdalal}
    \end{subfigure}
    \caption{Runtime comparison of the ASP-based, SAT-based, and naive approaches on the ML data set. Timeout: $10$ minutes.}
    \label{fig:overall-runtime-all-ML}
\end{figure}

% ARG
\begin{figure}
    \begin{subfigure}[t]{.5\textwidth}
        \includegraphics[width=.98\textwidth]{img/cactus/cactus_ARG_contension.pdf}
        \caption{Contension inconsistency measure ($\icont$)}
    \end{subfigure}%
    \hfill%
    \begin{subfigure}[t]{.5\textwidth}
        \includegraphics[width=.98\textwidth]{img/cactus/cactus_ARG_fb.pdf}
        \caption{Forgetting-based inconsistency measure ($\iforget$)}
    \end{subfigure}\\[1ex]
    
    \begin{subfigure}[t]{.5\textwidth}
        \includegraphics[width=.98\textwidth]{img/cactus/cactus_ARG_hs.pdf}
        \caption{Hitting Set inconsistency measure ($\ihs$)}
    \end{subfigure}%
    \hfill%
    \begin{subfigure}[t]{.5\textwidth}
        \includegraphics[width=.98\textwidth]{img/cactus/cactus_ARG_max-dalal.pdf}
        \caption{Max-distance inconsistency measure ($\imdalal$)}
    \end{subfigure}\\[1ex]
    
    \begin{subfigure}[t]{.5\textwidth}
        \includegraphics[width=.98\textwidth]{img/cactus/cactus_ARG_sum-dalal.pdf}
        \caption{Sum-distance inconsistency measure ($\isdalal$)}
    \end{subfigure}%
    \hfill%
    \begin{subfigure}[t]{.5\textwidth}
        \includegraphics[width=.98\textwidth]{img/cactus/cactus_ARG_hit-dalal.pdf}
        \caption{Hit-distance inconsistency measure ($\ihdalal$)}
    \end{subfigure}
    \caption{Runtime comparison of the ASP-based, SAT-based, and naive approaches on the ARG data set. Timeout: $10$ minutes.}
    \label{fig:overall-runtime-all-ARG}
\end{figure}

% SC
\begin{figure}[t]
    \centering
    \includegraphics[width=.55\textwidth]{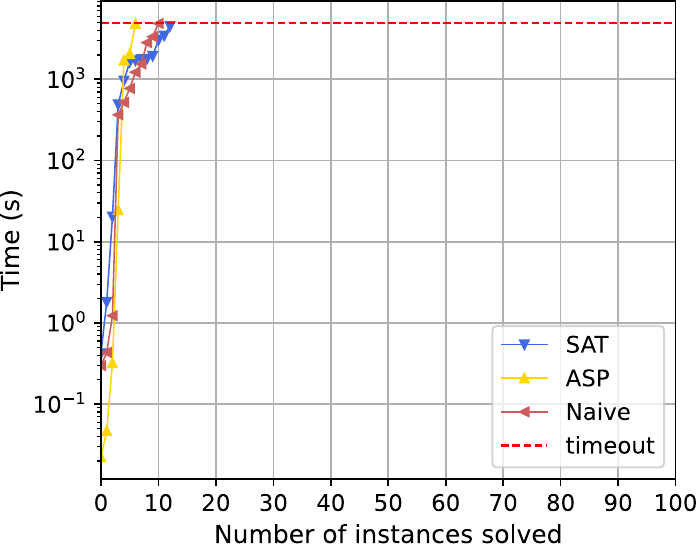}
    \caption{Runtime comparison of the ASP-based, SAT-based, and naive approaches for the contension inconsistency measure on the SC data set. Timeout: $5000$ seconds ($83.\overline{3}$ minutes).}
    \label{fig:overall-runtime-SAT}
\end{figure}

% LP
\begin{figure}
    \begin{subfigure}[t]{.5\textwidth}
        \includegraphics[width=\textwidth]{img/cactus/cactus_ASP_contension.pdf}
        \caption{Contension inconsistency measure ($\icont$)}
    \end{subfigure}%
    \begin{subfigure}[t]{.5\textwidth}
        \includegraphics[width=\textwidth]{img/cactus/cactus_ASP_fb.pdf}
        \caption{Forgetting-based inconsistency measure ($\iforget$)}
    \end{subfigure}\\[1ex]
    
    \centering
    \begin{subfigure}{.5\textwidth}
        \includegraphics[width=\textwidth]{img/cactus/cactus_ASP_hit-dalal.pdf}
        \caption{Hit-distance inconsistency measure ($\ihdalal$)}
    \end{subfigure}
    \caption{Runtime comparison of the ASP-based, SAT-based, and naive approaches on the LP data set for the contension and forgetting-based inconsistency measures. Timeout: $10$ minutes.}
    \label{fig:overall-runtime-all-ASP}
\end{figure}

% linear vs. binary search
\begin{figure}
    \begin{subfigure}[t]{.5\textwidth}
        \includegraphics[width=\textwidth]{img/linear_vs_binary/cactus_RAN_contension_binary-vs-linear.pdf}
        \caption{Contension inconsistency measure ($\icont$)}
    \end{subfigure}%
    \begin{subfigure}[t]{.5\textwidth}
        \includegraphics[width=\textwidth]{img/linear_vs_binary/cactus_RAN_maxdalal_binary-vs-linear.pdf}
        \caption{Max-distance inconsistency measure ($\imdalal$)}
    \end{subfigure}\\[1ex]
    
    \centering
    \begin{subfigure}[t]{.5\textwidth}
        \includegraphics[width=\textwidth]{img/linear_vs_binary/cactus_RAN_sumdalal_binary-vs-linear.pdf}
        \caption{Sum-distance inconsistency measure ($\isdalal$)}
    \end{subfigure}
    \caption{Runtime comparison of the SAT-based approaches based on linear search (for $\icont$, $\imdalal$, and $\isdalal$) and the corresponding binary search versions, as well as the ASP-based versions, on the SRS data set. Timeout: $10$ minutes.}
    \label{fig:cactus-binary-vs-linear-search}
\end{figure}

% ASP: usc vs. bb
% \begin{figure}
%     \centering
%     \includegraphics[width=.5\textwidth]{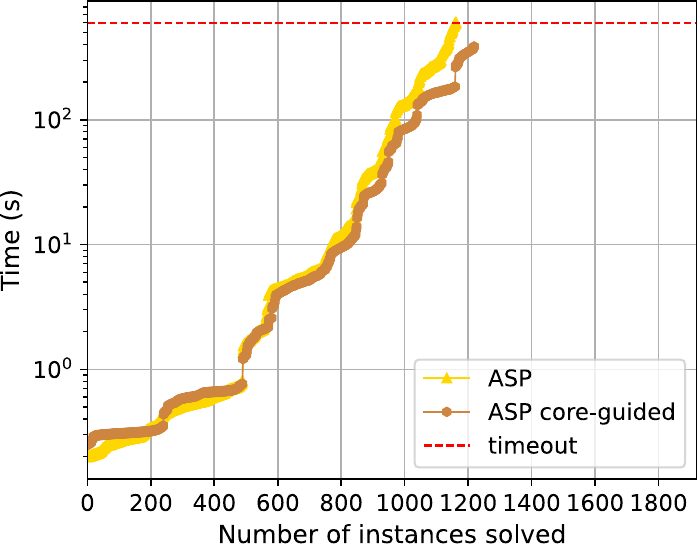}
%     \caption{Runtime comparison of the ASP approach for $\ihs$ with core-guided vs.\ with branch-and-bound optimization (default) on the ML data set. Timeout: 10 minutes.}
%     \label{fig:asp-usc-cactus}
% \end{figure}
\begin{figure}
    \centering
    \begin{subfigure}[t]{.49\textwidth}
        \includegraphics[width=\textwidth]{img/asp_usc/cactus_ASP_usc-bb.pdf}
        \caption{ML / $\ihs$}
    \end{subfigure}\\[1ex]
    \begin{subfigure}[t]{.49\textwidth}
        \includegraphics[width=\textwidth]{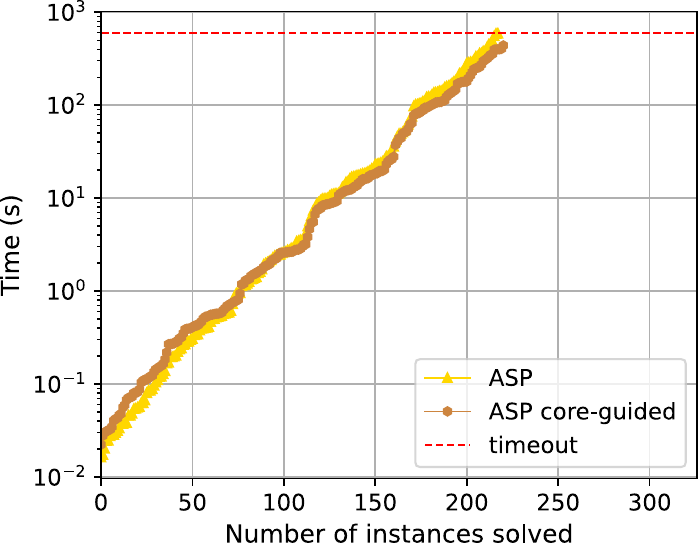}
        \caption{ARG / $\ihs$}
    \end{subfigure}%
    \hfill%
    \begin{subfigure}[t]{.49\textwidth}
        \includegraphics[width=\textwidth]{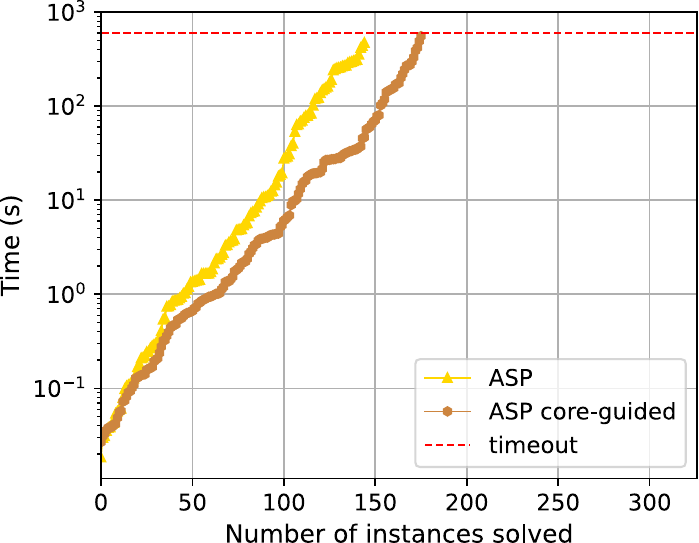}
        \caption{ARG / $\imdalal$}
    \end{subfigure}
    \caption{Runtime comparison of the ASP approach with core-guided vs.\ with branch-and-bound optimization (default) on % selected combinations of data sets and inconsistency measures. 
    the ML data set wrt.\ the hitting set measure ($\ihs$) and on the ARG data set wrt.\ $\ihs$ and the max-distance measure ($\imdalal$).
    Timeout: 10 minutes.}
    \label{fig:asp-usc-cactus}
\end{figure}

% MaxSAT
\begin{figure}
    \centering
    \includegraphics[width=.5\textwidth]{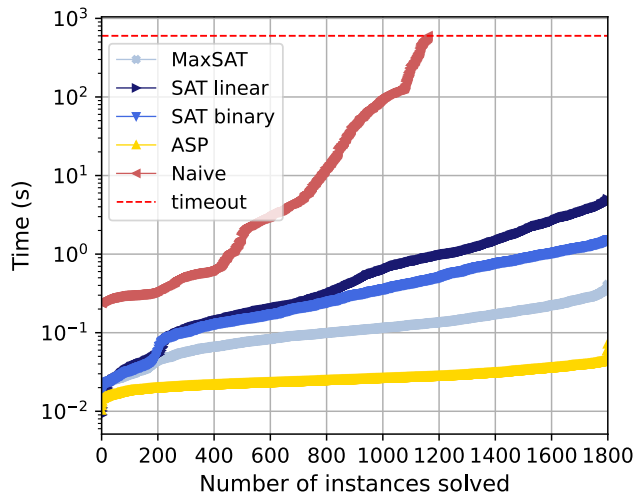}
    \caption{Runtime comparison of the MaxSAT approach for $\icont$ and the previously introduced approaches on the SRS data set. Timeout: 10 minutes.}
    \label{fig:maxsat-cactus}
\end{figure}

% ASP comparison (contension)
\begin{figure}
    \centering
    \includegraphics[width=.5\textwidth]{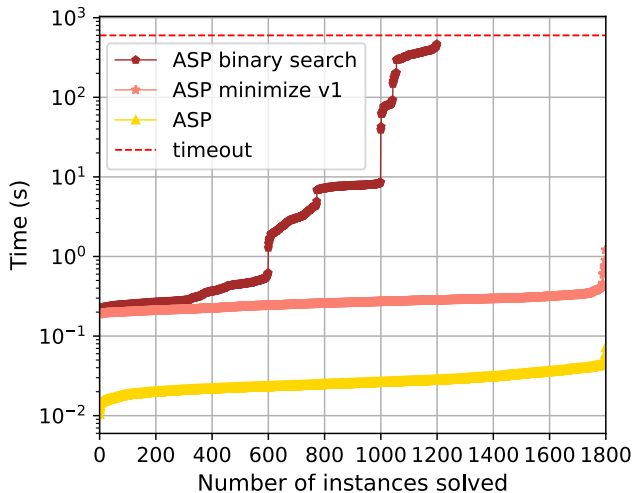}
    \caption{Runtime comparison of the different versions of the ASP approach for $\icont$ on the SRS data set. ``ASP binary search'' refers to the version from \cite{kuhlmann2020algorithm}, ``ASP minimize v1'' to the version from \cite{kuhlmann2021algorithms}, and ``ASP'' to the new one. Timeout: 10 minutes.}
    \label{fig:asp-cactus}
\end{figure}

% \newpage
\clearpage

\subsection{Inconsistency values}\label{app:inc-values}

In this section, we present histograms over the inconsistency values regarding all inconsistency measures and all data sets considered in this work. 
Note that we could only include inconsistency values wrt.\ instances for which at least one approach did not time out.
To be precise, the histograms for the SRS data set are included in Figure \ref{fig:histos-RAN}, those for the ML data set in Figure \ref{fig:histos-ML}, those for the ARG data set in Figure \ref{fig:histos-ARG}, those for the SC data set in Figure \ref{fig:histo-SAT}, and those for the LP data set in Figure \ref{fig:histo-ASP}.  

Furthermore, the following plots only include non-$\infty$ values. 
With regard to those measures that allow for the value $\infty$ (i.e., $\ihs$, $\imdalal$, and $\isdalal$), the number of instances resulting in that value are mentioned in the corresponding captions.

% SRS
\begin{figure}
    \begin{subfigure}[t]{.485\textwidth}
        \includegraphics[width=.98\textwidth]{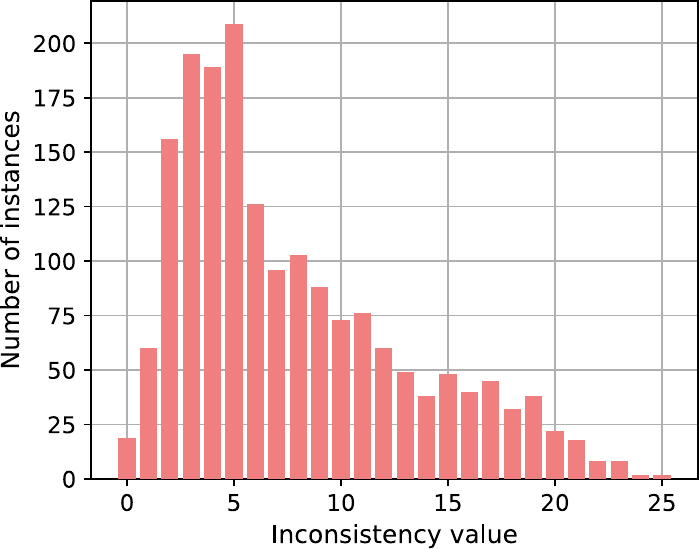}
        \caption{Contension inconsistency measure ($\icont$)}
    \end{subfigure}%
    \hfill%
    \begin{subfigure}[t]{.485\textwidth}
        \includegraphics[width=.98\textwidth]{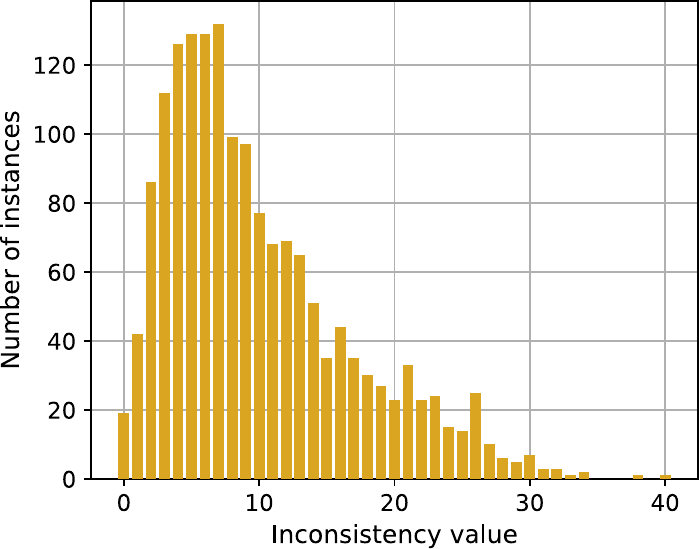}
        \caption{Forgetting-based inconsistency measure ($\iforget$)}
    \end{subfigure}\\[1ex]
    
    \begin{subfigure}[t]{.485\textwidth}
        \includegraphics[width=.98\textwidth]{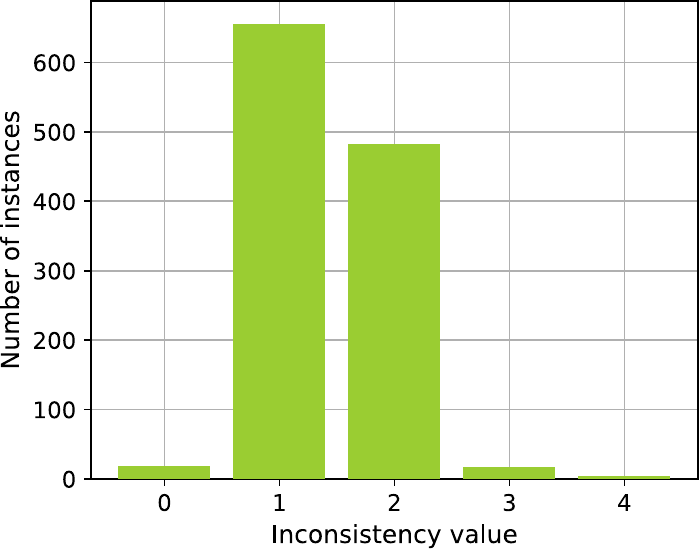}
        \caption{\raggedright Hitting Set inconsistency measure ($\ihs$); number of $\infty$ cases: $621$}
    \end{subfigure}%
    \hfill%
    \begin{subfigure}[t]{.485\textwidth}
        \includegraphics[width=.98\textwidth]{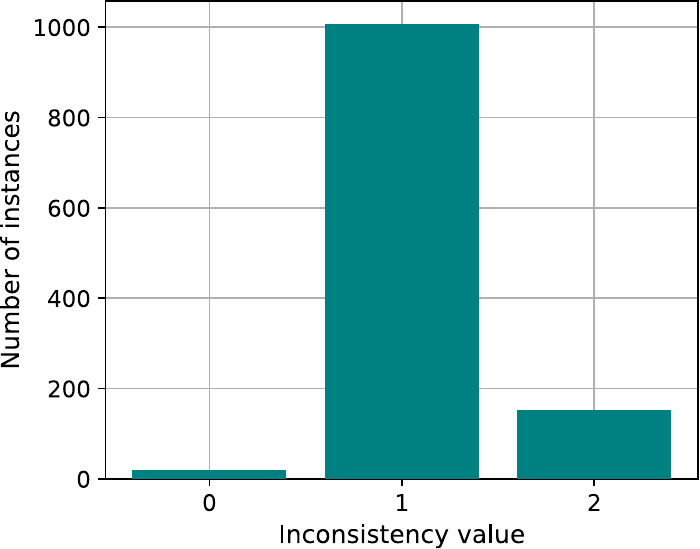}
        \caption{Max-distance inconsistency measure ($\imdalal$); number of $\infty$ cases: $621$}
    \end{subfigure}\\[1ex]
    
    \begin{subfigure}[t]{.485\textwidth}
        \includegraphics[width=.98\textwidth]{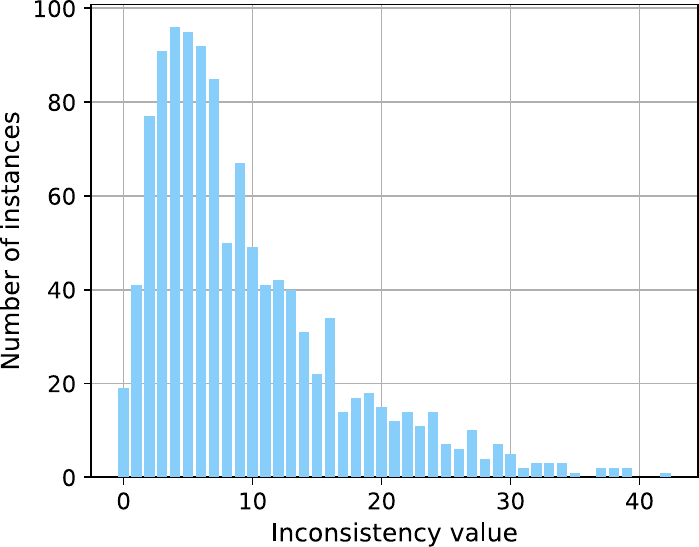}
        \caption{Sum-distance inconsistency measure ($\isdalal$); number of $\infty$ cases: $621$}
    \end{subfigure}%
    \hfill%
    \begin{subfigure}[t]{.485\textwidth}
        \includegraphics[width=.98\textwidth]{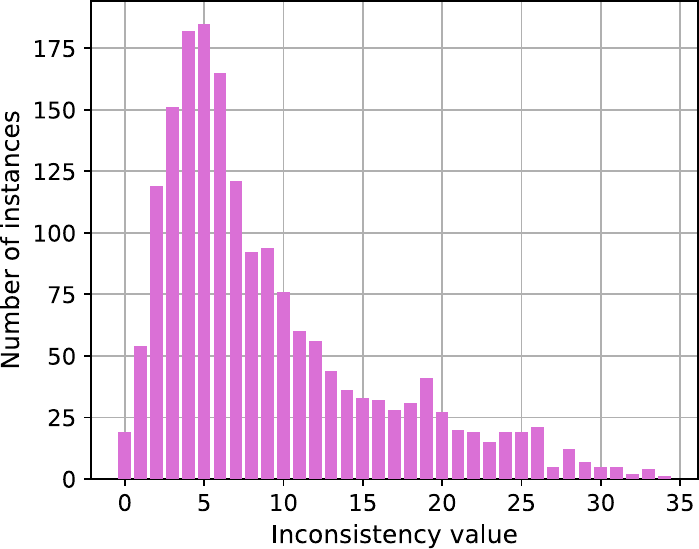}
        \caption{Hit-distance inconsistency measure ($\ihdalal$)}
    \end{subfigure}
    \caption{Histograms of the inconsistency values of the SRS data set wrt.\ all six inconsistency measures.}
    \label{fig:histos-RAN}
\end{figure}

% ML
\begin{figure}
    \begin{subfigure}[t]{.485\textwidth}
        \includegraphics[width=.98\textwidth]{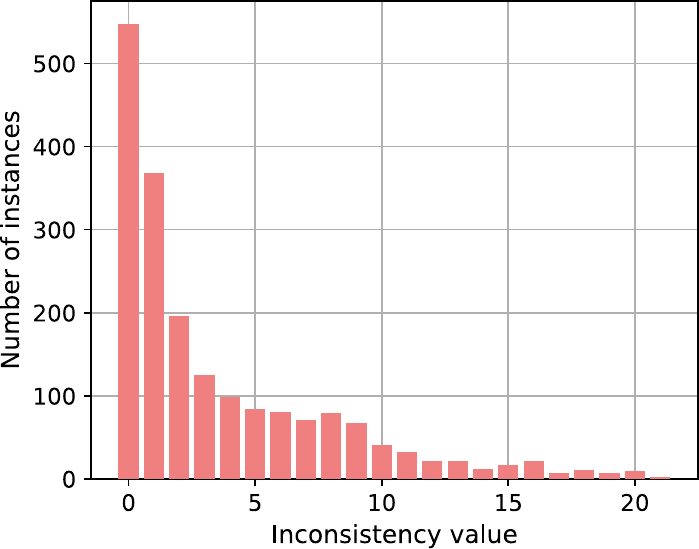}
        \caption{Contension inconsistency measure ($\icont$)}
    \end{subfigure}%
    \hfill%
    \begin{subfigure}[t]{.485\textwidth}
        \includegraphics[width=.98\textwidth]{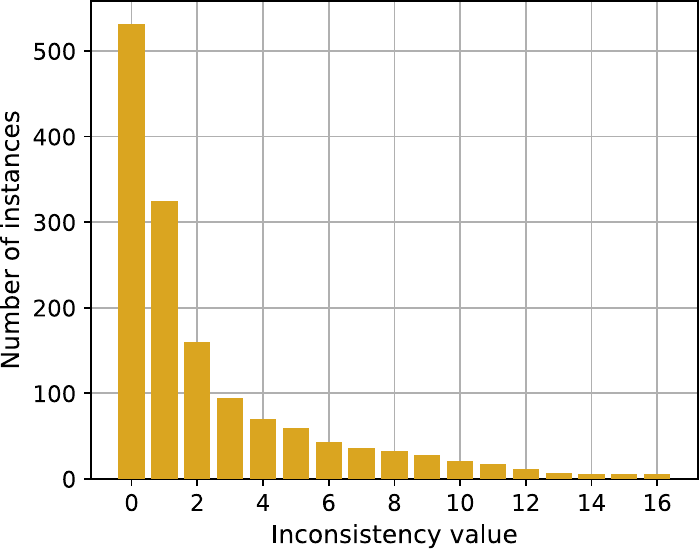}
        \caption{Forgetting-based inconsistency measure ($\iforget$)}
    \end{subfigure}\\[1ex]
    
    \begin{subfigure}[t]{.485\textwidth}
        \includegraphics[width=.98\textwidth]{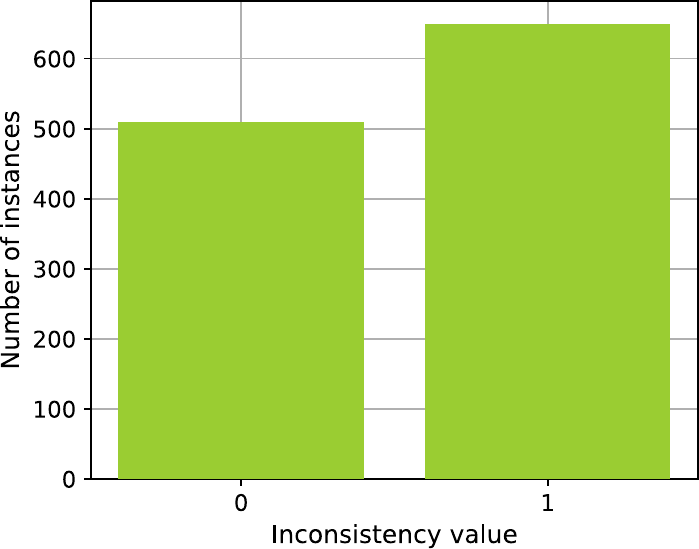}
        \caption{\raggedright Hitting Set inconsistency measure ($\ihs$); number of $\infty$ cases: $0$}
    \end{subfigure}%
    \hfill%
    \begin{subfigure}[t]{.485\textwidth}
        \includegraphics[width=.98\textwidth]{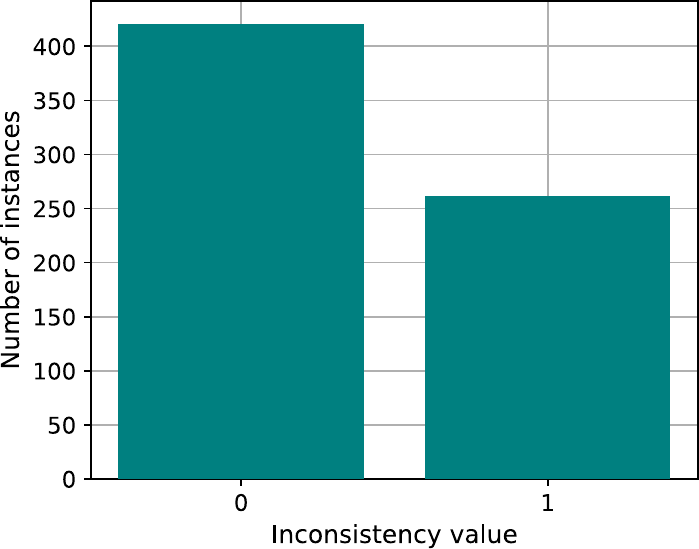}
        \caption{Max-distance inconsistency measure ($\imdalal$); number of $\infty$ cases: $0$}
    \end{subfigure}\\[1ex]
    
    \begin{subfigure}[t]{.485\textwidth}
        \includegraphics[width=.98\textwidth]{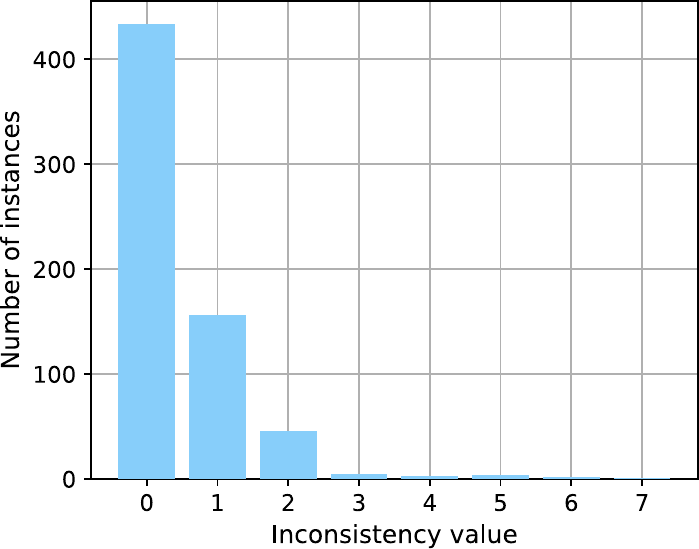}
        \caption{Sum-distance inconsistency measure ($\isdalal$); number of $\infty$ cases: $0$}
    \end{subfigure}%
    \hfill%
    \begin{subfigure}[t]{.485\textwidth}
        \includegraphics[width=.98\textwidth]{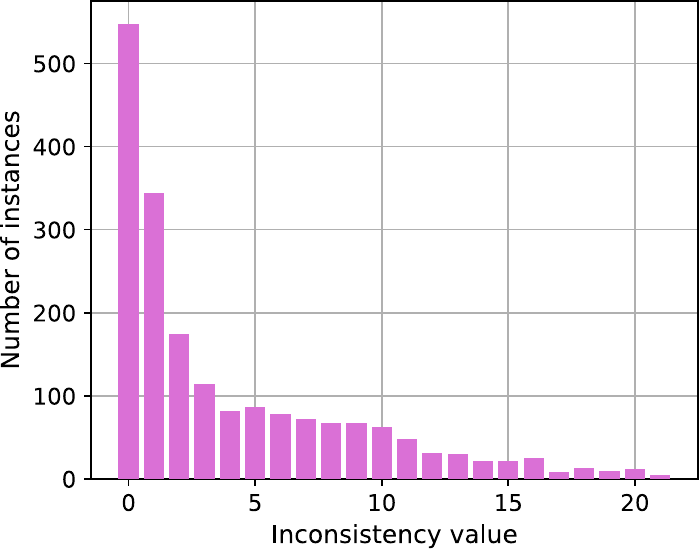}
        \caption{Hit-distance inconsistency measure ($\ihdalal$)}
    \end{subfigure}
    \caption{Histograms of the inconsistency values of the ML data set wrt.\ all six inconsistency measures
    % $\icont$, $\iforget$, $\imdalal$, and $\ihdalal$. 
    % Since no instances could be solved for $\isdalal$, no histogram could be compiled.
   }
    \label{fig:histos-ML}
\end{figure}

% ARG:
\begin{figure}
    \begin{subfigure}[t]{.485\textwidth}
        \includegraphics[width=.98\textwidth]{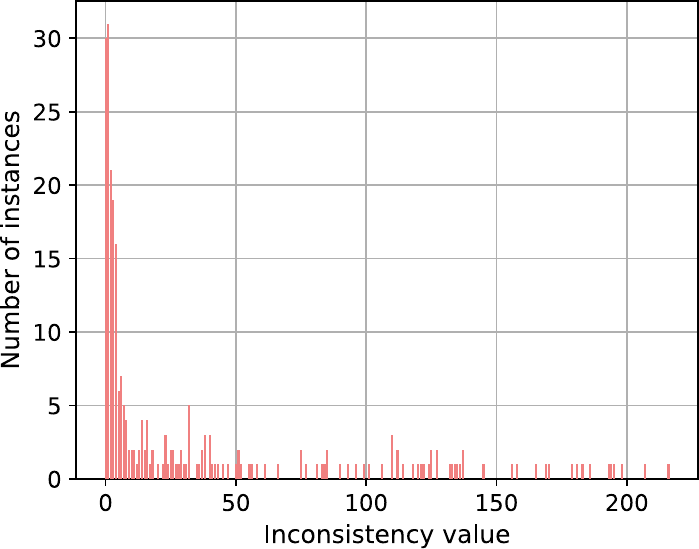}
        \caption{Contension inconsistency measure ($\icont$)}
    \end{subfigure}%
    \hfill%
    \begin{subfigure}[t]{.485\textwidth}
        \includegraphics[width=.98\textwidth]{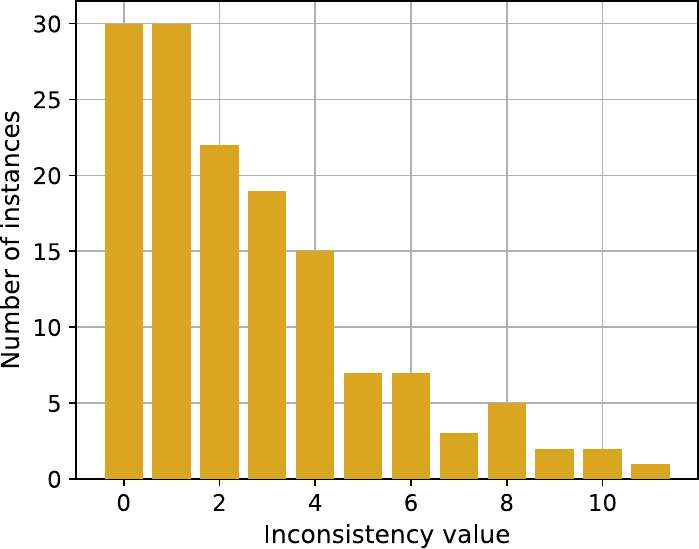}
        \caption{Forgetting-based inconsistency measure ($\iforget$)}
    \end{subfigure}\\[1ex]
    
    \begin{subfigure}[t]{.485\textwidth}
        \includegraphics[width=.98\textwidth]{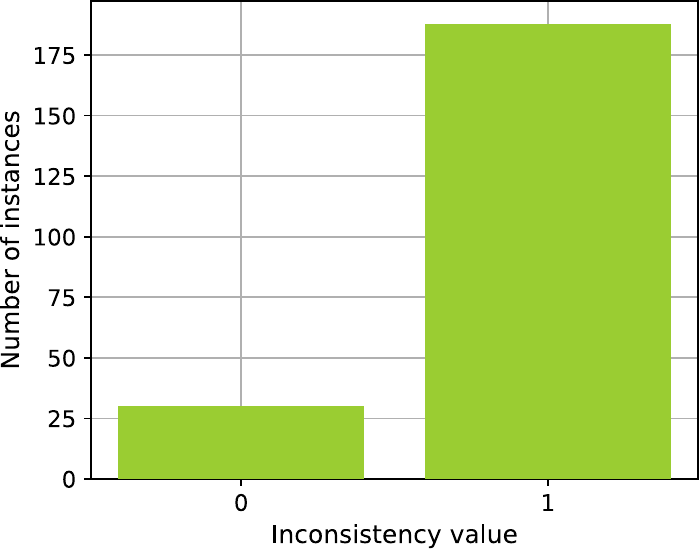}
        \caption{\raggedright Hitting Set inconsistency measure ($\ihs$); number of $\infty$ cases: $0$}
    \end{subfigure}%
    \hfill%
    \begin{subfigure}[t]{.485\textwidth}
        \includegraphics[width=.98\textwidth]{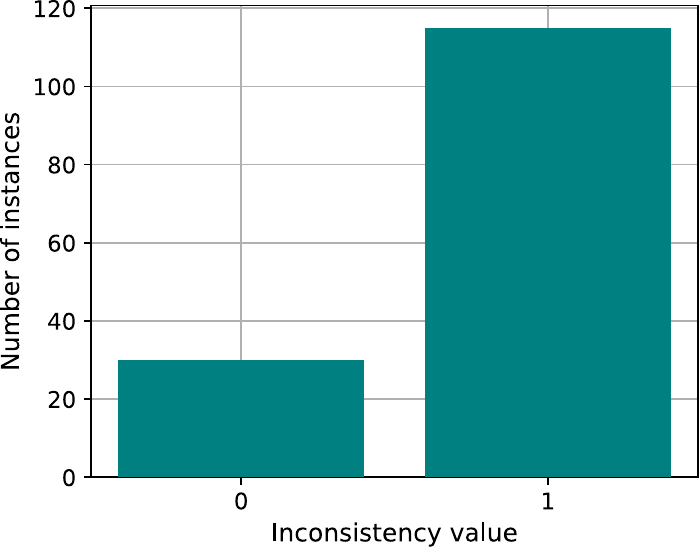}
        \caption{Max-distance inconsistency measure ($\imdalal$); number of $\infty$ cases: $0$}
    \end{subfigure}\\[1ex]
    
    \begin{subfigure}[t]{.485\textwidth}
        \includegraphics[width=.98\textwidth]{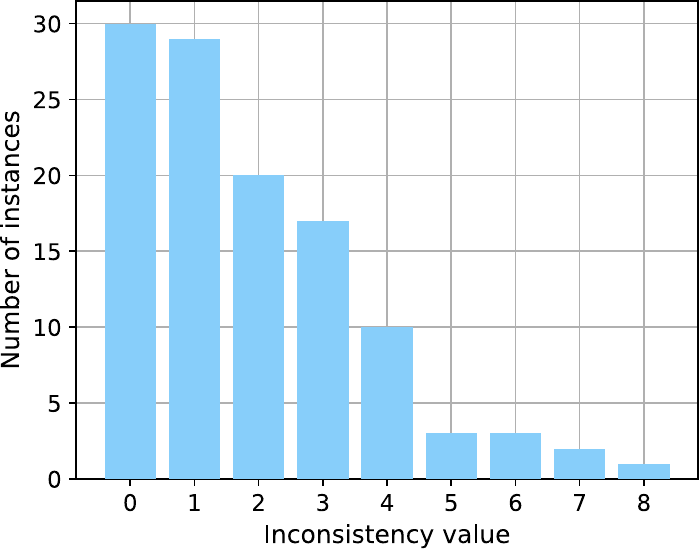}
        \caption{Sum-distance inconsistency measure ($\isdalal$); number of $\infty$ cases: $0$}
    \end{subfigure}%
    \hfill%
    \begin{subfigure}[t]{.485\textwidth}
        \includegraphics[width=.98\textwidth]{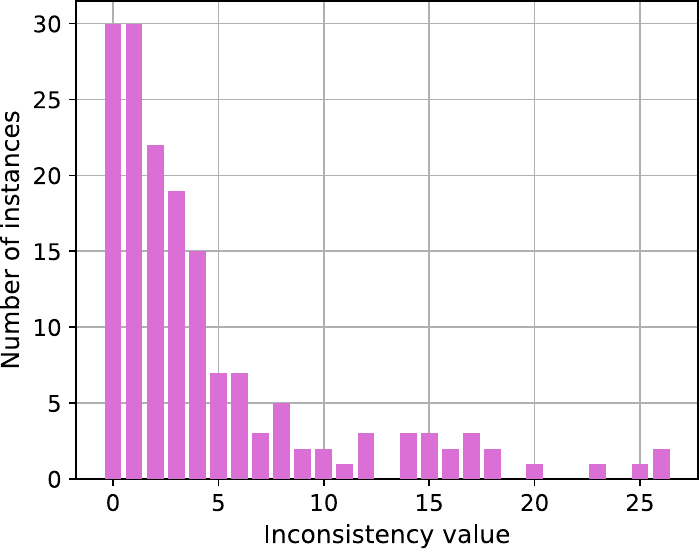}
        \caption{Hit-distance inconsistency measure ($\ihdalal$)}
    \end{subfigure}
    \caption{Histograms of the inconsistency values of the ARG data set wrt.\ all six inconsistency measures}
    \label{fig:histos-ARG}
\end{figure}

% SAT: 
\begin{figure}
    \centering
    \includegraphics[width=.5\textwidth]{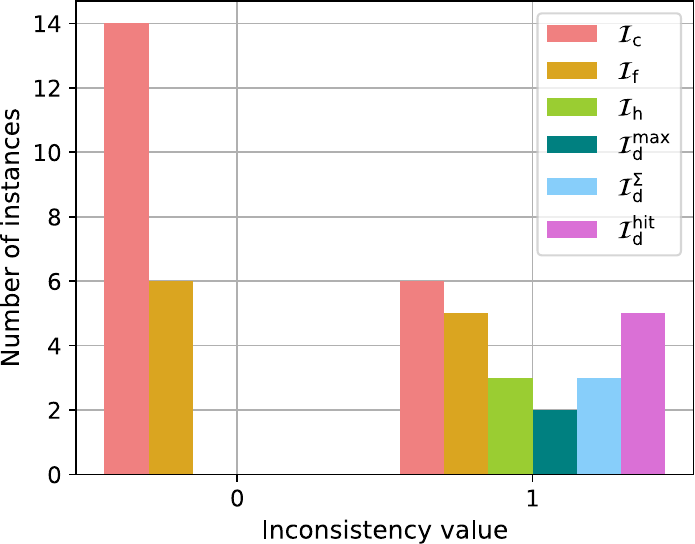}
    \caption{Histogram of the inconsistency values of the SC data set wrt.\ all six inconsistency measures.}
    \label{fig:histo-SAT}
\end{figure}

% ASP: 
\begin{figure}
    \centering
    \includegraphics[width=.5\textwidth]{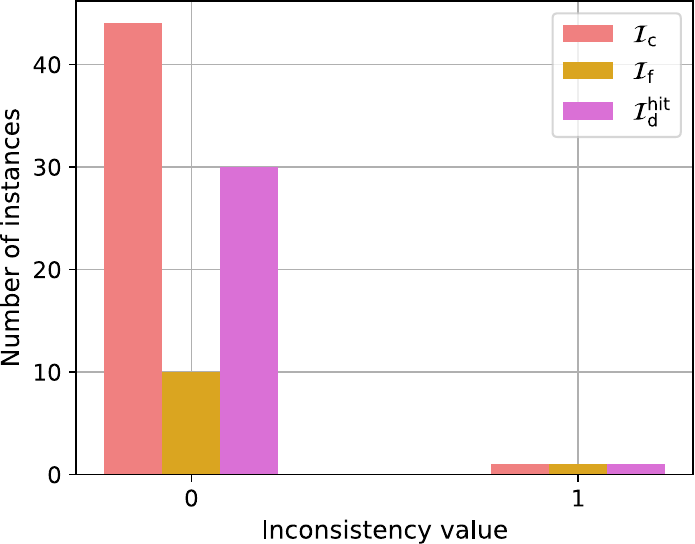}
    \caption{Histogram of the inconsistency values of the LP data set wrt.\ $\icont$, $\iforget$, and $\ihdalal$. Wrt.\ the other measures ($\ihs$, $\imdalal$, $\isdalal$), none of the instances from the ASP data set could be solved.}
    \label{fig:histo-ASP}
\end{figure}

\newpage

\subsection{Scatter Plots}\label{app:scatter-plots}

In this section we present scatter plots which compare each pair of approaches wrt.\ each inconsistency measure and each data set in terms of runtime per instance.
Figures \ref{fig:scatter-RAN-asp-sat}, \ref{fig:scatter-RAN-asp-naive}, and \ref{fig:scatter-RAN-sat-naive} refer to the SRS data set, Figures \ref{fig:scatter-ML-asp-sat}, \ref{fig:scatter-ML-asp-naive}, and \ref{fig:scatter-ML-sat-naive} refer to the ML data set, and Figures \ref{fig:scatter-ARG-asp-sat}, \ref{fig:scatter-ARG-asp-naive}, \ref{fig:scatter-ARG-sat-naive} refer to the ARG data set, and Figures \ref{fig:scatter-SAT-asp-sat}, \ref{fig:scatter-SAT-asp-naive}, \ref{fig:scatter-SAT-sat-naive} refer to the SC data set.
The scatter plots regarding the LP data set are omitted, since both the naive and the SAT-based approaches exclusively produced timeouts.

Furthermore, Figures \ref{fig:scatter-SAT-binary-vs-linear-1} and \ref{fig:scatter-SAT-binary-vs-linear-2} depict the scatter plots regarding the comparison between a linear search variant of the SAT approach and the other approaches, as described in Section \ref{sec:linear-search}.
Moreover, the scatter plots relating to the comparison of the MaxSAT approach for the contension inconsistency measure and the corresponding other approaches (see Section \ref{sec:maxsat}) are shown in Figure \ref{fig:scatter-maxsat}. 

% SRS:
\begin{figure}
    \begin{subfigure}[t]{.5\textwidth}
        \includegraphics[width=\textwidth]{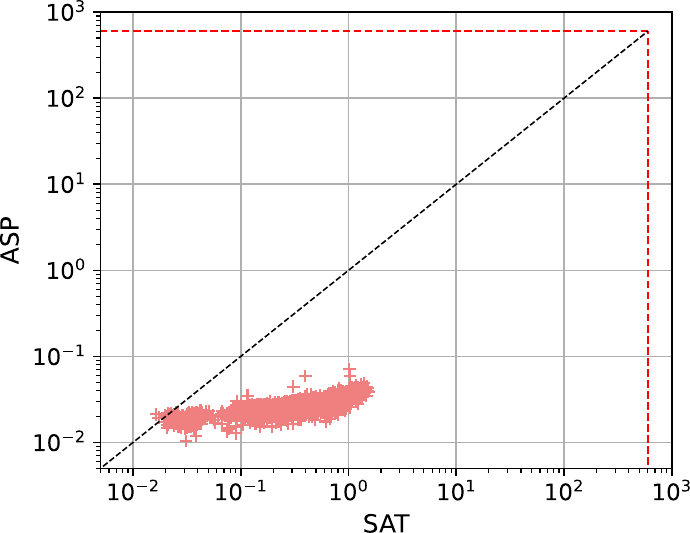}
        \caption{Contension inconsistency measure ($\icont$)}
    \end{subfigure}%
    \begin{subfigure}[t]{.5\textwidth}
        \includegraphics[width=\textwidth]{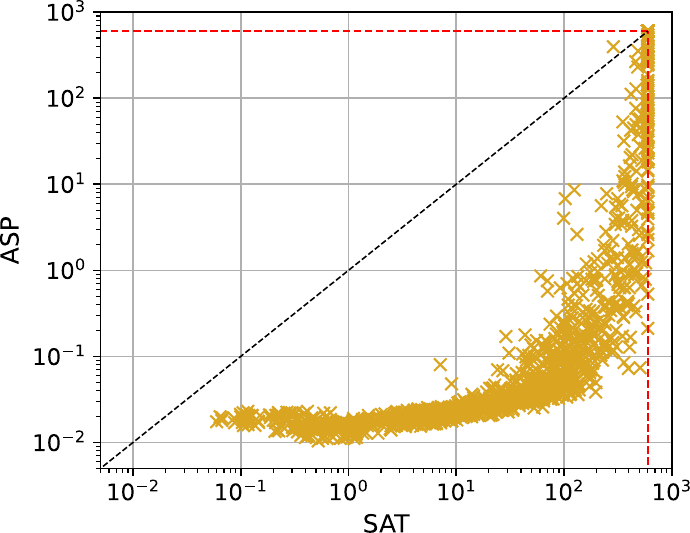}
        \caption{Forgetting-based inconsistency measure ($\iforget$)}
    \end{subfigure}\\[1ex]
    
    \begin{subfigure}[t]{.5\textwidth}
        \includegraphics[width=\textwidth]{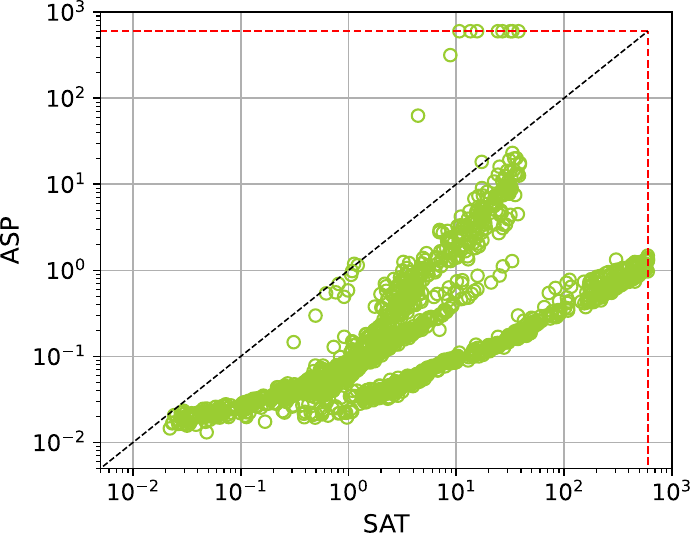}
        \caption{Hitting Set inconsistency measure ($\ihs$)}
    \end{subfigure}%
    \begin{subfigure}[t]{.5\textwidth}
        \includegraphics[width=\textwidth]{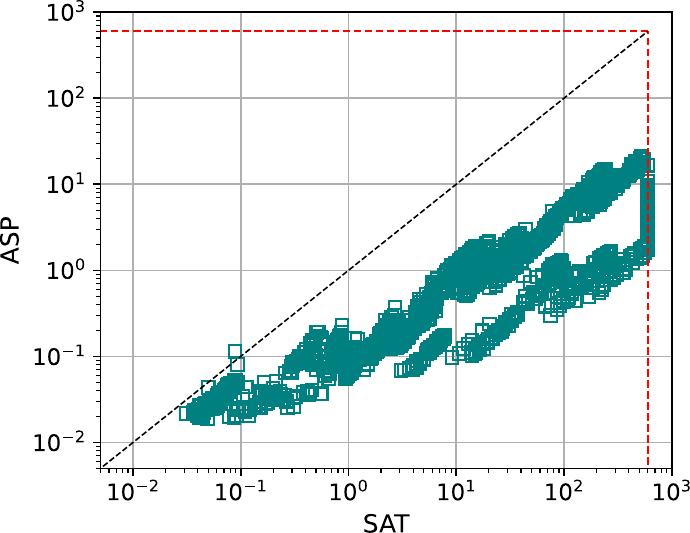}
        \caption{Max-distance inconsistency measure ($\imdalal$)}
    \end{subfigure}\\[1ex]
    
    \begin{subfigure}[t]{.5\textwidth}
        \includegraphics[width=\textwidth]{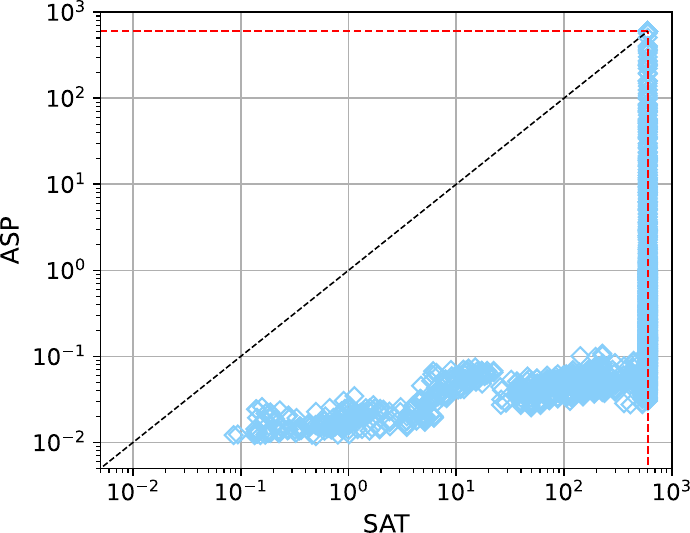}
        \caption{Sum-distance inconsistency measure ($\isdalal$)}
    \end{subfigure}%
    \begin{subfigure}[t]{.5\textwidth}
        \includegraphics[width=\textwidth]{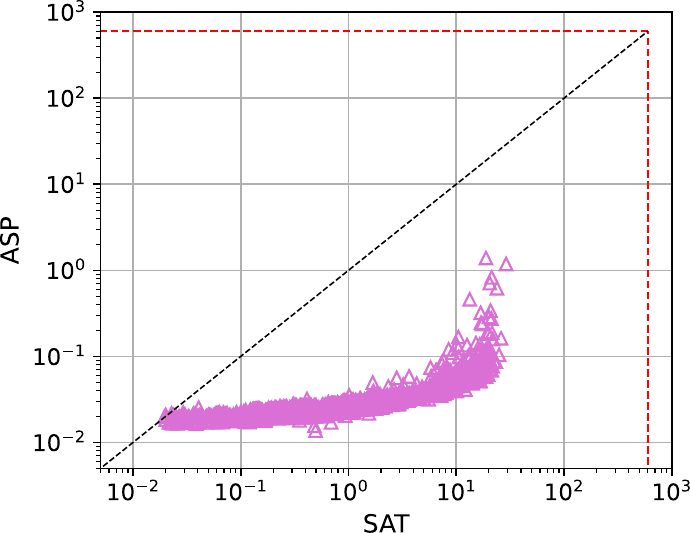}
        \caption{Hit-distance inconsistency measure ($\ihdalal$)}
    \end{subfigure}
    \caption{Runtime comparison of the ASP-based and SAT-based approaches on the SRS data set. Timeout: $10$ minutes.}
    \label{fig:scatter-RAN-asp-sat}
\end{figure}

\begin{figure}
    \begin{subfigure}[t]{.5\textwidth}
        \includegraphics[width=\textwidth]{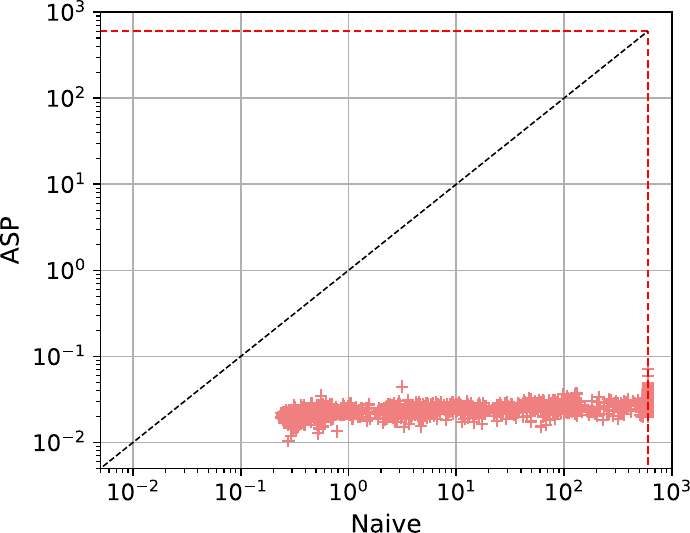}
        \caption{Contension inconsistency measure ($\icont$)}
    \end{subfigure}%
    \begin{subfigure}[t]{.5\textwidth}
        \includegraphics[width=\textwidth]{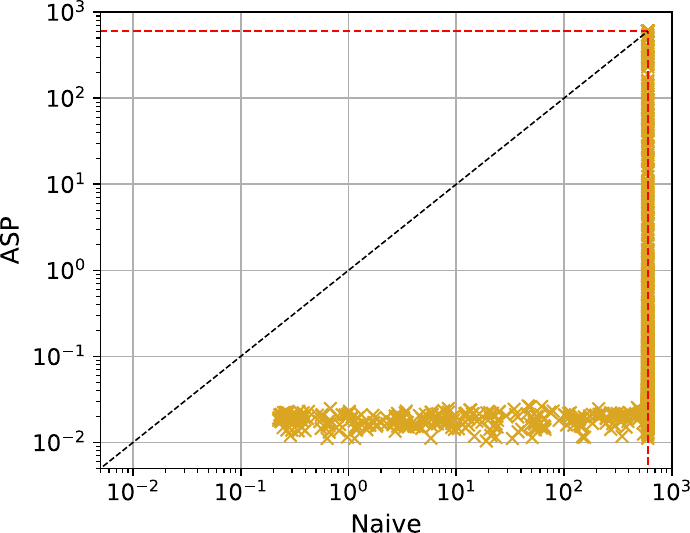}
        \caption{Forgetting-based inconsistency measure ($\iforget$)}
    \end{subfigure}\\[1ex]
    
    \begin{subfigure}[t]{.5\textwidth}
        \includegraphics[width=\textwidth]{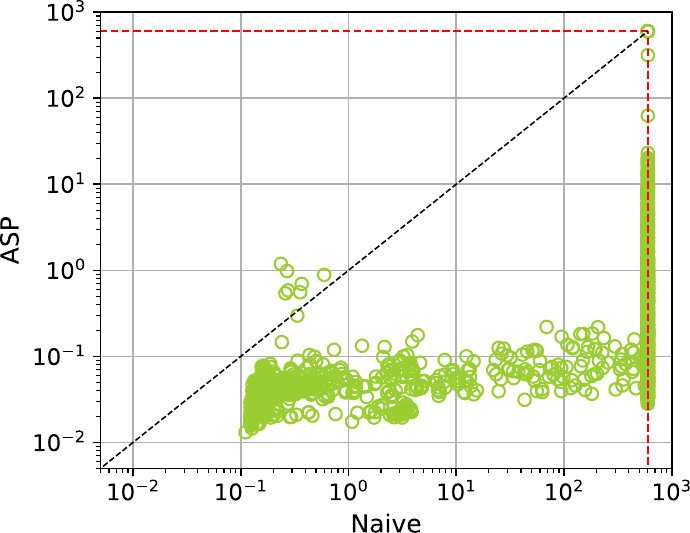}
        \caption{Hitting Set inconsistency measure ($\ihs$)}
    \end{subfigure}%
    \begin{subfigure}[t]{.5\textwidth}
        \includegraphics[width=\textwidth]{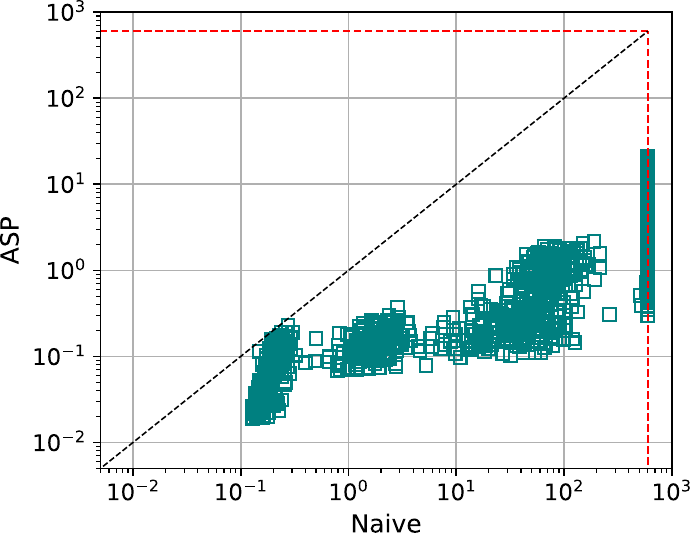}
        \caption{Max-distance inconsistency measure ($\imdalal$)}
    \end{subfigure}\\[1ex]
    
    \begin{subfigure}[t]{.5\textwidth}
        \includegraphics[width=\textwidth]{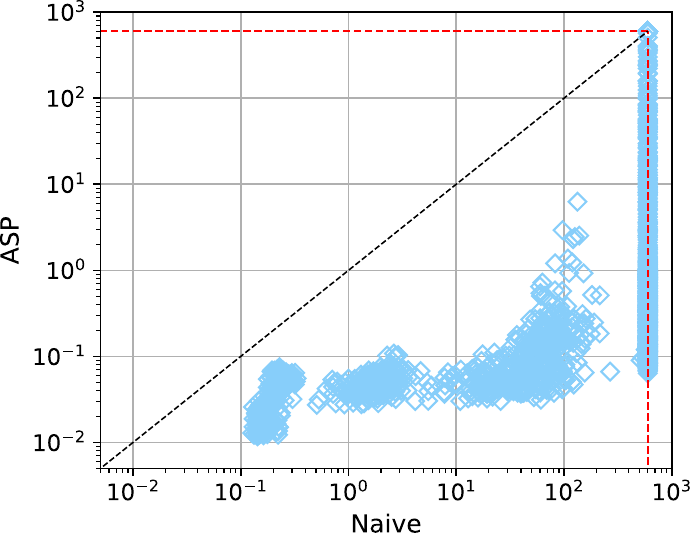}
        \caption{Sum-distance inconsistency measure ($\isdalal$)}
    \end{subfigure}%
    \begin{subfigure}[t]{.5\textwidth}
        \includegraphics[width=\textwidth]{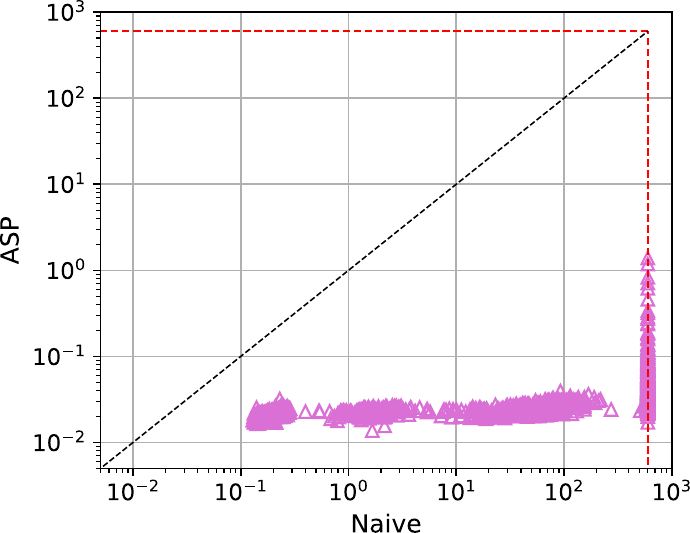}
        \caption{Hit-distance inconsistency measure ($\ihdalal$)}
    \end{subfigure}
    \caption{Runtime comparison of the ASP-based and naive approaches on the SRS data set. Timeout: $10$ minutes.}
    \label{fig:scatter-RAN-asp-naive}
\end{figure}

\begin{figure}
    \begin{subfigure}[t]{.5\textwidth}
        \includegraphics[width=\textwidth]{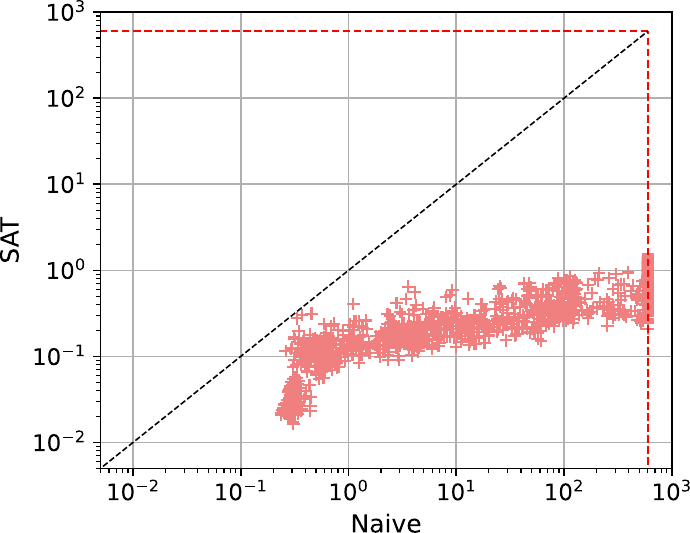}
        \caption{Contension inconsistency measure ($\icont$)}
    \end{subfigure}%
    \begin{subfigure}[t]{.5\textwidth}
        \includegraphics[width=\textwidth]{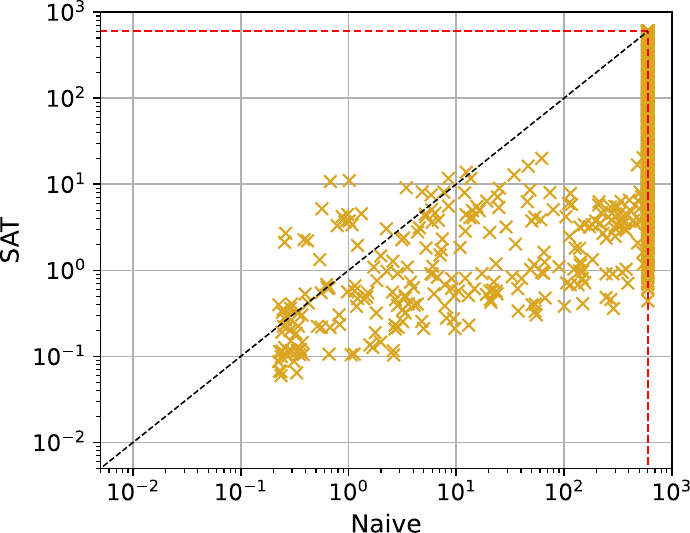}
        \caption{Forgetting-based inconsistency measure ($\iforget$)}
    \end{subfigure}\\[1ex]
    
    \begin{subfigure}[t]{.5\textwidth}
        \includegraphics[width=\textwidth]{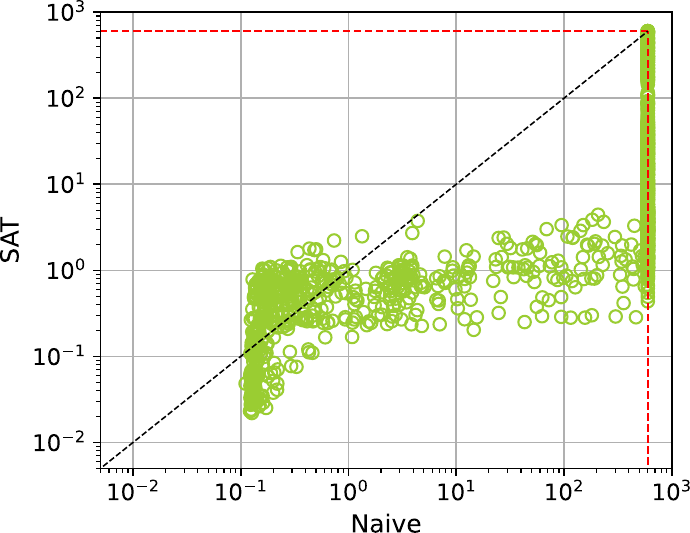}
        \caption{Hitting Set inconsistency measure ($\ihs$)}
    \end{subfigure}%
    \begin{subfigure}[t]{.5\textwidth}
        \includegraphics[width=\textwidth]{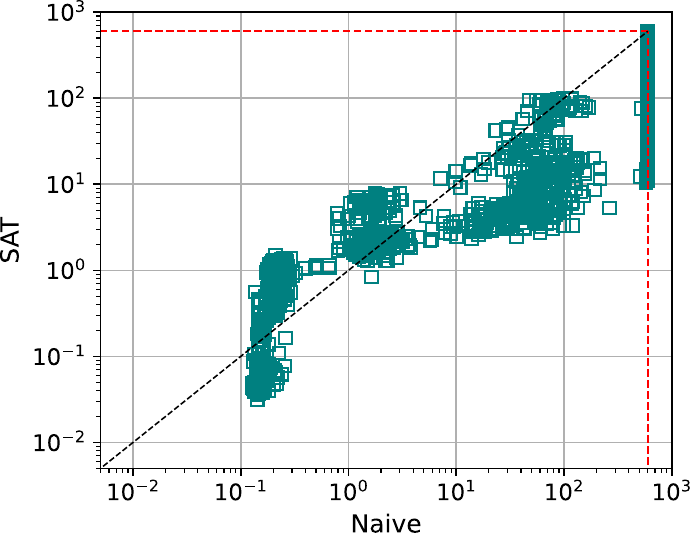}
        \caption{Max-distance inconsistency measure ($\imdalal$)}
    \end{subfigure}\\[1ex]
    
    \begin{subfigure}[t]{.5\textwidth}
        \includegraphics[width=\textwidth]{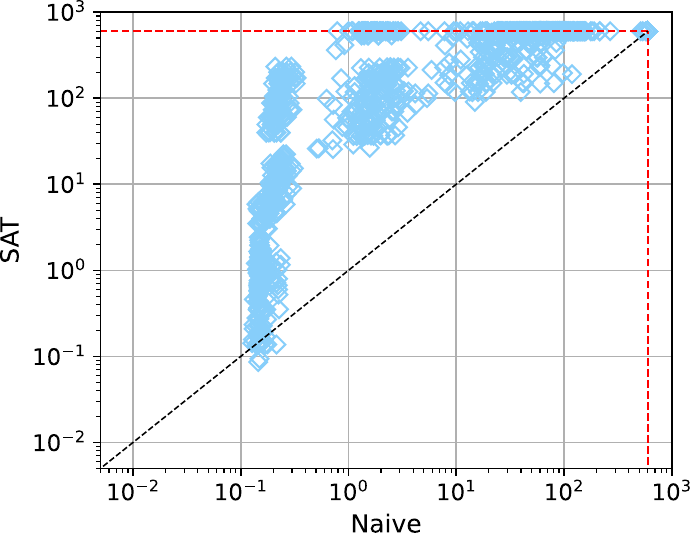}
        \caption{Sum-distance inconsistency measure ($\isdalal$)}
    \end{subfigure}%
    \begin{subfigure}[t]{.5\textwidth}
        \includegraphics[width=\textwidth]{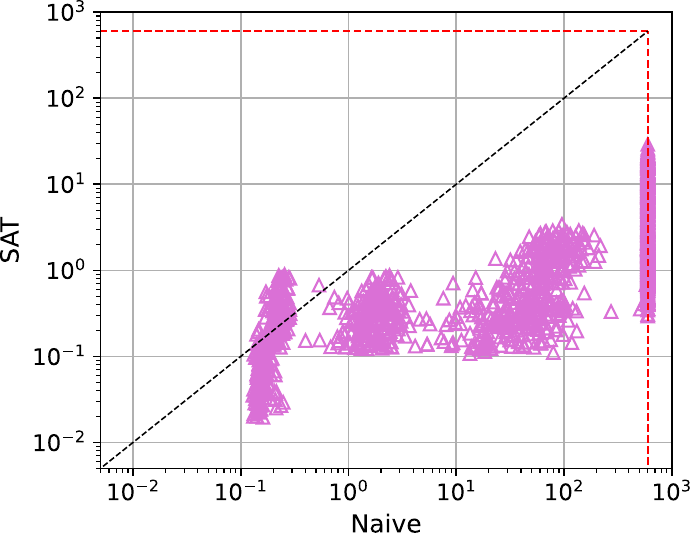}
        \caption{Hit-distance inconsistency measure ($\ihdalal$)}
    \end{subfigure}
    \caption{Runtime comparison of the SAT-based and naive approaches on the SRS data set. Timeout: $10$ minutes.}
    \label{fig:scatter-RAN-sat-naive}
\end{figure}

% ML
\begin{figure}
    \begin{subfigure}[t]{.5\textwidth}
        \includegraphics[width=\textwidth]{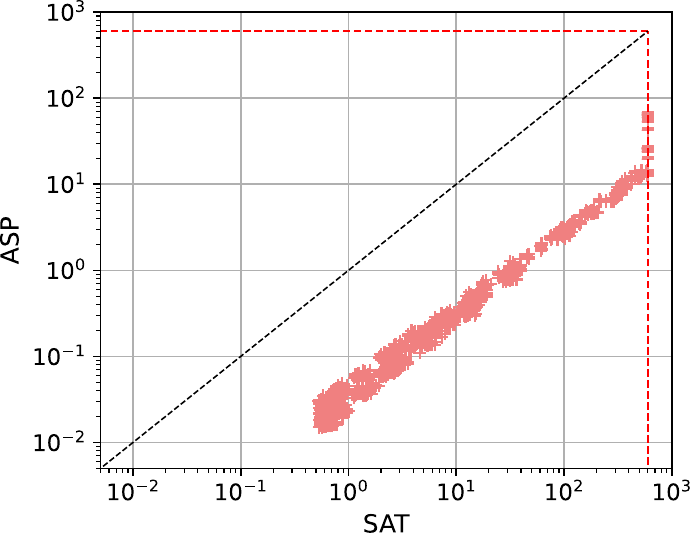}
        \caption{Contension inconsistency measure ($\icont$)}
    \end{subfigure}%
    \begin{subfigure}[t]{.5\textwidth}
        \includegraphics[width=\textwidth]{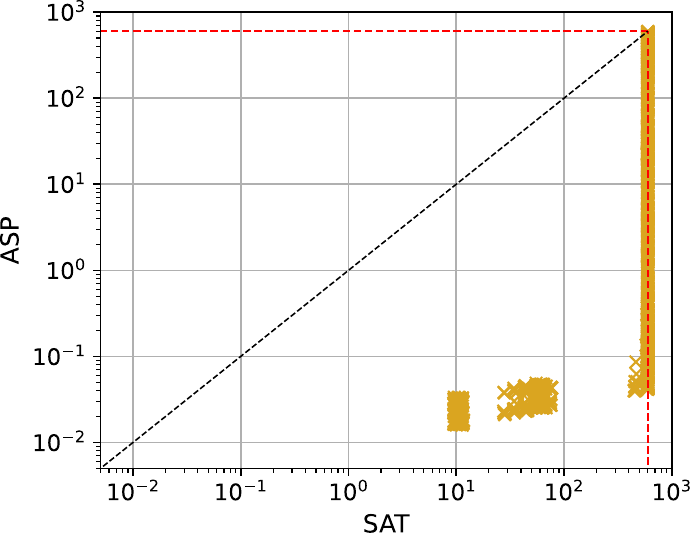}
        \caption{Forgetting-based inconsistency measure ($\iforget$)}
    \end{subfigure}\\[1ex]
    
    \begin{subfigure}[t]{.5\textwidth}
        \includegraphics[width=\textwidth]{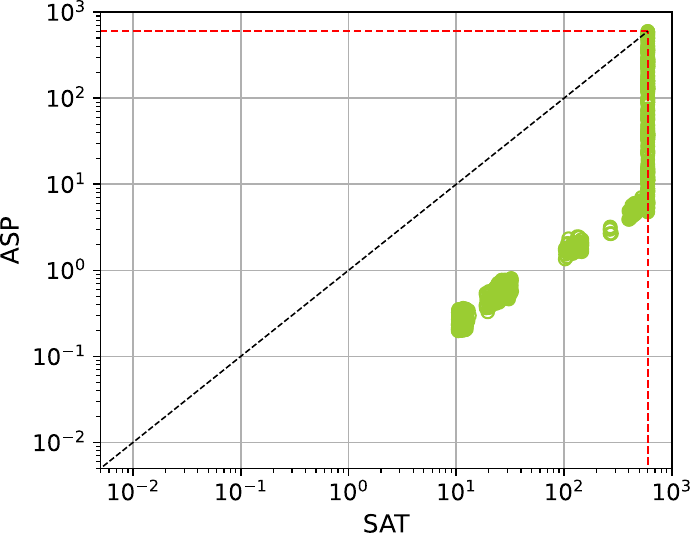}
        \caption{Hitting Set inconsistency measure ($\ihs$)}
    \end{subfigure}%
    \begin{subfigure}[t]{.5\textwidth}
        \includegraphics[width=\textwidth]{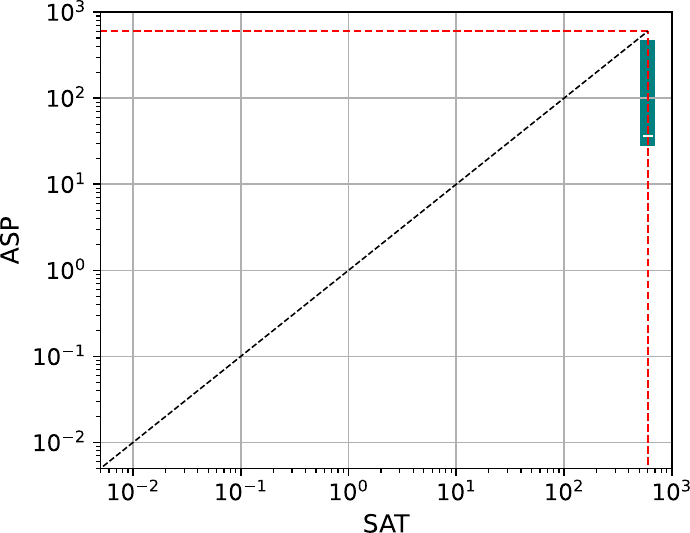}
        \caption{Max-distance inconsistency measure ($\imdalal$)}
    \end{subfigure}\\[1ex]
    
    \begin{subfigure}[t]{.5\textwidth}
        \includegraphics[width=\textwidth]{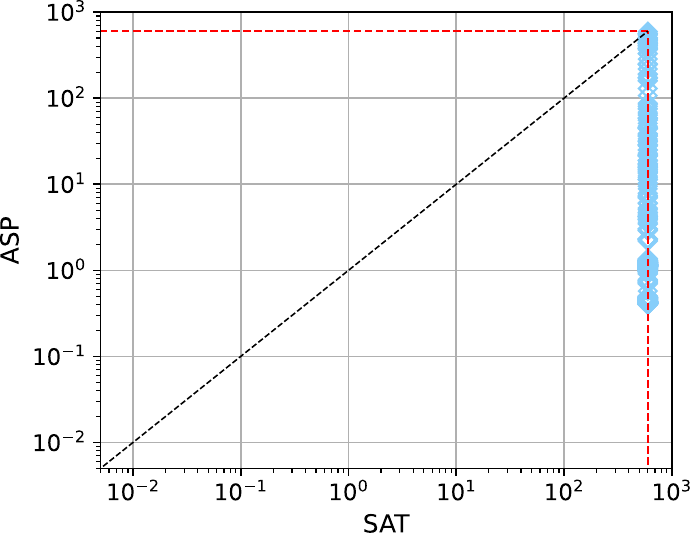}
        \caption{Sum-distance inconsistency measure ($\isdalal$)}
    \end{subfigure}%
    \begin{subfigure}[t]{.5\textwidth}
        \includegraphics[width=\textwidth]{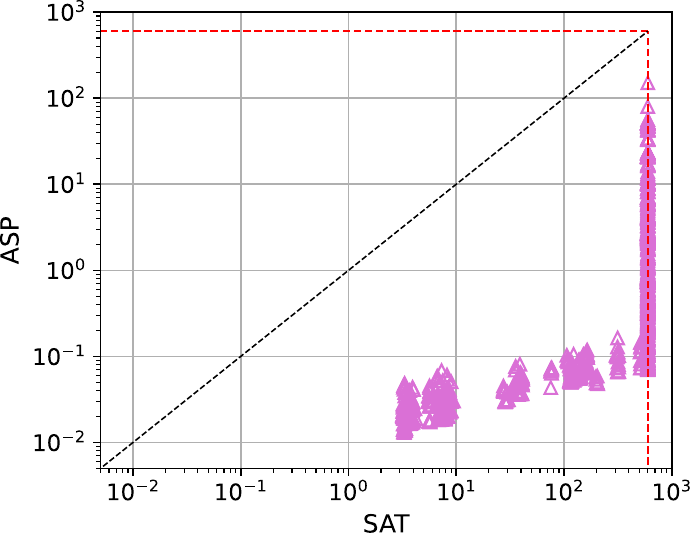}
        \caption{Hit-distance inconsistency measure ($\ihdalal$)}
    \end{subfigure}
    \caption{Runtime comparison of the ASP-based and SAT-based approaches on the ML data set. Timeout: $10$ minutes.}
    \label{fig:scatter-ML-asp-sat}
\end{figure}

\begin{figure}
    \begin{subfigure}[t]{.5\textwidth}
        \includegraphics[width=\textwidth]{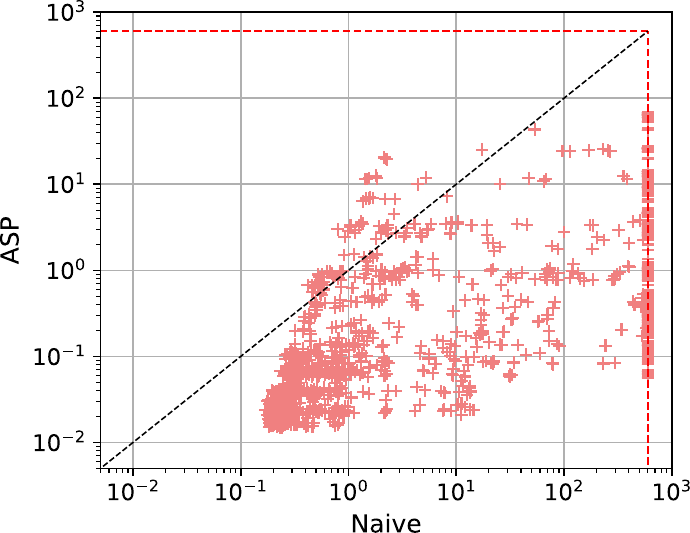}
        \caption{Contension inconsistency measure ($\icont$)}
    \end{subfigure}%
    \begin{subfigure}[t]{.5\textwidth}
        \includegraphics[width=\textwidth]{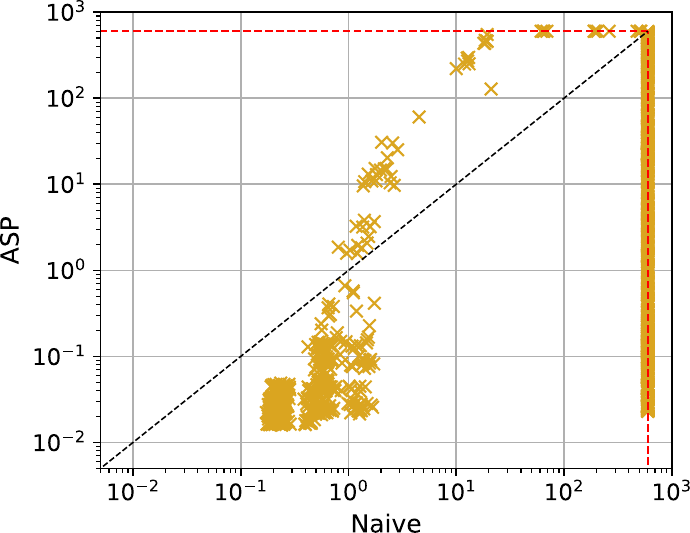}
        \caption{Forgetting-based inconsistency measure ($\iforget$)}
    \end{subfigure}\\[1ex]
    
    \begin{subfigure}[t]{.5\textwidth}
        \includegraphics[width=\textwidth]{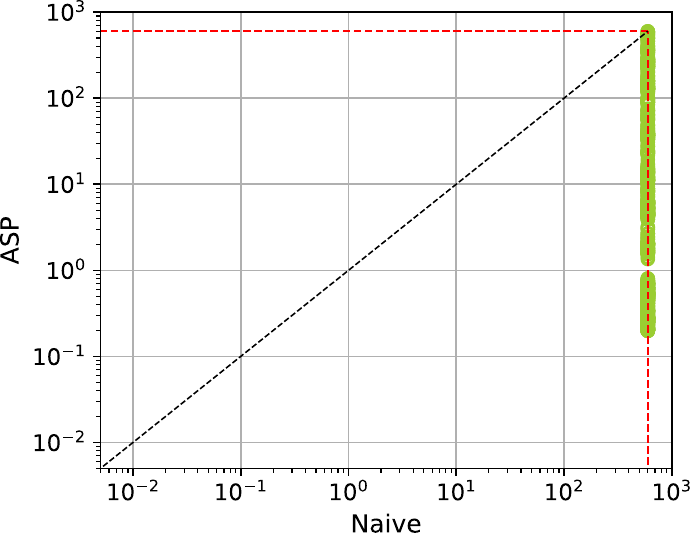}
        \caption{Hitting Set inconsistency measure ($\ihs$)}
    \end{subfigure}%
    \begin{subfigure}[t]{.5\textwidth}
        \includegraphics[width=\textwidth]{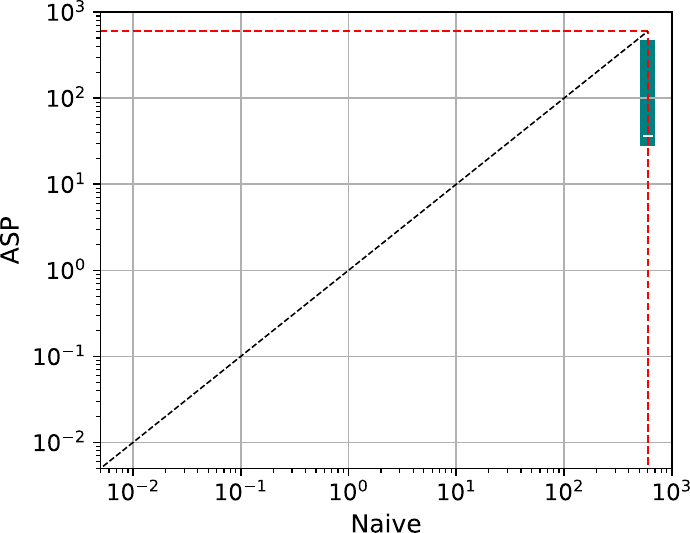}
        \caption{Max-distance inconsistency measure ($\imdalal$)}
    \end{subfigure}\\[1ex]
    
    \begin{subfigure}[t]{.5\textwidth}
        \includegraphics[width=\textwidth]{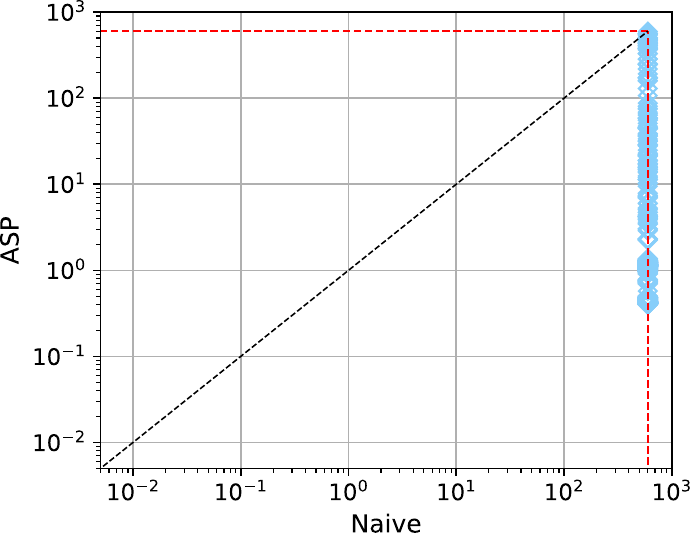}
        \caption{Sum-distance inconsistency measure ($\isdalal$)}
    \end{subfigure}%
    \begin{subfigure}[t]{.5\textwidth}
        \includegraphics[width=\textwidth]{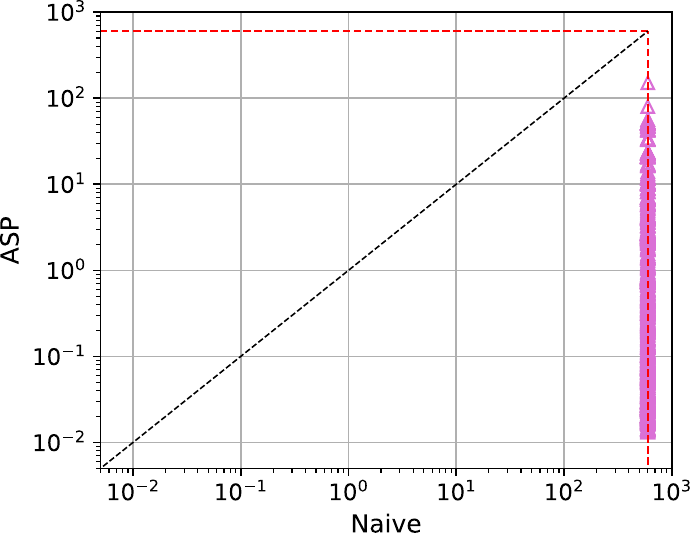}
        \caption{Hit-distance inconsistency measure ($\ihdalal$)}
    \end{subfigure}
    \caption{Runtime comparison of the ASP-based and naive approaches on the ML data set. Timeout: $10$ minutes.}
    \label{fig:scatter-ML-asp-naive}
\end{figure}

\begin{figure}
    \begin{subfigure}[t]{.5\textwidth}
        \includegraphics[width=\textwidth]{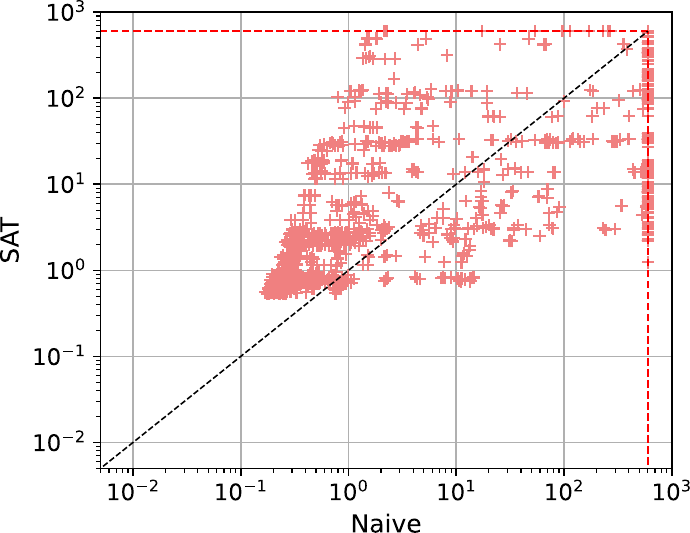}
        \caption{Contension inconsistency measure ($\icont$)}
    \end{subfigure}%
    \begin{subfigure}[t]{.5\textwidth}
        \includegraphics[width=\textwidth]{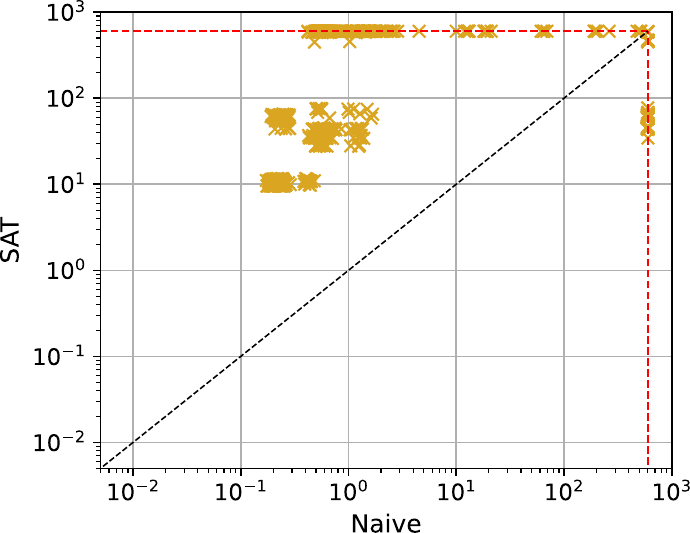}
        \caption{Forgetting-based inconsistency measure ($\iforget$)}
    \end{subfigure}\\[1ex]
    
    \begin{subfigure}[t]{.5\textwidth}
        \includegraphics[width=\textwidth]{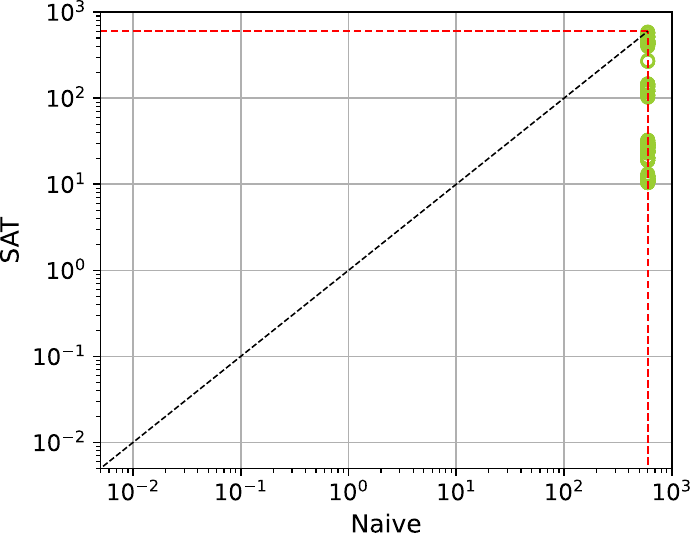}
        \caption{Hitting Set inconsistency measure ($\ihs$)}
    \end{subfigure}%
    \begin{subfigure}[t]{.5\textwidth}
        \includegraphics[width=\textwidth]{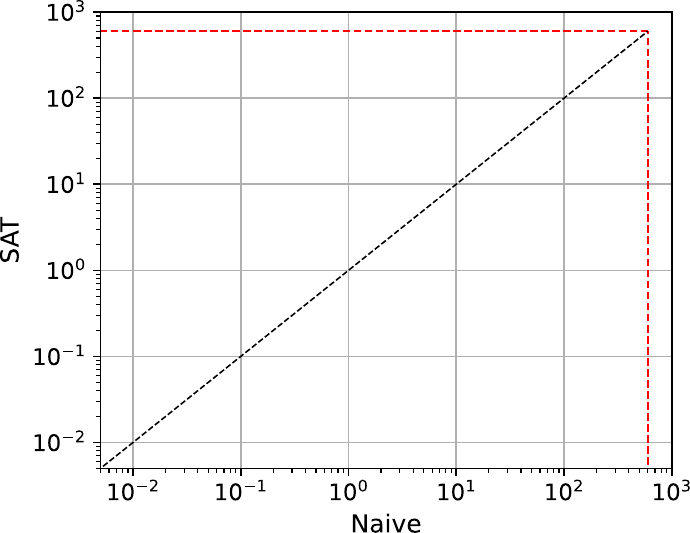}
        \caption{Max-distance inconsistency measure ($\imdalal$)}
    \end{subfigure}\\[1ex]
    
    \begin{subfigure}[t]{.5\textwidth}
        \includegraphics[width=\textwidth]{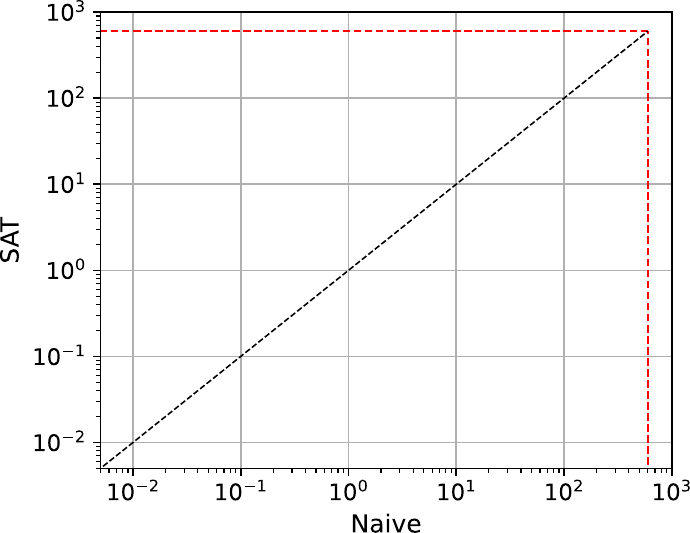}
        \caption{Sum-distance inconsistency measure ($\isdalal$)}
    \end{subfigure}%
    \begin{subfigure}[t]{.5\textwidth}
        \includegraphics[width=\textwidth]{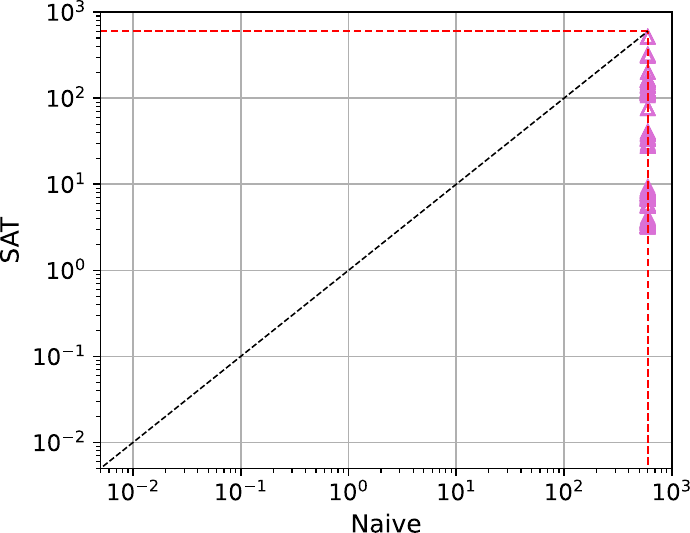}
        \caption{Hit-distance inconsistency measure ($\ihdalal$)}
    \end{subfigure}
    \caption{Runtime comparison of the SAT-based and naive approaches on the ML data set. Timeout: $10$ minutes.}
    \label{fig:scatter-ML-sat-naive}
\end{figure}

% ARG
\begin{figure}
    \begin{subfigure}[t]{.5\textwidth}
        \includegraphics[width=\textwidth]{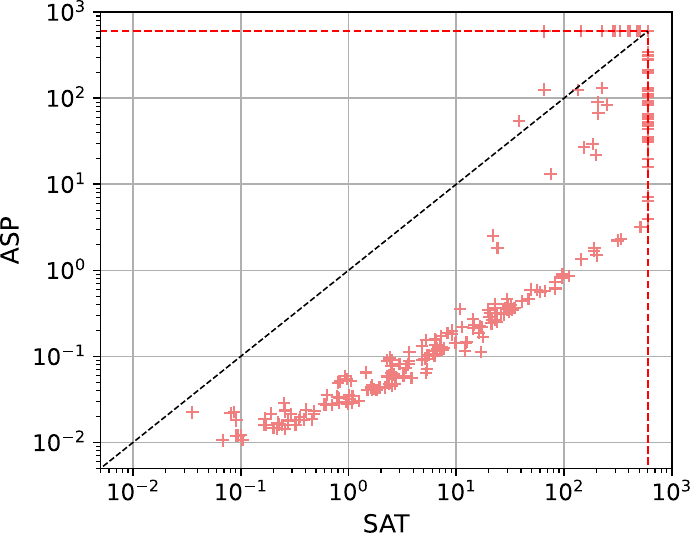}
        \caption{Contension inconsistency measure ($\icont$)}
    \end{subfigure}%
    \begin{subfigure}[t]{.5\textwidth}
        \includegraphics[width=\textwidth]{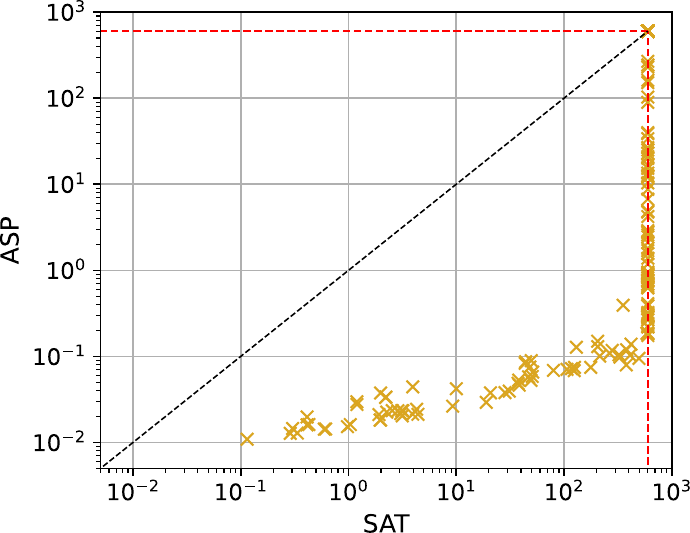}
        \caption{Forgetting-based inconsistency measure ($\iforget$)}
    \end{subfigure}\\[1ex]
    
    \begin{subfigure}[t]{.5\textwidth}
        \includegraphics[width=\textwidth]{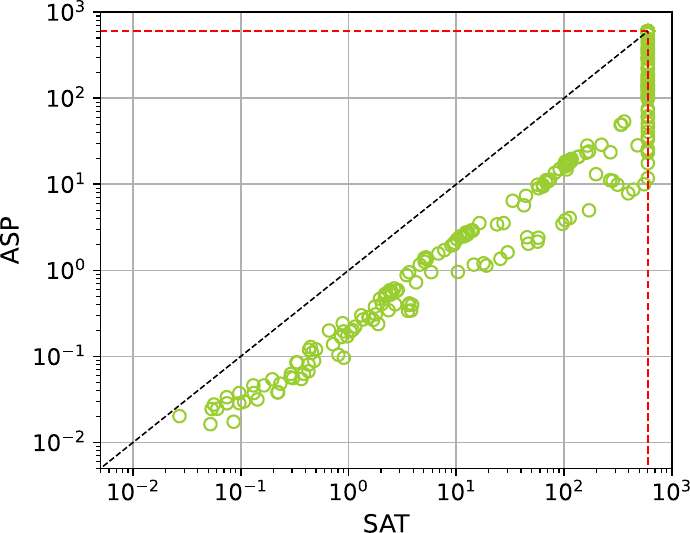}
        \caption{Hitting Set inconsistency measure ($\ihs$)}
        \label{fig:scatter-ARG-asp-sat-hs}
    \end{subfigure}%
    \begin{subfigure}[t]{.5\textwidth}
        \includegraphics[width=\textwidth]{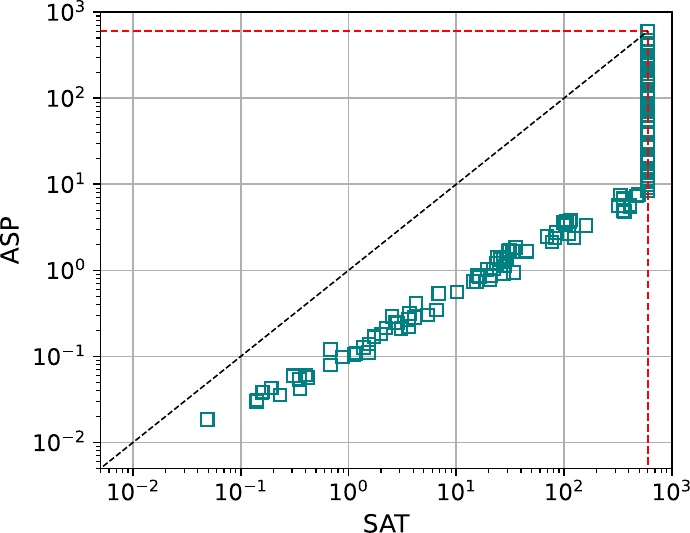}
        \caption{Max-distance inconsistency measure ($\imdalal$)}
    \end{subfigure}\\[1ex]
    
    \begin{subfigure}[t]{.5\textwidth}
        \includegraphics[width=\textwidth]{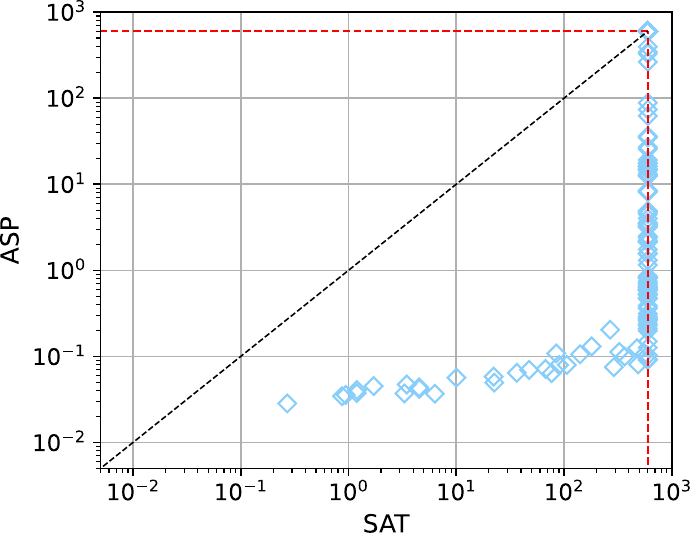}
        \caption{Sum-distance inconsistency measure ($\isdalal$)}
    \end{subfigure}%
    \begin{subfigure}[t]{.5\textwidth}
        \includegraphics[width=\textwidth]{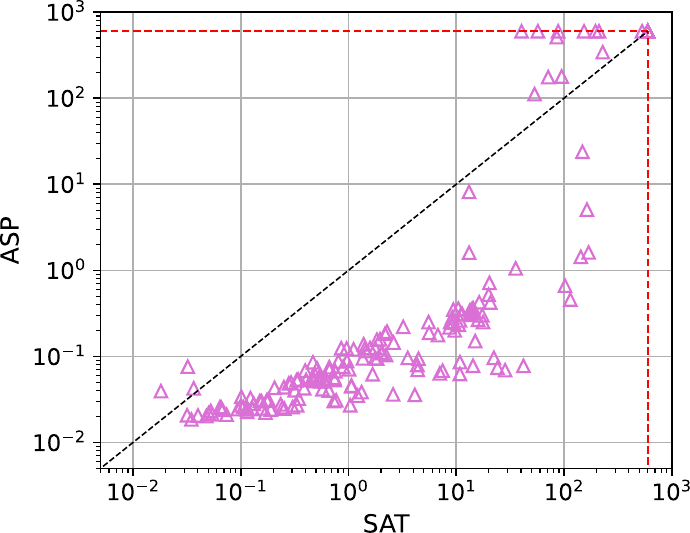}
        \caption{Hit-distance inconsistency measure ($\ihdalal$)}
        \label{fig:scatter-ARG-asp-sat-hitdalal}
    \end{subfigure}
    \caption{Runtime comparison of the ASP-based and SAT-based approaches on the ARG data set. Timeout: $10$ minutes.}
    \label{fig:scatter-ARG-asp-sat}
\end{figure}

\begin{figure}
    \begin{subfigure}[t]{.5\textwidth}
        \includegraphics[width=\textwidth]{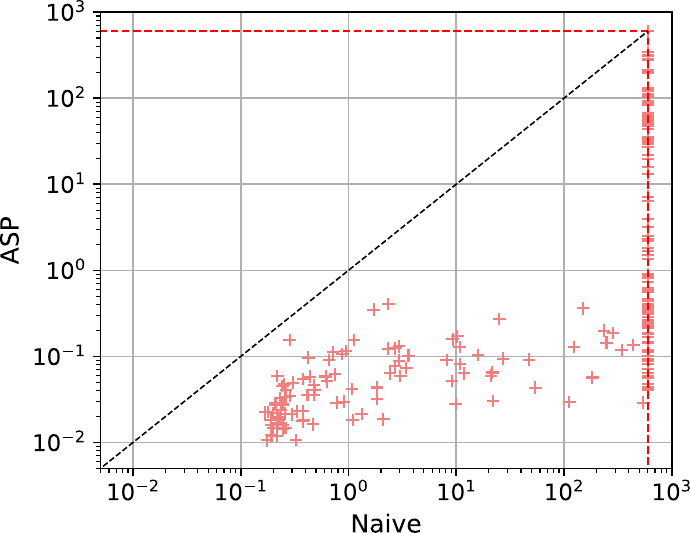}
        \caption{Contension inconsistency measure ($\icont$)}
    \end{subfigure}%
    \begin{subfigure}[t]{.5\textwidth}
        \includegraphics[width=\textwidth]{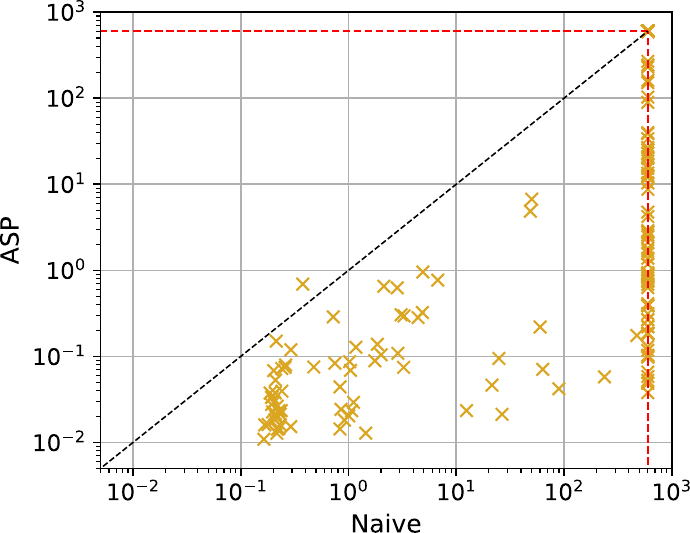}
        \caption{Forgetting-based inconsistency measure ($\iforget$)}
    \end{subfigure}\\[1ex]
    
    \begin{subfigure}[t]{.5\textwidth}
        \includegraphics[width=\textwidth]{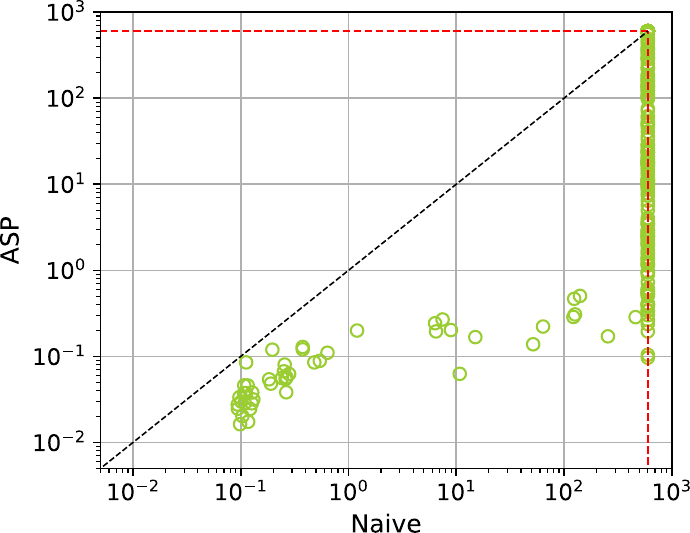}
        \caption{Hitting Set inconsistency measure ($\ihs$)}
    \end{subfigure}%
    \begin{subfigure}[t]{.5\textwidth}
        \includegraphics[width=\textwidth]{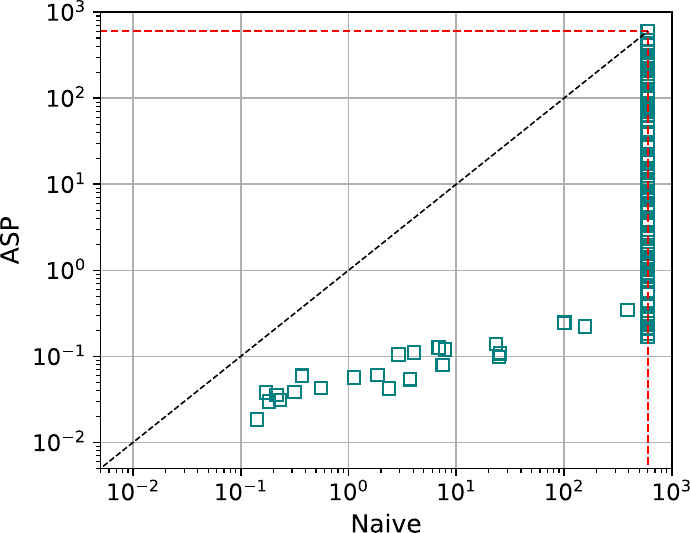}
        \caption{Max-distance inconsistency measure ($\imdalal$)}
    \end{subfigure}\\[1ex]
    
    \begin{subfigure}[t]{.5\textwidth}
        \includegraphics[width=\textwidth]{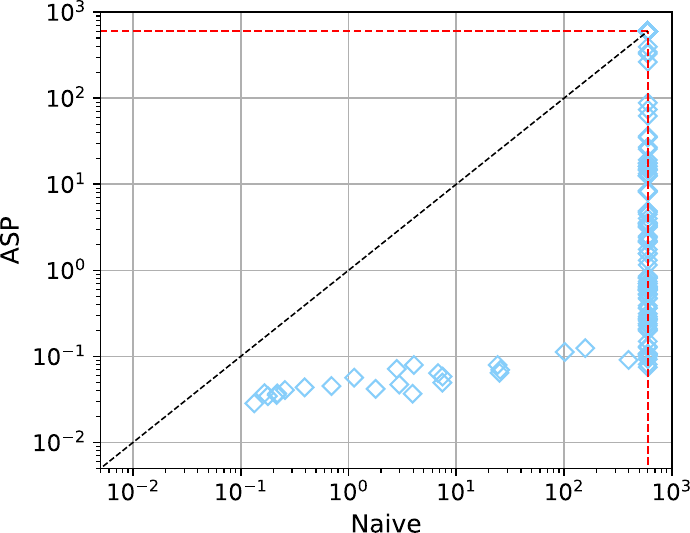}
        \caption{Sum-distance inconsistency measure ($\isdalal$)}
    \end{subfigure}%
    \begin{subfigure}[t]{.5\textwidth}
        \includegraphics[width=\textwidth]{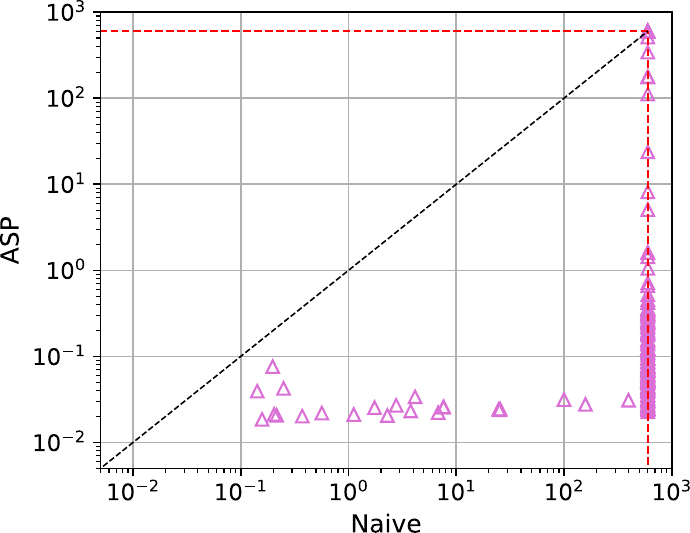}
        \caption{Hit-distance inconsistency measure ($\ihdalal$)}
    \end{subfigure}
    \caption{Runtime comparison of the ASP-based and naive approaches on the ARG data set. Timeout: $10$ minutes.}
    \label{fig:scatter-ARG-asp-naive}
\end{figure}

\begin{figure}
    \begin{subfigure}[t]{.5\textwidth}
        \includegraphics[width=\textwidth]{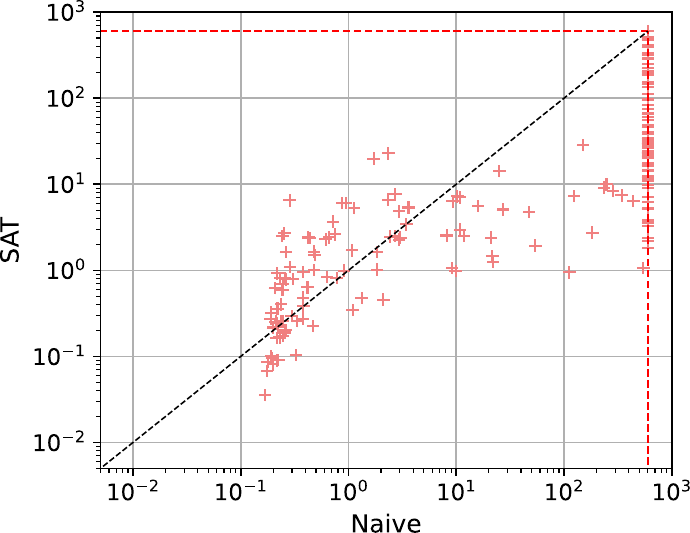}
        \caption{Contension inconsistency measure ($\icont$)}
    \end{subfigure}%
    \begin{subfigure}[t]{.5\textwidth}
        \includegraphics[width=\textwidth]{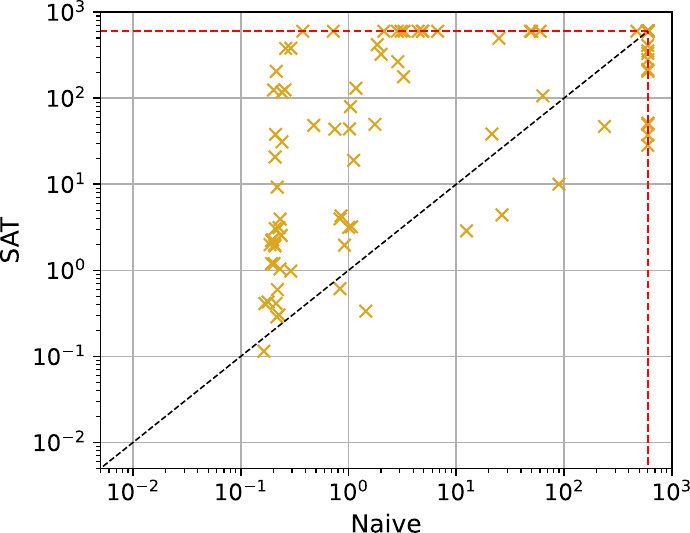}
        \caption{Forgetting-based inconsistency measure ($\iforget$)}
    \end{subfigure}\\[1ex]
    
    \begin{subfigure}[t]{.5\textwidth}
        \includegraphics[width=\textwidth]{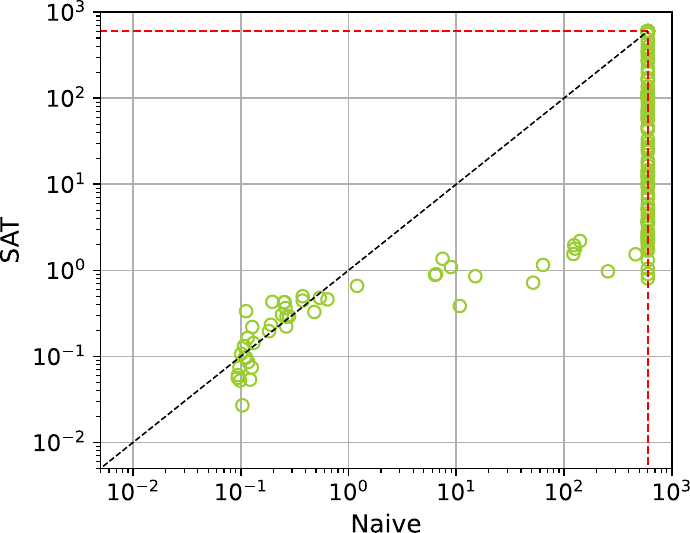}
        \caption{Hitting Set inconsistency measure ($\ihs$)}
    \end{subfigure}%
    \begin{subfigure}[t]{.5\textwidth}
        \includegraphics[width=\textwidth]{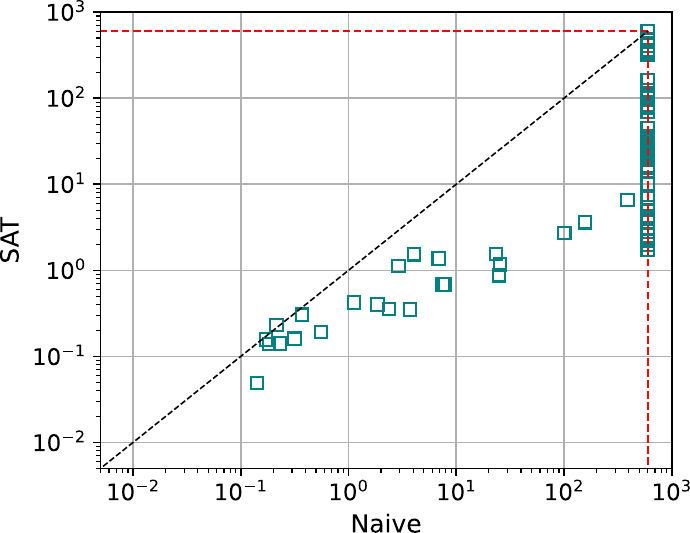}
        \caption{Max-distance inconsistency measure ($\imdalal$)}
    \end{subfigure}\\[1ex]
    
    \begin{subfigure}[t]{.5\textwidth}
        \includegraphics[width=\textwidth]{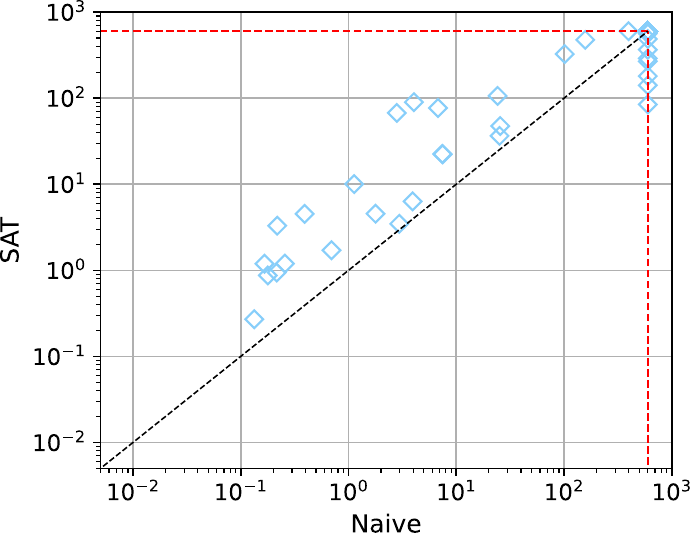}
        \caption{Sum-distance inconsistency measure ($\isdalal$)}
    \end{subfigure}%
    \begin{subfigure}[t]{.5\textwidth}
        \includegraphics[width=\textwidth]{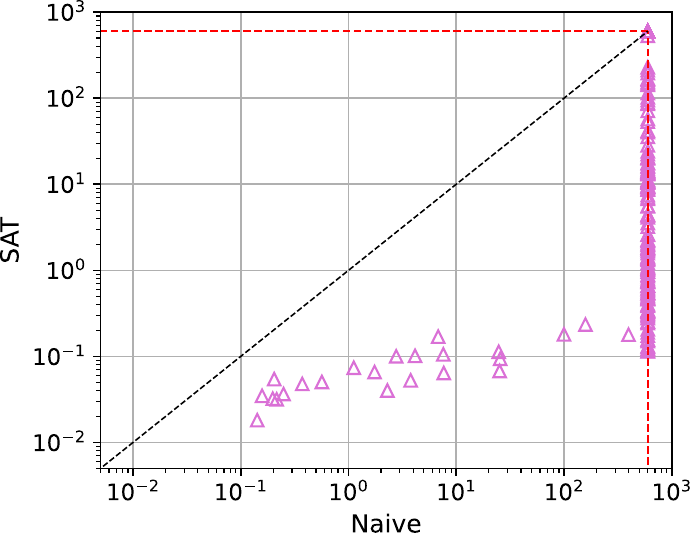}
        \caption{Hit-distance inconsistency measure ($\ihdalal$)}
    \end{subfigure}
    \caption{Runtime comparison of the SAT-based and naive approaches on the ARG data set. Timeout: $10$ minutes.}
    \label{fig:scatter-ARG-sat-naive}
\end{figure}

% SAT
\begin{figure}
    \begin{subfigure}[t]{.5\textwidth}
        \includegraphics[width=\textwidth]{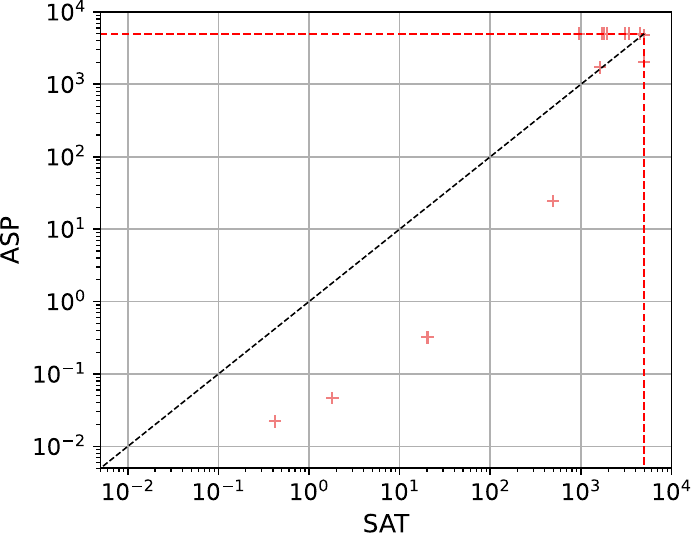}
        \caption{Contension inconsistency measure ($\icont$)}
    \end{subfigure}%
    \begin{subfigure}[t]{.5\textwidth}
        \includegraphics[width=\textwidth]{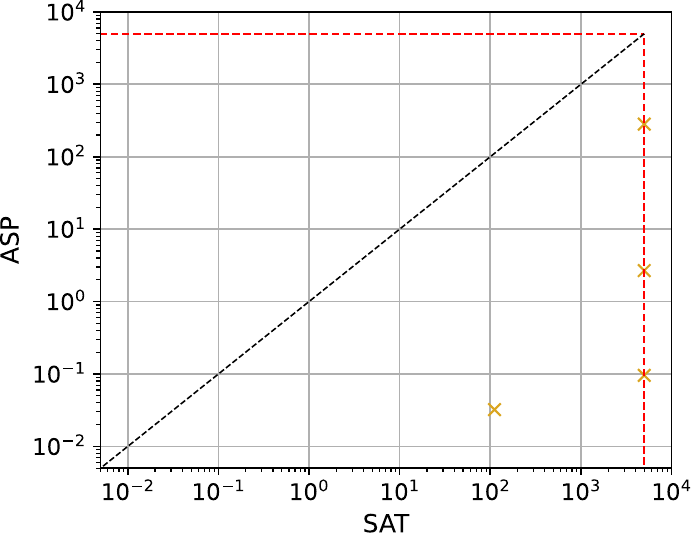}
        \caption{Forgetting-based inconsistency measure ($\iforget$)}
    \end{subfigure}\\[1ex]
    
    \begin{subfigure}[t]{.5\textwidth}
        \includegraphics[width=\textwidth]{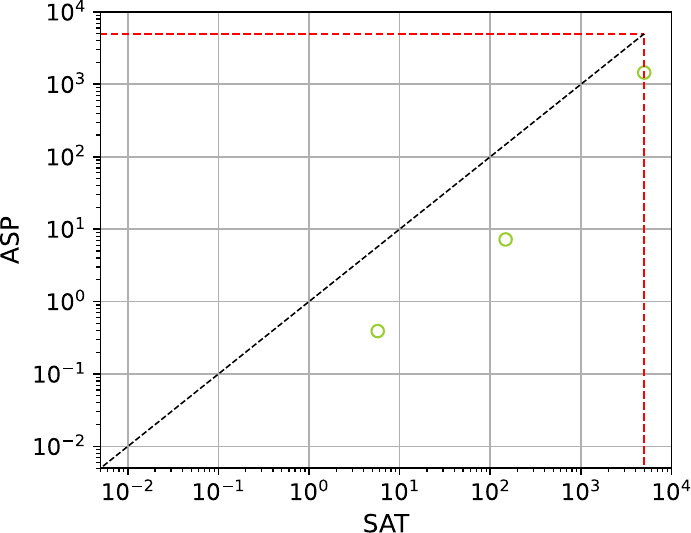}
        \caption{Hitting Set inconsistency measure ($\ihs$)}
    \end{subfigure}%
    \begin{subfigure}[t]{.5\textwidth}
        \includegraphics[width=\textwidth]{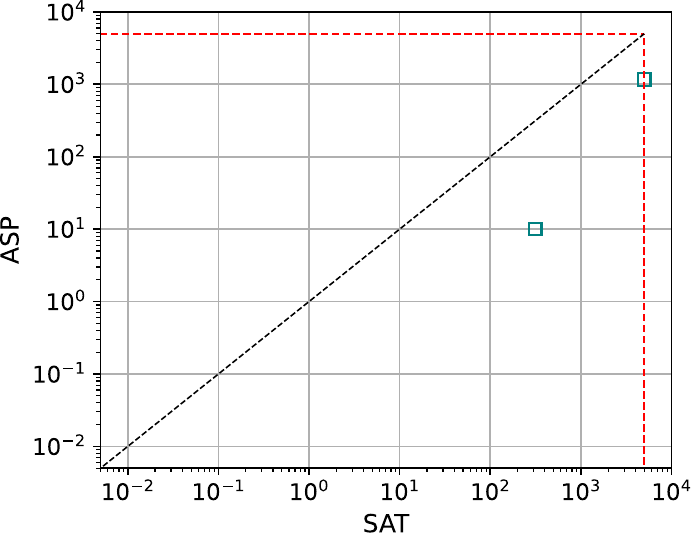}
        \caption{Max-distance inconsistency measure ($\imdalal$)}
    \end{subfigure}\\[1ex]
    
    \begin{subfigure}[t]{.5\textwidth}
        \includegraphics[width=\textwidth]{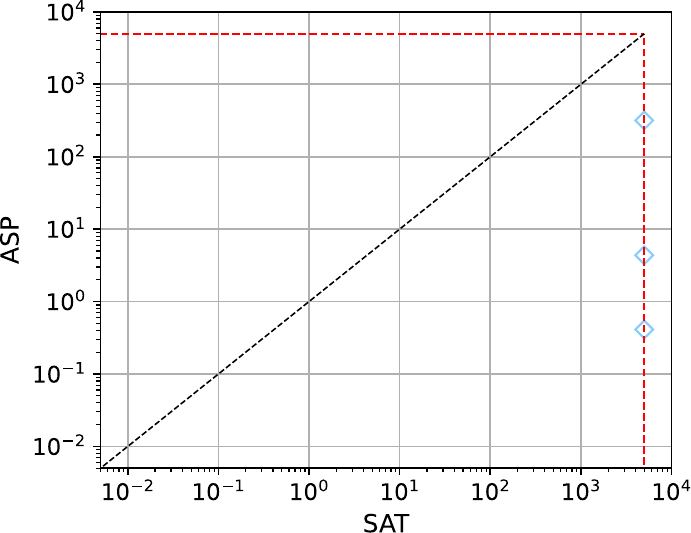}
        \caption{Sum-distance inconsistency measure ($\isdalal$)}
    \end{subfigure}%
    \begin{subfigure}[t]{.5\textwidth}
        \includegraphics[width=\textwidth]{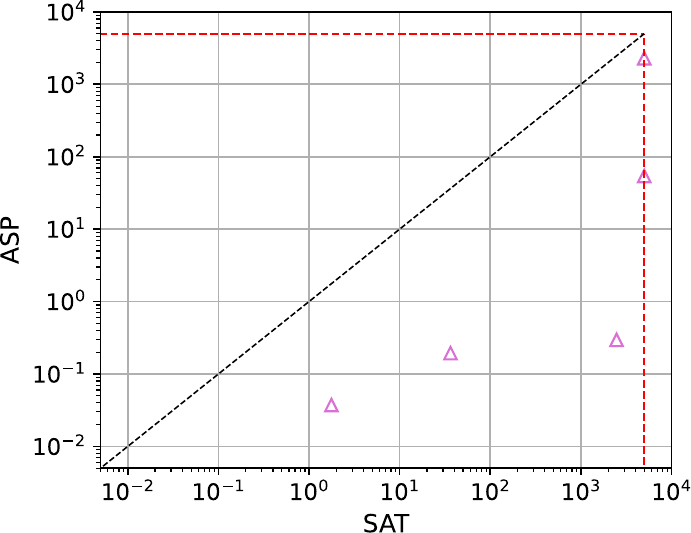}
        \caption{Hit-distance inconsistency measure ($\ihdalal$)}
    \end{subfigure}
    \caption{Runtime comparison of the ASP-based and SAT-based approaches on the SC data set. Timeout: $5000$ seconds.}
    \label{fig:scatter-SAT-asp-sat}
\end{figure}

\begin{figure}
    \begin{subfigure}[t]{.5\textwidth}
        \includegraphics[width=\textwidth]{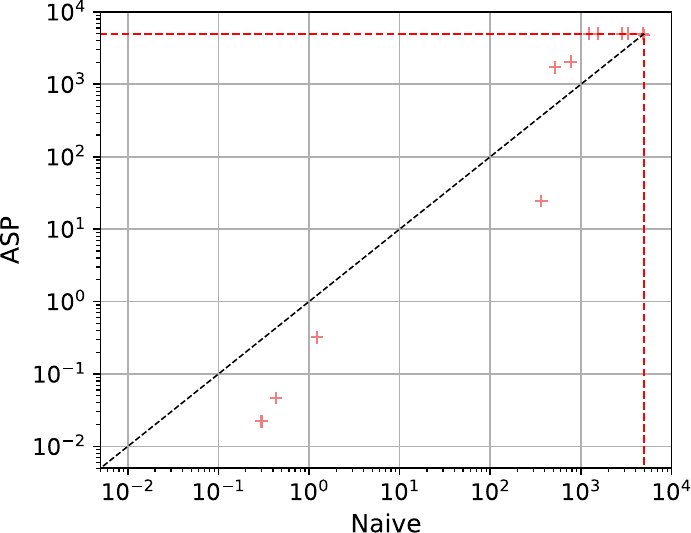}
        \caption{Contension inconsistency measure ($\icont$)}
    \end{subfigure}%
    \begin{subfigure}[t]{.5\textwidth}
        \includegraphics[width=\textwidth]{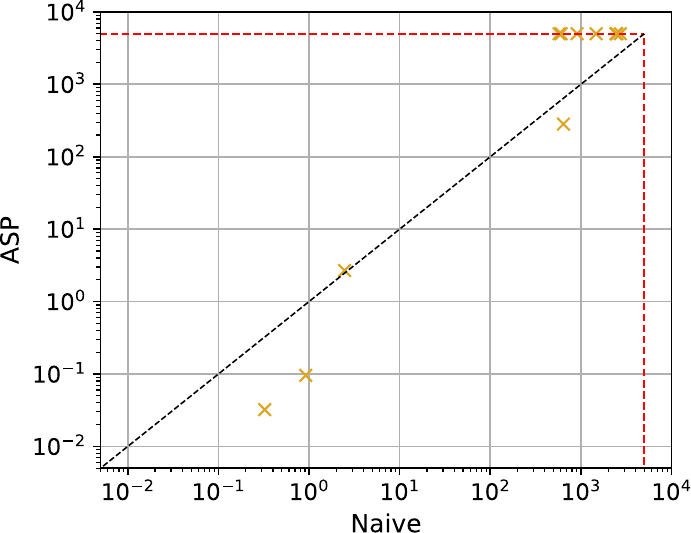}
        \caption{Forgetting-based inconsistency measure ($\iforget$)}
    \end{subfigure}\\[1ex]
    
    \begin{subfigure}[t]{.5\textwidth}
        \includegraphics[width=\textwidth]{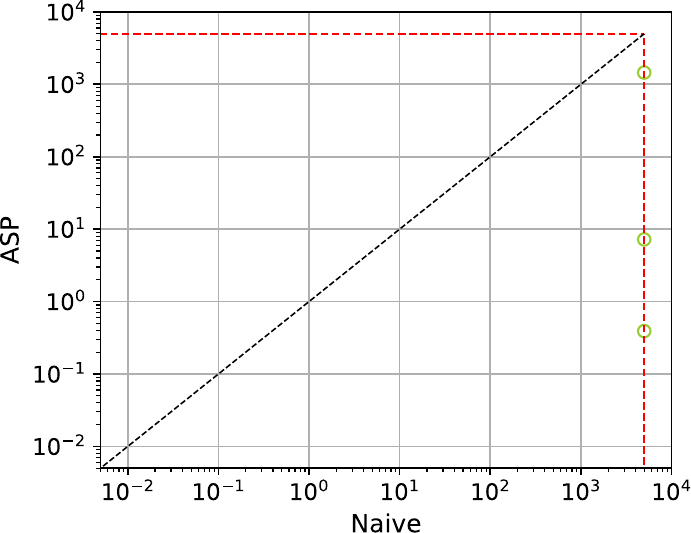}
        \caption{Hitting Set inconsistency measure ($\ihs$)}
    \end{subfigure}%
    \begin{subfigure}[t]{.5\textwidth}
        \includegraphics[width=\textwidth]{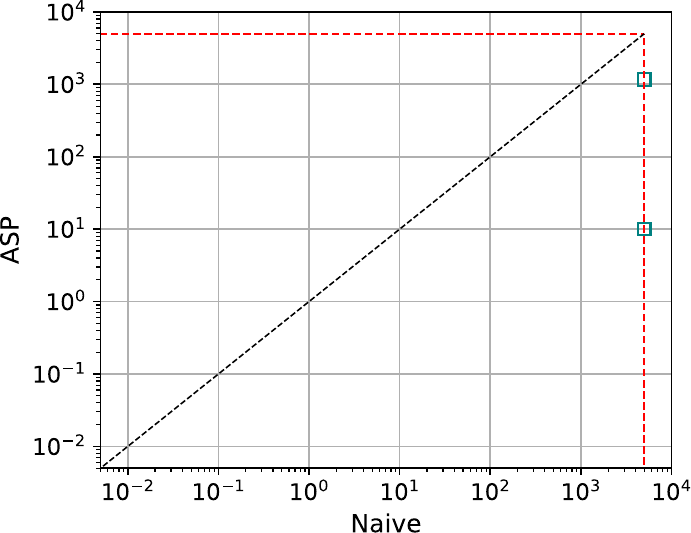}
        \caption{Max-distance inconsistency measure ($\imdalal$)}
    \end{subfigure}\\[1ex]
    
    \begin{subfigure}[t]{.5\textwidth}
        \includegraphics[width=\textwidth]{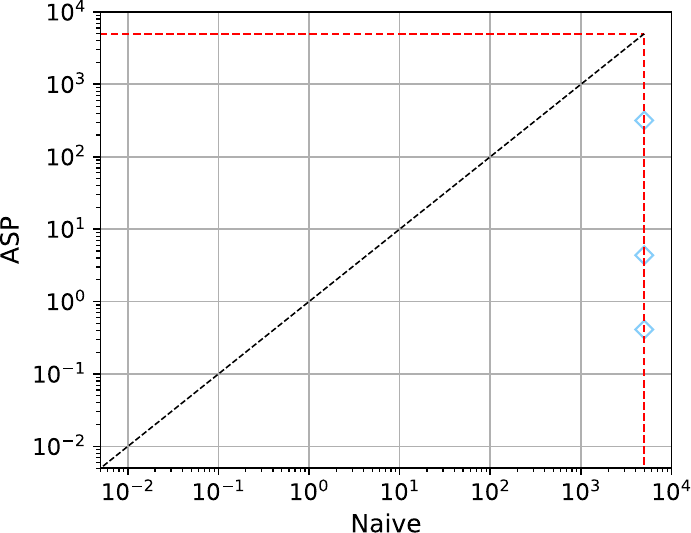}
        \caption{Sum-distance inconsistency measure ($\isdalal$)}
    \end{subfigure}%
    \begin{subfigure}[t]{.5\textwidth}
        \includegraphics[width=\textwidth]{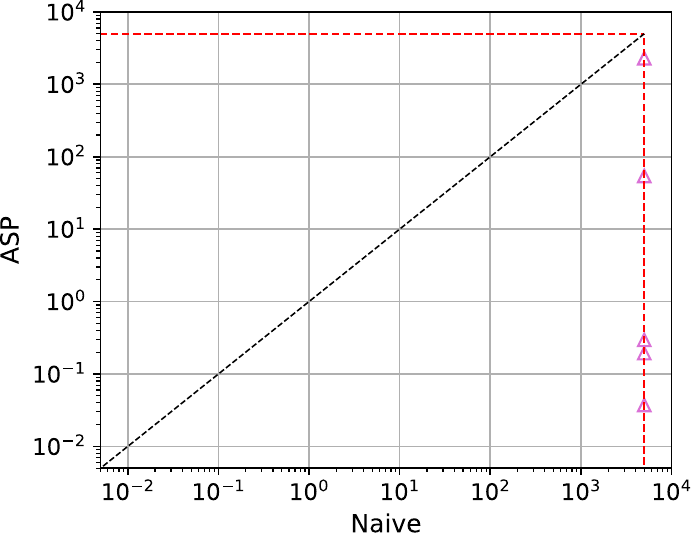}
        \caption{Hit-distance inconsistency measure ($\ihdalal$)}
    \end{subfigure}
    \caption{Runtime comparison of the ASP-based and naive approaches on the SC data set. Timeout: $5000$ seconds.}
    \label{fig:scatter-SAT-asp-naive}
\end{figure}

\begin{figure}
    \begin{subfigure}[t]{.5\textwidth}
        \includegraphics[width=\textwidth]{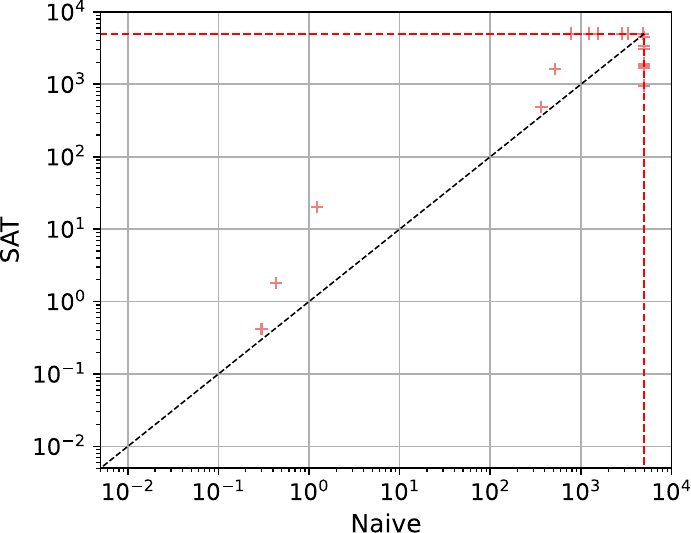}
        \caption{Contension inconsistency measure ($\icont$)}
    \end{subfigure}%
    \begin{subfigure}[t]{.5\textwidth}
        \includegraphics[width=\textwidth]{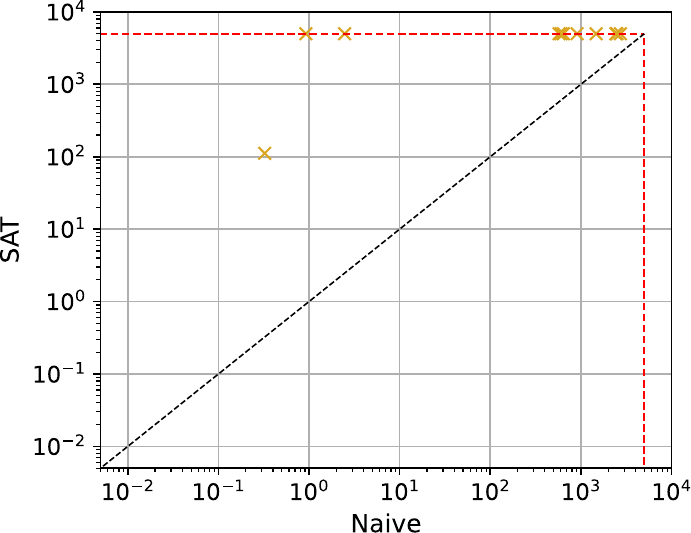}
        \caption{Forgetting-based inconsistency measure ($\iforget$)}
    \end{subfigure}\\[1ex]
    
    \begin{subfigure}[t]{.5\textwidth}
        \includegraphics[width=\textwidth]{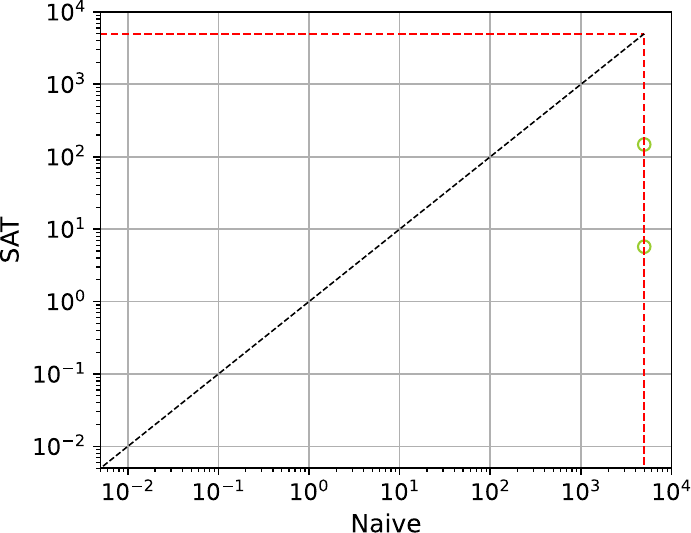}
        \caption{Hitting Set inconsistency measure ($\ihs$)}
    \end{subfigure}%
    \begin{subfigure}[t]{.5\textwidth}
        \includegraphics[width=\textwidth]{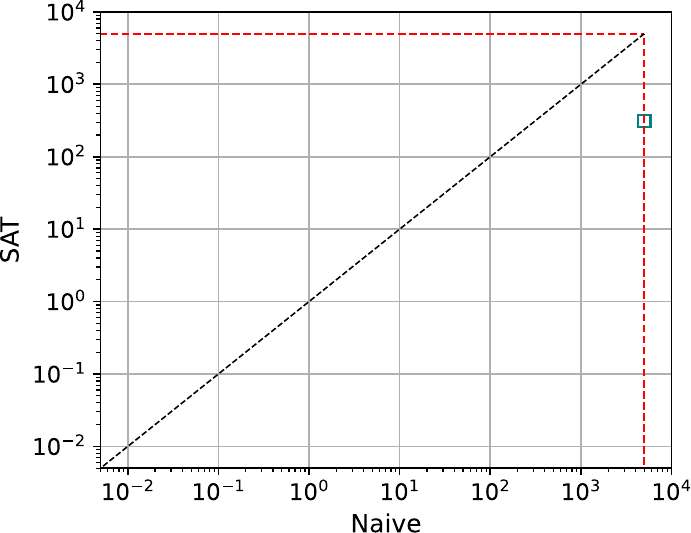}
        \caption{Max-distance inconsistency measure ($\imdalal$)}
    \end{subfigure}\\[1ex]
    
    \begin{subfigure}[t]{.5\textwidth}
        \includegraphics[width=\textwidth]{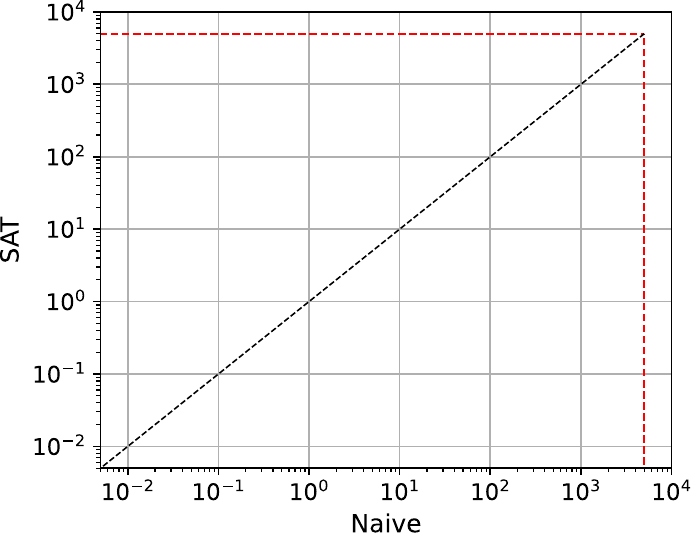}
        \caption{Sum-distance inconsistency measure ($\isdalal$)}
    \end{subfigure}%
    \begin{subfigure}[t]{.5\textwidth}
        \includegraphics[width=\textwidth]{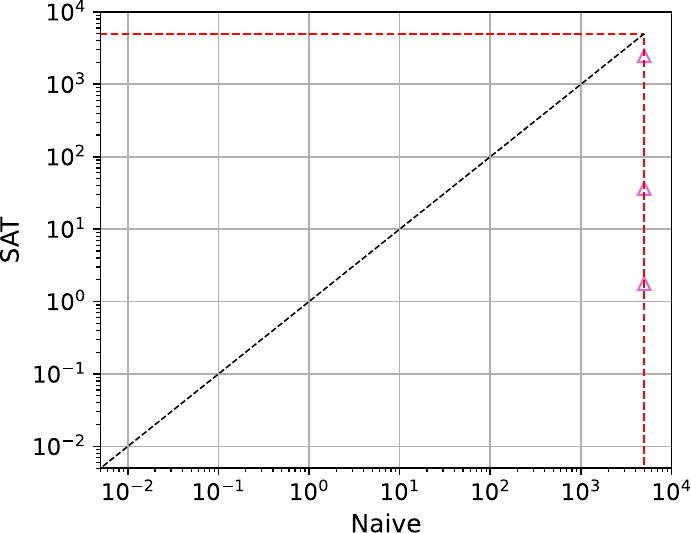}
        \caption{Hit-distance inconsistency measure ($\ihdalal$)}
    \end{subfigure}
    \caption{Runtime comparison of the SAT-based and naive approaches on the SC data set. Timeout: $5000$ seconds.}
    \label{fig:scatter-SAT-sat-naive}
\end{figure}

% ASP
% \ik{LP komplett weglassen? Falls ja, kurz im Text erklären.}

% SAT-linar vs. SAT-binary:
\begin{figure}
    \begin{subfigure}[t]{.5\textwidth}
        \includegraphics[width=\textwidth]{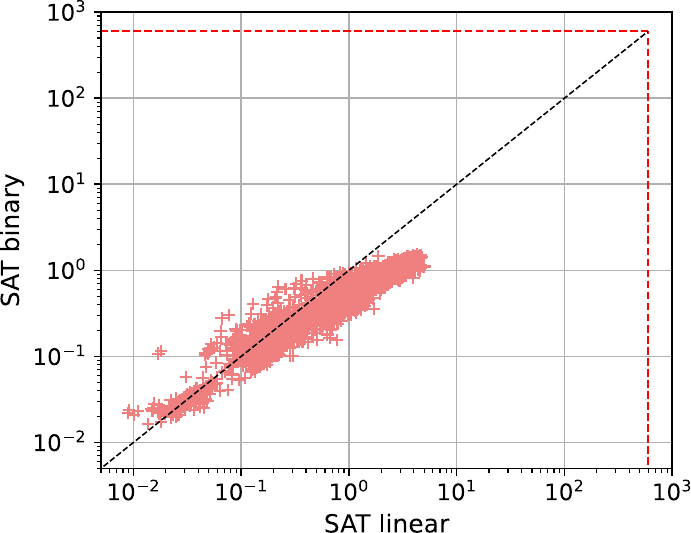}
        \caption{Contension inconsistency measure ($\icont$)}
    \end{subfigure}%
    \begin{subfigure}[t]{.5\textwidth}
        \includegraphics[width=\textwidth]{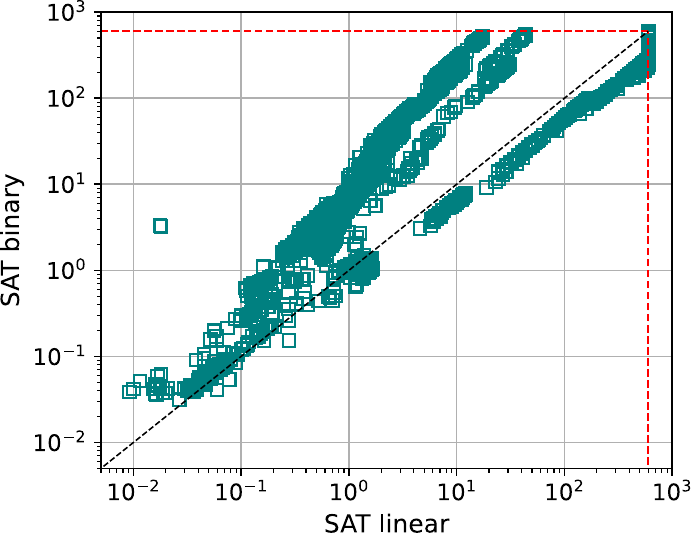}
        \caption{Max-distance inconsistency measure ($\imdalal$)}
    \end{subfigure}\\[1ex]
    
    \begin{subfigure}[t]{.5\textwidth}
        \includegraphics[width=\textwidth]{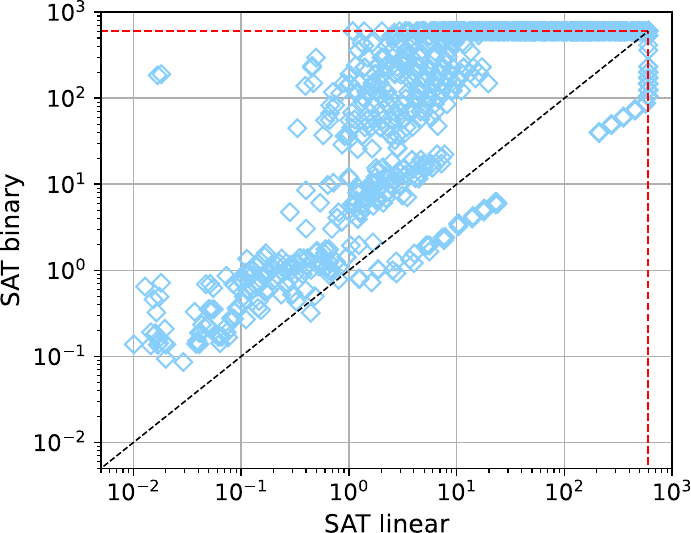}
        \caption{Sum-distance inconsistency measure ($\isdalal$)}
    \end{subfigure}%
    \begin{subfigure}[t]{.5\textwidth}
        \includegraphics[width=\textwidth]{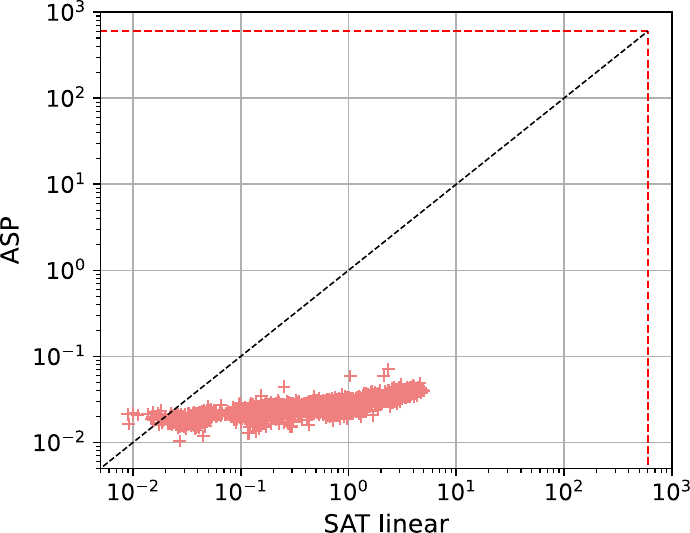}
        \caption{Contension inconsistency measure ($\icont$)}
    \end{subfigure}\\[1ex]
    
    \begin{subfigure}[t]{.5\textwidth}
        \includegraphics[width=\textwidth]{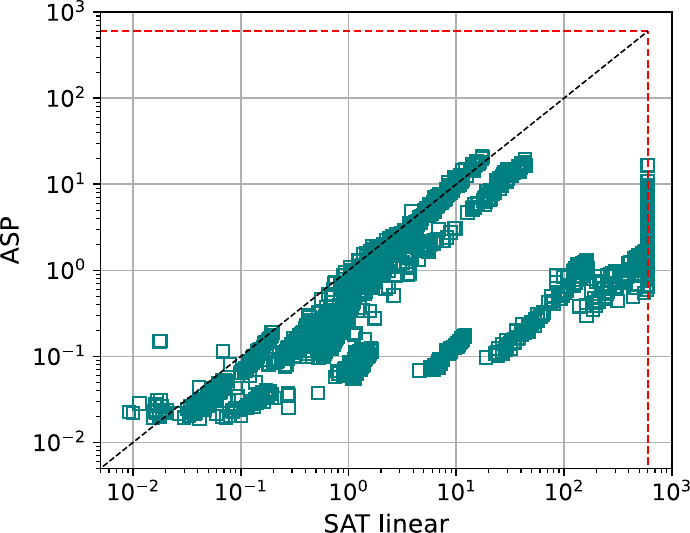}
        \caption{Max-distance inconsistency measure ($\imdalal$)}
    \end{subfigure}%
    \begin{subfigure}[t]{.5\textwidth}
        \includegraphics[width=\textwidth]{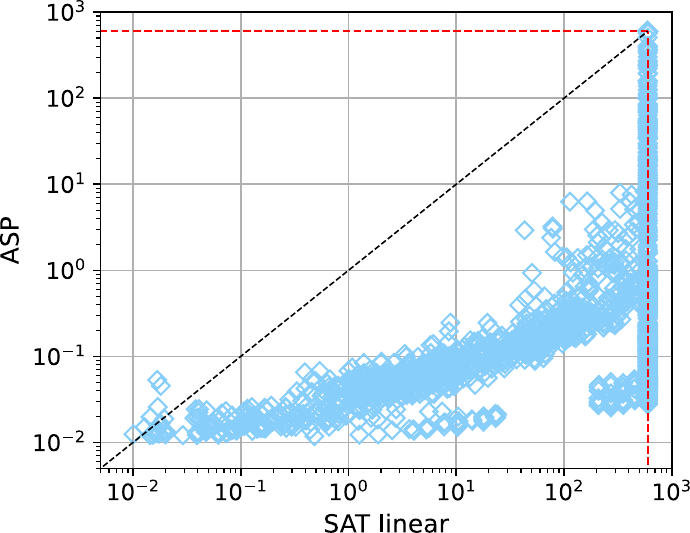}
        \caption{Sum-distance inconsistency measure ($\isdalal$)}
        % \caption{Sum-distance inconsistency measure ($\isdalal$)}
    \end{subfigure}
    \caption{Runtime comparison of the SAT-based approaches based on linear search (for $\icont$, $\imdalal$, and $\isdalal$) and the corresponding binary search versions, as well as the ASP-based versions, on the SRS data set. Timeout: $600$ seconds.}
    \label{fig:scatter-SAT-binary-vs-linear-1}
\end{figure}

\begin{figure}
    \begin{subfigure}[t]{.5\textwidth}
        \includegraphics[width=\textwidth]{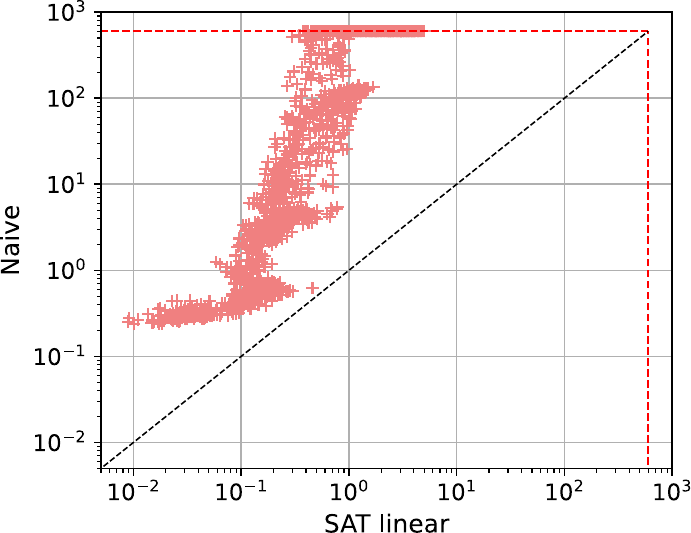}
        \caption{Contension inconsistency measure ($\icont$)}
    \end{subfigure}%
    \begin{subfigure}[t]{.5\textwidth}
        \includegraphics[width=\textwidth]{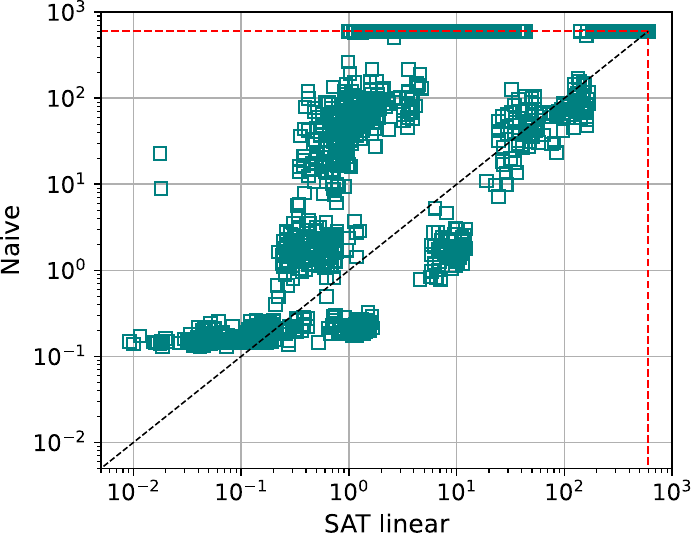}
        \caption{Max-distance inconsistency measure ($\imdalal$)}
    \end{subfigure}\\[1ex]
    
    \begin{subfigure}[t]{.5\textwidth}
        \includegraphics[width=\textwidth]{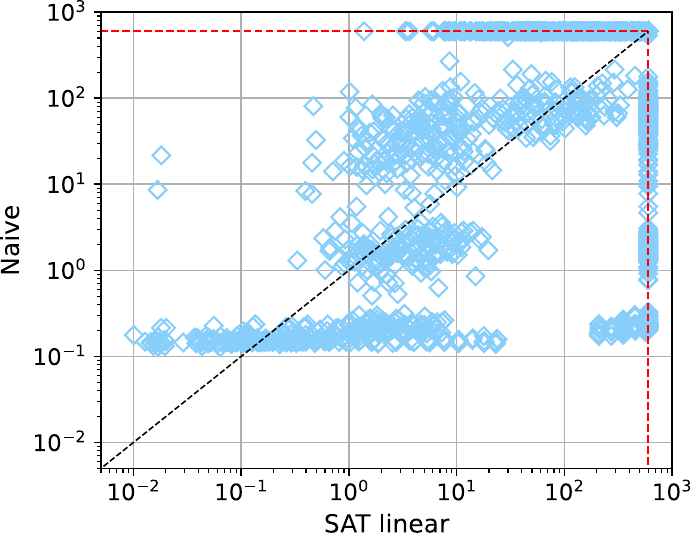}
        \caption{Sum-distance inconsistency measure ($\isdalal$)}
    \end{subfigure}%
    \hfill

    \caption{Runtime comparison of the SAT-based approaches based on linear search (for $\icont$, $\imdalal$, and $\isdalal$) and the corresponding naive methods, on the SRS data set. Timeout: $600$ seconds.}
    \label{fig:scatter-SAT-binary-vs-linear-2}
\end{figure}

% MaxSAT:
\begin{figure}
    \begin{subfigure}[t]{.5\textwidth}
        \includegraphics[width=\textwidth]{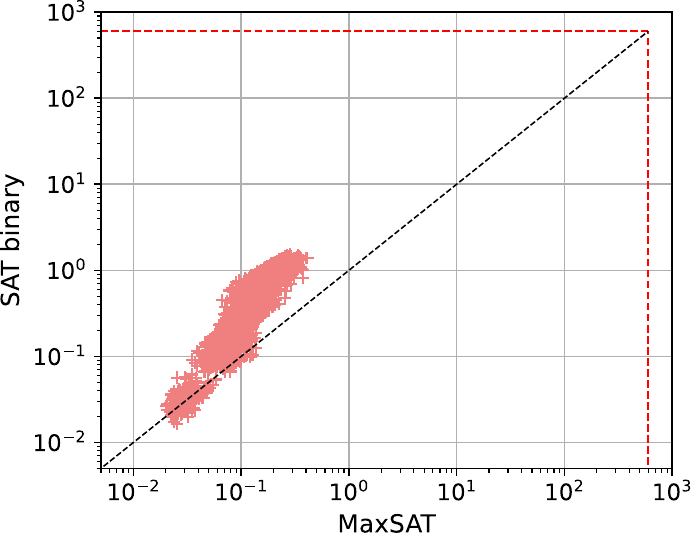}
        % \caption{Contension inconsistency measure ($\icont$)}
    \end{subfigure}%
    \begin{subfigure}[t]{.5\textwidth}
        \includegraphics[width=\textwidth]{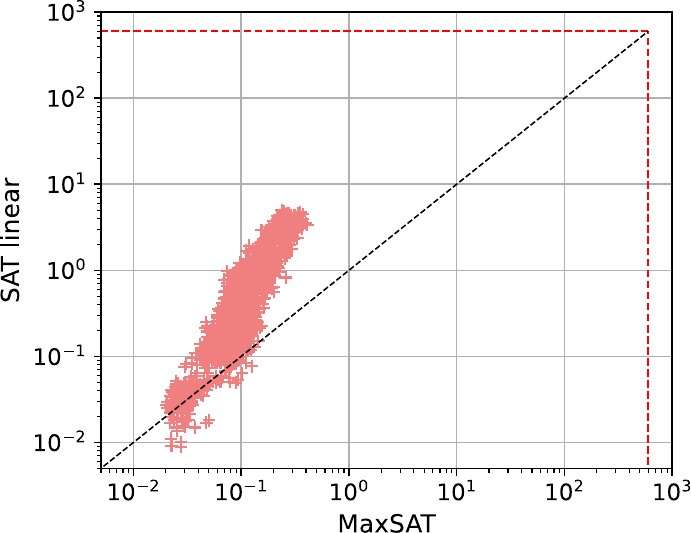}
        % \caption{Forgetting-based inconsistency measure ($\iforget$)}
    \end{subfigure}\\[1ex]
    
    \begin{subfigure}[t]{.5\textwidth}
        \includegraphics[width=\textwidth]{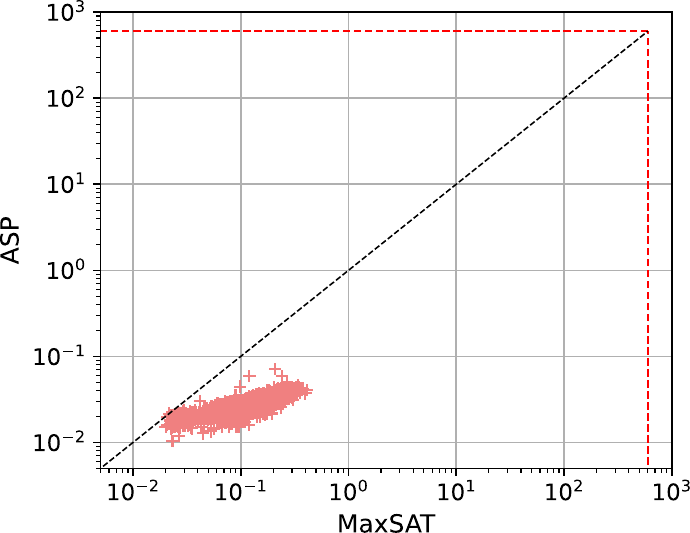}
        % \caption{Hitting Set inconsistency measure ($\ihs$)}
    \end{subfigure}%
    \begin{subfigure}{.5\textwidth}
        \includegraphics[width=\textwidth]{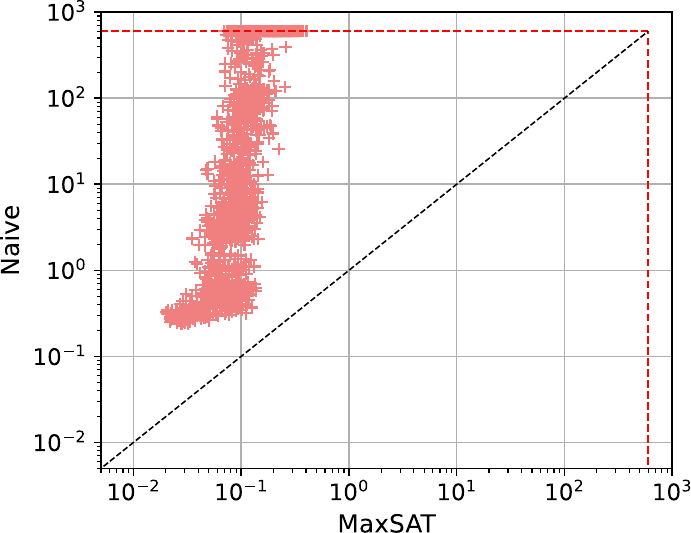}
        % \caption{Max-distance inconsistency measure ($\imdalal$)}
    \end{subfigure}
    \caption{Runtime comparison of the MaxSAT approach for $\icont$ and the corresponding other approaches on the SRS data set. Timeout: $600$ seconds.}
    \label{fig:scatter-maxsat}
\end{figure}

% ASP usc vs. bb:
% \begin{figure}
%     \centering
%     \includegraphics[width=.5\textwidth]{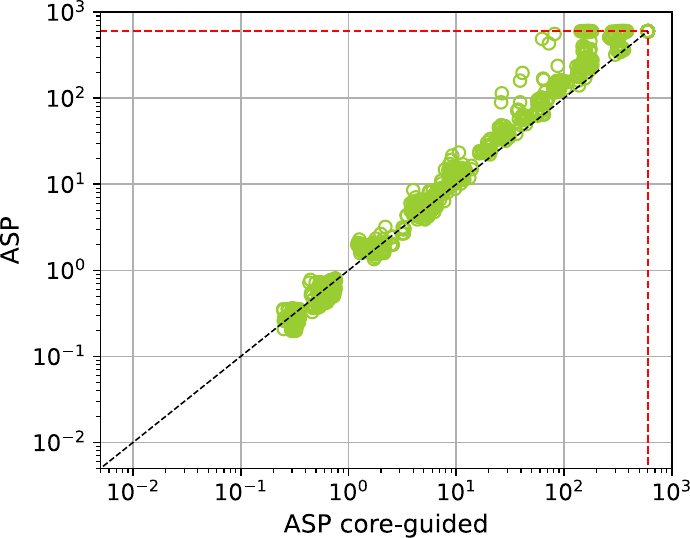}
%     \caption{Runtime comparison of the ASP approach for $\ihs$ with core-guided vs.\ with branch-and-bound optimization (default) on the ML data set. Timeout: $600$ seconds.}
%     \label{fig:scatter-asp-usc-bb}
% \end{figure}
\begin{figure}
    \centering
    \begin{subfigure}[t]{.49\textwidth}
        \includegraphics[width=\textwidth]{img/asp_usc/scatter_ASP_usc-bb.pdf}
        \caption{ML / $\ihs$}
    \end{subfigure}\\[1ex]
    \begin{subfigure}[t]{.49\textwidth}
        \includegraphics[width=\textwidth]{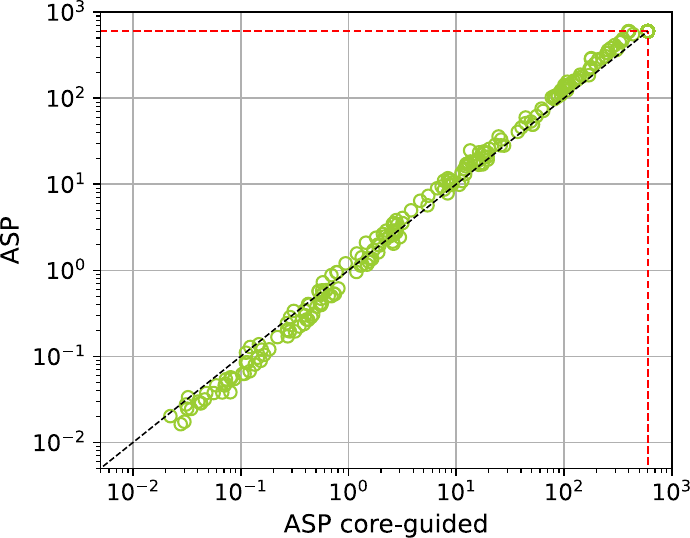}
        \caption{ARG / $\ihs$}
    \end{subfigure}%
    \hfill%
    \begin{subfigure}[t]{.49\textwidth}
        \includegraphics[width=\textwidth]{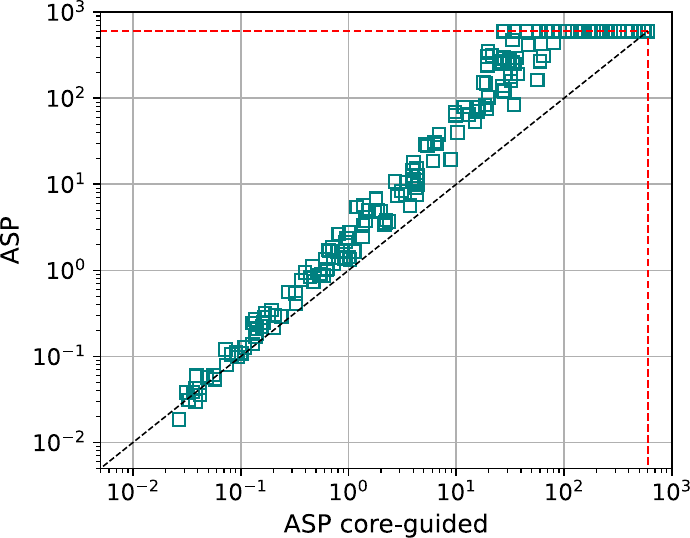}
        \caption{ARG / $\imdalal$}
    \end{subfigure}
    \caption{Runtime comparison of the ASP approach with core-guided vs.\ with branch-and-bound optimization (default) on % selected combinations of data sets and inconsistency measures. 
    the ML data set wrt.\ the hitting set measure ($\ihs$) and on the ARG data set wrt.\ $\ihs$ and the max-distance measure ($\imdalal$).
    Timeout: 10 minutes.}
    \label{fig:scatter-asp-usc-bb}
\end{figure}

\newpage

\subsection{Runtime Composition}

This section contains additional bar plots visualizing the average runtime composition of the SAT-based and ASP-based approaches wrt.\ each measure and the SRS data set (Figure~\ref{fig:runtime-composition-SRS}), as well as the ARG data set (Figure~\ref{fig:runtime-composition-ARG}).

\begin{figure}
    \centering
    \begin{subfigure}[t]{\textwidth}
        \includegraphics[width=.93\textwidth]{img/runtime_composition/runtime_composition_SRS_contension.pdf}
        \caption{Contension inconsistency measure ($\icont$)}
    \end{subfigure}\\[1ex]
    
    \begin{subfigure}[t]{\textwidth}
        \includegraphics[width=.93\textwidth]{img/runtime_composition/runtime_composition_SRS_fb.pdf}
        \caption{Forgetting-based inconsistency measure ($\iforget$)}
    \end{subfigure}\\[1ex]
    
    \begin{subfigure}[t]{\textwidth}
        \includegraphics[width=.93\textwidth]{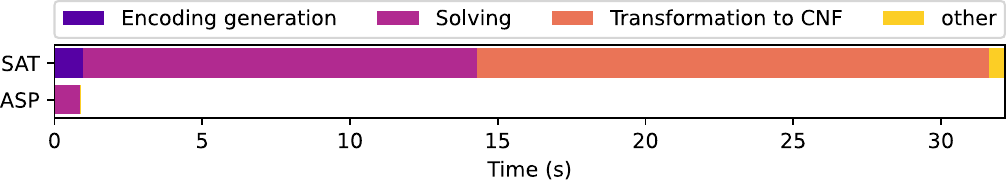}
        \caption{Hitting Set inconsistency measure ($\ihs$)}
    \end{subfigure}\\[1ex]
    
    \begin{subfigure}[t]{\textwidth}
        \includegraphics[width=.93\textwidth]{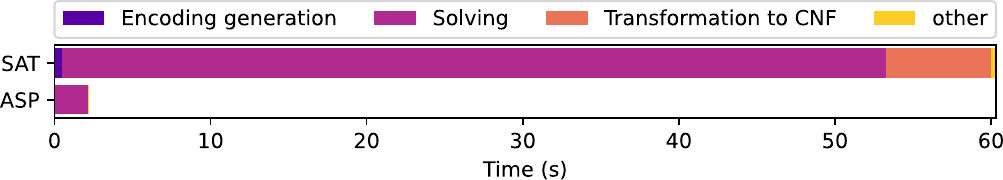}
        \caption{Max-distance inconsistency measure ($\imdalal$)}
    \end{subfigure}\\[1ex]
    
    \begin{subfigure}[t]{\textwidth}
        \includegraphics[width=.93\textwidth]{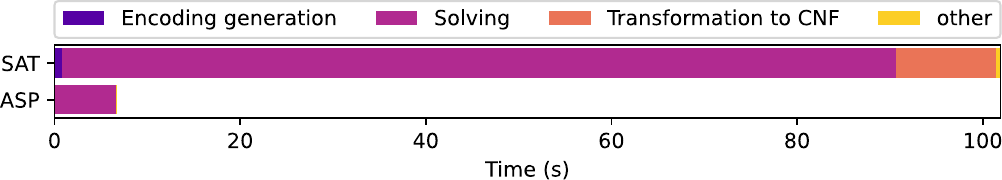}
        \caption{Sum-distance inconsistency measure ($\isdalal$)}
    \end{subfigure}\\[1ex]
    
    \begin{subfigure}[t]{\textwidth}
        \includegraphics[width=.93\textwidth]{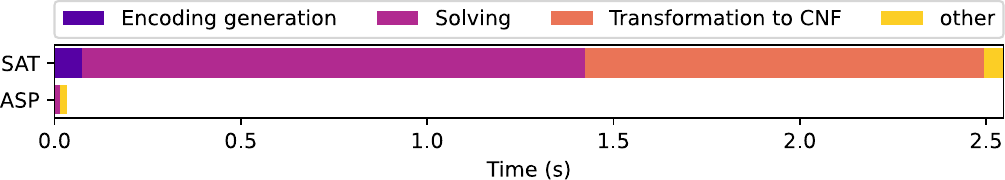}
        \caption{Hit-distance inconsistency measure ($\ihdalal$)}
    \end{subfigure}
    \caption{Comparison between the ASP-based and SAT-based approaches in terms of runtime composition wrt.\ the SRS data set.}
    \label{fig:runtime-composition-SRS}
\end{figure}

\begin{figure}
    \centering
    \begin{subfigure}[t]{.93\textwidth}
        \includegraphics[width=\textwidth]{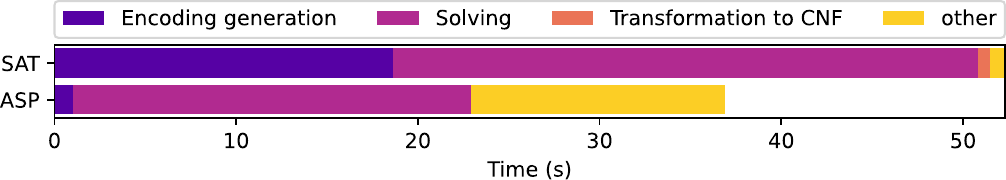}
        \caption{Contension inconsistency measure ($\icont$)}
    \end{subfigure}\\[1ex]
    
    \begin{subfigure}[t]{.93\textwidth}
        \includegraphics[width=\textwidth]{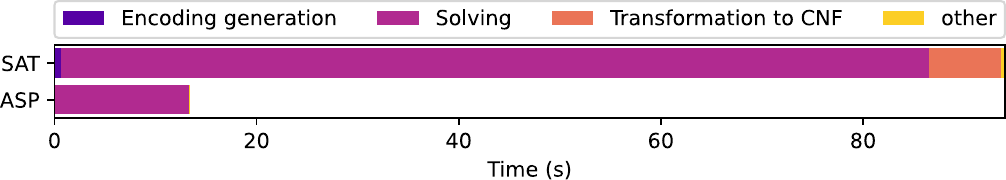}
        \caption{Forgetting-based inconsistency measure ($\iforget$)}
    \end{subfigure}\\[1ex]
    
    \begin{subfigure}[t]{.93\textwidth}
        \includegraphics[width=\textwidth]{img/runtime_composition/runtime_composition_ARG_hs.pdf}
        \caption{Hitting Set inconsistency measure ($\ihs$)}
    \end{subfigure}\\[1ex]
    
    \begin{subfigure}[t]{.93\textwidth}
        \includegraphics[width=\textwidth]{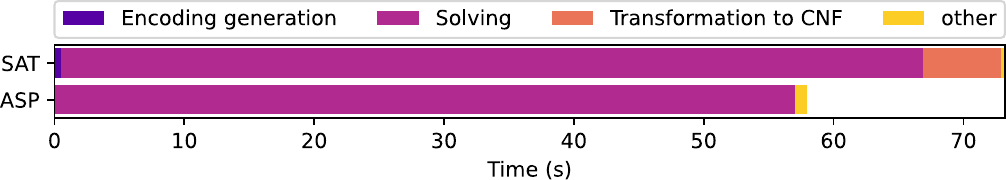}
        \caption{Max-distance inconsistency measure ($\imdalal$)}
    \end{subfigure}\\[1ex]
    
    \begin{subfigure}[t]{.93\textwidth}
        \includegraphics[width=\textwidth]{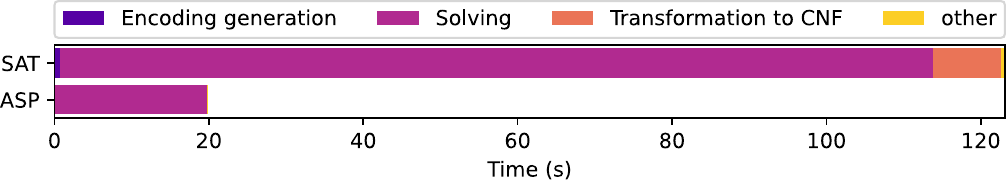}
        \caption{Sum-distance inconsistency measure ($\isdalal$)}
    \end{subfigure}\\[1ex]
    
    \begin{subfigure}[t]{.93\textwidth}
        \includegraphics[width=\textwidth]{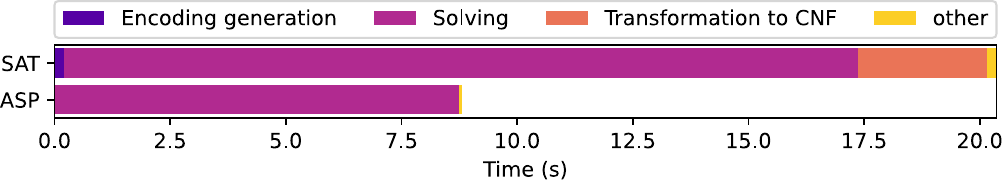}
        \caption{Hit-distance inconsistency measure ($\ihdalal$)}
    \end{subfigure}
    \caption{Comparison between the ASP-based and SAT-based approaches in terms of runtime composition wrt.\ the ARG data set.}
    \label{fig:runtime-composition-ARG}
\end{figure}

% \newpage
\clearpage
% \vskip 0.2in
\bibliography{references}

\begin{thebibliography}{}

\bibitem[\protect\BCAY{{Ab{\'\i}o}, {Nieuwenhuis}, {Oliveras},\ \BBA\
  {Rodr\'{\i}guez-Carbonell}}{{Ab{\'\i}o} et~al.}{2013}]{abio2013}
{Ab{\'\i}o}, I., {Nieuwenhuis}, R., {Oliveras}, A., \BBA\
  {Rodr\'{\i}guez-Carbonell}, E. \BBOP2013\BBCP.
\newblock \BBOQ A parametric approach for smaller and better encodings of
  cardinality constraints\BBCQ\
\newblock In {\Bem 19th International Conference on Principles and Practice of
  Constraint Programming}, CP'13, \BPGS\ 80--96.

\bibitem[\protect\BCAY{Agrawal\ \BBA\ Srikant}{Agrawal\ \BBA\
  Srikant}{1994}]{Agrawal:1994}
Agrawal, R.\BBACOMMA\  \BBA\ Srikant, R. \BBOP1994\BBCP.
\newblock \BBOQ Fast algorithms for mining association rules in large
  databases\BBCQ\
\newblock In {\Bem Proceedings VLDB'94}, \BPGS\ 487--499.

\bibitem[\protect\BCAY{Ammoura, Raddaoui, Salhi,\ \BBA\ Oukacha}{Ammoura
  et~al.}{2015}]{ammoura2015measuring}
Ammoura, M., Raddaoui, B., Salhi, Y., \BBA\ Oukacha, B. \BBOP2015\BBCP.
\newblock \BBOQ On measuring inconsistency using maximal consistent sets\BBCQ\
\newblock In {\Bem European Conference on Symbolic and Quantitative Approaches
  to Reasoning and Uncertainty}, \BPGS\ 267--276. Springer.

\bibitem[\protect\BCAY{Andres, Kaufmann, Matheis,\ \BBA\ Schaub}{Andres
  et~al.}{2012}]{andres2012unsatisfiability}
Andres, B., Kaufmann, B., Matheis, O., \BBA\ Schaub, T. \BBOP2012\BBCP.
\newblock \BBOQ Unsatisfiability-based optimization in clasp\BBCQ\
\newblock In {\Bem Technical Communications of the 28th International
  Conference on Logic Programming (ICLP'12)(2012)}. Schloss
  Dagstuhl--Leibniz-Zentrum f{\"u}r Informatik.

\bibitem[\protect\BCAY{Avellaneda}{Avellaneda}{2020}]{avellaneda2020short}
Avellaneda, F. \BBOP2020\BBCP.
\newblock \BBOQ A short description of the solver {EvalMaxSAT}\BBCQ\
\newblock {\Bem MaxSAT Evaluation}, {\Bem 8}.

\bibitem[\protect\BCAY{Bacchus, J{\"a}rvisalo,\ \BBA\ Martins}{Bacchus
  et~al.}{2021}]{bacchus2021maximum}
Bacchus, F., J{\"a}rvisalo, M., \BBA\ Martins, R. \BBOP2021\BBCP.
\newblock \BBOQ Maximum satisfiabiliy\BBCQ\
\newblock In {\Bem Handbook of Satisfiability}, \BPGS\ 929--991. IOS Press.

\bibitem[\protect\BCAY{Banbara, Gebser, Inoue, Ostrowski, Peano, Schaub, Soh,
  Tamura,\ \BBA\ Weise}{Banbara et~al.}{2015}]{banbara2015aspartame}
Banbara, M., Gebser, M., Inoue, K., Ostrowski, M., Peano, A., Schaub, T., Soh,
  T., Tamura, N., \BBA\ Weise, M. \BBOP2015\BBCP.
\newblock \BBOQ aspartame: Solving constraint satisfaction problems with answer
  set programming\BBCQ\
\newblock In {\Bem Logic Programming and Nonmonotonic Reasoning: 13th
  International Conference, LPNMR 2015, Lexington, KY, USA, September 27-30,
  2015. Proceedings 13}, \BPGS\ 112--126. Springer.

\bibitem[\protect\BCAY{Berg, J{\"a}rvisalo, Martins, Niskanen,\ \BBA\
  Paxian}{Berg et~al.}{2024}]{berg2024maxsat}
Berg, J., J{\"a}rvisalo, M., Martins, R., Niskanen, A., \BBA\ Paxian, T.
  \BBOP2024\BBCP.
\newblock \BBOQ {MaxSAT} evaluation 2024: Solver and benchmark
  descriptions\BBCQ.

\bibitem[\protect\BCAY{Bertossi}{Bertossi}{2018a}]{Bertossi:2018}
Bertossi, L. \BBOP2018a\BBCP.
\newblock \BBOQ Measuring and computing database inconsistency via
  repairs\BBCQ\
\newblock In {\Bem Proceedings of the 12th International Conference on Scalable
  Uncertainty Management (SUM'18)}.

\bibitem[\protect\BCAY{Bertossi}{Bertossi}{2018b}]{Bertossi2018}
Bertossi, L.~E. \BBOP2018b\BBCP.
\newblock \BBOQ Repair-based degrees of database inconsistency: Computation and
  complexity\BBCQ\
\newblock {\Bem CoRR}, {\Bem abs/1809.10286}.

\bibitem[\protect\BCAY{Besnard}{Besnard}{2016}]{besnard2016forget}
Besnard, P. \BBOP2016\BBCP.
\newblock \BBOQ Forgetting-based inconsistency measure\BBCQ\
\newblock In {\Bem Scalable Uncertainty Management: 10th International
  Conference, SUM 2016, Nice, France, September 21-23, 2016, Proceedings 10},
  \BPGS\ 331--337. Springer.

\bibitem[\protect\BCAY{Besnard, Doutre,\ \BBA\ Herzig}{Besnard
  et~al.}{2014}]{Besnard:2014}
Besnard, P., Doutre, S., \BBA\ Herzig, A. \BBOP2014\BBCP.
\newblock \BBOQ Encoding argument graphs in logic\BBCQ\
\newblock In Laurent, A., Strauss, O., Bouchon{-}Meunier, B., \BBA\ Yager,
  R.~R.\BEDS, {\Bem International Conference on Information Processing and
  Management of Uncertainty in Knowledge-based Systems - {IPMU} 2014},
  \lowercase{\BVOL}\ 443 of {\Bem Communications in Computer and Information
  Science}, \BPGS\ 345--354. Springer.

\bibitem[\protect\BCAY{Biere, Cimatti, Clarke, Fujita,\ \BBA\ Zhu}{Biere
  et~al.}{1999}]{biere1999symbolic}
Biere, A., Cimatti, A., Clarke, E.~M., Fujita, M., \BBA\ Zhu, Y.
  \BBOP1999\BBCP.
\newblock \BBOQ Symbolic model checking using {SAT} procedures instead of
  {BDDs}\BBCQ\
\newblock In {\Bem Proceedings of the 36th annual ACM/IEEE Design Automation
  Conference}, \BPGS\ 317--320.

\bibitem[\protect\BCAY{Biere, Fazekas, Fleury,\ \BBA\ Heisinger}{Biere
  et~al.}{2020}]{Biere-SAT-2020}
Biere, A., Fazekas, K., Fleury, M., \BBA\ Heisinger, M. \BBOP2020\BBCP.
\newblock \BBOQ {CaDiCaL}, {Kissat}, {Paracooba}, {Plingeling} and
  {Treengeling} entering the {SAT Competition 2020}\BBCQ\
\newblock In {\Bem Proc.~of {SAT Competition} 2020 -- Solver and Benchmark
  Descriptions}, \lowercase{\BVOL}\ B-2020-1 of {\Bem Department of Computer
  Science Report Series B}, \BPGS\ 51--53. University of Helsinki.

\bibitem[\protect\BCAY{Biere, Heule, Maaren,\ \BBA\ Walsh}{Biere
  et~al.}{2009}]{biere2009}
Biere, A., Heule, M., Maaren, H., \BBA\ Walsh, T. \BBOP2009\BBCP.
\newblock {\Bem Handbook of Satisfiability}.
\newblock Frontiers in artificial intelligence and applications. IOS Press.

\bibitem[\protect\BCAY{Brewka, Eiter,\ \BBA\ Truszczynski}{Brewka
  et~al.}{2011}]{BrewkaET11}
Brewka, G., Eiter, T., \BBA\ Truszczynski, M. \BBOP2011\BBCP.
\newblock \BBOQ Answer set programming at a glance\BBCQ\
\newblock {\Bem Communications of the {ACM}}, {\Bem 54\/}(12), 92--103.

\bibitem[\protect\BCAY{Calimeri, Faber, Gebser, Ianni, Kaminski, Krennwallner,
  Leone, Maratea, Ricca,\ \BBA\ Schaub}{Calimeri
  et~al.}{2020}]{calimeri2020asp}
Calimeri, F., Faber, W., Gebser, M., Ianni, G., Kaminski, R., Krennwallner, T.,
  Leone, N., Maratea, M., Ricca, F., \BBA\ Schaub, T. \BBOP2020\BBCP.
\newblock \BBOQ {ASP-Core-2} input language format\BBCQ\
\newblock {\Bem Theory and Practice of Logic Programming}, {\Bem 20\/}(2),
  294--309.

\bibitem[\protect\BCAY{Cholvy, Perrussel,\ \BBA\ Thevenin}{Cholvy
  et~al.}{2017}]{Cholvy:2017}
Cholvy, L., Perrussel, L., \BBA\ Thevenin, J.-M. \BBOP2017\BBCP.
\newblock \BBOQ Using inconsistency measures for estimating reliability\BBCQ\
\newblock {\Bem International Journal of Approximate Reasoning}, {\Bem 89},
  41--57.

\bibitem[\protect\BCAY{Clark}{Clark}{1978}]{Clark:1978}
Clark, K. \BBOP1978\BBCP.
\newblock \BBOQ Negation as failure\BBCQ\
\newblock In Gallaire, H.\BBACOMMA\  \BBA\ Minker, J.\BEDS, {\Bem Logic and
  Data Bases}, \BPGS\ 293--322. Plenum Press, New York.

\bibitem[\protect\BCAY{Corea, Grant,\ \BBA\ Thimm}{Corea
  et~al.}{2022}]{Corea:2022}
Corea, C., Grant, J., \BBA\ Thimm, M. \BBOP2022\BBCP.
\newblock \BBOQ Measuring inconsistency in declarative process
  specifications\BBCQ\
\newblock In {\Bem Proceedings of the 20th International Conference on Business
  Process Management (BPM'22)}.

\bibitem[\protect\BCAY{Corea, Thimm,\ \BBA\ Delfmann}{Corea
  et~al.}{2021}]{Corea:2021}
Corea, C., Thimm, M., \BBA\ Delfmann, P. \BBOP2021\BBCP.
\newblock \BBOQ Measuring inconsistency over sequences of business rule
  cases\BBCQ\
\newblock In Bienvenu, M., Lakemeyer, G., \BBA\ Erdem, E.\BEDS, {\Bem
  Proceedings of the 18th International Conference on Principles of Knowledge
  Representation and Reasoning (KR'21)}, \BPGS\ 656--660. IJCAI Organization.

\bibitem[\protect\BCAY{Decker\ \BBA\ Misra}{Decker\ \BBA\
  Misra}{2017}]{Decker:2017}
Decker, H.\BBACOMMA\  \BBA\ Misra, S. \BBOP2017\BBCP.
\newblock \BBOQ Database inconsistency measures and their applications\BBCQ\
\newblock In {\Bem Proceedings of the 23rd International Conference on
  Information and Software Technologies (ICIST 2017)}.

\bibitem[\protect\BCAY{Dung}{Dung}{1995}]{Dung:1995}
Dung, P.~M. \BBOP1995\BBCP.
\newblock \BBOQ On the acceptability of arguments and its fundamental role in
  nonmonotonic reasoning, logic programming and n-person games\BBCQ\
\newblock {\Bem Artificial Intelligence}, {\Bem 77\/}(2), 321--358.

\bibitem[\protect\BCAY{Eyupoglu, Fidan, Gulesen, Izci, Teber, Yilmaz, Alkan,\
  \BBA\ Erdem}{Eyupoglu et~al.}{2021}]{eyupoglu2021stable}
Eyupoglu, S., Fidan, M., Gulesen, Y., Izci, I.~B., Teber, B., Yilmaz, B.,
  Alkan, A., \BBA\ Erdem, E. \BBOP2021\BBCP.
\newblock \BBOQ Stable marriage problems with ties and incomplete preferences:
  An empirical comparison of {ASP}, {SAT}, {ILP}, {CP}, and local search
  methods\BBCQ.

\bibitem[\protect\BCAY{Fu\ \BBA\ Malik}{Fu\ \BBA\ Malik}{2006}]{fu2006solving}
Fu, Z.\BBACOMMA\  \BBA\ Malik, S. \BBOP2006\BBCP.
\newblock \BBOQ On solving the partial {MAX-SAT} problem\BBCQ\
\newblock In {\Bem International Conference on Theory and Applications of
  Satisfiability Testing}, \BPGS\ 252--265. Springer.

\bibitem[\protect\BCAY{Gebser, Kaminski, Kaufmann, Romero,\ \BBA\
  Schaub}{Gebser et~al.}{2015}]{gebser2015progress}
Gebser, M., Kaminski, R., Kaufmann, B., Romero, J., \BBA\ Schaub, T.
  \BBOP2015\BBCP.
\newblock \BBOQ Progress in clasp series 3\BBCQ\
\newblock In {\Bem Logic Programming and Nonmonotonic Reasoning: 13th
  International Conference, LPNMR 2015, Lexington, KY, USA, September 27-30,
  2015. Proceedings 13}, \BPGS\ 368--383. Springer.

\bibitem[\protect\BCAY{Gebser, Kaminski, Kaufmann,\ \BBA\ Schaub}{Gebser
  et~al.}{2012}]{gebser2012answer}
Gebser, M., Kaminski, R., Kaufmann, B., \BBA\ Schaub, T. \BBOP2012\BBCP.
\newblock \BBOQ Answer set solving in practice\BBCQ\
\newblock {\Bem Synthesis lectures on artificial intelligence and machine
  learning}, {\Bem 6\/}(3), 1--238.

\bibitem[\protect\BCAY{Gebser, Kaminski, Kaufmann,\ \BBA\ Schaub}{Gebser
  et~al.}{2019}]{gebser2019multi}
Gebser, M., Kaminski, R., Kaufmann, B., \BBA\ Schaub, T. \BBOP2019\BBCP.
\newblock \BBOQ Multi-shot {ASP} solving with clingo\BBCQ\
\newblock {\Bem Theory and Practice of Logic Programming}, {\Bem 19\/}(1),
  27--82.

\bibitem[\protect\BCAY{Gebser, Kaufmann,\ \BBA\ Schaub}{Gebser
  et~al.}{2012}]{gebser2012conflict}
Gebser, M., Kaufmann, B., \BBA\ Schaub, T. \BBOP2012\BBCP.
\newblock \BBOQ Conflict-driven answer set solving: From theory to
  practice\BBCQ\
\newblock {\Bem Artificial Intelligence}, {\Bem 187}, 52--89.

\bibitem[\protect\BCAY{Gelfond\ \BBA\ Lifschitz}{Gelfond\ \BBA\
  Lifschitz}{1988}]{GelfondL88}
Gelfond, M.\BBACOMMA\  \BBA\ Lifschitz, V. \BBOP1988\BBCP.
\newblock \BBOQ The stable model semantics for logic programming\BBCQ\
\newblock In {\Bem Proc.~ ICLP/SLP}, \BPGS\ 1070--1080. {MIT} Press.

\bibitem[\protect\BCAY{Giunchiglia\ \BBA\ Maratea}{Giunchiglia\ \BBA\
  Maratea}{2006}]{giunchiglia2006solving}
Giunchiglia, E.\BBACOMMA\  \BBA\ Maratea, M. \BBOP2006\BBCP.
\newblock \BBOQ Solving optimization problems with {DLL}\BBCQ\
\newblock In {\Bem ECAI}, \lowercase{\BVOL}\ 141, \BPGS\ 377--381.

\bibitem[\protect\BCAY{Grant\ \BBA\ Hunter}{Grant\ \BBA\
  Hunter}{2011}]{grant2011measuring}
Grant, J.\BBACOMMA\  \BBA\ Hunter, A. \BBOP2011\BBCP.
\newblock \BBOQ Measuring consistency gain and information loss in stepwise
  inconsistency resolution\BBCQ\
\newblock In {\Bem Proceedings ECSQARU'11}, \BPGS\ 362--373. Springer.

\bibitem[\protect\BCAY{Grant}{Grant}{1978}]{Grant:1978}
Grant, J. \BBOP1978\BBCP.
\newblock \BBOQ Classifications for inconsistent theories\BBCQ\
\newblock {\Bem Notre Dame Journal of Formal Logic}, {\Bem 19\/}(3), 435--444.

\bibitem[\protect\BCAY{Grant\ \BBA\ Hunter}{Grant\ \BBA\
  Hunter}{2013}]{grant2013distance}
Grant, J.\BBACOMMA\  \BBA\ Hunter, A. \BBOP2013\BBCP.
\newblock \BBOQ Distance-based measures of inconsistency\BBCQ\
\newblock In {\Bem European Conference on Symbolic and Quantitative Approaches
  to Reasoning and Uncertainty}, \BPGS\ 230--241. Springer.

\bibitem[\protect\BCAY{Grant\ \BBA\ Hunter}{Grant\ \BBA\
  Hunter}{2017}]{grant2017}
Grant, J.\BBACOMMA\  \BBA\ Hunter, A. \BBOP2017\BBCP.
\newblock \BBOQ Analysing inconsistent information using distance-based
  measures\BBCQ\
\newblock {\Bem Int. J. Approx. Reasoning}, {\Bem 89\/}(C), 3–26.

\bibitem[\protect\BCAY{Grant\ \BBA\ Martinez}{Grant\ \BBA\
  Martinez}{2018}]{Grant:2018}
Grant, J.\BBACOMMA\  \BBA\ Martinez, M.~V.\BEDS. \BBOP2018\BBCP.
\newblock {\Bem Measuring Inconsistency in Information}, \lowercase{\BVOL}~73
  of {\Bem Studies in Logic}.
\newblock College Publications.

\bibitem[\protect\BCAY{Hunter}{Hunter}{2006}]{Hunter:2006d}
Hunter, A. \BBOP2006\BBCP.
\newblock \BBOQ How to act on inconsistent news: Ignore, resolve, or
  reject\BBCQ\
\newblock {\Bem Data \& Knowledge Engineering}, {\Bem 57\/}(3), 221--239.

\bibitem[\protect\BCAY{Hunter, Konieczny, et~al.}{Hunter
  et~al.}{2008}]{hunter2008measuring}
Hunter, A., Konieczny, S., et~al. \BBOP2008\BBCP.
\newblock \BBOQ Measuring inconsistency through minimal inconsistent
  sets.\BBCQ\
\newblock {\Bem KR}, {\Bem 8\/}(358-366), 42.

\bibitem[\protect\BCAY{Jabbour, Ma,\ \BBA\ Raddaoui}{Jabbour
  et~al.}{2014}]{jabbour2014inconsistency}
Jabbour, S., Ma, Y., \BBA\ Raddaoui, B. \BBOP2014\BBCP.
\newblock \BBOQ Inconsistency measurement thanks to {MUS} decomposition\BBCQ\
\newblock In {\Bem Proceedings of the 2014 international conference on
  Autonomous agents and multi-agent systems}, \BPGS\ 877--884.

\bibitem[\protect\BCAY{Jabbour\ \BBA\ Sais}{Jabbour\ \BBA\
  Sais}{2016}]{Jabbour2016}
Jabbour, S.\BBACOMMA\  \BBA\ Sais, L. \BBOP2016\BBCP.
\newblock \BBOQ Exploiting {MUS} structure to measure inconsistency of
  knowledge bases\BBCQ\
\newblock In {\Bem Proceedings of the Twenty-Second European Conference on
  Artificial Intelligence}, ECAI’16, \BPG\ 991–998, NLD. IOS Press.

\bibitem[\protect\BCAY{Kaminski, Schaub,\ \BBA\ Wanko}{Kaminski
  et~al.}{2017}]{kaminski2017tutorial}
Kaminski, R., Schaub, T., \BBA\ Wanko, P. \BBOP2017\BBCP.
\newblock \BBOQ A tutorial on hybrid answer set solving with {C}lingo\BBCQ\
\newblock In {\Bem Reasoning Web. Semantic Interoperability on the Web: 13th
  International Summer School 2017}, \BPGS\ 167--203. Springer.

\bibitem[\protect\BCAY{Knight}{Knight}{2002}]{knight2002measuring}
Knight, K. \BBOP2002\BBCP.
\newblock \BBOQ Measuring inconsistency\BBCQ\
\newblock {\Bem Journal of Philosophical Logic}, {\Bem 31\/}(1), 77--98.

\bibitem[\protect\BCAY{Kuhlmann, Gessler, Laszlo,\ \BBA\ Thimm}{Kuhlmann
  et~al.}{2022}]{kuhlmann2022comparison}
Kuhlmann, I., Gessler, A., Laszlo, V., \BBA\ Thimm, M. \BBOP2022\BBCP.
\newblock \BBOQ A comparison of {ASP}-based and {SAT}-based algorithms for the
  contension inconsistency measure\BBCQ\
\newblock In {\Bem Proceedings of the 15th International Conference on Scalable
  Uncertainty Management (SUM'22)}. Springer.

\bibitem[\protect\BCAY{Kuhlmann\ \BBA\ Thimm}{Kuhlmann\ \BBA\
  Thimm}{2020}]{kuhlmann2020algorithm}
Kuhlmann, I.\BBACOMMA\  \BBA\ Thimm, M. \BBOP2020\BBCP.
\newblock \BBOQ An algorithm for the contension inconsistency measure using
  reductions to answer set programming\BBCQ\
\newblock In {\Bem Proceedings of the 14th International Conference on Scalable
  Uncertainty Management (SUM'20)}, \BPGS\ 289--296. Springer.

\bibitem[\protect\BCAY{Kuhlmann\ \BBA\ Thimm}{Kuhlmann\ \BBA\
  Thimm}{2021}]{kuhlmann2021algorithms}
Kuhlmann, I.\BBACOMMA\  \BBA\ Thimm, M. \BBOP2021\BBCP.
\newblock \BBOQ Algorithms for inconsistency measurement using answer set
  programming\BBCQ\
\newblock In {\Bem Proceedings of the 19th International Workshop on
  Non-Monotonic Reasoning (NMR)}, \BPGS\ 159--168.

\bibitem[\protect\BCAY{Li\ \BBA\ Manya}{Li\ \BBA\ Manya}{2021}]{li2021maxsat}
Li, C.~M.\BBACOMMA\  \BBA\ Manya, F. \BBOP2021\BBCP.
\newblock \BBOQ {MaxSAT}, hard and soft constraints\BBCQ\
\newblock In {\Bem Handbook of satisfiability}, \BPGS\ 903--927. IOS Press.

\bibitem[\protect\BCAY{Lifschitz}{Lifschitz}{2008}]{lifschitz2008answer}
Lifschitz, V. \BBOP2008\BBCP.
\newblock \BBOQ What is answer set programming?\BBCQ\
\newblock In {\Bem Proceedings AAAI'08}, \BPGS\ 1594--1597.

\bibitem[\protect\BCAY{Lifschitz}{Lifschitz}{2019}]{lifschitz2019answer}
Lifschitz, V. \BBOP2019\BBCP.
\newblock {\Bem Answer set programming}.
\newblock Springer Berlin.

\bibitem[\protect\BCAY{Ma, Qi, Xiao, Hitzler,\ \BBA\ Lin}{Ma
  et~al.}{2009}]{ma2009}
Ma, Y., Qi, G., Xiao, G., Hitzler, P., \BBA\ Lin, Z. \BBOP2009\BBCP.
\newblock \BBOQ {An Anytime Algorithm for Computing Inconsistency
  Measurement}\BBCQ\
\newblock In {\Bem {3rd International Conference on Knowledge Science,
  Engineering and management}}, \BPGS\ 29--40, Austria.

\bibitem[\protect\BCAY{Marques-Silva\ \BBA\ Manquinho}{Marques-Silva\ \BBA\
  Manquinho}{2008}]{marques2008towards}
Marques-Silva, J.\BBACOMMA\  \BBA\ Manquinho, V. \BBOP2008\BBCP.
\newblock \BBOQ Towards more effective unsatisfiability-based maximum
  satisfiability algorithms\BBCQ\
\newblock In {\Bem International Conference on Theory and Applications of
  Satisfiability Testing}, \BPGS\ 225--230. Springer.

\bibitem[\protect\BCAY{Marques-Silva\ \BBA\ Sakallah}{Marques-Silva\ \BBA\
  Sakallah}{2000}]{marques2000boolean}
Marques-Silva, J.~P.\BBACOMMA\  \BBA\ Sakallah, K.~A. \BBOP2000\BBCP.
\newblock \BBOQ Boolean satisfiability in electronic design automation\BBCQ\
\newblock In {\Bem Proceedings of the 37th Annual Design Automation
  Conference}, \BPGS\ 675--680.

\bibitem[\protect\BCAY{Martinez, Arias,\ \BBA\ Vilas}{Martinez
  et~al.}{2004}]{Martinez:2004a}
Martinez, A. B.~B., Arias, J. J.~P., \BBA\ Vilas, A.~F. \BBOP2004\BBCP.
\newblock \BBOQ On measuring levels of inconsistency in multi-perspective
  requirements specifications\BBCQ\
\newblock In {\Bem Proceedings of the 1st Conference on the Principles of
  Software Engineering (PRISE'04)}.

\bibitem[\protect\BCAY{McAreavey, Liu,\ \BBA\ Miller}{McAreavey
  et~al.}{2014}]{mcAreavey2014}
McAreavey, K., Liu, W., \BBA\ Miller, P. \BBOP2014\BBCP.
\newblock \BBOQ Computational approaches to finding and measuring inconsistency
  in arbitrary knowledge bases\BBCQ\
\newblock {\Bem International Journal of Approximate Reasoning}, {\Bem 55}.

\bibitem[\protect\BCAY{Mironov\ \BBA\ Zhang}{Mironov\ \BBA\
  Zhang}{2006}]{mironov2006applications}
Mironov, I.\BBACOMMA\  \BBA\ Zhang, L. \BBOP2006\BBCP.
\newblock \BBOQ Applications of {SAT} solvers to cryptanalysis of hash
  functions\BBCQ\
\newblock In {\Bem International Conference on Theory and Applications of
  Satisfiability Testing}, \BPGS\ 102--115. Springer.

\bibitem[\protect\BCAY{Nejati\ \BBA\ Ganesh}{Nejati\ \BBA\
  Ganesh}{2019}]{nejati2020}
Nejati, S.\BBACOMMA\  \BBA\ Ganesh, V. \BBOP2019\BBCP.
\newblock \BBOQ {CDCL} (crypto) {SAT} solvers for cryptanalysis\BBCQ\
\newblock In {\Bem Proceedings of the 29th Annual International Conference on
  Computer Science and Software Engineering}, CASCON '19, \BPG\ 311–316, USA.
  IBM Corp.

\bibitem[\protect\BCAY{Potyka\ \BBA\ Thimm}{Potyka\ \BBA\
  Thimm}{2017}]{Potyka:2017}
Potyka, N.\BBACOMMA\  \BBA\ Thimm, M. \BBOP2017\BBCP.
\newblock \BBOQ Inconsistency-tolerant reasoning over linear probabilistic
  knowledge bases\BBCQ\
\newblock {\Bem International Journal of Approximate Reasoning}, {\Bem 88},
  209--236.

\bibitem[\protect\BCAY{Priest}{Priest}{1979}]{priest1979logic}
Priest, G. \BBOP1979\BBCP.
\newblock \BBOQ The logic of paradox\BBCQ\
\newblock {\Bem Journal of Philosophical Logic}, {\Bem 8\/}(1), 219--241.

\bibitem[\protect\BCAY{Reiter}{Reiter}{1980}]{reiter1980logic}
Reiter, R. \BBOP1980\BBCP.
\newblock \BBOQ A logic for default reasoning\BBCQ\
\newblock {\Bem Artificial intelligence}, {\Bem 13\/}(1-2), 81--132.

\bibitem[\protect\BCAY{Sinz}{Sinz}{2005}]{sinz2005}
Sinz, C. \BBOP2005\BBCP.
\newblock \BBOQ Towards an optimal {CNF} encoding of {Boolean} cardinality
  constraints\BBCQ\
\newblock In van Beek, P.\BED, {\Bem Principles and Practice of Constraint
  Programming - CP 2005}, \BPGS\ 827--831, Berlin, Heidelberg. Springer Berlin
  Heidelberg.

\bibitem[\protect\BCAY{Thimm}{Thimm}{2016a}]{Thimm:2016a}
Thimm, M. \BBOP2016a\BBCP.
\newblock \BBOQ On the expressivity of inconsistency measures\BBCQ\
\newblock {\Bem Artificial Intelligence}, {\Bem 234}, 120--151.

\bibitem[\protect\BCAY{Thimm}{Thimm}{2016b}]{thimm2016}
Thimm, M. \BBOP2016b\BBCP.
\newblock \BBOQ Stream-based inconsistency measurement\BBCQ\
\newblock {\Bem International Journal of Approximate Reasoning}, {\Bem 68}, 68
  -- 87.

\bibitem[\protect\BCAY{Thimm}{Thimm}{2019}]{thimm2019b}
Thimm, M. \BBOP2019\BBCP.
\newblock \BBOQ Inconsistency measurement\BBCQ\
\newblock In Amor, N.~B., Quost, B., \BBA\ Theobald, M.\BEDS, {\Bem Proceedings
  of the 13th International Conference on Scalable Uncertainty Management
  (SUM'19)}, \lowercase{\BVOL}\ 11940 of {\Bem Lecture Notes in Artificial
  Intelligence}, \BPGS\ 9--23. Springer International Publishing.

\bibitem[\protect\BCAY{Thimm\ \BBA\ Rienstra}{Thimm\ \BBA\
  Rienstra}{2020}]{Thimm:2020a}
Thimm, M.\BBACOMMA\  \BBA\ Rienstra, T. \BBOP2020\BBCP.
\newblock \BBOQ Approximate reasoning with {ASPIC+} by argument sampling\BBCQ\
\newblock In {\Bem Proceedings of the Third International Workshop on Systems
  and Algorithms for Formal Argumentation (SAFA'20)}.

\bibitem[\protect\BCAY{Thimm\ \BBA\ Wallner}{Thimm\ \BBA\
  Wallner}{2019}]{thimm2019a}
Thimm, M.\BBACOMMA\  \BBA\ Wallner, J.~P. \BBOP2019\BBCP.
\newblock \BBOQ On the complexity of inconsistency measurement\BBCQ\
\newblock {\Bem Artificial Intelligence}, {\Bem 275}, 411--456.

\bibitem[\protect\BCAY{Tseitin}{Tseitin}{1968}]{tseitin1968}
Tseitin, G.~S. \BBOP1968\BBCP.
\newblock \BBOQ On the complexity of derivation in propositional calculus\BBCQ\
\newblock In {\Bem Structures in Constructive Mathematics and Mathematical
  Logic, Part II}. Steklov Mathematical Institute.

\bibitem[\protect\BCAY{van Harmelen, Lifschitz,\ \BBA\ Porter}{van Harmelen
  et~al.}{2008}]{vanHarmelen2008}
van Harmelen, F., Lifschitz, V., \BBA\ Porter, B. \BBOP2008\BBCP.
\newblock {\Bem Handbook of Knowledge Representation}.
\newblock ISSN. Elsevier Science.

\bibitem[\protect\BCAY{{Vizel}, {Weissenbacher},\ \BBA\ {Malik}}{{Vizel}
  et~al.}{2015}]{vizel2015}
{Vizel}, Y., {Weissenbacher}, G., \BBA\ {Malik}, S. \BBOP2015\BBCP.
\newblock \BBOQ Boolean satisfiability solvers and their applications in model
  checking\BBCQ\
\newblock {\Bem Proceedings of the IEEE}, {\Bem 103\/}(11), 2021--2035.

\bibitem[\protect\BCAY{{Xiao}\ \BBA\ {Ma}}{{Xiao}\ \BBA\ {Ma}}{2012}]{xiao2012}
{Xiao}, G.\BBACOMMA\  \BBA\ {Ma}, Y. \BBOP2012\BBCP.
\newblock \BBOQ Inconsistency measurement based on variables in minimal
  unsatisfiable subsets\BBCQ\
\newblock In {\Bem Proceedings of European Conference on Artificial
  Intelligence (ECAI'12)}, \BPGS\ 864--869.

\end{thebibliography}
\bibliographystyle{theapa}

\end{document}